\DeclareSymbolFontAlphabet{\mathbb}{AMSb}
\newcommand{\vertiii}[1]{{\left\vert\kern-0.25ex\left\vert\kern-0.25ex\left\vert #1 
		\right\vert\kern-0.25ex\right\vert\kern-0.25ex\right\vert}}
\newcommand{\benon}{\begin{equation*}}  
\newcommand{\bemuln}[1]{\begin{multline}\label{#1}}
\newcommand{\bemul}{\begin{multline*}}
\newcommand{\been}[1]{\begin{eqnarray}\label{#1}}
\newcommand{\eeen}{\end{eqnarray}}
\newcommand{\began}[1]{\begin{gather}\label{#1}}
\newcommand{\bega}{\begin{gather*}}
\newcommand{\bealn}[1]{\begin{align}\label{#1}}
\newcommand{\beal}{\begin{align*}}
\newcommand{\bealatn}[2]{\begin{alignat}{#1}\label{#2}}
\newcommand{\bealat}{\begin{alignat*}}
\newcommand{\bexalatn}[1]{\begin{xalignat}\label{#1}}
\newcommand{\bexalat}{\begin{xalignat*}}
\newcommand{\ra}{\rightarrow}
\newcommand{\lb}{\llbracket}
\newcommand{\rb}{\rrbracket}
\newcommand{\diag}{\text{diag}}
\newcommand{\tr}{\text{tr}}
\def\argmax{\mathop{\rm arg\,max}}
\newcommand{\mb}{\mathbf}
\newcommand{\mbb}{\mathbb}
\newtheorem{thm}{Theorem}[section]
\newtheorem{lem}[thm]{Lemma}
\newtheorem{col}[thm]{Corollary}
\newtheorem{defi}{Definition}
\newtheorem{ass}{Assumption} 
\def\ba{{\mathbf a}}
\def\bb{{\mathbf b}}
\def\bg{{\mathbf g}}
\def\bp{{\mathbf p}}
\def\bq{{\mathbf q}}
\def\br{{\mathbf r}}
\def\bt{{\mathbf t}}
\def\bu{{\mathbf u}}
\def\bv{{\mathbf v}}
\def\bw{{\mathbf w}}
\def\bx{{\mathbf x}}  
\def\by{{\mathbf y}}
\def\bz{{\mathbf z}}
\def\bA{{\mathbf A}}
\def\bB{{\mathbf B}}
\def\bC{{\mathbf C}}
\def\bD{{\mathbf D}}
\def\bH{{\mathbf H}}
\def\bI{{\mathbf I}}
\def\bM{{\mathbf M}}
\def\bS{{\mathbf S}}
\def\bY{{\mathbf Y}}
\def\bX{{\mathbf X}}
\def\bW{{\mathbf W}}
\def\bZ{{\mathbf Z}}
\def\texitem#1{\par\smallskip\noindent\hangindent 25pt
               \hbox to 25pt {\hss #1 ~}\ignorespaces}
\newcommand{\bzero}{{\mathbf{0}}}
\newcommand{\scrA}{\mathcal{A}}
\newcommand{\scrB}{\mathcal{B}}
\newcommand{\scrC}{\mathcal{C}}
\newcommand{\scrD}{\mathcal{D}}
\newcommand{\scrF}{\mathcal{F}}
\newcommand{\scrH}{\mathcal{H}}
\newcommand{\scrI}{\mathcal{I}}
\newcommand{\scrM}{\mathcal{M}}
\newcommand{\scrN}{\mathcal{N}}
\newcommand{\scrP}{\mathcal{P}}
\newcommand{\scrQ}{\mathcal{Q}}
\newcommand{\scrR}{\mathcal{R}}
\newcommand{\scrS}{\mathcal{S}}
\newcommand{\scrT}{\mathcal{T}}
\newcommand{\scrU}{\mathcal{U}}
\newcommand{\scrV}{\mathcal{V}}
\newcommand{\scrW}{\mathcal{W}}
\newcommand{\scrX}{\mathcal{X}}
\newcommand{\scrY}{\mathcal{Y}}
\newcommand{\scrZ}{\mathcal{Z}}
\newcommand{\bbeta}{\boldsymbol{\beta}}
\newcommand{\bGamma}{\boldsymbol{\Gamma}}
\newcommand{\bDelta}{\boldsymbol{\Delta}}
\newcommand{\bgamma}{\boldsymbol{\gamma}}
\newcommand{\bdelta}{{\boldsymbol{\delta}}}
\newcommand{\bepsilon}{\boldsymbol{\epsilon}}
\newcommand{\bzeta}{\boldsymbol{\zeta}}
\newcommand{\btheta}{\boldsymbol{\theta}}
\newcommand{\bTheta}{\boldsymbol{\Theta}}
\newcommand{\blambda}{\boldsymbol{\lambda}}
\newcommand{\bmu}{\boldsymbol{\mu}} 
\newcommand{\bnu}{{\boldsymbol{\nu}}}
\newcommand{\bpi}{{\boldsymbol{\pi}}}
\newcommand{\bPi}{{\boldsymbol{\Pi}}}
\newcommand{\bSigma}{{\boldsymbol{\Sigma}}}
\newcommand{\bphi}{{\boldsymbol{\phi}}}
\newcommand{\bxi}{\boldsymbol{\xi}}
\newcommand{\bpsi}{\boldsymbol{\psi}}
\newcommand{\ie}{\emph{i.e.}}
\newcommand{\eg}{\emph{e.g.}}
\title{Distributionally Robust Learning}
\author[1]{Chen,Ruidi}
\author[2]{Paschalidis,Ioannis Ch.}
\affil[1]{Boston University; rchen15@bu.edu}
\affil[2]{Boston University; yannisp@bu.edu}
\begin{document}

\makeabstracttitle

\begin{abstract}
    This monograph develops a comprehensive statistical learning framework that is robust to (distributional) perturbations in the data using {\em Distributionally Robust Optimization (DRO)} under the Wasserstein metric. Beginning with fundamental properties of the Wasserstein metric and the DRO formulation, we explore duality to arrive at tractable formulations and develop finite-sample, as well as asymptotic, performance guarantees. We consider a series of learning problems, including $(i)$ distributionally robust linear regression; $(ii)$ distributionally robust regression with group structure in the predictors; $(iii)$ distributionally robust multi-output regression and multiclass classification, $(iv)$ optimal decision making that combines distributionally robust regression with nearest-neighbor estimation; $(v)$ distributionally robust semi-supervised learning, and $(vi)$ distributionally robust reinforcement learning.
    A tractable DRO relaxation for each problem is being derived, establishing a connection between robustness and regularization, and obtaining bounds on the prediction and estimation errors of the solution. Beyond theory, we include numerical experiments and case studies using synthetic and real data. The real data experiments are all associated with various health informatics problems, an application area which provided the initial impetus for this work. 
\end{abstract}

\chapter{Introduction}    \label{chapt:intro}

A central problem in {\em machine learning} is to learn from data (``big'' or
``small'') how to predict outcomes of interest. Outcomes can be {\em binary} or {\em
	discrete}, such as an event or a category, or {\em continuous}, e.g., a real
value. In either case, we have access to a number $N$ of examples from which we can
learn; each example is associated with a potentially large number $p$ of {\em
	predictor} variables and the ``ground truth'' discrete or continuous outcome. This
form of learning is called {\em supervised}, because it relies on the existence of
known examples associating predictor variables with the outcome. In the case of a
binary/discrete outcome the problem is referred to as {\em classification}, while for
continuous outcomes we use the term {\em regression}. 

There are many methods to solve such supervised learning problems, from ordinary
(linear) least squares regression, to logistic regression, Classification And
Regression Trees (CART) \citep{breiman2017classification}, ensembles of decision
trees~\citep{breiman2001random, chen2016xgboost}, to modern deep learning
models~\citep{goodfellow2016deep}. Whereas the nonlinear models (random forests,
gradient boosted trees, and deep learning) perform very well in many specific
applications, they have two key drawbacks: $(i)$ they produce predictive models that
lack {\em interpretability} and $(ii)$ they are hard to analyze and do not give rise
to rigorous mathematical results characterizing their performance and important
properties. In this monograph, we will mainly focus on the more classical linear
models, allowing for some nonlinear extensions.

Clearly, there is a plethora of application areas where such models have been
developed and used. A common thread throughout this monograph is formed by
applications in medicine and health care, broadly characterized by the term {\em
	predictive health analytics}. While in principle these applications are not
substantially different from other domains, they have important salient features that
need to be considered. These include: 
\begin{enumerate}
	\item {\em Presence of outliers.} Medical data often contain outliers, which may be
	caused by medical errors, erroneous or missing data, equipment and lab configuration
	errors, or even different interpretation/use of a variable by different physicians
	who enter the data.
	
	\item {\em Risk of ``overfitting'' from too many variables.} For any individual and
	any outcome we wish to predict, using all predictor variables may lead to {\em
		overfitting} and large generalization errors (out-of-sample). The common practice
	is to seek {\em sparse} models, using the fewest variables possible without
	significantly compromising accuracy. In some settings, especially when genetic
	information is included in the predictors, the number of predictors can exceed the
	training sample size, further stressing the need for {\em sparsity}. Sparse
	regression models originated in the seminar work on the {\em Least Absolute
		Shrinkage and Selection Operator}, better known under the acronym
	LASSO~\citep{tibshirani1996regression}.  
	
	\item {\em Lack of linearity.} In some applications, the linearity of regression or
	logistic regression may not fully capture the relationship between predictors and
	outcome. While kernel methods \citep{friedman2001elements} can be used to employ
	linear models in developing nonlinear predictors, other choices include combining
	linear models with nearest neighbor ideas to essentially develop {\em piecewise
		linear} models. 
\end{enumerate}

To formulate the learning problems of interest more concretely, let
$\bx=(x_1,\ldots,x_p)\in \mbb{R}^p$ denote a column vector with the predictors and
let $y\in \mbb{R}$ be the outcome or response. In the classification problem, we have
$y \in \{-1, +1\}$. We are given training data $(\bx_i,y_i)$, $i \in \lb N \rb$, where
$\lb N \rb \stackrel{\triangle}{=} 1,\ldots,N$, from
which we want to ``learn'' a function $f(\cdot)$ so that $f(\bx_i)=y_i$ for most
$i$. Further, we want $f(\cdot)$ to generalize well to new samples (i.e., to have
good out-of-sample performance).

In the regression problem, we view the $\bx_i$'s as independent variables (predictor
vectors) and $y_i$ as the real-valued dependent variable.  We still want to determine
a function $f(\bx)$ that predicts $y$. In linear regression, $f(\bx) = \bbeta'\bx$,
where $\bbeta$ is a coefficient vector, prime denotes transpose, and we assume one of
the elements of $\bx$ is equal to one with the corresponding coefficient being the
{\em intercept} (of the regression function at zero). Both classification and
regression problems can be formulated as:
\begin{equation} \label{loss}
\textstyle\min_{\bbeta} \mbb{E}^{\mbb{P}^*} [h_{\bbeta}(\bx,y)],
\end{equation}
where $\mbb{P}^*$ is the probability distribution of $(\bx, y)$, $\mbb{E}^{\mbb{P}^*}$ stands for the expectation under $\mbb{P}^*$, and $h_{\bbeta}(\bx,y)$ is a {\em loss} function penalizing differences between
$f(\bx)$ and $y$. This formulation is known as {\em expected risk minimization}. {\em
	Ordinary Least Squares (OLS)} uses a squared loss
$h_{\bbeta}(\bx,y)=(f(\bx)-y)^2$ while logistic regression uses the {\em logloss}
function $h_{\bbeta}(\bx,y)=\log(1+\exp\{-yf(\bx)\})$. Since $\mbb{P}^*$ is typically unknown, a common practice is to approximate it using the {\em empirical distribution} $\hat{\mbb{P}}_N$ which assigns equal probability to each training sample, leading to the following {\em empirical risk minimization} formulation:
\begin{equation*}
\textstyle\min_{\bbeta} \frac{1}{N} \sum_{i=1}^N h_{\bbeta}(\bx_i,y_i).
\end{equation*}

\begin{figure}[ht]
	\begin{center}
		\includegraphics[width=0.55\textwidth]{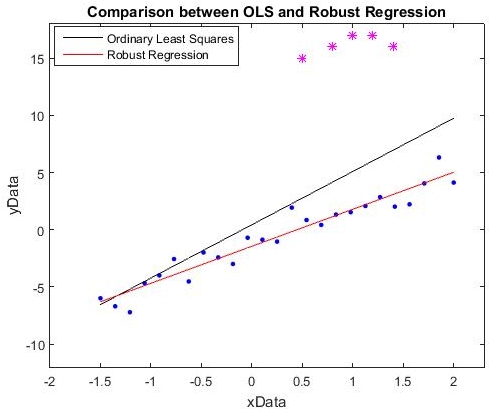}
	\end{center}
	\caption{Regression example.}
	\label{fig:rex}
\end{figure}
One of the well known issues of OLS regression is that the regression function can be
particularly sensitive to outliers. To illustrate this with a simple example,
consider a case of regression with a single predictor; see Fig.~\ref{fig:rex}. Points
in the training set are shown as blue dots. Suppose we include in the training set
some outliers depicted as magenta stars. OLS regression results in the black
line. Notice how much the slope of this line has shifted away from the blue dots to
accommodate the outliers. This skews future predictions but also our ability to
identify new outlying observations. Several approaches have been introduced to
address this issue \citep{huber1964robust,huber1973robust} and we discuss them in
more detail in Section~\ref{chapt:dro}.

The main focus of this monograph is to develop {\em robust learning} methods for a
variety of learning problems. To introduce robustness into the generic problem, we will
use ideas from {\em robust optimization} and formulate a robust version of the
expected risk minimization Problem~(\ref{loss}). We will further focus on {\em
	distributional robustness}. The problems we will formulate are $\min$-$\max$
versions of Problem~(\ref{loss}) where one minimizes a worst case estimate of the loss
over some appropriately defined ambiguity set. Such $\min$-$\max$ formulations have a
long history, going back to the origins of game theory \citep{von1944theory}, where
one can view the problem as a game between an adversary who may affect the training
set and the optimizer who responds to the worst-case selection by the adversary. They
also have strong connections with $\scrH_\infty$ and robust control theory
\citep{zames1981feedback, zhou1998essentials}. 

To avoid being overly broad, we will restrict our attention to the intersection of
statistical learning and {\em Distributionally Robust Optimization (DRO)} under the
Wasserstein metric \citep{esfahani2018data, gao2016distributionally,
	chen2018robust}. Even this more narrow area has generated a lot of interest and
recent work. While we will cover several aspects, we will not cover a number of
topics, including: 
\begin{itemize}
	\item the integration of DRO with different optimization schemes, e.g., inverse optimization \citep{esfahani2018inverse}, polynomial optimization \citep{mevissen2013data}, multi-stage optimization \citep{zhao2015data, hanasusanto2018conic},  and chance-constrained optimization \citep{xie2019distributionally, ji2020data};
	
	\item the application of DRO to stochastic control problems, see, e.g., \cite{van2015distributionally, yang2018wasserstein, yang2018dynamic}, and statistical hypothesis testing \citep{gao2018robust};
	
	\item the combination of DRO with general estimation techniques, see, e.g., \cite{nguyen2019bridging} for distributionally robust Minimum Mean Square Error Estimation, and \cite{nguyen2018distributionally} for distributionally robust Maximum Likelihood Estimation.
\end{itemize}

Most of the learning problems we consider, except for Section~\ref{dro-rl}, are static {\em single-period} problems where the data are assumed to be independently and identically distributed. For extensions of DRO to a dynamic setting where the data come in a sequential manner, we refer to \cite{abadeh2018wasserstein} for a distributionally robust Kalman filter model, \cite{hanasusanto2013robust, yang2018dynamic, duque2020distributionally} for robust dynamic programming, and \cite{sinha2020formulazero} for a distributionally robust online adaptive algorithm.

In this monograph, we focus mainly on linear predictive models, with the exception of
Section~\ref{ch:presp}, where the non-linearity is captured by a non-parametric {\em
	K-Nearest Neighbors (K-NN)} model. For extensions of robust optimization to
non-linear settings, we refer to \cite{shang2017data} for robust kernel methods,
\cite{fathony2018distributionally} for distributionally robust graphical models, and
\cite{sinha2017certifiable} for distributionally robust deep neural networks.

In the remainder of this Introduction, we will present a brief outline of robust
optimization in Section~\ref{sec:ro} and distributionally robust optimization in
Section~\ref{sec:dro-overview}. In Section~\ref{sec:outline} we provide an outline of
the topics covered in the rest of the monograph. Section~\ref{sec:notation}
summarizes our notational conventions and Section~\ref{sec:abbrv} collects all
abbreviations we will use.

\section{Robust Optimization} \label{sec:ro}
{\em Robust optimization} \citep{ben2009robust, bertsimas2011theory} provides a way of modeling uncertainty in the data without the use of probability distributions. It restricts data perturbations to be within a deterministic uncertainty set, and seeks a solution that is optimal for the worst-case realization of this uncertainty. Consider a general optimization problem:
\begin{equation} \label{ro} 
\displaystyle\min\limits_{\bbeta} \ h_{\bbeta}(\bz),
\end{equation}
where $\bbeta$ is a vector of decision variables, $\bz$ is a vector of given parameters, and $h$ is a real-valued function. Assuming that the values of $\bz$ lie within some uncertainty set $\scrZ$, a robust counterpart of Problem (\ref{ro}) can be written in the following form:
\begin{equation} \label{robustro} 
\min\limits_{\bbeta} \max\limits_{\bz \in \scrZ}  \ h_{\bbeta}(\bz).
\end{equation} 
Problem (\ref{robustro}) is computationally tractable for many classes of uncertainty
sets $\scrZ$. For a detailed overview of robust optimization we refer to
\cite{ben2009robust, ben2008selected, bertsimas2011theory}. 

There has been an increasing interest in using robust optimization to develop machine learning algorithms that are immunized against data perturbations; see, for example, \cite{LAU97, xu2009robust, yang2013unified, bertsimas2017characterization} for regression, and \cite{el2003robust, trafalis2006robust, liu2014robust, bertsimas2018robust} for classification methods. \cite{bertsimas2018robust} considered both feature uncertainties:
\begin{equation*}
\scrZ_{\bx} \triangleq \Big\{\bDelta\bX \in \mbb{R}^{N \times p}: \ \|\bDelta \bx_i\|_q \le \rho, \ i \in \lb N \rb \Big\},
\end{equation*} 
where 
$\bDelta\bX$ can be viewed as a feature perturbation matrix on $N$ samples
with $p$ features,
$\|\cdot\|_q$ is the $\ell_q$ norm, and $\bDelta \bx_i \in \mbb{R}^p, i \in \lb N \rb$,
are the rows of $\bDelta\bX$, as well as label uncertainties: 
\begin{equation*}
\scrZ_{y} \triangleq \Big\{ \bDelta \by \in \{0,1\}^N: \ \sum_{i=1}^N \Delta y_i \le \Gamma \Big\},
\end{equation*}
where $\Delta y_i \in \{0, 1\}$, with $1$ indicating that the label was incorrect and has in fact been flipped, and $0$ otherwise, and $\Gamma$ is an integer-valued parameter controlling the number of data points that are allowed to be mislabeled.
They solved various robust classification models under these uncertainty sets. As an example, the robust Support Vector Machine (SVM) \citep{cortes1995support} problem was formulated as:
\begin{equation*} 
\min\limits_{\bw, b} \max\limits_{\bDelta \by \in \scrZ_{y}} \max\limits_{\bDelta\bX \in \scrZ_{\bx}} \sum_{i=1}^N \max\Big\{1-y_i(1-2 \Delta y_i) (\bw'(\bx_i + \bDelta \bx_i) - b), 0 \Big \}.
\end{equation*} 
\cite{xu2009robust} studied a robust linear regression problem with feature-wise disturbance:
\begin{equation*}
\min_{\bbeta} \max_{\bDelta \bX \in \scrZ_{\bx}} \|\by - (\bX + \bDelta \bX) \bbeta\|_2,
\end{equation*}
where $\bbeta$ is the vector of regression coefficients, and the uncertainty set
\begin{equation*}
\scrZ_{\bx} \triangleq \Big\{\bDelta\bX \in \mbb{R}^{N \times p}: \ \|\bDelta \tilde{\bx}_i\|_2 \le c_i, \ i \in \lb p \rb \Big\},
\end{equation*}
where $\bDelta \tilde{\bx}_i \in \mbb{R}^N, i \in \lb p \rb$, are the columns of $\bDelta\bX$. They showed that such a robust regression problem is equivalent to the following $\ell_1$-norm regularized regression problem:
\begin{equation*}
\min_{\bbeta} \|\by - \bX \bbeta\|_2 + \sum_{i=1}^p c_i |\beta_i|.
\end{equation*} 

\section{Distributionally Robust Optimization} \label{sec:dro-overview}

Different from robust optimization, {\em Distributionally Robust Optimization (DRO)}
treats the data uncertainty in a probabilistic way. It minimizes a worst-case
expected loss function over a probabilistic ambiguity set that is constructed from
the observed samples and characterized by certain known properties of the true
data-generating distribution. DRO has been an active area of research in recent
years, due to its probabilistic interpretation of the uncertain data, tractability
when assembled with certain metrics, and extraordinary performance observed on
numerical examples, see, for example, \cite{gao2016distributionally,
	gao2017wasserstein, shafieezadeh2017regularization, esfahani2018data,
	chen2018robust}.  DRO can be interpreted in two related ways: it refers to $(i)$ a
robust optimization problem where a worst-case loss function is being hedged against;
or, alternatively, $(ii)$ a stochastic optimization problem where the expectation of
the loss function with respect to the probabilistic uncertainty of the data is being
minimized. Figure~\ref{fig:ro-dro} provides a schematic comparison of various
optimization frameworks.
\begin{figure}[ht]
	\centering
	\includegraphics[height = 2.2in]{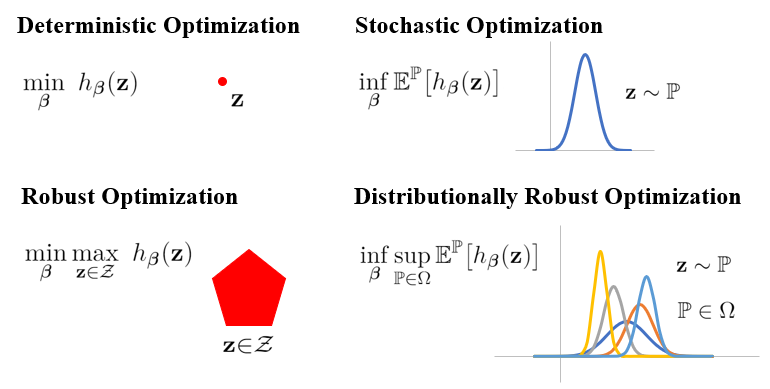}
	\caption{Comparison of robust optimization with distributionally robust optimization.}
	\label{fig:ro-dro}
\end{figure}

To formulate a DRO version of the expected risk minimization problem (\ref{loss}), consider the
stochastic optimization problem: 
\begin{equation} \label{sto}
\inf \limits_{\bbeta}
\mbb{E}^{\mbb{P}^*}\big[ h_{\bbeta}(\bz)\big], 
\end{equation} 
where we set $\bz=(\bx,y)\in \scrZ\subseteq \mbb{R}^d$ in (\ref{loss}), $\bbeta \in
\mbb{R}^p$ is a vector of coefficients to be learned, $h_{\bbeta}(\bz): \scrZ \times
\mbb{R}^p \ra \mbb{R}$ is the loss function of applying $\bbeta$ on a sample $\bz \in
\scrZ$, and $\mbb{P}^*$ is the underlying true probability distribution of $\bz$. The
DRO formulation for (\ref{sto}) minimizes the worst-case expected loss over a
probabilistic ambiguity set $\Omega$:
\begin{equation} \label{dro}
\inf\limits_{\bbeta}\sup\limits_{\mbb{Q}\in \Omega}
\mbb{E}^{\mbb{Q}}\big[ h_{\bbeta}(\bz)\big].
\end{equation}
The existing literature on DRO can be split into two main branches,
depending on the way in which $\Omega$ is defined. One is through a
moment ambiguity set, which contains all distributions that satisfy
certain moment constraints \citep{San11, popescu2007robust, Ye10,
	goh2010distributionally, zymler2013distributionally, Sim14}. In many
cases it leads to a tractable DRO problem but has been criticized for
yielding overly conservative solutions \citep{wang2016likelihood}. The
other is to define $\Omega$ as a ball of distributions:
\begin{equation*}
\Omega \triangleq \Big\{\mbb{Q}\in \scrP(\scrZ): D(\mathbb{Q},\ \mathbb{P}_0) \le \epsilon \Big\},
\end{equation*}
where $\scrZ$ is the set of possible values for $\bz$; $\scrP(\scrZ)$ is
the space of all probability distributions supported on $\scrZ$;
$\epsilon$ is a pre-specified radius of the set $\Omega$; and
$D(\mbb{Q},\ \mbb{P}_0)$ is a probabilistic distance function that measures the distance between
$\mbb{Q}$ and a nominal distribution $\mbb{P}_0$.  

The nominal distribution $\mbb{P}_0$ is typically chosen as the empirical distribution on the observed samples $\{\bz_1, \ldots, \bz_N\}$:
$$\mbb{P}_0 = \hat{\mbb{P}}_{N} \triangleq \frac{1}{N} \sum_{i=1}^N
\delta_{\bz_i}(\bz),$$ where $\delta_{\bz_i}(\cdot)$ is the Dirac density assigning
probability mass equal to $1$ at $\bz_i$;  see \cite{esfahani2018data,
	abadeh2015distributionally, chen2018robust}. There are also works employing a
nonparametric kernel density estimation method to obtain a continuous density
function for the nominal distribution, when the underlying true distribution is
continuous, see \cite{jiang2018risk, zhao2015data2}. The kernel density estimator is
defined as:
\begin{equation*}
f_0(\bz) = \frac{1}{N|\bH|^{1/2}} \sum\limits_{i=1}^N K\big(\bH^{-1/2}(\bz - \bz_i)\big),
\end{equation*} 
where $f_0$ represents the density function of the nominal distribution $\mbb{P}_0$, i.e., $f_0 = \mathrm{d}\mbb{P}_0/\mathrm{d}\bz$, $\bH \in \mbb{R}^{d \times d}$ represents a symmetric and positive definite bandwidth matrix, and $K(\cdot): \mbb{R}^d \rightarrow \mbb{R}^+$ is a symmetric kernel function satisfying $K(\cdot) \ge 0, \int_{\mbb{R}^d} K(\bz) \mathrm{d}\bz = 1$, and $\int_{\mbb{R}^d} K(\bz)\bz \mathrm{d}\bz = \mathbf{0}$.

An example of the probabilistic distance function $D(\cdot,\cdot)$
is the $\phi$-divergence
\citep{bayraksan2015data}:
\begin{equation*}
D(\mbb{Q},\ \mbb{P}_0) = \mbb{E}^{\mbb{P}_0}\Big[\phi\Big(\frac{\mathrm{d}\mbb{Q}}{\mathrm{d}\mbb{P}_0}\Big)\Big],
\end{equation*}
where $\phi(\cdot)$ is a convex function satisfying $\phi(1)=0$. For example, if
$\phi(t)=t\log t$, we obtain the Kullback-Leibler (KL) divergence \citep{Hu13,
	jiang2015data}. The definition of the $\phi$-divergence requires that $\mbb{Q}$ is
absolutely continuous with respect to $\mbb{P}_0$. If we take the empirical measure
to be the nominal distribution $\mbb{P}_0$, this implies that the support of
$\mbb{Q}$ must be a subset of the empirical examples. This constraint could
potentially hurt the generalization capability of DRO.

Other choices for $D(\cdot,\cdot)$ include the Prokhorov
metric \citep{erdougan2006ambiguous}, and the Wasserstein distance
\citep{esfahani2018data, gao2016distributionally, zhao2015data, luo2017decomposition,
	blanchet2016quantifying}. DRO with the Wasserstein metric has been extensively
studied in the machine learning community; see, for example, \cite{chen2018robust,
	blanchet2019robust} for robustified regression models,
\cite{sinha2017certifiable} for adversarial training in neural networks, and
\cite{abadeh2015distributionally} for distributionally robust logistic
regression. \cite{shafieezadeh2017regularization, gao2017wasserstein} provided a
comprehensive analysis of the Wasserstein-based distributionally robust statistical
learning problems with a scalar (as opposed to a vector) response. In recent work,
\cite{blanchet2019multivariate} proposed a DRO formulation for convex regression
under an absolute error loss. 

In this monograph we adopt the Wasserstein metric to define a data-driven DRO problem. Specifically, the ambiguity set $\Omega$ is defined as:
\begin{equation} \label{Omega}
\Omega \triangleq \Big\{\mbb{Q}\in \scrP(\scrZ):
W_{s,t}(\mathbb{Q},\ \hat{\mathbb{P}}_N) \le \epsilon \Big\}, 
\end{equation}
where $\hat{\mbb{P}}_N$ is the uniform empirical distribution over $N$ training
samples $\bz_i$, $i \in \lb N \rb$, and $W_{s,t}(\mbb{Q},\ \hat{\mbb{P}}_N)$ is the
order-$t$ Wasserstein distance ($t \ge 1$) between $\mbb{Q}$ and $\hat{\mbb{P}}_N$
defined as:
\begin{equation} \label{wass_p}
W_{s,t}(\mbb{Q}, \hat{\mbb{P}}_N) \triangleq 
\biggl(\min\limits_{\pi \in \scrP(\scrZ \times \scrZ)} \int\nolimits_{\scrZ \times \scrZ} \big(s(\bz_1, \bz_2)\big)^t \mathrm{d}\pi \bigl(\bz_1, \bz_2\bigr)\biggr)^{1/t},
\end{equation}
where $s$ is a metric on the data space $\scrZ$, and $\pi$ is the joint distribution of $\bz_1$ and $\bz_2$ with
marginals $\mbb{Q}$ and $\hat{\mbb{P}}_N$, respectively. The Wasserstein
distance between two distributions represents the cost of an
optimal mass transportation plan, where the cost is measured through the
metric $s$. 

We choose the Wasserstein metric for two main reasons. On one hand, the Wasserstein
ambiguity set is rich enough to contain both continuous and discrete relevant
distributions, while other metrics such as the KL divergence, exclude all continuous
distributions if the nominal distribution is discrete \citep{esfahani2018data,
	gao2016distributionally}. Furthermore, considering distributions within a KL
distance from the empirical, does not allow for probability mass outside the support of
the empirical distribution.

On the other hand, measure concentration results
guarantee that the Wasserstein set contains the true data-generating
distribution with high confidence for a sufficiently large sample size
\citep{Four14}. Moreover, the Wasserstein metric takes into account the closeness between support points while other metrics such as the $\phi$-divergence only consider the probabilities on these points. An image retrieval example in
\cite{gao2016distributionally} suggests that the probabilistic ambiguity set constructed based on the KL divergence prefers the pathological distribution to the true
distribution, whereas the Wasserstein
distance does not exhibit such a problem. The reason lies in that
the $\phi$-divergence does not incorporate a notion of closeness between two
points, which in the context of image retrieval represents the
perceptual similarity in color.

\section{Outline} \label{sec:outline}

The goal of this monograph is to develop a comprehensive robust statistical learning framework using a Wasserstein-based DRO as the modeling tool. Specifically, 
\begin{itemize}
	\item we provide background knowledge on the basics of DRO and the Wasserstein metric, and show its robustness inducing property through discussions on the Wasserstein ambiguity set and the property of the DRO solution;
	
	\item we cover a variety of predictive and prescriptive models that can be posed and solved using the Wasserstein DRO approach, and show novel problem-tailored theoretical results and real world applications, strengthening the notion of robustness through these discussions; 
	
	\item we consider a variety of synthetic and real world case studies of the respective models, which validate the theory and the proposed DRO approach and highlight its advantages compared to several alternatives. This could potentially $(i)$ ease the understanding of the model and approach; and $(ii)$ attract practitioners from various fields to put these models into use.  
\end{itemize}

Robust models can be useful when $(i)$ the training data is contaminated with noise,
and we want to learn a model that is immunized against the noise; or $(ii)$ the
training data is pure, but the test set is contaminated with outliers. In both
scenarios we require the model to be insensitive to the data
uncertainty/unreliability, which is characterized through a probability distribution
that resides in a set consisting of all distributions that are within a pre-specified
distance from a nominal distribution. The learning problems that are studied in this
monograph include: 
\begin{itemize}
	\item {\em Distributionally Robust Linear Regression (DRLR)}, which estimates a robustified linear regression plane by minimizing the worst-case expected absolute loss over a probabilistic ambiguity set characterized by the Wasserstein metric;
	
	\item {\em Groupwise Wasserstein Grouped LASSO (GWGL)}, which aims at inducing sparsity at a group level when there exists a predefined grouping structure for the predictors, through defining a specially structured Wasserstein metric for DRO;
	
	\item {\em Distributionally Robust Multi-Output Learning}, which solves a DRO problem with a multi-dimensional response/label vector, generalizing the single-output model addressed in DRLR.
	
	\item Optimal decision making using {\em DRLR informed K-Nearest Neighbors (K-NN) estimation}, which selects among a set of actions the optimal one through predicting the outcome under each action using K-NN with a distance metric weighted by the DRLR solution;
	
	\item {\em Distributionally Robust Semi-Supervised Learning}, which estimates a robust classifier with partially labeled data, through $(i)$ either restricting the marginal distribution to be consistent with the unlabeled data, $(ii)$ or modifying the structure of DRO by allowing the center of the ambiguity set to vary, reflecting the uncertainty in the labels of the unsupervised data.
	
	\item {\em Distributionally Robust Reinforcement Learning}, which considers {\em
		Markov Decision Processes (MDPs)} and seeks to inject robustness into the
	probabilistic transition model, deriving a lower bound for the {\em
		distributionally robust} value function in a regularized form.
	
\end{itemize}

The remainder of this monograph is organized as follows. Section~\ref{chapt:wass}
presents basics and key properties for the Wasserstein
metric. Section~\ref{chapt:solve} discusses how to solve a general Wasserstein DRO
problem, the structure of the worst-case distribution, and the performance guarantees
of the DRO estimator. The rest of the sections are dedicated to specific learning
problems that can be posed as a DRO problem.

In Section~\ref{chapt:dro}, we develop the Wasserstein DRO formulation for linear
regression under an absolute error loss.  Section~\ref{chap:group} discusses
distributionally robust grouped variable selection, and develops the {\em Groupwise
	Wasserstein Grouped LASSO (GWGL)} formulation under the absolute error loss and
log-loss.  In Section~\ref{chap:multi}, we generalize the single-output model and
develop distributionally robust multi-output learning models under Lipschitz
continuous loss functions and the multiclass log-loss.  Section~\ref{ch:presp}
presents an optimal decision making framework which selects among a set of actions
the best one, using predictions from {\em K-Nearest Neighbors (K-NN)} with a metric
weighted by the Wasserstein DRO solution.  Section~\ref{chapt:adv} covers a number of
active research topics in the domain of DRO under the Wasserstein metric, including
$(i)$ DRO in {\em Semi-Supervised Learning (SSL)} with partially labeled datasets; $(ii)$
DRO in {\em Reinforcement Learning (RL)} with temporal correlated data.  We close the
monograph by discussing further potential research directions in
Section~\ref{chapt:concl}. 

\section{Notational Conventions} \label{sec:notation}

\noindent {\bf Vectors} 

\begin{itemize}
	
	\item Boldfaced lowercase letters denote vectors, ordinary lowercase letters
	denote scalars, boldfaced uppercase letters denote matrices, and calligraphic
	capital letters denote sets. 
	
	\item $\mathbf{e}_i$ denotes the $i$-th unit vector, $\mathbf{e}$ or $\mathbf{1}$ the vector of ones,
	and $\bzero$ a vector of zeros.  
	
	\item All vectors are column vectors. For space saving
	reasons, we write $\bx=(x_1, \ldots, x_{\text{dim}(\bx)})$ to denote the column
	vector $\bx$, where $\text{dim}(\bx)$ is the dimension of $\bx$. 
	
\end{itemize}

\noindent {\bf Sets and functions}

\begin{itemize}
	
	\item We use $\mbb{R}$ to denote the set of real numbers, and $\mbb{R}^+$ the set of
	non-negative real numbers. 
	
	\item For a set $\scrX$, we use $|\scrX|$ to denote its cardinality.
	
	\item We write $\text{cone}\{\bv \in \scrV\}$ for a cone that is generated from the
	set of vectors $\bv \in \scrV$. 
	
	\item $\mathbf{1}_{\scrA}(\bx)$ denotes the indicator function, i.e., $\mathbf{1}_{\scrA}(\bx) = 1$ if $\bx \in \scrA$, and 0 otherwise. 
	
	\item For $\bz \triangleq (\bx, y) \in \scrX \times \scrY$ and a function $h$, the notations
	$h(\bz)$ and $h(\bx, y)$ are used interchangeably, and $\scrZ \triangleq \scrX \times \scrY$.
	
	\item $\scrB(\scrZ)$ denotes the set of Borel measures supported on $\scrZ$, and $\scrP(\scrZ)$ denotes the set of Borel probability measures
	supported on $\scrZ$.
	
	\item For any integer $n$ we write $\lb n \rb$ for the set $\{1,\ldots,n\}$. Hence,
	$\scrP(\lb n \rb)$ denotes the $n$-th dimensional probability simplex. 
	
\end{itemize}

\noindent {\bf Matrices}

\begin{itemize}
	
	\item $\bI$ denotes the identity matrix.
	
	\item Prime denotes transpose. Specifically, $\bA'$ denotes the transpose of a matrix
	$\bA$.
	
	\item For a matrix $\bA\in \mbb{R}^{m\times n}$, we will denote by
	$\bA=(a_{ij})_{i\in \lb m \rb}^{j\in \lb n \rb}$ the elements of $\bA$, by
	$\ba_1,\ldots,\ba_m$ the rows of $\bA$, and, with some abuse of our notation which denotes
	vectors by lowercase letters, we will denote by $\bA_1,\ldots,\bA_n$ the columns
	of $\bA$. 
	
	\item For a symmetric matrix $\bA$, we write $\bA \succ 0$ to denote a positive definite matrix, and $\bA \succcurlyeq 0$ a positive semi-definite matrix. 
	
	\item  $\diag(\bx)$ denotes a diagonal matrix whose main
	diagonal consists of the elements of $\bx$ and all off-diagonal elements are zero.  
	
	\item $\tr(\bA)$ denotes the trace (i.e., sum of the diagonal elements) of a square
	matrix $\bA\in \mbb{R}^{n\times n}$. 
	
	\item $|\bA|$ denotes the determinant of a square
	matrix $\bA\in \mbb{R}^{n\times n}$.
	
\end{itemize}

\noindent {\bf Norms}

\begin{itemize}
	
	\item $\|\bx\|_p\triangleq(\sum_i |x_i|^p)^{1/p}$ denotes the $\ell_p$ norm with $p \ge
	1$, and $\|\cdot\|$ the general vector norm that satisfies the following
	properties:
	\begin{enumerate}
		\item $\|\bx\| = 0$ implies $\bx=\mathbf{0}$;
		\item $\|a\bx\| = |a| \|\bx\|$, for any scalar $a$; 
		\item $\|\bx + \by\| \le \|\bx\| + \|\by\|$;
		\item $\|\bx\| = \||\bx|\|$, where $|\bx| = (|x_1|, \ldots, |x_{\text{dim}(\bx)}|)$;
		\item $\|(\bx, \mathbf{0})\| = \|\bx\|$, for an arbitrarily long vector $\mathbf{0}$.
	\end{enumerate}
	
	\item Any $\bW$-weighted $\ell_p$ norm defined as 
	\[ 
	\|\bx\|_p^{\bW} \triangleq
	\big((|\bx|^{p/2})'\bW |\bx|^{p/2}\big)^{1/p}
	\] 
	with a positive definite matrix $\bW$ satisfies the above conditions, where
	$|\bx|^{p/2} = (|x_1|^{p/2}, \ldots, |x_{\text{dim}(\bx)}|^{p/2})$.
	
	\item For a matrix $\bA \in \mbb{R}^{m \times n}$, we use $\|\bA\|_p$ to denote its induced $\ell_p$ norm that is defined as $\|\bA\|_p \triangleq \sup_{\bx \neq \mathbf{0}} \|\bA \bx\|_p/\|\bx\|_p$.

\end{itemize}

\noindent {\bf Random variables}

\begin{itemize}
	
	\item For two random variables $w_1$ and $w_2$, we say that $w_1$ is stochastically
	dominated by $w_2$, denoted by $w_1 \overset{\mathclap{\mbox{\tiny D}}}{\le} w_2$, if $\mbb{P}(w_1 \ge x) \le \mbb{P}(w_2 \ge x)$ for all $x \in \mbb{R}$.
	
	\item For a dataset $\scrD \triangleq \{\bz_1, \ldots, \bz_N\}$, we use
	$\hat{\mbb{P}}_{N}$ to denote the empirical measure supported on $\scrD$, i.e.,
	$\hat{\mbb{P}}_{N} \triangleq \frac{1}{N} \sum_{i=1}^N \delta_{\bz_i}(\bz)$, where
	$\delta_{\bz_i}(\bz)$ denotes the Dirac delta function at point $\bz_i \in \scrZ$.
	
	\item The $N$-fold product of a distribution $\mbb{P}$ on $\scrZ$ is denoted by $\mbb{P}^N$, which represents a distribution on the Cartesian product space $\scrZ^N$. We write $\mbb{P}^{\infty}$ to denote the limit of $\mbb{P}^N$ as $N \rightarrow \infty$.
	
	\item $\mathbb{E}^{\mbb{P}}$ denotes the expectation under a probability distribution
	$\mbb{P}$. 
	
	\item For a random vector $\bx$, $\text{cov}(\bx)$ will denote its covariance. 
	
	\item $\scrN_p(\mathbf{0}, \bSigma)$ denotes the $p$-dimensional Gaussian distribution with mean $\mathbf{0}$ and covariance matrix $\bSigma$.
	
	\item For a distribution $\mbb{P} \in \scrP(\scrX \times \scrY)$,
	$\mbb{P}_{\scrX}(\cdot) \triangleq \sum_{y \in \scrY} \mbb{P}( \cdot,y)$ denotes
	the marginal distribution over $\scrX$, and $\mbb{P}_{|\bx} \in \scrP^{\scrX}(\scrY)$ is
	the conditional distribution over $\scrY$ given $\bx \in \scrX$, where $\scrP^{\scrX}(\scrY)$ denotes the set of all conditional distributions supported on $\scrY$, given features in $\scrX$. 
	
	\item $W_{s,t}(\mbb{P}, \mbb{Q})$ denotes the order-$t$ Wasserstein distance between
	measures $\mbb{P}, \mbb{Q}$ under a cost metric $s$. For ease of notation and when
	the cost metric is clear from the context we will be writing $W_{t}(\mbb{P},
	\mbb{Q})$.
	
	\item $\Omega_{\epsilon}^{s,t}(\mbb{P})$ denotes the set of probability distributions whose order-$t$ Wasserstein distance under a cost metric $s$ from the distribution $\mbb{P}$ is less than or equal to $\epsilon$, i.e.,
	\[
	\Omega_{\epsilon}^{s,t}(\mbb{P}) \triangleq \{\mbb{Q}\in \scrP(\scrZ):
	W_{s,t}(\mathbb{Q},\ \mbb{P}) \le \epsilon \}.
	\]
	For ease of notation, when the cost metric is clear from the context and $t=1$, we will be writing $\Omega_{\epsilon}(\mbb{P})$, or simply $\Omega$ when the center distribution $\mbb{P}$ is clear from the context.
	
\end{itemize}

\newpage

\section{Abbreviations} \label{sec:abbrv} 

\begin{center}
	\begin{tabular}{lll}
		\hspace*{2em} & \hspace*{1in} & \hspace*{4.5in} \\
		ACE & \dotfill & Angiotensin-Converting Enzyme \\
		ACS & \dotfill & American College of Surgeons\\
		AD & \dotfill & Absolute Deviation \\
		ARB & \dotfill & Angiotensin Receptor Blockers \\
		a.s. & \dotfill &  almost surely \\
		AUC & \dotfill & Area Under the ROC Curve \\
		BMI & \dotfill & Body Mass Index \\
		CART & \dotfill & Classification And Regression Trees \\
		CCA & \dotfill & Canonical Correlation Analysis \\
		CCR & \dotfill & Correct Classification Rate \\
		CI & \dotfill & Confidence Interval \\
		CT & \dotfill & Computed Tomography \\
		CTDI & \dotfill & CT Dose Index \\
		CVaR & \dotfill & Conditional Value at Risk \\
		C\&W & \dotfill & the Curds and Whey procedure \\
		DRLR   & \dotfill & Distributionally Robust Linear\\
		&&  Regression \\ 
		DRO  & \dotfill & Distributionally Robust Optimization \\
		EHRs & \dotfill & Electronic Health Records \\
		EN & \dotfill & Elastic Net \\
		FA & \dotfill & False Association \\
		FD & \dotfill & False Disassociation \\
		FES & \dotfill & Factor Estimation and Selection \\
		GLASSO & \dotfill & Grouped LASSO\\
		GSRL & \dotfill & Grouped Square Root LASSO \\
		GWGL  & \dotfill & Groupwise Wasserstein Grouped\\
		&&  LASSO  \\
		HbA\textsubscript{1c} & \dotfill & hemoglobin A1c \\
		HIPAA & \dotfill & Health Insurance Portability and \\
		& & Accountability Act \\
		ICD-9 & \dotfill & International Classification of \\
		& & Diseases, Ninth Revision \\ 
	\end{tabular}
\end{center}

\begin{center}
	\begin{tabular}{lll}
		\hspace*{2em} & \hspace*{1in} & \hspace*{4.5in} \\

		i.i.d. & \dotfill & independently and identically \\
		&& distributed \\
		IRB & \dotfill & Institutional Review Board \\
		IRLS & \dotfill & Iteratively Reweighted Least Squares \\
		KL  & \dotfill & Kullback-Leibler \\
		K-NN & \dotfill & K-Nearest Neighbors \\
		LAD & \dotfill & Least Absolute Deviation \\
		LASSO & \dotfill & Least Absolute Shrinkage and \\
		& & Selection Operator \\
		LG & \dotfill & Logistic Regression \\
		LHS & \dotfill & Left Hand Side \\
		LMS & \dotfill & Least Median of Squares \\
		LOESS & \dotfill & LOcally Estimated Scatterplot\\
		&&  Smoothing \\
		LTS & \dotfill & Least Trimmed Squares \\
		MAD & \dotfill & Median Absolute Deviation \\
		MCC & \dotfill & MultiClass Classification \\
		MDP & \dotfill & Markov Decision Process \\
		MeanAE & \dotfill & Mean Absolute Error \\
		min-max  & \dotfill & minimization-maximization \\
		MLE & \dotfill & Maximum Likelihood Estimator \\
		MLG  & \dotfill & Multiclass Logistic Regression \\
		MLR  & \dotfill & Multi-output Linear Regression \\
		MPD & \dotfill & Minimal Perturbation Distance \\
		MPI & \dotfill & Maximum Percentage Improvement \\
		MPMs & \dotfill & Minimax Probability Machines \\
		MSE & \dotfill & Mean Squared Error \\
		NPV & \dotfill & Negative Predictive Value\\
		NSQIP & \dotfill & National Surgical Quality \\
		& & Improvement Program \\
		OLS & \dotfill & Ordinary Least Squares \\
		PCR & \dotfill & Principal Components Regression \\
		PPV & \dotfill & Positive Predictive Value \\
		PVE & \dotfill & Proportion of Variance Explained \\		
	\end{tabular}
\end{center}
\begin{center}
	\begin{tabular}{lll}
		\hspace*{2em} & \hspace*{1in} & \hspace*{4.5in} \\
		RBA & \dotfill & Robust Bias-Aware \\
		RHS & \dotfill & Right Hand Side \\
		RL & \dotfill & Reinforcement Learning \\
		ROC & \dotfill & Receiver Operating Characteristic \\
		RR & \dotfill & Relative Risk \\
		RRR & \dotfill & Reduced Rank Regression \\
		RTE & \dotfill & Relative Test Error \\
		SNR & \dotfill & Signal to Noise Ratio \\
		SR & \dotfill & Squared Residuals \\
		SSL & \dotfill & Semi-Supervised Learning \\
		std & \dotfill & standard deviation \\
		SVM  & \dotfill & Support Vector Machine \\ 
		TA & \dotfill & True Association \\
		TD & \dotfill & True Disassociation \\
		TAR & \dotfill & True Association Rate \\		
		TDR & \dotfill & True Disassociation Rate \\
		WGD & \dotfill & Within Group Difference \\
		w.h.p. & \dotfill &  with high probability \\
		WMSE & \dotfill & Weighed Mean Squared Error \\
		w.p.$1$ & \dotfill &  with probability $1$ \\
		w.r.t. & \dotfill &  with respect to \\
	\end{tabular}
\end{center}

\chapter{The Wasserstein Metric} \label{chapt:wass}

In this section, we outline basic properties of the Wasserstein distance. A
definition in the case of discrete measures is provided in
Section~\ref{sec:wass-basic}. Section~\ref{sec:wmetric} establishes that it is a
proper distance metric. A dual formulation and a generalization to arbitrary measures
are presented in Section~\ref{sec:wdual}. Special cases are described in
Section~\ref{sec:wspecial}. A discussion on how to set the Wasserstein underlying
transport cost function in the context of robust learning is in
Section~\ref{sec:wtransport}. A related robustness-inducing property of the
Wasserstein metric is shown in Section~\ref{sec:robust} and a discussion on how to
set the radius of the Wasserstein ambiguity set is included in
Section~\ref{sec:wass-radius}.

\section{Basics} \label{sec:wass-basic}

We start by reviewing basic properties of the Wasserstein metric defined in
Section~\ref{chapt:intro} (cf. Eq.~(\ref{wass_p})). We will define the metric and
establish key results, first using discrete probability distributions, and then state
how the definitions and results generalize to arbitrary probability measures.

Consider two discrete probability distributions $\mbb{P}=\{p_1,\ldots,p_m\}$ and
$\mbb{Q}=\{q_1,\ldots,q_n\}$, where $p_i, q_j\geq 0$, for all $i,j$, and
$\sum_{i=1}^m p_i=\sum_{j=1}^n q_j=1$. For convenience, let us write
$\bp=(p_1,\ldots,p_m)$ and $\bq=(q_1,\ldots,q_n)$ for the corresponding column
vectors. Define a metric (cost) between points in the support of $\mbb{P}$ and
$\mbb{Q}$ by $s(i,j)$, $i\in \lb m\rb$, $j\in \lb n\rb$, and collect all these
quantities in an $m\times n$ matrix $\bS=(s(i,j))$ whose $(i,j)$ element is
$s(i,j)$. Consider the {\em Linear Programming (LP)} problem:
\begin{equation} \label{transport}
\begin{array}{rl}
W_{\bS,1} (\mbb{P},\mbb{Q}) = \min_\bpi & \sum_{i=1}^m
\sum_{j=1}^n \pi(i,j) s(i,j)\\ 
\text{s.t.} & \sum_{i=1}^m \pi(i,j) = q_j,\quad j\in \lb n\rb, \\
& \sum_{j=1}^n \pi(i,j) = p_i,\quad i\in \lb m\rb, \\
& \pi(i,j) \geq 0,\quad \forall i,j,
\end{array}
\end{equation}
where $\bpi=(\pi(i,j);\, \forall i,j)$ is the decision vector. Notice that according
to the definition in Eq.~(\ref{wass_p}), the objective value is the order-$1$
Wasserstein distance between distributions $\mbb{P}$ and $\mbb{Q}$. In $W_{\bS,1}
(\mbb{P},\mbb{Q})$ we have inserted the subscript $\bS$ to explicitly denote the
dependence on the cost matrix. Similarly, by defining a cost matrix
$\bS^t=((s(i,j))^t)$, the order-$t$ Wasserstein distance, denoted by
$W_{\bS,t}(\cdot,\cdot)$, can be obtained as the $t$-th root of the optimal value of
the same LP with cost matrix $\bS^t$; namely, 
\begin{equation} \label{t-toLP}
W_{\bS,t}(\mbb{P},\mbb{Q}) = \big(W_{\bS^t,1} (\mbb{P},\mbb{Q})\big)^{1/t}. 
\end{equation}

The LP formulation in (\ref{transport}) is equivalent to the well-known {\em
	transportation problem}~\citep{bets-lp} and can be interpreted as the cost of
transporting probability mass from the support points of $\mbb{P}$ to those of
$\mbb{Q}$. Specifically, the problem corresponds to the bipartite graph in
Figure~\ref{fig:earth-mover} with nodes $\{u_1,\ldots,u_m\}$ representing the support of
$\mbb{P}$, nodes $\{v_1,\ldots,v_n\}$ representing the support of $\mbb{Q}$, $p_i$
being the supply at node $u_i$, $q_j$ the demand at node $v_j$, and $\pi(i,j)$ the
flow of material (probability mass) from node $u_i$ to node $v_j$ incurring a transportation cost of
$s(i,j)$ per unit of material. 
\begin{figure}[ht]
	\begin{center}
		\resizebox{0.5\linewidth}{!}{\input{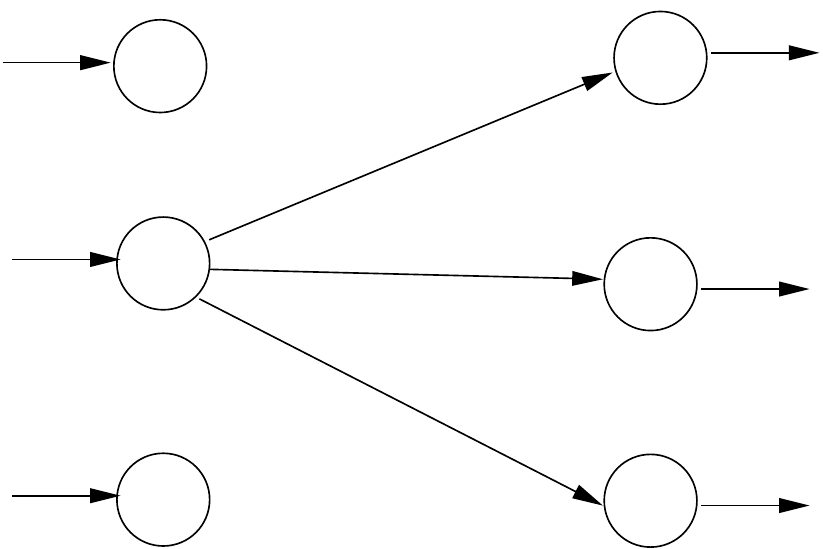_t}}
	\end{center}
	\caption{The transportation problem for computing the Wasserstein distance $W_{\bS,1} (\mbb{P},\mbb{Q})$.}
	\label{fig:earth-mover}
\end{figure}

The formulation in (\ref{transport}) has a long history, starting with Monge
\citep{monge1781memoire} who formulated a problem of optimally transferring material
extracted from a mining site to various construction sites; hence, the terms {\em
	optimal mass transport} and {\em earth mover's distance}. In Monge's formulation,
all material from a source node $u_i$ gets ``assigned'' to a destination node
$v_j$. Kantorovich \citep{kantorovich1942transfer,kantorovich1948monge} relaxed the
problem by allowing sources to split their material to several destination nodes. For
Kantorovich, this was an application of an LP he had earlier defined for production
planning problems~(\cite{kantorovich1939mathematical}, later translated in English in
\cite{kantorovich1960mathematical}) and a method (and a duality theorem) he had
developed for these problems~\citep{kantorovich1940one}. Definitive references on
optimal mass transport are \cite{villani2008optimal}, and, focusing more on
computational aspects, \cite{peyre2019computational}. In presenting some of the key
properties and duality we will follow the approach of \cite{peyre2019computational}
which presents the theory for discrete probability distributions.

\section{A Distance Metric} \label{sec:wmetric}

In this section we establish that the Wasserstein distance
$W_{\bS,t}(\mbb{P},\mbb{Q})$ is a distance metric, assuming that the underlying cost
$s(i,j)$ is a proper distance metric.

\begin{ass} \label{ass:smetric} Let $n=m$ and assume
	\begin{enumerate}
		\item $s(i,j)\geq 0$, with $s(i,j)=0$ if and only if $i=j$. 
		\item $s(i,j)=s(j,i)$ for $i\neq j$. 
		\item For any triplet $i,j,k\in \lb n\rb$, $s(i,k)\leq s(i,j)+s(j,k)$. 
	\end{enumerate}
\end{ass}

\begin{thm}\label{thm:wmetric}
	Under Assumption~\ref{ass:smetric}, the order-$t$ Wasserstein distance ($t\geq 1$) is
	a metric, \ie,
	\begin{enumerate}
		\item $W_{\bS,t}(\mbb{P},\mbb{Q})\geq 0$ for any $\mbb{P}, \mbb{Q}\in \scrP(\lb
		n\rb)$, with $W_{\bS,t}(\mbb{P},\mbb{Q})=0$ if and only if $\mbb{P}=\mbb{Q}$.
		\item $W_{\bS,t}(\mbb{P},\mbb{Q}) = W_{\bS,t}(\mbb{Q},\mbb{P})$ for any $\mbb{P},
		\mbb{Q}\in \scrP(\lb n\rb)$.
		\item For any triplet $\mbb{P},
		\mbb{Q}, \mbb{V} \in \scrP(\lb n\rb)$, $W_{\bS,t}(\mbb{P},\mbb{V})\leq
		W_{\bS,t}(\mbb{P},\mbb{Q})+W_{\bS,t}(\mbb{Q},\mbb{V})$.  
	\end{enumerate}
\end{thm}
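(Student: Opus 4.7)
The plan is to verify the three metric properties directly from the LP formulation~(\ref{transport}), leveraging the identity $W_{\bS,t}(\mbb{P},\mbb{Q}) = (W_{\bS^t,1}(\mbb{P},\mbb{Q}))^{1/t}$ in~(\ref{t-toLP}) so that I can always reduce to working with the transportation LP itself. Non-negativity is immediate because the LP has non-negative cost coefficients $s(i,j)^t$ and non-negative variables $\pi(i,j)$. Symmetry follows from the observation that swapping $(\mbb{P},\mbb{Q})$ only transposes the role of the row/column marginals: if $\bpi^*$ is optimal for $(\mbb{P},\mbb{Q})$, then $\tilde\pi(j,i) \triangleq \pi^*(i,j)$ is feasible for $(\mbb{Q},\mbb{P})$ with identical objective value because $s(i,j)^t = s(j,i)^t$.

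For the identity of indiscernibles, one direction uses the "diagonal" transport plan $\pi(i,j) = p_i \mathbf{1}_{\{i=j\}}$, which is feasible whenever $\mbb{P}=\mbb{Q}$ and attains objective zero since $s(i,i)^t = 0$. For the converse, if $W_{\bS,t}(\mbb{P},\mbb{Q}) = 0$ and $\bpi^*$ is an optimizer, then $\pi^*(i,j) s(i,j)^t = 0$ for every pair; since $s(i,j) > 0$ whenever $i \neq j$, we conclude $\pi^*(i,j) = 0$ off the diagonal, so the marginal constraints force $p_i = \pi^*(i,i) = q_i$.

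The main obstacle is the triangle inequality, which I would prove using the standard "gluing" construction combined with Minkowski's inequality. Given optimal couplings $\bpi^{PQ}$ for $(\mbb{P},\mbb{Q})$ and $\bpi^{QV}$ for $(\mbb{Q},\mbb{V})$, define
\begin{equation*}
\pi^{PV}(i,k) \triangleq \sum_{j:\, q_j > 0} \frac{\pi^{PQ}(i,j)\, \pi^{QV}(j,k)}{q_j}.
\end{equation*}
A short check of the marginals (using that $\bpi^{PQ}$ has $\mbb{Q}$ as second marginal and $\bpi^{QV}$ has $\mbb{Q}$ as first marginal) shows $\pi^{PV}$ is a feasible coupling of $\mbb{P}$ and $\mbb{V}$. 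Equivalently, introduce the probability measure $\mu(i,j,k) = \pi^{PQ}(i,j)\pi^{QV}(j,k)/q_j$ on triples, under which $(I,J)$ has law $\bpi^{PQ}$, $(J,K)$ has law $\bpi^{QV}$, and $(I,K)$ has law $\pi^{PV}$. The triangle inequality for $s$ then yields
\begin{equation*}
W_{\bS,t}(\mbb{P},\mbb{V})^t \le \mbb{E}_\mu\bigl[s(I,K)^t\bigr] \le \mbb{E}_\mu\bigl[(s(I,J)+s(J,K))^t\bigr],
\end{equation*}
and applying Minkowski's inequality in $L^t(\mu)$ gives
\begin{equation*}
\mbb{E}_\mu\bigl[(s(I,J)+s(J,K))^t\bigr]^{1/t} \le \mbb{E}_\mu\bigl[s(I,J)^t\bigr]^{1/t} + \mbb{E}_\mu\bigl[s(J,K)^t\bigr]^{1/t} = W_{\bS,t}(\mbb{P},\mbb{Q}) + W_{\bS,t}(\mbb{Q},\mbb{V}).
\end{equation*}
The delicate points are (i) handling $j$ with $q_j = 0$ (these contribute nothing because both $\pi^{PQ}(i,j)$ and $\pi^{QV}(j,k)$ vanish), and (ii) remembering that Minkowski, not just the pointwise triangle inequality, is what makes the bound work for $t>1$; the case $t=1$ collapses to linearity of expectation and is a one-line computation.
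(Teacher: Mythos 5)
Your proposal is correct and follows essentially the same route as the paper: the diagonal plan and positivity of off-diagonal costs for the identity of indiscernibles, transposition of the coupling for symmetry, and for the triangle inequality the same gluing construction (your $\pi^{PV}(i,k)=\sum_{j:q_j>0}\pi^{PQ}(i,j)\pi^{QV}(j,k)/q_j$ is exactly the paper's $\bPi_1\bD\bPi_2$ with $\bD=\diag(1/\tilde{q}_1,\ldots,1/\tilde{q}_n)$) followed by the pointwise triangle inequality for $s$ and Minkowski's inequality in $L^t$. No gaps; your handling of the $q_j=0$ case matches the paper's $\tilde{q}_j$ device.
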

\begin{proof}
	Recall Eq.~(\ref{t-toLP}) that relates $W_{\bS,t}(\mbb{P},\mbb{Q})$ to $W_{\bS^t,1}
	(\mbb{P},\mbb{Q})$. The latter quantity can be obtained as the optimal value of the
	LP in (\ref{transport}) using the cost metric $\bS^t$.
	\begin{enumerate} 
		\item The non-negativity follows directly from the formulation in (\ref{transport})
		since $s(i,j)\geq 0$ (by Assumption~\ref{ass:smetric}), hence $(s(i,j))^t\geq 0$,
		and any feasible solution satisfies $\pi(i,j)\geq 0$. In addition,
		$W_{\bS,t}(\mbb{P},\mbb{P})=0$, because, in this case, the optimal solution in
		formulation (\ref{transport}) satisfies $\pi(i,j)=0$, if $i\neq j$, and $\pi(i,i) =
		p_i$, for all $i$. Since $(s(i,i))^t= 0$ (due to Assumption~\ref{ass:smetric}), the
		optimal value of the LP in (\ref{transport}) is zero. Further, if $\mbb{P}\neq
		\mbb{Q}$, there should be flow $\pi(i,j)>0$ for some $i\neq j$, and since
		$s(i,j)>0$ for those $i,j$ (due to Assumption~\ref{ass:smetric}), the optimal value
		of the LP is positive.
		
		\item To establish symmetry, consider $W_{\bS,t}(\mbb{P},\mbb{Q})$ and compare it
		with $W_{\bS,t}(\mbb{Q},\mbb{P})$. It suffices to compare $W_{\bS^t,1}
		(\mbb{P},\mbb{Q})$ with $W_{\bS^t,1} (\mbb{Q},\mbb{P})$. To that end, notice that
		given an optimal solution $\pi_f(i,j)$, for all $i,j$, for $W_{\bS^t,1}
		(\mbb{P},\mbb{Q})$ computed from the LP in (\ref{transport}), we can obtain an
		optimal solution $\pi_b(i,j)$ for $W_{\bS^t,1} (\mbb{Q},\mbb{P})$ simply by
		reversing the flows, \ie, $\pi_b(j,i)=\pi_f(i,j)$, for all $i,j$. Given the
		symmetry of the cost $s(i,j)$ due to Assumption~\ref{ass:smetric}, the result
		follows.
		
		\item To establish the triangle inequality, fix $\mbb{P}, \mbb{Q}, \mbb{V} \in
		\scrP(\lb n\rb)$ and consider $W_{\bS,t}(\mbb{P},\mbb{Q})$ and
		$W_{\bS,t}(\mbb{Q},\mbb{V})$. Let $\bPi_1 = (\pi_1(i,j))_{i,j\in \lb n\rb}$ and
		$\bPi_2 = (\pi_2(i,j))_{i,j\in \lb n\rb}$ be the optimal solutions of the LPs
		corresponding to $W_{\bS^t,1}(\mbb{P},\mbb{Q})$ and $W_{\bS^t,1}(\mbb{Q},\mbb{V})$,
		respectively. Define a $\tilde{\mbb{Q}}$ such that $\tilde{q}_i=q_i$, if
		$q_i>0$, and $\tilde{q}_i=1$, otherwise. Let $\tilde{\bq}$ be the corresponding column
		vector. Define $\bD=\diag{(1/\tilde{q}_1,\ldots,1/\tilde{q}_n)}$. 
		
		Consider next $W_{\bS,t}(\mbb{P},\mbb{V})$ and the LP corresponding to
		$W_{\bS^t,1}(\mbb{P},\mbb{V})$. We will first argue that $\bPi_{1,2}\stackrel{\triangle}{=}\bPi_1 \bD
		\bPi_2$ forms a feasible solution to that LP. Specifically, recalling that $\mb{e}$
		is the vector of all ones, 
		\[ 
		\bPi_{1,2} \mb{e} = \bPi_1 \bD \bPi_2 \mb{e} = \bPi_1 \bD \bq = \bPi_1
		\mb{e}_{\mbb{Q}} = \bp, 
		\]
		where we used the feasibility of $\bPi_1, \bPi_2$, and $\mb{e}_{\mbb{Q}}$ is a vector
		whose $i$th element is set to $1$ if $q_i>0$, and to zero, otherwise. Similarly, we
		can also show $\mb{e}' \bPi_{1,2} = \bv'$, where $\bv$ is the column vector
		corresponding to $\mbb{V}$. 
		
		Letting $\bPi_{1,2} = (\pi_{1,2}(i,j))_{i,j\in \lb n\rb}$, we have
		\begin{align} 
		& W_{\bS,t}(\mbb{P},\mbb{V}) = (W_{\bS^t,1}(\mbb{P},\mbb{V}))^{1/t} \notag\\
		\leq & \left( \sum_{i,j} (s(i,j))^t \pi_{1,2}(i,j) \right)^{1/t} \label{wtriangle-1}\\
		= &\left( \sum_{i,j} (s(i,j))^t \sum_k \frac{\pi_1(i,k) \pi_2(k,j)}{\tilde{q}_k}
		\right)^{1/t} \notag\\
		\leq  & \left( \sum_{i,j,k} (s(i,k)+s(k,j))^t  \frac{\pi_1(i,k) \pi_2(k,j)}{\tilde{q}_k}
		\right)^{1/t} \label{wtriangle-2}\\
		=  & \left( \sum_{i,j,k} \left[ s(i,k) \left(\frac{\pi_1(i,k)
			\pi_2(k,j)}{\tilde{q}_k}\right)^{1/t} \right. \right. \notag \\  
		& \left. \left. \phantom{pppppp} + s(k,j) \left(\frac{\pi_1(i,k)
			\pi_2(k,j)}{\tilde{q}_k}\right)^{1/t}  \right]^t \right)^{1/t} \notag\\
		\leq & \left( \sum_{i,j,k} (s(i,k))^t \frac{\pi_1(i,k)
			\pi_2(k,j)}{\tilde{q}_k} \right)^{1/t} \notag \\ 
		&\phantom{ppppp} + \left( \sum_{i,j,k} (s(k,j))^t
		\frac{\pi_1(i,k)\pi_2(k,j)}{\tilde{q}_k}  \right)^{1/t} \label{wtriangle-3}\\
		= & \left( \sum_{i,k} (s(i,k))^t \pi_1(i,k) \sum_j \frac{
			\pi_2(k,j)}{\tilde{q}_k} \right)^{1/t} \notag \\ 
		&\phantom{ppppp} + \left( \sum_{j,k} (s(k,j))^t \pi_2(k,j) 
		\sum_i \frac{\pi_1(i,k)}{\tilde{q}_k}  \right)^{1/t} \notag\\
		= & \left( \sum_{i,k} (s(i,k))^t \pi_1(i,k) \right)^{1/t} + \left( \sum_{j,k} (s(k,j))^t \pi_2(k,j) 
		\right)^{1/t} \label{wtriangle-4}\\
		= & W_{\bS,t}(\mbb{P},\mbb{Q})+W_{\bS,t}(\mbb{Q},\mbb{V}), \notag 
		\end{align}
		where (\ref{wtriangle-1}) follows from the feasibility (and potential suboptimality)
		of $\pi_{1,2}(i,j)$, (\ref{wtriangle-2}) follows from the triangle inequality for
		$s(i,j)$, (\ref{wtriangle-3}) is due to the Minkowski inequality, and
		(\ref{wtriangle-4}) uses the feasibility of $\bPi_1, \bPi_2$.
	\end{enumerate}
\end{proof}

As a final comment in this section, we note that the order-$1$ Wasserstein distance
$W_{\bS,1}(\mbb{P},\mbb{Q})$, viewed as a function of the vectors $\bp$ and $\bq$
corresponding to $\mbb{P}$ and $\mbb{Q}$, is a convex function. This follows from the
LP formulation (\ref{transport}), where the optimal value is a convex function of the
RHS of the constraints~\citep[Sec. 5.2]{bets-lp}.  

\section{The Dual Problem} \label{sec:wdual}

In this section, we derive the dual of the mass transportation problem in
(\ref{transport}). Let $f_j$ be the dual variable corresponding to the flow
conservation constraint for $q_j$ and $g_i$ the dual variable corresponding to the
flow conservation constraint for $p_i$. We write $\mb{f}\in \mbb{R}^n$ and $\bg\in
\mbb{R}^m$ for the corresponding dual vectors. Using LP duality, the dual of
(\ref{transport}) takes the form:
\begin{equation} \label{dual-transport}
\begin{array}{rl}
W_{\bS,1} (\mbb{P},\mbb{Q}) = \max_{\mb{f},\bg} & \sum_{i=1}^m g_i p_i + 
\sum_{j=1}^n f_j q_j\\ 
\text{s.t.} & f_j+g_i\leq s(i,j),\quad i\in \lb m\rb,\ j\in \lb n\rb. 
\end{array}
\end{equation}
The optimal value is equal to the primal optimal value due to the LP strong duality. 
The complementary slackness conditions suggest that 
\begin{equation} \label{w-cs}
\text{if $\pi(i,j)>0$ then $f_j+g_i= s(i,j)$.} 
\end{equation}
Necessary and sufficient conditions for a primal solution $\bPi$ to be primal optimal
and dual solutions $\mb{f}$ and $\bg$ to be dual optimal are: $(i)$ primal
feasibility, $(ii)$ dual feasibility, and $(iii)$ the complementary slackness
condition in (\ref{w-cs}). 

The primal and dual problems can be interpreted as follows. The primal problem is the
problem of minimizing transportation cost for a transporter of mass across the
bipartite graph in Figure~\ref{fig:earth-mover}. The transporter faces a cost of
$s(i,j)$ per unit of mass transported on link $(i,j)$. Suppose now that the
transporter, instead of carrying out the transportation plan, hires another shipping
company (\eg, a company like UPS, DHL, or Fedex). This shipping company charges a
price of $g_i$ for picking one unit of mass from node $u_i$ and a price of $f_j$ for
delivering one unit of mass to node $v_j$. The dual problem is then the problem
solved by the shipping company to maximize its revenue by carrying out the
transportation of mass. Strong duality simply states that there should not be an
``arbitrage'' opportunity and the transportation cost must be the same irrespective
of whether the transporter of mass hires a shipping company or not. In other words,
if the price offered by the shipping company was strictly less than the
transportation cost, then the mass transporter would be able to make money just by
outsourcing shipping. Furthermore, the market conditions would be ripe for another
middleperson to come into the market, offer the shipping company higher prices, while
still making it profitable for the transporter to use the middleperson's services. More
specifically, the complementary slackness conditions (\ref{w-cs}) suggest that if
there is mass transported along link $(i,j)$, the cost of transporting the mass
through the shipping company must equal the transportation cost faced by the
transporter across that link.

A different interpretation of the primal and the dual can be obtained through an
analogy with electrical circuits. Let us treat $p_i$ as current flowing {\em into} node
$u_i$. Similarly, $q_j$ is current flowing {\em out of} node $v_j$, or, equivalently,
the inflow into $v_j$ is equal to $\hat{q}_j=-q_j$. Rewriting the dual problem
(\ref{dual-transport}) using the $\hat{q}_i$'s and changing variables from $f_j$ to
$\hat{f}_j=-f_j$ yields: 
\begin{equation} \label{dual2-transport}
\begin{array}{rl}
W_{\bS,1} (\mbb{P},\mbb{Q}) = \max_{\mb{\hat{f}},\bg} & \sum_{i=1}^m g_i p_i + 
\sum_{j=1}^n \hat{f}_j \hat{q}_j\\ 
\text{s.t.} & g_i- \hat{f}_j\leq s(i,j),\quad i\in \lb m\rb,\ j\in \lb n\rb. 
\end{array}
\end{equation}
In this context, the constraints of the primal can be viewed as Kirchoff's current
law and the dual variables ($g_i$ at nodes $u_i$ and $\hat{f}_j$ at nodes $v_j$) can
be interpreted as electric potentials (voltages with respect to the ground) at the
nodes. The complementary slackness conditions state that if there is current flowing
from node $u_i$ to $v_j$, the voltage, or potential difference among these nodes,
must equal $s(i,j)$. More simply put, for one unit of flow (current), the voltage
must be equal to the ``resistor'' $s(i,j)$, which corresponds to Ohm's law. These
node potentials are known as Kantorovich potentials~\citep{peyre2019computational}.

\subsection{Arbitrary Measures and Kantorovich Duality}

The primal problem we defined in (\ref{transport}) can be generalized to arbitrary
measures as defined in Eq.~(\ref{wass_p}). Consider two Polish (\ie, complete,
separable, metric) probability spaces $(\scrZ_1, \mbb{P})$ and $(\scrZ_2, \mbb{Q})$
and a lower semicontinuous cost function $s: \scrZ_1\times \scrZ_2 \ra \mbb{R}\cup
\{+\infty\}$. Then, the order-$1$ Wasserstein distance can be defined as the optimal
value of the primal problem:
\begin{equation} \label{transport-cont}
W_{s,1} (\mbb{P},\mbb{Q}) =  \min_\pi \int\nolimits_{\scrZ_1\times \scrZ_2} s(\bz_1,\bz_2)
\mathrm{d}\pi(\bz_1,\bz_2), 
\end{equation}
where $\pi\in \scrP(\scrZ_1\times \scrZ_2)$ is a joint probability distribution of $\bz_1, \bz_2$
with marginals $\mbb{P}$ and $\mbb{Q}$. The order-$t$ Wasserstein distance can be
obtained as: 
\begin{equation} \label{t-toLP-cont}
W_{s,t}(\mbb{P},\mbb{Q}) = \big(W_{s^t,1} (\mbb{P},\mbb{Q})\big)^{1/t}, 
\end{equation}
where $s^t(\bz_1,\bz_2)=(s(\bz_1,\bz_2))^t$. 

The dual problem, known as the Kantorovich
dual~\citep[Thm. 5.10]{villani2008optimal}, analogously to Problem
(\ref{dual-transport}) can be written as:
\begin{equation} \label{dual-transport-cont}
\begin{array}{rl}
W_{s,1} (\mbb{P},\mbb{Q}) = \displaystyle\sup_{f,g} & \int_{\scrZ_1} g(\bz_1) \mathrm{d}\mbb{P}(\bz_1) + 
\int_{\scrZ_2} f(\bz_2) \mathrm{d}\mbb{Q}(\bz_2) \\ 
\text{s.t.} & f(\bz_2)+g(\bz_1)\leq s(\bz_1,\bz_2),\; \bz_1\in \scrZ_1,\ \bz_2\in \scrZ_2, 
\end{array}
\end{equation}
where $f$ and $g$ are absolutely integrable under $\mbb{Q}$ and $\mbb{P}$,
respectively. By the Kantorovich-Rubinstein Theorem~\citep{villani2008optimal}, when
$s(\bz_1, \bz_2)$ is a distance metric on a Polish space $\scrZ_1$,
(\ref{dual-transport-cont}) can be simplified to
\begin{equation} \label{kan-rub}
\begin{array}{rl}
W_{s,1} (\mbb{P},\mbb{Q}) = \displaystyle\sup_{g} & \int_{\scrZ_1} g(\bz_1) \mathrm{d}\mbb{P}(\bz_1) - \int_{\scrZ_2} g(\bz_2) \mathrm{d}\mbb{Q}(\bz_2) \\ 
\text{s.t.} & |g(\bz_1)-g(\bz_2)|\leq s(\bz_1,\bz_2),\; \bz_1\in \scrZ_1,\ \bz_2\in \scrZ_2. 
\end{array}
\end{equation}

\section{Some Special Cases} \label{sec:wspecial}

\subsection{One-Dimensional Cases}

Suppose $\mbb{P}$ and $\mbb{Q}$ are discrete distributions on $\mbb{R}$. Let
$\mbb{P}$ have mass of $1/n$ at each of the points $x_i\in \mbb{R},\ i\in \lb
n\rb$, where $x_1\leq x_2\leq \cdots \leq x_n$. Similarly, $\mbb{Q}$ assigns
mass of $1/n$ at each of the points $y_i\in \mbb{R},\ i\in \lb n\rb$, where
$y_1\leq y_2\leq \cdots \leq y_n$. Then, with $s(x,y)=|x-y|$, the order-$t$
Wasserstein distance can be obtained as:
\begin{equation} \label{1d-discrete}
W_{s,t}(\mbb{P},\mbb{Q}) = \left( \frac{1}{n} \sum_{i=1}^n |x_i-y_i|^t
\right)^{1/t}.
\end{equation}
This can be easily obtained by solving the corresponding formulation in
(\ref{transport}).

For continuous one-dimensional distributions on $\mbb{R}$, let $F_{\mbb{P}}$ denote the
{\em Cumulative Distribution Function (CDF)} of $\mbb{P}$, namely,
\[
F_{\mbb{P}}(x) = \int\nolimits_{-\infty}^x \mathrm{d}\mbb{P}, \qquad x\in \mbb{R}.
\]
Define the inverse CDF or quantile function $F^{-1}_{\mbb{P}}(p)$ as
\[
F^{-1}_{\mbb{P}}(p) = \min \{x\in \mbb{R}\cup \{-\infty\}:\ F_{\mbb{P}}(x)\geq
p\}, \qquad p\in [0,1]. 
\]
Let $F_{\mbb{Q}}$ and $F^{-1}_{\mbb{Q}}$ be the corresponding quantities for
$\mbb{Q}$. Then, using again the metric $s(x,y)=|x-y|$, for $x,y\in \mbb{R}$,
the order-$t$
Wasserstein distance can be computed as~\citep{peyre2019computational}:
\begin{equation} \label{1d-cont}
W_{s,t}(\mbb{P},\mbb{Q}) = \left( \int\nolimits_{0}^{1} \Big|F^{-1}_{\mbb{P}}(p) -
F^{-1}_{\mbb{Q}}(p) \Big|^t \mathrm{d}p \right)^{1/t}.
\end{equation}

\subsection{Sliced Wasserstein Distance}

The fact that Wasserstein distances can be easily computed for one-dimensional
distributions on $\mbb{R}$ has led to the following approximation of the
Wasserstein distance between distributions $\mbb{P}$ and $\mbb{Q}$ on
$\mbb{R}^d$. Specifically, for any direction $\btheta$ on the ball
$\scrS^d=\{\btheta \in \mbb{R}^d: \ \|\btheta\|_2=1\}$, let $T_{\btheta}: \bx\in
\mbb{R}^d \ra \mbb{R}$ be the projection from $\mbb{R}^d$ to $\mbb{R}$. Let
$T_{\btheta,\# \mbb{P}}$ be the so-called push-forward measure satisfying
\[
T_{\btheta,\# \mbb{P}} (\scrA) = \mbb{P}(\{\bx\in \mbb{R}^d:\ T_{\btheta}(\bx)\in
\scrA\}), \qquad \scrA\subseteq \mbb{R}.
\]

Define $T_{\btheta,\# \mbb{Q}}$ similarly. Then, the so-called sliced
Wasserstein distance~\citep{bonneel2015sliced,liutkus2019sliced} can be defined as:
\begin{equation} \label{sliced-W}
SW_{s,2}=\int\nolimits_{\scrS^d} W_{s,2}(T_{\btheta,\# \mbb{P}}, T_{\btheta,\#
	\mbb{Q}}) \mathrm{d}\btheta,
\end{equation}
where $s(x,y)=|x-y|$, for $x,y\in \mbb{R}$.  Such an integral can be
approximated using Monte-Carlo integration, giving rise to a computational
method for computing Wasserstein distances between distributions in
$\mbb{R}^d$.

\subsection{Gaussian Distributions}
We next consider the case of two Gaussian distributions. Let $\mbb{P} \sim
\scrN(\bmu_1, \bSigma_1)$ be a $d$-dimensional Gaussian distribution with mean
$\bmu_1$ and covariance $\bSigma_1$. Similarly, let $\mbb{Q} \sim
\scrN(\bmu_2, \bSigma_2)$. Define the metric $s(\bx_1,
\bx_2)=\|\bx_1-\bx_2\|_2$. Then, the order-$2$ Wasserstein distance between
$\mbb{P}$ and $\mbb{Q}$ is given in closed-form
by~\citep{delon2020wasserstein,dowson1982frechet}: 
\[
W_{s,2} (\mbb{P},\mbb{Q}) = \|\bmu_1-\bmu_2\|^2_2 +
\tr\left(\bSigma_1+\bSigma_2 - 2\left(\bSigma_1^{1/2} \bSigma_2
\bSigma_1^{1/2}\right)^{1/2}\right).
\]

\section{The Transport Cost Function} \label{sec:wtransport}

In this monograph, we are focusing on the use of the Wasserstein metric in the
context of robust learning, specifically the DRO problem we defined in
Eq.~(\ref{dro}). As a result, the cost function $s$ used in defining the Wasserstein
metric should reflect any implicit knowledge we have on the nature of the data
$\bz=(\bx,y)$. Without loss of generality, suppose that the data have already been
{\em standardized}, specifically, for all data points $\bz_i=(\bx_i, y_i)$, $i\in \lb
N\rb$, in the training set, we have normalized every variable (coordinate) in $\bx_i$
by subtracting the empirical mean and dividing by the sample standard
deviation. Then, an element of $\bx_i$ will have a large absolute value if the
corresponding variable deviates substantially from the empirical mean. Below, we
discuss a number of different scenarios on what may be known regarding the data and
the implied appropriate corresponding cost function.

\begin{enumerate}
	\item Suppose we know that the model we are seeking is {\em sparse}, \ie, there are
	few variables, and in the extreme case one, that determine the output $y$. In this
	case, an appropriate cost function is an $\ell_\infty$ norm in the $\bz=(\bx,y)$
	space. In particular, given two data points $\bz_1=(\bx_1,y_1)$ and
	$\bz_2=(\bx_2,y_2)$, if $y_1\neq y_2$ and $\|\bx_1-\bx_2\|_{\infty}<|y_1-y_2|$, the
	distance between $\bz_1$ and $\bz_2$ is equal to $|y_1-y_2|$. If, however,
	$y_1\approx y_2$, then the distance between $\bz_1$ and $\bz_2$ is determined by
	the absolute difference in the most deviating variable, that is,
	$\|\bx_1-\bx_2\|_{\infty}$.
	
	\item Suppose now that we believe the model to be {\em dense}, implying that almost
	all variables are relevant and predictive of the output $y$. Then, an appropriate
	distance metric between two points $\bz_1=(\bx_1,y_1)$ and $\bz_2=(\bx_2,y_2)$ is
	the $\ell_2$ norm $\|\bz_1-\bz_2\|_2$, where all $\bx$ coordinates and $y$ are
	weighted equally. More generally, one can introduce weights and use a
	$\bW$-weighted $\ell_p$ norm defined as $\|\bz\|_p^{\bW}= \left((|\bz|^{p/2})'\bW
	|\bz|^{p/2}\right)^{1/p}$ with a positive definite weight matrix $\bW$.
	
	\item As one more example, suppose that the data $\bz$ are organized into a set of
	(overlapping or non-overlapping) groups according to $\bz = (\bz^1, \ldots,
	\bz^L)$. To reflect this group structure, we can define a $(q, t)$-norm, with
	$q,t\geq 1$, as:
	\begin{equation*}
	\|\bz\|_{q, t} = \left(\sum_{l=1}^L
	\bigl(\|\bz^l\|_q\bigr)^t\right)^{1/t}. 
	\end{equation*}
	Notice that the $(q, t)$-norm of $\bz$ is actually the $\ell_t$-norm of the vector
	$(\|\bz^1\|_q, \ldots, \|\bz^L\|_q)$, which represents each group vector $\bz^l$ in a
	concise way via the $\ell_q$-norm. A special case is the $(2, \infty)$-norm on the
	weighted predictor-response vector 
	\[ 
	\bz_{\bw} \triangleq \left(\frac{1}{\sqrt{p_1}}\bx^1,
	\ldots, \frac{1}{\sqrt{p_L}}\bx^L, M y\right),
	\] 
	where the weight vector is
	\[ 
	\bw=\left(\frac{1}{\sqrt{p_1}}, \ldots, \frac{1}{\sqrt{p_L}}, M\right),
	\]
	and $M$ is a positive weight assigned to the response. Specifically,
	\[
	\|\bz_{\bw}\|_{2, \infty} = \max \left\{\frac{1}{\sqrt{p_1}}\|\bx^1\|_2,
	\ldots, \frac{1}{\sqrt{p_L}}\|\bx^L\|_2, M |y|\right\},
	\]
	where different groups are scaled by the number of variables they contain. The
	$\ell_2$ norm at the individual group level reflects the intuition that all variables
	in a group are relevant, whereas the $\ell_{\infty}$ norm among groups reflects the
	intuition that there is a dominant group predictive of the response, just like the
	situation we outlined in Item~1 above. As we will see later, such a norm imposes a
	{\em group sparsity} structure.
\end{enumerate}

\subsection{Transport Cost Function via Metric Learning}
We now discuss a metric learning approach for determining the weighted transport cost function we outlined in Item 2 above, following the line of work in \cite{blanchet2019data}. The intuition is to calibrate a cost function $s(\cdot)$ which assigns a high transportation cost to a pair of data points $(\bz_1, \bz_2)$ if transporting mass between these locations significantly impacts the performance. 

Consider a classification problem where we observe $N$ (predictor, label) pairs $ \{(\bx_1, y_1), \ldots, (\bx_N, y_N)\}$, and $y_i \in \{-1, +1\}$. Suppose we use a weighted $\ell_2$ norm as the distance metric on the space of predictors: $$s_{\bW}(\bx_1, \bx_2) = \sqrt{(\bx_1-\bx_2)'\bW (\bx_1-\bx_2)},$$
where the weight matrix $\bW$ is symmetric and positive semi-definite. The goal is to 
inform the selection of $\bW$ through recognizing the pairs of samples that are similar/dissimilar to each other. In a classification setting, the labels form a natural separation plane for the observed samples. We define two sets:
\begin{equation*}
\scrM \triangleq \Big\{ (i,j): \ \text{$\bx_i$ and $\bx_j$ are close to each other and $y_i=y_j$}\Big\},
\end{equation*}
\begin{equation*}
\scrN \triangleq \Big\{ (i,j): \ \text{$\bx_i$ and $\bx_j$ are far away from each other} \Big\},
\end{equation*}
where the closeness between $\bx$ can be evaluated using an appropriate norm, e.g., the $\ell_2$ norm. $\bx_i$ and $\bx_j$ are considered to be close if one is among the $k$ nearest neighbors of the other, in the sense of the $\ell_2$ norm, with $k$ being pre-specified. We aim to automatically determine the weight $\bW$ in a data-driven fashion through minimizing the distances on the set $\scrM$ and maximizing the distances on $\scrN$, which yields the following {\em Absolute Metric Learning} formulation:
\begin{equation} \label{abs-metric-learn}
\begin{array}{rl}
\min\limits_{\bW \succcurlyeq 0} & \sum\limits_{(i,j) \in \scrM} s_{\bW}^2(\bx_i, \bx_j) \\
\text{s.t.} & \sum\limits_{(i,j) \in \scrN} s_{\bW}^2(\bx_i, \bx_j) \ge 1.
\end{array}
\end{equation}

A slightly different formulation considers the relative distance between predictors. Define a set
$$\scrT \triangleq \Big\{(i,j,k): \ \text{$s_{\bW}(\bx_i, \bx_j)$ should be smaller than $s_{\bW}(\bx_i, \bx_k)$}\Big\},$$
where $s_{\bW}(\bx_i, \bx_j)$ is considered to be smaller than $s_{\bW}(\bx_i, \bx_k)$ if any of the following holds:
\begin{enumerate}
	\item $y_i = y_j$ and $y_i \neq y_k$;
	\item $y_i = y_j = y_k$ and $\|\bx_i - \bx_j\|_2 < \|\bx_i - \bx_k\|_2$;
	\item $y_i \neq y_j$ and $y_i \neq y_k$ and $\|\bx_i - \bx_j\|_2 < \|\bx_i - \bx_k\|_2$.
\end{enumerate}
The {\em Relative Metric Learning} formulation minimizes the difference of distances on these triplets:
\begin{equation} \label{relative-metric-learn}
\begin{array}{rl}
\min\limits_{\bW \succcurlyeq 0} & \sum\limits_{(i,j,k) \in \scrT} \max \Big(s_{\bW}^2(\bx_i, \bx_j) - s_{\bW}^2(\bx_i, \bx_k) + 1, 0\Big).\\
\end{array}
\end{equation}

To hedge against potential noise in the predictors, \cite{blanchet2019data} proposed to robustify (\ref{abs-metric-learn}) and (\ref{relative-metric-learn}) using robust optimization, and learn a robust data-driven transport cost function. Specifically, for the absolute metric learning formulation, suppose the sets $\scrM$ and $\scrN$ are noisy or inaccurate at level $\alpha$, i.e., $\alpha \cdot 100\%$ of their elements are incorrectly assigned. We construct robust uncertainty sets $\scrW(\alpha)$ and $\scrV(\alpha)$ as follows:
\begin{multline*}
\scrW(\alpha) = \Big\{\boldsymbol{\eta} = (\eta_{i,j};\ {(i,j) \in \scrM}): \ 0 \le
\eta_{i,j} \le 1,\\ \sum\limits_{(i,j) \in \scrM}\eta_{i,j} \le (1-\alpha) |\scrM|
\Big\},
\end{multline*}
\[
\scrV(\alpha) = \Big\{\boldsymbol{\xi} = (\xi_{i,j};\ {(i,j) \in \scrN}): \ 0 \le
\xi_{i,j} \le 1, \ \sum\limits_{(i,j) \in \scrN}\xi_{i,j} \ge (1-\alpha) |\scrN|
\Big\}.
\]
We then formulate the robust counterpart of the Absolute Metric Learning formulation
(\ref{abs-metric-learn}) as: 
\begin{equation}  \label{robust-abs-metric}
\begin{array}{rl}
\min\limits_{\bW \succcurlyeq 0} \max\limits_{\lambda \ge 0}
\max\limits_{\substack{\boldsymbol{\eta} \in \scrW(\alpha)\\ \boldsymbol{\xi}
		\in \scrV(\alpha)}} & \bigg[ \sum\limits_{(i,j) \in \scrM}
\eta_{i,j}s_{\bW}^2(\bx_i, \bx_j) 
\\ & + \lambda\Big(1-\sum\limits_{(i,j) \in \scrN} \xi_{i,j}s_{\bW}^2(\bx_i, \bx_j)\Big)\bigg],\\
\end{array}
\end{equation}
where we robustify the Lagrangian dual problem of (\ref{abs-metric-learn}), which is
formed by bringing the constraint into the objective function via a dual variable $\lambda$, using uncertain parameters $\boldsymbol{\eta}$ and $\boldsymbol{\xi}$. Similarly, for the relative metric learning formulation, suppose the set $\scrT$ is inaccurate at level $\alpha$,
the robust counterpart of the Relative Metric Learning formulation (\ref{relative-metric-learn}) can be formulated as:
\begin{equation} \label{robust-rel-metric}
\begin{array}{rl}
\min\limits_{\bW \succcurlyeq 0} \max\limits_{\bq \in \scrQ(\alpha)}& \sum\limits_{(i,j,k) \in \scrT} q_{i,j,k}\max \Big(s_{\bW}^2(\bx_i, \bx_j) - s_{\bW}^2(\bx_i, \bx_k) + 1, 0\Big),\\
\end{array}
\end{equation}
where the uncertainty set $\scrQ(\alpha)$ is defined as:
\begin{multline*}
\scrQ(\alpha) = \Big\{\bq = (q_{i,j,k};\ {(i,j,k) \in \scrT}): \ 0 \le q_{i,j,k} \le 1, \\
\sum\limits_{(i,j,k) \in \scrT}q_{i,j,k} \le (1-\alpha) |\scrT| \Big\}.
\end{multline*}
For solving the robust optimization problems (\ref{robust-abs-metric}) and (\ref{robust-rel-metric}), we refer the reader to \cite{blanchet2019data} for a sequential iterative algorithm that alternates between optimizing over the weight matrix $\bW$ and the uncertain parameters $\boldsymbol{\eta}$, $\boldsymbol{\xi}$ (or $\bq$).

\section{Robustness of the Wasserstein Ambiguity Set} \label{sec:robust}

The ultimate goal of using DRO is to eliminate the effect of perturbed samples and
produce an estimator that is consistent with the underlying true (clean)
distribution. When the data $\bz=(\bx, y)$ are corrupted by outliers, the observed
samples are not representative enough to encode the true underlying uncertainty of
the data. Instead of equally weighting all the samples as in the empirical
distribution, we may wish to include more informative distributions that ``drive
out'' the corrupted samples. DRO realizes this through hedging the expected loss
against a family of distributions that include the true data-generating mechanism
with a high confidence. In this section, we will provide evidence on the robustness of
DRO under the Wasserstein metric, by showing that the ambiguity set defined via the
Wasserstein metric is able to retain the good (clean) distribution while excluding
the bad (outlying) one; thus, producing an estimator that is robust to outliers.

We make the assumption that the training data $(\bx, y)$ are drawn from a mixture of two
distributions, with probability $q$ from the outlying distribution $\mathbb{P}_{\text{out}}$
and with probability $1-q$ from the true (clean) distribution $\mathbb{P}$. All the
$N$ training samples $(\bx_i,y_i)$, $i\in \lb N\rb$, are independent and identical
realizations of $(\bx, y)$. Recall that $\hat{\mathbb{P}}_N$ is the discrete uniform
distribution over the $N$ samples. We claim that when $q$ is small, if the
Wasserstein ball radius $\epsilon$ is chosen judiciously, the true distribution
$\mathbb{P}$ will be included in the $\epsilon$-Wasserstein ball $\Omega$
(cf. (\ref{Omega}))
\[ 
\Omega = \{\mbb{Q}\in \scrP(\scrZ): W_{s,1}(\mathbb{Q},\ \hat{\mathbb{P}}_N) \le \epsilon\},
\]
while the outlying distribution $\mathbb{P}_{\text{out}}$ will be excluded.
Theorem~\ref{mixture} proves this claim.

\begin{thm} \label{mixture}
	Suppose we are given two probability distributions $\mathbb{P}$ and
	$\mathbb{P}_{\text{out}}$, and the mixture distribution $\mathbb{P}_{\text{mix}}$ is
	a convex combination of the two: $\mbb{P}_{\text{mix}} = q \mathbb{P}_{\text{out}} +
	(1-q)\mathbb{P}$. Then, for any cost function $s$,
	\begin{equation*}
	\frac{W_{s,1}(\mathbb{P}_{\text{out}},
		\mathbb{P}_{\text{mix}})}{W_{s,1}(\mathbb{P}, \mathbb{P}_{\text{mix}})} =
	\frac{1- q}{q}. 
	\end{equation*}
\end{thm}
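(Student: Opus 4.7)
My plan is to prove the two auxiliary identities
\[
W_{s,1}(\mathbb{P}_{\text{out}},\mathbb{P}_{\text{mix}}) = (1-q)\,W_{s,1}(\mathbb{P}_{\text{out}},\mathbb{P}),
\qquad
W_{s,1}(\mathbb{P},\mathbb{P}_{\text{mix}}) = q\,W_{s,1}(\mathbb{P}_{\text{out}},\mathbb{P}),
\]
after which the claimed ratio is immediate. I will obtain each identity by sandwiching $\leq$ from an explicit coupling construction against $\geq$ from the triangle inequality for $W_{s,1}$ (Theorem~\ref{thm:wmetric}).

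The upper bound direction is the constructive half. Let $\pi^{\star}$ denote an optimal coupling attaining $W_{s,1}(\mathbb{P}_{\text{out}},\mathbb{P})$ in the sense of (\ref{transport-cont}), and let $\mathrm{Id}_{\#\mathbb{P}_{\text{out}}}$ (resp.\ $\mathrm{Id}_{\#\mathbb{P}}$) denote the diagonal measure supported on $\{(z,z):z\in\scrZ\}$ with marginal $\mathbb{P}_{\text{out}}$ (resp.\ $\mathbb{P}$). I would then define the candidate couplings
\[
\pi_1 \triangleq q\,\mathrm{Id}_{\#\mathbb{P}_{\text{out}}} + (1-q)\,\pi^{\star},
\qquad
\pi_2 \triangleq (1-q)\,\mathrm{Id}_{\#\mathbb{P}} + q\,\pi^{\star},
\]
and verify by direct inspection of the marginals that $\pi_1$ has marginals $(\mathbb{P}_{\text{out}},\mathbb{P}_{\text{mix}})$ and $\pi_2$ has marginals $(\mathbb{P},\mathbb{P}_{\text{mix}})$, using the mixture definition $\mathbb{P}_{\text{mix}} = q\mathbb{P}_{\text{out}} + (1-q)\mathbb{P}$. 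Since $s(z,z)=0$ for any cost function that arises as a metric, the diagonal portions contribute nothing to the transportation cost, so plugging $\pi_1,\pi_2$ into the primal (\ref{transport-cont}) yields the two desired upper bounds at cost $(1-q)W_{s,1}(\mathbb{P}_{\text{out}},\mathbb{P})$ and $q\,W_{s,1}(\mathbb{P}_{\text{out}},\mathbb{P})$, respectively.

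For the matching lower bounds, I would invoke the triangle inequality established in Theorem~\ref{thm:wmetric} applied to the triple $(\mathbb{P}_{\text{out}},\mathbb{P}_{\text{mix}},\mathbb{P})$:
\[
W_{s,1}(\mathbb{P}_{\text{out}},\mathbb{P}) \;\leq\; W_{s,1}(\mathbb{P}_{\text{out}},\mathbb{P}_{\text{mix}}) + W_{s,1}(\mathbb{P}_{\text{mix}},\mathbb{P}).
\]
Substituting the two upper bounds just derived on the right-hand side gives $W_{s,1}(\mathbb{P}_{\text{out}},\mathbb{P})$ on both sides, so the triangle inequality is saturated. Equality forces each upper bound to be tight, yielding the two identities stated above. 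Dividing them produces the ratio $(1-q)/q$.

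The main obstacle, in my estimation, is not the algebra but the proper justification that the constructed $\pi_1,\pi_2$ are admissible couplings in the sense of (\ref{transport-cont}) for arbitrary Polish supports of $\mathbb{P}$ and $\mathbb{P}_{\text{out}}$; this requires being careful that the diagonal measure $\mathrm{Id}_{\#\mathbb{P}_{\text{out}}}$ is well-defined as a Borel measure on $\scrZ\times\scrZ$ and that the convex combination is a genuine probability measure. A secondary subtlety is the tacit use of $s(z,z)=0$, which is natural when $s$ is a metric (cf.\ Assumption~\ref{ass:smetric}); if the statement is intended for genuinely arbitrary cost $s$, one would instead write $(1-q)W_{s,1}(\mathbb{P}_{\text{out}},\mathbb{P}) + q\int s(z,z)\,d\mathbb{P}_{\text{out}}(z)$ and similarly for $\pi_2$, but the metric case is the one relevant throughout the monograph.
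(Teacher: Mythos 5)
Your proof is correct and follows essentially the same route as the paper: the paper obtains the same two upper bounds by decomposing a feasible coupling into a convex combination of the optimal coupling of $(\mathbb{P}_{\text{out}},\mathbb{P})$ with a diagonal (identity-type) coupling of zero cost, and then saturates the triangle inequality exactly as you do. The only cosmetic point is that your $\pi_2$, as written, has marginals $(\mathbb{P}_{\text{mix}},\mathbb{P})$ rather than $(\mathbb{P},\mathbb{P}_{\text{mix}})$ — replacing $\pi^{\star}$ by its transpose (harmless, since $s$ is symmetric) fixes this — and your caveat that the argument really needs $s$ to be a metric (so that $s(z,z)=0$ and the triangle inequality of Theorem~\ref{thm:wmetric} applies), rather than an arbitrary cost as the statement literally says, is well taken.
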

\begin{proof} 
	As we indicated in Section~\ref{chapt:intro}, and for ease of notation, we will
	suppress the dependence of $W_{s,1}$ on the cost metric $s$. In addition, without
	loss of generality, we will assume that the probability distributions $\mathbb{P}$,
	$\mathbb{P}_{\text{out}}$, $\mathbb{P}_{\text{mix}}$, and any joint distributions
	have densities. From the definition of the Wasserstein distance,
	$W_1(\mathbb{P}_{\text{out}}, \mathbb{P}_{\text{mix}})$ is the optimal value of the
	following optimization problem:
	\begin{equation} \label{wassQ1}
	\begin{aligned}
	\min\limits_{\pi \in \scrP(\scrZ \times \scrZ)} & \quad \int\nolimits_{\scrZ \times \scrZ} s(\bz_1, \bz_2) \ \mathrm{d}\pi \bigl(\bz_1, \bz_2\bigr) \\
	\text{s.t.} & \quad \int\nolimits_{\scrZ}\pi \bigl(\bz_1, \bz_2\bigr) \mathrm{d}\bz_2 = \mathbb{P}_{\text{out}}(\bz_1),\; \forall \bz_1 \in \scrZ, \\
	& \quad \int\nolimits_{\scrZ}\pi \bigl(\bz_1, \bz_2\bigr) \mathrm{d}\bz_1 = q \mathbb{P}_{\text{out}}(\bz_2) + (1-q)\mathbb{P}(\bz_2),\; \forall \bz_2 \in \scrZ.
	\end{aligned}
	\end{equation}
	Similarly, $W_1 (\mathbb{P}, \mathbb{P}_{\text{mix}})$ is the optimal value of the following optimization problem:
	\begin{equation} \label{wassQ2}
	\begin{aligned}
	\min\limits_{\pi \in \scrP(\scrZ \times \scrZ)} & \quad \int\nolimits_{\scrZ \times \scrZ} s(\bz_1, \bz_2) \ \mathrm{d}\pi \bigl(\bz_1, \bz_2\bigr) \\
	\text{s.t.} & \quad \int\nolimits_{\scrZ}\pi \bigl(\bz_1, \bz_2\bigr) \mathrm{d}\bz_2 = \mathbb{P}(\bz_1),\; \forall \bz_1 \in \scrZ, \\
	& \quad \int\nolimits_{\scrZ}\pi \bigl(\bz_1, \bz_2\bigr) \mathrm{d}\bz_1 = q \mathbb{P}_{\text{out}}(\bz_2) + (1-q)\mathbb{P}(\bz_2),\; \forall \bz_2 \in \scrZ.
	\end{aligned}
	\end{equation}
	
	We propose a decomposition strategy. For Problem (\ref{wassQ1}), decompose the joint
	distribution $\pi$ as $\pi = (1-q)\pi_1 + q \pi_2$, where $\pi_1$ and $\pi_2$ are two joint
	distributions of $\bz_1$ and $\bz_2$. The first set of constraints in Problem
	(\ref{wassQ1}) can be equivalently expressed as:
	\begin{multline*} 
	(1-q)\int\nolimits_{\scrZ}\pi_1\bigl(\bz_1, \bz_2\bigr)\mathrm{d}\bz_2 + q
	\int\nolimits_{\scrZ}\pi_2\bigl(\bz_1, \bz_2\bigr) \mathrm{d}\bz_2 \\
	= (1-q)\mathbb{P}_{\text{out}}(\bz_1) + q \mathbb{P}_{\text{out}}(\bz_1),\; \forall \bz_1 \in \scrZ,
	\end{multline*}
	which is satisfied if
	$$\int\nolimits_{\scrZ}\pi_1\bigl(\bz_1, \bz_2\bigr)\mathrm{d}\bz_2 = \mathbb{P}_{\text{out}}(\bz_1), \quad \int\nolimits_{\scrZ}\pi_2\bigl(\bz_1, \bz_2\bigr)\mathrm{d}\bz_2 = \mathbb{P}_{\text{out}}(\bz_1),\; \forall \bz_1 \in \scrZ.$$	
	The second set of constraints can be expressed as:
	\begin{multline*}
	(1-q)\int\nolimits_{\scrZ}\pi_1\bigl(\bz_1, \bz_2\bigr)\mathrm{d}\bz_1 + q \int\nolimits_{\scrZ}\pi_2\bigl(\bz_1, \bz_2\bigr)  \mathrm{d}\bz_1 \\= q \mathbb{P}_{\text{out}}(\bz_2) + (1-q)\mathbb{P}(\bz_2),\; \forall \bz_2 \in \scrZ,
	\end{multline*}
	which is satisfied if
	$$\int\nolimits_{\scrZ}\pi_1\bigl(\bz_1, \bz_2\bigr)\mathrm{d}\bz_1 = \mathbb{P}(\bz_2), \quad \int\nolimits_{\scrZ}\pi_2\bigl(\bz_1, \bz_2\bigr) \mathrm{d}\bz_1 = \mathbb{P}_{\text{out}}(\bz_2),\; \forall \bz_2 \in \scrZ.$$
	The objective function can be decomposed as:
	\begin{equation*}
	\begin{aligned}
	\int\nolimits_{\scrZ \times \scrZ} s(\bz_1, \bz_2) \ \mathrm{d}\pi \bigl(\bz_1, \bz_2\bigr)  = & \ (1-q)\int\nolimits_{\scrZ \times \scrZ} s(\bz_1, \bz_2) \ \mathrm{d}\pi_1\bigl(\bz_1, \bz_2\bigr) \\
	& + q \int\nolimits_{\scrZ \times \scrZ} s(\bz_1, \bz_2)\mathrm{d}\pi_2 \bigl(\bz_1, \bz_2\bigr).
	\end{aligned}
	\end{equation*}
	Therefore, Problem (\ref{wassQ1}) can be decomposed into the following two subproblems.
	\[ \text{Subproblem 1:} \quad \begin{array}{rl}
	\min\limits_{\pi_1 \in \scrP(\scrZ \times \scrZ)} &  \int_{\scrZ \times \scrZ} s(\bz_1, \bz_2) \ \mathrm{d}\pi_1\bigl(\bz_1, \bz_2\bigr) \\
	\text{s.t.} &  \int_{\scrZ}\pi_1\bigl(\bz_1, \bz_2\bigr)\mathrm{d}\bz_2 = \mathbb{P}_{\text{out}}(\bz_1),\; \forall \bz_1 \in \scrZ,\\
	& \int_{\scrZ}\pi_1\bigl(\bz_1, \bz_2\bigr)\mathrm{d}\bz_1 = \mathbb{P}(\bz_2),\; \forall \bz_2 \in \scrZ.
	\end{array}
	\]
	\[ \text{Subproblem 2:} \quad \begin{array}{rl}
	\min\limits_{\pi_2 \in \scrP(\scrZ \times \scrZ)} &  \int_{\scrZ \times \scrZ} s(\bz_1, \bz_2) \ \mathrm{d}\pi_2\bigl(\bz_1, \bz_2\bigr) \\
	\text{s.t.} &  \int_{\scrZ}\pi_2\bigl(\bz_1, \bz_2\bigr)\mathrm{d}\bz_2 =
	\mathbb{P}_{\text{out}}(\bz_1),\;  \forall \bz_1 \in \scrZ,\\
	& \int_{\scrZ}\pi_2\bigl(\bz_1, \bz_2\bigr)\mathrm{d}\bz_1 = \mathbb{P}_{\text{out}}(\bz_2),\; \forall \bz_2 \in \scrZ.
	\end{array}
	\]		
	Assume that the optimal solutions to the two subproblems are $\pi_1^*$ and
	$\pi_2^*$, respectively. We know $\pi_0 = (1-q)\pi_1^* + q \pi_2^*$ is a
	feasible solution to Problem (\ref{wassQ1}). Therefore,
	\begin{equation} \label{out-mix}
	\begin{aligned}
	W_1 (\mathbb{P}_{\text{out}}, \mathbb{P}_{\text{mix}}) & \le \int\nolimits_{\scrZ \times \scrZ} s(\bz_1, \bz_2) \ \mathrm{d}\pi_0 \bigl(\bz_1, \bz_2\bigr) \\
	& = (1-q)W_1 (\mathbb{P}_{\text{out}}, \mathbb{P}) + q W_1 (\mathbb{P}_{\text{out}}, \mathbb{P}_{\text{out}})\\
	& = (1-q)W_1 (\mathbb{P}_{\text{out}}, \mathbb{P}).
	\end{aligned}
	\end{equation}
	Similarly, 
	\begin{equation} \label{true-mix}
	W_1 (\mathbb{P}, \mathbb{P}_{\text{mix}}) \le q W_1 (\mathbb{P}_{\text{out}}, \mathbb{P}).
	\end{equation}
	(\ref{out-mix}) and (\ref{true-mix}) imply that
	\begin{equation*}
	W_1 (\mathbb{P}_{\text{out}}, \mathbb{P}_{\text{mix}}) + W_1 (\mathbb{P}, \mathbb{P}_{\text{mix}}) \le W_1 (\mathbb{P}_{\text{out}}, \mathbb{P}).
	\end{equation*}
	On the other hand, using the triangle inequality for the Wasserstein metric,
	we have,
	$$W_1 (\mathbb{P}_{\text{out}}, \mathbb{P}_{\text{mix}}) + W_1 (\mathbb{P}, \mathbb{P}_{\text{mix}}) \ge W_1 (\mathbb{P}_{\text{out}}, \mathbb{P}).$$
	We thus conclude that 
	\begin{equation} \label{equal}
	W_1 (\mathbb{P}_{\text{out}}, \mathbb{P}_{\text{mix}}) + W_1 (\mathbb{P}, \mathbb{P}_{\text{mix}}) = W_1 (\mathbb{P}_{\text{out}}, \mathbb{P}).
	\end{equation}
	To achieve the equality in (\ref{equal}), (\ref{out-mix}) and (\ref{true-mix}) must be equalities, i.e.,
	$$W_1 (\mathbb{P}_{\text{out}}, \mathbb{P}_{\text{mix}}) = (1-q)W_1 (\mathbb{P}_{\text{out}}, \mathbb{P}),$$
	and, 
	\begin{equation} \label{wass-alt}
	W_1 (\mathbb{P}, \mathbb{P}_{\text{mix}}) = q W_1 (\mathbb{P}_{\text{out}},
	\mathbb{P}). 
	\end{equation}
	Thus,
	\begin{equation*}
	\frac{W_1 (\mathbb{P}_{\text{out}}, \mathbb{P}_{\text{mix}})}{W_1 (\mathbb{P}, \mathbb{P}_{\text{mix}})} = \frac{(1-q)W_1 (\mathbb{P}_{\text{out}}, \mathbb{P})}{q W_1 (\mathbb{P}_{\text{out}}, \mathbb{P})} 
	=  \frac{1-q}{q}.
	\end{equation*} 
\end{proof} 

\section{Setting the Radius of the Wasserstein Ball} \label{sec:wass-radius}

\begin{figure}[ht]
	\centering
	\resizebox{0.35\linewidth}{!}{\input{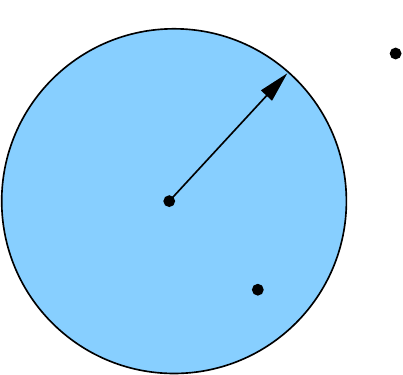_t}} $\qquad$ $\qquad$
	\resizebox{0.35\linewidth}{!}{\input{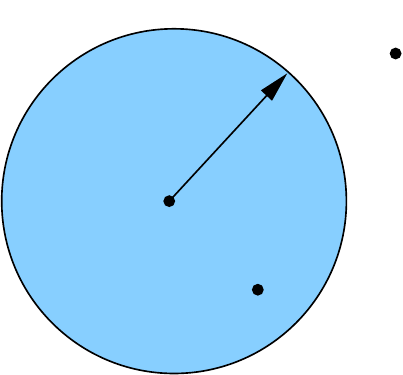_t}} 
	\caption{Left: Training with a contaminated training set drawn from
		$\mathbb{P}_{\text{mix}}$. Right: Training with a pure training set drawn from
		$\mathbb{P}$.}
	\label{fig:wass-radius}
\end{figure}
Theorem~\ref{mixture} provides some guidance on setting the radius $\epsilon$ of the
Wasserstein ball $\Omega$. Figure~\ref{fig:wass-radius} (Left) provides a graphical
interpretation. As seen in the figure, the ball $\Omega$ is centered at
$\mathbb{P}_{\text{mix}}$ because we assume that the training set is drawn from this
distribution. According to Theorem~\ref{mixture}, when $q<0.5$ we have
$W_1(\mathbb{P}, \mathbb{P}_{\text{mix}}) \le \epsilon< W_1(\mathbb{P}_{\text{out}},
\mathbb{P}_{\text{mix}})$. Thus, for a large enough sample size (so that
$\hat{\mbb{P}}_N$ is a good approximation of $\mathbb{P}_{\text{mix}}$), the set
$\Omega$ will include the true distribution and exclude the outlying one, which
provides protection against these outliers.

To provide numerical evidence, consider a simple example where $\mathbb{P}$ is a
discrete distribution that assigns equal probability to $10$ data points equally
spaced between $0.1$ and $1$, and $\mathbb{P}_{\text{out}}$ assigns probability $0.5$ to two
data points $1$ and $2$. We generate $100$ samples and plot the order-1 Wasserstein distances
from $\hat{\mathbb{P}}_N$ for both $\mathbb{P}$ and $\mathbb{P}_{\text{out}}$, under the distance metric $s(z_1, z_2) = |z_1 - z_2|$. 
\begin{figure}[ht]
	\centering
	\includegraphics[height = 2.3in]{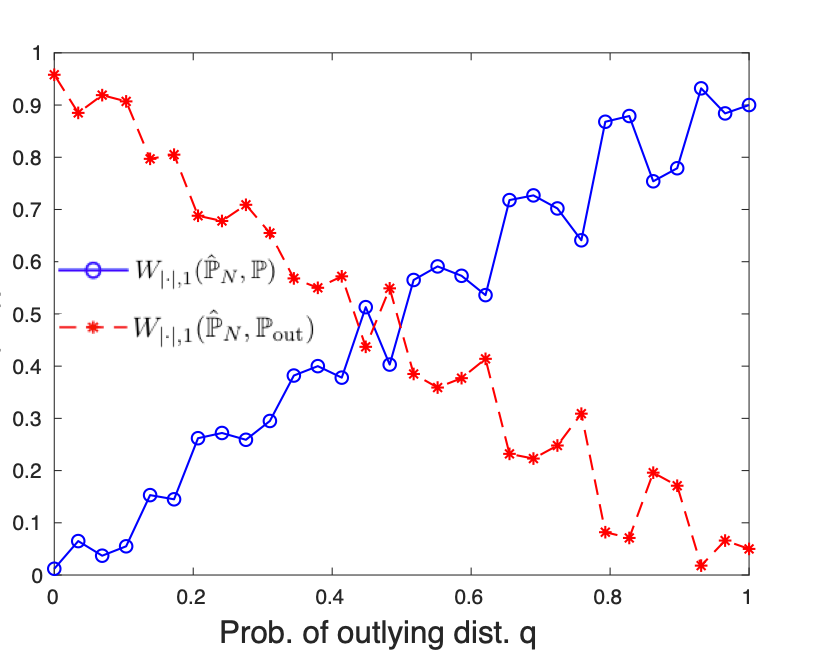}
	\caption{The order-$1$ Wasserstein distances from the empirical distribution.}
	\label{wassdis}
\end{figure}
From Figure~\ref{wassdis} we observe that for $q$ below $0.5$, the true distribution
$\mathbb{P}$ is closer to $\hat{\mathbb{P}}_N$ whereas the outlying distribution
$\mathbb{P}_{\text{out}}$ is further away. If the radius $\epsilon$ is chosen between the
red ($\ast-$) and blue ($\circ-$) lines, the Wasserstein ball that we are hedging
against will exclude the outlying distribution and the resulting estimator will be
robust to the perturbations. Moreover, as $q$ becomes smaller, the gap between the
red and blue lines becomes larger. One implication from this observation is that as
the data becomes purer, the radius of the Wasserstein ball tends to be smaller, and
the confidence in the observed samples is higher. For large $q$ values, the DRO
formulation seems to fail. However, as outliers are defined to be the data points
that do not conform to the majority of data, if $q>0.5$ then
$\mathbb{P}_{\text{out}}$ becomes the distribution of the majority and data generated
from $\mathbb{P}$ can be treated as outliers. Thus, without loss of generality, 
we can safely treat
$\mathbb{P}_{\text{out}}$ as the distribution of the minority and assume $q$ is always below $0.5$.

An alternative use of the DRO learning approach can be seen in
Figure~\ref{fig:wass-radius} (Right). Here, we assume that the training set is pure,
thus, given enough samples, the empirical distribution on which the ball $\Omega$ is
centered is close to $\mbb{P}$. Consider applying the model to a test set which is
contaminated with outliers. Notice from the proof of Theorem~\ref{mixture} that $W_1
(\mathbb{P}, \mathbb{P}_{\text{mix}}) = q W_1 (\mathbb{P}_{\text{out}}, \mathbb{P})$
(cf. Eq.~(\ref{wass-alt})). This implies that the smaller $q$ is, and for a properly
selected $\epsilon$, the distribution from which the test set is drawn
($\mathbb{P}_{\text{mix}}$) is within the ball $\Omega$ and the model has the
potential to generalize well in the test set, tolerating some outliers. In contrast,
the outlying distribution $\mathbb{P}_{\text{out}}$ lies outside the set $\Omega$, which
suggests that the model does not ``adjust'' to samples generated from
$\mathbb{P}_{\text{out}}$. According to this reasoning, and based on Eq.~(\ref{wass-alt}),
$\epsilon$ should be set so that $q W_1 (\mathbb{P}_{\text{out}}, \mathbb{P}) <
\epsilon < W_1 (\mathbb{P}_{\text{out}}, \mathbb{P})$.

The above discussions provide some insights on the optimal selection of the radius, but could be hard to implement due to the unknown $\mbb{P}$ and $\mbb{P}_{\text{out}}$. In practice cross-validation is usually adopted, but could be computationally expensive. In the next two subsections we discuss two practical radius selection approaches that produce the smallest Wasserstein ball which contains the true distribution with high confidence.

\subsection{Measure Concentration} \label{sec:mea-con}
In this subsection we study an optimal radius selection method that originates from the measure concentration theory. As will be seen in Section~\ref{sec:finite-sample-perf}, it leads to an asymptotic consistent DRO estimator that generalizes well out-of-sample.

Suppose $\bz_i, \ i \in \lb N \rb$, are $N$ realizations of $\bz$ which follows an unknown distribution $\mbb{P}^*$. One of the prerequisites for ensuring a good generalization performance of Wasserstein DRO requires that the ambiguity set $\Omega_{\epsilon}(\hat{\mbb{P}}_N)$ includes the true data distribution $\mbb{P}^*$. This implies that the radius $\epsilon$ should be chosen so that
\begin{equation} \label{p-pn-dist}
W_{s, 1} (\mbb{P}^*, \hat{\mbb{P}}_N) \leq \epsilon.
\end{equation}
A measure concentration result developed in \cite{Four14}, which characterizes the rate at which the empirical distribution $\hat{\mbb{P}}_N$ converges to the true distribution $\mbb{P}^*$ in the sense of the Wasserstein metric, can be used as a guidance on the optimal selection of the radius for the Wasserstein ambiguity set. In the following discussion we assume $s$ is a norm, and the true data distribution $\mbb{P}^*$ satisfies the light tail condition stated in Assumption~\ref{light-tail-dist}.

\begin{ass}[Light-tailed distribution] \label{light-tail-dist}
	There exists an exponent $a>1$ such that
	\begin{equation} \label{light-tail-eq}
	A \triangleq \mbb{E}^{\mbb{P}^*}\big[\exp{(\|\bz\|^a)}\big] = \int\nolimits_{\scrZ} \exp{(\|\bz\|^a)} \mathrm{d}\mbb{P}^*(\bz) < \infty.
	\end{equation}
\end{ass}

\begin{thm}[Measure concentration; \cite{Four14}, Theorem 2] \label{measure-con}
	Suppose the Wasserstein metric is induced by some norm $\|\cdot\|$, i.e., $s(\bz_1, \bz_2) = \|\bz_1 - \bz_2\|$. Under Assumption~\ref{light-tail-dist}, we have
	\begin{equation} \label{measure-eqn}
	\mbb{P}^N \Big( W_{s, 1} (\mbb{P}^*, \hat{\mbb{P}}_N) \ge \epsilon\Big) \le 
	\begin{cases}
	c_1 \exp{\big( -c_2 N \epsilon^{\max(d, 2)}\big)}, & \text{if $\epsilon \le 1$}, \\
	c_1 \exp{\big( -c_2 N \epsilon^a\big)}, & \text{if $\epsilon > 1$},
	\end{cases}
	\end{equation}
	for all $N \ge 1, d \neq 2$, and $\epsilon>0$, where $N$ is the size of the observed training set, $d$ is the dimension of $\bz$, $a$ is defined in (\ref{light-tail-eq}), and $c_1, c_2$ are positive constants that only depend on $a, A$, and $d$.
\end{thm}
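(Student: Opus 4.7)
The plan is to combine a geometric decomposition of the sample space with classical concentration inequalities for independent sums, using Assumption~\ref{light-tail-dist} to control what happens far from the origin. The basic idea is that $W_{s,1}(\mbb{P}^*, \hat{\mbb{P}}_N)$ splits naturally into (i) a fine-scale transportation cost within small pieces of a partition of $\scrZ$, which is controlled by the diameter of those pieces, and (ii) a coarse-scale discrepancy that reads off how much mass the two measures disagree on over the partition, which is a sum of binomial-type deviations.

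Concretely, first I would fix a truncation radius $R$ and a mesh size $\delta$ and partition the ball $B(\bzero,R) \subset \scrZ$ into cubes $\{Q_k\}_{k=1}^{M}$ of side $\delta$, where $M$ is of order $(R/\delta)^d$. Using the Kantorovich--Rubinstein dual (\ref{kan-rub}) and replacing any $1$-Lipschitz test function by its piecewise-constant approximation on the partition yields an inequality of the form
\[
W_{s,1}(\mbb{P}^*, \hat{\mbb{P}}_N) \le \delta + 2R \sum_{k=1}^M \bigl| \mbb{P}^*(Q_k) - \hat{\mbb{P}}_N(Q_k)\bigr| + T(R),
\]
where $T(R)$ captures the contribution from the complement of $B(\bzero,R)$. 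Assumption~\ref{light-tail-dist} and Markov's inequality give $\mbb{P}^*(\|\bz\| > R) \le A \exp(-R^a)$, and a Chernoff bound delivers the analogous control on the empirical tail $\hat{\mbb{P}}_N(\|\bz\| > R)$.

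For the middle term, each $|\mbb{P}^*(Q_k) - \hat{\mbb{P}}_N(Q_k)|$ is the deviation of an average of $N$ i.i.d.\ Bernoulli indicators from its mean, so Bernstein's inequality applied cube-by-cube together with a union bound over the $M$ cubes yields a tail estimate of the schematic form
\[
\mbb{P}^N\Bigl(\textstyle\sum_{k} \bigl|\mbb{P}^*(Q_k) - \hat{\mbb{P}}_N(Q_k)\bigr| > u\Bigr) \le c_1 \exp\bigl(-c_2 N u^2 / M\bigr).
\]
Combining the three pieces gives a tail bound for $W_{s,1}(\mbb{P}^*, \hat{\mbb{P}}_N)$ that depends on the free parameters $(\delta, R, u)$.

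The main obstacle, and the source of the two-regime bound with the $\max(d,2)$ exponent, is the joint optimization of $(\delta, R, u)$ once one requires that the overall bound not exceed $\epsilon$. For $\epsilon \le 1$, the partitioning term binds: balancing $\delta$ of order $\epsilon$ against the union-bound cost $M$ of order $(R/\delta)^d$ forces the exponent in the concentration bound to behave like $N\epsilon^{d+2}/R^d$, and choosing $R$ of order $(\log N)^{1/a}$ leads to a rate of the form $N \epsilon^{\max(d,2)}$. The exponent is $\max(d,2)$ rather than $d$ because the elementary cube-counting estimate is too loose when $d \le 2$, and one must fall back on a dyadic chaining-type refinement to recover the $N^{-1/2}$ scaling that is already transparent from the one-dimensional representation in (\ref{1d-discrete}). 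For $\epsilon > 1$ the tail term $T(R)$ dominates, and picking $R$ of order $\epsilon$ together with (\ref{light-tail-eq}) delivers the $N\epsilon^a$ exponent. Tracking constants so that they depend only on $a, A, d$, and carrying out the chaining argument rigorously in the critical low dimensions, are the technically delicate steps; I would appeal directly to the proof in \cite{Four14} rather than reproduce them here.
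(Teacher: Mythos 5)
Note first that the paper itself contains no proof of this statement: it is quoted verbatim as Theorem 2 of \cite{Four14}, so the only ``proof'' in the paper is the citation, and your closing deferral to \cite{Four14} is, in that sense, exactly what the paper does. The problem is with the sketch you give before that deferral, which as written would not prove the stated bound. A single-scale decomposition---cubes of side $\delta$ inside a ball of radius $R$, with the coarse discrepancy weighted by the global diameter---cannot produce the exponent $\max(d,2)$. With $M \asymp (R/\delta)^d$ cubes, your deviation bound has exponent of order $Nu^2/M$, and balancing $\delta \asymp \epsilon$ against $u \asymp \epsilon/R$ gives a rate of order $N^{-1/(d+2)}$, strictly weaker than the theorem for \emph{every} $d$, not only for $d \le 2$. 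Even taking your expression $N\epsilon^{d+2}/R^{d}$ at face value, no admissible choice of $R$ turns it into $N\epsilon^{\max(d,2)}$ (one would need $R^{d}\le \epsilon^{2}$, incompatible with a truncation radius), and choosing $R \asymp (\log N)^{1/a}$ makes the resulting ``constant'' $c_2$ depend on $N$, contradicting the assertion that $c_1,c_2$ depend only on $a$, $A$, $d$.

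The missing idea is the multiscale structure of the Fournier--Guillin argument: one decomposes the space into dyadic annuli and, within each, partitions at \emph{every} resolution $\ell$ into $2^{d\ell}$ cubes, bounding $W_{s,1}$ by level-$\ell$ discrepancies weighted by the local scale $2^{n}2^{-\ell}$ rather than by the global diameter. Summing $2^{-\ell}\min\bigl(1, 2^{d\ell/2}N^{-1/2}\bigr)$ over $\ell$ is what yields $N^{-1/d}$ for $d>2$, $N^{-1/2}$ for $d<2$, and the logarithmic correction at $d=2$ that forces the exclusion $d \neq 2$; so $\max(d,2)$ is not a low-dimensional patch on a cube count but the outcome of the scale-by-scale weighting for all $d$. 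Two further steps need care: the tail contribution is a transport cost, so you must control the distance-weighted empirical tail $\frac{1}{N}\sum_i \|\bz_i\|\,\mathbf{1}_{\{\|\bz_i\|>R\}}$, which requires applying Chernoff to $\exp(\|\bz\|^a)$ via Assumption~\ref{light-tail-dist} rather than bounding only the mass $\hat{\mbb{P}}_N(\|\bz\|>R)$; and the $\epsilon>1$ regime with exponent $N\epsilon^{a}$ comes precisely from this weighted-tail control with the truncation tied to $\epsilon$ so that the constants remain free of $N$. Since these are the substantive steps of \cite{Four14}, your sketch does not constitute an independent proof, though citing that result---as the paper does---is of course legitimate.
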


From Theorem~\ref{measure-con} we can derive the smallest possible $\epsilon$ so that the true distribution is contained in the Wasserstein ambiguity set with high confidence. Given some prescribed $\alpha \in (0,1)$, it is desired that 
\begin{equation*}
\mbb{P}^N \Big( W_{s, 1} (\mbb{P}^*, \hat{\mbb{P}}_N) \le \epsilon\Big) \ge 1-\alpha.
\end{equation*}
Equating the RHS of (\ref{measure-eqn}) to $\alpha$ and solving for $\epsilon$ yields 
\begin{equation} \label{radius-FG}
\epsilon_N(\alpha) = 
\begin{cases}
\Big( \frac{\log(c_1 \alpha^{-1})}{c_2 N}\Big)^{1/\max(d,2)}, & \text{if $N \ge \frac{\log(c_1 \alpha^{-1})}{c_2}$}, \\
\Big( \frac{\log(c_1 \alpha^{-1})}{c_2 N}\Big)^{1/a}, & \text{if $N < \frac{\log(c_1 \alpha^{-1})}{c_2}$}.
\end{cases}
\end{equation}
Notice that Eq.~(\ref{radius-FG}) depends on the unknown constants $c_1$ and $c_2$, and does not make use of the available training data, which could potentially result in a conservative estimation of the radius and is not of practical use \citep{esfahani2018data}. 
By recognizing these issues, some researchers have proposed to choose the radius without relying on exogenous constants, see \cite{ji2018data, zhao2015data2}.

By using an extension of Sanov's theorem which identifies the rate function, in the form of the KL divergence, for large deviations of the empirical measure from the true measure \citep{sanov1958probability}, \cite{ji2018data} derived a closed-form expression for computing the size of the Wasserstein ambiguity set, when the support of $\bz$ is finite and bounded, and the true distribution is discrete.
The reason for restricting to a discrete true distribution lies in that the convergence rate of the empirical measure (in the sense of the Wasserstein distance) is characterized by the KL divergence \citep{wang2010sanov}, which diverges when the true distribution $\mbb{P}^*$ is continuous, and the empirical distribution $\hat{\mbb{P}}_N$ is discrete. 

\begin{thm} [\cite{ji2018data}, Theorem 2]
	Suppose the random vector $\bz$ is supported on a finite Polish space $(\scrZ, s)$, and is distributed according to a discrete true distribution $\mbb{P}^*$. Assume there exists some $\bz_0 \in \scrZ$ such that the following condition holds:
	\begin{equation} \label{jl-cond}
	\log \int\nolimits_{\scrZ} \exp{(as(\bz, \bz_0))} \mathrm{d}\mbb{P}^*(\bz) < \infty, \quad \forall a>0.
	\end{equation}
	Define $B$ as the diameter of the $d$-dimensional compact set $\scrZ$:
	\begin{equation} \label{diam-B}
	B \triangleq \sup \{ s(\bz_1, \bz_2): \bz_1, \bz_2 \in \scrZ\}.
	\end{equation}
	Construct an empirical distribution $\hat{\mbb{P}}_{N}$ based on $N$ i.i.d. samples of $\bz$. A lower bound on the probability that the Wasserstein distance between the empirical distribution $\hat{\mbb{P}}_{N}$ and the true distribution $\mbb{P}^*$ does not exceed $\epsilon$ is given by:
	{\small \begin{equation*}
		\mbb{P}^N \Big( W_{s, 1} (\mbb{P}^*, \hat{\mbb{P}}_N) \le \epsilon\Big) \ge 1 - \exp{\bigg( -N \bigg( \frac{\sqrt{4\epsilon (4B+3) + (4B+3)^2}}{4B+3} - 1\bigg)^2\bigg)}.
		\end{equation*} }
	Furthermore, if 
	\begin{equation*}
	\epsilon \ge \Big(B+\frac{3}{4}\Big) \Big(-\frac{1}{N}\log(\alpha) + 2 \sqrt{-\frac{1}{N} \log(\alpha)}\Big),
	\end{equation*}
	then
	$$\mbb{P}^N \Big( W_{s, 1} (\mbb{P}^*, \hat{\mbb{P}}_N) \le \epsilon\Big) \ge 1-\alpha.$$
\end{thm}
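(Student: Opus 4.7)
The plan is to combine two classical ingredients: a transport-entropy (weighted Pinsker) inequality bounding $W_{s,1}$ by the KL divergence, and Sanov's large-deviation bound for the empirical measure. Together they yield a concentration inequality for $W_{s,1}(\hat{\mbb{P}}_N,\mbb{P}^*)$ whose algebraic inversion gives both claims.

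\textbf{Step 1 (transport-entropy inequality).} Under the exponential integrability hypothesis (\ref{jl-cond}) and the bounded diameter $B$ from (\ref{diam-B}), I would first establish a Bolley-Villani-type bound
\[
W_{s,1}(\mbb{Q},\mbb{P}^*) \le \bigl(B+\tfrac{3}{4}\bigr)\bigl[D_{KL}(\mbb{Q}\,\|\,\mbb{P}^*)+2\sqrt{D_{KL}(\mbb{Q}\,\|\,\mbb{P}^*)}\bigr]
\]
for every $\mbb{Q}\in\scrP(\scrZ)$. The derivation uses the Kantorovich-Rubinstein dual (\ref{kan-rub}) to write $W_{s,1}$ as a supremum over $1$-Lipschitz test functions $g$, applies the Donsker-Varadhan variational bound $\lambda\,\mbb{E}^{\mbb{Q}}[g]\le D_{KL}(\mbb{Q}\|\mbb{P}^*)+\log\mbb{E}^{\mbb{P}^*}[e^{\lambda g}]$ for each $\lambda>0$ to any such $g$ recentred to have zero mean under $\mbb{P}^*$, and bounds the log-MGF of $g$ using (\ref{jl-cond}) together with $|g|\le B$. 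Optimising over $\lambda>0$ produces the characteristic $t+2\sqrt{t}$ shape, and the constant $3/4$ arises from the standard $3/2$-correction in the weighted Pinsker lemma applied on a support of diameter $B$.

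\textbf{Step 2 (Sanov and inversion).} Since $\mbb{P}^*$ is discrete on a compact set, $\hat{\mbb{P}}_N$ takes values in a finite-dimensional simplex, and the method-of-types version of Sanov's theorem yields, for any measurable $\scrA\subseteq\scrP(\scrZ)$,
\[
\mbb{P}^N\bigl(\hat{\mbb{P}}_N\in\scrA\bigr)\le \exp\Bigl(-N\inf_{\mbb{Q}\in\scrA}D_{KL}(\mbb{Q}\,\|\,\mbb{P}^*)\Bigr),
\]
the polynomial pre-factor of the method of types being absorbed thanks to (\ref{jl-cond}). Taking $\scrA=\{\mbb{Q}:W_{s,1}(\mbb{Q},\mbb{P}^*)>\epsilon\}$ and invoking Step 1, every $\mbb{Q}\in\scrA$ satisfies $D_{KL}(\mbb{Q}\|\mbb{P}^*)\ge t^*(\epsilon)$, where $t^*(\epsilon)$ is the positive root of $(B+\tfrac{3}{4})(t+2\sqrt{t})=\epsilon$. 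Substituting $u=\sqrt{t}$ reduces this to the quadratic $u^2+2u=\epsilon/(B+\tfrac{3}{4})$, so $u=\sqrt{1+\epsilon/(B+\tfrac{3}{4})}-1$ and
\[
t^*(\epsilon)=\Bigl(\tfrac{\sqrt{4\epsilon(4B+3)+(4B+3)^2}}{4B+3}-1\Bigr)^2,
\]
which plugged into the Sanov bound gives the first claim. The second claim is obtained by setting the Sanov right-hand side equal to $\alpha$ and solving the same quadratic in the reverse direction: $\sqrt{-\log\alpha/N}\le u$ rearranges to $\epsilon/(B+\tfrac{3}{4})\ge -\tfrac{1}{N}\log\alpha+2\sqrt{-\tfrac{1}{N}\log\alpha}$, which is the stated sufficient condition.

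\textbf{Main obstacle.} Everything beyond Step 1 is algebraic manipulation or standard large-deviation theory. The real work lies in pinning down the transport inequality with the exact constants $(B+\tfrac{3}{4})$ and the precise $t+2\sqrt{t}$ combination: a naive $W_{s,1}\le B\,\|\mbb{Q}-\mbb{P}^*\|_{TV}$ combined with Pinsker gives only $W_{s,1}\le B\sqrt{D_{KL}/2}$, which has the wrong shape and cannot reproduce the stated bound. Recovering the theorem as stated requires the sharper Bolley-Villani route via a weighted Pinsker estimate and careful control of the log-MGF of $1$-Lipschitz functions under the exponential integrability assumption (\ref{jl-cond}); matching the constants exactly is the delicate part of the argument.
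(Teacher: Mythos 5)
The monograph itself gives no proof of this statement: it is imported verbatim from \cite{ji2018data}, and the surrounding text only indicates the intended route (a transport--entropy link between $W_{s,1}$ and the KL divergence, combined with a Sanov-type bound whose rate function is the KL divergence). Your plan follows exactly that route, and your algebra is correct: with an inequality of the form $W_{s,1}(\mbb{Q},\mbb{P}^*)\le (B+\tfrac{3}{4})\,(D+2\sqrt{D})$, $D=D_{KL}(\mbb{Q}\,\|\,\mbb{P}^*)$, the substitution $u=\sqrt{t}$ together with the identity $4\epsilon(4B+3)+(4B+3)^2=(4B+3)(4B+3+4\epsilon)$ reproduces the stated exponent, and the second claim is the same quadratic inverted. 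Your Step 1, though only sketched, is of the right (Bolley--Villani) type, and on a support of diameter $B$ the bound $\tfrac{1}{a}\big(\tfrac{3}{2}+\log\int e^{a s(\bz,\bz_0)}\,\mathrm{d}\mbb{P}^*\big)\le B+\tfrac{3}{2a}$ at $a=2$ is consistent with the constant $B+\tfrac{3}{4}$; pinning the exact coefficients there is laborious but not conceptually problematic.

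The genuine gap is in Step 2. The exponential-integrability condition (\ref{jl-cond}) is automatic on a bounded support and does nothing to absorb the method-of-types prefactor, so your justification for a prefactor-free, finite-$N$ Sanov bound is a non sequitur. The inequality you actually need, $\mbb{P}^N\big(D_{KL}(\hat{\mbb{P}}_N\,\|\,\mbb{P}^*)\ge t\big)\le e^{-Nt}$, is false in general: for a support of $k\ge 4$ points and deviations $t$ of order $1/N$, $2N\,D_{KL}(\hat{\mbb{P}}_N\,\|\,\mbb{P}^*)$ is approximately $\chi^2_{k-1}$, whose upper tail exceeds $e^{-Nt}$ (e.g.\ $k=11$, $t=5/N$ gives a true probability near $0.44$ versus $e^{-5}\approx 0.007$), so the finite-alphabet bound genuinely carries a polynomial-in-$N$ factor. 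Nor can you appeal to Csisz\'ar's prefactor-free bound for convex sets, since $\{\mbb{Q}: W_{s,1}(\mbb{Q},\mbb{P}^*)>\epsilon\}$ (equivalently the complement of a KL ball) is not convex. The source relies on the Sanov upper bound in the Wasserstein topology of \cite{wang2010sanov}, which is an asymptotic statement; converting the exponent into the finite-$N$, prefactor-free inequality claimed in the theorem is precisely the step your proposal skips, and it requires a genuine argument rather than an appeal to condition (\ref{jl-cond}).
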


\cite{zhao2015data2} derived a more general formula for computing the Wasserstein set radius, without imposing the exponential integrability condition (\ref{jl-cond}), resulting in a slower convergence rate for the radius, $\epsilon = O(\sqrt{1/N})$.

\begin{thm} [\cite{zhao2015data2}, Proposition 3]
	Assume the support $\scrZ$ is bounded and finite, and the true distribution
	$\mbb{P}^*$ is discrete. 
	We have,
	\begin{equation*}
	\mbb{P}^N \Big( W_{s, 1} (\mbb{P}^*, \hat{\mbb{P}}_N) \le \epsilon\Big) \ge 1
	- \exp \Big(-\frac{N \epsilon^2}{2B^2} \Big), 
	\end{equation*}
	where $B$ is as in (\ref{diam-B}).
	Moreover, if we set 
	\begin{equation*}
	\epsilon \ge B \sqrt{\frac{2 \log (1/\alpha)}{N}},
	\end{equation*}
	then
	$$\mbb{P}^N \Big( W_{s, 1} (\mbb{P}^*, \hat{\mbb{P}}_N) \le \epsilon\Big) \ge 1-\alpha.$$
\end{thm}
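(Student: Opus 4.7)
The plan is to apply McDiarmid's bounded-differences inequality to the random variable $F(\bz_1, \ldots, \bz_N) \triangleq W_{s,1}(\mbb{P}^*, \hat{\mbb{P}}_N)$, viewed as a function of the $N$ i.i.d. samples drawn from $\mbb{P}^*$. Two ingredients are needed: a coordinate-wise variation bound for $F$, and control of its expectation so that the resulting concentration around the mean can be rewritten as a concentration around $0$.

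First, I would verify that $F$ has bounded differences with constant $B/N$ in every coordinate. Fix $i \in \lb N \rb$ and replace $\bz_i$ by any $\bz_i' \in \scrZ$, producing a new empirical measure $\hat{\mbb{P}}_N'$. The triangle inequality for $W_{s,1}$ established in Theorem~\ref{thm:wmetric} gives
\[
|F(\bz_1, \ldots, \bz_i, \ldots, \bz_N) - F(\bz_1, \ldots, \bz_i', \ldots, \bz_N)| \le W_{s,1}(\hat{\mbb{P}}_N, \hat{\mbb{P}}_N').
\]
Since $\hat{\mbb{P}}_N$ and $\hat{\mbb{P}}_N'$ differ only by displacing mass $1/N$ from $\bz_i$ to $\bz_i'$, the transport plan that moves precisely this quantum of mass and leaves every other atom untouched is admissible, and its cost is $s(\bz_i,\bz_i')/N \le B/N$, using the diameter $B$ from (\ref{diam-B}).

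With this bounded-differences constant, McDiarmid yields $\mbb{P}^N(F \ge \mbb{E}^{\mbb{P}^N}[F] + t) \le \exp(-2Nt^2/B^2)$ for all $t>0$. To convert this into the uncentered bound on $\mbb{P}^N(F \le \epsilon)$ stated in the proposition, I would combine it with an upper bound on $\mbb{E}^{\mbb{P}^N}[F]$. Since $\mbb{P}^*$ is discrete on a finite $\scrZ$, one can argue that $W_{s,1}(\mbb{P}^*, \hat{\mbb{P}}_N) \le B \cdot d_{TV}(\mbb{P}^*, \hat{\mbb{P}}_N)$ and then invoke binomial concentration for each atomic probability to show $\mbb{E}^{\mbb{P}^N}[F] = O(1/\sqrt{N})$. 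The factor-of-four slack between the $2Nt^2/B^2$ coming out of McDiarmid and the target exponent $N\epsilon^2/(2B^2)$ is exactly what accommodates this mean contribution when we take $t$ to be a suitable fraction of $\epsilon$. Once the first assertion is in place, the second (the explicit radius $\epsilon = B\sqrt{2\log(1/\alpha)/N}$) is obtained by setting $\exp(-N\epsilon^2/(2B^2)) = \alpha$ and solving.

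The main obstacle I anticipate is the careful handling of $\mbb{E}^{\mbb{P}^N}[F]$: McDiarmid is naturally a centered statement, and matching the uncentered form requires either an explicit bound on $\mbb{E}^{\mbb{P}^N}[W_{s,1}(\mbb{P}^*, \hat{\mbb{P}}_N)]$ valid for every $N$, or a sharper martingale-difference/Doob argument that exploits the special structure of empirical measures of a discrete distribution. This is the step where the discreteness and finiteness of $\scrZ$ enter essentially, since for a continuous target distribution the expected Wasserstein distance decays only at the slower dimension-dependent rate reflected in Theorem~\ref{measure-con}.
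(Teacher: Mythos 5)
Your bounded-differences step is sound: replacing one sample displaces $1/N$ of the empirical mass by at most $B$, so the triangle inequality gives a coordinate-wise variation of $B/N$, and McDiarmid indeed yields the \emph{centered} bound $\mbb{P}^N\big(W_{s,1}(\mbb{P}^*,\hat{\mbb{P}}_N) \ge m_N + t\big) \le \exp(-2Nt^2/B^2)$, where $m_N \triangleq \mbb{E}^{\mbb{P}^N}[W_{s,1}(\mbb{P}^*,\hat{\mbb{P}}_N)]$. Note that the monograph gives no proof of this proposition (it is quoted from \cite{zhao2015data2}), so your argument must stand on its own, and the place where it does not is exactly the step you flag and then dismiss: the passage from the centered to the uncentered inequality. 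Taking $t=\epsilon/2$ reproduces the target exponent $N\epsilon^2/(2B^2)$ only if $m_N \le \epsilon/2$. But $m_N$ is of order $B\sqrt{|\scrZ|/N}$ — for instance $W_{s,1} \le B\, d_{TV}$ (total variation) gives $m_N \le \tfrac{B}{2}\sum_{\bz \in \scrZ}\sqrt{\mbb{P}^*(\bz)/N}$ — with a constant that depends on the cardinality of the support and on $\mbb{P}^*$, not only on $B$. So the ``factor-of-four slack'' buys the stated inequality only for $\epsilon$ above a distribution-dependent threshold of order $1/\sqrt{N}$, whereas the statement carries no such restriction; the claim that the slack ``exactly accommodates'' the mean is an assertion, not a proof.

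Moreover, no bookkeeping refinement can close this gap for all $\epsilon$: with $\scrZ$ consisting of two points at distance $B$, $\mbb{P}^*$ uniform, and $N=1$, one has $W_{s,1}(\mbb{P}^*,\hat{\mbb{P}}_1)=B/2$ surely, yet the quoted bound with $\epsilon=0.49B$ would assert $\mbb{P}^1\big(W_{s,1}\le \epsilon\big) \ge 1-e^{-0.49^2/2}>0.11$, which is false. Hence any honest proof must either restrict the regime of $\epsilon$ (equivalently $N$) or tolerate a prefactor. The route usually taken in this literature — $W_{s,1}\le B\, d_{TV}$, Pinsker's inequality, and a Sanov/method-of-types bound on $\mbb{P}^N\big(D_{KL}(\hat{\mbb{P}}_N\|\mbb{P}^*) \ge 2\epsilon^2/B^2\big)$ — produces $(N+1)^{|\scrZ|}e^{-2N\epsilon^2/B^2}$ and faces the same issue when the polynomial prefactor is absorbed into the exponent. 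Your McDiarmid skeleton is a perfectly reasonable alternative, but to complete it you must state and prove the result in the regime $\epsilon \ge 2\,m_N$ (or add the corresponding prefactor/threshold), i.e., make explicit the qualification that the proposition as quoted suppresses.
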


\subsection{Robust Wasserstein Profile Inference}
In this subsection we introduce a different approach proposed by \cite{blanchet2019robust} for optimally selecting the size of the Wasserstein ambiguity set. This method combines the information of the structure of the ambiguity set and the loss function that is being minimized. Unlike Section~\ref{sec:mea-con} where large deviation theory is adopted to describe the closeness between the empirical measure and the true measure, here the true measure is characterized indirectly via the first-order optimality condition of the loss function.

Recall the Wasserstein DRO formulation:
\begin{equation} \label{dro-rwpi}
\inf\limits_{\bbeta }\sup\limits_{\mbb{Q}\in \Omega}
\mbb{E}^{\mbb{Q}}\big[ h_{\bbeta}(\bz)\big],
\end{equation}
where the ambiguity set is defined as:
\begin{equation*}
\Omega = \Omega_{\epsilon}^{s,t}(\hat{\mathbb{P}}_{N}) \triangleq \{\mbb{Q}\in \scrP(\scrZ): W_{s,t}(\mathbb{Q},\ \hat{\mathbb{P}}_{N}) \le \epsilon\}.
\end{equation*}
We will suppress the dependence of $\Omega$ on $s, t, \epsilon, \hat{\mathbb{P}}_{N}$ for ease of notation. For every $\mbb{Q} \in \Omega$, there is an optimal choice $\bbeta = \bbeta(\mbb{Q})$ which minimizes the risk $\mbb{E}^{\mbb{Q}}\big[ h_{\bbeta}(\bz) \big]$, i.e., 
$$\bbeta(\mbb{Q}) = \arg \min_{\bbeta} \mbb{E}^{\mbb{Q}}\big[ h_{\bbeta}(\bz) \big].$$
We define $\scrS_{\bbeta}(\Omega) \triangleq \{\bbeta(\mbb{Q}): \mbb{Q} \in \Omega \}$ to be the set of plausible selections of the parameter $\bbeta$. If the true measure $\mbb{P}^* \in \Omega$, then $\bbeta^* = \bbeta(\mbb{P}^*) \in \scrS_{\bbeta}(\Omega)$.


We say that $\bbeta^*$ is \textbf{plausible} with $(1-\alpha)$ confidence if $\bbeta^* \in \scrS_{\bbeta}(\Omega)$ with probability at least $1-\alpha$. We want to choose $\epsilon>0$ as small as possible so that the underlying true parameter $\bbeta^*$ is plausible with $(1-\alpha)$ confidence.

For any given $\mbb{Q}$, the optimal solution $\bbeta(\mbb{Q})$ is characterized by the following first-order condition:
\begin{equation} \label{opt-eqn}
\mbb{E}^{\mbb{Q}}[\nabla_{\bbeta} h_{\bbeta(\mbb{Q})}(\bz)] = \mathbf{0},
\end{equation}
where $\nabla_{\bbeta}h_{\bbeta(\mbb{Q})}(\bz)$ is the partial derivative of $h_{\bbeta}(\bz)$ w.r.t. $\bbeta$ evaluated at $\bbeta = \bbeta(\mbb{Q})$. Define the {\em Robust Wasserstein Profile (RWP)} function associated with the estimation equation (\ref{opt-eqn}) as:
\begin{equation*}
R(\bbeta) = \inf\nolimits_{\mbb{Q}}\big\{\big(W_{s,t}(\mbb{Q}, \hat{\mathbb{P}}_{N})\big)^t: \mbb{E}^{\mbb{Q}}[\nabla_{\bbeta} h_{\bbeta}(\bz)] = \mathbf{0}\big\}.
\end{equation*}
$R(\bbeta)$ evaluates the minimal distance to the empirical distribution, for all distributions such that $\bbeta$ is the minimizer of the expected loss. Note that $R(\bbeta)$ is a random quantity due to the randomness in the observed samples, which is reflected in $\hat{\mathbb{P}}_{N}$. For $\bbeta^* \in \scrS_{\bbeta}(\Omega)$ to hold, it is required that there exists at least one 
$$\mbb{Q} \in \{\mbb{Q}: \mbb{E}^{\mbb{Q}}[\nabla_{\bbeta} h_{\bbeta^*}(\bz)] = \mathbf{0}\},$$
such that $\mbb{Q} \in \Omega$. This equivalently translates into the the condition that 
$$R(\bbeta^*) \le \epsilon^t.$$
Therefore, $\bbeta^*$ is plausible with $(1-\alpha)$ confidence if and only if
\begin{equation*}
\mbb{P}(R(\bbeta^*) \le \epsilon^t) \ge 1-\alpha.
\end{equation*}
The optimal choice of $\epsilon$ is thus $\chi_{1-\alpha}^{1/t}$, where $\chi_{1-\alpha}$ is the $1-\alpha$ quantile of $R(\bbeta^*)$. Moreover, 
\begin{equation*}
\mbb{P}\big(\bbeta^* \in \scrS_{\bbeta}(\chi_{1-\alpha})\big) = \mbb{P}\big(R(\bbeta^*) \le \chi_{1-\alpha}\big) = 1-\alpha,
\end{equation*}
where $\scrS_{\bbeta}(\chi_{1-\alpha}) \triangleq \{ \bbeta: R(\bbeta) \le \chi_{1-\alpha}\}$. Therefore, $\scrS_{\bbeta}(\chi_{1-\alpha})$ is a $(1-\alpha)$ confidence region for $\bbeta^*$.

The problem of optimal radius selection now reduces to finding the quantile of $R(\bbeta^*)$. Since $\bbeta^*$ is unknown, we need to come up with a way of estimating the distribution of the RWP function $R(\bbeta^*)$. \cite{blanchet2019robust} developed an asymptotic analysis of the RWP function, and established that as $N \rightarrow \infty$,
\begin{equation*}
N^{t/2}R(\bbeta^*) \xrightarrow[]{\ \text{d} \ } \bar{R}(t),
\end{equation*}
for a suitably defined random variable $\bar{R}(t)$, where $\xrightarrow[]{ \ \text{d} \ }$ means convergence in distribution. We first state a number of assumptions that are needed to establish this convergence in distribution. 
\begin{ass} \label{rwpi-cost-metric}
	The cost function is the $\ell_r$ norm: $s(\bz_1, \bz_2) = \|\bz_1 - \bz_2\|_r$, where $r \ge 1$. Let $s$ be such that $1/r+1/s=1$.
\end{ass}
\begin{ass} \label{beta-star}
	The true parameter $\bbeta^*$ satisfies $\mbb{E}^{\mbb{P}^{*}}[\nabla_{\bbeta} h_{\bbeta^*}(\bz)] = \mathbf{0}$, and $\mbb{E}^{\mbb{P}^{*}} \|\nabla_{\bbeta} h_{\bbeta^*}(\bz)\|_2^2 < \infty$, where $\mbb{P}^{*}$ is the underlying true distribution of $\bz$.
\end{ass}
\begin{ass} \label{diff-z}
	The function $\nabla_{\bbeta} h_{\bbeta^*}(\bz)$ is continuously differentiable
	w.r.t. $\bz$ with gradient $\nabla_{\bbeta,\bz}h_{\bbeta^*}(\bz)$. 
\end{ass}
\begin{ass} \label{psd-diff}
	\begin{equation*}
	\mbb{E}^{\mbb{P}^{*}}[\nabla_{\bbeta,\bz}h_{\bbeta^*}(\bz) \nabla_{\bbeta,\bz}h_{\bbeta^*}(\bz)'] \succ 0.
	\end{equation*}
\end{ass}
\begin{ass} \label{rwpi-growth}
	There exists $\kappa>0$ such that for $\|\bz\|_r \ge 1$,
	\begin{equation*}
	\|\nabla_{\bbeta,\bz}h_{\bbeta^*}(\bz)\|_s \le \kappa \|\bz\|_r^{t-1},
	\end{equation*}
	where the LHS denotes the induced $\ell_s$ norm of the matrix $\nabla_{\bbeta,\bz}h_{\bbeta^*}(\bz)$.
\end{ass}
\begin{ass} \label{rwpi-lipschitz}
	There exists a function $c: \mbb{R}^d \ra [0, \infty)$ such that,
	\begin{equation*}
	\|\nabla_{\bbeta,\bz}h_{\bbeta^*}(\bz+ \bdelta) - \nabla_{\bbeta,\bz}h_{\bbeta^*}(\bz)\|_s \le c(\bz) \|\bdelta\|_r,
	\end{equation*}
	for $\|\bdelta\|_r \le 1$, $\mbb{E}^{\mbb{P}^*}[c(\bz)^a]<\infty$, and $a \le \max(2, t/(t-1))$.
\end{ass}

\begin{thm} [\cite{blanchet2019robust}, Theorem 3]\label{rwpi-limit}
	When $t>1$, under Assumptions \ref{rwpi-cost-metric}, \ref{beta-star}, \ref{diff-z}, \ref{psd-diff}, \ref{rwpi-growth}, \ref{rwpi-lipschitz}, as $N \rightarrow \infty$,
	\begin{equation*}
	N^{t/2}R(\bbeta^*) \xrightarrow[]{ \ \text{d} \ } \bar{R}(t),
	\end{equation*}
	where
	\begin{equation*}
	\bar{R}(t) = \max_{\bzeta} \Big\{ t \bzeta'\br - (t-1) \mbb{E}^{\mbb{P}^*} \|\bzeta'\nabla_{\bbeta,\bz}h_{\bbeta^*}(\bz)\|_s^{t/(t-1)}\Big\}.
	\end{equation*}
	When $t=1$, suppose that $\bz$ has a positive density almost everywhere w.r.t. the Lebesgue measure. Then, under Assumptions \ref{rwpi-cost-metric}, \ref{beta-star}, \ref{diff-z}, \ref{psd-diff},
	\begin{equation*}
	N^{1/2}R(\bbeta^*) \xrightarrow[]{ \ \text{d} \ } \bar{R}(1),
	\end{equation*}
	where
	\begin{equation*}
	\begin{aligned}
	\bar{R}(1) = \ & \max_{\bzeta} \ \bzeta'\br \\
	& \ \text{s.t.} \quad \mbb{P}^* \big( \|\bzeta'\nabla_{\bbeta,\bz}h_{\bbeta^*}(\bz)\|_s > 1\big) = 0,
	\end{aligned}
	\end{equation*}
	with $\br \sim \scrN(\mathbf{0}, \mbb{E}[\nabla_{\bbeta}h_{\bbeta^*}(\bz) \nabla_{\bbeta}h_{\bbeta^*}(\bz)'])$.
\end{thm}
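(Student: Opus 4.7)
The plan is to obtain a tractable dual representation of $R(\bbeta^*)$ and then carry out a careful asymptotic analysis based on Taylor expansion and the Central Limit Theorem. First, I would Lagrangify the constraint $\mbb{E}^{\mbb{Q}}[\nabla_{\bbeta}h_{\bbeta^*}(\bz)]=\mathbf{0}$ with a multiplier $\bzeta \in \mbb{R}^p$, and appeal to a Sion-type minimax theorem (the power-$t$ Wasserstein distance is convex in $\mbb{Q}$ and the coupling term is linear in $\bzeta$) to interchange the infimum over $\mbb{Q}$ and the supremum over $\bzeta$. The inner infimum then becomes a standard penalised Wasserstein DRO problem; since $\hat{\mbb{P}}_N$ is empirical, an optimal coupling sends each atom $\bz_i$ to a single point $\bz_i + \bu_i$, yielding the compact representation
\[
R(\bbeta^*) \;=\; \sup_{\bzeta}\,\frac{1}{N}\sum_{i=1}^{N}\inf_{\bu_i}\Bigl\{\|\bu_i\|_r^{t}+\bzeta'\nabla_{\bbeta}h_{\bbeta^*}(\bz_i+\bu_i)\Bigr\}.
\]

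For $t>1$ I would Taylor expand $\nabla_{\bbeta}h_{\bbeta^*}(\bz_i+\bu_i)$ to first order in $\bu_i$ (justified by Assumption~\ref{diff-z}), controlling the remainder via the Lipschitz-type Assumption~\ref{rwpi-lipschitz}. Up to this remainder the inner minimisation reduces to $\inf_{\bu_i}\{\|\bu_i\|_r^{t}+\bzeta'\nabla_{\bbeta,\bz}h_{\bbeta^*}(\bz_i)\bu_i\}$ plus the constant $\bzeta'\nabla_{\bbeta}h_{\bbeta^*}(\bz_i)$; a direct Fenchel computation using the $\ell_r$/$\ell_s$ duality from Assumption~\ref{rwpi-cost-metric} yields the closed form $-(t-1)\,t^{-t/(t-1)}\|\bzeta'\nabla_{\bbeta,\bz}h_{\bbeta^*}(\bz_i)\|_s^{t/(t-1)}$. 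Reparameterising $\bzeta \mapsto t\,N^{-(t-1)/2}\,\bar{\bzeta}$ (the factor $t$ is what produces the coefficients $t$ and $t-1$ in $\bar R(t)$), the linear term contributes $t\,N^{-t/2}\,\bar{\bzeta}'\bigl(\sqrt{N}\,\mbb{E}^{\hat{\mbb{P}}_N}[\nabla_{\bbeta}h_{\bbeta^*}(\bz)]\bigr)$ while the power term contributes $-(t-1)\,N^{-t/2}$ times a sample average of $\|\bar{\bzeta}'\nabla_{\bbeta,\bz}h_{\bbeta^*}(\bz)\|_s^{t/(t-1)}$. By the CLT (Assumption~\ref{beta-star}) the first bracketed factor converges in distribution to $\br \sim \scrN(\mathbf{0},\mbb{E}[\nabla_{\bbeta}h_{\bbeta^*}(\bz)\,\nabla_{\bbeta}h_{\bbeta^*}(\bz)'])$, and by the LLN the second term converges to $\mbb{E}^{\mbb{P}^*}\|\bar{\bzeta}'\nabla_{\bbeta,\bz}h_{\bbeta^*}(\bz)\|_s^{t/(t-1)}$; Assumption~\ref{psd-diff} guarantees that the limiting objective is strictly concave and coercive in $\bar{\bzeta}$. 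Multiplying by $N^{t/2}$ and taking the supremum then yields $\bar R(t)$.

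For $t=1$ the same rescaling degenerates because the linearised inner infimum $\inf_{\bu}\{\|\bu\|_r+\bzeta'\nabla_{\bbeta,\bz}h_{\bbeta^*}(\bz)\bu\}$ equals $0$ when $\|\bzeta'\nabla_{\bbeta,\bz}h_{\bbeta^*}(\bz)\|_s \le 1$ and $-\infty$ otherwise. The penalty collapses into a hard feasibility constraint, and the dual supremum reduces to the constrained programme defining $\bar R(1)$. The assumed positivity of the Lebesgue density of $\bz$ ensures that the empirical-measure version of this constraint (required to hold at every atom $\bz_i$) passes in the limit to the $\mbb{P}^*$-almost-sure constraint appearing in the statement.

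The main obstacle is making the heuristic rescaling fully rigorous: one must (i) show that the dropped Taylor remainder is asymptotically negligible \emph{uniformly} over the relevant compact set of rescaled duals, using Assumption~\ref{rwpi-lipschitz} together with the moment bound on $c(\bz)$; (ii) rule out mass escaping to infinity in $\bar{\bzeta}$, by exploiting the growth Assumption~\ref{rwpi-growth} so that the penalty dominates at large $\|\bar{\bzeta}\|$ and confines the argmax to a bounded set; and (iii) combine these with a continuous-mapping/Argmax-continuity argument to push $\sup_{\bar{\bzeta}}$ through the joint weak limit of the linear CLT term and the power-law LLN term. This uniform M-estimation-style control is the standard technical hurdle and is what simultaneously pins down both the rate $N^{t/2}$ and the limiting law $\bar R(t)$.
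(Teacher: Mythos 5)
Your proposal is correct and follows essentially the same route as the paper, which (deferring details to \cite{blanchet2019robust}) proves the result by passing to the dual representation of the RWP function and then carrying out exactly the rescaled asymptotic analysis you describe; your Fenchel computation giving $-(t-1)t^{-t/(t-1)}\|\bzeta'\nabla_{\bbeta,\bz}h_{\bbeta^*}(\bz_i)\|_s^{t/(t-1)}$, the rescaling $\bzeta\mapsto tN^{-(t-1)/2}\bar{\bzeta}$, the CLT/LLN split, and the degenerate $t=1$ constraint are all the right ingredients. The only cosmetic difference is that the cited proof formalizes your step (iii) by showing $\bar{R}(t)$ is simultaneously an asymptotic stochastic upper and lower bound for $N^{t/2}R(\bbeta^*)$ rather than by a single argmax-continuity argument, which is the same analysis packaged as a sandwich.
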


The proof of Theorem \ref{rwpi-limit} uses the dual representation of the RWP
function, and proceeds by showing that $\bar{R}(t)$ is both an asymptotic stochastic
upper bound and a lower bound of $N^{t/2}R(\bbeta^*)$ (refer to \cite{blanchet2019robust} for details). Notice that the limiting random variable $\bar{R}(t)$ still depends on the unknown parameter $\bbeta^*$ and the unobservable true distribution $\mbb{P}^*$. When using Theorem \ref{rwpi-limit} in practice, some further relaxations for $\bar{R}(t)$ are needed to get rid of the unknown parameters. We next illustrate this idea using an example of distributionally robust logistic regression.

\subsubsection{Example: Optimal Radius Selection for Wasserstein Distributionally Robust Logistic Regression Using RWP Inference} 

In this example we show how to use the RWP function and its limiting variable $\bar{R}(t)$ to select the optimal radius for the Wasserstein ambiguity set in a distributionally robust logistic regression problem.

Let $\bx \in \mbb{R}^{d}$ denote the predictor and $y \in \{-1, +1\}$ the associated binary label to be predicted. In logistic regression, the conditional distribution of $y$ given $\bx$ is modeled as
\begin{equation*}
\mbb{P}(y|\bx) = \big(1+\exp(-y \bbeta'\bx)\big)^{-1},
\end{equation*}
where $\bbeta$ is the unknown coefficient vector (classifier) to be estimated. The {\em Maximum Likelihood Estimator (MLE)} of $\bbeta$ is found by minimizing the {\em negative log-likelihood (logloss)} 
\begin{equation*}
h_{\bbeta}(\bx, y) = \log(1+\exp(-y \bbeta'\bx)).
\end{equation*}
We define the distance metric on the predictor-response space as follows.
\begin{equation} \label{metric-lg-rwpi}
s((\bx_1, y_1), (\bx_2, y_2)) \triangleq 
\begin{cases}
\|\bx_1 - \bx_2\|_r, & \text{if $y_1 = y_2$,} \\
\infty, & \text{otherwise}.
\end{cases} 
\end{equation}
The distributionally robust logistic regression problem is formulated as:
\begin{equation} \label{dro-lg-rwpi}
\inf\limits_{\bbeta}\sup\limits_{\mbb{Q}\in \Omega}
\mbb{E}^{\mbb{Q}}\big[ \log(1+\exp(-y \bbeta'\bx))\big], 
\end{equation}
where the order-1 Wasserstein metric is used to define the set $\Omega$:
\begin{equation*}
\Omega \triangleq \{\mbb{Q}\in \scrP(\scrZ): W_{s,1}(\mathbb{Q},\ \hat{\mathbb{P}}_{N}) \le \epsilon\}.
\end{equation*}
We apply Theorem \ref{rwpi-limit} with $t=1$ to derive the optimal radius $\epsilon$. Note that
\begin{equation*}
\nabla_{\bbeta}h_{\bbeta}(\bx, y) = \frac{-y\bx}{1+\exp(y \bbeta'\bx)}.
\end{equation*}
Then, for $t=1$, as $N \rightarrow \infty$,
\begin{equation*}
\sqrt{N} R(\bbeta^*) \xrightarrow[]{ \ \text{d} \ } \bar{R}(1),
\end{equation*}
where 
$$\bar{R}(1) = \ \displaystyle\sup\nolimits_{\bzeta \in \scrA} \ \bzeta' \br,$$ and 
$$\br \sim \scrN\bigg(\mathbf{0}, \mbb{E}^{\mbb{P}^*} \bigg[ \frac{\bx \bx'}{\big(1+\exp(y \bx'\bbeta^*)\big)^2}\bigg]\bigg),$$ 
$$\scrA \triangleq \Big\{ \bzeta \in \mbb{R}^{d}: \  \sup\nolimits_{(\bx, y)}\|\bzeta'\nabla_{\bbeta,\bx}h_{\bbeta^*}(\bx, y)\|_s \le 1 \Big \},$$
where $s$ satisfies $1/r+1/s=1$.

Note that $\bar{R}(1)$ still depends on $\bbeta^*$ and $\mbb{P}^*$ which are both unknown. We need to find a stochastic upper bound of $\bar{R}(1)$ (for a conservative selection of the radius) that is independent of the unknown quantities. By noting that $\scrA$ is a subset of 
$$\{\bzeta \in \mbb{R}^d: \ \|\bzeta\|_s \le 1\},$$ 
and that 
$$\mbb{E}^{\mbb{P}^*_{\scrX}} [ \bx \bx'] - \mbb{E}^{\mbb{P}^*} \bigg[ \frac{\bx \bx'}{\big(1+\exp(y \bx'\bbeta^*)\big)^2}\bigg]$$ is positive definite, where $\mbb{P}^*_{\scrX}$ denotes the marginal distribution of $\bx$ under $\mbb{P}^*$, we have:
\begin{equation*}
\bar{R}(1) \overset{\mathclap{\mbox{\tiny D}}}{\le} \|\tilde{\br}\|_r,
\end{equation*}
where $\tilde{\br} \sim \scrN\big(\mathbf{0}, \mbb{E}^{\mbb{P}^*_{\scrX}}
[\bx\bx']\big)$, and $\overset{\mathclap{\mbox{\tiny D}}}{\le}$ denotes stochastic dominance.

The size of the Wasserstein ambiguity set for distributionally robust logistic regression can thus be chosen by the following procedure.
\begin{enumerate}
	\item Estimate the $(1-\alpha)$ quantile of $\|\tilde{\br}\|_r$, where $\tilde{\br} \sim \scrN\big(\mathbf{0}, \mbb{E}^{\mbb{P}^*_{\scrX}} [\bx\bx']\big)$. Denote the estimated quantile by $\hat{\chi}_{1-\alpha}$.
	\item Choose the radius $\epsilon$ to be $\epsilon = \hat{\chi}_{1-\alpha}/\sqrt{N}$.
\end{enumerate}

\chapter{Solving the Wasserstein DRO Problem}  \label{chapt:solve}
In this section we discuss how to solve the Wasserstein DRO problem, as well as the performance of the DRO estimator. A Lagrangian dual method is presented in Section~\ref{sec:dual-solver}, for a DRO model with an ambiguity set centered at a general nominal distribution. Section~\ref{sec:extreme-dist} discusses the existence and the structure of the extreme distribution that achieves the optimal value of the inner maximization problem of DRO. In Section~\ref{sec:empirical-dist-solver}, we apply the dual method to DRO models with an ambiguity set centered at the discrete empirical distribution. Sections~\ref{sec:finite-sample-perf} and \ref{sec:asymp-perf} study the finite sample and asymptotic performance of the DRO estimator, respectively. 

\section{Dual Method} \label{sec:dual-solver}
The main obstacle to solving the DRO problem (\ref{dro}) lies in the inner infinite dimensional maximization problem
\begin{equation}  \label{inner-dro}
\sup\nolimits_{\mbb{Q}\in \Omega} \
\mbb{E}^{\mbb{Q}}\big[ h(\bz)\big],
\end{equation}
where we suppress the dependence of $h$ on $\bbeta$ for ease of notation, and the ambiguity set is defined as:
\begin{equation*}
\Omega = \Omega_{\epsilon}^{s,t}(\mathbb{P}_0) \triangleq \{\mbb{Q}\in \scrP(\scrZ): W_{s,t}(\mathbb{Q},\ \mathbb{P}_0) \le \epsilon\}.
\end{equation*}
We will suppress the dependence of $\Omega$ on $s, t, \epsilon, \mathbb{P}_0$ for notational convenience.
To transform Problem (\ref{inner-dro}) into a finite dimensional problem, researchers
have resorted to Lagrangian duality, see \cite{esfahani2018data, gao2016distributionally}. Write Problem (\ref{inner-dro}) in the following form:
\begin{equation} \label{dro-primal}
\text{Primal:} \qquad \begin{array}{rl}
v_P = \displaystyle\sup_{\mbb{Q} \in \scrP(\scrZ)} & \Big\{ \int_{\scrZ} h(\bz) \mathrm{d}\mbb{Q}(\bz): W_{s,t}(\mbb{Q}, \mbb{P}_0) \leq \epsilon \Big\}.
\end{array}
\end{equation}
\cite{gao2016distributionally} derived the Lagrangian dual of (\ref{dro-primal}) as follows:
\begin{equation} \label{dro-dual}
\text{Dual:} \qquad \begin{array}{rl}
v_D = \displaystyle\inf_{\lambda \geq 0} & \Big\{ \lambda \epsilon^t - \int_{\scrZ} \inf_{\bz \in \scrZ} \big[\lambda s^t(\bz, \bz_0) - h(\bz)\big] \mathrm{d}\mbb{P}_0(\bz_0) \Big\},
\end{array}
\end{equation}
when the {\em growth rate} of the loss function $h(\bz)$, which, given an unbounded set $\scrZ$ and a fixed $\bz_0 \in \scrZ$, is defined as:
\begin{equation} \label{growth-rate-solver}
\text{GR}_h \triangleq \limsup\limits_{s(\bz, \bz_0) \rightarrow \infty} \frac{ h(\bz) - h(\bz_0)}{s^t(\bz, \bz_0)},
\end{equation}
is finite. Note that if $\scrZ$ is bounded, by convention we set $\text{GR}_h=0$. 
The value of $\text{GR}_h$ does not depend on the choice of $\bz_0$ \citep{gao2016distributionally}.
\medskip

\noindent \textbf{Remark:} 
Define a function
$$\phi(\lambda, \bz_0) \triangleq \inf_{\bz \in \scrZ} \big[\lambda s^t(\bz, \bz_0) - h(\bz)\big].$$ 
The dual objective function
\begin{equation*} 
v_D(\lambda) \triangleq   \lambda \epsilon^t - \int\nolimits_{\scrZ} \phi(\lambda, \bz_0) \mathrm{d}\mbb{P}_0(\bz_0), \qquad \lambda \geq 0,
\end{equation*}
is the sum of a linear function and an extended real-valued convex function $- \int_{\scrZ} \phi(\lambda, \bz_0) \mathrm{d}\mbb{P}_0(\bz_0)$. The convexity comes from the concavity of $\phi(\lambda, \bz_0)$ w.r.t. $\lambda$. To see this, for $q \in [0,1]$, and a fixed $\bz_0 \in \scrZ$,
\begin{equation*}
\begin{aligned}
& \quad \ \phi(q\lambda_1 + (1-q) \lambda_2, \bz_0) \\
& = \inf_{\bz \in \scrZ} \Big[\big(q\lambda_1 + (1-q) \lambda_2\big) s^t(\bz, \bz_0) - h(\bz) \Big] \\
& = \big(q\lambda_1 + (1-q) \lambda_2\big) s^t(\bz^*, \bz_0) - h(\bz^*) \\
& = q\Big[\lambda_1 s^t(\bz^*, \bz_0) - h(\bz^*)\Big] + (1-q)\Big[\lambda_2 s^t(\bz^*, \bz_0) - h(\bz^*)\Big] \\
& \geq q \phi(\lambda_1, \bz_0) + (1-q) \phi(\lambda_2, \bz_0),
\end{aligned}
\end{equation*}
where the first step uses the definition of $\phi$, $\bz^* = \arg\min_{\bz \in \scrZ} \big[\big(q\lambda_1 + (1-q) \lambda_2\big) s^t(\bz, \bz_0) - h(\bz) \big]$, and the last step is due to the fact that $\bz^* \in \scrZ$.

Thus, $v_D(\lambda)$ is a convex function on $[0, \infty)$.
Moreover, as $\lambda \rightarrow \infty$, $v_D(\lambda) \rightarrow \infty$, since $v_D(\lambda) \geq \lambda \epsilon^t + \int_{\bz_0 \in \scrZ} h(\bz_0) \mathrm{d} \mbb{P}_0(\bz_0)$, where the RHS is obtained through taking $\bz = \bz_0$ in the definition of $\phi$.

To see the necessity of having a finite growth rate, note that to ensure Problem (\ref{inner-dro}) has a finite optimal value, it is required that
\begin{equation*} 
\mbb{E}^{\mbb{Q}}\big[ h(\bz) \big] < \infty, \quad \forall \mbb{Q}\in \Omega.
\end{equation*}
This can be equivalently expressed as
\begin{equation} \label{finitedif}
\Bigl|\mbb{E}^{\mbb{Q}}\big[ h(\bz)\big] - \mbb{E}^{\mbb{P}_0}\big[ h(\bz)\big]\Bigr|< \infty, \quad \forall \mbb{Q}\in \Omega.
\end{equation}
The following Theorem~\ref{Lip-wass} implies that, if the growth rate of $h$ is infinite, (\ref{finitedif}) will be violated. Moreover, as we will see later, when the growth rate of the loss
function is infinite, strong duality for Problem (\ref{dro-primal}) fails to hold, in which case the DRO problem becomes intractable. 
In the sequel, we assume $h$ is upper semi-continuous and $\text{GR}_h < \infty$.

\begin{thm} \label{Lip-wass}
	Suppose a function $h: \scrZ \ra \mbb{R}$ defined on two metric spaces $(\scrZ, s)$ and $(\mbb{R}, |\cdot|)$, has a finite growth rate:
	$$\frac{\bigl|h(\bz_1)-h(\bz_2)\bigr|}{s^t(\bz_1, \bz_2)} \leq L, \qquad \forall \bz_1, \bz_2 \in \scrZ.$$
	Then, for any two distributions $\mbb{Q}_1$ and $\mbb{Q}_2$ supported on $\scrZ$,
	\begin{equation*}
	\Bigl|\mbb{E}^{\mbb{Q}_1}\big[ h(\bz)\big] -  \mbb{E}^{\mbb{Q}_2}\big[ h(\bz)\big]\Bigr| \leq L W^t_{s,t}(\mbb{Q}_1, \mbb{Q}_2).
	\end{equation*}
\end{thm}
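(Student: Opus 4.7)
The plan is to exploit the Kantorovich coupling formulation of the Wasserstein distance (Eq.~(\ref{wass_p}) and the equivalence~(\ref{t-toLP-cont})) together with the pointwise growth-rate bound on $h$. The key observation is that for any joint distribution $\pi$ with marginals $\mbb{Q}_1$ and $\mbb{Q}_2$, the difference of expectations can be rewritten as an integral against $\pi$:
\begin{equation*}
\mbb{E}^{\mbb{Q}_1}\big[h(\bz)\big]-\mbb{E}^{\mbb{Q}_2}\big[h(\bz)\big]
=\int\nolimits_{\scrZ\times\scrZ}\bigl(h(\bz_1)-h(\bz_2)\bigr)\,\mathrm{d}\pi(\bz_1,\bz_2),
\end{equation*}
using that the first marginal of $\pi$ is $\mbb{Q}_1$ and the second is $\mbb{Q}_2$.

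Next, I would apply the triangle inequality for integrals (pulling the absolute value inside), followed by the hypothesized growth-rate bound $|h(\bz_1)-h(\bz_2)|\le L\,s^t(\bz_1,\bz_2)$ applied pointwise under the integral. This yields
\begin{equation*}
\Bigl|\mbb{E}^{\mbb{Q}_1}\big[h(\bz)\big]-\mbb{E}^{\mbb{Q}_2}\big[h(\bz)\big]\Bigr|
\le L\int\nolimits_{\scrZ\times\scrZ} s^t(\bz_1,\bz_2)\,\mathrm{d}\pi(\bz_1,\bz_2),
\end{equation*}
which holds for every admissible coupling $\pi$.

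Finally, I would take the infimum of the right-hand side over all couplings $\pi$ with marginals $\mbb{Q}_1,\mbb{Q}_2$. By the definition in~(\ref{wass_p}) and the identity~(\ref{t-toLP-cont}), this infimum equals $W_{s^t,1}(\mbb{Q}_1,\mbb{Q}_2)=W_{s,t}^t(\mbb{Q}_1,\mbb{Q}_2)$, giving the desired inequality. The left-hand side is independent of $\pi$, so the infimum passes through cleanly.

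The argument is essentially a one-line consequence of the coupling definition, so there is no real obstacle. The only mild subtlety is integrability: one must verify that $h(\bz_1)-h(\bz_2)$ is $\pi$-integrable so that the identity splitting the difference of expectations into the joint integral is legitimate. This is guaranteed by the growth-rate hypothesis, since $|h(\bz_1)-h(\bz_2)|\le L\,s^t(\bz_1,\bz_2)$ and one can restrict attention to couplings $\pi$ for which $\int s^t\,\mathrm{d}\pi<\infty$ (if no such coupling exists, then $W_{s,t}(\mbb{Q}_1,\mbb{Q}_2)=\infty$ and the inequality is vacuous).
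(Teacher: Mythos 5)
Your proposal is correct and is essentially the paper's own argument: rewrite the difference of expectations as an integral against a coupling, bound the integrand pointwise by $L\,s^t(\bz_1,\bz_2)$, and identify the resulting transport cost with $W_{s,t}^t(\mbb{Q}_1,\mbb{Q}_2)$. The only cosmetic difference is that you take an infimum over all couplings (which also neatly handles your integrability remark), whereas the paper plugs in the optimal coupling $\pi_0$ directly.
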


\begin{proof}
	\begin{equation*} 
	\begin{aligned}
	\Bigl| \mbb{E}^{\mbb{Q}_1} & \big[ h(\bz)\big] -  \mbb{E}^{\mbb{Q}_2}\big[
	h(\bz)\big]\Bigr|  \\ 
	= & \ \biggl|\int\nolimits_{\scrZ} h(\bz_1) \mathrm{d}\mbb{Q}_1(\bz_1) -
	\int\nolimits_{\scrZ} h(\bz_2) \mathrm{d}\mbb{Q}_2(\bz_2) \biggr| \\ 
	= & \ \biggl|\int\nolimits_{\scrZ} h(\bz_1) \int\nolimits_{\bz_2\in
		\scrZ}\mathrm{d}\pi_0(\bz_1, \bz_2) - \int\nolimits_{\scrZ} h(\bz_2)
	\int\nolimits_{\bz_1\in \scrZ}\mathrm{d}\pi_0(\bz_1, \bz_2) \biggr| \\
	\le & \ \int\nolimits_{\scrZ \times \scrZ} \bigl|h(\bz_1)-h(\bz_2)\bigr| \mathrm{d}\pi_0(\bz_1,\bz_2) \\
	= & \int\nolimits_{\scrZ \times \scrZ} \frac{\bigl|h(\bz_1)-h(\bz_2)\bigr|}{s^t(\bz_1, \bz_2)} s^t(\bz_1, \bz_2) \mathrm{d}\pi_0(\bz_1, \bz_2) \\
	\le & \ \int\nolimits_{\scrZ \times \scrZ}  L  s^t(\bz_1, \bz_2) \mathrm{d}\pi_0(\bz_1,\bz_2) \\
	= & \ L W^t_{s,t}(\mbb{Q}_1, \mbb{Q}_2),
	\end{aligned}
	\end{equation*}
	where $\pi_0$ is the joint distribution of $\bz_1$ and $\bz_2$ with marginals $\mbb{Q}_1$ and $\mbb{Q}_2$ that achieves the optimal value of (\ref{wass_p}). 
\end{proof}

\subsection{Weak Duality}

The following Theorem~\ref{dro-duality} establishes weak duality for Problem (\ref{dro-primal}). Later we will show that strong duality also holds, i.e., $v_P = v_D$.

\begin{thm} [\cite{gao2016distributionally}, Proposition 1] \label{dro-duality}
	Suppose the loss function $h$ has a finite growth rate: $\text{GR}_h<\infty$. Then $v_P \leq v_D$, where $v_P$ and $v_D$ are defined in (\ref{dro-primal}) and (\ref{dro-dual}), respectively.
\end{thm}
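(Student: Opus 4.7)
The plan is to establish weak duality via a Lagrangian relaxation of the Wasserstein constraint, lifted to the space of couplings. Fix any $\mbb{Q}\in\scrP(\scrZ)$ that is feasible for the primal problem (\ref{dro-primal}). By the definition of $W_{s,t}$ in (\ref{wass_p}), for every $\delta>0$ there exists a coupling $\pi\in\scrP(\scrZ\times\scrZ)$ with marginals $\mbb{Q}$ and $\mbb{P}_0$ such that $\int_{\scrZ\times\scrZ} s^t(\bz,\bz_0)\,\mathrm{d}\pi(\bz,\bz_0)\le \epsilon^t+\delta$. For any $\lambda\ge 0$, nonnegativity of the slack yields
\[
\int_{\scrZ} h(\bz)\,\mathrm{d}\mbb{Q}(\bz)=\int_{\scrZ\times\scrZ} h(\bz)\,\mathrm{d}\pi(\bz,\bz_0)
\le \int_{\scrZ\times\scrZ}\bigl[h(\bz)-\lambda s^t(\bz,\bz_0)\bigr]\mathrm{d}\pi(\bz,\bz_0)+\lambda(\epsilon^t+\delta).
\]

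Next, I would disintegrate $\pi$ with respect to its second marginal $\mbb{P}_0$, writing $\mathrm{d}\pi(\bz,\bz_0)=\mathrm{d}\pi_{\bz_0}(\bz)\,\mathrm{d}\mbb{P}_0(\bz_0)$ where $\pi_{\bz_0}$ is a regular conditional distribution (which exists since $\scrZ$ is Polish). For each fixed $\bz_0$, the inner integrand is bounded above pointwise by its supremum in $\bz$, so
\[
\int_{\scrZ}\bigl[h(\bz)-\lambda s^t(\bz,\bz_0)\bigr]\mathrm{d}\pi_{\bz_0}(\bz)\le \sup_{\bz\in\scrZ}\bigl[h(\bz)-\lambda s^t(\bz,\bz_0)\bigr]=-\inf_{\bz\in\scrZ}\bigl[\lambda s^t(\bz,\bz_0)-h(\bz)\bigr].
\]
Integrating against $\mbb{P}_0$ and combining with the bound above, I obtain
\[
\int_{\scrZ} h(\bz)\,\mathrm{d}\mbb{Q}(\bz)\le \lambda(\epsilon^t+\delta)-\int_{\scrZ}\inf_{\bz\in\scrZ}\bigl[\lambda s^t(\bz,\bz_0)-h(\bz)\bigr]\mathrm{d}\mbb{P}_0(\bz_0).
\]
Letting $\delta\downarrow 0$, taking the supremum over feasible $\mbb{Q}$ on the left, and finally the infimum over $\lambda\ge 0$ on the right, yields exactly $v_P\le v_D$.

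For the argument to be well-posed, I would invoke the finite growth rate assumption $\mathrm{GR}_h<\infty$: it guarantees that for any $\lambda>\mathrm{GR}_h$ and any $\bz_0\in\scrZ$, the function $\bz\mapsto \lambda s^t(\bz,\bz_0)-h(\bz)$ is bounded below (indeed, tends to $+\infty$ as $s(\bz,\bz_0)\to\infty$), so $\inf_{\bz\in\scrZ}[\lambda s^t(\bz,\bz_0)-h(\bz)]\in\mbb{R}$, and the dual value $v_D$ is not $-\infty$. Upper semi-continuity of $h$ together with continuity of $s^t$ in $\bz$ ensures the infimum in $\bz$ is measurable in $\bz_0$ (via a standard measurable-selection/Berge-type argument), so the outer integral against $\mbb{P}_0$ makes sense.

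The main conceptual obstacle for weak duality is just the measurability of $\bz_0\mapsto\inf_{\bz\in\scrZ}[\lambda s^t(\bz,\bz_0)-h(\bz)]$ and the application of disintegration, both of which are standard on Polish spaces. The deeper difficulty — closing the gap $v_P=v_D$ — lies in establishing strong duality, which would require additional arguments (e.g., exhibiting a near-optimal worst-case distribution or invoking a minimax theorem), but for the weak inequality claimed here the Lagrangian/coupling bound above suffices.
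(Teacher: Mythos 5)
Your argument is correct, but it reaches $v_P\le v_D$ by a genuinely different route than the paper. The paper first performs the Lagrangian relaxation at the level of distributions, bounding $v_P$ by $\inf_{\lambda\ge 0}\{\lambda\epsilon^t+\sup_{\mbb{Q}}[\int h\,\mathrm{d}\mbb{Q}-\lambda W_{s,t}^t(\mbb{Q},\mbb{P}_0)]\}$, and then handles the Wasserstein term through the Kantorovich dual (\ref{dual-transport-cont}), exhibiting the explicit feasible potential $f=h/\lambda$ (which forces a separate treatment of $\lambda=0$ and uses $\text{GR}_h<\infty$ to justify integrability, i.e., feasibility, of that potential). You never invoke transport duality at all: you take a near-optimal coupling realizing the Wasserstein constraint, use the pointwise bound $h(\bz)\le\lambda s^t(\bz,\bz_0)-\inf_{\bz'}[\lambda s^t(\bz',\bz_0)-h(\bz')]$, and disintegrate with respect to the $\mbb{P}_0$-marginal. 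What your route buys is self-containedness and uniformity: it relies only on the primal definition (\ref{wass_p}) and regular conditional distributions on a Polish space, treats all $\lambda\ge 0$ at once without dividing by $\lambda$, and isolates the role of $\text{GR}_h<\infty$ to guaranteeing that $\phi(\lambda,\bz_0)>-\infty$ for $\lambda\ge\text{GR}_h$ so that $v_D$ is not vacuously $+\infty$/$-\infty$ (for smaller $\lambda$ your bound is trivially $+\infty$ and harmless). What the paper's route buys is economy within the monograph: it reuses the Kantorovich duality already set up in Section 2.3, and the same dual-potential manipulations reappear verbatim in the strong-duality argument (the feasibility check for $\mbb{Q}^*$), so the two halves of the section share one toolkit. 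Both proofs gloss over the same mild technicalities (measurability of $\bz_0\mapsto\phi(\lambda,\bz_0)$ and extended-real arithmetic), which you at least flag explicitly.
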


\begin{proof}
	By weak duality, we have that:
	\begin{equation} \label{weak-dual}
	\begin{array}{rl}
	v_P \le \displaystyle\inf_{\lambda \geq 0} & \bigg\{ \lambda \epsilon^t + \sup\limits_{\mbb{Q} \in \scrP(\scrZ)} \Big\{\int_{\scrZ} h(\bz)\mathrm{d}\mbb{Q}(\bz) - \lambda W_{s,t}^t (\mbb{Q}, \mbb{P}_0) \Big\} \bigg\},
	\end{array}
	\end{equation}
	where the RHS is the Lagrangian dual of (\ref{dro-primal}). Using Kantorovich duality (\ref{dual-transport-cont}), we obtain
	\begin{equation*} 
	\begin{array}{rl}
	& \sup\limits_{\mbb{Q} \in \scrP(\scrZ)} \Big\{\int\nolimits_{\scrZ} h(\bz)\mathrm{d}\mbb{Q}(\bz) - \lambda W_{s,t}^t (\mbb{Q}, \mbb{P}_0) \Big\} \\
	= & \sup\limits_{\mbb{Q} \in \scrP(\scrZ)} \bigg\{\int\nolimits_{\scrZ}
	h(\bz)\mathrm{d}\mbb{Q}(\bz)\\ 
	& \qquad \qquad \qquad - \lambda  \sup\limits_{f, g} \Big \{ \int\nolimits_{\scrZ} f(\bz) \mathrm{d} \mbb{Q}(\bz) + \int\nolimits_{\scrZ} g(\bz_0) \mathrm{d} \mbb{P}_0(\bz_0): \\
	& \qquad \qquad \qquad \qquad \quad g(\bz_0) \le \inf\limits_{\bz \in \scrZ} \big[ s^t(\bz, \bz_0) - f(\bz)\big], \; \forall \bz_0 \in \scrZ \Big\} \bigg\} \\
	\le & -\int_{\scrZ} \inf\limits_{\bz \in \scrZ} \big[ \lambda s^t(\bz, \bz_0) - h(\bz)\big] \mathrm{d} \mbb{P}_0(\bz_0),
	\end{array}
	\end{equation*}
	where the second inequality is obtained through setting $f(\bz)=h(\bz)/\lambda$, for $\lambda>0$, which is absolutely integrable due to $\text{GR}_h<\infty$, and is thus a feasible solution to the inner supremum of the second line. For $\lambda=0$, the inequality also holds since,
	\begin{equation*} 
	\begin{array}{rl}
	\sup\limits_{\mbb{Q} \in \scrP(\scrZ)} \Big\{\int_{\scrZ} h(\bz)\mathrm{d}\mbb{Q}(\bz) \Big\} \le \sup\limits_{\bz \in \scrZ} h(\bz)
	=  -\int_{\scrZ} \inf\limits_{\bz \in \scrZ} \big[  - h(\bz)\big] \mathrm{d} \mbb{P}_0(\bz_0).
	\end{array}
	\end{equation*}
	Combining with (\ref{weak-dual}) we arrive at the conclusion that $v_P \le v_D$.
\end{proof}

\subsection{Strong Duality}

We next show that $v_P = v_D$, through constructing a feasible solution to the primal problem (\ref{dro-primal}) whose objective function value coincides with the dual objective. We first define the push-forward measure that will be used to construct a primal feasible distribution $\mbb{Q}^*$.

\begin{defi}[Push-Forward Measure]
	Given measurable spaces $\scrZ$ and $\scrZ'$, a measurable function $T: \scrZ \ra \scrZ'$, and a measure $\mbb{P} \in \scrB(\scrZ)$, define the push-forward measure of $\mbb{P}$ through $T$, denoted by $T_{\# \mbb{P}} \in \scrB(\scrZ')$, as
	\begin{equation*}
	T_{\# \mbb{P}}(\scrA) \triangleq \mbb{P}(T^{-1}(\scrA)) = \mbb{P}\{\bz \in \scrZ: \ T(\bz) \in \scrA\}, \qquad \scrA\subseteq \scrZ'.
	\end{equation*}
\end{defi}

Construct a distribution $\mbb{Q}^*$ as a convex combination of two distributions, each of which is a perturbation of the nominal distribution $\mbb{P}_0$:
\begin{equation} \label{ext-dist-form}
\mbb{Q}^* = q \underline{T}_{\# \mbb{P}_0} + (1-q) \overline{T}_{\# \mbb{P}_0},
\end{equation}
where the functions $\underline{T}, \overline{T} : \scrZ \ra \scrZ$ produce the minimizer to $\phi(\lambda^*, \bz_0)$, where $\lambda^*$ is the optimal solution to the dual problem (\ref{dro-dual}), i.e., 
\begin{equation} \label{T-func-def}
\underline{T}(\bz_0), \overline{T}(\bz_0) \in \Big\{\bz \in \scrZ: \ \lambda^* s^t(\bz, \bz_0) - h(\bz) = \phi(\lambda^*, \bz_0) \Big\},
\end{equation}
and $q \in [0,1]$ is chosen such that 
\begin{equation} \label{q-form}
q \int\nolimits_{\scrZ} s^t\big(\underline{T}(\bz_0), \bz_0\big) \mathrm{d}\mbb{P}_0(\bz_{0}) + (1-q) \int\nolimits_{\scrZ} s^t\big(\overline{T}(\bz_0), \bz_0\big) \mathrm{d}\mbb{P}_0(\bz_{0}) = \epsilon^t.
\end{equation}
We choose $\underline{T}, \overline{T}$ to satisfy the following conditions
\begin{equation*}
\int\nolimits_{\scrZ} s^t\big(\underline{T}(\bz_0), \bz_0\big) \mathrm{d}\mbb{P}_0(\bz_{0}) \le \epsilon^t,
\end{equation*}
\begin{equation*}
\int\nolimits_{\scrZ} s^t\big(\overline{T}(\bz_0), \bz_0\big) \mathrm{d}\mbb{P}_0(\bz_{0}) \geq \epsilon^t,
\end{equation*}
in order to ensure the existence of such a $q$.

We first show that $\mbb{Q}^*$ is primal feasible. Notice that 
\begin{equation*}
\begin{aligned}
& W_{s,t}^t(\mbb{Q}^*, \mbb{P}_0)\\ 
= &  \sup\limits_{f, g} \Big \{ \int\nolimits_{\scrZ} f(\bz) \mathrm{d} \mbb{Q}^*(\bz)  + \int\nolimits_{\scrZ} g(\bz_0) \mathrm{d} \mbb{P}_0(\bz_0): \\
& \qquad \qquad \qquad \qquad \  f(\bz) \le \inf\limits_{\bz_0 \in \scrZ} \big[ s^t(\bz, \bz_0) - g(\bz_0)\big], \; \forall \bz \in \scrZ \Big\} \\
= & \sup\limits_{f, g} \Big \{ q\int\nolimits_{\scrZ} f(\bz)  \mathrm{d}\mbb{P}_0 \big (\underline{T}^{-1}(\bz)\big) + (1-q) \int\nolimits_{\scrZ} f(\bz) \mathrm{d}\mbb{P}_0 \big (\overline{T}^{-1}(\bz) \big) \\ 
& \qquad + \int\nolimits_{\scrZ}  g(\bz_0) \mathrm{d} \mbb{P}_0(\bz_0):  
f(\bz) \le \inf\nolimits_{\bz_0 \in \scrZ} \big[ s^t(\bz, \bz_0) - g(\bz_0)\big], \; \forall \bz \in \scrZ \Big\} \\
\le & \sup\limits_{g} \Big \{ q\int\nolimits_{\scrZ} \Big( s^t\big(\underline{T}(\bz_0), \bz_0\big) - g(\bz_0)\Big)  \mathrm{d}\mbb{P}_0 \big (\bz_0\big) \\
& \qquad + (1-q)\int\nolimits_{\scrZ} \Big( s^t\big(\overline{T}(\bz_0), \bz_0\big) - g(\bz_0)\Big) \mathrm{d}\mbb{P}_0 \big (\bz_0 \big) 
+ \int\nolimits_{\scrZ}  g(\bz_0) \mathrm{d} \mbb{P}_0(\bz_0)\Big\} \\
= & \ q\int\nolimits_{\scrZ} s^t\big(\underline{T}(\bz_0), \bz_0\big)  \mathrm{d}\mbb{P}_0 \big (\bz_0\big) + (1-q)\int\nolimits_{\scrZ} s^t\big(\overline{T}(\bz_0), \bz_0\big) \mathrm{d}\mbb{P}_0 \big (\bz_0 \big) \\
= & \ \epsilon^t,
\end{aligned}
\end{equation*}
where the first step uses the Kantorovich
duality (\ref{dual-transport-cont}), the second step uses the structure of $\mbb{Q}^*$ in (\ref{ext-dist-form}), the third step replaces $f(\bz)$ by its upper bound $s^t\big(\bz, \bz_0\big) - g(\bz_0)$, and the last step uses the definition of $q$ in (\ref{q-form}).

Now that the feasibility of $\mbb{Q}^*$ has been established, we next prove that
$\mbb{Q}^*$ is the primal optimal solution by showing that its objective function
value matches the optimal dual value.
\begin{equation*}
\begin{aligned}
\int\nolimits_{\scrZ} h(\bz) \mathrm{d} \mbb{Q}^*(\bz) & = q\int\nolimits_{\scrZ} h(\bz) \mathrm{d}\mbb{P}_0 (\underline{T}^{-1}(\bz)) + (1-q) \int\nolimits_{\scrZ} h(\bz) \mathrm{d}\mbb{P}_0 (\overline{T}^{-1}(\bz))\\
& = q\int\nolimits_{\scrZ} \Big( \lambda^* s^t(\underline{T}(\bz_0), \bz_0) - \phi(\lambda^*, \bz_0)\Big) \mathrm{d}\mbb{P}_0 (\bz_0) \\
& \quad + (1-q) \int\nolimits_{\scrZ} \Big( \lambda^* s^t(\overline{T}(\bz_0), \bz_0) - \phi(\lambda^*, \bz_0)\Big) \mathrm{d}\mbb{P}_0 (\bz_0)\\
& = q\lambda^*\int\nolimits_{\scrZ}  s^t(\underline{T}(\bz_0), \bz_0) \mathrm{d}\mbb{P}_0 (\bz_0) -\int\nolimits_{\scrZ} \phi(\lambda^*, \bz_0) \mathrm{d}\mbb{P}_0 (\bz_0) \\
& \quad + (1-q) \lambda^*  \int\nolimits_{\scrZ}  s^t(\overline{T}(\bz_0), \bz_0)  \mathrm{d}\mbb{P}_0 (\bz_0) \\
& = \lambda^* \epsilon^t -\int\nolimits_{\scrZ} \phi(\lambda^*, \bz_0) \mathrm{d}\mbb{P}_0 (\bz_0) \\
& = v_D,
\end{aligned}
\end{equation*}
where the first step uses the structure of $\mbb{Q}^*$ in (\ref{ext-dist-form}), the second step uses the definition of $\underline{T}, \overline{T}$ in (\ref{T-func-def}), the fourth step uses the definition of $q$ in (\ref{q-form}), and the last step results from the optimality of $\lambda^*$. We are now ready to state the strong duality result.

\begin{thm} [\cite{gao2016distributionally}, Theorem 1]\label{strong-duality}
	Suppose that $\text{GR}_h<\infty$. The dual problem (\ref{dro-dual}) always admits a minimizer $\lambda^*$, and strong duality holds: $v_P = v_D < \infty$. 
\end{thm}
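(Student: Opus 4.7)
The plan is to combine the weak-duality bound $v_P \le v_D$ from Theorem~\ref{dro-duality} with an explicit construction of a primal feasible distribution that attains the dual value. Such a construction is already sketched just before the statement: $\mbb{Q}^*$ defined in (\ref{ext-dist-form}) as the convex combination $q\underline{T}_{\#\mbb{P}_0} + (1-q)\overline{T}_{\#\mbb{P}_0}$, together with the calculations displayed there, yields both $W_{s,t}^t(\mbb{Q}^*,\mbb{P}_0) \le \epsilon^t$ (feasibility verified via the Kantorovich duality (\ref{dual-transport-cont})) and $\int_{\scrZ} h(\bz)\,\mathrm{d}\mbb{Q}^*(\bz) = v_D$. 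It therefore suffices to justify the existence of the ingredients of this construction, namely a dual minimizer $\lambda^*$, measurable selectors $\underline{T},\overline{T}$ satisfying (\ref{T-func-def}), and a convex coefficient $q\in[0,1]$ satisfying (\ref{q-form}).

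For the dual minimizer, I would invoke the remark following (\ref{dro-dual}): $v_D(\lambda)$ is convex on $[0,\infty)$ and, since $v_D(\lambda) \ge \lambda\epsilon^t + \mbb{E}^{\mbb{P}_0}[h]$, it is coercive, tending to $+\infty$ as $\lambda \to \infty$. A convex coercive function on $[0,\infty)$ attains its infimum on a bounded sublevel set (convex functions are continuous on the interior of their effective domain, and the endpoint $\lambda=0$ is handled directly), yielding $\lambda^* \in [0,\infty)$ with $v_D(\lambda^*) = v_D < \infty$. For the selectors, the finite growth-rate hypothesis $\text{GR}_h < \infty$ is exactly what ensures that for suitably large $\lambda^*$ the map $\bz \mapsto \lambda^* s^t(\bz,\bz_0) - h(\bz)$ is bounded below and coercive in $\bz$; upper semi-continuity of $h$ and continuity of $s$ then make the arg-min set in (\ref{T-func-def}) nonempty and closed for each fixed $\bz_0$. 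A measurable selection theorem (e.g., Kuratowski--Ryll-Nardzewski) produces $\underline{T},\overline{T}$, and one can arrange
\begin{equation*}
\int_{\scrZ} s^t\big(\underline{T}(\bz_0),\bz_0\big)\,\mathrm{d}\mbb{P}_0(\bz_0) \le \epsilon^t \le \int_{\scrZ} s^t\big(\overline{T}(\bz_0),\bz_0\big)\,\mathrm{d}\mbb{P}_0(\bz_0)
\end{equation*}
by choosing, for instance, $\underline{T}(\bz_0)=\bz_0$ whenever $\bz_0$ itself attains the infimum (which makes the left integral zero) and a far-away minimizer for $\overline{T}$. Then $q\in[0,1]$ solving (\ref{q-form}) is guaranteed by the intermediate value theorem.

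With $\lambda^*, \underline{T}, \overline{T}, q$ in hand, the feasibility and optimality computations displayed just before the theorem statement apply verbatim to yield $v_P \ge \int_{\scrZ} h(\bz)\,\mathrm{d}\mbb{Q}^*(\bz) = v_D$, which combined with weak duality gives $v_P = v_D < \infty$. The main technical obstacle I anticipate is the borderline case where the arg-min sets in (\ref{T-func-def}) are empty---for example, if the infimum defining $\phi(\lambda^*,\bz_0)$ is not attained---which can occur when the $\lambda^*$-weighted cost only barely dominates the growth of $h$. In that regime one replaces exact minimizers by $\eta_n$-minimizers with $\eta_n\downarrow 0$, passes to the limit along a tight subsequence (tightness follows from the coercivity of $\lambda^* s^t(\cdot,\bz_0) - h(\cdot)$ provided by $\text{GR}_h<\infty$), and uses the growth rate to secure uniform integrability of the transportation costs and of $h$ under $\mbb{P}_0$ so that the primal value at the limit still matches $v_D$.
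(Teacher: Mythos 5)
Your overall route is the same as the paper's: weak duality (Theorem~\ref{dro-duality}) combined with the explicit construction of $\mbb{Q}^*$ in (\ref{ext-dist-form})--(\ref{q-form}), and that structure is fine. The genuine gap is in how you justify the two budget inequalities $\int s^t(\underline{T}(\bz_0),\bz_0)\,\mathrm{d}\mbb{P}_0(\bz_0) \le \epsilon^t \le \int s^t(\overline{T}(\bz_0),\bz_0)\,\mathrm{d}\mbb{P}_0(\bz_0)$ that make the intermediate-value choice of $q$ possible. You propose to ``arrange'' them by taking $\underline{T}(\bz_0)=\bz_0$ whenever $\bz_0$ attains the infimum and a ``far-away'' minimizer for $\overline{T}$; neither device is available in general. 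The point $\bz_0$ attains the infimum only if $h(\bz)-h(\bz_0)\le\lambda^* s^t(\bz,\bz_0)$ for all $\bz$, which typically fails, and there is no reason that minimizers far enough away to push the second integral above $\epsilon^t$ exist. These inequalities are not free choices: they are exactly the first-order optimality condition $0\in\partial v_D(\lambda^*)$ for the dual (\ref{dro-dual}), obtained by a Danskin/envelope computation of the superdifferential of the concave map $\lambda\mapsto\phi(\lambda,\bz_0)$ (which, when the arg-min in (\ref{T-func-def}) is nonempty, is the interval $[\underline{s}^t(\lambda^*,\bz_0),\overline{s}^t(\lambda^*,\bz_0)]$ appearing in Theorem~\ref{thm:ext-dist}), an interchange of subdifferential and integral, and a separate treatment of the boundary case $\lambda^*=0$. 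Indeed, if the inequalities could always be arranged with exact minimizers, the worst-case distribution would always exist, contradicting the need for the extra hypotheses in Theorem~\ref{thm:ext-dist}. Without this step, the constructed $\mbb{Q}^*$ need not exhaust the budget exactly, and the computation matching its objective to $v_D$ does not go through.

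Your fallback for the non-attainment regime also needs adjustment. When the infimum defining $\phi(\lambda^*,\bz_0)$ is not attained, tightness can genuinely fail---mass escaping to infinity is precisely why the worst-case distribution may not exist---so passing to a limiting distribution is the wrong move. The standard repair is to build, for each $\eta_n\downarrow 0$, a feasible $\mbb{Q}_n$ from $\eta_n$-minimizers, again using the (approximate) dual optimality condition at $\lambda^*$ to keep the transport budget within $\epsilon^t$, show $\int h\,\mathrm{d}\mbb{Q}_n \ge v_D - c\,\eta_n$, and conclude $v_P\ge v_D$ directly without ever producing a limit distribution.
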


\noindent \textbf{Remark:} The dual problem (\ref{dro-dual}) admits a minimizer 
$$\lambda^* \in [\max(0, \text{GR}_h), \infty).$$
To see this, notice that for all $\lambda < \text{GR}_h$, $\phi(\lambda, \bz_0) = -\infty$, since 
\begin{equation*}
\begin{aligned}
& \limsup\limits_{s(\bz, \bz_0) \rightarrow \infty} \Big[\lambda s^t(\bz, \bz_0) - h(\bz)\Big] \\
= & \limsup\limits_{s(\bz, \bz_0) \rightarrow \infty} \Big(\lambda-\frac{ h(\bz) - h(\bz_0)}{s^t(\bz, \bz_0)}\Big) s^t(\bz, \bz_0) - h(\bz_0) \\
= & -\infty,
\end{aligned}
\end{equation*}
in which case $v_D(\lambda) = \infty$. We conclude that $\lambda^* \ge \text{GR}_h$.

By using duality, \cite{esfahani2018data, gao2016distributionally, zhao2015data} proposed tractable convex reformulations for the DRO problem (\ref{dro}). For Lipschitz continuous loss functions, the duality result leads to an equivalent formulation for the Wasserstein DRO as a regularized empirical loss minimization problem, where the regularizer is related to the Lipschitz constant of the loss, see \cite{gao2017wasserstein, shafieezadeh2017regularization}. This connection between robustness and regularization has also been established in \cite{abadeh2015distributionally, chen2018robust}. We will discuss it in further details in Section~\ref{chapt:dro}.

\section{The Extreme Distribution} \label{sec:extreme-dist}
Section \ref{sec:dual-solver} reveals the structure of the primal optimal solution $\mbb{Q}^*$ (the extreme distribution) in (\ref{ext-dist-form}). We summarize the discussions on the existence and the form of the extreme distribution in the following theorem.

\begin{thm} [\cite{gao2016distributionally}, Corollary 1] \label{thm:ext-dist}
	Suppose $\scrZ = \mbb{R}^d$. The worst-case distribution exists if there exists a dual minimizer $\lambda^*$, and the set
	$\{\bz \in \scrZ: \ \lambda^* s^t(\bz, \bz_0) - h(\bz) = \phi(\lambda^*, \bz_0)\}$ is non-empty $\mbb{P}_0$-almost everywhere, and 
	\begin{equation*}
	\int\nolimits_{\scrZ} \underline{s}^t(\lambda^*, \bz_0) \mathrm{d}\mbb{P}_0(\bz_{0}) \le \epsilon^t,
	\end{equation*}
	\begin{equation*}
	\int\nolimits_{\scrZ} \overline{s}^t(\lambda^*, \bz_0) \mathrm{d}\mbb{P}_0(\bz_{0}) \geq \epsilon^t,
	\end{equation*}
	where
	\begin{equation*}
	\underline{s}(\lambda, \bz_0) \triangleq \min_{\bz \in \scrZ} \{ s(\bz, \bz_0): \ \lambda s^t(\bz, \bz_0) - h(\bz) = \phi(\lambda, \bz_0)\},
	\end{equation*}
	and,
	\begin{equation*}
	\overline{s}(\lambda, \bz_0) \triangleq \max_{\bz \in \scrZ} \{ s(\bz, \bz_0): \ \lambda s^t(\bz, \bz_0) - h(\bz) = \phi(\lambda, \bz_0)\}.
	\end{equation*}
	Whenever the worst-case distribution exists, there exists one which can be represented as a convex combination of two distributions, each of which is a perturbation of the nominal distribution:
	\begin{equation*}
	\mbb{Q} = q \underline{T}_{\# \mbb{P}_0} + (1-q) \overline{T}_{\# \mbb{P}_0},
	\end{equation*}
	where $q \in [0,1]$, and $\underline{T}, \overline{T} : \scrZ \ra \scrZ$ satisfy
	\begin{equation*}
	\underline{T}(\bz_0), \overline{T}(\bz_0) \in \{\bz \in \scrZ: \ \lambda^* s^t(\bz, \bz_0) - h(\bz) = \phi(\lambda^*, \bz_0)\}.
	\end{equation*}
\end{thm}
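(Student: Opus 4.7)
The plan is to promote the candidate construction from Section~\ref{sec:dual-solver} into an honest existence proof. Strong duality (Theorem~\ref{strong-duality}) gives $v_P=v_D<\infty$, so it suffices to exhibit a primal-feasible $\mbb{Q}^*$ whose objective attains $v_D$; the dual minimizer $\lambda^*$ is supplied by hypothesis. The rest splits into three steps: (i) select $\underline{T}$ and $\overline{T}$ measurably from the argmin set of $\bz\mapsto\lambda^* s^t(\bz,\bz_0)-h(\bz)$; (ii) choose a mixing weight $q\in[0,1]$ so that the transport-cost balance (\ref{q-form}) holds; (iii) verify feasibility and optimality using the chains of inequalities already displayed in Section~\ref{sec:dual-solver}.

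For step (i), I would work with the set-valued map
\[
\Gamma(\bz_0)\triangleq\{\bz\in\scrZ:\ \lambda^* s^t(\bz,\bz_0)-h(\bz)=\phi(\lambda^*,\bz_0)\},
\]
which is non-empty $\mbb{P}_0$-almost everywhere by hypothesis. Since $h$ is upper semi-continuous, $s$ is continuous, and $\phi(\lambda^*,\cdot)$ is measurable as the infimum of a Carath\'eodory-type integrand, the graph of $\Gamma$ is a measurable subset of $\scrZ\times\scrZ$. A standard measurable selection theorem (Kuratowski--Ryll-Nardzewski, or Jankov--von Neumann for analytic sets) then yields measurable $\underline{T},\overline{T}:\scrZ\to\scrZ$ taking values in $\Gamma(\bz_0)$ with $s(\underline{T}(\bz_0),\bz_0)=\underline{s}(\lambda^*,\bz_0)$ and $s(\overline{T}(\bz_0),\bz_0)=\overline{s}(\lambda^*,\bz_0)$, with an approximating sequence used whenever the supremum defining $\overline{s}$ is not attained.

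For step (ii), the map $q\mapsto q\!\int_{\scrZ}\!\underline{s}^t\,\mathrm{d}\mbb{P}_0+(1-q)\!\int_{\scrZ}\!\overline{s}^t\,\mathrm{d}\mbb{P}_0$ is continuous in $q$, and the hypothesized integral inequalities bracket $\epsilon^t$, so the intermediate-value theorem produces $q\in[0,1]$ satisfying (\ref{q-form}). Set $\mbb{Q}^*=q\underline{T}_{\#\mbb{P}_0}+(1-q)\overline{T}_{\#\mbb{P}_0}$. For step (iii), the Kantorovich-dual bound on $W_{s,t}^t(\mbb{Q}^*,\mbb{P}_0)$ already written out in Section~\ref{sec:dual-solver}---using the dual feasibility constraint $f(\bz)\le s^t(\bz,\bz_0)-g(\bz_0)$---gives $W_{s,t}^t(\mbb{Q}^*,\mbb{P}_0)\le\epsilon^t$, hence feasibility. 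For optimality, substituting the identity $h(\bz)=\lambda^* s^t(\bz,\bz_0)-\phi(\lambda^*,\bz_0)$ valid on $\Gamma(\bz_0)$ and then invoking (\ref{q-form}) collapses $\int h\,\mathrm{d}\mbb{Q}^*$ to $\lambda^*\epsilon^t-\int\phi(\lambda^*,\bz_0)\,\mathrm{d}\mbb{P}_0(\bz_0)=v_D$, which equals $v_P$ by Theorem~\ref{strong-duality}.

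The main obstacle is step (i): the argmin set $\Gamma(\bz_0)$ is not guaranteed to be compact, $\overline{s}(\lambda^*,\bz_0)$ can be infinite on a positive-measure set, and $h$ is only upper semi-continuous so one cannot simply quote continuous-parameter optimization. The fix is to approximate: take measurable $\overline{T}_n(\bz_0)\in\Gamma(\bz_0)$ with $s(\overline{T}_n(\bz_0),\bz_0)\uparrow\overline{s}(\lambda^*,\bz_0)$, build $\mbb{Q}_n^*$ with a matching $q_n$, and pass to the limit. The integrability required by this limiting argument is supplied by the fact that any dual minimizer satisfies $\lambda^*\ge\max(0,\text{GR}_h)$ (as noted in the remark following Theorem~\ref{strong-duality}), which controls $h(\bz)$ by $\lambda^* s^t(\bz,\bz_0)$ up to an additive constant and therefore lets the hypothesis $\int\overline{s}^t\,\mathrm{d}\mbb{P}_0\ge\epsilon^t$ translate into integrability of $h$ against $\overline{T}_{n,\#}\mbb{P}_0$.
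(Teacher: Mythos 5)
Your proposal is correct and follows essentially the same route as the paper: the paper's argument for Theorem~\ref{thm:ext-dist} is exactly the construction of $\mbb{Q}^* = q\,\underline{T}_{\# \mbb{P}_0} + (1-q)\,\overline{T}_{\# \mbb{P}_0}$ carried out in the strong-duality part of Section~\ref{sec:dual-solver}, with $q$ fixed by (\ref{q-form}), feasibility checked through the Kantorovich dual bound, and optimality obtained by substituting $h(\bz)=\lambda^* s^t(\bz,\bz_0)-\phi(\lambda^*,\bz_0)$ on the argmin set so the objective collapses to $v_D=v_P$. Your additional care about measurable selection and about approximating $\overline{s}$ when the supremum is not attained only fills in technicalities the paper (following \cite{gao2016distributionally}) leaves implicit, and does not change the argument.
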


\section{A Discrete Empirical Nominal Distribution} \label{sec:empirical-dist-solver}
In this section we apply the strong duality result developed in previous sections to the scenario where the discrete empirical distribution $\hat{\mbb{P}}_N$ is used as the center of the ambiguity set.
\begin{col} [\cite{gao2016distributionally}, Corollary 2] \label{col-discrete-dual}
	Suppose we use the empirical distribution 
	$$\hat{\mbb{P}}_{N} \triangleq \frac{1}{N} \sum_{i=1}^N \delta_{\bz_i}(\bz)$$
	as the center of the ambiguity set, i.e., $\mbb{P}_0 = \hat{\mbb{P}}_N$, where $\bz_i, i \in \lb N \rb$, are the observed realizations of $\bz$.
	Assume $\text{GR}_h<\infty$. Then,
	
	\noindent $(i)$ The primal problem (\ref{dro-primal}) has a strong dual problem
	\begin{equation} \label{discrete-dro-dual}
	v_P = v_D = \displaystyle \min_{\lambda \geq 0} \Big\{ \lambda \epsilon^t + \frac{1}{N} \sum\limits_{i=1}^N \sup_{\bz \in \scrZ} \big[ h(\bz) - \lambda s^t(\bz, \bz_i)\big]\Big\}.
	\end{equation}
	Moreover, $v_P, v_D$ are also equal to
	\begin{equation}  \label{discrete-worst-formul}
	\begin{array}{rl}
	\displaystyle \sup\limits_{\underline{\bz}_i, \overline{\bz}_i, q_1, q_2} & \Big \{ \frac{1}{N} \sum\limits_{i=1}^N \big[q_1 h(\underline{\bz}_i) + q_2 h(\overline{\bz}_i) \big]\Big\} \\
	\text{s.t.} & \frac{1}{N} \sum\limits_{i=1}^N \big[ q_1 s^t(\underline{\bz}_i, \bz_i) + q_2 s^t(\overline{\bz}_i, \bz_i)\big] \le \epsilon^t, \\
	\qquad & q_1 + q_2 \le 1, \\
	\qquad & q_1, q_2 \geq 0.
	\end{array}
	\end{equation}
	$(ii)$ When $\scrZ$ is convex and $h$ is concave, (\ref{discrete-dro-dual}) could be reduced to
	\begin{equation} \label{concave-loss-dual}
	\begin{array}{rl}
	\displaystyle \sup_{\tilde{\bz}_i \in \scrZ} &  \frac{1}{N} \sum\limits_{i=1}^N h(\tilde{\bz}_i)\\
	\text{s.t.} & \frac{1}{N} \sum\limits_{i=1}^N s^t(\bz_i, \tilde{\bz}_i) \le \epsilon^t.
	\end{array}
	\end{equation}
	$(iii)$ Whenever the worst-case distribution exists, there exists one which is supported on at most $N+1$ points and has the form
	\begin{equation*}
	\mbb{Q}^* = \frac{1}{N}\sum\limits_{i \neq i_0} \delta_{\bz_i^*}(\bz) + \frac{q}{N} \delta_{\underline{\bz}_{i_0}^*}(\bz) + \frac{1-q}{N} \delta_{\overline{\bz}_{i_0}^*}(\bz),
	\end{equation*}
	where $1 \leq i_0 \leq N$, $q \in [0,1]$, $\underline{\bz}_{i_0}^*, \overline{\bz}_{i_0}^* \in \arg\min_{\bz \in \scrZ} \{\lambda^* s^t(\bz, \bz_{i_0}) - h(\bz) \}$, and $\bz_i^* \in \arg\min_{\bz \in \scrZ} \{\lambda^* s^t(\bz, \bz_{i}) - h(\bz) \}$ for all $i \neq i_0$.
	\begin{proof}
		(\ref{discrete-dro-dual}) comes directly from (\ref{dro-dual}). For (\ref{discrete-worst-formul}), recall that the worst-case distribution can be expressed as a convex combination of two perturbed versions of the empirical distribution, see (\ref{ext-dist-form}) and (\ref{T-func-def}). Thus, $\mbb{Q}^*$ is supported on $2N$ points $\underline{\bz}_i, \overline{\bz}_i, \ i \in \lb N \rb$, with probabilities $q/N$ and $(1-q)/N$, respectively. Problem (\ref{discrete-worst-formul}) finds the worst-case expected loss by imposing such a structure on the distribution $\mbb{Q}$. 
		
		Part  $(ii)$ can be proved by noticing that Problem (\ref{discrete-dro-dual}) is the Lagrangian dual of (\ref{concave-loss-dual}), which is a convex problem due to the concavity of $h$.
		
		To prove $(iii)$, consider problem (\ref{discrete-worst-formul}) by replacing
		$q_1$ with $q_{i}$ and $q_2$ with $1-q_{i}$, i.e., we allow $q$ to vary over
		samples. (\ref{discrete-worst-formul}) is a linear program in $q_i$ and has
		an optimal solution which has at most one fractional point (i.e., $\exists i_0,\ \text{s.t.} \ q_{i_0}>0;\ q_i = 0,\ \forall i \neq i_0$). Therefore, there exists a worst-case distribution which is supported on at most $N+1$ points. 
	\end{proof}
\end{col}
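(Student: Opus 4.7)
The plan is to prove the three parts sequentially by specializing the general strong-duality result (Theorem~\ref{strong-duality}) to the empirical nominal measure, invoking convex-programming duality for the concave case, and finally applying linear-programming basic-feasible-solution theory.

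For part $(i)$, I would substitute $\mbb{P}_0 = \hat{\mbb{P}}_N = \frac{1}{N}\sum_{i=1}^N \delta_{\bz_i}$ into the dual (\ref{dro-dual}) of Theorem~\ref{strong-duality}. The integral against an atomic measure collapses to a finite sum: $\int_{\scrZ}\phi(\lambda,\bz_0)\,\mathrm{d}\hat{\mbb{P}}_N(\bz_0) = \frac{1}{N}\sum_{i=1}^N \phi(\lambda,\bz_i)$. Negating each inner infimum (so that $-\phi(\lambda,\bz_i) = \sup_{\bz}[h(\bz) - \lambda s^t(\bz,\bz_i)]$) yields exactly (\ref{discrete-dro-dual}), with strong duality and attainment of $\lambda^*$ inherited from Theorem~\ref{strong-duality}. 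For the primal-side reformulation (\ref{discrete-worst-formul}), I would plug $\hat{\mbb{P}}_N$ into the push-forward structure (\ref{ext-dist-form})--(\ref{T-func-def}): each atom $\bz_i$ is split by masses $q/N$ and $(1-q)/N$ sent to $\underline{\bz}_i = \underline{T}(\bz_i)$ and $\overline{\bz}_i = \overline{T}(\bz_i)$. Relabeling $q_1 = q$ and $q_2 = 1-q$, the objective $\mbb{E}^{\mbb{Q}^*}[h(\bz)]$ becomes $\frac{1}{N}\sum_i[q_1 h(\underline{\bz}_i) + q_2 h(\overline{\bz}_i)]$, and the Wasserstein-ball constraint becomes the displayed transport-cost inequality. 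The relaxation $q_1 + q_2 \le 1$ is harmless because the supremum is achieved at equality (any slack can be absorbed by taking $\underline{\bz}_i = \bz_i$ at zero cost).

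For part $(ii)$, I would observe that (\ref{concave-loss-dual}) is a convex program when $h$ is concave, $s^t$ is convex, and $\scrZ$ is convex, and that its Lagrangian dual in the single Wasserstein-budget constraint separates across $i$ to produce exactly (\ref{discrete-dro-dual}). Strong convex duality then identifies the optimal value of (\ref{concave-loss-dual}) with $v_D = v_P$ from part $(i)$. An equivalent and more intuitive route collapses the two-point split in (\ref{discrete-worst-formul}) via $\tilde{\bz}_i = q_1\underline{\bz}_i + q_2\overline{\bz}_i$ with $q_1+q_2=1$: concavity of $h$ gives $q_1 h(\underline{\bz}_i)+q_2 h(\overline{\bz}_i)\le h(\tilde{\bz}_i)$ while convexity of $s^t$ gives $q_1 s^t(\underline{\bz}_i,\bz_i)+q_2 s^t(\overline{\bz}_i,\bz_i)\ge s^t(\tilde{\bz}_i,\bz_i)$, so the single-point formulation both dominates from below and is a feasible specialization from above, closing the equality.

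For part $(iii)$, the plan is to enrich (\ref{discrete-worst-formul}) by letting the split weight vary per sample, replacing $(q_1,q_2)$ by $(q_i, 1-q_i)$ with $q_i \in [0,1]$. This per-sample enrichment is tight because any uniform $(q_1,q_2)$ remains feasible and the extra freedom can only help the maximization. After fixing optimal locations $(\underline{\bz}_i^*, \overline{\bz}_i^*)$ for each $i$, what remains is a linear program in $\bq = (q_1,\ldots,q_N) \in [0,1]^N$ with a single non-box linear inequality (the Wasserstein budget). Any basic feasible solution of an LP with $N$ variables, $2N$ box constraints, and one linear inequality activates at least $N-1$ box constraints, so at most one coordinate $q_{i_0}^*$ is strictly fractional while the others lie in $\{0,1\}$. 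Indices with $q_i^* = 1$ contribute a single atom at $\underline{\bz}_i^*$, indices with $q_i^* = 0$ a single atom at $\overline{\bz}_i^*$, and $i_0$ contributes the two atoms $\underline{\bz}_{i_0}^*, \overline{\bz}_{i_0}^*$, giving support on at most $N+1$ points of the stated form. The main obstacle I anticipate is the bookkeeping in part $(iii)$: justifying that the per-sample relaxation preserves the optimal value and that the LP vertex argument is applied \emph{after} optimizing over the location variables $\underline{\bz}_i,\overline{\bz}_i$, not before. A secondary subtlety in part $(ii)$ is verifying the regularity conditions for strong convex duality, which follow from the finite-growth-rate hypothesis on $h$ together with the convexity of $s^t$.
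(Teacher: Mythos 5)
Your proposal is correct and follows essentially the same route as the paper: specializing the dual (\ref{dro-dual}) to the atomic empirical measure for (\ref{discrete-dro-dual}), reading (\ref{discrete-worst-formul}) off the two-point push-forward structure (\ref{ext-dist-form})--(\ref{T-func-def}), treating (\ref{discrete-dro-dual}) as the Lagrangian dual of (\ref{concave-loss-dual}) for part $(ii)$, and obtaining part $(iii)$ by letting the split weights vary per sample and invoking the fact that the resulting linear program has an optimal vertex with at most one fractional coordinate. The only caveat is that your alternative Jensen-type collapse in part $(ii)$ additionally requires convexity of $s^t(\cdot,\bz_i)$, which the corollary does not assume; your primary Lagrangian-duality route (the one the paper uses) does not rely on it.
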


\subsection{A Special Case} \label{sec:solver-special-case}
We study a special case where the loss function $h(\bz)$ is convex in $\bz$. We will
show that Problem (\ref{inner-dro}) can be relaxed to the summation of the empirical
loss and a regularizer, where the regularization strength is equal to the size of the
ambiguity set, and the regularizer is defined by the dual norm.

Before we present this result, we start with two definitions and a well-known property. 
\begin{defi}[Dual norm]
	Given a norm $\|\cdot\|$ on $\mbb{R}^d$, the dual norm
	$\|\cdot\|_*$ is defined as: 
	\begin{equation} \label{dnorm}
	\|\btheta\|_* \triangleq \sup_{\|\bz\|\le 1}\btheta'\bz.
	\end{equation}
\end{defi}
It can be shown from (\ref{dnorm}) that for any vectors $\btheta, \bz$,
the following H\"{o}lder's inequality
holds.
\begin{thm} [H\"{o}lder's inequality] \label{holder}
	Suppose we have two scalars $r, s >1$ and $1/r + 1/s =1$. For any two vectors $\btheta = (\theta_1, \ldots, \theta_n)$ and $\bz = (z_1, \ldots, z_n)$, the following holds:
	\begin{equation*}
	\sum_{i=1}^n |\theta_i z_i| \le \|\btheta\|_r \|\bz\|_s.
	\end{equation*}
\end{thm}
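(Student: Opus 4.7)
The plan is to prove H\"older's inequality via the classical two-step route: first establish the pointwise Young's inequality
\[
ab \le \frac{a^r}{r} + \frac{b^s}{s}, \qquad a,b \ge 0,
\]
and then apply it to suitably normalized versions of $|\theta_i|$ and $|z_i|$. Young's inequality itself would be obtained from the concavity of $\log$ (equivalently, the convexity of $\exp$): for $a,b>0$, writing $a^r = e^{u}$ and $b^s = e^{v}$ with weights $1/r$ and $1/s$ (which sum to $1$ by hypothesis), the inequality
\[
\exp\!\left(\tfrac{1}{r} u + \tfrac{1}{s} v\right) \le \tfrac{1}{r} e^{u} + \tfrac{1}{s} e^{v}
\]
yields $ab \le a^r/r + b^s/s$. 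The boundary case $a=0$ or $b=0$ is immediate.

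Next, I would dispose of the trivial case $\|\btheta\|_r = 0$ or $\|\bz\|_s = 0$, in which one vector is the zero vector and both sides of the claimed inequality vanish. Assume therefore that both norms are strictly positive and set
\[
a_i \triangleq \frac{|\theta_i|}{\|\btheta\|_r}, \qquad b_i \triangleq \frac{|z_i|}{\|\bz\|_s}, \qquad i \in \lb n \rb.
\]
By construction $\sum_{i=1}^n a_i^r = 1$ and $\sum_{i=1}^n b_i^s = 1$. Applying Young's inequality coordinatewise gives
\[
a_i b_i \le \frac{a_i^r}{r} + \frac{b_i^s}{s}, \qquad i \in \lb n \rb.
\]

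Summing over $i$ and using the two normalization identities yields
\[
\sum_{i=1}^n a_i b_i \;\le\; \frac{1}{r} \sum_{i=1}^n a_i^r + \frac{1}{s} \sum_{i=1}^n b_i^s \;=\; \frac{1}{r} + \frac{1}{s} \;=\; 1.
\]
Multiplying both sides by $\|\btheta\|_r \|\bz\|_s$ produces $\sum_{i=1}^n |\theta_i z_i| \le \|\btheta\|_r \|\bz\|_s$, which is the desired inequality.

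Since this is a classical and purely algebraic fact, no serious obstacle is anticipated. The only subtle point is ensuring the normalization trick is well-defined, which is handled by treating the zero-vector case separately at the outset. Everything else is a one-line consequence of Young's inequality, which in turn is a direct corollary of the convexity of $\exp$ (or equivalently the AM--GM inequality applied with weights $1/r, 1/s$).
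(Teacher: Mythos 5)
Your proof is correct: the Young's-inequality argument is airtight, the normalization step is handled properly, and the zero-vector case is disposed of at the outset. Note, however, that the paper never actually proves Theorem~\ref{holder}; it states the inequality as a classical fact and merely remarks that it ``can be shown from'' the dual-norm definition in (\ref{dnorm}), i.e., from the variational characterization $\|\btheta\|_* = \sup_{\|\bz\|\le 1}\btheta'\bz$ specialized to the $\ell_r$/$\ell_s$ pair. So your route is genuinely different in flavor: the paper's implicit approach leans on the duality viewpoint (which it reuses later, e.g., in Theorem~\ref{kappa} and Theorem~\ref{dualnorm}), but taken literally it presupposes knowing that the dual of $\|\cdot\|_r$ is $\|\cdot\|_s$ — a fact whose verification requires essentially the inequality being proved. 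Your derivation is fully self-contained, reducing everything to the convexity of $\exp$ (weighted AM--GM), which is the more elementary and complete argument; the paper's framing buys a cleaner conceptual link to the dual-norm machinery used throughout, at the cost of not being a proof at all.
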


\begin{defi}[Conjugate function]
	For a function $h(\bz)$, its convex conjugate $h^*(\cdot)$ is defined as:
	\begin{equation} \label{conj}
	h^*(\btheta) \triangleq
	\sup_{\bz \in \text{dom} \ h}\ \{\btheta'\bz-h(\bz)\},
	\end{equation}
	where $\text{dom} \ h$ denotes the domain of the function $h$. 
\end{defi}
If $h$ is convex, then the convex conjugate of $h^*$ is $h$, and $h$ and $h^*$ are
called convex duals~\citep{rock}. In particular, 
\begin{equation} \label{conj-dual}
h(\bz) = \sup_{\btheta \in \bTheta} [ \btheta' \bz -  h^*(\btheta)], 
\end{equation}
where $\bTheta \triangleq \{ \btheta: h^*(\btheta) < \infty\}$ denotes the effective
domain of the conjugate function $h^*$.

\begin{thm} [\cite{esfahani2018data}]  \label{discrete-dual-convex-thm}
	Suppose the loss function $h(\bz)$ is convex in $\bz\in \scrZ\subseteq
	\mbb{R}^d$, and the set $\scrZ$ is closed and convex. Define an ambiguity set
	around the empirical distribution which is supported on $N$ samples $\bz_i, i \in
	\lb N \rb$, i.e., 
	$$\Omega = \Big\{\mbb{Q}\in \scrP(\scrZ): W_{\|\cdot\|,1}(\mathbb{Q},\ \hat{\mathbb{P}}_N) \le \epsilon \Big\},$$
	where the order-1 Wasserstein metric (\ref{wass_p}) is induced by some norm $\|\cdot\|$.
	Problem (\ref{inner-dro}) can be relaxed to:
	\begin{equation} \label{kk}
	\sup\limits_{\mbb{Q}\in \Omega}\mbb{E}^{\mbb{Q}}[h(\bz)]\le
	\kappa\epsilon+\frac{1}{N}\sum\limits_{i=1}^Nh(\bz_i),
	\end{equation}
	where 
	$$\kappa=\sup\{\|\btheta\|_*: \ h^*(\btheta)<\infty\},$$ 
	where $\|\cdot\|_*$ stands for the dual norm as defined in (\ref{dnorm}), and $h^*(\cdot)$ is the convex conjugate function of $h(\bz)$ as defined in (\ref{conj}). Furthermore, (\ref{kk}) becomes an equality when
	$\scrZ=\mathbb{R}^{d}$.
\end{thm}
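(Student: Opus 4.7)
The plan is to combine the strong-duality reformulation of the worst-case expectation over a Wasserstein ball with empirical center (already established in the preceding corollary, specialized to $t=1$, $s=\|\cdot\|$) with the convex bidual representation of $h$ in order to collapse the inner supremum into a closed form. Concretely, strong duality yields
\[
\sup_{\mbb{Q}\in\Omega}\mbb{E}^{\mbb{Q}}[h(\bz)] = \min_{\lambda\ge 0}\Big\{\lambda\epsilon + \tfrac{1}{N}\sum_{i=1}^N \sup_{\bz\in\scrZ}\big[h(\bz)-\lambda\|\bz-\bz_i\|\big]\Big\},
\]
so to prove the one-sided bound it suffices to evaluate the right-hand side at a single admissible $\lambda$; I would choose $\lambda=\kappa$.

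For this choice, I would substitute the bidual formula $h(\bz)=\sup_{\btheta\in\bTheta}[\btheta'\bz-h^*(\btheta)]$ into each inner supremum, swap the two suprema (valid for any double supremum), and re-center $\bz$ around $\bz_i$:
\[
\sup_{\bz\in\scrZ}\big[h(\bz)-\lambda\|\bz-\bz_i\|\big] = \sup_{\btheta\in\bTheta}\Big\{\btheta'\bz_i - h^*(\btheta) + \sup_{\bz\in\scrZ}\big[\btheta'(\bz-\bz_i) - \lambda\|\bz-\bz_i\|\big]\Big\}.
\]
The workhorse is the dual-norm inequality $\btheta'(\bz-\bz_i)\le\|\btheta\|_*\|\bz-\bz_i\|$: whenever $\lambda\ge\|\btheta\|_*$, the bracketed expression is nonpositive and vanishes at $\bz=\bz_i\in\scrZ$, so the inner sup equals $0$. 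Since $\lambda=\kappa\ge\|\btheta\|_*$ for every $\btheta\in\bTheta$, this holds uniformly in $\btheta$, and the outer supremum collapses to $\sup_{\btheta\in\bTheta}[\btheta'\bz_i - h^*(\btheta)]=h(\bz_i)$. Averaging over $i$ and adding $\kappa\epsilon$ yields the claimed inequality.

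For equality when $\scrZ=\mbb{R}^d$, I would show that any $\lambda<\kappa$ is excluded from the outer minimization. By definition of $\kappa$ there is some $\btheta_0\in\bTheta$ with $\|\btheta_0\|_*>\lambda$, and by definition of the dual norm one can pick a unit vector $\bu$ with $\btheta_0'\bu>\lambda$. Taking $\bz=\bz_i+t\bu$ and sending $t\to+\infty$ drives $\btheta_0'(\bz-\bz_i)-\lambda\|\bz-\bz_i\|$ to $+\infty$, so the inner supremum in the dual is $+\infty$ for any such $\lambda$. Combined with the fact, shown above, that for $\lambda\ge\kappa$ the inner sup equals exactly $h(\bz_i)$, the dual optimum is attained at $\lambda=\kappa$, giving the matching lower bound.

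The most delicate point I anticipate is verifying the hypothesis $\text{GR}_h<\infty$ needed to invoke the strong-duality corollary, and legitimizing the bidual identity $h=h^{**}$ used throughout. Convexity of $h$ together with $\kappa<\infty$ essentially supplies both: any subgradient of $h$ at a base point has dual norm at most $\kappa$, producing the Lipschitz estimate $|h(\bz)-h(\bz_0)|\le\kappa\|\bz-\bz_0\|$ and hence $\text{GR}_h\le\kappa$, while (lower) semicontinuity of the convex $h$ ensures $h^{**}=h$, justifying the substitution inside the supremum.
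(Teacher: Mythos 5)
Your proposal is correct and follows essentially the same route as the paper: invoke the strong-duality reformulation of Corollary~\ref{col-discrete-dual} with $t=1$, substitute the bidual representation $h=h^{**}$ into each inner supremum, and use the fact that $\lambda\ge\|\btheta\|_*$ for all $\btheta\in\bTheta$ forces the recentered term to collapse, yielding $h(\bz_i)$ and hence the bound $\kappa\epsilon+\frac{1}{N}\sum_i h(\bz_i)$. The only substantive differences are cosmetic or supplementary: where the paper writes $-\lambda\|\bz-\bz_i\|=\inf_{\|\br\|_*\le\lambda}\br'(\bz-\bz_i)$ and performs a sup/inf exchange before arguing that the maximization over $\bz\in\mbb{R}^d$ blows up unless $\br=-\btheta$, you evaluate the dual directly at $\lambda=\kappa$ and kill the inner supremum with the dual-norm inequality, which is a slightly more elementary but equivalent step; and you additionally supply the argument for equality when $\scrZ=\mbb{R}^d$ (showing every $\lambda<\kappa$ gives an infinite dual objective, so the dual minimum sits at $\lambda=\kappa$) together with the check that convexity plus $\kappa<\infty$ gives $\text{GR}_h\le\kappa$, both of which the paper's own proof leaves implicit or defers to the cited reference. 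Your closing caveats about needing $h$ closed (lsc) for $h^{**}=h$ and finiteness of $\kappa$ are exactly the right technical points to flag.
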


\begin{proof}
	Corollary \ref{col-discrete-dual} suggests that 
	\begin{equation} \label{discrete-convex-dual}
	\sup\limits_{\mbb{Q}\in \Omega}\mbb{E}^{\mbb{Q}}[h(\bz)] = \displaystyle \min_{\lambda \geq 0} \Big\{ \lambda \epsilon + \frac{1}{N} \sum\limits_{i=1}^N \sup_{\bz \in \scrZ} \big[ h(\bz) - \lambda \|\bz - \bz_i \|\big]\Big\}.
	\end{equation}
	Using (\ref{conj-dual}), we may write the inner maximization in (\ref{discrete-convex-dual}) as:
	\begin{equation} \label{discrete-convex-dual-inner-sup}
	\begin{aligned}
	\sup_{\bz \in \scrZ} \big[ h(\bz) - \lambda \|\bz - \bz_i \|\big] & = \sup_{\bz \in \scrZ} \sup_{\btheta \in \bTheta} \big[ \btheta' \bz -  h^*(\btheta) - \lambda \|\bz - \bz_i \|\big] \\
	& = \sup_{\bz \in \scrZ} \sup_{\btheta \in \bTheta} \inf_{\|\br\|_* \le \lambda} \big[ \btheta' \bz -  h^*(\btheta) + \br' (\bz - \bz_i)\big] \\
	& = \sup_{\btheta \in \bTheta} \inf_{\|\br\|_* \le \lambda} \sup_{\bz \in \scrZ}   \big[ (\btheta + \br)' \bz -  h^*(\btheta) - \br' \bz_i\big] \\
	& \le \sup_{\btheta \in \bTheta} \inf_{\|\br\|_* \le \lambda} \sup_{\bz \in \mbb{R}^d}   \big[ (\btheta + \br)' \bz -  h^*(\btheta) - \br' \bz_i\big],
	\end{aligned}
	\end{equation}
	where the second equality follows from the definition of the dual norm and the third
	equality uses duality. The inner maximization over $\bz \in \mbb{R}^d$
	achieves $\infty$ unless $\br = -\btheta$.

	Note that if $\sup\{\|\btheta\|_* : \btheta \in
	\bTheta\} > \lambda$, then one can pick some $\btheta \in \bTheta$ such that $\|\btheta\|_* > \lambda$, in which case the inner maximization over $\bz \in \mbb{R}^d$ in (\ref{discrete-convex-dual-inner-sup}) achieves $\infty$ since $\br \neq -\btheta$. 
	
	When $\sup\{\|\btheta\|_* : \btheta \in
	\bTheta\}\leq \lambda$, by taking $\br = -\btheta$, we have:
	\begin{equation} \label{discrete-convex-dual-inner-sup-bound}
	\begin{aligned}
	\sup_{\bz \in \scrZ} \big[ h(\bz) - \lambda \|\bz - \bz_i \|\big] & 
	\le  \sup_{\btheta \in \bTheta } \big[ -  h^*(\btheta) + \btheta' \bz_i\big] \\
	& = h(\bz_i).
	\end{aligned}
	\end{equation}
	Plugging (\ref{discrete-convex-dual-inner-sup-bound}) into (\ref{discrete-convex-dual}), we obtain
	\begin{equation*}
	\sup\limits_{\mbb{Q}\in \Omega}\mbb{E}^{\mbb{Q}}[h(\bz)] \le \frac{1}{N} \sum\limits_{i=1}^N h(\bz_i) + \kappa \epsilon,
	\end{equation*}
	where $\kappa=\sup\{\|\btheta\|_*: \ h^*(\btheta)<\infty\}$.
\end{proof}

\section{Finite Sample Performance} \label{sec:finite-sample-perf}
In this section we discuss the finite sample out-of-sample performance of the DRO
estimator. Recall the stochastic optimization problem  defined in (\ref{sto}):
\begin{equation} \label{sto-opt}
J^* \triangleq \displaystyle \inf_{\bbeta} \mbb{E}^{\mbb{P}^*} \big [
h_{\bbeta}(\bz)\big]  = \inf_{\bbeta} \int\nolimits_{\scrZ}
h_{\bbeta}(\bz) \mathrm{d} \mbb{P}^*(\bz). 
\end{equation}
Since the true measure $\mbb{P}^*$ is unknown, Problem (\ref{sto-opt}) is not directly solvable. We solve its DRO counterpart (\ref{dro}) using the available training data $\bz_i, i \in \lb N \rb$, with an effort to implicitly optimize over the true measure that is included in the ambiguity set with high confidence. Suppose $\hat{J}_N$ and $\hat{\bbeta}_N$ are respectively the optimal value and optimal solution to the DRO problem (\ref{dro}), i.e.,
\begin{equation} \label{dro-perf}
\hat{J}_N \triangleq \inf\limits_{\bbeta}\sup\limits_{\mbb{Q}\in \Omega}
\mbb{E}^{\mbb{Q}}\big[ h_{\bbeta}(\bz)\big] = \sup\limits_{\mbb{Q}\in \Omega}
\mbb{E}^{\mbb{Q}}\big[ h_{\hat{\bbeta}_N}(\bz)\big],
\end{equation}
where the ambiguity set is defined as
\begin{equation} \label{omega-perf}
\Omega = \Omega_{\epsilon}(\hat{\mathbb{P}}_N) \triangleq \Big\{\mbb{Q}\in \scrP(\scrZ): W_{\|\cdot\|,1}(\mathbb{Q},\ \hat{\mathbb{P}}_N) \le \epsilon \Big\}.
\end{equation}
To evaluate the quality of the DRO estimator $\hat{\bbeta}_N$, we study its {\em out-of-sample} performance on a new sample $\bz$ drawn from $\mbb{P}^*$,
\begin{equation} \label{out-of-sample-loss-dro}
\mbb{E}^{\mbb{P}^*} \big [ h_{\hat{\bbeta}_N}(\bz)\big].
\end{equation}
We want to investigate whether the {\em out-of-sample} loss (\ref{out-of-sample-loss-dro}) can be meaningfully bounded from above by some {\em certificate}. Specifically, if we can show that with a high probability, the {\em out-of-sample} loss (\ref{out-of-sample-loss-dro}) does not exceed the training loss $\hat{J}_N$,
\begin{equation*}
\mbb{P}^N \Big \{ \mbb{E}^{\mbb{P}^*} \big [ h_{\hat{\bbeta}_N}(\bz)\big] \le \hat{J}_N\Big\} \geq 1-\alpha,
\end{equation*}
where $\alpha \in (0,1)$ is a significance parameter w.r.t. the distribution $\mbb{P}^N$, which governs both $\hat{\bbeta}_N$ and $\hat{J}_N$, then we can claim that $\hat{\bbeta}_N$ generalizes well out-of-sample. The following theorem, which follows directly from the measure concentration Theorem~\ref{measure-con}, establishes the result.

\begin{thm} [\cite{esfahani2018data}, Theorem 3.5]\label{finite-sample-perf}
	Suppose Assumption \ref{light-tail-dist} holds, and $\hat{J}_N$ and $\hat{\bbeta}_N$ are respectively the optimal value and optimal solution to the DRO problem (\ref{dro}) with an ambiguity set specified in (\ref{omega-perf}). Set the radius $\epsilon = \epsilon_N(\alpha)$ as defined in (\ref{radius-FG}), where $\alpha \in (0,1)$. Then we have
	\begin{equation*}
	\mbb{P}^N \Big \{ \mbb{E}^{\mbb{P}^*} \big [ h_{\hat{\bbeta}_N}(\bz)\big] \le \hat{J}_N\Big\} \geq 1-\alpha.
	\end{equation*}
\end{thm}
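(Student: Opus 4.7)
The plan is to view this as a direct consequence of the measure concentration result (Theorem~\ref{measure-con}), combined with the fact that the DRO value is, by construction, a supremum over the ambiguity set. The proof is essentially a ``high-probability containment'' argument: if the true distribution $\mbb{P}^*$ falls inside the Wasserstein ball with probability at least $1-\alpha$, then on that event the DRO objective evaluated at $\hat{\bbeta}_N$ upper-bounds the expected loss under $\mbb{P}^*$.

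More concretely, I would proceed in three steps. First, I would invoke Theorem~\ref{measure-con} with the prescribed confidence level $\alpha$ and with $\epsilon = \epsilon_N(\alpha)$ taken from (\ref{radius-FG}). By construction of $\epsilon_N(\alpha)$ (which was obtained precisely by equating the RHS of (\ref{measure-eqn}) with $\alpha$ and inverting), this yields
\[
\mbb{P}^N\Bigl\{ W_{\|\cdot\|,1}(\mbb{P}^*,\hat{\mbb{P}}_N) \le \epsilon_N(\alpha)\Bigr\} \ \ge\ 1-\alpha.
\]
Let $\scrE$ denote the event inside the probability. Second, I would observe that on $\scrE$ we have $\mbb{P}^* \in \Omega_{\epsilon_N(\alpha)}(\hat{\mbb{P}}_N)$, exactly the ambiguity set $\Omega$ used to define $\hat{J}_N$ in (\ref{dro-perf})--(\ref{omega-perf}). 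Third, from the definition of $\hat{J}_N$ as a worst-case expectation over $\mbb{Q}\in\Omega$, and the fact that $\mbb{P}^*\in\Omega$ on $\scrE$, I would deduce
\[
\mbb{E}^{\mbb{P}^*}\bigl[h_{\hat{\bbeta}_N}(\bz)\bigr] \ \le\ \sup_{\mbb{Q}\in\Omega} \mbb{E}^{\mbb{Q}}\bigl[h_{\hat{\bbeta}_N}(\bz)\bigr] \ =\ \hat{J}_N \qquad \text{on } \scrE.
\]
Combining with the first step gives the claim.

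There is no genuine obstacle here; the main subtlety to be careful about is the fact that $\hat{\bbeta}_N$ and $\hat{J}_N$ are themselves random (both measurable with respect to the training sample $\bz_1,\dots,\bz_N\sim\mbb{P}^N$), so the inequality on the event $\scrE$ really is an inequality between two random variables, and the probability statement passes through without needing any extra argument. One should also note that the light-tail Assumption~\ref{light-tail-dist} is only used to invoke Theorem~\ref{measure-con}; the rest of the argument is distribution-free and does not rely on convexity or other structure of $h_{\bbeta}$. This makes the proof essentially a one-line corollary once the correct radius $\epsilon_N(\alpha)$ is plugged in.
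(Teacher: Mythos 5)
Your proposal is correct and is essentially the same argument as the paper's own proof: invoke the measure concentration bound with $\epsilon = \epsilon_N(\alpha)$ to get $\mbb{P}^N\{\mbb{P}^*\in\Omega_{\epsilon_N(\alpha)}(\hat{\mbb{P}}_N)\}\ge 1-\alpha$, and on that event bound $\mbb{E}^{\mbb{P}^*}[h_{\hat{\bbeta}_N}(\bz)]$ by the worst-case value $\hat{J}_N$. The paper states this more tersely, but the chain of inequalities is identical, and your remark about the randomness of $\hat{\bbeta}_N$, $\hat{J}_N$ is a correct (if implicit in the paper) observation.
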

\begin{proof}
	The claim follows immediately from the measure concentration result presented in
	Theorem \ref{measure-con}, which establishes that
	\begin{equation*}
	\mbb{P}^N \Big( W_{\|\cdot\|, 1} (\mbb{P}^*, \hat{\mbb{P}}_N) \ge \epsilon_N(\alpha)\Big) \le \alpha,
	\end{equation*}
	and therefore,
	\begin{equation*}
	\mbb{P}^N \Big \{ \mbb{E}^{\mbb{P}^*} \big [ h_{\hat{\bbeta}_N}(\bz)\big] \leq \hat{J}_N\Big\} \geq \mbb{P}^N \Big \{ \mbb{P}^*\in \Omega_{\epsilon_N(\alpha)}(\hat{\mbb{P}}_N) \Big\} \geq 1-\alpha.
	\end{equation*}
\end{proof}
Note that Theorem \ref{finite-sample-perf} establishes the out-of-sample performance of the DRO estimator for an order-1 Wasserstein ambiguity set. For a general Wasserstein metric with order $t>1$, please refer to \cite{Four14} for a general measure concentration result.

\section{Asymptotic Consistency} \label{sec:asymp-perf}

In addition to the finite sample result established in
Section~\ref{sec:finite-sample-perf}, we are also interested in the asymptotic
behavior of 
$\hat{J}_N$ and $\hat{\bbeta}_N$, as the sample size $N$ goes to infinity. We want to
establish that, if the significance level $\alpha = \alpha_N$ converges to zero at a
carefully chosen rate, then the optimal value and solution of the DRO problem
(\ref{dro}) with an ambiguity set of size $\epsilon = \epsilon_N (\alpha_N)$,
converge to the optimal value and solution of the original stochastic optimization
problem (\ref{sto-opt}), respectively. The following Theorem~\ref{asym-dro-perf}
formalizes this statement.

\begin{thm} [\cite{esfahani2018data}, Theorem 3.6]\label{asym-dro-perf}
	Suppose Assumption \ref{light-tail-dist} holds and the significance parameter $\alpha_N \in (0,1)$ satisfies 
	\begin{itemize}
		\item $\sum_{N=1}^{\infty} \alpha_N < \infty$;
		\item $\lim_{N \rightarrow \infty} \epsilon_N(\alpha_N) = 0$.
	\end{itemize}
	Assume the loss function $h_{\bbeta}(\bz)$ is Lipschitz continuous in $\bz$ with
	a Lipschitz constant $L_{\bbeta}$. Denote by $\hat{J}_N$ and $\hat{\bbeta}_N$ the
	optimal value and optimal solution to the DRO problem (\ref{dro}), respectively,
	with an ambiguity set specified in (\ref{omega-perf}) with $\epsilon = \epsilon_N
	(\alpha_N)$, where $\epsilon_N (\alpha_N)$ is defined in (\ref{radius-FG}), and
	$J^*$ is the optimal value of the original stochastic optimization problem
	(\ref{sto-opt}). Then, 
	
	\noindent $(i)$ $\hat{J}_N$ converges to $J^*$ a.s.,
	\begin{equation*}
	\mbb{P}^{\infty} \Big \{ \limsup_{N \rightarrow \infty} \hat{J}_N = J^*\Big\} = 1.
	\end{equation*}
	
	\noindent $(ii)$ If $h_{\bbeta}(\bz)$ is lower semicontinuous in $\bbeta$ for every
	$\bz \in \scrZ$, and \linebreak[4] $\lim_{N \rightarrow \infty} \hat{\bbeta}_N = \bbeta_0$, then $\bbeta_0$ is $\mbb{P}^{\infty}$-almost surely an optimal solution to (\ref{sto-opt}).
\end{thm}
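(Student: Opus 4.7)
My plan is to combine three ingredients already at hand: (a) the measure concentration bound of Theorem~\ref{measure-con}, which gives $\mbb{P}^N\{\mbb{P}^*\notin\Omega_{\epsilon_N(\alpha_N)}(\hat{\mbb{P}}_N)\}\le \alpha_N$; (b) the Lipschitz--Wasserstein estimate of Theorem~\ref{Lip-wass}, which yields $|\mbb{E}^{\mbb{Q}}[h_{\bbeta}(\bz)]-\mbb{E}^{\mbb{Q}'}[h_{\bbeta}(\bz)]|\le L_{\bbeta}\,W_{\|\cdot\|,1}(\mbb{Q},\mbb{Q}')$ for a Lipschitz loss; and (c) the Borel--Cantelli lemma, which together with $\sum_N \alpha_N<\infty$ implies that, $\mbb{P}^{\infty}$-almost surely, $\mbb{P}^*\in \Omega_{\epsilon_N(\alpha_N)}(\hat{\mbb{P}}_N)$ for all sufficiently large $N$. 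I will work on this full-measure event throughout.

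\textbf{Part (i).} I would sandwich $\hat{J}_N$ between two quantities converging to $J^*$. For the lower inequality, on the event $\{\mbb{P}^*\in\Omega\}$ the inner supremum in (\ref{dro-perf}) dominates $\mbb{E}^{\mbb{P}^*}[h_{\bbeta}(\bz)]$ for every $\bbeta$, so $\hat{J}_N\ge J^*$, giving $\liminf_N\hat{J}_N\ge J^*$ almost surely. For the matching upper bound, fix $\eta>0$ and pick an $\eta$-minimizer $\bbeta^{\eta}$ of (\ref{sto-opt}) with $\mbb{E}^{\mbb{P}^*}[h_{\bbeta^{\eta}}(\bz)]\le J^*+\eta$. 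Inserting $\bbeta^{\eta}$ into the outer infimum and invoking Theorem~\ref{Lip-wass} uniformly over $\mbb{Q}\in\Omega$ yields
\[
\hat{J}_N \le \sup_{\mbb{Q}\in \Omega}\mbb{E}^{\mbb{Q}}[h_{\bbeta^{\eta}}(\bz)] \le \mbb{E}^{\hat{\mbb{P}}_N}[h_{\bbeta^{\eta}}(\bz)] + L_{\bbeta^{\eta}}\,\epsilon_N(\alpha_N).
\]
The empirical average converges to $\mbb{E}^{\mbb{P}^*}[h_{\bbeta^{\eta}}(\bz)]\le J^*+\eta$ almost surely by the strong law of large numbers, while $\epsilon_N(\alpha_N)\to 0$ by hypothesis. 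Intersecting the resulting almost-sure events over a countable sequence $\eta_k\downarrow 0$ preserves full measure and gives $\limsup_N\hat{J}_N\le J^*$ almost surely, proving (i).

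\textbf{Part (ii).} Suppose $\hat{\bbeta}_N\to\bbeta_0$. On the almost-sure Borel--Cantelli event, for all sufficiently large $N$,
\[
\mbb{E}^{\mbb{P}^*}[h_{\hat{\bbeta}_N}(\bz)] \le \sup_{\mbb{Q}\in \Omega}\mbb{E}^{\mbb{Q}}[h_{\hat{\bbeta}_N}(\bz)] = \hat{J}_N.
\]
Lower semicontinuity of $h_{\bbeta}$ in $\bbeta$ at $\bbeta_0$ gives $h_{\bbeta_0}(\bz)\le\liminf_N h_{\hat{\bbeta}_N}(\bz)$ pointwise in $\bz$. Applying Fatou's lemma (after subtracting an integrable lower envelope, as discussed below) together with part (i) yields
\[
\mbb{E}^{\mbb{P}^*}[h_{\bbeta_0}(\bz)] \le \liminf_N \mbb{E}^{\mbb{P}^*}[h_{\hat{\bbeta}_N}(\bz)] \le \liminf_N \hat{J}_N = J^*,
\]
so $\bbeta_0$ attains the infimum of (\ref{sto-opt}) almost surely.

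\textbf{Main obstacle.} The principal technical care is the integrability input needed to invoke the strong law of large numbers in (i) and Fatou's lemma in (ii). Lipschitz continuity in $\bz$ combined with the light-tail Assumption~\ref{light-tail-dist} supplies $\mbb{E}^{\mbb{P}^*}|h_{\bbeta}(\bz)|<\infty$ for each fixed $\bbeta$ via the bound $|h_{\bbeta}(\bz)|\le |h_{\bbeta}(\bz_0)|+L_{\bbeta}\|\bz-\bz_0\|$, which suffices for the SLLN step. For (ii) I need a common $\mbb{P}^*$-integrable envelope for the sequence $\{h_{\hat{\bbeta}_N}\}$; since $\hat{\bbeta}_N$ is convergent and therefore lies in a bounded set, provided the maps $\bbeta\mapsto L_{\bbeta}$ and $\bbeta\mapsto h_{\bbeta}(\bz_0)$ are locally bounded one obtains a uniform envelope of the form $C(1+\|\bz\|)$ with finite $C$. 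Verifying this local boundedness (or, equivalently, local uniform integrability) is the only non-routine step in the argument.
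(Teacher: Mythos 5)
Your proposal is correct, and parts of it coincide with the paper's argument (the lower bound $\liminf_N \hat{J}_N \ge J^*$ via measure concentration plus Borel--Cantelli, and the Fatou/lower-semicontinuity chain in part (ii) are exactly the paper's steps). Where you genuinely diverge is the upper bound in part (i): the paper fixes an exact minimizer $\bbeta^*$ of (\ref{sto-opt}), lets $\hat{\mbb{Q}}_N$ be the worst-case distribution for $\bbeta^*$, bounds $\hat{J}_N \le J^* + L_{\bbeta^*} W_{\|\cdot\|,1}(\mbb{P}^*,\hat{\mbb{Q}}_N)$ via Theorem~\ref{Lip-wass}, and then uses the triangle inequality together with a second application of Theorem~\ref{measure-con} and Borel--Cantelli to force $W_{\|\cdot\|,1}(\mbb{P}^*,\hat{\mbb{Q}}_N)\to 0$ almost surely. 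You instead plug in an $\eta$-near-minimizer $\bbeta^{\eta}$, use Theorem~\ref{Lip-wass} only between $\mbb{Q}\in\Omega$ and the center $\hat{\mbb{P}}_N$ (distance at most $\epsilon_N(\alpha_N)$), and invoke the strong law of large numbers to pass from the empirical to the true expectation, finally sending $\eta_k\downarrow 0$ along a countable sequence. Your route buys two things: it does not presuppose existence of an exact optimizer $\bbeta^*$ of the stochastic problem, and for the upper bound it needs only $\epsilon_N(\alpha_N)\to 0$ plus integrability of the loss (which Lipschitzness and Assumption~\ref{light-tail-dist} provide), whereas the paper's route re-uses the concentration machinery and, as a by-product, controls the distance between the worst-case distribution and $\mbb{P}^*$. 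Your explicit flagging of the integrable envelope needed to legitimize Fatou's lemma in part (ii) (a lower bound of the form $-C(1+\|\bz\|)$, obtained from local boundedness of $\bbeta\mapsto L_{\bbeta}$ and $\bbeta\mapsto h_{\bbeta}(\bz_0)$ on a neighborhood of $\bbeta_0$) addresses a point the paper's proof applies silently; it is a mild regularity condition, not a flaw in your argument.
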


\begin{proof}
	$(i)$ Theorem \ref{finite-sample-perf} implies that
	\begin{equation} \label{consist-proof-out-of-perf}
	\mbb{P}^N \Big \{ J^* \leq \mbb{E}^{\mbb{P}^*} \big [ h_{\hat{\bbeta}_N}(\bz)\big] \le \hat{J}_N\Big\} \geq \mbb{P}^N \Big \{ \mbb{P}^*\in \Omega_{\epsilon_N(\alpha_N)}(\hat{\mbb{P}}_N) \Big\} \geq 1-\alpha_N.
	\end{equation}
	As $\sum_{N=1}^{\infty} \alpha_N < \infty$, the Borel-Cantelli Lemma \citep{borel1909probabilites,francesco1917cantelli} implies that,
	\begin{equation*}
	\mbb{P}^{\infty} \Big \{ \limsup_{N \rightarrow \infty} \hat{J}_N \geq J^* \Big\} = 1.
	\end{equation*}
	It remains to show that 
	\begin{equation} \label{jN-less-than-jstar}
	\mbb{P}^{\infty} \Big \{ \limsup_{N \rightarrow \infty} \hat{J}_N \leq J^* \Big\} = 1.
	\end{equation}
	Let $\hat{\mbb{Q}}_N \in \Omega_{\epsilon_N(\alpha_N)}(\hat{\mbb{P}}_N)$ be the
	optimal solution to the inner supremum (\ref{inner-dro}) corresponding to $\bbeta
	= \bbeta^*$, where $\bbeta^*$ is the optimal solution to (\ref{sto-opt}). Then, 
	\begin{equation*}
	\mbb{E}^{\hat{\mbb{Q}}_N}\big[ h_{\bbeta^*}(\bz)\big] = \sup\nolimits_{\mbb{Q}\in \Omega_{\epsilon_N(\alpha_N)}(\hat{\mbb{P}}_N)} \mbb{E}^{\mbb{Q}}\big[ h_{\bbeta^*}(\bz)\big] .
	\end{equation*}
	According to Theorem \ref{Lip-wass}, and due to the Lipschitz continuity of $h_{\bbeta}(\bz)$, we know that
	\begin{equation*} 
	\begin{aligned}
	& \ \Bigl|\mbb{E}^{\mbb{Q}_1}\big[ h_{\bbeta}(\bz)\big] -  \mbb{E}^{\mbb{Q}_2}\big[ h_{\bbeta}(\bz)\big]\Bigr| 
	\leq \ L_{\bbeta} W_{\|\cdot\|,1}(\mbb{Q}_1, \mbb{Q}_2).
	\end{aligned}
	\end{equation*}
	Then,
	\begin{equation*}
	\begin{aligned}
	\hat{J}_N & \le \sup\nolimits_{\mbb{Q} \in \Omega_{\epsilon_N(\alpha_N)}(\hat{\mbb{P}}_N)} \mbb{E}^{\mbb{Q}} [h_{\bbeta^*}(\bz)]\\
	& = \mbb{E}^{\hat{\mbb{Q}}_N} [h_{\bbeta^*}(\bz)] \\
	& \le \mbb{E}^{\mbb{P}^*} [h_{\bbeta^*}(\bz)] + L_{\bbeta^*} W_{\|\cdot\|,1}(\mbb{P}^*, \hat{\mbb{Q}}_N) \\
	& = J^* + L_{\bbeta^*} W_{\|\cdot\|,1}(\mbb{P}^*, \hat{\mbb{Q}}_N),
	\end{aligned}
	\end{equation*}
	where the first step is due to the feasibility of $\bbeta^*$ to (\ref{dro-perf}), and the third step is due to Theorem~\ref{Lip-wass}. In order to prove (\ref{jN-less-than-jstar}), we only need to show that 
	\begin{equation} \label{pstar-qn-zero}
	\mbb{P}^{\infty} \Big \{ \limsup_{N \rightarrow \infty} W_{\|\cdot\|,1}(\mbb{P}^*, \hat{\mbb{Q}}_N) = 0\Big\} = 1.
	\end{equation}
	The triangle inequality of the Wasserstein metric (cf. Theorem~\ref{thm:wmetric})
	ensures that 
	\begin{equation*}
	\begin{aligned}
	W_{\|\cdot\|,1}(\mbb{P}^*, \hat{\mbb{Q}}_N) & \le W_{\|\cdot\|,1}(\mbb{P}^*, \hat{\mbb{P}}_N) + W_{\|\cdot\|,1}(\hat{\mbb{P}}_N, \hat{\mbb{Q}}_N) \\
	& \le W_{\|\cdot\|,1}(\mbb{P}^*, \hat{\mbb{P}}_N) + \epsilon_N(\alpha_N).
	\end{aligned}
	\end{equation*}
	From Theorem \ref{measure-con} we know that 
	\begin{equation*}
	\mbb{P}^N \Big( W_{\|\cdot\|, 1} (\mbb{P}^*, \hat{\mbb{P}}_N) \le \epsilon_N(\alpha_N) \Big) \geq 1 - \alpha_N.
	\end{equation*}
	Therefore, by the Borel-Cantelli Lemma~\citep{francesco1917cantelli},
	\begin{equation*}
	\mbb{P}^{\infty} \bigg( \limsup_{N \rightarrow \infty} \Big \{ W_{\|\cdot\|, 1} (\mbb{P}^*, \hat{\mbb{P}}_N) \le \epsilon_N(\alpha_N) \Big\} \bigg) = 1.
	\end{equation*}
	Since $\lim_{N \rightarrow \infty} \epsilon_N(\alpha_N) = 0$, (\ref{pstar-qn-zero}) follows.
	
	\noindent $(ii)$ We need to show that $\bbeta_0$ achieves the optimal value of (\ref{sto-opt}), i.e.,
	$$\mbb{E}^{\mbb{P}^*} [h_{\bbeta_0}(\bz)] = J^*.$$
	Note that,
	\begin{equation*}
	\begin{aligned}
	J^* & \leq \mbb{E}^{\mbb{P}^*} [h_{\bbeta_0}(\bz)] \\
	& \leq \mbb{E}^{\mbb{P}^*} \big[ \liminf_{N \rightarrow \infty} h_{\hat{\bbeta}_N}(\bz)\big] \\
	& \leq \liminf_{N \rightarrow \infty} \mbb{E}^{\mbb{P}^*} \big[ h_{\hat{\bbeta}_N}(\bz)\big] \\
	& \leq \limsup_{N \rightarrow \infty} \mbb{E}^{\mbb{P}^*} \big[ h_{\hat{\bbeta}_N}(\bz)\big] \\
	& \leq \limsup_{N \rightarrow \infty} \hat{J}_N \\
	& = J^*,
	\end{aligned}
	\end{equation*}
	where the first inequality is due to the feasibility of $\bbeta_0$ to (\ref{sto-opt}), the second inequality follows from the lower semicontinuity of $h$ in $\bbeta$, the third inequality is due to Fatou's lemma, and the fifth inequality holds $\mbb{P}^{\infty}$-almost surely due to (\ref{consist-proof-out-of-perf}). We thus conclude that $\hat{\bbeta}_N$ converges to the optimal solution of (\ref{sto-opt}) a.s.
\end{proof}

\chapter{Distributionally Robust Linear Regression} \label{chapt:dro}
In this section, we introduce the Wasserstein DRO formulation for linear
regression. The focus is to estimate a robustified linear regression plane that is
immunized against potential outliers in the data. Classical approaches, such as
robust regression \citep{huber1964robust,huber1973robust}, remedy this problem by
fitting a weighted least squares that downweights the contribution of atypical data
points. By contrast, the DRO approach mitigates the impact of outliers through
hedging against a family of distributions on the observed data, some of which assign
very low probabilities to the outliers.

\section{The Problem and Related Work}
Consider a linear regression model with response $y \in \mbb{R}$, predictor vector $\bx \in \mbb{R}^{p}$, regression coefficient
$\bbeta^* \in \mbb{R}^{p}$, and error $\eta \in \mbb{R}$:
\begin{equation*}
y = \bx' \bbeta^* + \eta.
\end{equation*}
Given potentially corrupted samples $(\bx_i, y_i), i \in \lb N \rb$, we are interested in obtaining an estimator of $\bbeta^*$ that is robust with respect to the perturbations in the data.
Popular robust estimators include:
\begin{itemize}
	\item {\em Least Absolute Deviation (LAD)}, which minimizes the sum of absolute residuals $\sum_{i=1}^N |y_i - \bx_i' \bbeta|$, and
	\item  M-estimation \citep{huber1964robust, huber1973robust}, which minimizes a symmetric loss function $\rho(\cdot)$ of the residuals in the form $\sum_{i=1}^N \rho(y_i - \bx_i' \bbeta)$, downweighting the influence of samples with large absolute residuals.
\end{itemize}
Several choices for $\rho(\cdot)$ include the Huber function \citep{huber1964robust, huber1973robust}, the Tukey's Biweight function \citep{PJ05}, the logistic function \citep{coleman1980system}, the Talwar function \citep{hinich1975simple}, and the Fair function \citep{fair1974robust}.

Both LAD and M-estimation are not resistant to large deviations in the predictors. For contamination present in the predictor space, high breakdown value methods are required. The breakdown value is the smallest proportion of observations in the dataset that need to be replaced to carry the estimate arbitrarily far away. Examples of high breakdown value methods include the {\em Least Median of Squares (LMS)} \citep{rousseeuw1984least}, which minimizes the median of the absolute residuals, the {\em Least Trimmed Squares (LTS)} \citep{rousseeuw1985multivariate}, which minimizes the sum of the $q$ smallest squared residuals, and S-estimation \citep{rousseeuw1984robust}, which has a higher statistical efficiency than LTS with the same breakdown value. A combination of the high breakdown value method and M-estimation is the MM-estimation \citep{yohai1987high}. It has a higher statistical efficiency than S-estimation. We refer the reader to the book of \cite{PJ05} for an elaborate description of these robust regression methods.

The aforementioned robust estimation procedures focus on modifying the objective
function in a heuristic way with the intent of minimizing the effect of outliers. A
more rigorous line of research explores the underlying stochastic optimization problem that leads to the sample-based estimation procedures. For example, the OLS objective can be viewed as minimizing the expected squared residual under the uniform empirical distribution over the samples. It has been well recognized that optimizing under the empirical distribution yields estimators that are sensitive to perturbations in the data and suffer from overfitting. Instead of equally weighting all the samples as in the empirical distribution, one may wish to include more informative distributions that ``drive out'' the corrupted samples. DRO realizes this through hedging the expected loss against a family of
distributions that includes the true data-generating mechanism with high confidence (cf. Theorem~\ref{measure-con}). Compared to the single distribution-based stochastic optimization, DRO often results in better
out-of-sample performance due to its distributional robustness.

We consider a DRO problem with an ambiguity set containing
distributions that are close to the discrete empirical distribution in
the sense of Wasserstein distance. We adopt the absolute residual loss $|y - \bx' \bbeta|$ for the purpose of enhancing robustness.
By exploiting duality, we relax the Wasserstein DRO formulation to a convex optimization problem which encompasses a
class of regularized regression models, providing new insights into the regularizer, and establishing the connection between the amount of
ambiguity allowed and a regularization penalty term. We provide justifications for the $\ell_1$-loss based DRO learning by establishing novel performance guarantees on both the out-of-sample loss (prediction bias)
and the discrepancy between the estimated
and the true regression coefficients (estimation bias). 
Extensive numerical
results demonstrate the superiority of the DRO model to a host of regression models, in terms of the prediction and estimation accuracies. We also consider the application of the DRO model to outlier detection, and show that it achieves a much higher AUC (Area Under the ROC Curve) than M-estimation \citep{huber1964robust,huber1973robust}.

The rest of this section is organized as follows. In Section~\ref{sec:2-2}, we
introduce the Wasserstein DRO
formulation in a linear regression setting. Section~\ref{sec:2-3}
establishes performance guarantees for the solution to DRO relaxation. The numerical results on the performance of DRO regression are presented in
Section~\ref{sec:2-4}. An application of DRO regression to outlier detection is discussed in Section~\ref{sec:2-5}. We conclude in Section~\ref{sec:2-6}.

\section{The Wasserstein DRO Formulation for Linear Regression} \label{sec:2-2}
We consider an $\ell_1$-loss function $h_{\bbeta}(\bx, y) \triangleq |y - \bx'\bbeta|$, motivated by the observation that the absolute loss function is more forgiving (hence, robust) to large residuals than the squared loss (see Fig.~\ref{absloss}). The Wasserstein DRO problem using the $\ell_1$-loss function is formulated as:
\begin{equation} \label{dro-lr}
\inf\limits_{\bbeta}\sup\limits_{\mbb{Q}\in \Omega}
\mbb{E}^{\mbb{Q}}\big[ |y-\bx'\bbeta|\big], 
\end{equation}
where $\Omega$ is
defined as:
\begin{equation*}
\Omega = \Omega_{\epsilon}^{s,t}(\hat{\mathbb{P}}_N) \triangleq \{\mbb{Q}\in \scrP(\scrZ): W_{s,t}(\mathbb{Q},\ \hat{\mathbb{P}}_N) \le \epsilon\},
\end{equation*}
and $W_{s,t}(\mbb{Q},\ \hat{\mbb{P}}_N)$ is the order-$t$ Wasserstein distance
between $\mbb{Q}$ and $\hat{\mbb{P}}_N$ under a distance metric $s$ (see definition in (\ref{wass_p})), with $\hat{\mbb{P}}_N$ the uniform empirical distribution over $N$ samples. The formulation
in (\ref{dro-lr}) is robust since it minimizes over the regression
coefficients the worst case expected loss, that is, the expected loss
maximized over all probability distributions in the ambiguity set
$\Omega$.
\begin{figure}[h]
	\centering
	\includegraphics[height = 2.1in]{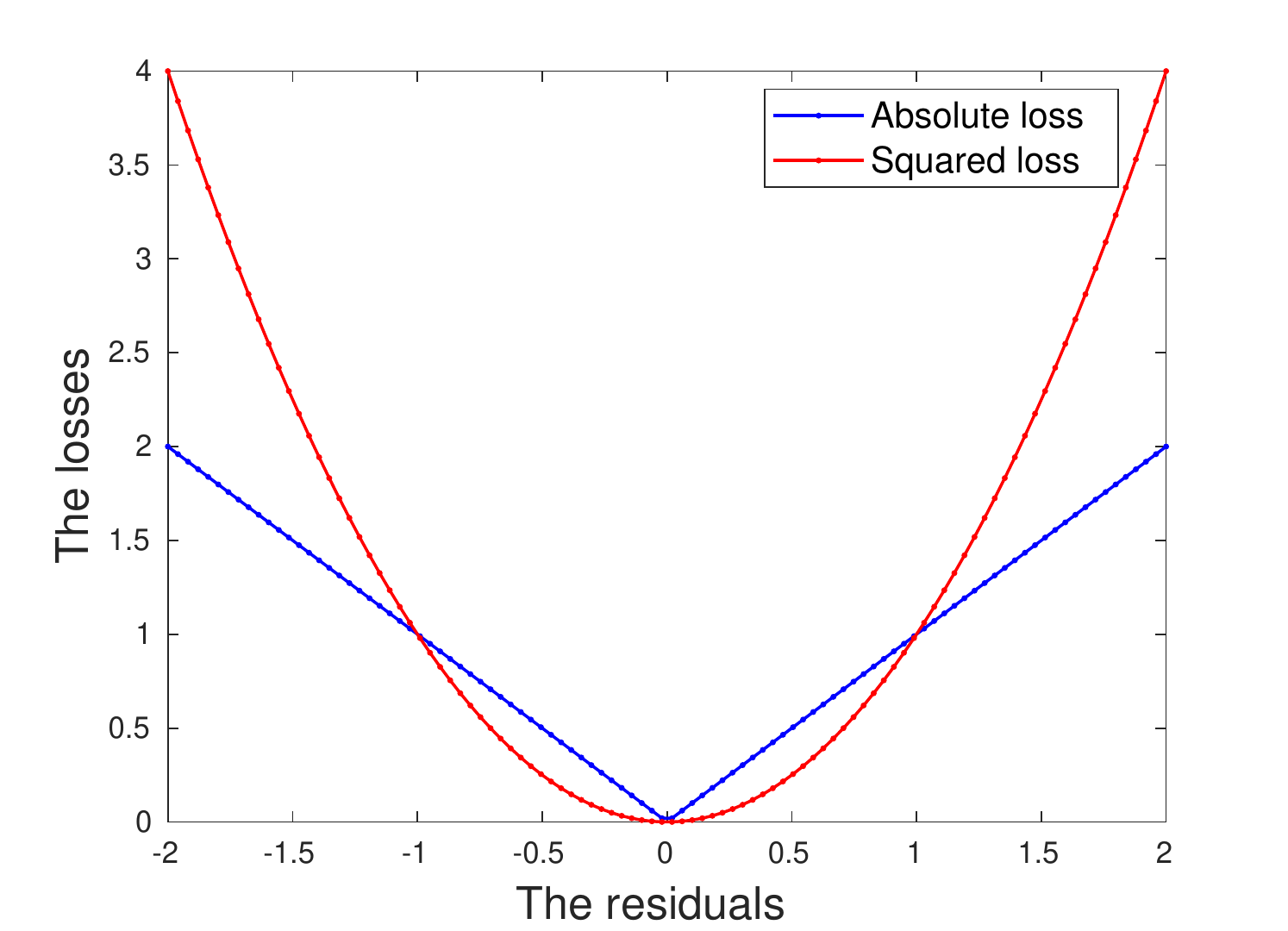}
	\caption{The comparison between $\ell_1$ and $\ell_2$ loss functions.}
	\label{absloss}
\end{figure}

We first decide an appropriate order $t$ for the Wasserstein metric. Based on the discussion in Section~\ref{sec:dual-solver}, it is required that the loss function $h$ has a finite growth rate. Assuming that the metric $s$ is induced by some norm $\|\cdot\|$, the bounded growth rate requirement is expressed as follows:
\begin{equation} \label{gr}
\begin{split}
&  \limsup_{\|(\bx_1, y_1)-(\bx_2, y_2)\| \rightarrow \infty} \frac{|h_{\bbeta}(\bx_1, y_1) - h_{\bbeta}(\bx_2, y_2)|}{\|(\bx_1, y_1)-(\bx_2, y_2)\|^t} \\
\le & \limsup_{\|(\bx_1, y_1)-(\bx_2, y_2)\| \rightarrow \infty} \frac{|y_1 - \bx_1' \bbeta - (y_2 - \bx_2' \bbeta)|}{\|(\bx_1, y_1)-(\bx_2, y_2)\|^t} \\
\le & \limsup_{\|(\bx_1, y_1)-(\bx_2, y_2)\| \rightarrow \infty} \frac{\|(\bx_1, y_1)-(\bx_2, y_2)\| \|(-\bbeta, 1)\|_*}{\|(\bx_1, y_1)-(\bx_2, y_2)\|^t} \\
< & \ \infty,\\
\end{split}
\end{equation}
where $\|\cdot\|_*$ is the dual norm of $\|\cdot\|$, and the second inequality is due
to H\"{o}lder's inequality (cf. Theorem~\ref{holder}). Notice that by taking $t=1$,
(\ref{gr}) is equivalently translated into the condition that $\|(-\bbeta, 1)\|_* <
\infty$, which,  as we will see in Section~\ref{sec:2-3}, is an essential requirement to
guarantee a good generalization performance for the Wasserstein DRO estimator. The
growth rate essentially reveals the underlying metric space used by the Wasserstein
distance. Taking $t>1$ leads to zero growth rate in the limit of (\ref{gr}), which is
not desirable since it removes the Wasserstein ball structure from the formulation
and renders it an optimization problem over a singleton distribution. We thus choose
the order-$1$ Wasserstein metric with $s$ being induced by some norm $\|\cdot\|$ to
define our DRO problem.

Next, we will discuss how to convert (\ref{dro-lr}) into a tractable formulation. Suppose we have $N$ independently and identically distributed
realizations of $(\bx, y)$, denoted by $(\bx_i,y_i), i \in \lb N \rb$. 
Since the loss function is convex in $(\bx, y)$, using the result in Section~\ref{sec:solver-special-case}, the inner supremum of (\ref{dro-lr}) can be relaxed
to the right hand side of (\ref{kk}).  In Theorem~\ref{kappa}, we compute the value of $\kappa$ in (\ref{kk}) for the specific $\ell_1$ loss function we use. 

\begin{thm} \label{kappa} Define $\kappa(\bbeta) = \sup\{\|\btheta\|_*:
	h_{\bbeta}^*(\btheta)<\infty\}$, where $\|\cdot\|_*$ is the dual norm
	of $\|\cdot\|$, and $h_{\bbeta}^*(\cdot)$ is the conjugate
	function of $h_{\bbeta}(\cdot)$.  When the
	loss function is $h_{\bbeta}(\bx, y) = |y-\bx'\bbeta|$, we have $\kappa(\bbeta) =
	\|(-\bbeta, 1)\|_*$.
\end{thm}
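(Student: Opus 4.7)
The plan is to compute (or at least characterize) the effective domain of the convex conjugate $h_{\bbeta}^*$ and then maximize $\|\btheta\|_*$ over that set. The key observation is to rewrite $h_{\bbeta}(\bx,y) = |y - \bx'\bbeta|$ in the compact form $h_{\bbeta}(\bz) = |\bv'\bz|$, where $\bz = (\bx,y) \in \mbb{R}^{p+1}$ and $\bv \triangleq (-\bbeta,1) \in \mbb{R}^{p+1}$. With this notation the conjugate becomes
\[
h_{\bbeta}^*(\btheta) \;=\; \sup_{\bz \in \mbb{R}^{p+1}} \big\{\btheta'\bz - |\bv'\bz|\big\},
\]
and everything reduces to understanding the set $\bTheta \triangleq \{\btheta : h_{\bbeta}^*(\btheta) < \infty\}$.

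First I would use a positive-homogeneity/scaling argument: since $h_{\bbeta}(t\bz) = |t|\, h_{\bbeta}(\bz)$, for any fixed $\bz_0$ the map $t \mapsto t\,\btheta'\bz_0 - |t|\,|\bv'\bz_0|$ must stay bounded as $t \to \pm\infty$, which is easily seen to force $|\btheta'\bz_0| \le |\bv'\bz_0|$. Letting $\bz_0$ range over $\mbb{R}^{p+1}$, this shows that $\btheta \in \bTheta$ only if $|\btheta'\bz| \le |\bv'\bz|$ for every $\bz$. Conversely, the same inequality yields $h_{\bbeta}^*(\btheta) = 0$ (the supremum is attained at $\bz = \mathbf{0}$), so membership in $\bTheta$ is exactly characterized by this pointwise domination.

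Next I would deduce that the pointwise domination forces $\btheta$ to be a scalar multiple of $\bv$. Indeed, choosing any $\bz$ with $\bv'\bz = 0$ gives $\btheta'\bz = 0$, so $\btheta$ vanishes on the hyperplane $\{\bv\}^{\perp}$, hence $\btheta = c\,\bv$ for some $c \in \mbb{R}$. Plugging back, the inequality $|c|\,|\bv'\bz| \le |\bv'\bz|$ (true for all $\bz$, and in particular some $\bz$ with $\bv'\bz \ne 0$) forces $|c| \le 1$. Thus
\[
\bTheta \;=\; \{c\,\bv : c \in [-1,1]\}.
\]

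Finally, by absolute homogeneity of any norm, $\|c\bv\|_* = |c|\,\|\bv\|_*$, so
\[
\kappa(\bbeta) \;=\; \sup_{\btheta \in \bTheta} \|\btheta\|_* \;=\; \sup_{|c|\le 1} |c|\,\|\bv\|_* \;=\; \|\bv\|_* \;=\; \|(-\bbeta,1)\|_*,
\]
which is the claimed identity. The main subtlety (rather than a true obstacle) is the step deducing $\btheta \parallel \bv$ from $|\btheta'\bz|\le |\bv'\bz|$; it is essentially linear algebra but must be spelled out, since it is the only place where the specific (one-dimensional) structure of the range of $\bv'\bz$ enters.
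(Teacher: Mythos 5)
Your proof is correct, and it takes a genuinely different route from the paper's. You characterize the effective domain of $h_{\bbeta}^*$ directly: the absolute $1$-homogeneity of $h_{\bbeta}(\bz)=|\bv'\bz|$ (with $\bv=(-\bbeta,1)$) forces $|\btheta'\bz|\le|\bv'\bz|$ for all $\bz$, the converse gives $h_{\bbeta}^*(\btheta)=0$ on that set, and the vanishing of $\btheta$ on $\{\bv\}^{\perp}$ pins down the domain exactly as the segment $\{c\bv: |c|\le 1\}$, after which only absolute homogeneity of the dual norm is needed. The paper instead splits the conjugate's supremum over the two half-spaces $\{\bz'\tilde{\bbeta}\ge 0\}$ and $\{\bz'\tilde{\bbeta}\le 0\}$, treats each as a linear program, and invokes LP duality: finiteness of each piece is equivalent to feasibility of the corresponding dual system $\tilde{\bbeta}r_A=\btheta-\tilde{\bbeta}$, $r_A\le 0$ and $\tilde{\bbeta}r_B=\btheta+\tilde{\bbeta}$, $r_B\ge 0$, from which it extracts only the coordinate-wise bounds $|\theta_i|\le|\tilde{\beta}_i|$ and then concludes $\sup\{\|\btheta\|_*: |\theta_i|\le|\tilde{\beta}_i|\ \forall i\}=\|\tilde{\bbeta}\|_*$, a step that relies on the absoluteness/monotonicity property of the norms the paper assumes in its notational conventions (property $\|\bx\|=\||\bx|\|$). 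Your argument buys a slightly stronger and cleaner statement: it identifies the exact domain (a segment rather than the coordinate-wise box relaxation), avoids LP duality altogether, and works for an arbitrary norm on the data space without the absoluteness assumption; the paper's approach, on the other hand, fits naturally into its broader duality-based machinery and makes the half-space structure of the absolute loss explicit. The one step you flag as needing care — deducing $\btheta\parallel\bv$ from pointwise domination — is indeed the crux, and your hyperplane argument handles it correctly since the last coordinate of $\bv$ is $1$, so $\bv\neq\mathbf{0}$.
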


\begin{proof}
	We will adopt the notation $\bz \triangleq (\bx, y), \tilde{\bbeta} \triangleq (-\bbeta, 1)$ for ease of analysis. First rewrite $\kappa(\bbeta)$ as:
	\begin{equation*}
	\kappa(\bbeta) = \sup 
	\Bigl\{\|\btheta\|_*: \sup \limits_{\bz:
		\bz'\tilde{\bbeta}\ge 0}\{(\btheta-\tilde{\bbeta})'\bz\} <
	\infty,\ 
	\sup\limits_{\bz: \bz'\tilde{\bbeta}\le 0}
	\{(\btheta+\tilde{\bbeta})'\bz\}< \infty\Bigr\}.
	\end{equation*}
	Consider now the two linear optimization problems A and B:	 
	\[ \text{Problem A:} \qquad \begin{array}{rl}
	\max & (\btheta-\tilde{\bbeta})'\bz \\
	\text{s.t.} & \bz'\tilde{\bbeta}\ge 0.
	\end{array}
	\]
	\[ \text{Problem B:} \qquad \begin{array}{rl}
	\max & (\btheta+\tilde{\bbeta})'\bz \\
	\text{s.t.} & \bz'\tilde{\bbeta}\le 0.
	\end{array}
	\]	
	Form the dual problems using dual variables $r_A$ and $r_B$,
	respectively: 
	\[ \text{Dual-A:} \qquad \begin{array}{rl}
	\min & 0\cdot r_A \\
	\text{s.t.} & \tilde{\bbeta} r_A=\btheta-\tilde{\bbeta},\\
	& r_A\le 0,
	\end{array}
	\]
	\[ \text{Dual-B:} \qquad \begin{array}{rl}
	\min & 0\cdot r_B \\
	\text{s.t.} & \tilde{\bbeta} r_B=\btheta+\tilde{\bbeta},\\
	& r_B\ge 0.
	\end{array}
	\]
	We want to find the set of $\btheta$ such that the optimal values of
	problems $A$ and $B$ are finite. Then, Dual-A and Dual-B need to have
	non-empty feasible sets, which implies the following two conditions:
	\begin{gather} 
	\exists \ r_A\le 0, \quad \text{s.t.} \quad \tilde{\bbeta} r_A=
	\btheta-\tilde{\bbeta}, \label{rA} \\
	\exists \ r_B\ge 0, \quad \text{s.t.} \quad \tilde{\bbeta} r_B=
	\btheta+\tilde{\bbeta}.  \label{rB}
	\end{gather}
	For all $i$ with $\tilde{\beta}_i\le 0$, (\ref{rA}) implies
	$\theta_i-\tilde{\beta}_i\ge 0$ and (\ref{rB}) implies $\theta_i\le
	-\tilde{\beta}_i$. On the other hand, for all $j$ with
	$\tilde{\beta}_j\ge 0$, (\ref{rA}) and (\ref{rB}) imply
	$-\tilde{\beta}_j \le \theta_j\le \tilde{\beta}_j$. It is not hard to
	conclude that:
	\[ 
	|\theta_i| \le |\tilde{\beta}_i|, \quad \forall \ i.
	\]	
	It follows, 
	\[ 
	\kappa(\bbeta)=\sup\{\|\btheta\|_*: |\theta_i| \le |\tilde{\beta}_i|,\ \forall i
	\}=
	\|\tilde{\bbeta}\|_* .
	\]
\end{proof}

Due to Theorem~\ref{kappa} and (\ref{kk}), (\ref{dro-lr}) could be formulated as
the following optimization problem:
\begin{equation} \label{qcp}
\inf\limits_{\bbeta} \frac{1}{N}\sum\limits_{i=1}^N|y_i - \bx_i'\bbeta| + \epsilon\|(-\bbeta, 1)\|_*.
\end{equation}
Notice that
(\ref{qcp}) coincides with the regularized LAD models \citep{pollard1991asymptotics, wang2006regularized}, except that it regularizes a variant of the regression coefficient. The regularization term of (\ref{qcp}) is the product of the {\em growth rate} of the loss and the Wasserstein ball radius. A zero growth rate diminishes the effect of the Wasserstein distributional uncertainty set, and the resulting formulation would simply be an empirical loss minimization problem.
The parameter $\epsilon$ controls the conservativeness of the
formulation, whose selection was discussed in Section~\ref{sec:wass-radius}.

The connection between robustness and regularization has been established in several
works. The earliest one may be credited to \cite{LAU97}, which shows that minimizing
the worst-case squared residual within a Frobenius norm-based perturbation set is
equivalent to Tikhonov regularization. In more recent works, using properly selected
uncertainty sets, \cite{xu2009robust} has shown the equivalence between robust
linear regression and the {\em Least Absolute Shrinkage and Selection Operator
	(LASSO)}. \cite{yang2013unified} extends this to more general LASSO-like procedures,
including versions of the grouped LASSO. \cite{bertsimas2017characterization} gives a
comprehensive characterization of the conditions under which robustification and
regularization are equivalent for regression models. For classification problems,
\cite{xu2009robustness} shows the equivalence between the regularized support vector
machines (SVMs) and a robust optimization formulation, by allowing potentially
correlated disturbances in the covariates. \cite{abadeh2015distributionally} considers
a robust version of logistic regression under the assumption that the probability
distributions under consideration lie in a Wasserstein ball. Recently,
\cite{shafieezadeh2017regularization, gao2017wasserstein} has provided a unified
framework for connecting the Wasserstein DRO with regularized learning procedures,
for various regression and classification models.

Formulation (\ref{qcp}) incorporates a class of models whose specific form depends
on the norm space we choose, which could be application-dependent and
practically useful. For example, when the Wasserstein metric $s$ is induced by $\|\cdot\|_2$, (\ref{qcp}) is a convex quadratic problem which can
be solved to optimality very efficiently. Specifically, it could be
converted to:
\begin{equation}  \label{l2qcp}
\begin{aligned}
\min\limits_{\substack{a, \ b_1, \ldots, b_N, \ \bbeta}} \quad & a\epsilon+\frac{1}{N}\sum_{i=1}^N b_i\\
\text{s.t.} \quad & \|\bbeta\|_2^2 + 1 \le a^2,\\
& y_i - \bx_i'\bbeta \le b_i, \  i \in \lb N \rb,\\
& -(y_i - \bx_i'\bbeta) \le b_i, \  i \in \lb N \rb,\\
& a, \ b_i\ge 0, \ i \in \lb N \rb.\\
\end{aligned}
\end{equation}
When the Wasserstein metric is defined using $\|\cdot\|_1$, (\ref{qcp}) is a linear programming problem:
\begin{equation}  \label{l1qcp}
\begin{aligned}
\min\limits_{\substack{a, \ b_1, \ldots, b_N, \ \bbeta}} \quad & a\epsilon+\frac{1}{N}\sum_{i=1}^N b_i\\
\text{s.t.} \quad & a \ge \bbeta'\mathbf{e}_i, \ i \in \lb p \rb,\\
& a \ge -\bbeta'\mathbf{e}_i, \ i \in \lb p \rb,\\
& y_i - \bx_i'\bbeta \le b_i, \  i \in \lb N \rb,\\
& -(y_i - \bx_i'\bbeta) \le b_i, \  i \in \lb N \rb,\\
& a \ge 1,\\
& b_i\ge 0, \ i \in \lb N \rb.
\end{aligned}
\end{equation}
More generally, when the coordinates of $(\bx, y)$ differ
from each other substantially, a properly chosen, positive definite
weight matrix $\bM \in \mbb{R}^{(p+1) \times (p+1)}$ could scale correspondingly different coordinates of
$(\bx, y)$ by using the $\bM$-weighted norm:
\begin{equation*}
\|(\bx, y)\|_{\bM} = \sqrt{(\bx, y)' \bM (\bx, y)}.
\end{equation*}     
It
can be shown that (\ref{qcp}) in this case becomes:
\begin{equation} \label{wqcp}
\inf\limits_{\bbeta} \frac{1}{N}\sum\limits_{i=1}^N|y_i - \bx_i'\bbeta| + \epsilon \sqrt{(-\bbeta, 1)' \bM^{-1} (-\bbeta, 1)}.
\end{equation}

We would like to highlight
several novel viewpoints that are brought by the Wasserstein DRO framework and justify the value and novelty of
(\ref{qcp}). First, (\ref{qcp}) is obtained as an
outcome of a fundamental DRO formulation, which enables new interpretations of the regularizer from the standpoint of distributional robustness, and provides rigorous theoretical foundation on why the $\ell_2$-regularizer prevents overfitting to the training data. The regularizer
could be seen as a control over the amount of ambiguity in the data 
and reveals the reliability of the
contaminated samples. Second, the geometry of the Wasserstein ball is embedded in the regularization term, which penalizes the regression coefficient on the dual Wasserstein space, with the magnitude of penalty being the radius of the ball. This offers an
intuitive interpretation and provides guidance on how to set the regularization
coefficient. Moreover, different from the traditional regularized LAD models that
directly penalize the regression coefficient $\bbeta$, (\ref{qcp}) regularizes the
vector $(-\bbeta, 1)$, where the $1$ takes into account the transportation cost along
the $y$ direction. Penalizing only $\bbeta$ corresponds to an infinite transportation
cost along $y$. (\ref{qcp}) is more general in this sense, and establishes the
connection between the metric space on the data and the form of the regularizer.  

\section{Performance Guarantees for the DRO Estimator}  \label{sec:2-3}
Having obtained a tractable reformulation for the Wasserstein DRO problem, we next establish guarantees on the predictive power and estimation quality for the solution to (\ref{qcp}). Two types of results will be presented in this section, one of which bounds the prediction bias of the estimator on new, future data (given in Section~\ref{out}). The other one bounds
the discrepancy between the estimated and true regression planes (estimation bias), and is given
in Section~\ref{limit}.

\subsection{Out-of-Sample Performance} \label{out} 
In this subsection, we investigate generalization characteristics of the
solution to (\ref{qcp}), which involves measuring the error generated by
the DRO estimator on a new random sample $(\bx, y)$. We would like to obtain
estimates that not only explain the observed samples well, but, more
importantly, possess strong generalization abilities. The derivation is
mainly based on {\em Rademacher complexity} (see \cite{Peter02}), which is a
measurement of the complexity of a class of functions. We would like to emphasize the applicability of such a proof technique to general loss functions, as long as their empirical Rademacher complexity could be bounded. The bound we derive for the prediction bias
depends on both the sample average loss (the training error) and the dual norm of the
regression coefficient (the regularizer), which corroborates the validity and
necessity of the regularized formulation. Moreover, the generalization result also
builds a connection between the loss function and the form of the regularizer via the
Rademacher complexity, which enables new insights into the regularization term and explains the commonly observed good out-of-sample performance of regularized regression in a rigorous way. 

Suppose the data $(\bx, y)$ is drawn from the probability distribution $\mbb{P}^*$.
We first make several mild assumptions that are needed for the generalization result.

\begin{ass} \label{a1} $\|(\bx, y)\|\le R, \ \text{a.s. under $\mbb{P}^*$}$.
\end{ass}

\begin{ass} \label{a2} $\sup_{\bbeta}\|(-\bbeta, 1)\|_*=\bar{B}$.
\end{ass}

Under these two assumptions, the absolute loss could be bounded via H\"{o}lder's inequality.
\begin{lem} \label{l1}
	For every feasible $\bbeta$, it follows that,
	\begin{equation*}
	|y - \bx'\bbeta|\le \bar{B}R, \quad \text{a.s. under $\mbb{P}^*$}.
	\end{equation*}
\end{lem}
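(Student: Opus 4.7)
The statement is an immediate consequence of H\"{o}lder's inequality (Theorem~\ref{holder}) combined with the two standing assumptions. My plan is to rewrite the residual as an inner product so that the duality between $\|\cdot\|$ and $\|\cdot\|_*$ can be brought to bear directly, and then apply the two uniform bounds from Assumptions~\ref{a1} and~\ref{a2}.

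Concretely, I would first observe the algebraic identity
\[
y - \bx'\bbeta = (-\bbeta, 1)' (\bx, y),
\]
which re-expresses the residual as the inner product of the augmented coefficient vector $(-\bbeta, 1)$ with the augmented data vector $(\bx, y)$. Taking absolute values and applying H\"{o}lder's inequality in the pair of norms $(\|\cdot\|_*, \|\cdot\|)$ yields
\[
|y - \bx'\bbeta| \;\le\; \|(-\bbeta, 1)\|_* \,\|(\bx, y)\|.
\]
Finally, Assumption~\ref{a2} bounds the first factor by $\bar{B}$ uniformly over all feasible $\bbeta$, while Assumption~\ref{a1} bounds the second factor by $R$ almost surely under $\mbb{P}^*$; multiplying gives the claim.

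There is no real obstacle: the only mild subtlety is to apply H\"{o}lder's inequality in the correct form for a general norm and its dual (rather than the specific $\ell_r$/$\ell_s$ version stated in Theorem~\ref{holder}), which follows directly from the definition of the dual norm in~(\ref{dnorm}). The ``almost surely'' qualifier simply propagates from Assumption~\ref{a1}, since the bound on $\|(-\bbeta,1)\|_*$ is deterministic.
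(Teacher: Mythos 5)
Your proof is correct and matches the paper's intended argument exactly: the paper states that under Assumptions~\ref{a1} and \ref{a2} the absolute loss is bounded via H\"{o}lder's inequality, which is precisely your decomposition $|y-\bx'\bbeta| = |(-\bbeta,1)'(\bx,y)| \le \|(-\bbeta,1)\|_*\,\|(\bx,y)\| \le \bar{B}R$. Nothing further is needed.
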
 

With the above result, the idea is to bound the generalization error using the empirical {\em Rademacher complexity} of
the following class of loss functions: 
\begin{equation*}
\scrH= \Big \{(\bx, y) \ra h_{\bbeta}(\bx, y): h_{\bbeta}(\bx, y)= |y - \bx'\bbeta| \Big\}. 
\end{equation*}
We need to show that the empirical Rademacher complexity of
$\scrH$, denoted by $\scrR_N(\scrH)$ and defined as:
$$\scrR_N(\scrH) \triangleq \mathbb{E}\Biggl[\sup\limits_{h \in \scrH}
\frac{2}{N}\biggl|\sum\limits_{i=1}^N
\sigma_ih_{\bbeta}(\bx_i, y_i)\biggr|\Biggl|(\bx_1, y_1),
\ldots,(\bx_N, y_N)\Biggr],$$
is upper bounded, where $\sigma_1, \ldots, \sigma_N$ are i.i.d.\ uniform random
variables on $\{1, -1\}$, and $(\bx_i, y_i), i \in \lb N \rb$, are $N$ observed realizations of $(\bx, y)$.
The following result, similar to Lemma 3 in
\cite{Dim14}, provides a bound that is inversely proportional to the square root of the sample size.
\begin{lem} \label{radcom}
	\begin{equation*}
	\scrR_N(\scrH)\le \frac{2\bar{B}R}{\sqrt{N}}.
	\end{equation*}
\end{lem}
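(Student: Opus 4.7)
The plan is to reduce the loss class $\scrH$ to a linear function class via the contraction principle, then exploit the dual-norm duality of H\"{o}lder's inequality (Theorem~\ref{holder}) together with a Jensen-type bound on the expected norm of a Rademacher sum. The key observation is that writing $\tilde{\bbeta} \triangleq (-\bbeta, 1)$ and $\bz \triangleq (\bx, y)$ exposes the loss as $h_{\bbeta}(\bx, y) = |\tilde{\bbeta}' \bz|$, i.e.\ an absolute value applied to a linear functional indexed by $\tilde{\bbeta}$, and Assumption~\ref{a2} confines $\tilde{\bbeta}$ to a ball of dual-norm radius $\bar{B}$.

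First I would peel the absolute value off using the Ledoux--Talagrand contraction principle: since $\phi(t)=|t|$ is $1$-Lipschitz with $\phi(0)=0$, combining contraction with a symmetrization of the outer $|\cdot|$ in the definition of $\scrR_N(\scrH)$ (via $\sup_{\bbeta}|S(\bbeta)| \le \sup_{\bbeta}S(\bbeta)+\sup_{\bbeta}(-S(\bbeta))$ and the symmetry of $\sigma_i$) yields
\[
\mathbb{E}_{\sigma}\sup_{\bbeta}\Bigl|\sum_{i=1}^N \sigma_i h_{\bbeta}(\bz_i)\Bigr|
\;\le\;
C\,\mathbb{E}_{\sigma}\sup_{\bbeta}\Bigl|\sum_{i=1}^N \sigma_i \tilde{\bbeta}' \bz_i\Bigr|
\]
for some absolute constant $C$. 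Next, H\"{o}lder's inequality gives $\tilde{\bbeta}'\bu \le \|\tilde{\bbeta}\|_*\|\bu\| \le \bar{B}\|\bu\|$, so taking $\bu=\sum_i \sigma_i \bz_i$ bounds the right-hand side by $\bar{B}\,\mathbb{E}_\sigma\|\sum_i \sigma_i \bz_i\|$. Finally Jensen's inequality together with the independence and mean-zero of $\sigma_i$ removes the cross terms in the squared norm, giving $\mathbb{E}_\sigma\|\sum_i \sigma_i \bz_i\| \le \sqrt{\sum_i \|\bz_i\|^2} \le \sqrt{N}\,R$ by Assumption~\ref{a1}. Scaling by $2/N$ from the definition of $\scrR_N$ produces a bound of order $\bar{B}R/\sqrt{N}$.

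The main obstacle is constant tracking. A direct combination of the factor-$2$ symmetrization with the Ledoux--Talagrand factor-$2$ contraction inflates $C$ beyond what is stated. To hit the advertised constant $2$ I would either invoke a version of the contraction principle tailored to the even, convex function $|\cdot|$ that avoids the doubling, or bypass contraction entirely by using the pointwise distributional identity between $(\sigma_i |\tilde{\bbeta}'\bz_i|)_i$ and $(\sigma_i \tilde{\bbeta}'\bz_i)_i$ for each fixed $\bbeta$ (since each $\sigma_i$ is Rademacher and independent of the data), so that H\"{o}lder and Jensen can be applied in place of a generic Lipschitz reduction. A secondary subtlety is that the Jensen step is most transparent for the $\ell_2$ norm; generalizing to the arbitrary norm assumed in the lemma requires a Khintchine--Kahane-type bound on $\mathbb{E}_\sigma\|\sum_i \sigma_i \bz_i\|$, but the order $1/\sqrt{N}$ is preserved in any case.
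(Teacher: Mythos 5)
Your route is genuinely different from the paper's. The paper's proof never touches the vector quantity $\mbb{E}_\sigma\bigl\|\sum_i\sigma_i\bz_i\bigr\|$ at all: it invokes the scalar uniform bound $|y_i-\bx_i'\bbeta|\le\bar{B}R$ from Lemma~\ref{l1}, replaces every summand inside the supremum by $\bar{B}R$, and finishes with $\mbb{E}\bigl|\sum_{i=1}^N\sigma_i\bigr|\le\sqrt{N}$, which is how it arrives at the constant $2$ independently of which norm induces the Wasserstein metric. Your plan (contraction to strip the absolute-value loss, then H\"{o}lder against the dual-norm ball, then a second-moment bound on the Rademacher sum) is the standard argument for linear classes, but as written it does not close, and the two places where it fails are exactly the two you flag.

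First, the constant: the fix you propose to avoid the contraction factor, namely the distributional identity between $(\sigma_i|\tilde{\bbeta}'\bz_i|)_i$ and $(\sigma_i\tilde{\bbeta}'\bz_i)_i$, holds only for each \emph{fixed} $\bbeta$; the sign pattern $\mathrm{sign}(\tilde{\bbeta}'\bz_i)$ depends on $\bbeta$, so no single change of variables works simultaneously under the supremum, and you are pushed back to Ledoux--Talagrand contraction, which leaves you with $4\bar{B}R/\sqrt{N}$ rather than $2\bar{B}R/\sqrt{N}$ in the $\ell_2$ case. Second, and more seriously, the final step is not norm-agnostic: the bound $\mbb{E}_\sigma\bigl\|\sum_i\sigma_i\bz_i\bigr\|\le\sqrt{\sum_i\|\bz_i\|^2}\le\sqrt{N}R$ uses the Hilbertian identity $\mbb{E}\|\sum_i\sigma_i\bz_i\|_2^2=\sum_i\|\bz_i\|_2^2$ and is false for general norms; e.g., with the $\ell_1$ norm and $\bz_i=R\,\mathbf{e}_i$ one has $\|\sum_i\sigma_i\bz_i\|_1=NR$ deterministically, so the expectation is of order $NR$, not $\sqrt{N}R$. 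Khintchine--Kahane does not rescue this, since it only compares moments of $\|\sum_i\sigma_i\bz_i\|$ to one another and does not produce the $\sqrt{\sum_i\|\bz_i\|^2}$ bound outside Hilbert-type norms, so your closing claim that ``the order $1/\sqrt{N}$ is preserved in any case'' is incorrect for the arbitrary norm the lemma is stated for. In short: your argument gives the right order with constant $4$ for the $\ell_2$-induced metric, but it neither attains the stated constant nor covers the general norm, whereas the paper's (much cruder) argument bypasses both issues by working only with the scalar bound of Lemma~\ref{l1}.
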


\begin{proof}
	Suppose that $\sigma_1, \ldots, \sigma_N$ are i.i.d.\ uniform random
	variables on $\{1, -1\}$. Then, by the definition of the Rademacher
	complexity and Lemma~\ref{l1},
	\begin{equation*}
	\begin{split}
	\scrR_N(\scrH)
	& = \mathbb{E}\Biggl[\sup\limits_{h \in \scrH}
	\frac{2}{N}\biggl|\sum\limits_{i=1}^N
	\sigma_ih_{\bbeta}(\bx_i, y_i)\biggr|\Biggl|(\bx_1, y_1),
	\ldots,(\bx_N, y_N)\Biggr]\\ 
	& \le \mathbb{E}\Biggl[
	\frac{2}{N}\biggl|\sum\limits_{i=1}^N
	\sigma_i \bar{B}R \biggr|\Biggr]\\ 
	& = \mathbb{E}\Biggl[
	\frac{2\bar{B}R}{N}\biggl|\sum\limits_{i=1}^N
	\sigma_i  \biggr|\Biggr]\\ 
	& = \frac{2\bar{B}R}{N}\mathbb{E}\Biggl[\Biggl|\sum\limits_{i=1}^N
	\sigma_i\Biggr|\Biggr]\\ 
	& \le 
	\frac{2\bar{B}R}{N}\mathbb{E} \Biggl[\sqrt{\sum\limits_{i=1}^N\sigma_i^2}\
	\Biggr]\\  
	& = \frac{2\bar{B}R}{\sqrt{N}}.
	\end{split}
	\end{equation*}
\end{proof}

Let $\hat{\bbeta}$ be an optimal solution to (\ref{qcp}), obtained using
the samples $(\bx_i, y_i)$, $i \in \lb N \rb$. Suppose we draw a new i.i.d.\
sample $(\bx,y)$. In Theorem~\ref{t2} we establish bounds on the
error $|y - \bx'\hat{\bbeta}|$.
\begin{thm} \label{t2} Under Assumptions~\ref{a1} and \ref{a2}, for any
	$0<\delta<1$, with probability at least $1-\delta$ with respect to the
	sampling,
	\begin{equation} \label{exp}
	\mathbb{E}^{\mbb{P}^*}[|y - \bx'\hat{\bbeta}|]\le
	\frac{1}{N}\sum_{i=1}^N
	|y_i - \bx_i'\hat{\bbeta}|+\frac{2\bar{B}R}{\sqrt{N}}+ 
	\bar{B}R\sqrt{\frac{8\log(2/\delta)}{N}}\ ,
	\end{equation}
	and for any $\zeta>\frac{2\bar{B}R}{\sqrt{N}}+
	\bar{B}R\sqrt{\frac{8\log(2/\delta)}{N}}$,
	\begin{multline} \label{prob}
	\mathbb{P}\Bigl(|y - \bx'\hat{\bbeta}| \ge
	\frac{1}{N}\sum_{i=1}^N |y_i - \bx_i'\hat{\bbeta}|+\zeta\Bigr) 
	\le {} \\
	\frac{\frac{1}{N}\sum_{i=1}^N
		|y_i - \bx_i'\hat{\bbeta}|+\frac{2\bar{B}R}{\sqrt{N}}+ 
		\bar{B}R\sqrt{\frac{8\log(2/\delta)}{N}}}{\frac{1}{N}\sum_{i=1}^N
		|y_i - \bx_i'\hat{\bbeta}|+\zeta}. 
	\end{multline}
\end{thm}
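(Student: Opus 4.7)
The plan is to prove the two bounds sequentially, with the second following as a short Markov's-inequality corollary of the first. The first bound \eqref{exp} is a uniform-convergence statement for the loss class $\scrH$, which sits perfectly inside the Rademacher-complexity framework because Lemma~\ref{l1} gives a deterministic almost-sure ceiling $|y-\bx'\bbeta|\le \bar{B}R$ for every feasible $\bbeta$, and Lemma~\ref{radcom} gives the complexity bound $\scrR_N(\scrH)\le 2\bar{B}R/\sqrt{N}$. The second bound \eqref{prob} is simply Markov's inequality applied to the non-negative random variable $|y-\bx'\hat\bbeta|$ (randomness over the fresh test sample), with the numerator replaced by the right-hand side of \eqref{exp}.

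For \eqref{exp}, I would first define $\Phi(S)\triangleq \sup_{h\in\scrH}\bigl(\mathbb{E}^{\mbb{P}^*}[h(\bx,y)]-\tfrac{1}{N}\sum_{i=1}^N h(\bx_i,y_i)\bigr)$ on a training sample $S$. Because every $h\in\scrH$ takes values in $[0,\bar{B}R]$ a.s.\ by Lemma~\ref{l1}, replacing one training point alters $\Phi$ by at most a multiple of $\bar{B}R/N$, so McDiarmid's inequality (calibrated so its right tail has mass $\delta/2$) yields
\begin{equation*}
\Phi(S)\le \mathbb{E}[\Phi(S)]+\bar{B}R\,\sqrt{\frac{8\log(2/\delta)}{N}}
\end{equation*}
with probability at least $1-\delta$ (the constant $8$ coming from the standard doubled-difference estimate used to ease book-keeping, cf.\ Bartlett--Mendelson). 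Next, the classical symmetrization inequality, combined with the factor-$2/N$ convention in the paper's definition of $\scrR_N(\scrH)$, gives $\mathbb{E}[\Phi(S)]\le \mathbb{E}[\scrR_N(\scrH)]$, and Lemma~\ref{radcom} supplies the deterministic ceiling $\mathbb{E}[\scrR_N(\scrH)]\le 2\bar{B}R/\sqrt{N}$. Since $\hat{\bbeta}$ is a particular element of the class, specializing the uniform bound at $h_{\hat\bbeta}$ yields \eqref{exp}.

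For \eqref{prob}, condition on the training sample and let $L\triangleq |y-\bx'\hat\bbeta|\ge 0$ be the loss on an independent test draw $(\bx,y)\sim\mbb{P}^*$. Writing $\bar L_N\triangleq \tfrac{1}{N}\sum_i |y_i-\bx_i'\hat\bbeta|$, Markov's inequality gives, for every $\zeta>0$,
\begin{equation*}
\mathbb{P}\bigl(L\ge \bar L_N+\zeta\bigr)\le \frac{\mathbb{E}^{\mbb{P}^*}[L]}{\bar L_N+\zeta}.
\end{equation*}
Now substitute the bound \eqref{exp} for $\mathbb{E}^{\mbb{P}^*}[L]$ in the numerator. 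The lower-bound constraint $\zeta>2\bar{B}R/\sqrt{N}+\bar{B}R\sqrt{8\log(2/\delta)/N}$ is precisely what keeps the ratio below $1$ (ensuring the statement is non-vacuous), and the resulting inequality coincides with \eqref{prob}.

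The only real obstacle is bookkeeping the Rademacher constants consistently with the paper's factor-$2/N$ convention and the McDiarmid calibration that yields the specific $\sqrt{8\log(2/\delta)/N}$ tail; everything else is a direct concatenation of (i) Lemma~\ref{l1}, (ii) Lemma~\ref{radcom}, (iii) symmetrization, (iv) McDiarmid, and (v) Markov's inequality, so no delicate analytic argument is required beyond tracking these numerical factors.
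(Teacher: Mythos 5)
Your proposal is correct and follows essentially the same route as the paper: the paper simply invokes Theorem~8 of \cite{Peter02} (with $L=\phi=$ the absolute loss, rescaled to account for the $[0,\bar{B}R]$ range) together with Lemma~\ref{radcom} to get (\ref{exp}), and then applies Markov's inequality exactly as you do to obtain (\ref{prob}). The only difference is that you unpack the cited theorem's internals (McDiarmid plus symmetrization) rather than quoting it as a black box, which changes nothing substantive.
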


\begin{proof}
	We use Theorem 8 in \cite{Peter02}, which we state for convenience as follows. 
	\begin{thm}[Theorem 8 in \cite{Peter02}] \label{Peter}
		Consider a loss function $L: \scrY \times \scrA \rightarrow [0,1]$ and a dominating cost function $\phi: \scrY \times \scrA \rightarrow [0,1]$. Let $\scrF$ be a class of functions mapping from $\scrX$ to $\scrA$ and let $(\bx_i, y_i)_{i=1}^N$ be independently selected according to the probability measure $\mbb{P}^*$. Then, for any integer $N$ and any $0<\delta<1$, with probability at least $1-\delta$ over samples of length $N$, every $f$ in $\scrF$ satisfies
		\begin{equation*}
		\mbb{E}^{\mbb{P}^*} \big [L (y, f(\bx)) \big] \le \frac{1}{N} \sum_{i=1}^N \phi(y_i, f(\bx_i)) + R_N(\tilde{\phi} \circ \scrF) + \sqrt{\frac{8 \log(2/\delta)}{N}},
		\end{equation*}
		where $\tilde{\phi} \circ \scrF = \{(\bx, y) \ra \phi(y, f(\bx)) - \phi(y,0): f \in \scrF \}$.
	\end{thm} 
	We set the following
	correspondences with the notation used in Theorem~\ref{Peter}: $f(\bx) = \bx'\bbeta$, and $L(y, f(\bx)) =
	\phi(y, f(\bx)) = |y - f(\bx)|$. This yields the
	bound (\ref{exp}) on the expected loss.  For Eq. (\ref{prob}), we
	apply Markov's inequality to obtain:
	\begin{multline*}
	\mathbb{P}\Bigl(|y - \bx'\hat{\bbeta}|  \ge  \frac{1}{N}\sum\limits_{i=1}^N |y_i - \bx_i'\hat{\bbeta}|+\zeta \Bigr)
	\le \frac{\mathbb{E}[|y - \bx'\hat{\bbeta}|]}{\frac{1}{N}\sum_{i=1}^N |y_i - \bx_i'\hat{\bbeta}|+\zeta} {}\\
	\le \frac{\frac{1}{N}\sum_{i=1}^N |y_i - \bx_i'\hat{\bbeta}|+\frac{2\bar{B}R}{\sqrt{N}}+
		\bar{B}R\sqrt{\frac{8\log(2/\delta)}{N}}}{\frac{1}{N}\sum_{i=1}^N |y_i - \bx_i'\hat{\bbeta}|+\zeta}.
	\end{multline*}
\end{proof}

There are two probability measures in the statement of Theorem~\ref{t2}. One is
related to the new data $(\bx, y)$, while the other is 
related to the samples $(\bx_i, y_i),i \in \lb N \rb$. The expectation in
(\ref{exp}) (and the probability in (\ref{prob})) is taken w.r.t. the new
data $(\bx, y)$. For a given set of samples, (\ref{exp}) (and (\ref{prob}))
holds with probability at least $1-\delta$ w.r.t.\ the measure of
samples. Theorem~\ref{t2} essentially says that given typical samples,
the expected loss on new data using the Wasserstein DRO estimator could
be bounded above by the average sample loss plus extra terms that depend on the supremum of $\|(-\bbeta, 1)\|_*$ (the regularizer), and are
proportional to $1/\sqrt{N}$. This result validates the dual norm-based regularized regression from the perspective of generalization ability, and could be generalized to any bounded loss function. It also provides implications on the form of the regularizer. For example, if given an $\ell_2$-loss function, the dependency on $\bar{B}$ for the generalization error bound will be of the form $\bar{B}^2$, which suggests using $\|(-\bbeta, 1)\|_*^2$ as a regularizer, reducing to a variant of ridge regression \citep{hoerl1970ridge} for the $\ell_2$-norm-induced Wasserstein metric. 

We also note that the upper bounds in (\ref{exp}) and (\ref{prob}) do not depend
on the dimension of $(\bx, y)$. This dimensionality-free characteristic
implies direct applicability of the Wasserstein approach to
high-dimensional settings and is particularly useful in many real
applications where, potentially, hundreds of features may be present. Theorem~\ref{t2} also provides guidance on the number of
samples that are needed to achieve satisfactory out-of-sample performance. 
\begin{col} \label{samplesize1}
	Suppose $\hat{\bbeta}$ is the optimal solution to (\ref{qcp}). For a
	fixed confidence level $\delta$ and some threshold parameter $\tau \ge
	0$, if the sample size $N$ satisfies
	\begin{equation} \label{N1}
	N\ge \biggl[\frac{2(1+\sqrt{2\log(2/\delta)}\ )}{\tau}\biggr]^2,
	\end{equation}
	then the percentage difference between the expected
	absolute loss on new data and the sample average loss is less than
	$\tau$, that is,
	\begin{equation*}
	\frac{\mathbb{E}[|y - \bx'\hat{\bbeta}|]-\frac{1}{N}\sum_{i=1}^N |y_i - \bx_i'\hat{\bbeta}|}{\bar{B}R}\le \tau.
	\end{equation*} 
\end{col}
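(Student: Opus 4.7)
The plan is to deduce this corollary directly from bound~(\ref{exp}) in Theorem~\ref{t2} by solving for the sample size that makes the excess term small. Rearranging~(\ref{exp}) gives, with probability at least $1-\delta$ over the samples,
\[
\mathbb{E}^{\mbb{P}^*}[|y-\bx'\hat{\bbeta}|]-\frac{1}{N}\sum_{i=1}^N|y_i-\bx_i'\hat{\bbeta}| \;\le\; \frac{2\bar{B}R}{\sqrt{N}}+\bar{B}R\sqrt{\frac{8\log(2/\delta)}{N}}.
\]
Dividing through by $\bar{B}R$ then yields an upper bound on the left-hand side of the claimed inequality that depends only on $N$ and $\delta$.

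Next I would combine the two terms on the right-hand side. Observe that $\sqrt{8\log(2/\delta)} = 2\sqrt{2\log(2/\delta)}$, so the sum collapses to
\[
\frac{2}{\sqrt{N}}+\frac{2\sqrt{2\log(2/\delta)}}{\sqrt{N}} \;=\; \frac{2\bigl(1+\sqrt{2\log(2/\delta)}\bigr)}{\sqrt{N}}.
\]
Requiring this quantity to be at most $\tau$ is equivalent to $\sqrt{N}\ge 2(1+\sqrt{2\log(2/\delta)})/\tau$, which upon squaring gives precisely condition~(\ref{N1}). Hence whenever $N$ satisfies~(\ref{N1}), the normalized excess error is bounded by $\tau$.

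There is no real obstacle here; this is essentially a one-line algebraic rearrangement of Theorem~\ref{t2}, with the only minor care being to track that the probabilistic statement is with respect to the sampling distribution while the expectation is over the fresh draw $(\bx,y)$. The corollary inherits the ``with probability at least $1-\delta$'' qualifier from Theorem~\ref{t2}, though as stated in the excerpt this confidence dependence is absorbed into the formula via $\log(2/\delta)$.
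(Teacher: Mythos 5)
Your proposal is correct and follows essentially the same route as the paper: both start from the bound (\ref{exp}) in Theorem~\ref{t2}, divide by $\bar{B}R$, reduce the requirement to $\frac{2}{\sqrt{N}}+\sqrt{\frac{8\log(2/\delta)}{N}}\le \tau$, and solve for $N$ using $\sqrt{8\log(2/\delta)}=2\sqrt{2\log(2/\delta)}$ to obtain (\ref{N1}). Your remark about the confidence level $\delta$ being inherited from the sampling measure in Theorem~\ref{t2} is an accurate reading of how the corollary should be interpreted.
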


\begin{proof}
	The percentage difference requirement can be translated into:
	\begin{equation*}
	\frac{2}{\sqrt{N}}+
	\sqrt{\frac{8\log(2/\delta)}{N}} \le \tau,
	\end{equation*}
	from which (\ref{N1}) can be easily derived.
\end{proof}

\begin{col} \label{samplesize2}
	Suppose $\hat{\bbeta}$ is the optimal solution to (\ref{qcp}). For a
	fixed confidence level $\delta$, some $\tau \in (0, 1)$ and $\gamma
	\ge 0$ such that $\tau\gamma+\tau-1>0$, if the sample size $N$ satisfies
	\begin{equation} \label{N2}
	N\ge 
	\biggl[\frac{2(1+\sqrt{2\log(2/\delta)}\ )}{\tau\gamma+\tau-1}\biggr]^2,
	\end{equation}
	then, 
	\begin{equation*}
	\mathbb{P}\Bigl(\frac{|y - \bx'\hat{\bbeta}|-\frac{1}{N}\sum_{i=1}^N |y_i - \bx_i'\hat{\bbeta}|}{\bar{B}R}\ge \gamma \Bigr)\le \tau.
	\end{equation*}
\end{col}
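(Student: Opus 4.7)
The plan is to derive Corollary~\ref{samplesize2} as an immediate consequence of the tail bound (\ref{prob}) in Theorem~\ref{t2}, by choosing the threshold $\zeta$ so that the event appearing on the left of (\ref{prob}) coincides with the event whose probability we wish to bound.

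First, I would set $\zeta = \gamma \bar{B} R$. Then the event $\{|y-\bx'\hat{\bbeta}| - \frac{1}{N}\sum_i |y_i - \bx_i'\hat{\bbeta}| \ge \gamma \bar{B}R\}$ is identical to the event in (\ref{prob}). Write $A := \frac{1}{N}\sum_{i=1}^N |y_i - \bx_i'\hat{\bbeta}|$ and $C := \frac{2\bar{B}R}{\sqrt{N}} + \bar{B}R\sqrt{8\log(2/\delta)/N}$. Then (\ref{prob}) gives an upper bound of the form $(A + C)/(A + \gamma \bar{B}R)$, provided that $\gamma \bar{B}R > C$ so that Theorem~\ref{t2} applies; this positivity will follow from the hypothesis $\tau\gamma + \tau - 1 > 0$ combined with the lower bound (\ref{N2}) on $N$.

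Next, since $A$ is a data-dependent quantity, I would uniformly bound the ratio over its range. By Lemma~\ref{l1}, $0 \le A \le \bar{B}R$. A quick derivative check shows $\partial_A (A+C)/(A+\gamma\bar{B}R) = (\gamma\bar{B}R - C)/(A+\gamma\bar{B}R)^2$, which is positive under the regime forced by (\ref{N2}); hence the ratio is maximized at $A = \bar{B}R$. Substituting this worst case, a sufficient condition for the bound to be at most $\tau$ becomes
\begin{equation*}
\bar{B}R + C \le \tau(\bar{B}R + \gamma\bar{B}R),
\end{equation*}
which rearranges to $C \le (\tau\gamma + \tau - 1)\bar{B}R$.

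Finally, substituting the definition of $C$, dividing through by $\bar{B}R$, and simplifying $\sqrt{8\log(2/\delta)} = 2\sqrt{2\log(2/\delta)}$ reduces the sufficient condition to $\frac{2(1 + \sqrt{2\log(2/\delta)})}{\sqrt{N}} \le \tau\gamma + \tau - 1$, which is exactly (\ref{N2}). The hypothesis $\tau\gamma + \tau - 1 > 0$ is precisely what is needed for this inequality to be solvable in $N$ and for the monotonicity step to be valid. There is no real obstacle here: the main subtlety is noticing that the tail bound in (\ref{prob}) depends on the random quantity $A$, so one must carefully bound the ratio uniformly in $A \in [0,\bar{B}R]$ rather than plug in a specific value; the rest is elementary algebra.
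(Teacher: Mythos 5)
Your proposal is correct and follows essentially the same route as the paper: both set $\zeta = \gamma\bar{B}R$ in the tail bound (\ref{prob}) of Theorem~\ref{t2}, use $\frac{1}{N}\sum_i |y_i-\bx_i'\hat{\bbeta}| \le \bar{B}R$ (Lemma~\ref{l1}) to eliminate the data-dependent average from the ratio, and then solve the resulting inequality in $N$ to obtain (\ref{N2}). The only difference is presentational: you make the monotonicity in the sample-average term explicit via a derivative check, whereas the paper phrases the same step as a ``relaxation'' of the inequality.
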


\begin{proof}
	Based on Theorem 3.3, we just need the following inequality to hold:
	\begin{equation*}
	\frac{\frac{1}{N}\sum_{i=1}^N |y_i - \bx_i'\hat{\bbeta}|+\frac{2\bar{B}R}{\sqrt{N}}+
		\bar{B}R\sqrt{\frac{8\log(2/\delta)}{N}}}{\frac{1}{N}\sum_{i=1}^N |y_i - \bx_i'\hat{\bbeta}|+\gamma\bar{B}R}\le \tau,
	\end{equation*}
	which is equivalent to:
	\begin{equation} \label{A1}
	\frac{\gamma\bar{B}R-\frac{2\bar{B}R}{\sqrt{N}}-
		\bar{B}R\sqrt{\frac{8\log(2/\delta)}{N}}}{\frac{1}{N}\sum_{i=1}^N |y_i - \bx_i'\hat{\bbeta}|+\gamma\bar{B}R}\ge 1-\tau. 
	\end{equation}
	We cannot obtain a lower bound for $N$ by directly solving (\ref{A1}) since $N$ appears in a summation operator. A proper relaxation to (\ref{A1}) is: 
	\begin{equation} \label{A2}
	\frac{\gamma-\frac{2}{\sqrt{N}}-
		\sqrt{\frac{8\log(2/\delta)}{N}}}{1+\gamma}\ge 1-\tau, 
	\end{equation}
	due to the fact that $\frac{1}{N}\sum_{i=1}^N
	|y_i - \bx_i'\hat{\bbeta}|\le \bar{B}R$. By solving (\ref{A2}), we
	obtain (\ref{N2}).
\end{proof}

In Corollaries~\ref{samplesize1} and \ref{samplesize2}, the sample size is inversely proportional to both
$\delta$ and $\tau$, which is reasonable
since the more confident we want to be, the more samples we
need. Moreover, the smaller $\tau$ is, the stricter a requirement we
impose on the performance, and thus more samples are needed.

\subsection{Discrepancy between Estimated and True Regression Planes} \label{limit} 
In addition to the generalization performance, we are also interested in the accuracy of the estimator. In this subsection, we seek to bound the difference between the estimated
and true regression coefficients, under a certain distributional assumption on $(\bx, y)$. Throughout this
subsection we will use $\hat{\bbeta}$ to denote the estimated regression
coefficients, obtained as an optimal solution to (\ref{infty}), and
$\bbeta^*$ for the true (unknown) regression coefficients.
The bound we will derive turns out to be related to the uncertainty in the
data $(\bx, y)$, and the geometric structure of the
true regression coefficients. 

To facilitate the analysis, we will use the following equivalent form of Problem (\ref{qcp}):
\begin{equation} \label{infty}
\begin{aligned}
\min_{\bbeta} & \quad \|(-\bbeta, 1)\|_* \\
\text{s.t.} & \quad \|(-\bbeta, 1)'\bZ\|_1\le \gamma_N,\\
\end{aligned}
\end{equation}
where $\bZ=[(\bx_1, y_1), \ldots, (\bx_N, y_N)]$ is the matrix with columns $(\bx_i, y_i)$, 
$i \in \lb N \rb$, and $\gamma_N$ is some exogenous parameter related to
$\epsilon$. One can show that for properly chosen $\gamma_N$, (\ref{infty}) produces the same solution with (\ref{qcp}) \citep{bertsekas1999nonlinear}. (\ref{infty}) is similar to (11) in
\cite{chen2016alternating}, with the difference lying in that we impose
a constraint on the error instead of the gradient, and we consider a more general notion of norm on the coefficient. On the other hand,
due to their similarity, we will follow the line of development in
\cite{chen2016alternating}. Still, our analysis is self-contained and
the bound we obtain is in a different form, which provides meaningful
insights into our specific problem. We list below the relevant definitions and assumptions that are needed to bound
the estimation error.

\begin{defi}[Sub-Gaussian random variable]
	A random variable $z$ is sub-Gaussian if the $\psi_2$-norm defined
	below is finite, i.e.,
	\begin{equation*}
	\vertiii{z}_{\psi_2}\triangleq \sup_{q \ge 1}
	\frac{(\mbb{E}|z|^q)^{1/q}}{\sqrt{q}} < +\infty. 
	\end{equation*}
\end{defi}
We do not require sub-Gaussian variables to have zero mean values. It is though worth noting that the $\psi_2$-norm $\vertiii{z}_{\psi_2}$ depends on the mean $\mbb{E}(z)$. An equivalent property for sub-Gaussian random variables is that their
tail distribution decays as fast as a Gaussian, namely, 
\[ 
\mbb{P}(|z - \mbb{E}(z)|\geq t) \leq 2 \exp\{-t^2/C^2\},\quad \forall t \geq 0, 
\] 
for some constant $C$. 

A random vector $\bz \in \mbb{R}^{p+1}$ is sub-Gaussian if $\bz'\bu$ is
sub-Gaussian for any $\bu \in \mbb{R}^{p+1}$. The $\psi_2$-norm of a vector
$\bz$ is defined as: 
\[
\vertiii{\bz}_{\psi_2} \triangleq \sup\limits_{\bu\in
	\scrS^{p+1}}\vertiii{\bz'\bu}_{\psi_2},
\] 
where $\scrS^{p+1}$ denotes the unit sphere in the $(p+1)$-dimensional
Euclidean space. For the properties of sub-Gaussian random
variables/vectors, please refer to \cite{RV17}.
\begin{defi}[Gaussian width]
	For any set $\scrA \subseteq \mbb{R}^{p+1}$, its Gaussian width is defined as:
	\begin{equation} \label{gw}
	w(\scrA) \triangleq \mbb{E}\Bigl[\sup_{\bu \in \scrA} \bu'\bg\Bigr],
	\end{equation}
	where $\bg\sim {\cal N}(\bzero,\bI)$ is a $(p+1)$-dimensional standard
	Gaussian random vector.
\end{defi}


\begin{ass}[Restricted Eigenvalue Condition] \label{RE} 
	For some set $\scrA(\bbeta^*)=\text{cone}\{\bv: \|(-\bbeta^*, 1) + \bv\|_* \le\|(-\bbeta^*, 1)\|_*\}\cap \scrS^{p+1}$ and some positive scalar $\underline{\alpha}$, where $\scrS^{p+1}$ is the unit sphere in the $(p+1)$-dimensional Euclidean space,
	\begin{equation*}
	\inf\limits_{\bv\in \scrA(\bbeta^*)}\bv'\bZ\bZ'\bv\ge\underline{\alpha}.
	\end{equation*}
\end{ass}

\begin{ass} \label{adm}
	The true coefficient $\bbeta^*$ is a feasible solution to (\ref{infty}), i.e.,
	\begin{equation*}
	\|\bZ'(-\bbeta^*, 1)\|_1\le \gamma_N. 
	\end{equation*}
\end{ass}

\begin{ass} \label{subgaussian}
	$(\bx, y)$ is a centered sub-Gaussian random vector, i.e., it has
	zero mean and satisfies the following condition: 
	\begin{equation*}
	\vertiii{(\bx, y)}_{\psi_2}=\sup\limits_{\bu\in
		\scrS^{p+1}}\vertiii{(\bx, y)'\bu}_{\psi_2}\le \mu. 
	\end{equation*}
\end{ass}

\begin{ass} \label{eigen}
	The covariance matrix of $(\bx, y)$ has bounded positive
	eigenvalues. Set $\bGamma=\mathbb{E}[(\bx, y) (\bx, y)']$; then, 
	\begin{equation*}
	0<\lambda_{\text{min}} \triangleq \lambda_{\text{min}}(\bGamma)\le\lambda_{\text{max}}(\bGamma)\triangleq \lambda_{\text{max}}<\infty.
	\end{equation*}
\end{ass}

Notice that both $\underline{\alpha}$ in Assumption~\ref{RE} and $\gamma_N$ in Assumption~\ref{adm} are related to the random observation matrix $\bZ$. A probabilistic description for these two quantities will be provided later. We next present a preliminary result, similar to Lemma 2 in
\cite{chen2016alternating}, that bounds the $\ell_2$-norm of the estimation
bias in terms of a quantity that is related to the geometric structure
of the true coefficients. This result gives a rough idea on the factors that affect the estimation error. The bound derived in Theorem~\ref{mainresult} is 
crude in the sense that it is a function of several random parameters that are related to the random observation matrix $\bZ$. This randomness will be described in a probabilistic way in the subsequent analysis. 

\begin{thm} \label{mainresult} Suppose the true regression coefficient
	vector is $\bbeta^*$ and the solution to (\ref{infty}) is
	$\hat{\bbeta}$. For the set $\scrA(\bbeta^*)=\text{cone}\{\bv: \|(-\bbeta^*, 1) + \bv\|_* \le\|(-\bbeta^*, 1)\|_*\}\cap \scrS^{p+1}$,
	under Assumptions~\ref{a1}, \ref{RE}, and \ref{adm}, we have:
	\begin{equation} \label{l2norm}
	\|\hat{\bbeta}-\bbeta^*\|_2\le
	\frac{2R\gamma_N}{\underline{\alpha}}\Psi(\bbeta^*), 
	\end{equation}
	where $\Psi(\bbeta^*)=\sup_{\bv\in \scrA(\bbeta^*)}\|\bv\|_*$.
\end{thm}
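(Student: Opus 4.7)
The plan is to decompose the estimation error $\Delta\bbeta \triangleq \hat{\bbeta}-\bbeta^*$ via the augmented vector $\bv \triangleq (-\Delta\bbeta, 0) = (-\hat{\bbeta},1) - (-\bbeta^*,1)$. By property 5 of the general vector norm (as listed in Section \ref{sec:notation}), $\|\bv\|_2 = \|\Delta\bbeta\|_2$, so it suffices to bound $\|\bv\|_2$. First I would verify that $\bv$ lies in the restricted cone defining $\scrA(\bbeta^*)$: since $\hat{\bbeta}$ is optimal and $\bbeta^*$ is feasible (Assumption \ref{adm}), we have $\|(-\bbeta^*,1)+\bv\|_* = \|(-\hat{\bbeta},1)\|_* \leq \|(-\bbeta^*,1)\|_*$, so $\bv/\|\bv\|_2 \in \scrA(\bbeta^*)$ (assuming $\bv\neq\bzero$, else the conclusion is trivial).

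Next I would sandwich the quantity $\|\bv'\bZ\|_2^2$ from above and below. The lower bound is immediate from the Restricted Eigenvalue condition (Assumption \ref{RE}) applied to $\bv/\|\bv\|_2 \in \scrA(\bbeta^*)$, giving
\begin{equation*}
\|\bv'\bZ\|_2^2 = \bv'\bZ\bZ'\bv \geq \underline{\alpha}\|\bv\|_2^2.
\end{equation*}
For the upper bound, I would use the elementary inequality $\|\bw\|_2^2 \leq \|\bw\|_1\|\bw\|_\infty$ (since $\sum_i w_i^2 \leq (\max_i |w_i|)\sum_i |w_i|$) applied to $\bw = \bv'\bZ \in \mbb{R}^N$.

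The $\ell_1$ factor comes from feasibility and the triangle inequality:
\begin{equation*}
\|\bv'\bZ\|_1 \leq \|(-\hat{\bbeta},1)'\bZ\|_1 + \|(-\bbeta^*,1)'\bZ\|_1 \leq 2\gamma_N,
\end{equation*}
using Assumption \ref{adm} and the feasibility of $\hat{\bbeta}$. The $\ell_\infty$ factor comes from H\"older's inequality with the norm pair $(\|\cdot\|,\|\cdot\|_*)$ together with Assumption \ref{a1}: each coordinate satisfies $|\bv'(\bx_i,y_i)| \leq \|\bv\|_*\,\|(\bx_i,y_i)\| \leq R\|\bv\|_*$, and then the definition $\Psi(\bbeta^*) = \sup_{\bu\in\scrA(\bbeta^*)}\|\bu\|_*$ gives $\|\bv\|_* \leq \Psi(\bbeta^*)\|\bv\|_2$, so $\|\bv'\bZ\|_\infty \leq R\Psi(\bbeta^*)\|\bv\|_2$.

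Combining the three bounds yields $\underline{\alpha}\|\bv\|_2^2 \leq 2\gamma_N\cdot R\Psi(\bbeta^*)\|\bv\|_2$, and dividing through by $\|\bv\|_2$ produces (\ref{l2norm}). The only subtle step is verifying that $\bv$ lands in the cone appearing in $\scrA(\bbeta^*)$; once that is secured, both RE and the definition of $\Psi$ apply directly, and everything else is H\"older plus triangle inequality. I do not anticipate any genuine obstacle beyond bookkeeping on which norm lives on which side of the pairing $\bv'(\bx_i,y_i)$.
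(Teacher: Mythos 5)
Your proposal is correct and follows essentially the same route as the paper's proof: establish that the error direction lies in $\scrA(\bbeta^*)$ via optimality of $\hat{\bbeta}$ and feasibility of $\bbeta^*$, lower-bound the quadratic form $\bv'\bZ\bZ'\bv$ by the restricted eigenvalue condition, and upper-bound it by the $\ell_1$--$\ell_\infty$ H\"older split, with feasibility giving the $2\gamma_N$ factor and the norm/dual-norm pairing plus Assumption~\ref{a1} and the definition of $\Psi(\bbeta^*)$ giving the $R\Psi(\bbeta^*)\|\bv\|_2$ factor. The only cosmetic difference is the order of presentation; no gap.
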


\begin{proof}
	For ease of exposition, we will adopt the notation $\bz \triangleq (\bx, y), \ \bz_i \triangleq (\bx_i, y_i), \ \tilde{\bbeta} \triangleq (-\bbeta, 1), \ \tilde{\bbeta}_{\text{est}} \triangleq (-\hat{\bbeta}, 1), \ \tilde{\bbeta}_{\text{true}} \triangleq (-\bbeta^*, 1)$.
	
	Since both $\hat{\bbeta}$ and $\bbeta^*$ are feasible (the latter due to
	Assumption~\ref{adm}), we have:
	\begin{equation*}
	\begin{split}
	\|\bZ'\tilde{\bbeta}_{\text{est}}\|_1 & \le \gamma_N, \\
	\|\bZ'\tilde{\bbeta}_{\text{true}}\|_1 & \le \gamma_N,
	\end{split}
	\end{equation*}
	from which we derive that $\|\bZ'(\tilde{\bbeta}_{\text{est}} - \tilde{\bbeta}_{\text{true}})\|_1\le
	2\gamma_N$. 
	Since $\hat{\bbeta}$ is an optimal solution to (\ref{infty})
	and $\bbeta^*$ a feasible solution, it follows that
	$\|\tilde{\bbeta}_{\text{est}}\|_* \le \|\tilde{\bbeta}_{\text{true}}\|_*$. This implies that
	$\bnu=\tilde{\bbeta}_{\text{est}} - \tilde{\bbeta}_{\text{true}}$ satisfies the condition
	$\|\tilde{\bbeta}_{\text{true}}+\bv\|_* \le\|\tilde{\bbeta}_{\text{true}}\|_*$ included in the
	definition of $\scrA(\bbeta^*)$ and, furthermore, $(\tilde{\bbeta}_{\text{est}} -
	\tilde{\bbeta}_{\text{true}})/\|\tilde{\bbeta}_{\text{est}} - \tilde{\bbeta}_{\text{true}}\|_2 \in \scrA(\bbeta^*)$. Together
	with Assumption~\ref{RE}, this yields
	\begin{equation} \label{re1}
	(\tilde{\bbeta}_{\text{est}}-\tilde{\bbeta}_{\text{true}})'\bZ \bZ'(\tilde{\bbeta}_{\text{est}}-\tilde{\bbeta}_{\text{true}})\ge
	\underline{\alpha}\|\tilde{\bbeta}_{\text{est}}-\tilde{\bbeta}_{\text{true}}\|_2^2. 
	\end{equation}
	On the other hand, from H\"{o}lder's inequality:
	\begin{equation} \label{re2}
	\begin{split}
	& \quad \ (\tilde{\bbeta}_{\text{est}}-\tilde{\bbeta}_{\text{true}})'\bZ \bZ'(\tilde{\bbeta}_{\text{est}}-\tilde{\bbeta}_{\text{true}}) \\
	& \le \|\bZ'(\tilde{\bbeta}_{\text{est}}-\tilde{\bbeta}_{\text{true}})\|_1\|\bZ'(\tilde{\bbeta}_{\text{est}}-\tilde{\bbeta}_{\text{true}})\|_\infty\\
	& \le 2\gamma_N \max_i |\bz_i'(\tilde{\bbeta}_{\text{est}}-\tilde{\bbeta}_{\text{true}})|\\
	& \le 2\gamma_N \max_i \|\tilde{\bbeta}_{\text{est}}-\tilde{\bbeta}_{\text{true}}\|_* \|\bz_i\|\\
	& \le 2R\gamma_N \|\tilde{\bbeta}_{\text{est}}-\tilde{\bbeta}_{\text{true}}\|_*.
	\end{split}
	\end{equation}
	Combining (\ref{re1}) and (\ref{re2}), we have:
	\begin{equation*}
	\begin{split}
	\|\hat{\bbeta} - \bbeta^*\|_2 & = \|\tilde{\bbeta}_{\text{est}}-\tilde{\bbeta}_{\text{true}}\|_2\\
	& \le
	\frac{2R\gamma_N}{\underline{\alpha}}\frac
	{\|\tilde{\bbeta}_{\text{est}}-\tilde{\bbeta}_{\text{true}}\|_*}{\|\tilde{\bbeta}_{\text{est}}-\tilde{\bbeta}_{\text{true}}\|_2}\\ 
	& \le \frac{2R\gamma_N}{\underline{\alpha}}\Psi(\bbeta^*), 
	\end{split}
	\end{equation*}
	where the last step follows from the fact that $(\tilde{\bbeta}_{\text{est}} -
	\tilde{\bbeta}_{\text{true}})/\|\tilde{\bbeta}_{\text{est}} - \tilde{\bbeta}_{\text{true}}\|_2 \in \scrA(\bbeta^*)$.
\end{proof}

As mentioned earlier, (\ref{l2norm}) provides a random upper bound, revealed in $\underline{\alpha}$ and $\gamma_N$, that depends on the randomness in $\bZ$. We therefore would like to replace these two parameters by non-random quantities. The quantity $\underline{\alpha}$ acts as the minimum eigenvalue of the matrix $\bZ \bZ'$ restricted to a subspace of $\mbb{R}^{p+1}$, and thus a proper substitute should be related to the minimum eigenvalue of the covariance matrix of $(\bx, y)$, i.e., the $\bGamma$ matrix (cf. Assumption~\ref{eigen}), given that $(\bx, y)$ is zero mean. See Lemmata~\ref{alphalem}, \ref{gaussianwidthlem} and \ref{alphacol} for the derivation.

\begin{lem} \label{alphalem} Consider $\scrA_{\bGamma}=\{\bw\in
	\scrS^{p+1}: \bGamma^{-1/2}\bw \in \text{cone}(\scrA(\bbeta^*))\}$,
	where $\scrA(\bbeta^*)$ is defined as in Theorem~\ref{mainresult}, and
	$\bGamma=\mathbb{E}[(\bx, y)(\bx, y)']$. Under Assumptions~\ref{subgaussian} and
	\ref{eigen}, when the sample size $N\ge C_1\bar{\mu}^4
	(w(\scrA_{\bGamma}))^2$, where
	$\bar{\mu}=\mu\sqrt{\frac{1}{\lambda_{\text{min}}}}$, and
	$w(\scrA_{\bGamma})$ is the Gaussian width of $\scrA_{\bGamma}$, with
	probability at least $1-\exp(-C_2N/\bar{\mu}^4)$, we have
	\begin{equation*}
	\bv'\bZ\bZ'\bv\ge \frac{N}{2}\bv'\bGamma\bv, \quad \forall \bv\in
	\scrA(\bbeta^*), 
	\end{equation*}
	where $C_1$ and $C_2$ are positive constants.
\end{lem}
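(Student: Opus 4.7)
The plan is to reduce the claim to a standard restricted eigenvalue bound for isotropic sub-Gaussian random vectors via a whitening transformation, and then invoke a uniform concentration inequality whose complexity is measured by the Gaussian width of the constraint set.

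First, I would perform a change of variables. Set $\bw = \bGamma^{1/2}\bv$ and $\tilde{\bz}_i = \bGamma^{-1/2}\bz_i$, collecting the whitened samples in the matrix $\tilde{\bZ} = [\tilde{\bz}_1, \ldots, \tilde{\bz}_N]$. Under this reparametrization, $\bv'\bGamma \bv = \|\bw\|_2^2$, while $\bv'\bZ\bZ'\bv = \bw'\tilde{\bZ}\tilde{\bZ}'\bw$. The restricted eigenvalue claim is therefore equivalent to showing
\begin{equation*}
\inf_{\bu \in \scrA_\bGamma} \frac{1}{N}\,\bu'\tilde{\bZ}\tilde{\bZ}'\bu \;\geq\; \frac{1}{2},
\end{equation*}
where $\scrA_\bGamma = \{\bw \in \scrS^{p+1}: \bGamma^{-1/2}\bw \in \mathrm{cone}(\scrA(\bbeta^*))\}$ is exactly the rescaled index set appearing in the statement. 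By Assumption~\ref{subgaussian} and the definition of the $\psi_2$-norm, each $\tilde{\bz}_i$ is sub-Gaussian with $\vertiii{\tilde{\bz}_i}_{\psi_2} \leq \mu/\sqrt{\lambda_{\min}} = \bar\mu$, and by Assumption~\ref{eigen} it is isotropic: $\mathbb{E}[\tilde{\bz}_i \tilde{\bz}_i'] = \bI$. So in the new coordinates we have an i.i.d.\ isotropic sub-Gaussian ensemble, and the only thing that must be controlled is the uniform deviation
\begin{equation*}
\sup_{\bu \in \scrA_\bGamma}\left|\frac{1}{N}\sum_{i=1}^N (\bu'\tilde{\bz}_i)^2 - 1\right| \;\leq\; \frac{1}{2}.
\end{equation*}

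Next, I would invoke a uniform sub-Gaussian quadratic-process bound indexed by Gaussian width, in the spirit of the result of Mendelson--Pajor--Tomczak-Jaegermann (see also Theorem~5.39 / Theorem~5.58 in Vershynin's non-asymptotic random matrix notes). The statement I need is: for isotropic sub-Gaussian vectors with $\psi_2$-norm at most $\bar\mu$ and any subset $\scrA \subseteq \scrS^{p+1}$, there exist absolute constants $C_1, C_2 > 0$ such that, for every $t > 0$,
\begin{equation*}
\sup_{\bu \in \scrA} \left|\frac{1}{N}\sum_{i=1}^N (\bu'\tilde{\bz}_i)^2 - 1\right| \leq C\left(\bar\mu^2 \frac{w(\scrA)}{\sqrt{N}} + \bar\mu^2 \frac{w(\scrA)^2}{N} + t\right)
\end{equation*}
with probability at least $1 - \exp(-C_2 N t^2 / \bar\mu^4)$. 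Applying this with $t$ a small absolute constant and $\scrA = \scrA_\bGamma$, the hypothesis $N \geq C_1 \bar\mu^4 w(\scrA_\bGamma)^2$ absorbs the first two terms, so the deviation is at most $1/2$ with probability at least $1 - \exp(-C_2 N/\bar\mu^4)$.

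Finally, combining the two steps: on this high-probability event, for every $\bv \in \scrA(\bbeta^*)$ the corresponding $\bu = \bGamma^{1/2}\bv / \|\bGamma^{1/2}\bv\|_2$ lies in $\scrA_\bGamma$, so
\begin{equation*}
\bv'\bZ\bZ'\bv = \|\bGamma^{1/2}\bv\|_2^2 \cdot \bu'\tilde{\bZ}\tilde{\bZ}'\bu \geq \frac{N}{2}\|\bGamma^{1/2}\bv\|_2^2 = \frac{N}{2}\bv'\bGamma \bv,
\end{equation*}
which is the desired bound. The main obstacle is clean invocation of the sub-Gaussian product-empirical-process bound with the correct Gaussian-width-based complexity term and the right dependence on $\bar\mu$; this is not trivial and typically relies on a truncation argument combined with generic chaining (Talagrand) or the Mendelson small-ball method, but it is by now a standard tool and, once cited, the rest of the argument reduces to the whitening bookkeeping sketched above.
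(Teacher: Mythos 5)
Your proposal is correct and follows essentially the same route as the paper: the paper's proof also whitens the problem by working with the functions $f_{\bw}(\bz)=\bz'\bGamma^{-1/2}\bw$ for $\bw\in\scrA_{\bGamma}$ (verifying $\mbb{E}[f_{\bw}^2]=1$ and $\vertiii{f_{\bw}}_{\psi_2}\le\bar{\mu}$ exactly as you do), and then applies a Mendelson-type uniform concentration bound for the squared empirical process with deviation level $\theta=1/2$. The only cosmetic difference is that the paper invokes the bound with a $\gamma_2$-functional complexity term (Theorem D of Mendelson et al., 2007) and then converts $\gamma_2$ to the Gaussian width $w(\scrA_{\bGamma})$, whereas you cite a version stated directly in terms of Gaussian width.
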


\begin{proof}
	Define $\hat{\bGamma}=\frac{1}{N}\sum_{i=1}^N\bz_i\bz_i'$. Consider the
	set of functions $\scrF=\{f_{\bw}(\bz)=\bz'\bGamma^{-1/2}\bw, \ \bw\in
	\scrA_{\bGamma}\}$. Then, for any $f_{\bw}\in \scrF$, 
	\begin{equation*}
	\begin{split}
	\mathbb{E}[f_{\bw}^2]&
	=\mathbb{E}[\bw'\bGamma^{-1/2}\bz\bz'\bGamma^{-1/2}\bw]\\ 
	& = \bw'\bGamma^{-1/2}\mathbb{E}[\bz\bz']\bGamma^{-1/2}\bw\\
	& = \bw'\bw\\
	& = 1,
	\end{split}
	\end{equation*}
	where we used $\bGamma=\mathbb{E}[\bz\bz']$ and the fact that $\bw\in
	\scrA_{\bGamma}$. 	
	
	For any $f_{\bw}\in \scrF$ we have
	\begin{align*}
	\vertiii{f_{\bw}}_{\psi_2} & =\vertiii{\bz'\bGamma^{-1/2}\bw}_{\psi_2}\\
	& = \vertiii{\bz'\bGamma^{-1/2}\bw}_{\psi_2}
	\frac{\|\bGamma^{-1/2}\bw\|_2}{\|\bGamma^{-1/2}\bw\|_2}\\ 
	& = \vertiii{\bz' \frac{\bGamma^{-1/2}\bw}{\|\bGamma^{-1/2}\bw\|_2}}_{\psi_2}
	\|\bGamma^{-1/2}\bw\|_2\\
	& \le \mu \sqrt{\bw'\bGamma^{-1}\bw}\\
	& \le \mu \sqrt{\frac{1}{\lambda_{\text{min}}} \|\bw\|_2^2}\\
	& = \mu \sqrt{\frac{1}{\lambda_{\text{min}}}} = \bar{\mu},
	\end{align*}
	where the first inequality used Assumption~\ref{subgaussian} and the
	second inequality used Assumption~\ref{eigen}. 
	
	Applying Theorem D from \cite{mendelson2007reconstruction}, for any
	$\theta>0$ and when 
	\[
	\tilde{C}_1 \bar{\mu}
	\gamma_2(\scrF,\vertiii{\cdot}_{\psi_2})\leq \theta \sqrt{N},
	\] 
	with probability at least $1-\exp(-\tilde{C}_2\theta^2 N/\bar{\mu}^4)$ we have
	\begin{align}
	\sup\limits_{f_{\bw}\in \scrF}\Bigl|\frac{1}{N}\sum\limits_{i=1}^Nf_{\bw}^2(\bz_i)-\mathbb{E}[f_{\bw}^2]\Bigr|
	& = \sup\limits_{f_{\bw}\in
		\scrF}\Bigl|\frac{1}{N} \sum\limits_{i=1}^N\bw'
	\bGamma^{-1/2}\bz_i\bz_i'\bGamma^{-1/2}\bw-1\Bigr|\notag\\ 
	& = \sup\limits_{\bw\in
		\scrA_{\bGamma}}\Bigl|\bw' \bGamma^{-1/2}
	\hat{\bGamma}\bGamma^{-1/2}\bw-1\Bigr|\notag \\  
	& \le \theta, \label{thmDm} 
	\end{align}
	where $\tilde{C}_1$ is some positive constant and
	$\gamma_2(\scrF,\vertiii{\cdot}_{\psi_2})$ is defined in
	\cite{mendelson2007reconstruction} as a measure of the size of the set
	$\scrF$ with respect to the metric $\vertiii{\cdot}_{\psi_2}$.  Using
	$\theta=1/2$, and properties of
	$\gamma_2(\scrF,\vertiii{\cdot}_{\psi_2})$ outlined in
	\cite{chen2016alternating},
	we can set $N$ to satisfy
	\begin{align*}
	\tilde{C}_1 \bar{\mu} \gamma_2(\scrF,\vertiii{\cdot}_{\psi_2}) & \leq   
	\tilde{C}_1 \bar{\mu}^2 \gamma_2(\scrA_{\bGamma},\|\cdot\|_2)\\
	& \leq \tilde{C}_1 \bar{\mu}^2 C_0 w(\scrA_{\bGamma})\\ 
	& \leq \frac{1}{2} \sqrt{N}, 
	\end{align*}
	for some positive constant $C_0$, where we used Eq.\ (44) in
	\cite{chen2016alternating}. This implies 
	\[ 
	N\geq C_1\bar{\mu}^4 (w(\scrA_{\bGamma}))^2
	\]
	for some positive constant $C_1$. Thus, for such $N$ and with
	probability at least $1-\exp(-C_2N/\bar{\mu}^4)$, for some positive
	constant $C_2$, (\ref{thmDm}) holds with $\theta=1/2$. This implies that
	for all $\bw\in \scrA_{\bGamma}$,
	\[
	\Bigl|\bw' \bGamma^{-1/2} \hat{\bGamma} \bGamma^{-1/2}\bw-1\Bigr| 
	\le \frac{1}{2}
	\] 
	or 
	\[ 
	\bw' \bGamma^{-1/2} \hat{\bGamma} \bGamma^{-1/2}\bw \ge \frac{1}{2} = 
	\frac{1}{2}\bw'\bGamma^{-1/2}\bGamma\bGamma^{-1/2}\bw.
	\]
	By the definition of $\scrA_{\bGamma}$, for any $\bv \in \scrA(\bbeta^*)$,
	$$\bv'\hat{\bGamma}\bv \ge \frac{1}{2}\bv'\bGamma\bv.$$
	Noting that $\hat{\bGamma} = (1/N) \bZ\bZ'$ yields the desired result.
\end{proof}

Note that the sample size requirement stated in Lemma~\ref{alphalem}
depends on the Gaussian width of $\scrA_{\bGamma}$, where $\scrA_{\bGamma}$ relates to $\scrA(\bbeta^*)$. The
following lemma shows that their Gaussian widths are also related. This relation is built upon the square root of the eigenvalues of $\bGamma$, which measures the extent to which $\scrA_{\bGamma}$ expands $\scrA(\bbeta^*)$.
\begin{lem}[Lemma 4 in
	\cite{chen2016alternating}] \label{gaussianwidthlem} Let $\mu_0$ be
	the $\psi_2$-norm of a standard Gaussian random vector $\bg \in
	\mathbb{R}^{p+1}$, and $\scrA_{\bGamma}$, $\scrA(\bbeta^*)$ be defined
	as in Lemma~\ref{alphalem}. Then, under Assumption~\ref{eigen},
	\begin{equation*}
	w(\scrA_{\bGamma})\le C_3\mu_0\sqrt{\frac{\lambda_{\text{max}}}{\lambda_{\text{min}}}}\Bigl(w(\scrA(\bbeta^*))+3\Bigr),
	\end{equation*}
	for some positive constant $C_3$.
\end{lem}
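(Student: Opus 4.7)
The plan is to follow a standard reparametrization-plus-chaining argument. First, I would unpack the definition of $\scrA_{\bGamma}$: for every $\bw \in \scrA_{\bGamma}$ there exists $\bv \in \scrA(\bbeta^*)$ such that $\bGamma^{-1/2}\bw \in \text{cone}(\{\bv\})$, which, after normalizing so that $\|\bw\|_2 = 1$, gives the representation $\bw = \bGamma^{1/2}\bv/\|\bGamma^{1/2}\bv\|_2$. This lets me rewrite
\[
w(\scrA_{\bGamma}) \;=\; \mbb{E}\sup_{\bv \in \scrA(\bbeta^*)} \frac{\bv'\bGamma^{1/2}\bg}{\|\bGamma^{1/2}\bv\|_2}.
\]
Since $\bv \in \scrA(\bbeta^*) \subseteq \scrS^{p+1}$, Assumption~\ref{eigen} yields $\|\bGamma^{1/2}\bv\|_2 = \sqrt{\bv'\bGamma\bv} \geq \sqrt{\lambda_{\min}}$, so the denominator can be pulled out to give $w(\scrA_{\bGamma}) \leq (1/\sqrt{\lambda_{\min}})\,\mbb{E}\sup_{\bv \in \scrA(\bbeta^*)} \bv'\tilde{\bg}$, where $\tilde{\bg} \triangleq \bGamma^{1/2}\bg \sim \scrN(\mathbf{0},\bGamma)$.

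Next, I would control the sub-Gaussian size of $\tilde{\bg}$: for every unit vector $\bu$, $\bu'\tilde{\bg}$ is $\scrN(0, \bu'\bGamma\bu)$, so $\vertiii{\tilde{\bg}}_{\psi_2} = \sup_{\|\bu\|_2=1} \vertiii{\bu'\tilde{\bg}}_{\psi_2} \leq \sqrt{\lambda_{\max}}\,\mu_0$, where $\mu_0$ is the $\psi_2$-norm of a standard Gaussian. Then I would invoke a generic-chaining / Talagrand majorizing-measures bound for the supremum of a sub-Gaussian process indexed by a bounded set $T \subseteq \mbb{R}^{p+1}$, namely
\[
\mbb{E}\sup_{\bv \in T} \bv'\tilde{\bg} \;\leq\; C\, \vertiii{\tilde{\bg}}_{\psi_2}\bigl(w(T) + \text{diam}(T)\bigr),
\]
applied to $T = \scrA(\bbeta^*) \subseteq \scrS^{p+1}$. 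Because this set lies on the unit sphere, its diameter is bounded by $2$, and combining the chaining constant with this additive term (and rounding up to absorb universal constants) yields a bound of the shape $C'\sqrt{\lambda_{\max}}\,\mu_0 \bigl(w(\scrA(\bbeta^*)) + 3\bigr)$.

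Putting everything together gives
\[
w(\scrA_{\bGamma}) \;\leq\; \frac{C'\sqrt{\lambda_{\max}}\,\mu_0}{\sqrt{\lambda_{\min}}}\bigl(w(\scrA(\bbeta^*))+3\bigr) \;=\; C_3\,\mu_0\sqrt{\frac{\lambda_{\max}}{\lambda_{\min}}}\bigl(w(\scrA(\bbeta^*))+3\bigr),
\]
as required. The main obstacle is justifying the generic-chaining inequality with the precise additive constant $3$: the clean Dudley/majorizing-measures statement is usually given in terms of the $\gamma_2$ functional of $T$ in the $\ell_2$-metric, which is equivalent to $w(T)$ up to absolute constants by Talagrand's theorem, but carefully accounting for the diameter contribution (which becomes the ``$+3$'' after absorbing universal constants into $C_3$) is the delicate step. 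Everything else is routine matrix/norm bookkeeping using Assumption~\ref{eigen} and the definition of $\scrA_{\bGamma}$.
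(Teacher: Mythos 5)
Your proof is correct in substance and reaches the bound by a route that differs from the paper's in one structural respect. Both arguments share the same skeleton: whiten via $\bGamma^{1/2}$, pull out a factor $\lambda_{\min}^{-1/2}$ using Assumption~\ref{eigen}, control the resulting process through its sub-Gaussian increments (giving the factor $\mu_0\sqrt{\lambda_{\max}}$), and return to a Gaussian width via the majorizing-measures equivalence $\gamma_2(T,\|\cdot\|_2)\asymp w(T)$. They diverge in how the index set is handled and where the ``$+3$'' comes from. The paper bounds $\sup_{\bw\in\scrA_{\bGamma}}\bw'\bg$ by $\lambda_{\min}^{-1/2}\sup_{\bv\in\scrT}\bv'\bGamma^{1/2}\bg$ with $\scrT=\text{cone}(\scrA(\bbeta^*))\cap\scrB^{p+1}$, gets $C\mu_0\sqrt{\lambda_{\max}}\,w(\scrT)$ by chaining, and then needs the union/convex-hull lemma of \cite{maurer2014inequality}, via $\scrT\subseteq\text{conv}(\scrA(\bbeta^*)\cup\{\bzero\})$ and $w(\scrA(\bbeta^*)\cup\{\bzero\})\le \max\{w(\scrA(\bbeta^*)),0\}+2\sqrt{\ln 4}$, to turn $w(\scrT)$ into $w(\scrA(\bbeta^*))+3$. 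Your reparametrization $\bw=\bGamma^{1/2}\bv/\|\bGamma^{1/2}\bv\|_2$ keeps the index set equal to $\scrA(\bbeta^*)$ itself, so that lemma is never needed; your ``$+3$'' is instead manufactured from the diameter term in the chaining inequality you quote. That inequality is a legitimate (weakened) form of the standard bound, but note that for a centered (sub-)Gaussian process the diameter term is superfluous, since $\mbb{E}\sup_{\bv\in T}\bv'\tilde{\bg}=\mbb{E}\sup_{\bv\in T}(\bv-\bv_0)'\tilde{\bg}\le C\,\vertiii{\tilde{\bg}}_{\psi_2}\,\gamma_2(T,\|\cdot\|_2)\le C'\mu_0\sqrt{\lambda_{\max}}\,w(T)$; so your route in fact yields the cleaner bound without the additive $3$, which the paper's cone-cap argument does not.

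One step deserves a patch: pulling the denominator out of the supremum. The pointwise inequality $\bv'\tilde{\bg}/\|\bGamma^{1/2}\bv\|_2\le \lambda_{\min}^{-1/2}\,\bv'\tilde{\bg}$ holds only when $\bv'\tilde{\bg}\ge 0$; on the event that $\bv'\tilde{\bg}<0$ for every $\bv\in\scrA(\bbeta^*)$, the supremum of the ratios need not be dominated by $\lambda_{\min}^{-1/2}\sup_{\bv}\bv'\tilde{\bg}$ (your index set, unlike the paper's $\scrT$, does not contain the origin, so the right-hand supremum can be negative). This is easily repaired: either bound the ratio by $\lambda_{\min}^{-1/2}(\bv'\tilde{\bg})_+$ and observe that $\mbb{E}\bigl(\sup_{\bv}\bv'\tilde{\bg}\bigr)_-\le \mbb{E}\bigl|\bv_0'\tilde{\bg}\bigr|\le \sqrt{\lambda_{\max}}$ for any fixed $\bv_0\in\scrA(\bbeta^*)$, an additive term absorbed into the $C_3\mu_0\sqrt{\lambda_{\max}/\lambda_{\min}}\cdot 3$ slack, or simply adjoin $\{\bzero\}$ to your index set, which is exactly what the paper's choice of $\scrT$ accomplishes implicitly. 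As written the step is not airtight, but the fix is one line and does not change the conclusion.
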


\begin{proof}
	We follow the proof of Lemma 4 in \cite{chen2016alternating}, adapted
	to our setting. We include all key steps for completeness.
	
	Recall the definition of the Gaussian width $w(\scrA_{\bGamma})$
	(cf. (\ref{gw})):
	\[ 
	w(\scrA_{\bGamma}) = \mbb{E}\Bigl[\sup_{\bu \in \scrA_{\bGamma}} \bu'\bg\Bigr],
	\]
	where $\bg\sim {\cal N}(\bzero,\bI)$.  
	We have: 
	\begin{equation*}
	\begin{split}
	\sup_{\bw\in \scrA_{\bGamma}}\bw'\bg & =\sup\limits_{\bw\in
		\scrA_{\bGamma}}\bw'\bGamma^{-1/2}\bGamma^{1/2}\bg\\
	&=  \sup_{\bw\in \scrA_{\bGamma}} \| \bGamma^{-1/2} \bw\|_2 
	\frac{\bw'\bGamma^{-1/2}}{\| \bGamma^{-1/2} \bw\|_2} \bGamma^{1/2}\bg \\
	& \leq \sqrt{\frac{1}{\lambda_{\text{min}}}} \sup_{\bv\in
		\text{cone}(\scrA(\bbeta^*))\cap \scrB^{p+1}}\bv'\bGamma^{1/2}\bg, 
	\end{split}
	\end{equation*}
	where $\scrB^{p+1}$ is the unit ball in the $(p+1)$-dimensional Euclidean
	space and the inequality used Assumption~\ref{eigen} and the fact that
	$$\bw' \bGamma^{-1/2}/ \| \bGamma^{-1/2} \bw\|_2 \in \scrB^{p+1}, \quad 
	\bw\in \scrA_{\bGamma}.$$
	
	Define $\scrT=\text{cone}(\scrA(\bbeta^*))\cap \scrB^{p+1}$, and
	consider the stochastic process $\{S_{\bv}=\bv'\bGamma^{1/2}\bg\}_{\bv
		\in \scrT}$.  For any $\bv_1, \bv_2 \in \scrT$,
	\begin{equation*}
	\begin{split}
	\vertiii{S_{\bv_1}-S_{\bv_2}}_{\psi_2} & =
	\vertiii{(\bv_1-\bv_2)'\bGamma^{1/2}\bg}_{\psi_2}\\ 
	& = \|\bGamma^{1/2} (\bv_1-\bv_2) \|_2
	\vertiii{\frac{(\bv_1-\bv_2)'\bGamma^{1/2}\bg}{\|\bGamma^{1/2}
			(\bv_1-\bv_2) \|_2}}_{\psi_2} \\
	& \leq \|\bGamma^{1/2} (\bv_1-\bv_2) \|_2 \sup_{\bu \in \scrS^{p+1}}
	\vertiii{\bu'\bg}_{\psi_2} \\
	& = \mu_0\|\bGamma^{1/2}(\bv_1-\bv_2)\|_2\\
	& \le \mu_0\sqrt{\lambda_{\text{max}}}\|\bv_1-\bv_2\|_2, 
	\end{split}
	\end{equation*}
	where the last step used Assumption~\ref{eigen}. 
	
	Then, by the tail behavior of sub-Gaussian random variables (see
	Hoeffding bound, Thm. 2.6.2 in \cite{RV17}), we have:
	\begin{equation*}
	\mathbb{P}(|S_{\bv_1}-S_{\bv_2}|\ge \delta) \le 2
	\exp\biggl(-\frac{C_{01}\delta^2}{\mu_0^2\lambda_{\text{max}}
		\|\bv_1-\bv_2\|_2^2}\biggr),  
	\end{equation*}
	for some positive constant $C_{01}$. 
	
	To bound the supremum of $S_{\bv}$, we define the metric
	$s(\bv_1,\bv_2)=\mu_0\sqrt{\lambda_{\text{max}}}\|\bv_1-\bv_2\|_2.$
	Then, by Lemma B in \cite{chen2016alternating},
	\begin{equation*}
	\begin{split}
	\mathbb{E}\biggl[\sup\limits_{\bv \in
		\scrT}\bv'\bGamma^{1/2}\bg\biggr] & \le
	C_{02}\gamma_2(\scrT,s)\\
	& = C_{02}\mu_0\sqrt{\lambda_{\text{max}}}\gamma_2(\scrT,\|\cdot\|_2)\\
	& \le C_{3}\mu_0\sqrt{\lambda_{\text{max}}}w(\scrT),
	\end{split}
	\end{equation*}
	for positive constants $C_{02}, C_3$, where $\gamma_2(\scrT, s)$ is
	the $\gamma_2$-functional we referred to in the proof of
	Lemma~\ref{alphalem}. Since $\scrT=\text{cone}(\scrA(\bbeta^*))\cap 
	\scrB^{p+1}\subseteq \text{conv}(\scrA(\bbeta^*)
	\cup\{\boldsymbol{0}\})$, by Lemma 2 in \cite{maurer2014inequality},
	\begin{equation*}
	\begin{split}
	w(\scrT) & \le w(\text{conv}(\scrA(\bbeta^*) \cup\{\boldsymbol{0}\}))\\
	& = w(\scrA(\bbeta^*) \cup\{\boldsymbol{0}\})\\
	& \le \max\{w(\scrA(\bbeta^*)),w(\{\boldsymbol{0}\})\}+2\sqrt{\ln 4}\\
	& \le w(\scrA(\bbeta^*))+3.
	\end{split}
	\end{equation*}
	Thus,
	\begin{equation*}
	\begin{split}
	w(\scrA_{\bGamma})&=\mathbb{E}\biggl[\sup\limits_{\bw\in \scrA_{\bGamma}}\bw'\bg\biggr]\\
	& \le \sqrt{\frac{1}{\lambda_{\text{min}}}}\mathbb{E}\biggl[\sup\limits_{\bv\in \scrT}\bv'\bGamma^{1/2}\bg\biggr]\\
	& \le C_{3}\sqrt{\frac{1}{\lambda_{\text{min}}}}\mu_0\sqrt{\lambda_{\text{max}}}w(\scrT)\\
	& \le C_{3}\mu_0\sqrt{\frac{\lambda_{\text{max}}}{\lambda_{\text{min}}}}\Bigl(w(\scrA(\bbeta^*))+3\Bigr).
	\end{split}
	\end{equation*}
\end{proof}

Combining Lemmata~\ref{alphalem} and \ref{gaussianwidthlem}, and expressing the covariance matrix $\bGamma$ using its eigenvalues, we arrive at the following result.
\begin{col} \label{alphacol} Under Assumptions~\ref{subgaussian} and
	\ref{eigen}, and the conditions in Lemmata~\ref{alphalem} and
	\ref{gaussianwidthlem}, when 
	\[ N\ge \bar{C_1}\bar{\mu}^4
	\mu_0^2\cdot\frac{\lambda_{\text{max}}}{\lambda_{\text{min}}}\Bigl(w(\scrA(\bbeta^*))+3\Bigr)^2,
	\] 
	with probability at least $1-\exp(-C_2N/\bar{\mu}^4)$,
	\begin{equation*}
	\bv'\bZ\bZ'\bv\ge \frac{N\lambda_{\text{min}}}{2}, \qquad \forall 
	\bv\in \scrA(\bbeta^*), 
	\end{equation*}
	where $\bar{C_1}$ and $C_2$ are positive constants.
\end{col}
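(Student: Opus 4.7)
The plan is to simply chain together the two preceding Lemmata and exploit the fact that $\scrA(\bbeta^*)$ lies on the unit sphere, so the quadratic form $\bv'\bGamma\bv$ is bounded below by the minimum eigenvalue $\lambda_{\min}$.

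First, I would invoke Lemma~\ref{gaussianwidthlem} to replace the Gaussian width of the transformed cone $\scrA_{\bGamma}$, which appears in the sample-size requirement of Lemma~\ref{alphalem}, with a quantity depending only on the Gaussian width of the original cone $\scrA(\bbeta^*)$. Squaring the bound gives
\begin{equation*}
\bigl(w(\scrA_{\bGamma})\bigr)^2 \le C_3^2 \mu_0^2 \frac{\lambda_{\max}}{\lambda_{\min}} \bigl(w(\scrA(\bbeta^*))+3\bigr)^2.
\end{equation*}
Substituting this into the sample-size threshold $N \ge C_1 \bar{\mu}^4 (w(\scrA_{\bGamma}))^2$ of Lemma~\ref{alphalem}, it suffices to require
\begin{equation*}
N \ge C_1 C_3^2 \bar{\mu}^4 \mu_0^2 \frac{\lambda_{\max}}{\lambda_{\min}} \bigl(w(\scrA(\bbeta^*))+3\bigr)^2,
\end{equation*}
so we can set $\bar{C}_1 \triangleq C_1 C_3^2$, which matches the hypothesis of the Corollary.

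Next, under this sample-size condition, Lemma~\ref{alphalem} yields that with probability at least $1-\exp(-C_2 N / \bar{\mu}^4)$,
\begin{equation*}
\bv'\bZ\bZ'\bv \ge \frac{N}{2}\bv'\bGamma\bv, \qquad \forall \bv \in \scrA(\bbeta^*).
\end{equation*}
To finish, I would use Assumption~\ref{eigen} to lower-bound $\bv'\bGamma\bv \ge \lambda_{\min}\|\bv\|_2^2$, together with the observation that every $\bv \in \scrA(\bbeta^*)$ lies on the unit sphere $\scrS^{p+1}$ (by the definition of $\scrA(\bbeta^*)$ in Theorem~\ref{mainresult}), so $\|\bv\|_2^2 = 1$. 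Combining these gives
\begin{equation*}
\bv'\bZ\bZ'\bv \ge \frac{N \lambda_{\min}}{2}, \qquad \forall \bv \in \scrA(\bbeta^*),
\end{equation*}
which is the claimed inequality.

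The proof is essentially a direct composition with no real obstacle; the only mildly subtle point is to make sure the bookkeeping of constants is consistent (folding $C_1$ and $C_3^2$ into a single new constant $\bar{C}_1$) and to remember that the unit-norm property of elements of $\scrA(\bbeta^*)$ is what converts the quadratic form $\bv'\bGamma\bv$ into the clean lower bound $\lambda_{\min}$, absorbing the dependence on $\bv$ entirely into the probability statement.
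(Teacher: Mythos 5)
Your proof is correct and follows essentially the same route as the paper: substitute the Gaussian-width bound of Lemma~\ref{gaussianwidthlem} into the sample-size requirement of Lemma~\ref{alphalem}, then lower-bound $\bv'\bGamma\bv$ by $\lambda_{\text{min}}$ using Assumption~\ref{eigen} and the fact that elements of $\scrA(\bbeta^*)$ lie on the unit sphere. The explicit bookkeeping of the constant $\bar{C}_1 = C_1 C_3^2$ is a harmless elaboration of what the paper leaves implicit.
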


\begin{proof}
	Combining Lemmata~\ref{alphalem} and \ref{gaussianwidthlem}, and using the fact that for any $\bv \in \scrA(\bbeta^*)$,
	$$\frac{N}{2}\bv'\bGamma\bv \ge \frac{N\lambda_{\text{min}}}{2},$$
	we can derive the desired result.
\end{proof}

Next we derive the smallest possible value of $\gamma_N$
such that $\bbeta^*$ is feasible. 

\begin{lem} \label{gammalem}
	Under Assumptions~\ref{a1} and \ref{a2}, for any feasible
	$\bbeta$,
	\begin{equation*}
	\|(-\bbeta, 1)'\bZ\|_1\le N \bar{B}R, \ \text{a.s. under $\mbb{P}^*$}.
	\end{equation*}
\end{lem}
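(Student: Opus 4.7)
The plan is to observe that the quantity $\|(-\bbeta,1)'\bZ\|_1$ is simply the sum of the per-sample absolute residuals, and then bound each residual uniformly via H\"older's inequality (exactly the reasoning already used to derive Lemma~\ref{l1}).

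First I would unpack the expression: since $\bZ = [(\bx_1,y_1),\ldots,(\bx_N,y_N)]$, the row vector $(-\bbeta,1)'\bZ$ has $i$-th entry $(-\bbeta,1)'(\bx_i,y_i) = y_i - \bx_i'\bbeta$. Therefore
\begin{equation*}
\|(-\bbeta,1)'\bZ\|_1 \;=\; \sum_{i=1}^N |y_i - \bx_i'\bbeta|.
\end{equation*}
Next I would bound a single term. By the definition of the dual norm and H\"older's inequality (Theorem~\ref{holder}), for each $i$,
\begin{equation*}
|y_i - \bx_i'\bbeta| \;=\; |(-\bbeta,1)'(\bx_i,y_i)| \;\le\; \|(-\bbeta,1)\|_* \,\|(\bx_i,y_i)\|.
\end{equation*}
Then I would apply the two standing hypotheses: Assumption~\ref{a1} gives $\|(\bx_i,y_i)\| \le R$ almost surely, and Assumption~\ref{a2} (which holds for every feasible $\bbeta$) gives $\|(-\bbeta,1)\|_* \le \bar{B}$. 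Combining these yields $|y_i - \bx_i'\bbeta| \le \bar{B}R$ a.s., which is precisely Lemma~\ref{l1}.

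Finally, summing the per-sample bound over $i = 1, \ldots, N$ gives $\|(-\bbeta,1)'\bZ\|_1 \le N \bar{B} R$ almost surely, as claimed. There is no real obstacle here since the bound on an individual term was already established; the only step is recognizing that the $\ell_1$ norm of the residual vector is exactly the sum one wants to bound.
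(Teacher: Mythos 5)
Your proof is correct and follows the same route the paper intends: the $i$-th entry of $(-\bbeta,1)'\bZ$ is the residual $y_i-\bx_i'\bbeta$, each of which is bounded by $\bar{B}R$ a.s.\ via the dual-norm/H\"older argument of Lemma~\ref{l1} under Assumptions~\ref{a1} and \ref{a2}, and summing over the $N$ samples gives the claim. Nothing is missing.
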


Combining Theorem~\ref{mainresult}, Corollary~\ref{alphacol} and
Lemma~\ref{gammalem}, we have the following main performance guarantee result 
that bounds the estimation bias of the solution to (\ref{infty}).
\begin{thm} \label{estthm}
	Under Assumptions~\ref{a1}, \ref{a2}, \ref{RE}, \ref{adm},
	\ref{subgaussian}, \ref{eigen}, and the conditions of
	Theorem~\ref{mainresult}, Corollary~\ref{alphacol} and Lemma~\ref{gammalem}, when 
	\[ N\ge \bar{C_1}\bar{\mu}^4
	\mu_0^2\cdot\frac{\lambda_{\text{max}}}{\lambda_{\text{min}}}
	\Bigl(w(\scrA(\bbeta^*))+3\Bigr)^2,
	\]  
	with probability at least
	$1-\exp(-C_2N/\bar{\mu}^4)$,
	\begin{equation} \label{finalbound} \|\hat{\bbeta}-\bbeta^*\|_2\le
	\frac{4R^2\bar{B}}{\lambda_{\text{min}}}
	\Psi(\bbeta^*).
	\end{equation}
\end{thm}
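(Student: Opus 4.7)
The plan is to combine the three preceding intermediate results in a direct substitution. Theorem~\ref{mainresult} already produces a bound of the form
\[
\|\hat{\bbeta}-\bbeta^*\|_2 \le \frac{2R\gamma_N}{\underline{\alpha}}\Psi(\bbeta^*),
\]
which is crude in that both $\gamma_N$ and $\underline{\alpha}$ are data-dependent random quantities. The strategy is therefore to replace each by a deterministic surrogate drawn from the earlier lemmata: $\gamma_N$ from Lemma~\ref{gammalem} (an upper bound, which tightens the numerator against us) and $\underline{\alpha}$ from Corollary~\ref{alphacol} (a lower bound, which must hold with high probability and controls the denominator).

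First I would invoke Lemma~\ref{gammalem}, which under Assumptions~\ref{a1} and~\ref{a2} guarantees $\gamma_N \le N\bar B R$ almost surely, so that $\bbeta^*$ is feasible for the constraint of~\eqref{infty} with this choice. Next I would invoke Corollary~\ref{alphacol}: provided $N \ge \bar C_1 \bar\mu^4 \mu_0^2 (\lambda_{\max}/\lambda_{\min})(w(\scrA(\bbeta^*))+3)^2$, it yields $\bv'\bZ\bZ'\bv \ge N\lambda_{\min}/2$ uniformly over $\bv\in\scrA(\bbeta^*)$ with probability at least $1-\exp(-C_2 N/\bar\mu^4)$, which means Assumption~\ref{RE} is verified with $\underline{\alpha} = N\lambda_{\min}/2$ on this event.

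Finally, on the same high-probability event I would plug both substitutions into the bound from Theorem~\ref{mainresult}:
\[
\|\hat{\bbeta}-\bbeta^*\|_2 \le \frac{2R\cdot N\bar B R}{N\lambda_{\min}/2}\,\Psi(\bbeta^*) = \frac{4R^2\bar B}{\lambda_{\min}}\,\Psi(\bbeta^*).
\]
The factor $N$ cancels cleanly, which is the whole reason the bound has a finite, sample-size-independent form. The probability of this event is inherited unchanged from Corollary~\ref{alphacol}, since Lemma~\ref{gammalem} holds almost surely under Assumption~\ref{a1}.

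No step is really an obstacle here; the argument is a bookkeeping exercise of substitution. The only place one needs to be careful is to check that the event on which Corollary~\ref{alphacol} delivers $\underline{\alpha} \ge N\lambda_{\min}/2$ is the same event on which $\bbeta^*$ remains feasible in~\eqref{infty} for the chosen $\gamma_N$ — but feasibility holds almost surely under Assumption~\ref{a1}, so intersecting the two events does not degrade the stated probability $1-\exp(-C_2 N/\bar\mu^4)$. Thus the theorem follows without any new estimate.
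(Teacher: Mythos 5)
Your proposal is correct and is exactly the argument the paper intends: the theorem is stated as the direct combination of Theorem~\ref{mainresult}, Corollary~\ref{alphacol} (supplying $\underline{\alpha}=N\lambda_{\text{min}}/2$ on the stated high-probability event), and Lemma~\ref{gammalem} (supplying $\gamma_N=N\bar{B}R$ almost surely), with the factor $N$ cancelling to give the bound in~(\ref{finalbound}). Your remark that intersecting with the almost-sure feasibility event does not degrade the probability is the only point needing care, and you handle it correctly.
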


The estimation error bound in (\ref{finalbound}) depends on the variance of $(\bx, y)$, and the geometrical structure of the true regression coefficient. It does not decay to zero as $N$ goes to infinity. The reason is that the absolute residual $|y - \bbeta'\bx|$ has a nonzero mean, which will be propagated into the estimation bias.

\section{Experiments on the Performance of Wasserstein DRO} \label{sec:2-4}
In this section, we will explore the robustness of the Wasserstein formulation in terms of its {\em Absolute Deviation (AD)} loss function and the dual norm regularizer on the {\em extended regression coefficient} $(-\bbeta, 1)$. Recall that the Wasserstein formulation is in the following form:
\begin{equation} \label{qcp2}
\inf\limits_{\bbeta} \frac{1}{N}\sum\limits_{i=1}^N|y_i - \bx_i'\bbeta| + \epsilon\|(-\bbeta, 1)\|_*.
\end{equation}
We will focus on the following three aspects of this formulation:
\begin{enumerate}
	\item How to choose a proper norm $\|\cdot\|$ for the Wasserstein metric?
	\item Why do we penalize the extended regression coefficient $(-\bbeta, 1)$ rather than $\bbeta$?
	\item What is the advantage of the AD loss compared to the {\em Squared Residuals (SR)} loss?
\end{enumerate} 
To answer Question 1, we will connect the choice of $\|\cdot\|$ for the Wasserstein metric with the characteristics/structures of the data $(\bx, y)$. Specifically, we will design two sets of experiments, one with a dense regression coefficient $\bbeta^*$, where all coordinates of $\bx$ play a role in determining the value of the response $y$, and another with a sparse $\bbeta^*$ implying that only a few predictors are relevant in predicting $y$. Two Wasserstein formulations will be tested and compared, one induced by the $\|\cdot\|_2$ (Wasserstein $\ell_2$), which leads to an $\ell_2$-regularizer in (\ref{qcp2}), and the other one induced by the $\|\cdot\|_{\infty}$ (Wasserstein $\ell_{\infty}$) and resulting in an $\ell_1$-regularizer in (\ref{qcp2}). 

The problem of feature selection can be formulated as an $\ell_0$-norm regularized regression problem, which is NP-hard and is usually relaxed to an $\ell_1$-norm regularized formulation, known as the {\em Least Absolute Shrinkage and Selection Operator (LASSO)}. LASSO enjoys several attractive statistical properties under various conditions on the model matrix \citep{tibshirani2011regression, friedman2001elements}. 
Here, in our context, we try to offer an explanation of the sparsity-inducing property of LASSO from the perspective of the Wasserstein DRO formulation, through projecting the sparsity of $\bbeta^*$ onto the $(\bx, y)$ space and establishing a {\em sparse} distance metric that only extracts a subset of coordinates from $(\bx, y)$ to measure the closeness between samples. 

For the second question, we first note that if the Wasserstein metric is induced by the following metric $s_c$:
\begin{equation*}
s_c(\bx, y) = \|(\bx, cy)\|_2,
\end{equation*}
for a positive constant $c$; then as $c \rightarrow \infty$, the resulting Wasserstein DRO formulation becomes:
\begin{equation*}
\inf\limits_{\bbeta} \frac{1}{N}\sum\limits_{i=1}^N|y_i - \bx_i'\bbeta| + \epsilon\|\bbeta\|_2,
\end{equation*}
which is the $\ell_2$-regularized LAD. This can be proved by recognizing that $s_c(\bx, y) = \|(\bx, y)\|_{\bM}$, with $\bM \in \mbb{R}^{(p+1) \times (p+1)}$ a diagonal matrix whose diagonal elements are $(1, \ldots, 1, c^2)$, and then applying (\ref{wqcp}). Alternatively, if we let 
$s_c(\bx, y) = \|(\bx, cy)\|_{\infty},$ Corollary~\ref{wass-l1-LAD}
shows that as $c \rightarrow \infty$, the corresponding Wasserstein formulation becomes
the $\ell_1$-regularized LAD.

\begin{col} \label{wass-l1-LAD}
	If the Wasserstein metric is induced by the following metric $s$:
	$$s_c(\bx, y) = \|(\bx, cy)\|_{\infty},$$
	with $c$ some positive constant. Then as $c \rightarrow \infty$, the Wasserstein DRO formulation (\ref{qcp2}) reduces to:
	\begin{equation*}
	\inf\limits_{\bbeta} \frac{1}{N}\sum\limits_{i=1}^N|y_i - \bx_i'\bbeta| + \epsilon\|\bbeta\|_1,
	\end{equation*}
	which is the $\ell_1$-regularized LAD. 
\end{col}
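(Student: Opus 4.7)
The plan is a direct dual-norm computation followed by passing to the limit, paralleling the derivation of (\ref{wqcp}) for the weighted $\ell_2$ case. Since the metric $s_c$ is a norm on $\mbb{R}^{p+1}$, Theorem~\ref{kappa} together with the reformulation (\ref{qcp2}) yields
\[
\inf_{\bbeta}\ \frac{1}{N}\sum_{i=1}^N |y_i - \bx_i'\bbeta| \;+\; \epsilon\, \|(-\bbeta,1)\|_{s_c,*},
\]
so the entire task reduces to evaluating $\|(-\bbeta,1)\|_{s_c,*}$ and then sending $c\to\infty$.

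First, I would compute the dual norm of the weighted $\ell_\infty$ norm $s_c(\bx,y)=\|(\bx,cy)\|_\infty$. By the change of variables $\tilde y = cy$, the unit ball $\{(\bx,y):s_c(\bx,y)\le 1\}$ is the image of the standard $\ell_\infty$-unit ball in $\mbb{R}^{p+1}$ under $(\bx,\tilde y)\mapsto (\bx,\tilde y/c)$. Therefore
\[
\|(\bu,v)\|_{s_c,*} \;=\; \sup_{\|(\bx,cy)\|_\infty\le 1}\bigl(\bu'\bx + v y\bigr) \;=\; \sup_{\|(\bx,\tilde y)\|_\infty\le 1}\bigl(\bu'\bx + (v/c)\tilde y\bigr) \;=\; \|\bu\|_1 + |v|/c,
\]
where the final equality uses that the dual of $\|\cdot\|_\infty$ is $\|\cdot\|_1$.

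Second, substituting $(\bu,v)=(-\bbeta,1)$ gives $\|(-\bbeta,1)\|_{s_c,*} = \|\bbeta\|_1 + 1/c$, and plugging into the reformulation above produces
\[
\inf_{\bbeta}\ \frac{1}{N}\sum_{i=1}^N |y_i - \bx_i'\bbeta| + \epsilon\,\|\bbeta\|_1 + \frac{\epsilon}{c}.
\]
The constant $\epsilon/c$ is independent of $\bbeta$ and vanishes as $c\to\infty$, which yields the $\ell_1$-regularized LAD. There is no real obstacle here; the only point worth verifying is that the finite-growth-rate condition (\ref{gr}) underpinning (\ref{qcp2}) holds at every finite $c$, which is immediate since $\|(-\bbeta,1)\|_{s_c,*} = \|\bbeta\|_1 + 1/c < \infty$.
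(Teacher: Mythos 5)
Your proposal is correct and follows essentially the same route as the paper: reduce via Theorem~\ref{kappa} to computing the dual norm of the weighted $\ell_\infty$ metric, obtain $\|(-\bbeta,1)\|_{s_c,*}=\|\bbeta\|_1+1/c$, and let $c\to\infty$. The only cosmetic difference is that you compute the dual norm by rescaling $\tilde y = cy$ and invoking $\ell_\infty$--$\ell_1$ duality, whereas the paper derives it through H\"older's inequality for general weighted $(\bw,\infty)$ norms; the two computations are equivalent.
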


\begin{proof}
	We first define a new notion of norm on $(\bx, y)$ where $\bx = (x_1, \ldots, x_{p})$:
	\begin{equation*}
	\|(\bx, y)\|_{\bw, r} \triangleq \|(x_1w_1, \ldots, x_{p} w_{p}, yw_{p+1})\|_{r},
	\end{equation*}
	for some $(p+1)$-dimensional weighting vector $\bw = (w_1, \ldots, w_{p+1})$, and $r \ge 1$. Then, $s_c(\bx, y) = \|(\bx, y)\|_{\bw, \infty}$ with $\bw = (1, \ldots, 1, c)$. To obtain the Wasserstein DRO formulation, the key is to derive the dual norm of $\|\cdot\|_{\bw, \infty}$. 
	H\"{o}lder's inequality \citep{rogers1888extension} will be used for the derivation. 
	We will use the notation $\bz \triangleq (\bx, y)$. Based on the definition of dual norm, we are interested in solving the following optimization problem for $\tilde{\bbeta} \in \mbb{R}^{p+1}$:
	\begin{equation} \label{dualnorm2}
	\begin{aligned}
	\max\limits_{\bz} & \quad \bz' \tilde{\bbeta} \\
	\text{s.t.} & \quad \|\bz\|_{\bw, \infty} \le 1.
	\end{aligned}
	\end{equation}
	The optimal value of Problem (\ref{dualnorm2}), which is a function of $\tilde{\bbeta}$, gives the dual norm evaluated at $\tilde{\bbeta}$. Using H\"{o}lder's inequality, we can write
	\begin{equation*}
	\begin{aligned}
	\bz'\tilde{\bbeta} & = \sum_{i=1}^{p+1} (w_i z_i)\Bigl(\frac{1}{w_i}\tilde{\beta}_i\Bigr) \\
	& \le \|\bz\|_{\bw, \infty} \|\tilde{\bbeta}\|_{\bw^{-1}, 1} \\
	& \le \|\tilde{\bbeta}\|_{\bw^{-1}, 1},
	\end{aligned}
	\end{equation*} 
	where $\bw^{-1} \triangleq (\frac{1}{w_1}, \ldots, \frac{1}{w_{p+1}})$.
	The last inequality is due to the constraint $\|\bz\|_{\bw, \infty} \le 1$. It follows that the dual norm of $\|\cdot\|_{\bw, \infty}$ is just $\|\cdot\|_{\bw^{-1}, 1}$. Back to our problem setting, using $\bw = (1, \ldots, 1, c)$, and evaluating the dual norm at $(-\bbeta, 1)$, we have the following Wasserstein DRO formulation as $c \rightarrow \infty$:
	\begin{equation*}
	\inf\limits_{\bbeta} \frac{1}{N}\sum\limits_{i=1}^N|y_i - \bx_i'\bbeta| + \epsilon\|(-\bbeta, 1)\|_{\bw^{-1}, 1} = \inf\limits_{\bbeta} \frac{1}{N}\sum\limits_{i=1}^N|y_i - \bx_i'\bbeta| + \epsilon\|\bbeta\|_1.
	\end{equation*}
\end{proof}

It follows that regularizing over $\bbeta$ implies an infinite transportation cost along $y$. By contrast, the Wasserstein formulation, which regularizes over the extended regression coefficient $(-\bbeta, 1)$, stems from a finite cost along $y$ that is equally weighted with $\bx$. We will see the disadvantages of penalizing only $\bbeta$ in the analysis of the experimental results.

To answer Question 3, we will compare with several commonly used regression models that employ the SR loss function, e.g., ridge regression \citep{hoerl1970ridge}, LASSO \citep{tibshirani1996regression}, and {\em Elastic Net (EN)} \citep{zou2005regularization}. We will also compare against M-estimation \citep{huber1964robust, huber1973robust}, which uses a variant of the SR loss and is equivalent to solving a weighted least squares problem. These models will be compared under two different experimental setups, one involving perturbations in both $\bx$ and $y$, and the other with perturbations only in $\bx$. The purpose is to investigate the behavior of these approaches when the noise in $y$ is substantially reduced. 

We next describe the data generation process. Each training sample has a probability $q$ of being drawn from the outlying distribution, and a probability $1-q$ of being drawn from the true (clean) distribution. Given the true regression coefficient $\bbeta^*$, we generate the training data as follows:
\begin{itemize}
	\item Generate a uniform random variable on $[0, 1]$. If it is no larger than $1-q$, generate a clean sample as follows:
	\begin{enumerate}
		\item Draw the predictor $\bx \in \mbb{R}^{p}$ from the normal distribution $\scrN(\mathbf{0}, \bSigma)$, where $\bSigma$ 
		is the covariance matrix of $\bx$, which is just the top left block of the matrix $\bGamma$ in Assumption~\ref{eigen}. Specifically, $\bGamma=\mathbb{E}[(\bx, y) (\bx, y)']$ is equal to
		\begin{equation*}
		\bGamma = 
		\begin{bmatrix}
		& \bSigma   & \bSigma \bbeta^* \\
		& (\bbeta^*)'\bSigma & (\bbeta^*)' \bSigma \bbeta^* + \sigma^2
		\end{bmatrix},
		\end{equation*}
		with $\sigma^2$ being the variance of the noise term. In our implementation, $\bSigma$ has diagonal elements equal to $1$ (unit variance) and off-diagonal elements equal to $\rho$, with $\rho$ the correlation between predictors. 
		\item Draw the response variable $y$ from $\scrN(\bx' \bbeta^*, \sigma^2)$.
	\end{enumerate}
	\item Otherwise, depending on the experimental setup, generate an outlier that is either:
	\begin{itemize}
		\item Abnormal in both $\bx$ and $y$, with outlying distribution:
		\begin{enumerate}
			\item $\bx \sim \scrN (\mathbf{0}, \bSigma) + \scrN (5\mathbf{e}, \mathbf{I})$, or $\bx \sim \scrN (\mathbf{0}, \bSigma) + \scrN (\mathbf{0}, 0.25\mathbf{I})$;
			\item $y \sim \scrN(\bx' \bbeta^*, \sigma^2) + 5\sigma$.
		\end{enumerate}
		\item Abnormal only in $\bx$: 
		\begin{enumerate}
			\item $\bx \sim \scrN (\mathbf{0}, \bSigma) + \scrN (5\mathbf{e}, \mathbf{I})$;
			\item $y \sim \scrN(\bx' \bbeta^*, \sigma^2)$. 
		\end{enumerate}
	\end{itemize}	
	\item Repeat the above procedure for $N$ times, where $N$ is the size of the training set.
\end{itemize}
To test the generalization ability of various formulations, we generate a test dataset containing $M$ samples from the clean distribution. We are interested in studying the performance of various methods as the following factors are varied:
\begin{itemize}
	\item {\em Signal to Noise Ratio (SNR)}, defined as:
	\begin{equation*}
	\text{SNR} = \frac{(\bbeta^*)'\bSigma \bbeta^*}{\sigma^2},
	\end{equation*}
	which is equally spaced between $0.05$ and $2$ on a log scale.
	\item The correlation between predictors: $\rho$, which takes values in $(0.1, 0.2, \ldots, 0.9)$.
\end{itemize}  	
The performance metrics we use include:
\begin{itemize}
	\item {\em Mean Squared Error (MSE)} on the test dataset, which is defined to be $\sum_{i=1}^M(y_i - \bx_i'\hat{\bbeta})^2/M$, with $\hat{\bbeta}$ being the estimate of $\bbeta^*$ obtained from the training set, and $(\bx_i, y_i), \ i \in \lb M \rb,$ being the observations from the test dataset;
	\item {\em Relative Risk (RR)} of $\hat{\bbeta}$ defined as:
	\begin{equation*}
	\text{RR}(\hat{\bbeta}) \triangleq \frac{(\hat{\bbeta} - \bbeta^*)'\mathbf{\Sigma}(\hat{\bbeta} - \bbeta^*)}{(\bbeta^*)' \mathbf{\Sigma} \bbeta^*}.
	\end{equation*}
	\item {\em Relative Test Error (RTE)} of $\hat{\bbeta}$ defined as:
	\begin{equation*}
	\text{RTE}(\hat{\bbeta}) \triangleq \frac{(\hat{\bbeta} - \bbeta^*)'\bSigma (\hat{\bbeta} - \bbeta^*) + \sigma^2}{\sigma^2}.
	\end{equation*}
	\item {\em Proportion of Variance Explained (PVE)} of $\hat{\bbeta}$ defined as:
	\begin{equation*}
	\text{PVE}(\hat{\bbeta}) \triangleq 1 - \frac{(\hat{\bbeta} - \bbeta^*)'\bSigma (\hat{\bbeta} - \bbeta^*) + \sigma^2}{(\bbeta^*)' \mathbf{\Sigma} \bbeta^* + \sigma^2}.
	\end{equation*}
\end{itemize} 
For the metrics that evaluate the accuracy of the estimator, i.e., the RR, RTE and PVE, we list below two types of scores, one achieved by the best possible estimator $\hat{\bbeta} = \bbeta^*$, called the perfect score, and the other one achieved by the null estimator $\hat{\bbeta} = 0$, called the null score. 
\begin{itemize}
	\item RR: a perfect score is 0 and the null score is 1.
	\item RTE: a perfect score is 1 and the null score is SNR+1.
	\item PVE: a perfect score is $\frac{\text{SNR}}{\text{SNR}+1}$, and the null score is 0.
\end{itemize} 

All the regularization parameters are tuned on a separate validation dataset using the {\em Median Absolute Deviation (MAD)} as a selection criterion, to hedge against the potentially large noise in the validation samples. As to the range of values for the tuned parameters, we borrow ideas from \cite{hastie2017extended}, where the LASSO was tuned over $50$ values ranging from $\lambda_{max} = \|\bX'\by\|_{\infty}$ to a small fraction of $\lambda_{max}$ on a log scale, with $\bX \in \mbb{R}^{N \times p}$ the design matrix whose $i$-th row is $\bx_i'$, and $\by = (y_1, \ldots, y_N)$ the response vector. In our experiments, this range is properly adjusted for procedures that use the AD loss. Specifically, for Wasserstein $\ell_2$ and $\ell_{\infty}$, $\ell_1$- and $\ell_2$-regularized LAD, the range of values for the regularization parameter is: 
$$\sqrt{\exp\biggl(\text{lin}\Bigl(\log(0.005*\|\bX'\by\|_{\infty}),\log(\|\bX'\by\|_{\infty}),50\Bigr)\biggr)},$$
where $\text{lin}(a, b, n)$ is a function that takes in scalars $a$, $b$ and $n$ (integer) and outputs a set of $n$ values equally spaced between $a$ and $b$; the $\exp$ function is applied elementwise to a vector. The square root operator is in consideration of the AD loss that is the square root of the SR loss if evaluated on a single sample. 

\subsection{Dense $\bbeta^*$, Outliers in both $\bx$ and $y$} \label{densexy}
In this subsection, we choose a dense regression coefficient $\bbeta^*$, set the intercept to $\beta_0^* = 0.3$, and the coefficient for each predictor $x_i$ to be $\beta_i^* = 0.5, i \in \lb 20 \rb$. The perturbations are present in both $\bx$ and $y$. Specifically, the outlying distribution is described by:
\begin{enumerate}
	\item $\bx \sim \scrN (\mathbf{0}, \bSigma) + \scrN (5\mathbf{e}, \mathbf{I})$;
	\item $y \sim \scrN(\bx' \bbeta^*, \sigma^2) + 5\sigma$.
\end{enumerate}
We generate 10 datasets consisting of $N = 100, M = 60$ observations. The probability of a training sample being drawn from the outlying distribution is $q = 30\%$. The mean values of the performance metrics (averaged over the 10 datasets), as we vary the SNR and the correlation between predictors, are shown in Figures~\ref{snr-1} and \ref{corr-1}. Note that when SNR is varied, the correlation between predictors is set to $0.8$ times a random noise uniformly distributed on the interval $[0.2, 0.4]$. When the correlation $\rho$ is varied, the SNR is fixed to $0.5$.

It can be seen that as the SNR decreases or the correlation between the predictors increases, the estimation problem becomes harder, and the performance of all approaches gets worse. In general the Wasserstein formulation with an $\ell_2$-norm transportation cost achieves the best performance in terms of all four metrics. Specifically,
\begin{itemize}
	\item it is better than the $\ell_2$-regularized LAD, which assumes an infinite transportation cost along $y$;
	\item it is better than the Wasserstein $\ell_{\infty}$ and $\ell_1$-regularized LAD which use the $\ell_1$-regularizer;
	\item it is better than the approaches that use the SR loss function.
\end{itemize}

Empirically we have found that in most cases, the approaches that use the AD loss, including the $\ell_1$- and $\ell_2$-regularized LAD, and the Wasserstein $\ell_{\infty}$ formulation, drive all the coordinates of $\bbeta$ to zero, due to the relatively small magnitude of the AD loss compared to the norm of the coefficient. The approaches that use the SR loss, e.g., ridge regression and EN, do not exhibit such a problem, since the squared residuals weaken the dominance of the regularization term. 

Overall the $\ell_2$-regularizer outperforms the $\ell_1$-regularizer, since the true regression coefficient is dense, which implies that a proper distance metric on the $(\bx, y)$ space should take into account all the coordinates. From the perspective of the Wasserstein DRO framework, the $\ell_1$-regularizer corresponds to an $\|\cdot\|_{\infty}$-based distance metric on the $(\bx, y)$ space that only picks out the most influential coordinate to determine the closeness between data points, which in our case is not reasonable since every coordinate plays a role (reflected in the dense $\bbeta^*$). In contrast, if $\bbeta^*$ is sparse, using the $\|\cdot\|_{\infty}$ as a distance metric on $(\bx, y)$ is more appropriate. A more detailed discussion of this will be presented in Sections~\ref{sparsexy} and \ref{sparsex}.
\begin{figure}[p] 
	\begin{subfigure}{.49\textwidth}
		\centering
		\includegraphics[width=0.98\textwidth]{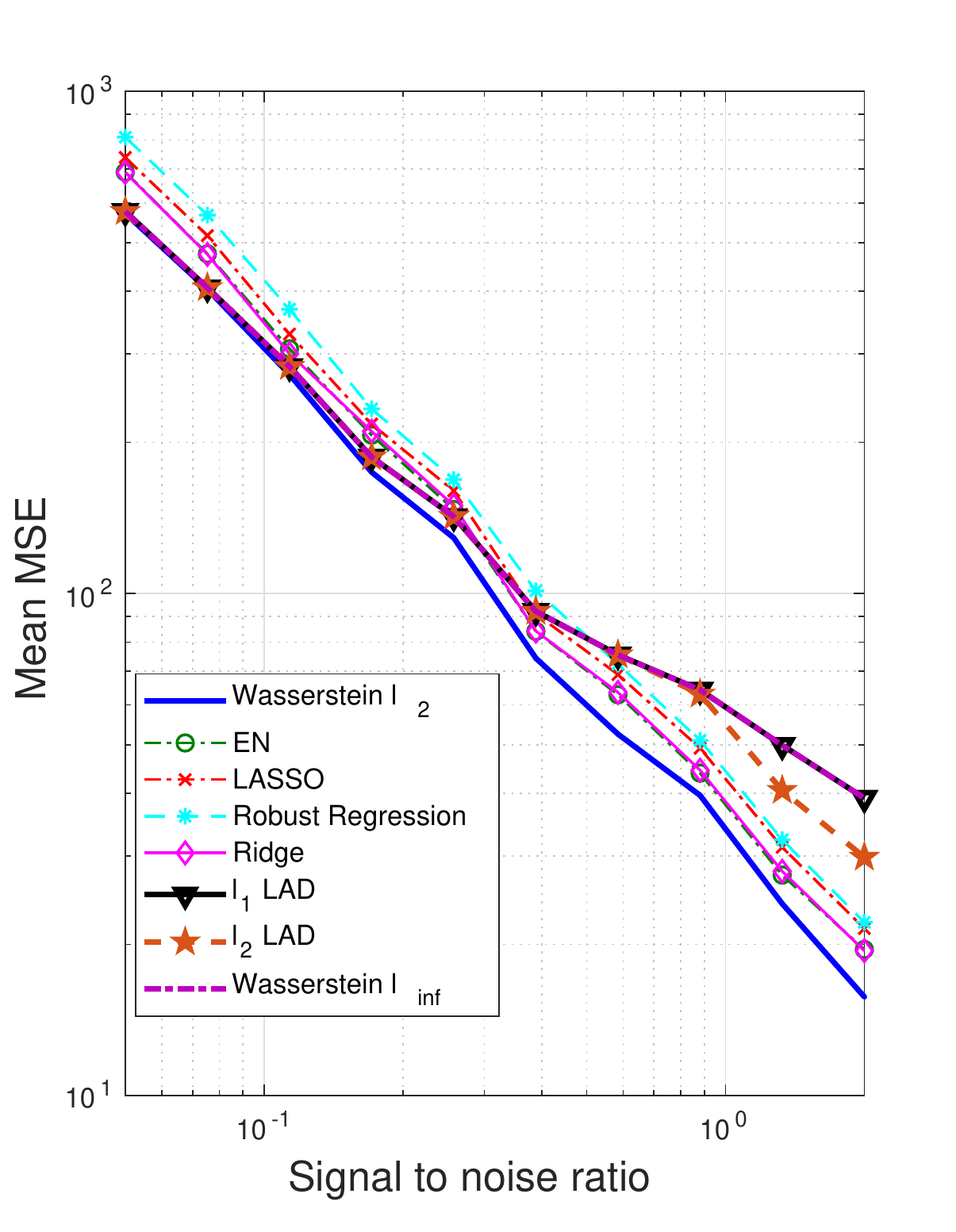}
		\caption{\small{Mean Squared Error.}}
	\end{subfigure}
	\begin{subfigure}{.49\textwidth}
		\centering
		\includegraphics[width=0.98\textwidth]{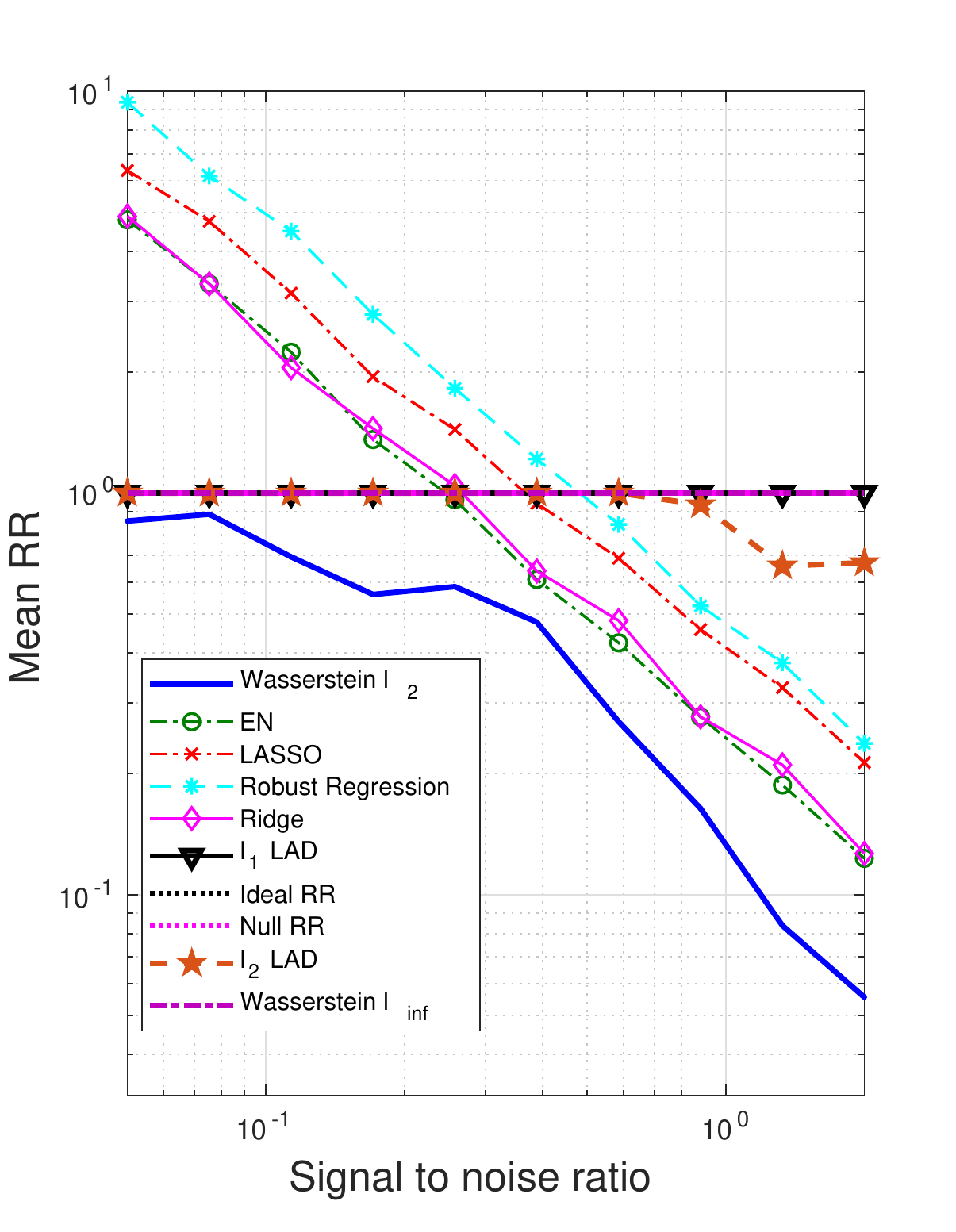}
		\caption{\small{Relative risk.}}
	\end{subfigure}
	
	\begin{subfigure}{.49\textwidth}
		\centering
		\includegraphics[width=0.98\textwidth]{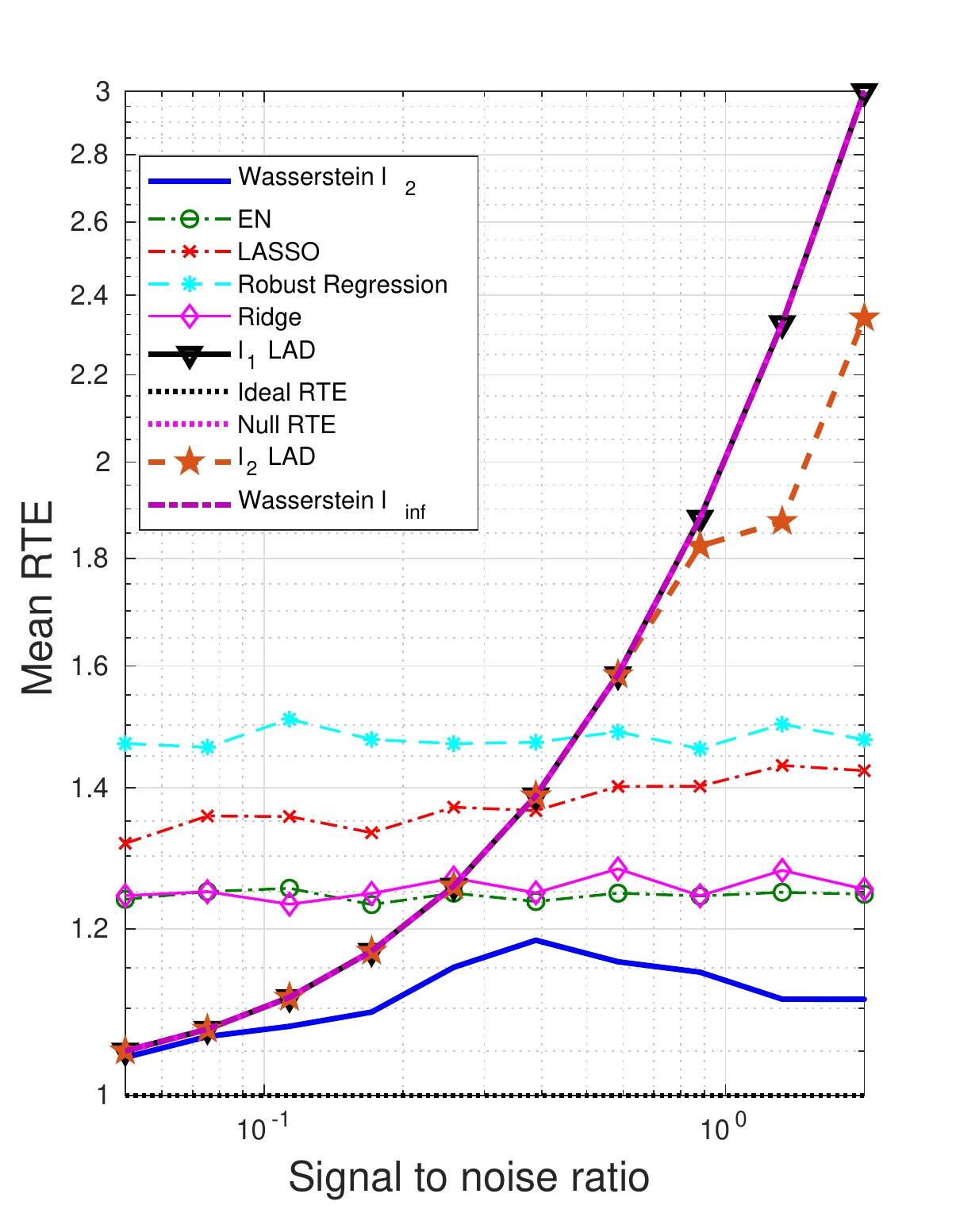}
		\caption{\small{Relative test error.}}
	\end{subfigure}%
	\begin{subfigure}{.49\textwidth}
		\centering
		\includegraphics[width=0.98\textwidth]{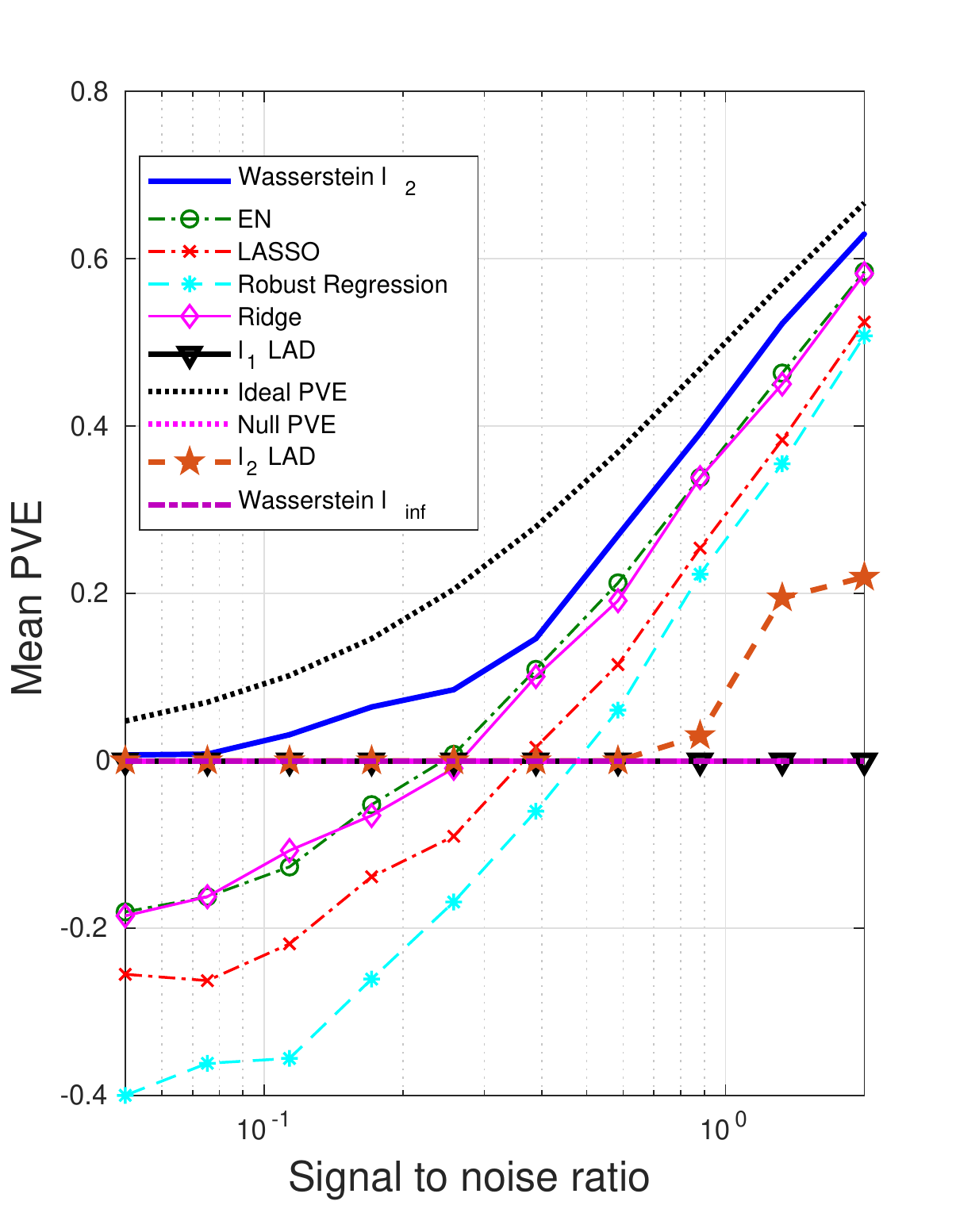}
		\caption{\small{Proportion of variance explained.}}
	\end{subfigure}
	\caption{The impact of SNR on the performance metrics: dense $\bbeta^*$,
		outliers in both $\bx$ and $y$.} 
	\label{snr-1}
\end{figure}

\begin{figure}[p] 
	\begin{subfigure}{.49\textwidth}
		\centering
		\includegraphics[width=.98\textwidth]{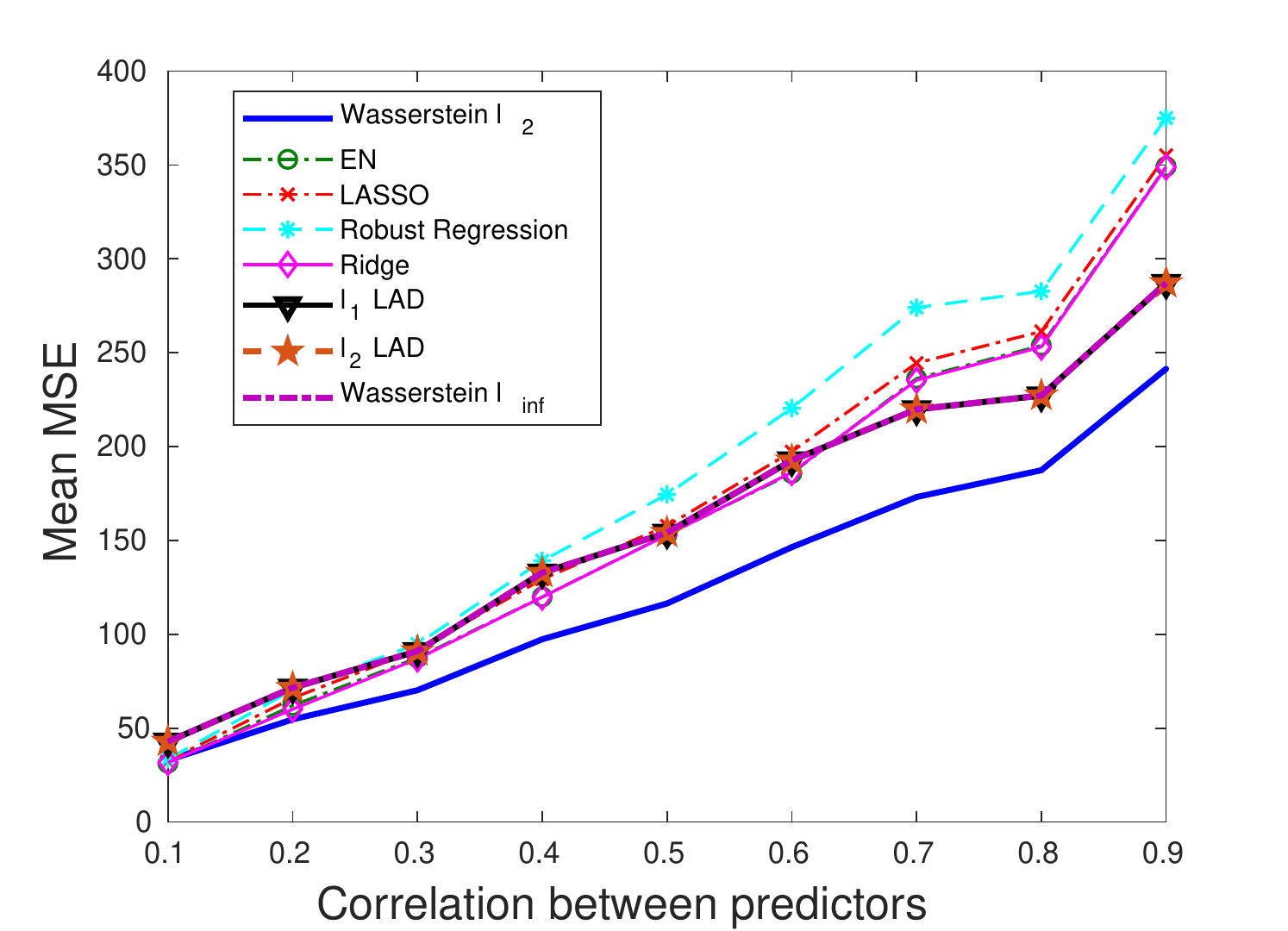}
		\caption{\small{Mean Squared Error.}}
	\end{subfigure}
	\begin{subfigure}{.49\textwidth}
		\centering
		\includegraphics[width=.98\textwidth]{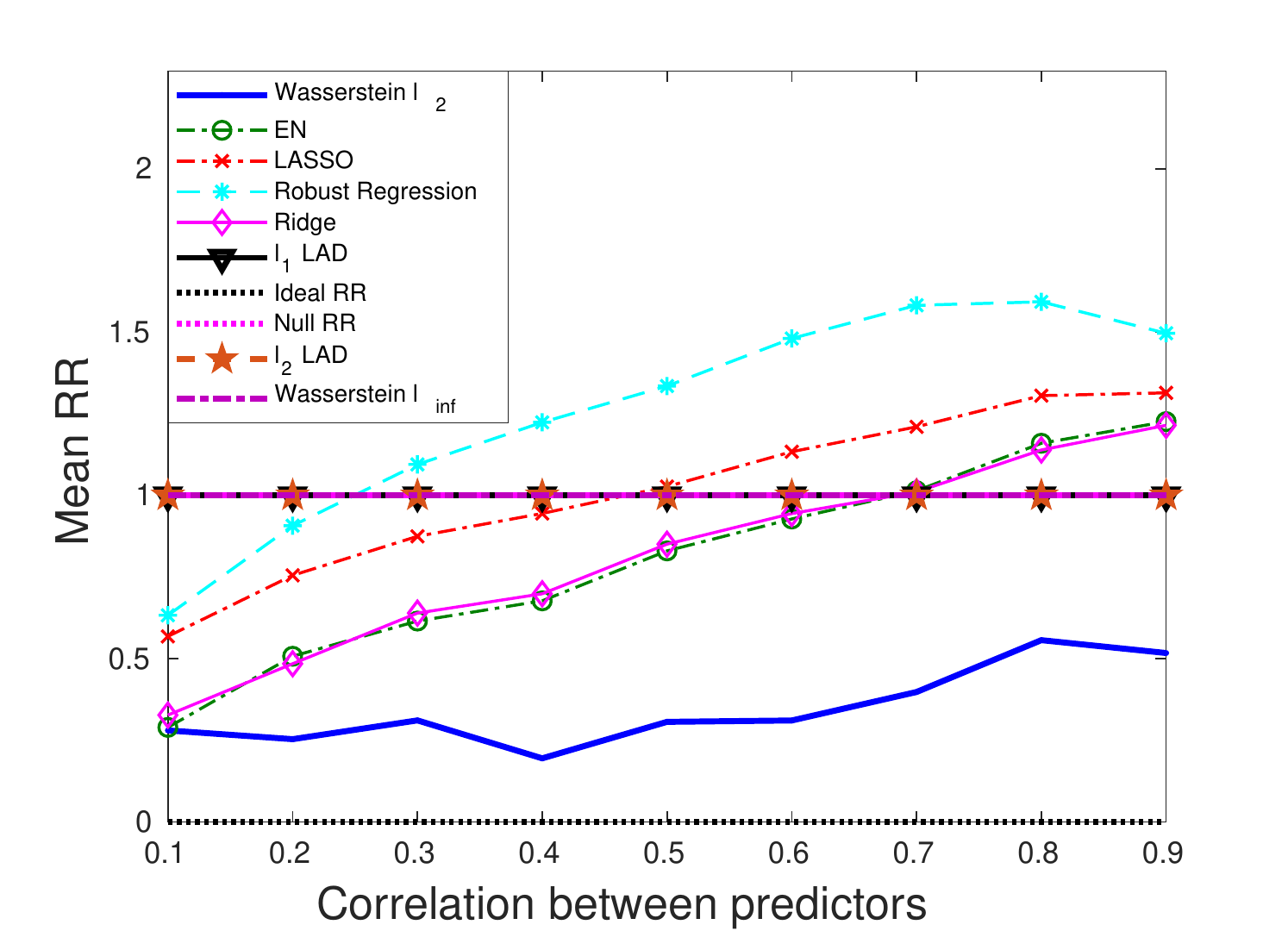}
		\caption{\small{Relative risk.}}
	\end{subfigure}
	
	\begin{subfigure}{.49\textwidth}
		\centering
		\includegraphics[width=.98\textwidth]{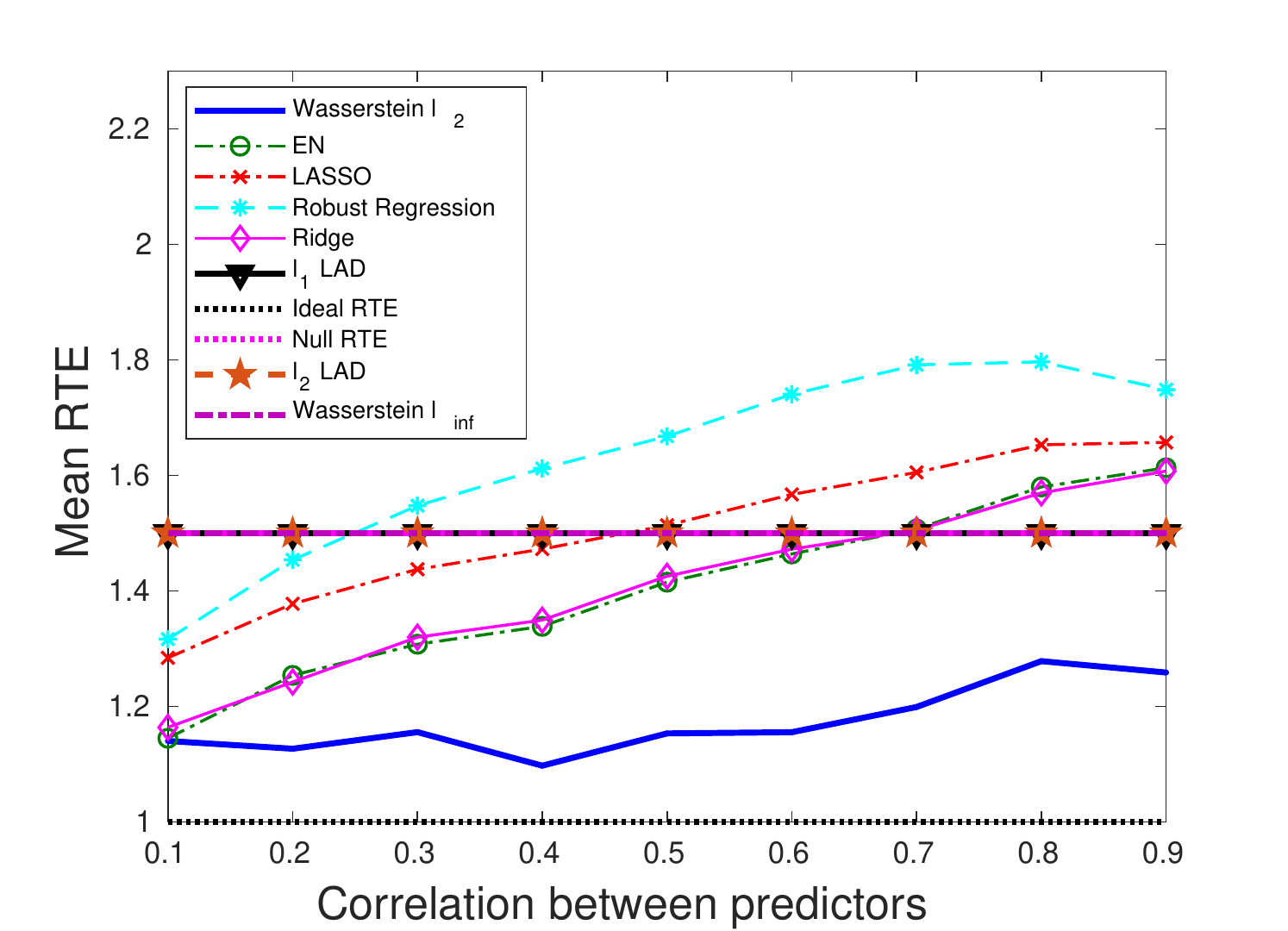}
		\caption{\small{Relative test error.}}
	\end{subfigure}%
	\begin{subfigure}{.49\textwidth}
		\centering
		\includegraphics[width=.98\textwidth]{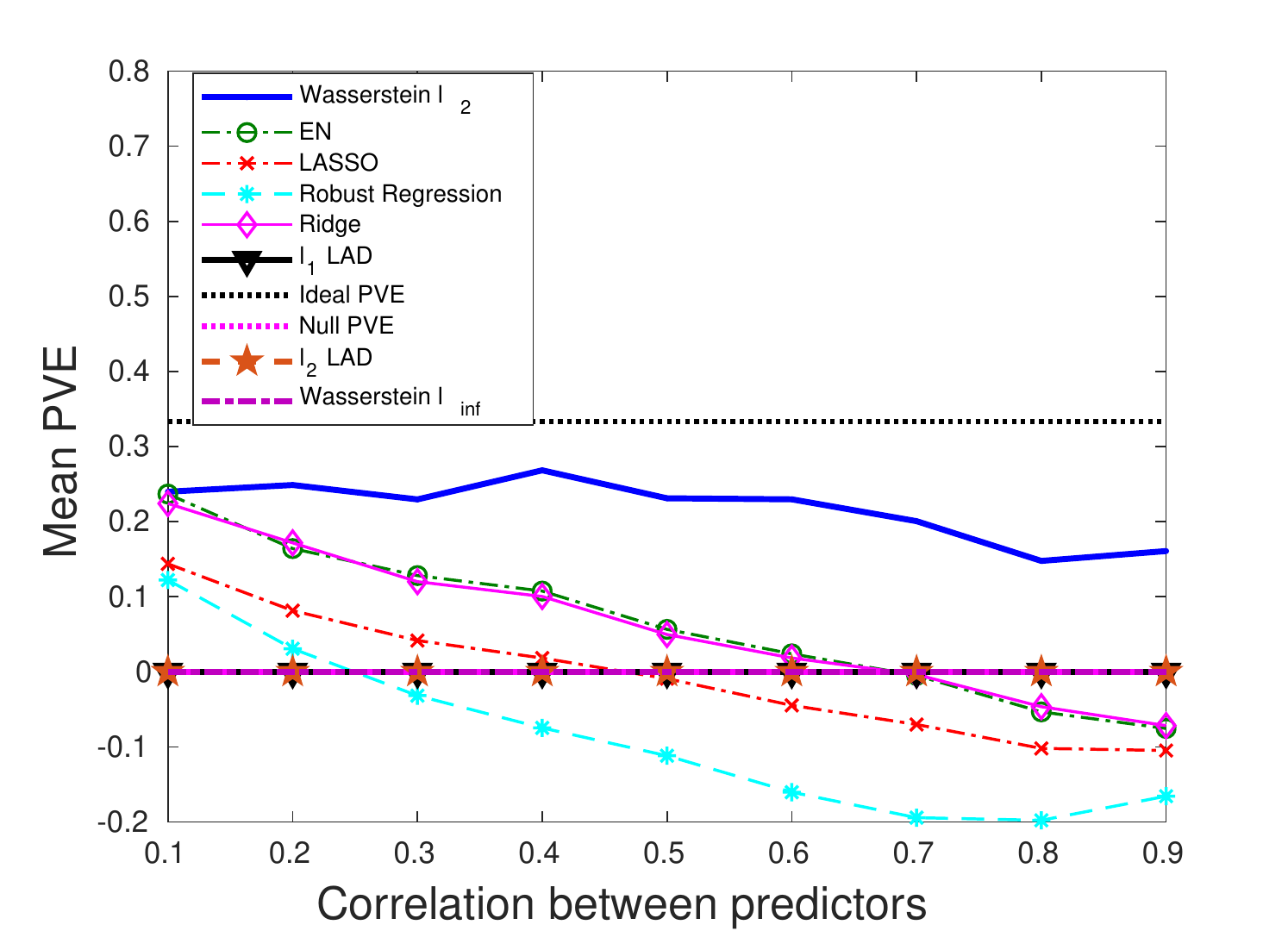}
		\caption{\small{Proportion of variance explained.}}
	\end{subfigure}
	\caption{The impact of predictor correlation on the performance metrics: dense $\bbeta^*$, outliers in both $\bx$ and $y$.}
	\label{corr-1}
\end{figure}

\subsection{Dense $\bbeta^*$, Outliers Only in $\bx$} \label{densex}
In this subsection, we will experiment with the same $\bbeta^*$ as in Section~\ref{densexy}, but with perturbations only in $\bx$. Our goal is to investigate the performance of the Wasserstein formulation when the response $y$ is not subjected to large perturbations. 

Interestingly, we observe that although the $\ell_1$- and $\ell_2$-regularized LAD, as well as the Wasserstein $\ell_{\infty}$ formulation, exhibit unsatisfactory performance, the Wasserstein $\ell_2$, which shares the same loss function with them, is able to achieve a comparable performance with the best among all -- EN and ridge regression (see Figures~\ref{snr-2} and \ref{corr-2}). Notably, the $\ell_2$-regularized LAD, which is just slightly different from the Wasserstein $\ell_2$ formulation, shows a much worse performance. This is because the $\ell_2$-regularized LAD implicitly assumes an infinite transportation cost along $y$, which gives zero tolerance to the variation in the response. Therefore, a reasonable amount of fluctuation, caused by the intrinsic randomness of $y$, would be overly exaggerated by the underlying metric used by the $\ell_2$-regularized LAD. In contrast, the Wasserstein approach uses a proper notion of norm to evaluate the distance in the $(\bx, y)$ space and is able to effectively distinguish abnormally high variations from moderate, acceptable noise.

It is also worth noting that the formulations with the AD loss, e.g., $\ell_2$- and $\ell_1$-regularized LAD, and the Wasserstein $\ell_{\infty}$, perform worse than the approaches with the SR loss. One reasonable explanation is that the AD loss, introduced primarily for hedging against large perturbations in $y$, is less useful when the noise in $y$ is moderate, in which case the sensitivity to response noise is needed. Although the AD loss is not a wise choice, penalizing the extended coefficient vector $(-\bbeta, 1)$ seems to make up, making the Wasserstein $\ell_2$ a competitive method even when the perturbations appear only in $\bx$.

\begin{figure}[p] 
	\begin{subfigure}{.49\textwidth}
		\centering
		\includegraphics[width=0.98\textwidth]{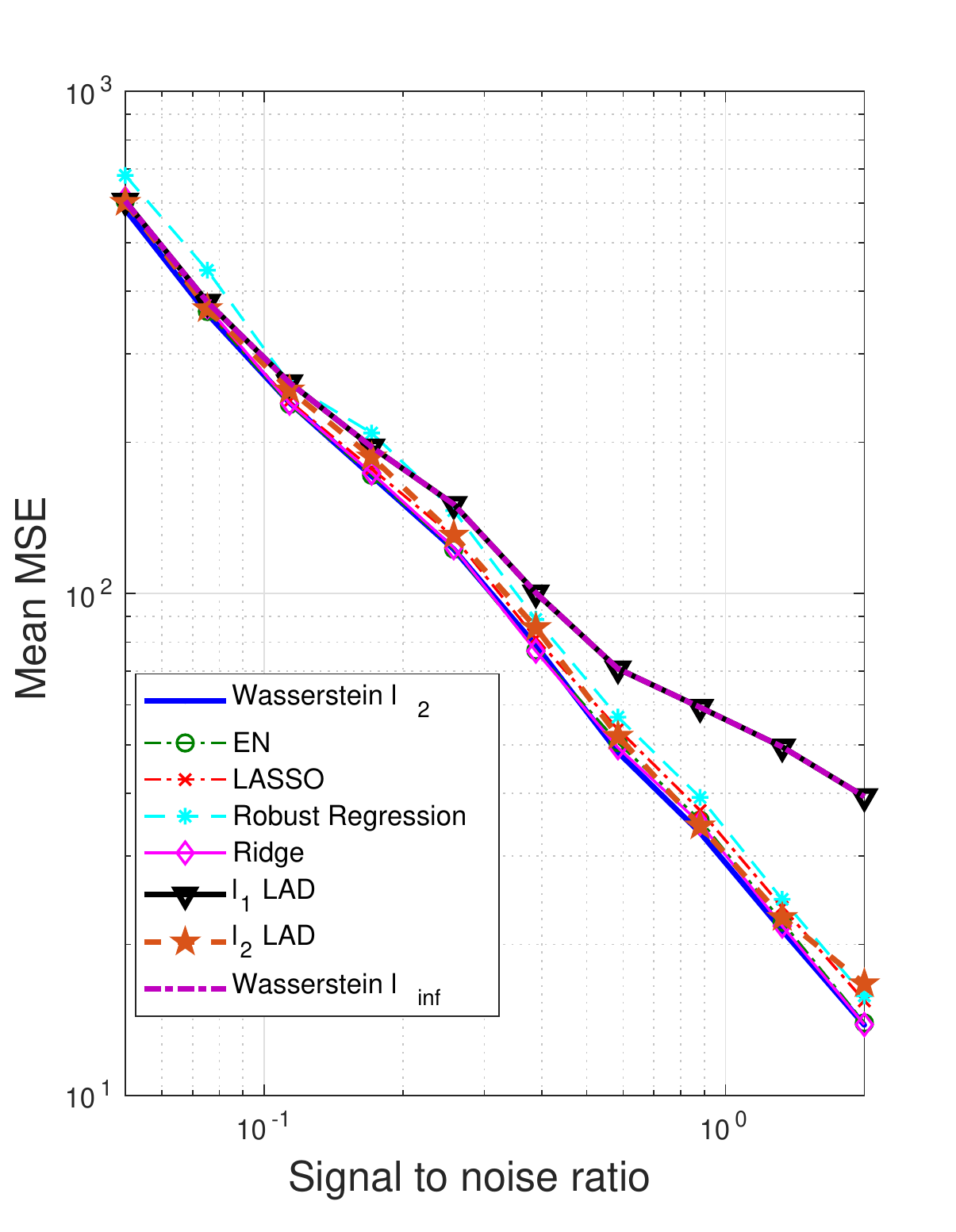}
		\caption{\small{Mean Squared Error.}}
	\end{subfigure}
	\begin{subfigure}{.49\textwidth}
		\centering
		\includegraphics[width=0.98\textwidth]{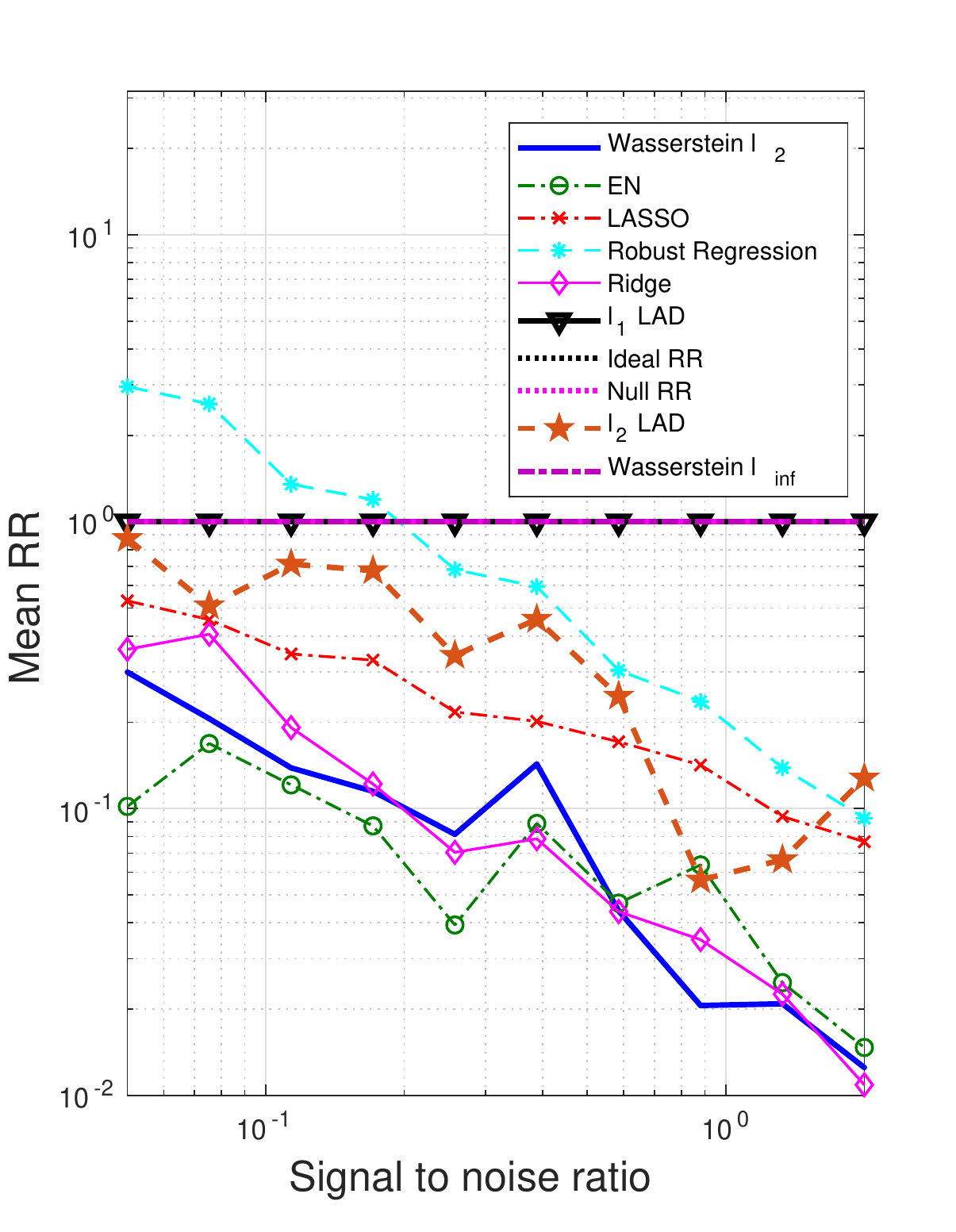}
		\caption{\small{Relative risk.}}
	\end{subfigure}
	
	\begin{subfigure}{.49\textwidth}
		\centering
		\includegraphics[width=0.98\textwidth]{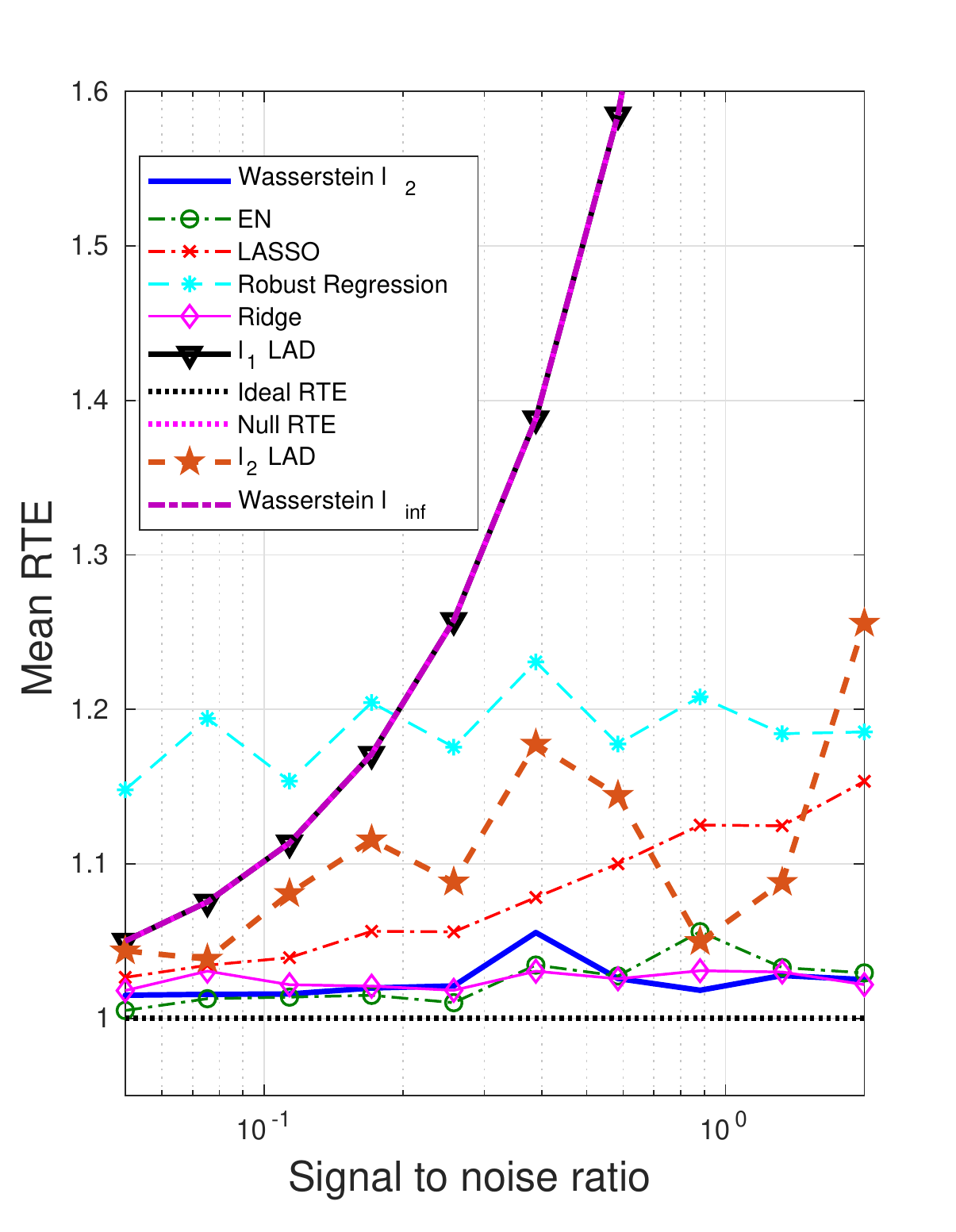}
		\caption{\small{Relative test error.}}
	\end{subfigure}%
	\begin{subfigure}{.49\textwidth}
		\centering
		\includegraphics[width=0.98\textwidth]{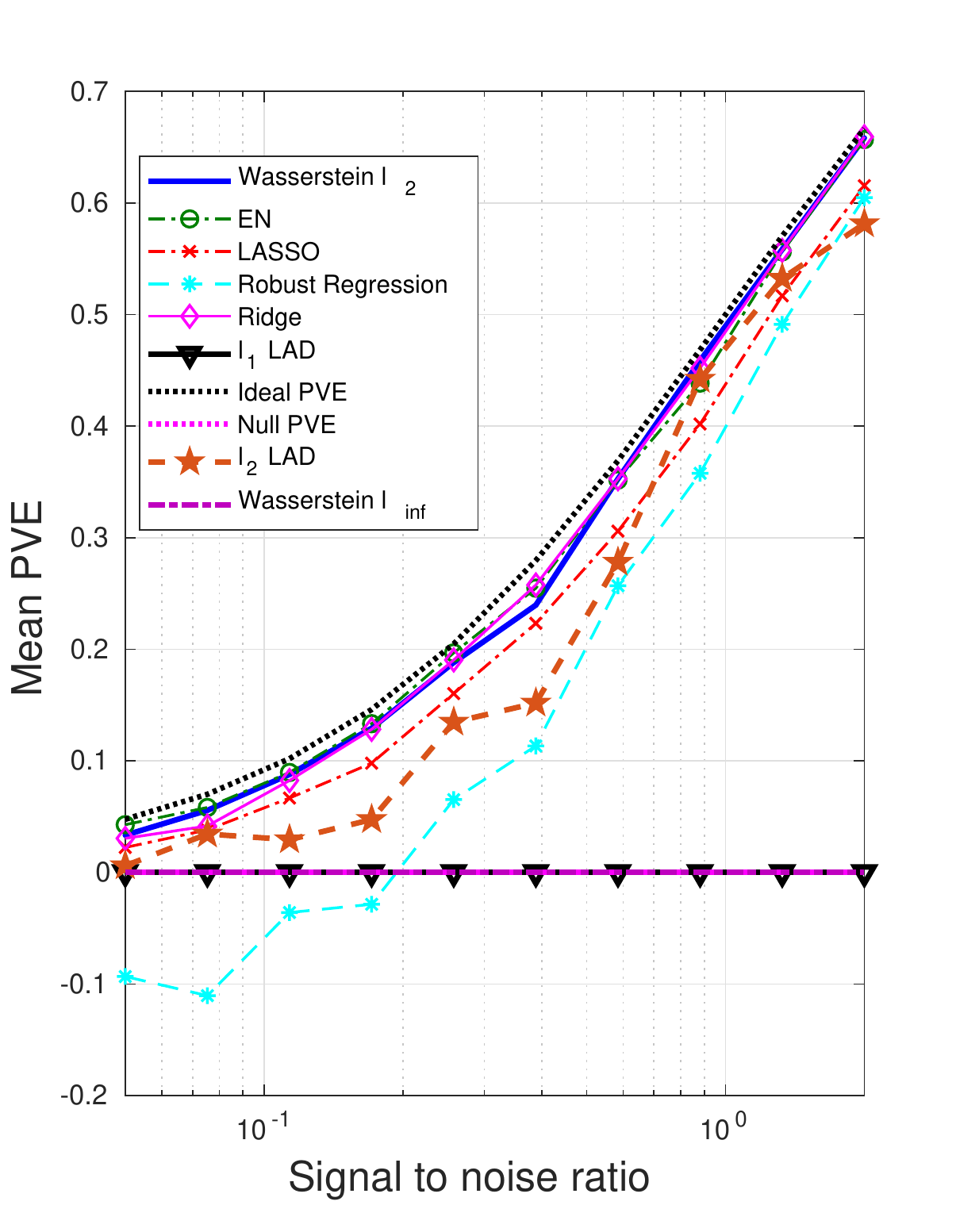}
		\caption{\small{Proportion of variance explained.}}
	\end{subfigure}
	\caption{The impact of SNR on the performance metrics: dense $\bbeta^*$, outliers only in $\bx$.}
	\label{snr-2}
\end{figure}

\begin{figure}[p] 
	\begin{subfigure}{.49\textwidth}
		\centering
		\includegraphics[width=0.98\textwidth]{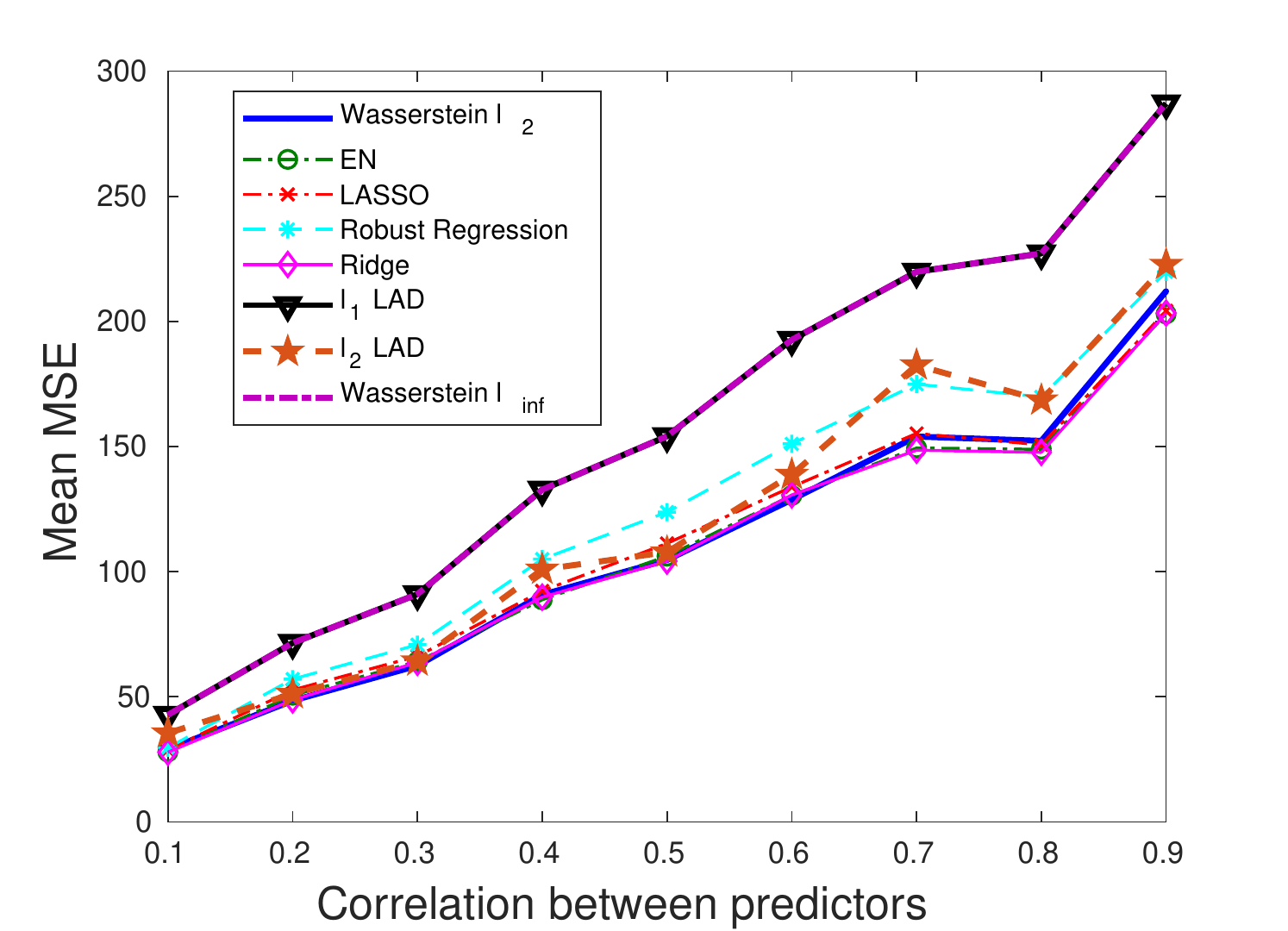}
		\caption{\small{Mean Squared Error.}}
	\end{subfigure}
	\begin{subfigure}{.49\textwidth}
		\centering
		\includegraphics[width=0.98\textwidth]{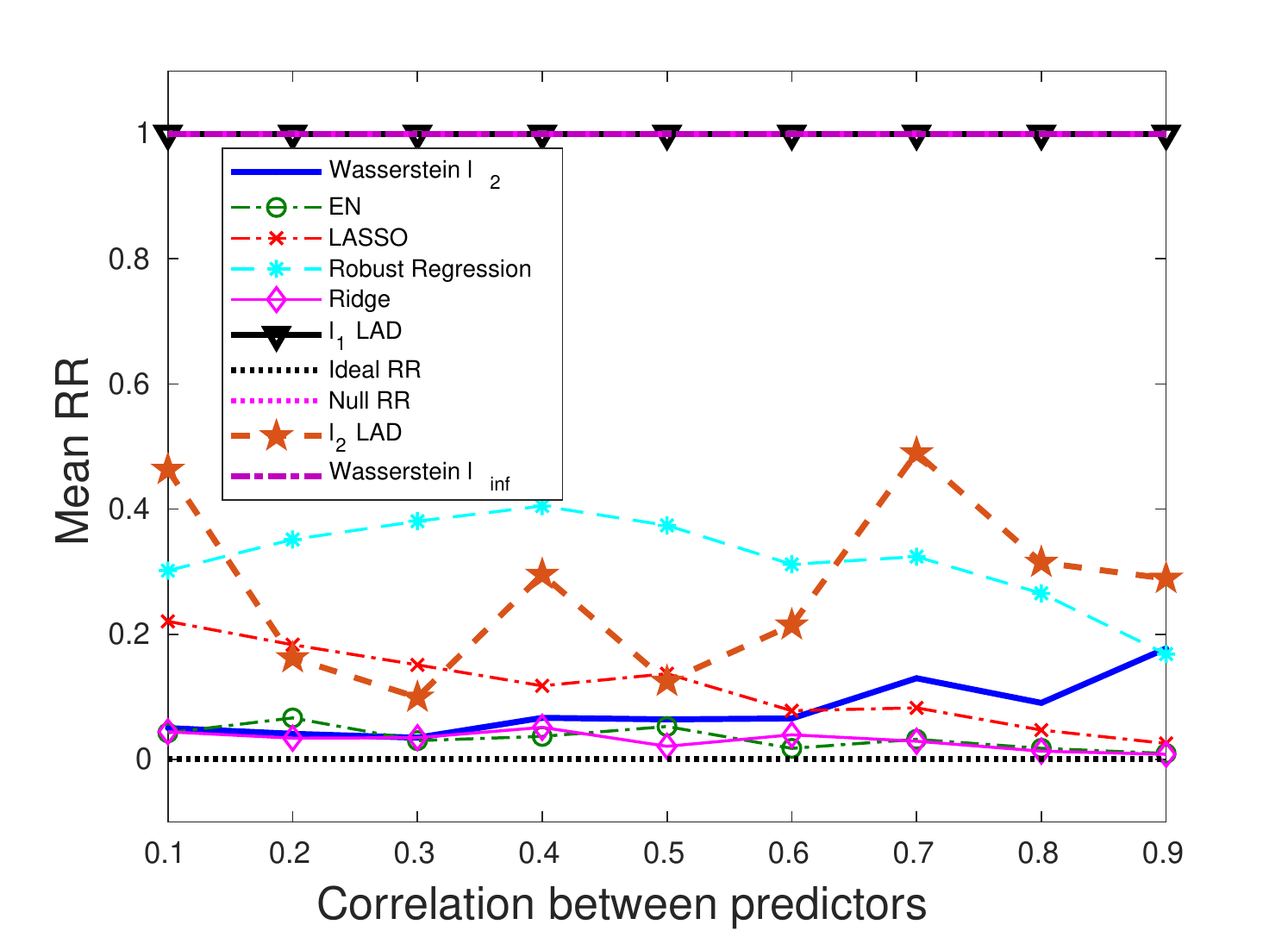}
		\caption{\small{Relative risk.}}
	\end{subfigure}
	
	\begin{subfigure}{.49\textwidth}
		\centering
		\includegraphics[width=0.98\textwidth]{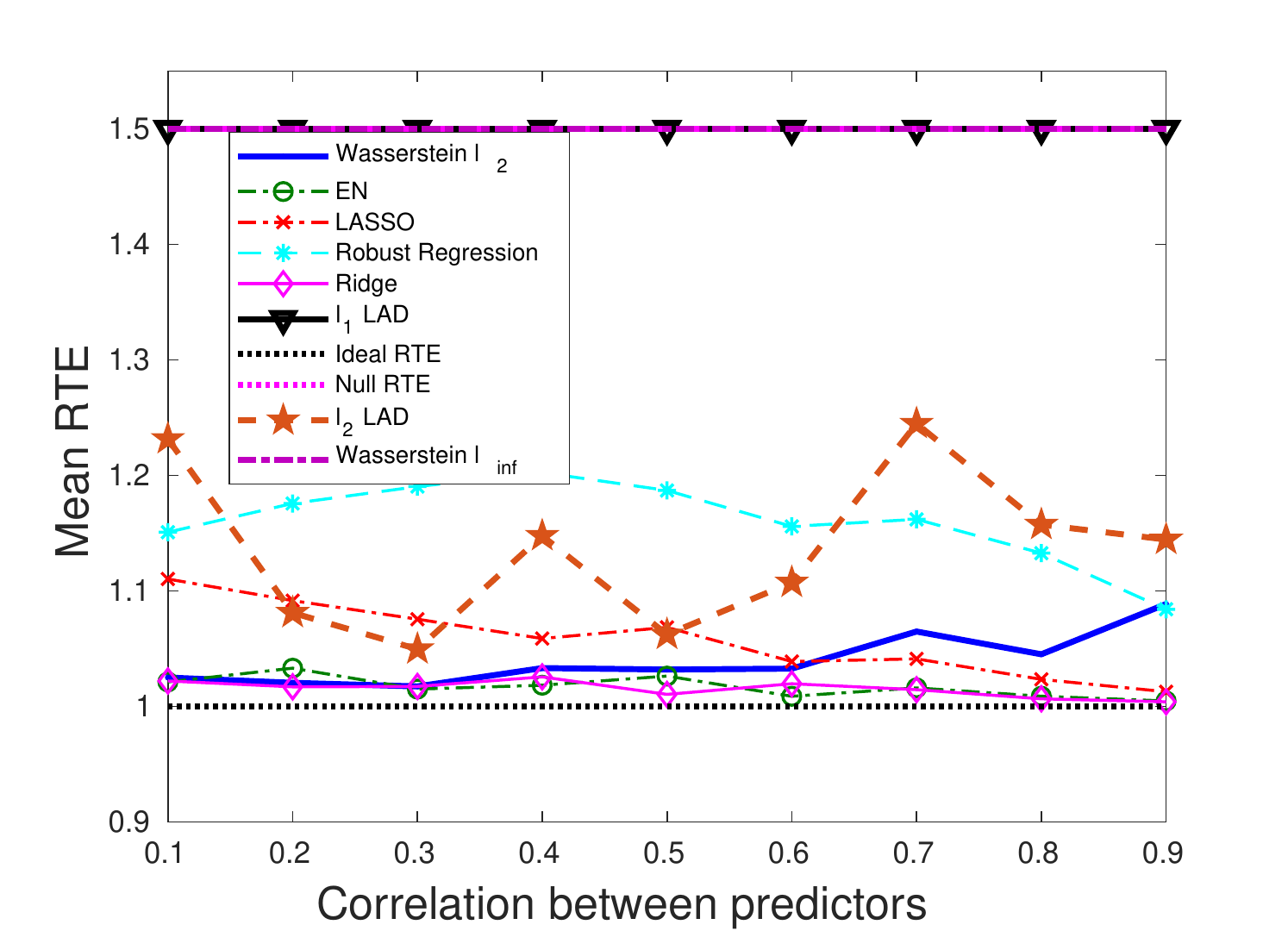}
		\caption{\small{Relative test error.}}
	\end{subfigure}%
	\begin{subfigure}{.49\textwidth}
		\centering
		\includegraphics[width=0.98\textwidth]{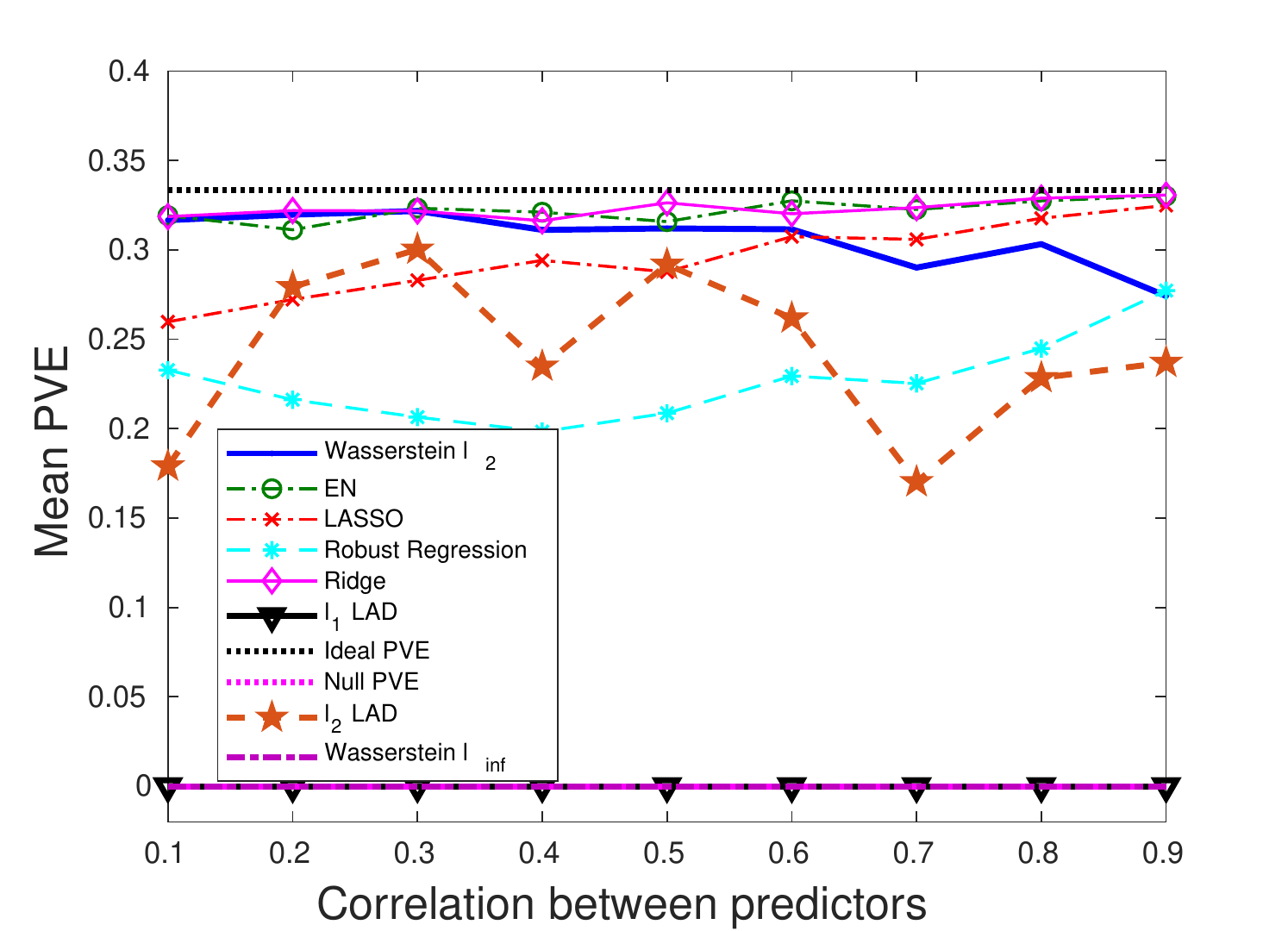}
		\caption{\small{Proportion of variance explained.}}
	\end{subfigure}
	\caption{The impact of predictor correlation on the performance metrics: dense $\bbeta^*$, outliers only in $\bx$.}
	\label{corr-2}
\end{figure}

\subsection{Sparse $\bbeta^*$, Outliers in both $\bx$ and $y$} \label{sparsexy}
In this subsection, we will experiment with a sparse $\bbeta^*$. The intercept is set to $\beta_0^* = 3$, and the coefficients for the $20$ predictors are set to $\bbeta^* = (0.05, 0, 0.006, 0, -0.007, 0, 0.008, 0, \ldots, 0)$. The perturbations are present in both $\bx$ and $y$. Specifically, the distribution of outliers is characterized by:
\begin{enumerate}
	\item $\bx \sim \scrN (\mathbf{0}, \bSigma) + \scrN (\mathbf{0}, 0.25\mathbf{I})$;
	\item $y \sim \scrN (\bx' \bbeta^*, \sigma^2) + 5\sigma$.
\end{enumerate}

Our goal is to study the impact of the sparsity of $\bbeta^*$ on the choice of the norm space for the Wasserstein metric. An intuitively appealing interpretation for the sparsity inducing property of the $\ell_1$-regularizer is made available by the Wasserstein DRO framework, which we explain as follows. The sparse regression coefficient $\bbeta^*$ implies that only a few predictors are relevant to the regression model, and thus when measuring the distance in the $(\bx, y)$ space, we need a metric that only extracts the subset of relevant predictors. The $\|\cdot\|_{\infty}$, which takes only the most influential coordinate of its argument, roughly serves this purpose. Compared to the $\|\cdot\|_2$ which takes into account all the coordinates, most of which are redundant due to the sparsity assumption, $\|\cdot\|_{\infty}$ results in a better performance, and hence, the Wasserstein $\ell_{\infty}$ formulation that induces the  $\ell_1$-regularizer is expected to outperform others.

We note that the $\ell_1$-regularized LAD achieves a similar performance, since replacing $\|\bbeta\|_1$ by $\|(-\bbeta, 1)\|_1$ only adds a constant term to the objective function. The generalization performance (mean MSE) of the AD loss-based formulations is consistently better than those with the SR loss, since the AD loss is less affected by large perturbations in $y$. Also note that choosing a wrong norm for the Wasserstein metric, e.g., the Wasserstein $\ell_2$, could lead to an enormous estimation error, whereas with a right norm space, the Wasserstein formulation is guaranteed to outperform all others. 

\begin{figure}[p] 
	\begin{subfigure}{.49\textwidth}
		\centering
		\includegraphics[width=0.9\textwidth]{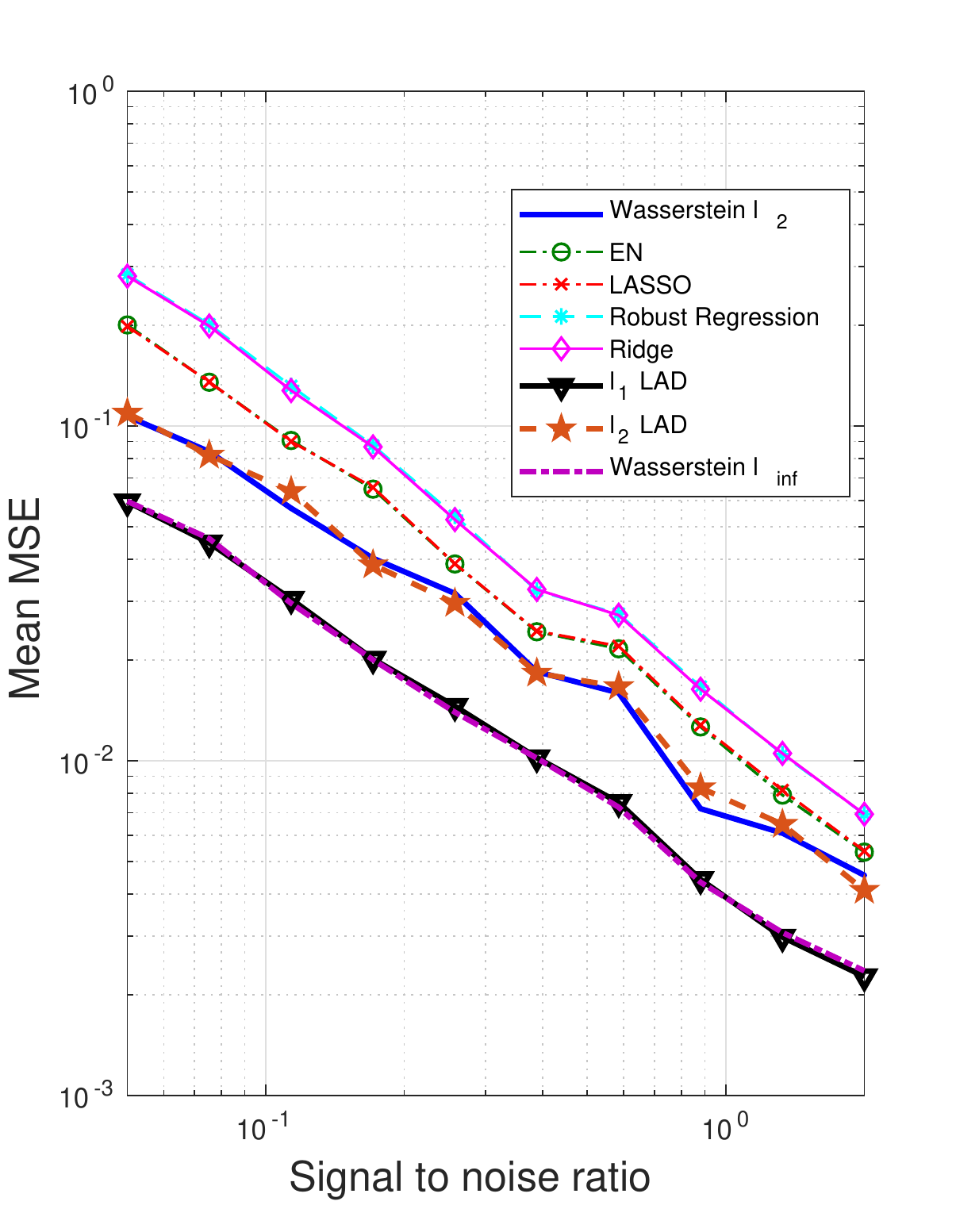}
		\caption{\small{Mean Squared Error.}}
	\end{subfigure}
	\begin{subfigure}{.49\textwidth}
		\centering
		\includegraphics[width=0.9\textwidth]{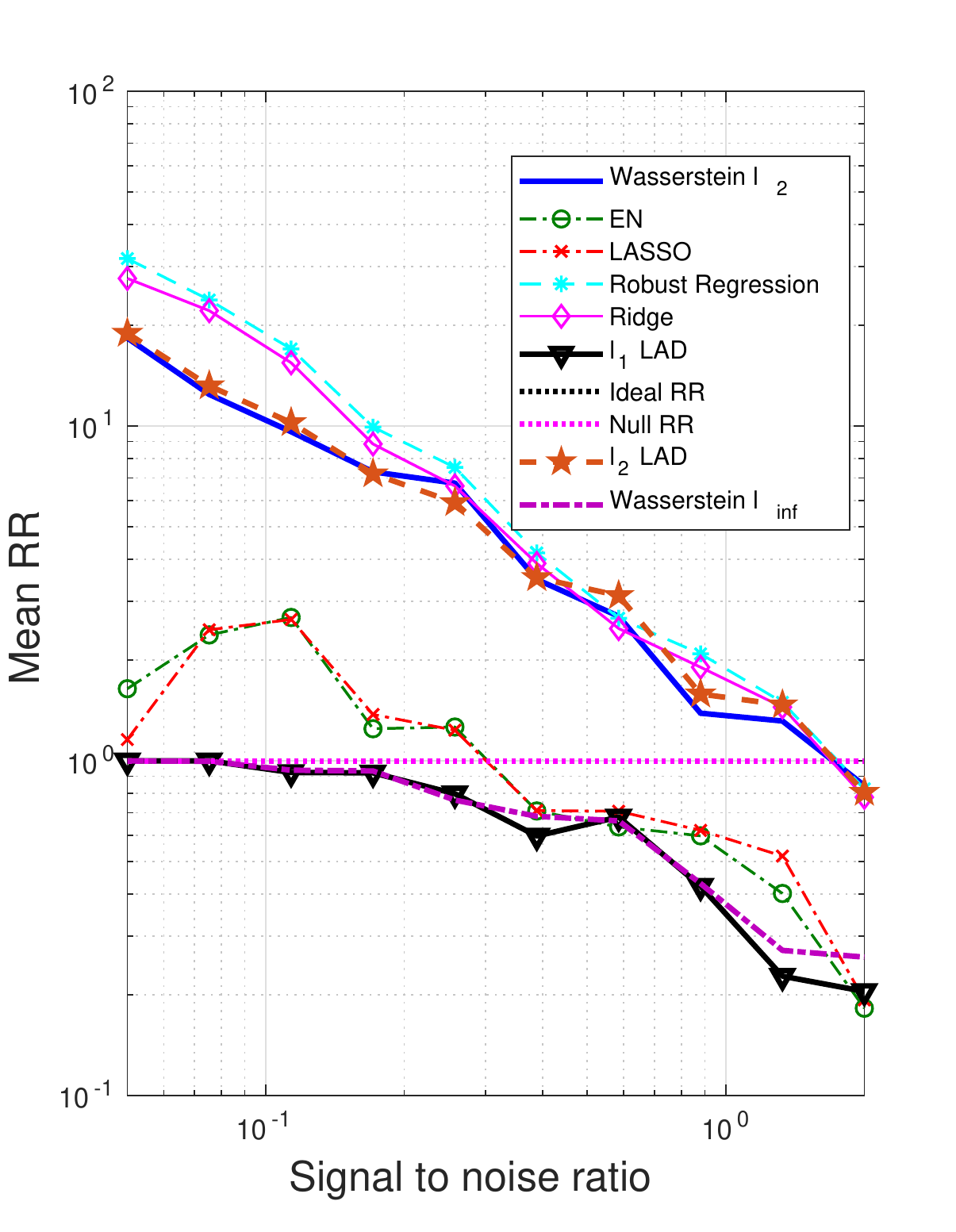}
		\caption{\small{Relative risk.}}
	\end{subfigure}
	
	\begin{subfigure}{.49\textwidth}
		\centering
		\includegraphics[width=0.9\textwidth]{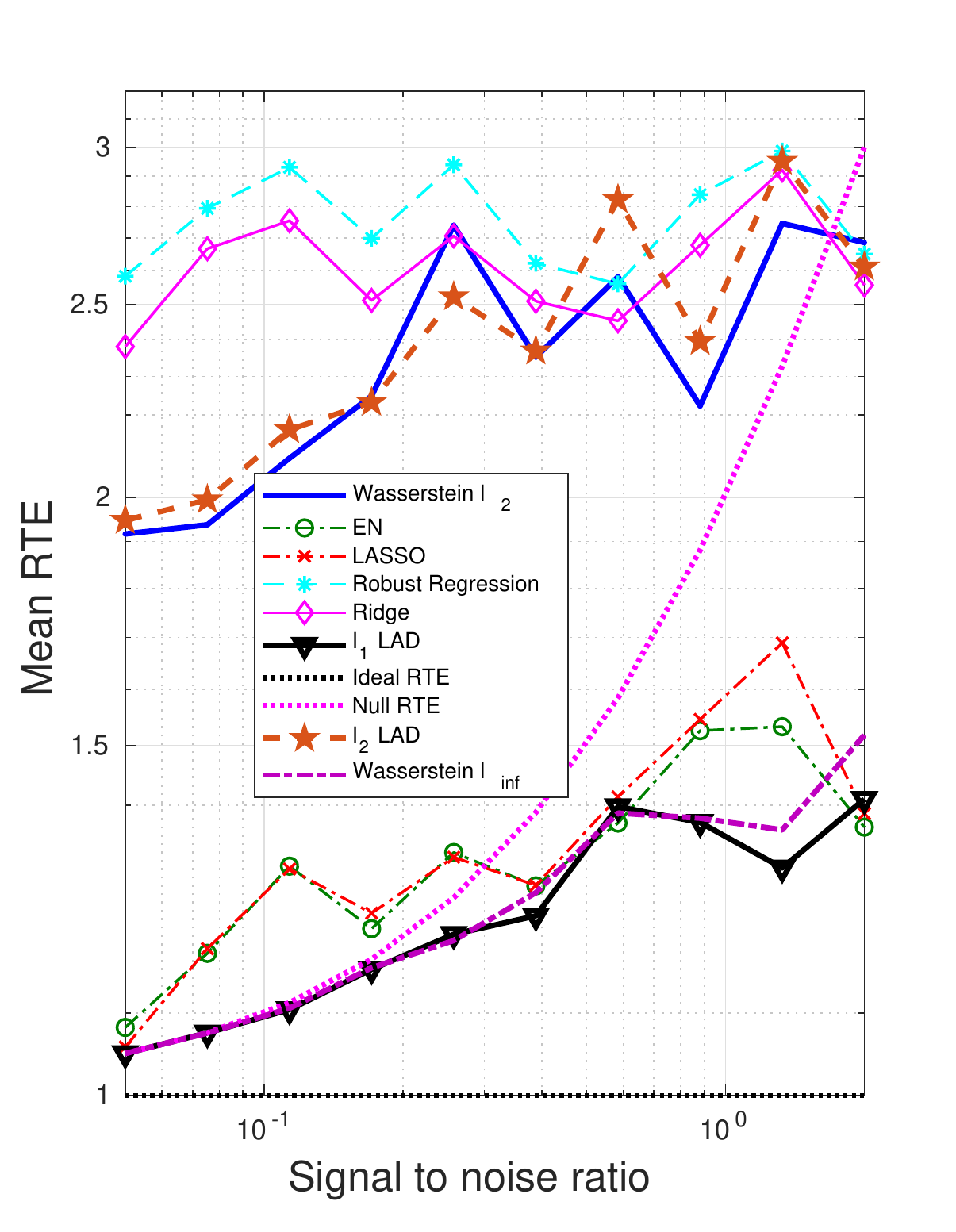}
		\caption{\small{Relative test error.}}
	\end{subfigure}%
	\begin{subfigure}{.49\textwidth}
		\centering
		\includegraphics[width=0.9\textwidth]{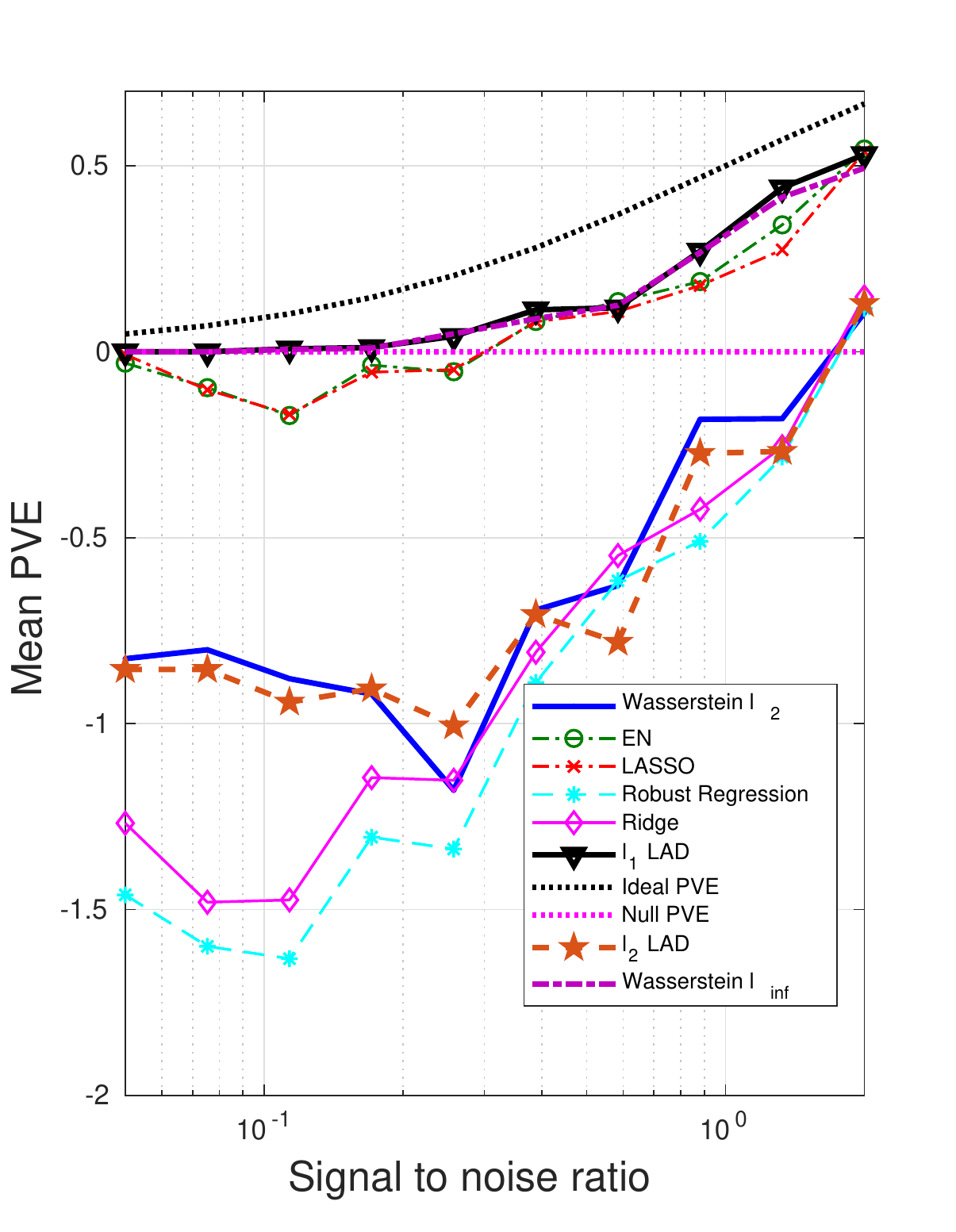}
		\caption{\small{Proportion of variance explained.}}
	\end{subfigure}
	\caption{The impact of SNR on the performance metrics: sparse $\bbeta^*$, outliers in both $\bx$ and $y$.}
	\label{snr-3}
\end{figure}

\begin{figure}[p] 
	\begin{subfigure}{.49\textwidth}
		\centering
		\includegraphics[width=0.98\textwidth]{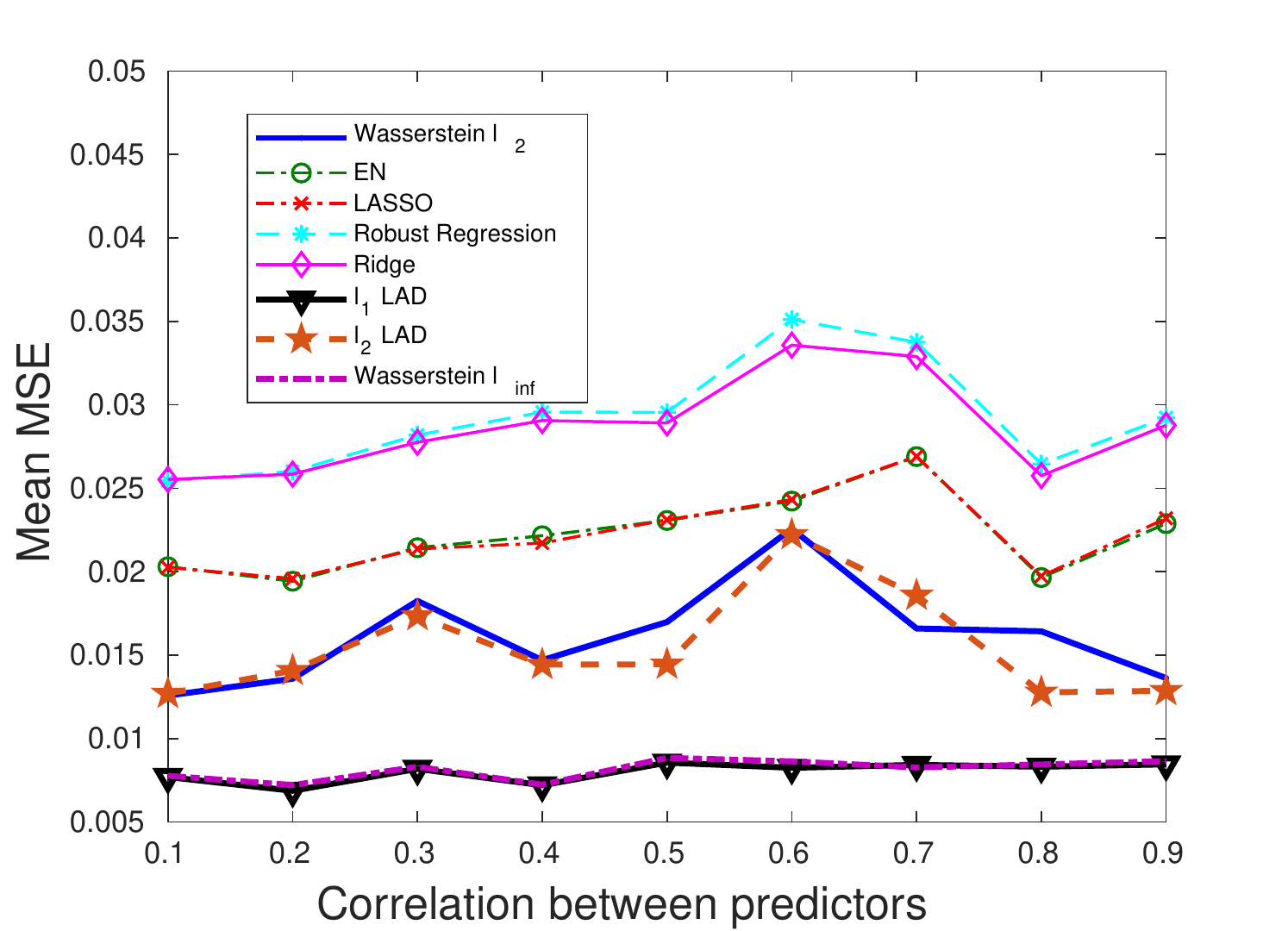}
		\caption{\small{Mean Squared Error.}}
	\end{subfigure}
	\begin{subfigure}{.49\textwidth}
		\centering
		\includegraphics[width=0.98\textwidth]{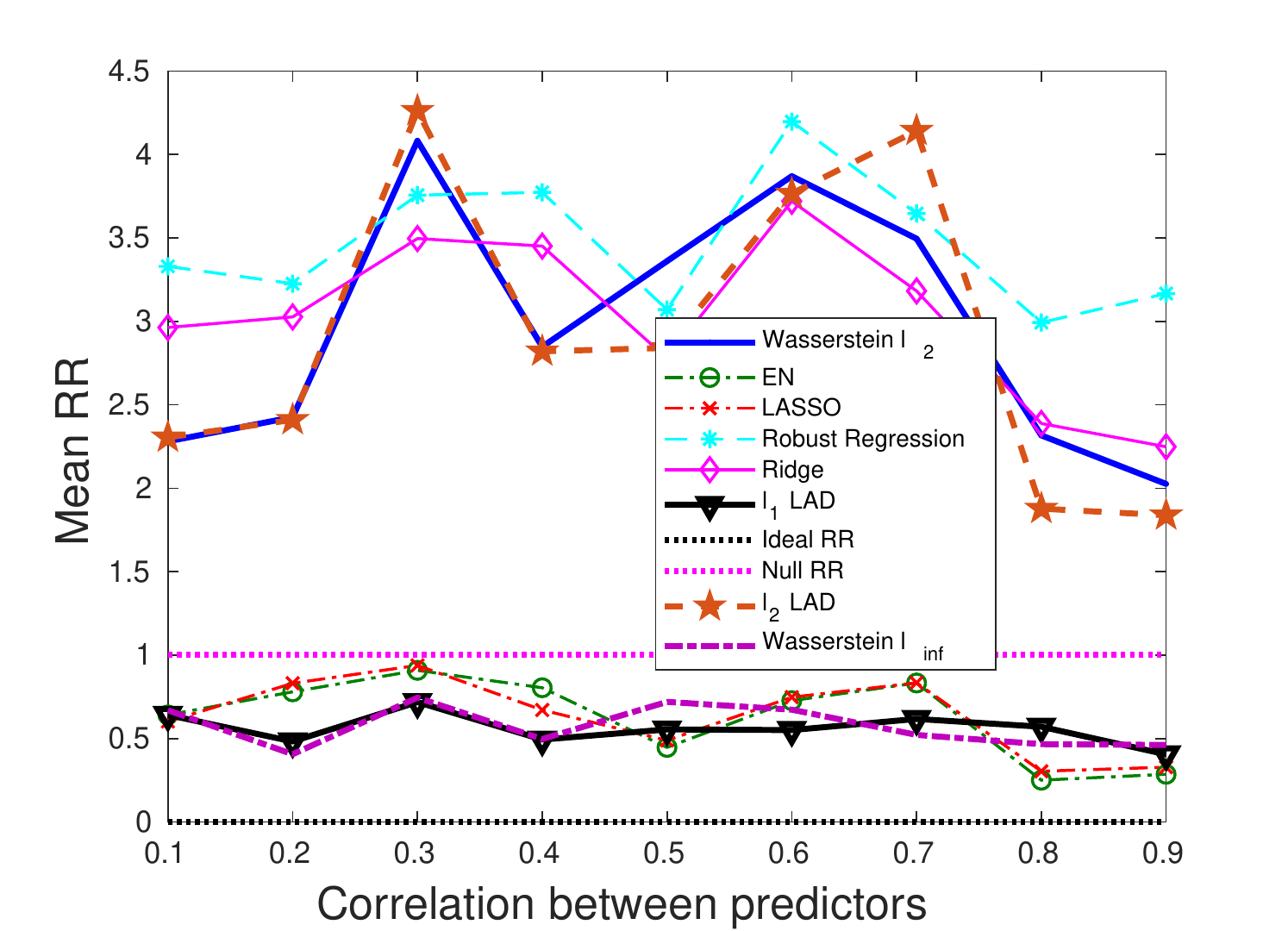}
		\caption{\small{Relative risk.}}
	\end{subfigure}
	
	\begin{subfigure}{.49\textwidth}
		\centering
		\includegraphics[width=0.98\textwidth]{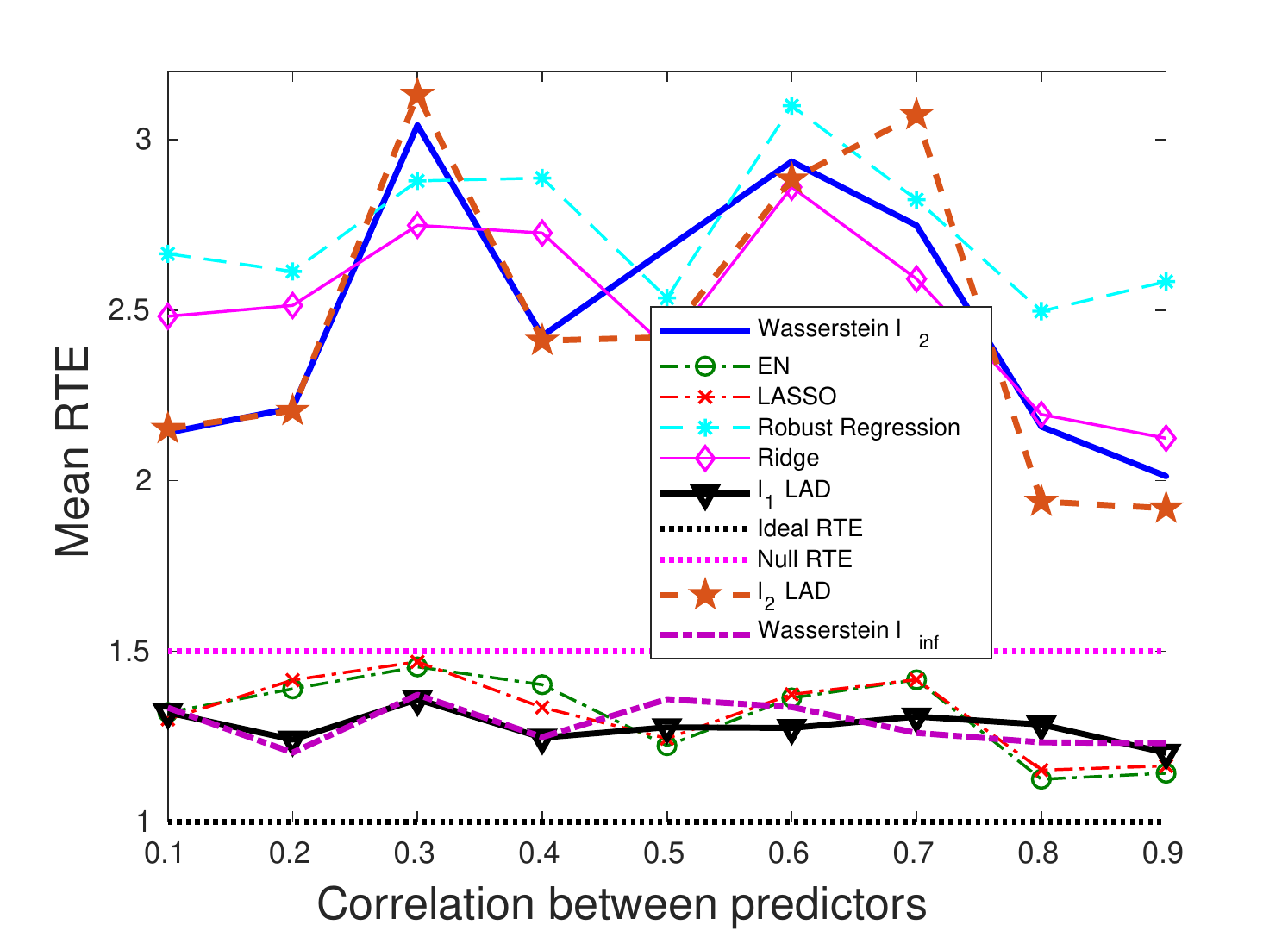}
		\caption{\small{Relative test error.}}
	\end{subfigure}%
	\begin{subfigure}{.49\textwidth}
		\centering
		\includegraphics[width=0.98\textwidth]{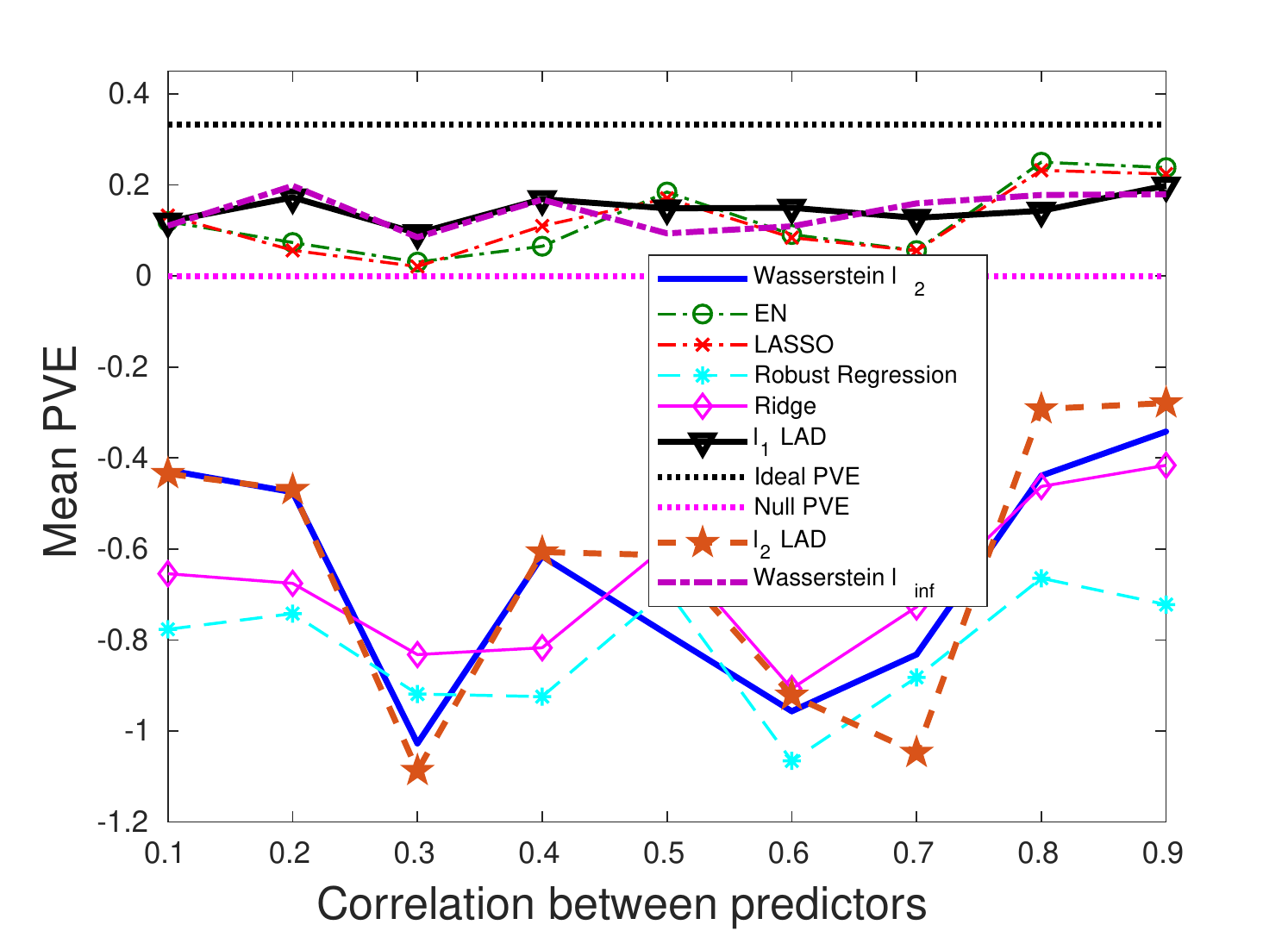}
		\caption{\small{Proportion of variance explained.}}
	\end{subfigure}
	\caption{The impact of predictor correlation on the performance metrics: sparse $\bbeta^*$, outliers in both $\bx$ and $y$.}
	\label{cor-3}
\end{figure}

\subsection{Sparse $\bbeta^*$, Outliers Only in $\bx$} \label{sparsex}
In this subsection, we will use the same sparse coefficient as in Section~\ref{sparsexy}, but the perturbations are present only in $\bx$. Specifically, for outliers, their predictors and responses are drawn from the following distributions:
\begin{enumerate}
	\item $\bx \sim \scrN (\mathbf{0}, \bSigma) + \scrN (5\mathbf{e}, \mathbf{I})$;
	\item $y \sim \scrN (\bx' \bbeta^*, \sigma^2)$. 
\end{enumerate}

Not surprisingly, the Wasserstein $\ell_{\infty}$ and the $\ell_1$-regularized LAD achieve the best performance. Notice that in Section~\ref{sparsexy}, where perturbations appear in both $\bx$ and $y$, the AD loss-based formulations have smaller generalization and estimation errors than the SR loss-based formulations. When we reduce the variation in $y$, the SR loss seems superior to the AD loss, if we restrict attention to the improperly regularized ($\ell_2$-regularizer) formulations (see Figure~\ref{snr-4}). For the $\ell_1$-regularized formulations, the Wasserstein $\ell_{\infty}$ formulation, as well as the $\ell_1$-regularized LAD, is comparable with the EN and LASSO.   

We summarize below our main findings from all sets of experiments we have presented:
\begin{enumerate}
	\item When a proper norm space is selected for the Wasserstein metric, the Wasserstein DRO formulation outperforms all others in terms of the generalization and estimation qualities.
	\item Penalizing the extended regression coefficient $(-\bbeta, 1)$ implicitly assumes a more reasonable distance metric on $(\bx, y)$ and thus leads to a better performance.
	\item The AD loss is remarkably superior to the SR loss when there is large variation in the response $y$.
	\item The Wasserstein DRO formulation shows a more stable estimation performance than others when the correlation between predictors is varied.
\end{enumerate}

\begin{figure}[p] 
	\begin{subfigure}{.49\textwidth}
		\centering
		\includegraphics[width=0.9\textwidth]{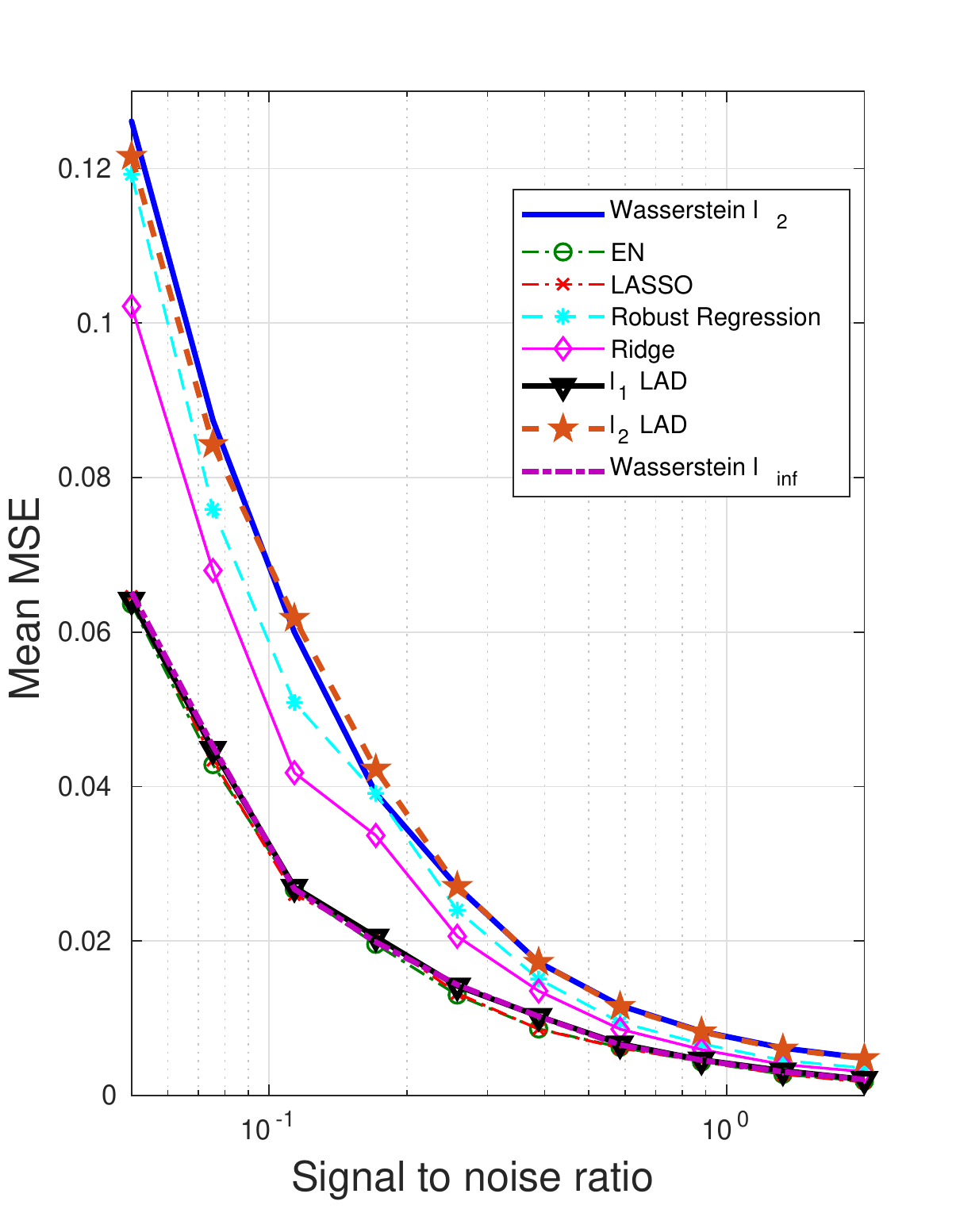}
		\caption{\small{Mean Squared Error.}}
	\end{subfigure}
	\begin{subfigure}{.49\textwidth}
		\centering
		\includegraphics[width=0.9\textwidth]{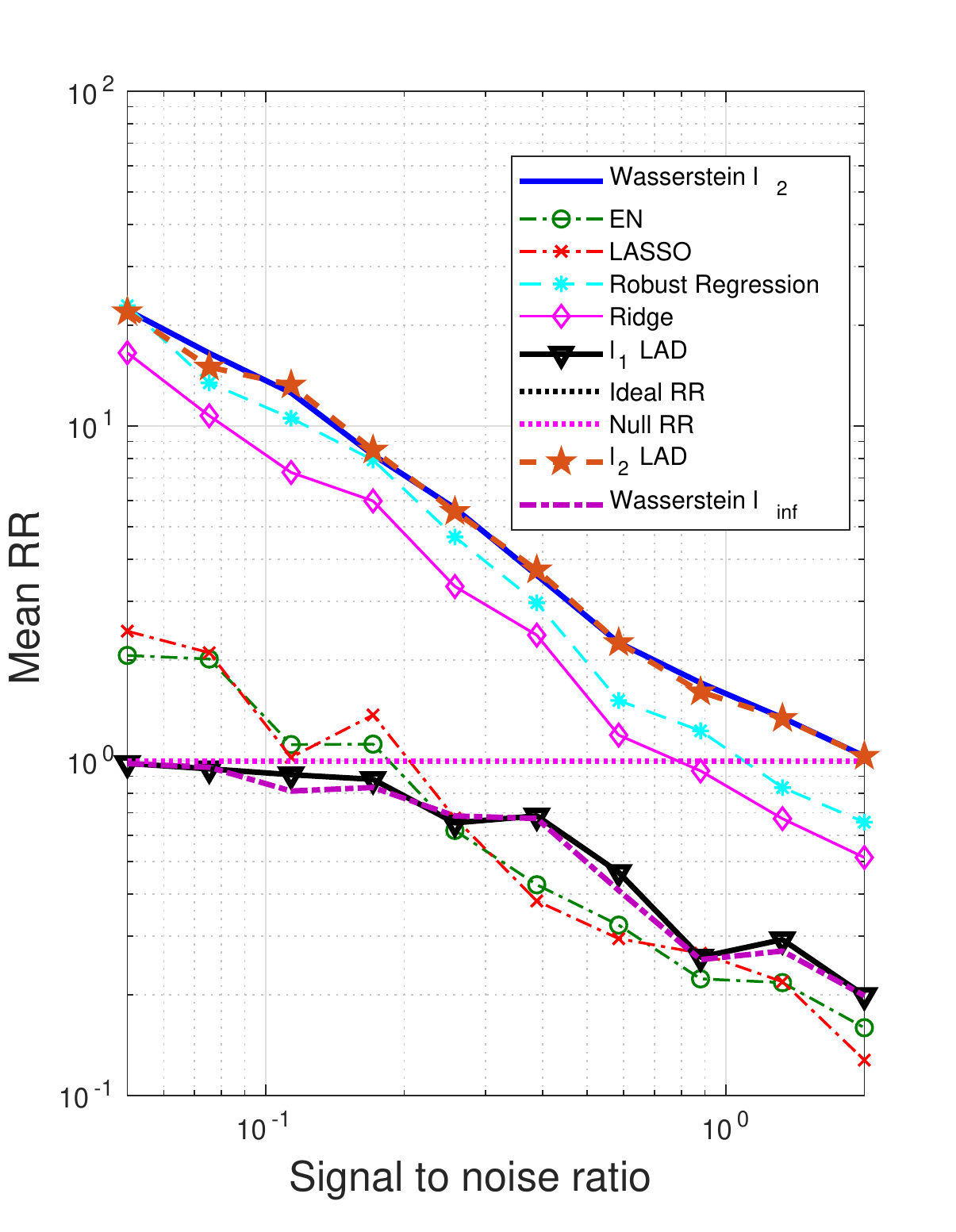}
		\caption{\small{Relative risk.}}
	\end{subfigure}
	
	\begin{subfigure}{.49\textwidth}
		\centering
		\includegraphics[width=0.9\textwidth]{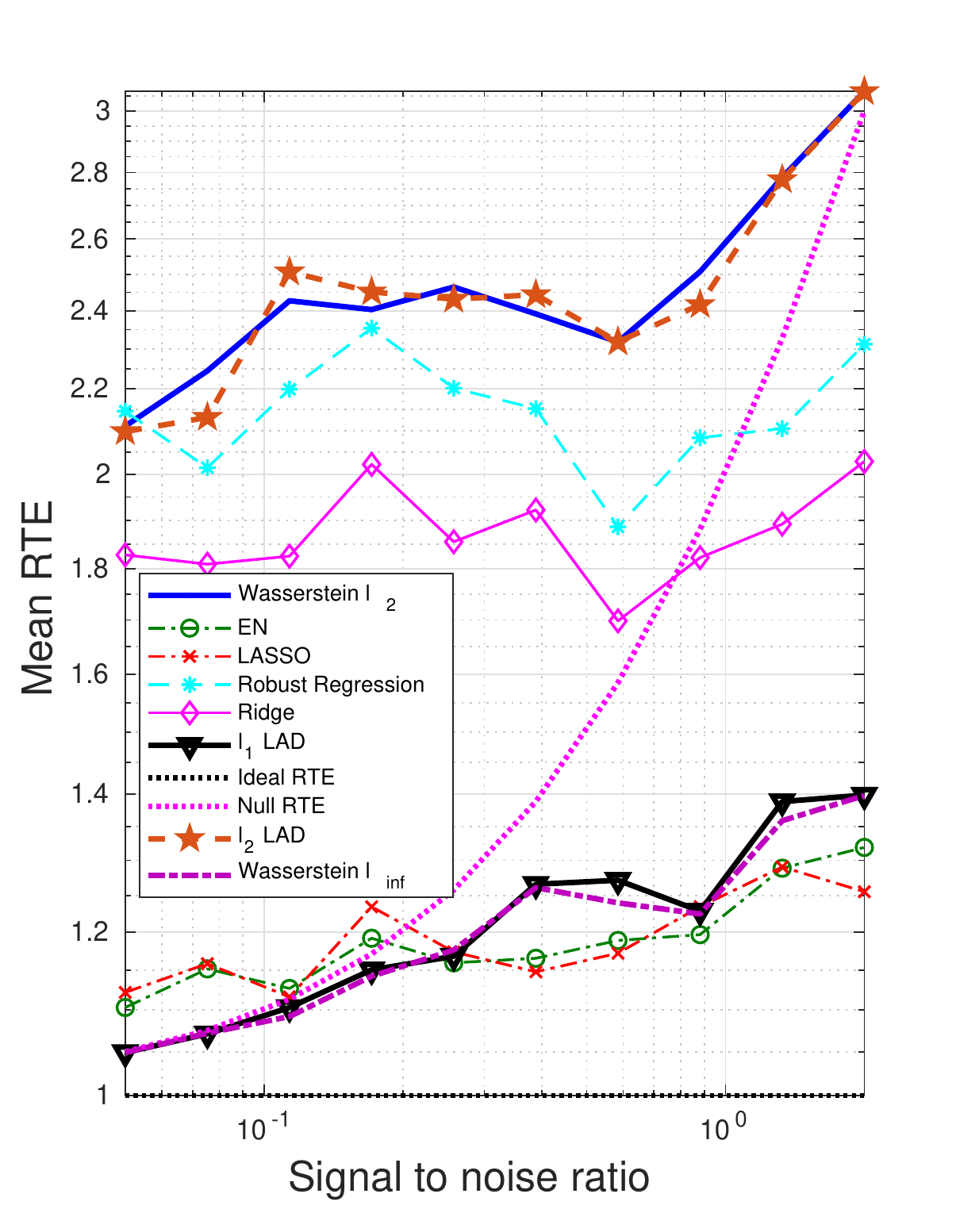}
		\caption{\small{Relative test error.}}
	\end{subfigure}%
	\begin{subfigure}{.49\textwidth}
		\centering
		\includegraphics[width=0.9\textwidth]{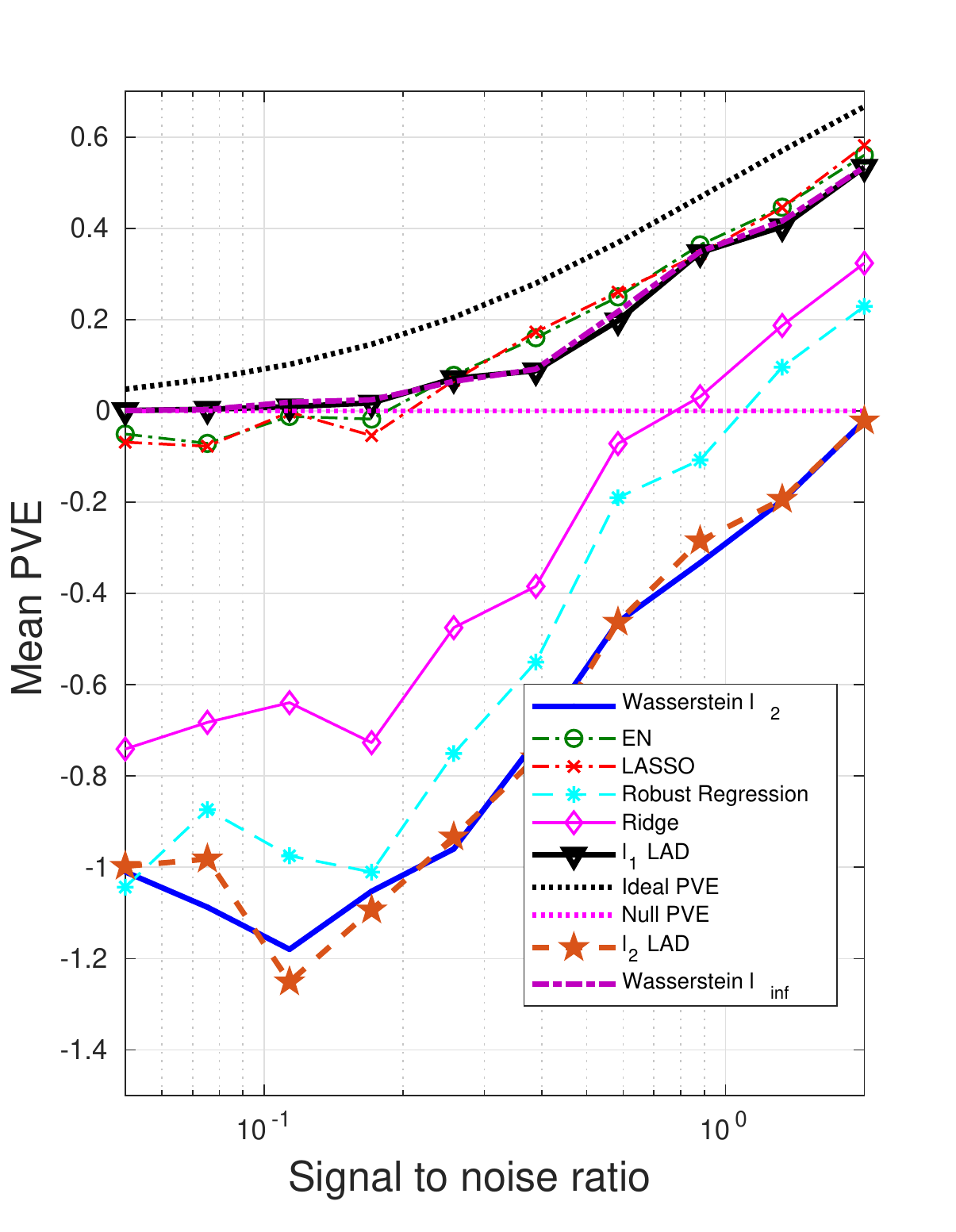}
		\caption{\small{Proportion of variance explained.}}
	\end{subfigure}
	\caption{The impact of SNR on the performance metrics: sparse $\bbeta^*$, outliers only in $\bx$.}
	\label{snr-4}
\end{figure}

\begin{figure}[p] 
	\begin{subfigure}{.49\textwidth}
		\centering
		\includegraphics[width=0.98\textwidth]{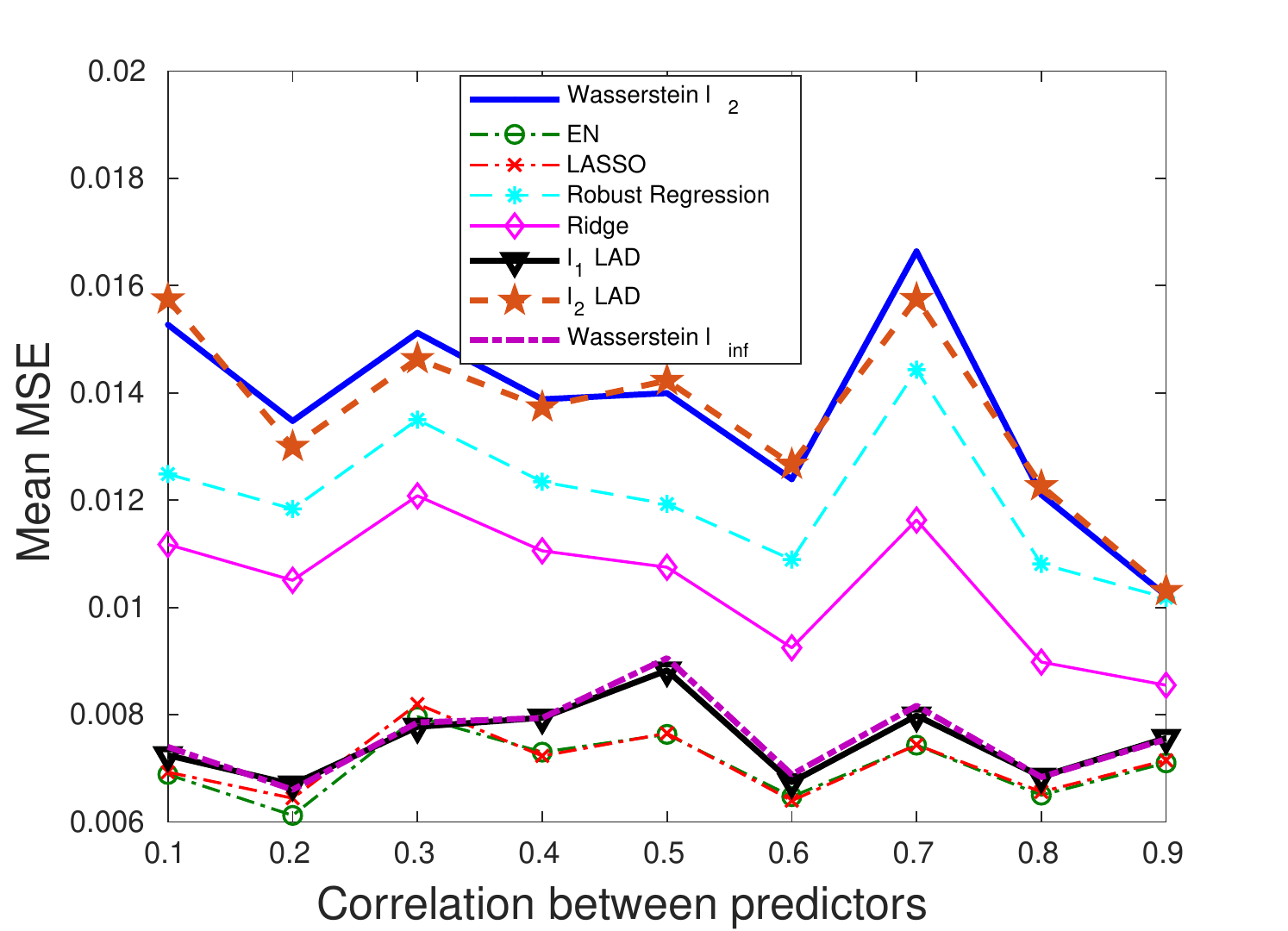}
		\caption{\small{Mean Squared Error.}}
	\end{subfigure}
	\begin{subfigure}{.49\textwidth}
		\centering
		\includegraphics[width=0.98\textwidth]{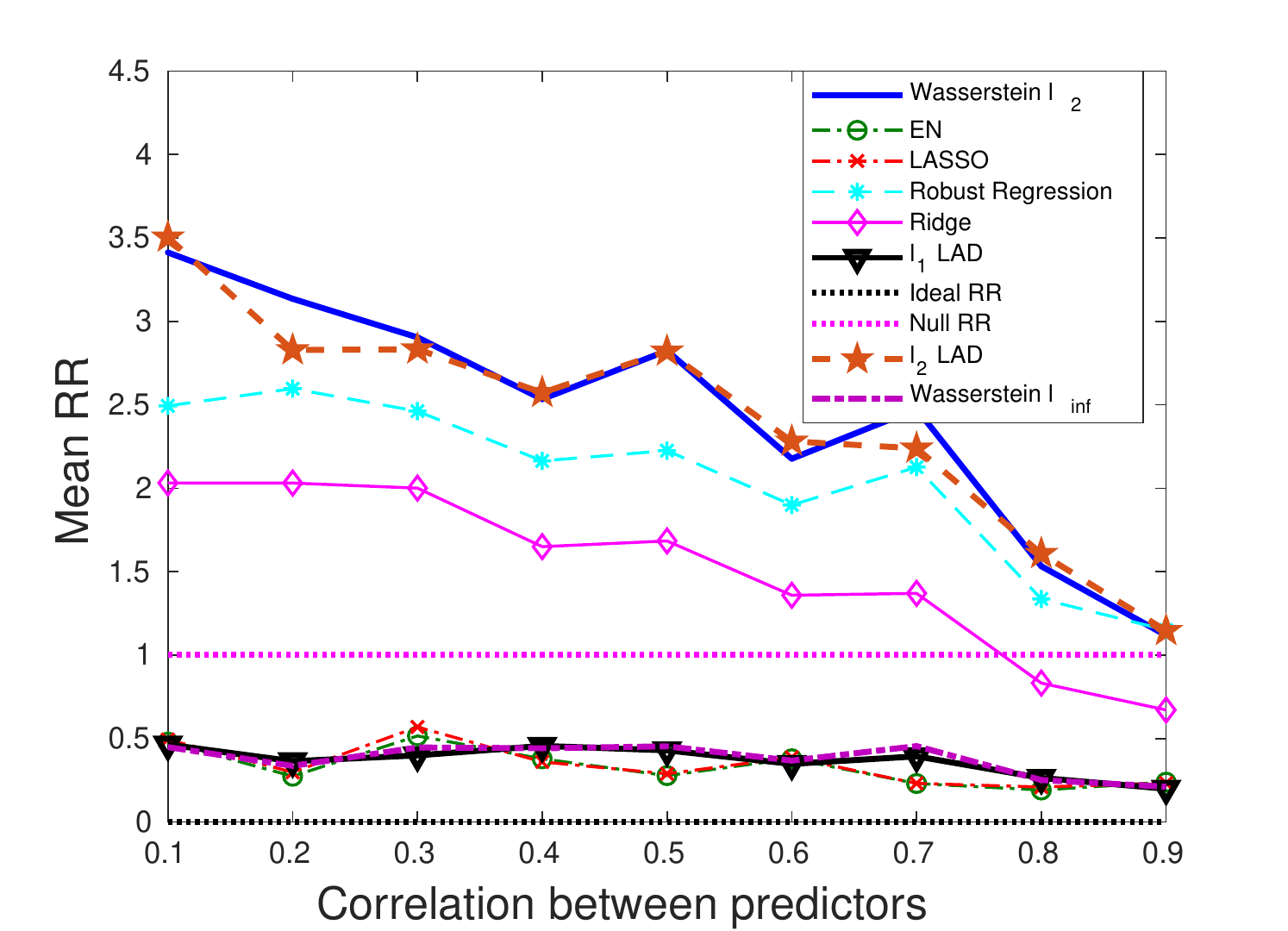}
		\caption{\small{Relative risk.}}
	\end{subfigure}
	
	\begin{subfigure}{.49\textwidth}
		\centering
		\includegraphics[width=0.98\textwidth]{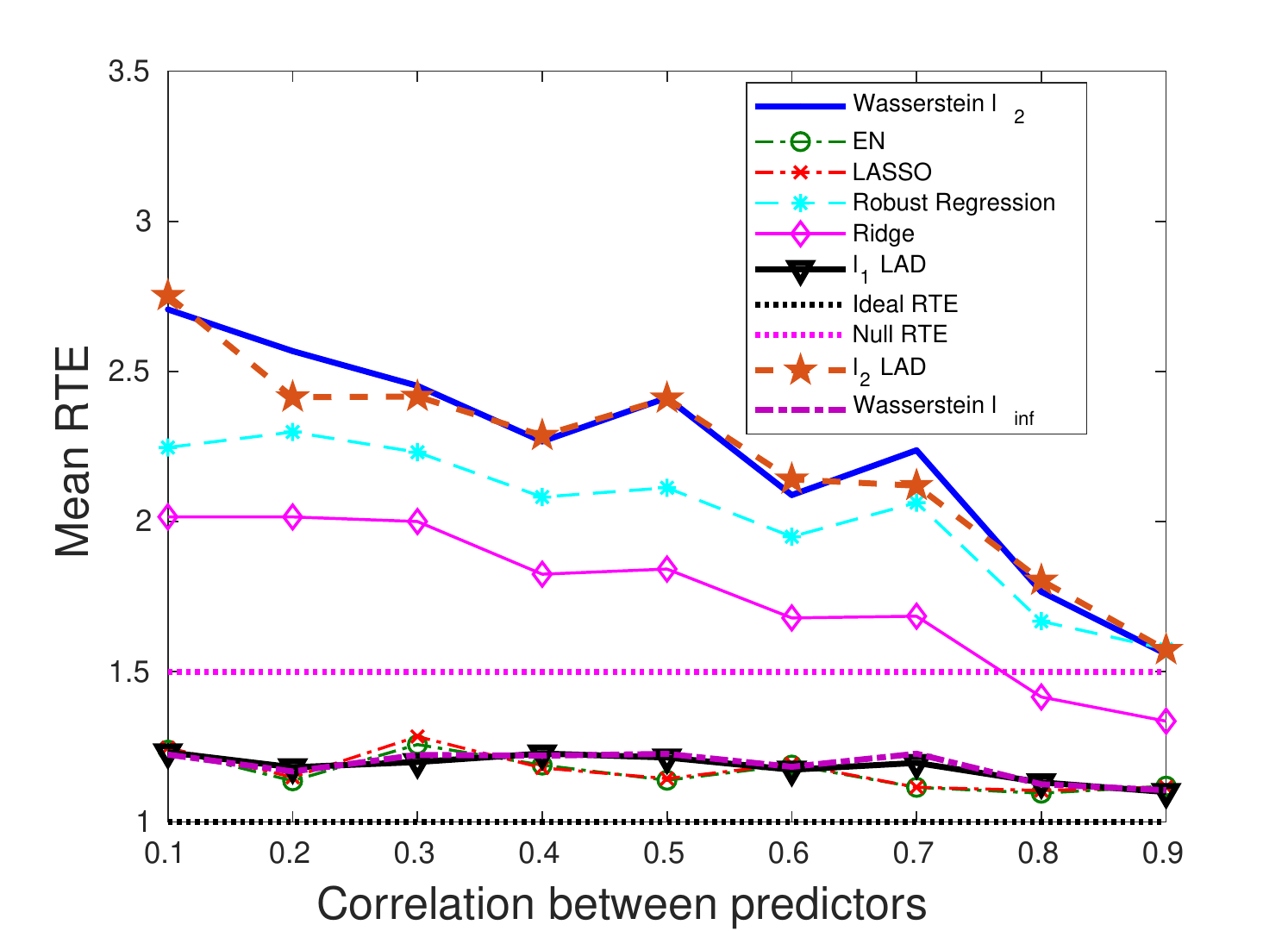}
		\caption{\small{Relative test error.}}
	\end{subfigure}%
	\begin{subfigure}{.49\textwidth}
		\centering
		\includegraphics[width=0.98\textwidth]{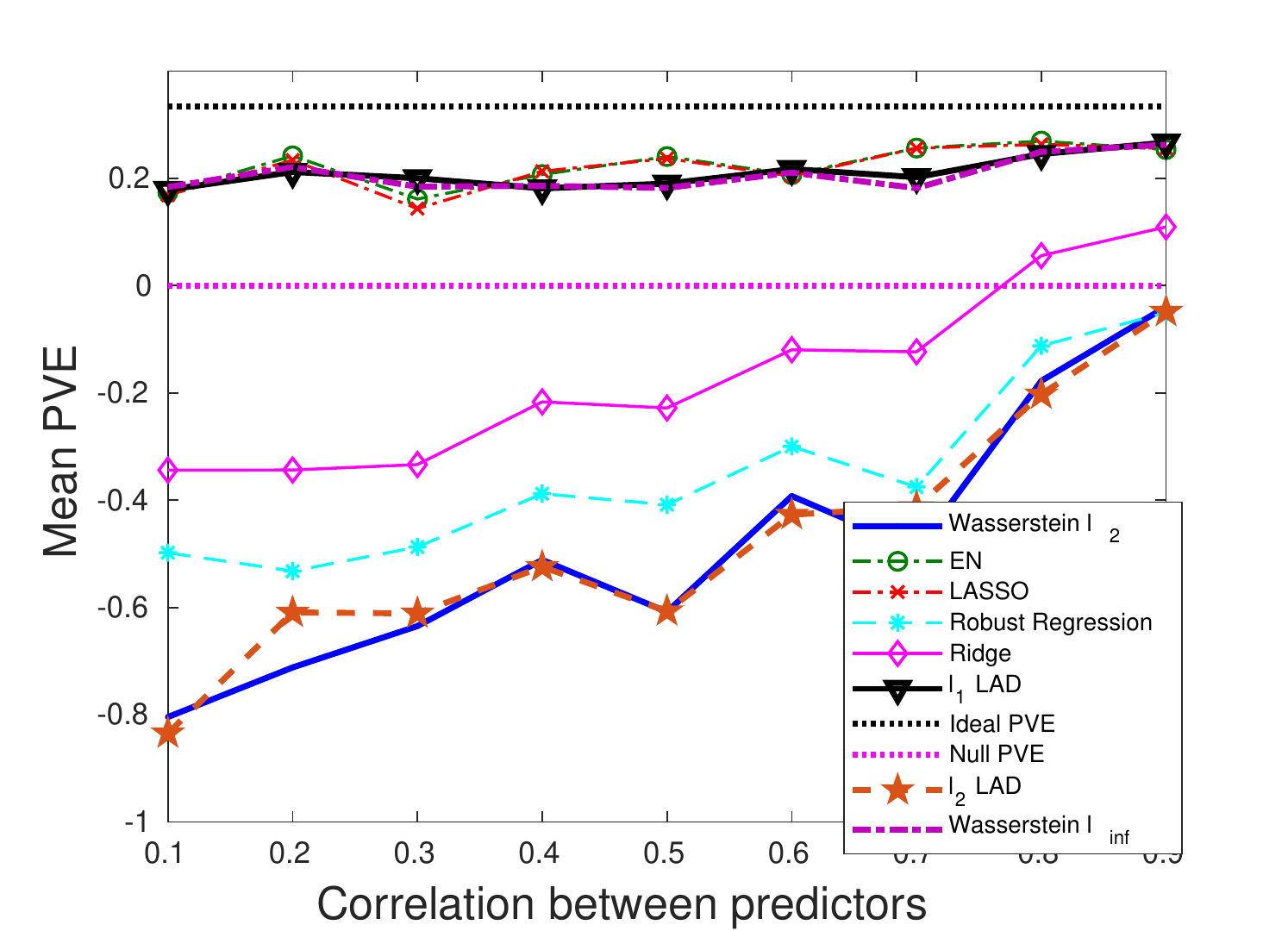}
		\caption{\small{Proportion of variance explained.}}
	\end{subfigure}
	\caption{The impact of predictor correlation on the performance metrics: sparse $\bbeta^*$, outliers only in $\bx$.}
	\label{cor-4}
\end{figure}

\section{An Application of Wasserstein DRO to Outlier Detection} \label{sec:2-5}
As an application, we consider an unlabeled two-class classification problem, where the goal is to identify the abnormal class of data points based on the predictor and response information using the Wasserstein formulation. We do not know a priori whether the samples are normal or abnormal, and thus classification models do not apply. The commonly used regression model for this type of problem is the M-estimation \citep{huber1964robust, huber1973robust}, against which we will compare in terms of the outlier detection capability. 

\subsection{Experiments on Synthetic Data}
We first report results on synthetic datasets that consist of a mixture of clean and outlying examples. For clean samples, all predictors $x_i, i \in \lb 30 \rb$,
come from a normal distribution with mean $7.5$ and standard deviation
$4.0$. The response is a linear function of the predictors with $\beta_0^*=0.3$,
$\beta_1^*=\cdots=\beta_{30}^*=0.5$, plus a Gaussian distributed noise term with zero mean and standard deviation $\sigma$. The outliers concentrate in a cloud that is randomly placed in the
interior of the $\bx$-space. Specifically, their predictors are
uniformly distributed on $(u-0.125, u+0.125)$, where $u$ is a uniform
random variable on $(7.5-3\times4, 7.5+3\times4)$. The response values of the outliers
are at a $\delta_R$ distance off the regression plane.
\begin{equation*}
y=\beta_0^*+\beta_1^*x_1+\cdots+\beta_{30}^*x_{30}+\delta_R.
\end{equation*}

We will compare the performance of the Wasserstein $\ell_2$ formulation (\ref{qcp}) with the
$\ell_1$-regularized LAD and M-estimation with three cost functions
-- Huber \citep{huber1964robust,huber1973robust}, Talwar
\citep{hinich1975simple}, and Fair \citep{fair1974robust}. The performance metrics include
the {\em Receiver Operating Characteristic (ROC)} curve which
plots the true positive rate against the false positive rate, and the related {\em Area Under Curve (AUC)}. 

Notice that all the regression methods under consideration only generate an estimated regression coefficient. The
identification of outliers is based on the residual and estimated
standard deviation of the noise. Specifically,
\begin{equation*}
\text{Outlier}=
\begin{cases}
\text{YES,}&\text{if $|\text{residual}|> \text{threshold}\times\hat{\sigma}$},\\
\text{NO,}&\text{otherwise},
\end{cases}
\end{equation*} 
where $\hat{\sigma}$ is the standard deviation of residuals in the
entire training set. ROC curves are obtained through adjusting the
threshold value. 

The regularization parameters for Wasserstein DRO and regularized LAD are tuned using a separate validation set as done in previous sections. We would like to highlight a salient advantage of the Wasserstein DRO model reflected in its robustness w.r.t. the choice of $\epsilon$. In Figure~\ref{radius} we plot the out-of-sample AUC as the radius $\epsilon$ (regularization parameter) varies, for the $\ell_2$-induced Wasserstein DRO and the $\ell_1$-regularized LAD. For the
Wasserstein DRO curve, when
$\epsilon$ is small, the Wasserstein ball contains the true distribution
with low confidence and thus AUC is low. On the other hand, too large
$\epsilon$ makes the solution overly conservative. 
Note that the robustness of the Wasserstein DRO,
indicated by the flatness of the curve, constitutes
another advantage, whereas the performance of LAD dramatically
deteriorates once the regularizer deviates from the optimum. Moreover,
the maximal achievable AUC for Wasserstein DRO is significantly higher
than LAD.
\begin{figure}[h]
	\centering
	\includegraphics[height = 2.5in]{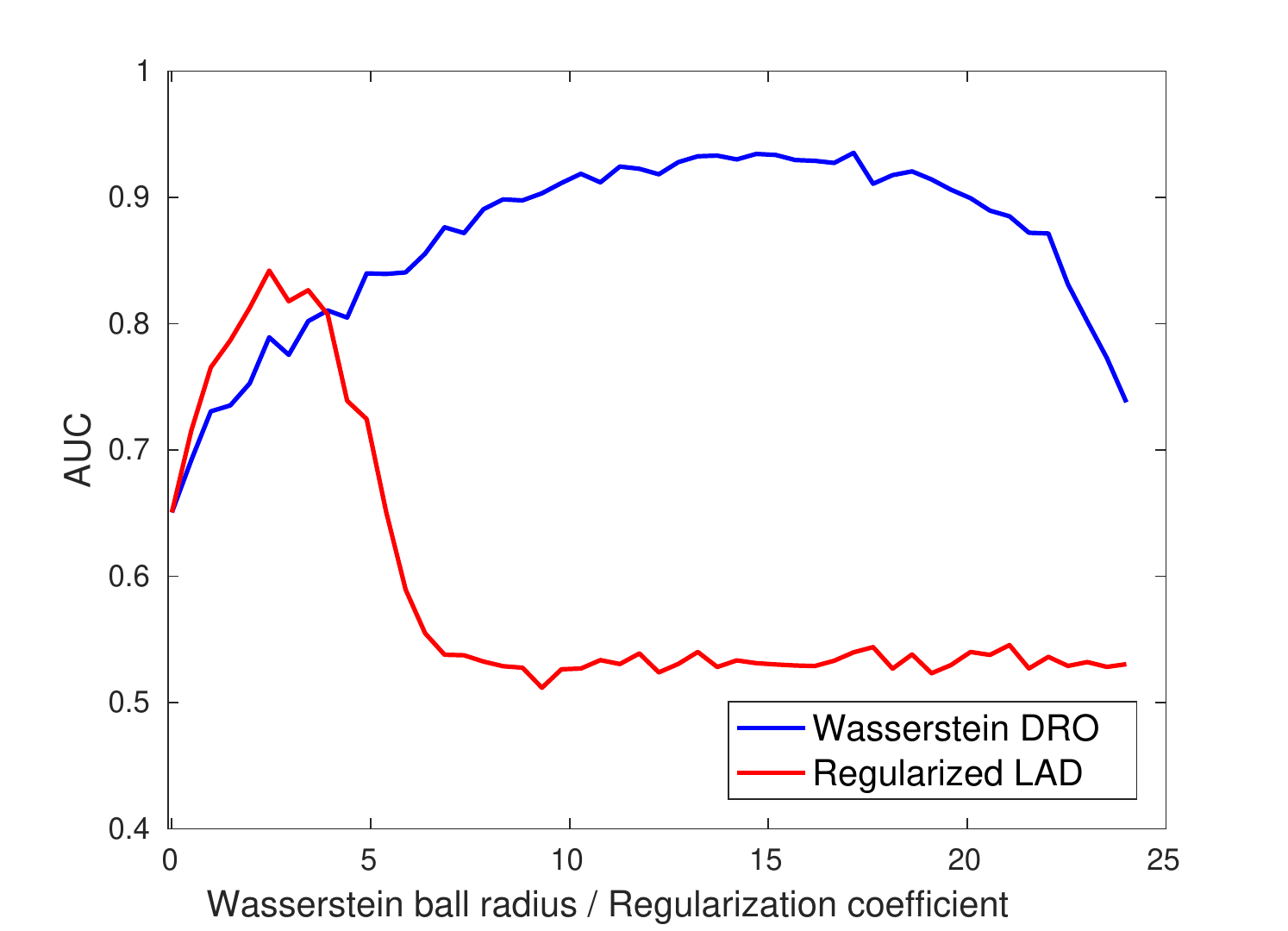}
	\caption{Out-of-sample AUC v.s.\ Wasserstein ball radius (regularization coefficient).}
	\label{radius}
\end{figure}

In Figure~\ref{f15} we show the ROC curves for different approaches, where $q$ represents the percentage of outliers, and $\delta_R$ the outlying distance along $y$. We see that the Wasserstein DRO formulation
consistently outperforms all other approaches, with its ROC curve lying
well above others. The approaches that use the AD loss function (e.g., Wasserstein DRO and regularized LAD) tend to outperform those that adopt the SR loss (e.g., M-estimation which uses a variant of the SR loss). M-estimation adopts
an {\em Iteratively Reweighted Least Squares (IRLS)} procedure which
assigns weights to data points based on the residuals from previous
iterations. With such an approach, there is a chance of exaggerating the
influence of outliers while downplaying the importance of clean
observations, especially when the initial residuals are obtained through
OLS. 

\begin{figure}[h]
	\begin{subfigure}{.49\textwidth}
		\centering
		\includegraphics[width=0.98\textwidth]{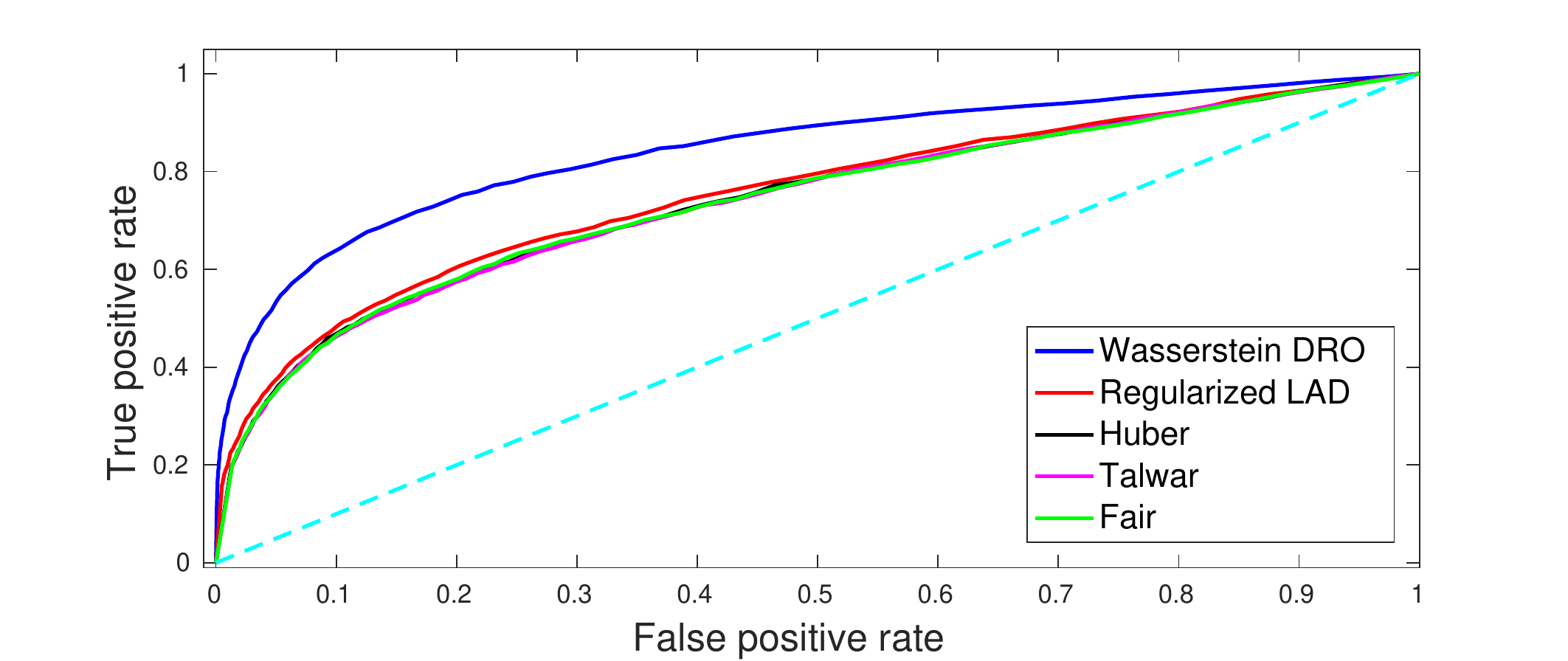}
		\caption{{\small $q=20\%, \delta_R = 3\sigma$}} 
	\end{subfigure}%
	\begin{subfigure}{.49\textwidth}
		\centering
		\includegraphics[width=0.98\textwidth]{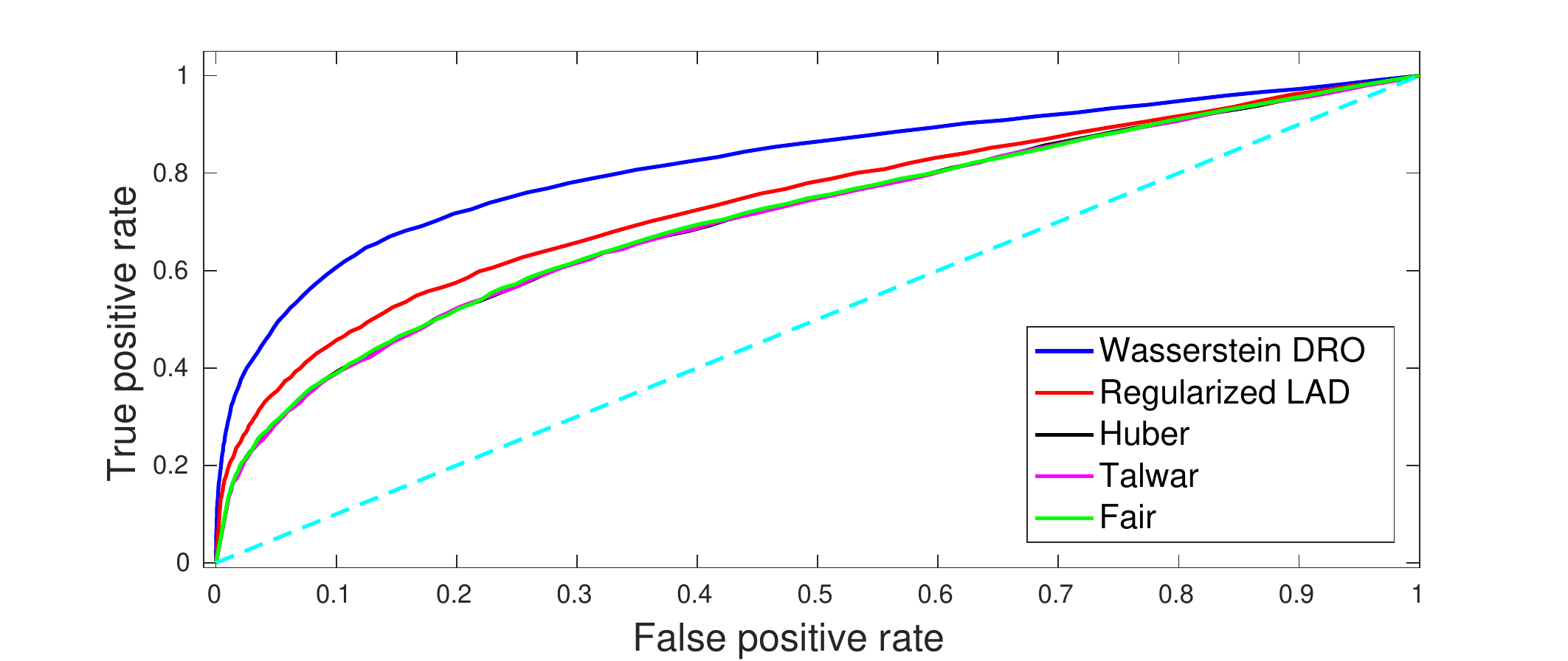}
		\caption{{\small $q=30\%, \delta_R = 3\sigma$}}
	\end{subfigure}
	
	\begin{subfigure}{.49\textwidth}
		\centering
		\includegraphics[width=0.98\textwidth]{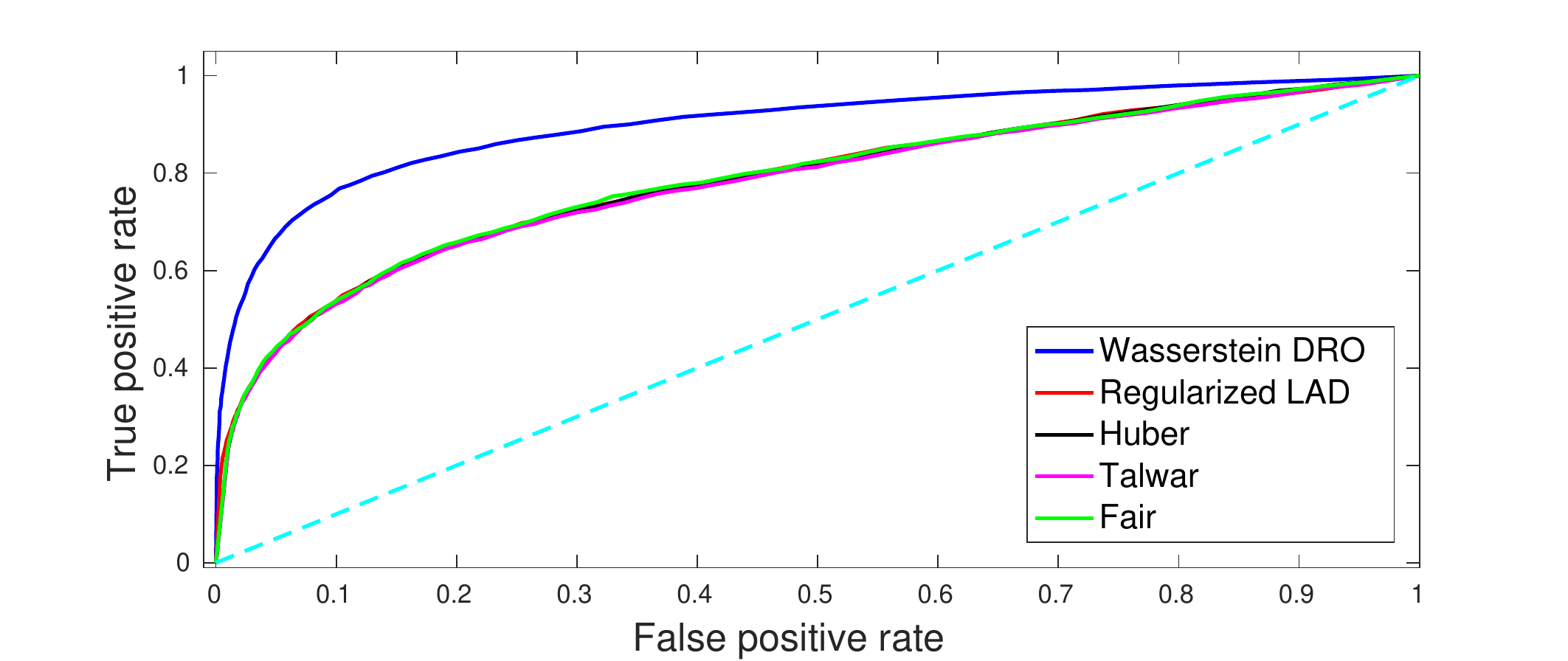}
		\caption{{\small $q=20\%, \delta_R = 4\sigma$}}
	\end{subfigure}
	\begin{subfigure}{.49\textwidth}
		\centering
		\includegraphics[width=0.98\textwidth]{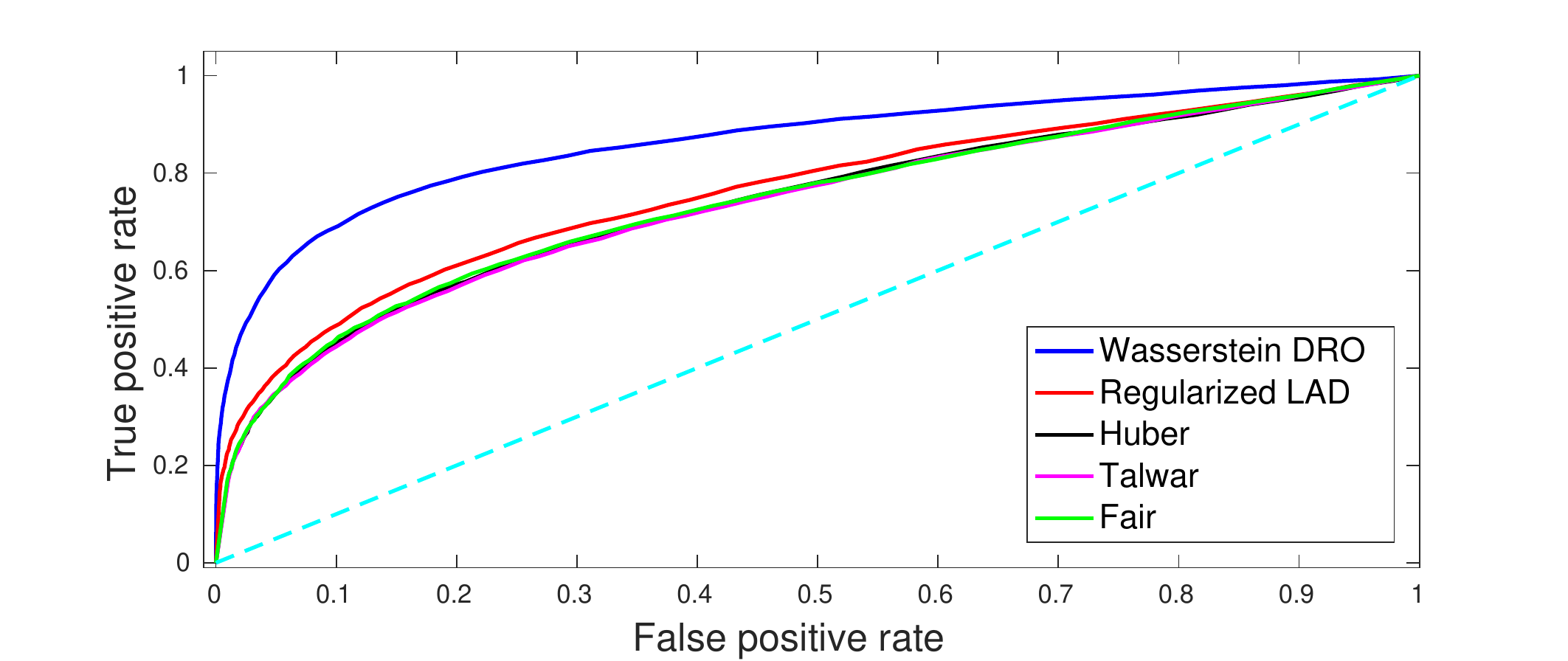}
		\caption{{\small $q=30\%, \delta_R = 4\sigma$}}
	\end{subfigure}
	
	\begin{subfigure}{.49\textwidth}
		\centering
		\includegraphics[width=0.98\textwidth]{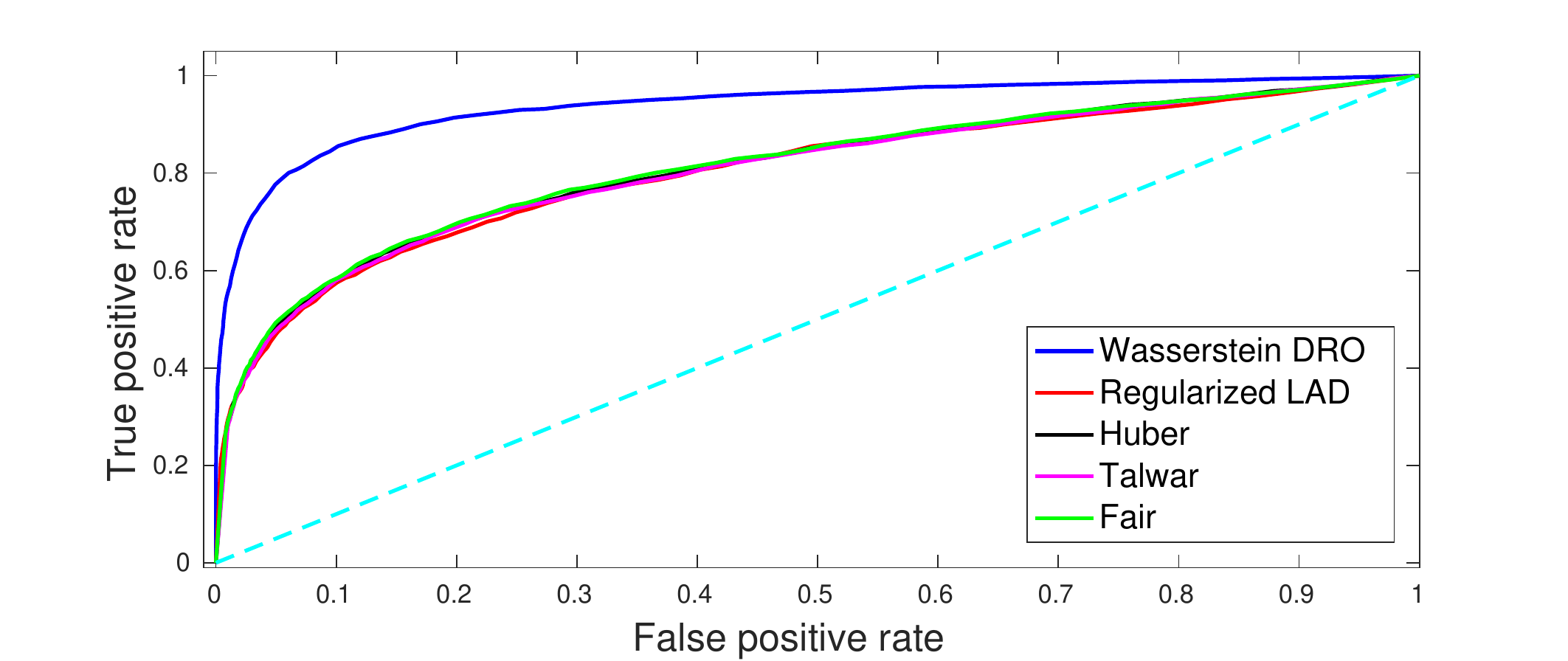}
		\caption{{\small $q=20\%, \delta_R = 5\sigma$}}
	\end{subfigure}%
	\begin{subfigure}{.49\textwidth}
		\centering
		\includegraphics[width=0.98\textwidth]{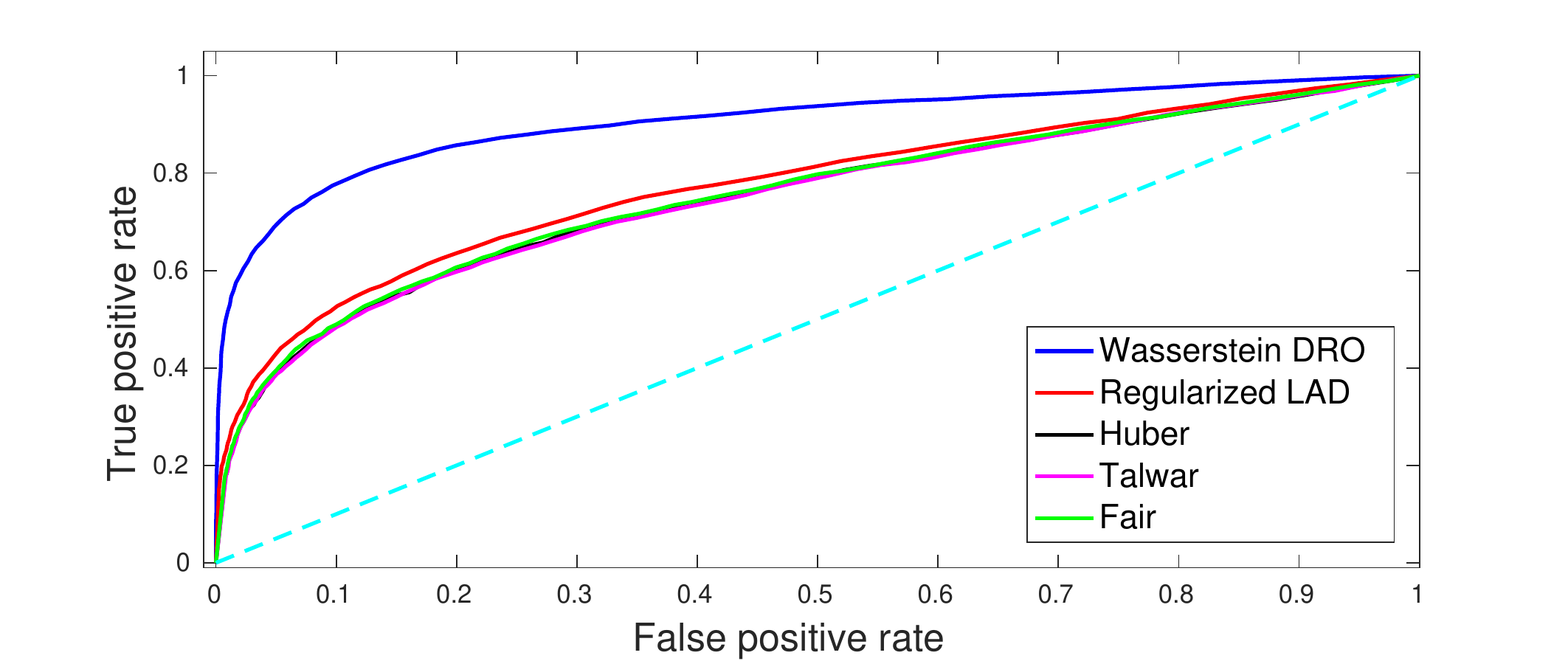}
		\caption{{\small $q=30\%, \delta_R = 5\sigma$}}
	\end{subfigure}
	\caption{ROC curves for outliers in a randomly placed cloud, $N=60, \sigma=0.5$.}
	\label{f15}
\end{figure}

\subsection{CT Radiation Overdose Detection}
In this section we consider an application of Wasserstein DRO regression to CT
radiation overdose detection \citep{chen2019detection}. The goal is to identify all
CT scans with an unanticipated high radiation exposure, given the characteristics of
the patient and the type of the exam. This could be cast as an outlier detection
problem; specifically, estimating a robustified regression plane that is immunized
against outliers and learns the underlying true relationship between radiation dose
and the relevant predictors. Given such a regression plane, abnormal CT scans can be 
identified by the residuals of the regression. 

The data was obtained from a HIPAA-compliant, Institutional Review Board (IRB)-approved retrospective cohort study that was conducted at an academic medical system including a 793-bed quaternary care hospital, and two outpatient imaging facilities. 
The original de-identified dataset contained 28 fields for 189,959 CT exams, and the per acquisition CT Dose Index (CTDI), which measures the amount of exposure to CT radiation. Mean patient age was $60.6 \pm 17.1$ years; 54.7\% were females.

The data was pre-processed as follows: $(i)$ patient visits with more than half of the corresponding variables missing, or a missing value for CTDI,  were discarded; $(ii)$ categorical variables were encoded using indicator variables, and categories present only in a small number of exams were deleted; $(iii)$ variables that have low correlation with CTDI were removed from further consideration; $(iv)$ missing values were imputed by the mean (for numerical predictors) or mode (for categorical predictors); $(v)$ all predictors were standardized by subtracting the mean and dividing by the standard deviation.

After pre-processing, we were left with 606 numerically encoded predictors for 88,566 CT exams. 
We first applied the variable selection method LASSO to select important variables for predicting CTDI, and then employed the Wasserstein DRO regression approach (induced by the $\ell_2$ norm) to learn a predictive model of CT radiation doses given important variables identified by LASSO. Patient visits whose predicted radiation dose was statistically different from the radiation dose actually received were identified as outliers. 

To assess the accuracy of the outlier cohort discovery process, we conducted a manual validation in which the results of a human-expert classification were compared to those extracted by the algorithm. A validation sample size of 200 cases were reviewed, yielding specificity of 0.85 [95\% CI 0.78-0.92] and sensitivity of 0.91 [95\% CI 0.85-0.97] (Positive Predictive Value PPV=0.84, Negative Predictive Value NPV=0.92). 

We compared against two alternatives on the same validation set of 200 samples that were reviewed by the human expert. The first alternative method is what we call a “cutoff” method. We computed the average and standard deviation of CTDI over a training set and identified as outlying exams where the CTDI was larger than the average plus 3 times the standard deviation.   
The second alternative method used OLS in lieu of the Wasserstein DRO regression, and the regression residuals (this time from OLS) were used to detect outliers. The results are reported in Table~\ref{ct-exp}, showing an improvement of 72.5\% brought by the Wasserstein DRO method in terms of the F\textsubscript{1} score, which is defined as the harmonic mean of sensitivity and PPV. For an additional point of comparison, we considered the top-40 outliers identified by each method. Among these outliers, 7 of the top-40 OLS outliers (17.5\%) were considered to be ``false positives''; while all the top-40 outliers detected by Wasserstein DRO were real outliers.

\begin{table}[hbt]
	\caption{Comparison of Wasserstein DRO regression against OLS and the cutoff method on CT radiation data.} \label{ct-exp} 
	\begin{center}
		\begin{tabular}{l c c c c c}
			\hline
			& Sensitivity & Specificity   & PPV   & NPV   & F\textsubscript{1} score \\ \hline 
			Wasserstein $\ell_2$ & 0.91        & 0.85          & 0.84  & 0.92  & 0.88   \\ 
			OLS                  & 0.36        & 0.95          & 0.87  & 0.64  & 0.51  \\ 
			Cutoff               & 0.37        & 0.94          & 0.83  & 0.64  & 0.51 \\
			\hline
		\end{tabular}
	\end{center}
\end{table}

\section{Summary} \label{sec:2-6}
In this section, we presented a novel $\ell_1$-loss based robust learning procedure using {\em
	Distributionally Robust Optimization (DRO)} under the Wasserstein metric in a linear regression
setting, through which a delicate connection between the metric space on data and the regularization term has been established. The Wasserstein formulation incorporates a class
of models whose specific form depends on the norm space that the
Wasserstein metric is defined on. We provide out-of-sample generalization
guarantees, and bound the estimation bias
of the general formulation. Extensive
numerical examples demonstrate the superiority of the Wasserstein
formulation and shed light on the advantages of the $\ell_1$-loss, the implication of the regularizer, and the selection of the norm space for the Wasserstein metric. We also presented an outlier detection example as an application of this robust learning procedure. A
remarkable advantage of this approach rests in its flexibility to adjust the form of the regularizer based on the characteristics of the data.

\chapter{Distributionally Robust Grouped Variable Selection} \label{chap:group}
In this section, we will discuss a special case of the general formulation (\ref{qcp}) tailored for selecting grouped variables that are relevant to the response when there exists a predefined grouping structure for the predictors. An example of this is the encoding of a categorical predictor using a group of indicator variables. Jointly selecting/dropping all variables in a group gives rise to more interpretable models. To perform variable selection at a group level, the {\em Grouped LASSO (GLASSO)}
was proposed by \cite{bakin1999adaptive, yuan2006model}, which imposes a block-wise $\ell_2$-normed
penalty for the grouped coefficient
vectors. We will show that by using a special norm ($\|\cdot\|_{2,\infty}$) on the data space, the Wasserstein DRO formulation recovers the GLASSO penalty under the absolute residual loss (regression) and the log-loss (classification).
The resulting model offers
robustness explanations for GLASSO algorithms and highlights the
connection between robustification and regularization. 

\section{The Problem and Related Work}
The {\em Grouped LASSO (GLASSO)}
was first proposed by \cite{bakin1999adaptive, yuan2006model} to induce sparsity at a group level, when there exists a predefined grouping structure for the predictors. Suppose the predictor $\bx = (\bx^1, \ldots, \bx^L)$, and the regression coefficient $\bbeta = (\bbeta^1, \ldots, \bbeta^L)$, where $\bx^l, \bbeta^l \in \mbb{R}^{p_l}, l \in \lb L \rb$, respectively represent the predictor and coefficient for group $l$ which contains $p_l$ predictors. GLASSO minimizes:
\begin{equation*}
\inf\limits_{\bbeta} 
\frac{1}{N}\sum\limits_{i=1}^N (y_i - \bx_i' \bbeta)^2 + \epsilon  
\sum_{l=1}^L \sqrt{p_l}\|\bbeta^l\|_2,
\end{equation*}
where $(\bx_i, y_i), i \in \lb N \rb$, are $N$ observed samples of $(\bx, y)$. Several extensions have been explored. In
particular, \cite{lin2006component, yin2012group} considered grouped variable selection in
nonparametric models. \cite{zhao2009composite, jacob2009group} explored GLASSO for overlapping groups.
The group sparsity in general regression/classification models has also been
investigated in several works, see, for example, \cite{kim2006blockwise, meier2008group, bertsimas2017logistic} for GLASSO in logistic regression,  
and \cite{roth2008group} for GLASSO in generalized linear models. 

Most of the existing works endeavor to modify the GLASSO formulation heuristically to achieve various goals. As an example,
\cite{simon2013sparse} considers a convex combination of the GLASSO and
LASSO penalties, called Sparse Grouped LASSO, to induce
both group-wise and within group sparsity. \cite{bunea2014group} modified the residual sum of squares to
its square root and proposed the {\em Grouped Square Root LASSO (GSRL)}. However, few of those works were able to provide a rigorous explanation or theoretical justification for the form of the penalty term.

In this section, we attempt to fill this gap by casting the problem of grouped variable selection into the Wasserstein DRO framework. We show that in {\em Least Absolute Deviation (LAD)} and {\em Logistic Regression (LG)}, for a specific norm-induced Wasserstein metric, the DRO model can be reformulated as a regularized empirical loss minimization problem, where the regularizer coincides with the GLASSO penalty, and its magnitude is equal to the radius of the distributional ambiguity set. Through such a reformulation we establish a connection between regularization and
robustness and offer new insights into the GLASSO penalty term.

We note that such a connection between robustification and regularization has been
explored in several works (see Section~\ref{sec:2-2}), but none of them
considered grouped variable selection. This section sheds new light on the significance of exploring the group-wise DRO problem. It is worth noting that \cite{blanchet2017distributionally} has studied the group-wise regularization estimator with the square root of the expected loss under the Wasserstein DRO framework and recovered the GSRL. Here, we present a more general framework that includes both the LAD and the negative log-likelihood loss functions, and recover the GLASSO penalty in both cases. Moreover, we point out the potential of generalizing such results to a class of loss functions with a finite growth rate.

The remainder of this section is organized as follows. Section~\ref{sec:3-2} introduces the Wasserstein GLASSO formulations for LAD and LG. Section~\ref{sec:3-3}
establishes a desirable grouping effect, showing
that the difference between coefficients within the same
group converges to zero as $O(\sqrt{1-\rho})$, where $\rho$ is their sample
correlation. 
In light of this result, we use the spectral clustering algorithm to divide the predictors into a pre-specified number of groups. This renders the GLASSO algorithm completely {\em data-driven}, in the sense that no more information other than the data itself is needed.
Section~\ref{sec:3-4} presents numerical results on both synthetic data and a
real very large dataset with surgery-related medical records. 
Conclusions are in Section~\ref{sec:3-5}.

\section{The Groupwise Wasserstein Grouped LASSO} \label{sec:3-2}
In this section we describe the model setup and derive what we call the {\em Groupwise Wasserstein Grouped LASSO (GWGL)} formulation. We will consider a LAD regression model for continuous responses and an LG model for binary categorical responses. In Section~\ref{overlap}, we present a GWGL formulation for overlapping groups.

\subsection{GWGL for Continuous Response Variables}
\label{sec:gwgl_lr} 
We assume that the predictors belong to $L$ prescribed groups with group size $p_l$, $l \in \lb L \rb$, i.e., $\bx = (\bx^1, \ldots,
\bx^L)$, where $\bx^l \in \mbb{R}^{p_l}$ and $\sum_{l=1}^L p_l = p$ (no overlap among
groups). The regression coefficient is $\bbeta = (\bbeta^1, \ldots, \bbeta^L)$, where $\bbeta^l \in \mbb{R}^{p_l}$ denotes the regression
coefficient for group $l$. Similar to Section~\ref{chapt:dro}, we assume
\begin{equation*}
y = \bx' \bbeta^* + \eta.
\end{equation*}
The main assumption we make regarding $\bbeta^*$ is that it is {\em
	group sparse}, i.e., $\bbeta^l = \bzero$ for $l$ in some subset of
$\lb L \rb$. Our goal is to obtain an accurate estimate of
$\bbeta^*$ under perturbations on the data, when the predictors have a predefined grouping structure. We model stochastic disturbances on the data via distributional uncertainty, and apply a Wasserstein DRO framework to inject robustness into the solution. The learning problem is formulated as:
\begin{equation*} 
\inf\limits_{\bbeta}\sup\limits_{\mbb{Q}\in \Omega}
\mbb{E}^{\mbb{Q}}\big[ |y-\bx'\bbeta|\big], 
\end{equation*}
where $\mbb{Q}$ is the probability distribution of $\bz=(\bx, y)$, belonging to some
set $\Omega$ defined as: 
\begin{equation} \label{omega-gwgl-lr}
\Omega = \Omega_{\epsilon}^{s,1}(\hat{\mathbb{P}}_N) \triangleq \{\mbb{Q}\in \scrP(\scrZ): W_{s,1}(\mathbb{Q},\
\hat{\mathbb{P}}_N) \le \epsilon\}, 
\end{equation}
and the order-one Wasserstein distance $W_{s,1}(\mbb{Q},\ \hat{\mbb{P}}_N)$ is
defined on the metric space $(\scrZ, s)$ associated with the data points $\bz$. To
reflect the group structure of the predictors and to take into account the group
sparsity requirement, we adopt a specific notion of norm to define the metric
$s$. Specifically, for a vector $\bz$ with a group structure $\bz = (\bz^1, \ldots,
\bz^L)$, define its $(q, t)$-norm, with $q,t\geq 1$, as:
\begin{equation*}
\|\bz\|_{q, t} = \Bigl(\sum_{l=1}^L
\bigl(\|\bz^l\|_q\bigr)^t\Bigr)^{1/t}. 
\end{equation*}
Notice that the $(q, t)$-norm of $\bz$ is actually the $\ell_t$-norm
of the vector $(\|\bz^1\|_q, \ldots, \|\bz^L\|_q)$, which
represents each group vector $\bz^l$ in a concise way via the
$\ell_q$-norm.

Inspired by the LASSO where the $\ell_1$-regularizer is used to induce sparsity on the individual level, we wish to deduce an $\ell_1$-norm penalty on the group level from (\ref{qcp}) to induce group sparsity on $\bbeta^*$. This motivates the use of the $(2, \infty)$-norm on the
weighted predictor-response vector $$\bz_{\bw} \triangleq \bigg(\frac{1}{\sqrt{p_1}}\bx^1,
\ldots, \frac{1}{\sqrt{p_L}}\bx^L, M y \bigg),$$ where the weight vector is
$$\bw=\bigg(\frac{1}{\sqrt{p_1}}, \ldots, \frac{1}{\sqrt{p_L}}, M \bigg),$$ and $M$
is a positive weight assigned to the response. Specifically,
\begin{equation} \label{2infty}
\|\bz_{\bw}\|_{2, \infty} = \max \left\{\frac{1}{\sqrt{p_1}}\|\bx^1\|_2,
\ldots, \frac{1}{\sqrt{p_L}}\|\bx^L\|_2, M |y|\right\}.
\end{equation}

In (\ref{2infty}) we normalize each group by the number of predictors, to prevent large groups from having a large impact on the distance metric. The $\|\cdot\|_{2, \infty}$ operator computes the maximum of the $\ell_2$ norms of the (weighted) grouped
predictors and the response. It essentially selects the
most influential group when determining the closeness between two points
in the predictor-response space, which is consistent with our group sparsity assumption in that not all groups of predictors contribute to the determination of $y$, and thus a metric that ignores the unimportant groups (e.g., $\|\cdot\|_{2, \infty}$) is desired.

Based on (\ref{qcp}), in order to obtain the GWGL formulation, we need to derive the dual norm of $\|\cdot\|_{2, \infty}$. A
general result that applies to any $(q, t)$-norm is presented in the
following theorem. The dual norm of the $(2, \infty)$-norm is a direct
application of Theorem~\ref{dualnorm}.

\begin{thm} \label{dualnorm} Consider a vector $\bx = (\bx^1,
	\ldots, \bx^L)$, where each $\bx^l \in \mbb{R}^{p_l}$, and $\sum_l p_l
	= p$. Define the weighted $(r, s)$-norm of $\bx$ with the weight
	vector $\bw = (w_1, \ldots, w_L)$ to be:
	\begin{equation*}
	\|\bx_{\bw}\|_{r, s} = \Bigl(\sum_{l=1}^L
	\bigl(\|w_l\bx^l\|_r\bigr)^s\Bigr)^{1/s}, 
	\end{equation*} 
	where $\bx_{\bw} = (w_1\bx^1, \ldots, w_L\bx^L)$, $w_l>0, \forall
	l$, and $r, s \ge 1$. Then, the dual norm of the weighted $(r, s)$-norm with weight $\bw$
	is the $(q, t)$-norm with weight $\bw^{-1}$, where $1/r + 1/q = 1$,
	$1/s+1/t = 1$, and $\bw^{-1} = (1/w_1, \ldots, 1/w_L)$.
\end{thm}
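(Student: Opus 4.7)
The plan is to prove the theorem by applying H\"{o}lder's inequality twice — once within each group and once across groups — to establish the upper bound, and then to exhibit an explicit extremizer to confirm tightness.

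First I would write down the definition of the dual norm as
\[
\|\bx_{\bw}\|_{r,s}^* = \sup_{\|\bx_{\bw}\|_{r,s}\le 1} \by'\bx,
\]
and then decompose $\by'\bx = \sum_{l=1}^L (\by^l)'\bx^l = \sum_{l=1}^L (w_l \bx^l)' (\by^l/w_l)$. The introduction of the weighting factors $w_l$ and $1/w_l$ in a paired fashion is the key algebraic move, since it lets us pass the weights onto $\by$ cleanly. Applying H\"{o}lder's inequality inside each group with conjugate exponents $r, q$ (so $1/r+1/q=1$) gives
\[
\bigl|(w_l \bx^l)'(\by^l/w_l)\bigr| \le \|w_l \bx^l\|_r \cdot \|\by^l/w_l\|_q.
\]

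Next I would apply H\"{o}lder's inequality across the $L$ groups with conjugate exponents $s, t$ (so $1/s+1/t=1$) to the pair of non-negative sequences $(\|w_l \bx^l\|_r)_{l=1}^L$ and $(\|\by^l/w_l\|_q)_{l=1}^L$, yielding
\[
\sum_{l=1}^L \|w_l \bx^l\|_r \cdot \|\by^l/w_l\|_q \le \Bigl(\sum_{l=1}^L \|w_l \bx^l\|_r^s\Bigr)^{1/s} \Bigl(\sum_{l=1}^L \|\by^l/w_l\|_q^t\Bigr)^{1/t} = \|\bx_{\bw}\|_{r,s} \cdot \|\by_{\bw^{-1}}\|_{q,t}.
\]
Combining the two inequalities and restricting to $\|\bx_{\bw}\|_{r,s}\le 1$ gives the upper bound $\|\bx_{\bw}\|_{r,s}^* \le \|\by_{\bw^{-1}}\|_{q,t}$.

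To show tightness, I would construct an explicit $\bx$ achieving (or approaching) equality using the equality conditions of H\"{o}lder. Set $a_l \triangleq \|\by^l/w_l\|_q$ and assume $\|\by_{\bw^{-1}}\|_{q,t}>0$ (the degenerate case is immediate). Within each group with $a_l>0$, choose $\bx^l$ so that $w_l \bx^l$ is the standard H\"{o}lder dual of $\by^l/w_l$ — i.e., componentwise $w_l x^l_j = c_l \,\mathrm{sign}(y^l_j/w_l)\,|y^l_j/w_l|^{q-1}$ — which forces equality in the intra-group H\"{o}lder step and yields $\|w_l \bx^l\|_r = c_l a_l^{q-1}$ (using $r(q-1)=q$). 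Then choose the scalars $c_l \ge 0$ so that $\|w_l \bx^l\|_r^s \propto a_l^t$, which is the equality condition for the outer H\"{o}lder step, and normalize so that $\|\bx_{\bw}\|_{r,s}=1$. For groups with $a_l=0$ set $\bx^l=\bzero$. A direct computation then shows $\by'\bx = \|\by_{\bw^{-1}}\|_{q,t}$, matching the upper bound.

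The main obstacle will be the bookkeeping in the tightness step: verifying the arithmetic linking $r(q-1)=q$ and the proportionality constants, handling the boundary cases $s=1, t=\infty$ (or $q=1, r=\infty$) where the H\"{o}lder equality condition must be restated as concentration on the coordinates/groups achieving the maximum, and dealing with groups where $\by^l=\bzero$. These are standard but must be handled carefully so that a bona fide maximizer exists rather than only a maximizing sequence.
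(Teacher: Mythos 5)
Your proposal follows essentially the same route as the paper's proof: the identical pairing $\bx'\bbeta=\sum_{l}(w_l\bx^l)'\bigl(\tfrac{1}{w_l}\bbeta^l\bigr)$, H\"{o}lder within each group with exponents $(r,q)$, then H\"{o}lder across groups with exponents $(s,t)$, yielding the bound $\|\bbeta_{\bw^{-1}}\|_{q,t}$. The only difference is that you additionally construct an explicit extremizer to show the bound is attained (using $r(q-1)=q$ and the proportionality $\|w_l\bx^l\|_r^s\propto a_l^t$, with the $s=1$ or $q=1$ boundary cases handled by concentration), a tightness step the paper compresses into ``the result then follows''; your version of that step is correct.
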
 

\begin{proof} 
	The dual norm of $\|\cdot\|_{r, s}$ evaluated at some vector $\bbeta$ is the optimal value of Problem (\ref{dualnorm-3}):
	\begin{equation} \label{dualnorm-3}
	\begin{aligned}
	\max\limits_{\bx} & \quad \bx' \bbeta \\
	\text{s.t.} & \quad \|\bx_{\bw}\|_{r, s} \le 1.
	\end{aligned}
	\end{equation}
	We assume that $\bbeta$ has the same group structure with $\bx$, i.e.,
	$\bbeta = (\bbeta^1, \ldots, \bbeta^L)$. Using H\"{o}lder's inequality, we
	can write
	\begin{equation*}
	\begin{aligned}
	\bx'\bbeta & = \sum_{l=1}^L (w_l\bx^l)'\Bigl(\frac{1}{w_l}\bbeta^l\Bigr) \\
	& \le \sum_{l=1}^L \|w_l\bx^l\|_r \left\|\frac{1}{w_l}\bbeta^l\right\|_q. 
	\end{aligned}
	\end{equation*} 
	Define two new vectors in $\mbb{R}^L$
	\[
	\bx_{new} = (\|w_1\bx^1\|_r, \ldots, \|w_L\bx^L\|_r),
	\] 
	\[
	\bbeta_{new} = \left(\left\|\frac{1}{w_1}\bbeta^1\right\|_q, \ldots,
	\left\|\frac{1}{w_L}\bbeta^L\right\|_q\right).
	\] 
	Applying H\"{o}lder's inequality again to $\bx_{new}$ and
	$\bbeta_{new}$, we obtain:
	\begin{equation*}
	\begin{aligned}
	\bx'\bbeta & \le \bx_{new}'\bbeta_{new} \\
	& \le \|\bx_{new}\|_s
	\|\bbeta_{new}\|_t \\
	& = \Bigl(\sum_{l=1}^L
	\bigl(\|w_l\bx^l\|_r\bigr)^s\Bigr)^{1/s} \left(\sum_{l=1}^L
	\left(\left\|\frac{1}{w_l}\bbeta^l\right\|_q\right)^t\right)^{1/t}. 
	\end{aligned}
	\end{equation*} 
	Therefore,
	\begin{equation*}
	\begin{aligned}
	\bx'\bbeta & \le \|\bx_{\bw}\|_{r, s} \|\bbeta_{\bw^{-1}}\|_{q, t} \\
	& \le
	\|\bbeta_{\bw^{-1}}\|_{q, t}, 
	\end{aligned}
	\end{equation*}
	due to the constraint $\|\bx_{\bw}\|_{r, s} \le 1$. The result then
	follows.
\end{proof}

Now, let us go back to (\ref{2infty}), which is the weighted
$(2, \infty)$-norm of $\bz = (\bx^1, \ldots, \bx^L, y)$ with the weight
$\bw = (\frac{1}{\sqrt{p_1}}, \ldots, \frac{1}{\sqrt{p_L}}, M)$. According
to Theorem~\ref{dualnorm}, the dual norm of the weighted $(2,
\infty)$-norm with weight $\bw$ evaluated at some $\tilde{\bbeta} =
(-\bbeta^1, \ldots, -\bbeta^L, 1)$ is:
\begin{equation*}
\|\tilde{\bbeta}_{\bw^{-1}}\|_{2, 1} = \sum_{l=1}^L \sqrt{p_l}\|\bbeta^l\|_2 + \frac{1}{M},
\end{equation*} 
where $\bw^{-1} = (\sqrt{p_1}, \ldots, \sqrt{p_L}, 1/M)$. Therefore, with $N$ i.i.d. samples $(\bx_i, y_i)$,
$i \in \lb N \rb$,
the GWGL formulation for Linear Regression (GWGL-LR) takes the following form: 
\begin{equation} \label{gwgl-lr} 
\inf\limits_{\bbeta} 
\frac{1}{N}\sum\limits_{i=1}^N|y_i - \bx_i' \bbeta| + \epsilon  
\sum_{l=1}^L \sqrt{p_l}\|\bbeta^l\|_2,
\end{equation}
where the constant term $1/M$ has been removed. We see that
by using the weighted $(2, \infty)$-norm in the predictor-response space, we are able
to recover the commonly used penalty term for GLASSO
\citep{bakin1999adaptive, yuan2006model}. The Wasserstein DRO framework offers new interpretations for the GLASSO penalty from the standpoint of the distance metric on the predictor-response space and establishes the connection between group sparsity and distributional robustness.  

\subsection{GWGL for Binary Response Variables}
\label{sec:gwgl_lg}
In this subsection we will explore the GWGL formulation for binary classification problems. Let $\bx \in \mbb{R}^p$ denote the predictor and $y \in \{-1, +1\}$ the associated binary response/label to be predicted. In LG, the conditional distribution of $y$ given $\bx$ is modeled as
\begin{equation*}
\mbb{P}(y|\bx) = \big(1+\exp(-y \bbeta'\bx)\big)^{-1},
\end{equation*}
where $\bbeta \in \mbb{R}^p$ is the unknown coefficient vector (classifier) to be estimated. The {\em Maximum Likelihood Estimator (MLE)} of $\bbeta$ is found by minimizing the {\em negative log-likelihood (logloss)}: 
\begin{equation*}
h_{\bbeta}(\bx, y) = \log(1+\exp(-y \bbeta'\bx)).
\end{equation*}
To apply the Wasserstein DRO framework, we define the distance metric on the predictor-response space as follows.
\begin{equation} \label{metric-lg}
s((\bx_1, y_1), (\bx_2, y_2)) \triangleq \|\bx_1 - \bx_2\| + M |y_1 - y_2|, \ \forall (\bx_1, y_1), (\bx_2, y_2) \in \scrZ,
\end{equation}
where $M$ is an infinitely large positive number (different from Section~\ref{sec:gwgl_lr} where $M$ could be any positive number), and $\scrZ = \mbb{R}^p \times \{-1, +1\}$. We use a very large weight on $y$ to emphasize its role in determining the distance between data points,
i.e., for a pair $(\bx_i, y_i)$ and $(\bx_j, y_j)$, if $y_i  \neq y_j$,
they are considered to be infinitely far away from each other; otherwise
their distance is determined solely by the predictors. The robust LG problem is modeled as:
\begin{equation} \label{dro-lg}
\inf\limits_{\bbeta}\sup\limits_{\mbb{Q}\in \Omega}
\mbb{E}^{\mbb{Q}}\big[ \log(1+\exp(-y \bbeta'\bx))\big], 
\end{equation}
where $\Omega$ is defined in (\ref{omega-gwgl-lr}) with $s$ specified in (\ref{metric-lg}). Based on the discussion in Section~\ref{sec:dual-solver}, in order to derive a tractable reformulation for (\ref{dro-lg}), 
we need to bound the growth rate of $h_{\bbeta}(\bx, y)$:
\begin{equation*}
\frac{\bigl|h_{\bbeta}(\bx_1, y_1) - h_{\bbeta}(\bx_2, y_2)\bigr|}{s((\bx_1, y_1), (\bx_2, y_2))},  \quad \forall (\bx_1, y_1), (\bx_2, y_2).
\end{equation*}
To this end, we define a continuous and differentiable univariate function $l(a) \triangleq \log(1+\exp(-a))$, and apply the mean value theorem to it, which yields that for any $a, b\in \mbb{R}$, $\exists c \in (a,b)$ such that:
\begin{equation*}
\biggl|\frac{l(b) - l(a)}{b-a}\biggr|  = \bigl|\nabla l(c)\bigr| 
=  \frac{e^{-c}}{1+e^{-c}} 
\le 1.
\end{equation*}
By noting that $h_{\bbeta}(\bx, y) = l(y\bbeta'\bx)$, we immediately have:
\begin{equation} \label{loss-lg}
\begin{aligned}
\bigl|h_{\bbeta}(\bx_1, y_1) - h_{\bbeta}(\bx_2, y_2)\bigr| & \le \bigl|y_1\bbeta'\bx_1 - y_2\bbeta'\bx_2\bigr| \\
& \le \|y_1\bx_1 - y_2 \bx_2\| \|\bbeta\|_* \\
& \le  s((\bx_1, y_1), (\bx_2, y_2)) \|\bbeta\|_*,
\end{aligned}
\end{equation}
where the second step uses H\"{o}lder's inequality, and the last step is due to the definition of the metric $s$ and the fact that $M$ is infinitely large. Eq. (\ref{loss-lg}) shows that the loss function $h_{\bbeta}(\bx, y)$ is Lipschitz continuous in $(\bx, y)$ with a Lipschitz constant $\|\bbeta\|_*$. Using Theorem~\ref{Lip-wass} with $t=1$, we obtain that for any $\mbb{Q} \in \Omega$,
\begin{equation*}
\Bigl|\mbb{E}^{\mbb{Q}}\big[ h_{\bbeta}(\bx, y)\big] -  \mbb{E}^{\hat{\mbb{P}}_N}\big[ h_{\bbeta}(\bx, y)\big]\Bigr| 
\leq \|\bbeta\|_* W_{s,1} (\mbb{Q}, \ \hat{\mbb{P}}_N) 
\le \epsilon \|\bbeta\|_*.
\end{equation*}
Therefore, Problem (\ref{dro-lg}) can be reformulated as:
\begin{equation} \label{convex-lg}
\inf\limits_{\bbeta} \mbb{E}^{\hat{\mbb{P}}_N}\big[ h_{\bbeta}(\bx, y)\big] + \epsilon \|\bbeta\|_* = \inf\limits_{\bbeta} \frac{1}{N} \sum_{i=1}^N \log\bigl(1+\exp(-y_i \bbeta'\bx_i)\bigr) + \epsilon \|\bbeta\|_*.
\end{equation}

We note that \cite{abadeh2015distributionally, shafieezadeh2017regularization, gao2017wasserstein} arrive at a similar formulation to (\ref{convex-lg}) by other means of derivation. Different from these existing works, we will consider specifically the application of (\ref{convex-lg}) to grouped predictors where the goal is to induce group level sparsity on the coefficients/classifier. As in Section~\ref{sec:gwgl_lr}, we assume that the predictor vector $\bx$ can be decomposed into $L$ groups, i.e., $\bx = (\bx^1, \ldots, \bx^L)$, each $\bx^l$ containing $p_l$ predictors of group $l$, and $\sum_{l=1}^L p_l = p$. To reflect the group sparse structure, we adopt the $(2, \infty)$-norm of the
weighted predictor vector $$\bx_{\bw} \triangleq \bigg(\frac{1}{\sqrt{p_1}}\bx^1,
\ldots, \frac{1}{\sqrt{p_L}}\bx^L \bigg),$$ to define the metric $s$ in (\ref{metric-lg}), where the weight vector is:
$$\bw=\bigg(\frac{1}{\sqrt{p_1}}, \ldots, \frac{1}{\sqrt{p_L}} \bigg).$$
According to Theorem~\ref{dualnorm}, the dual norm of the weighted $(2, \infty)$-norm with weight $\bw=(1/\sqrt{p_1}, \ldots, 1/\sqrt{p_L})$ evaluated at $\bbeta$ is:
\begin{equation*}
\|\bbeta_{\bw^{-1}}\|_{2, 1} = \sum_{l=1}^L \sqrt{p_l}\|\bbeta^l\|_2,
\end{equation*} 
where $\bw^{-1} = (\sqrt{p_1}, \ldots, \sqrt{p_L})$, and $\bbeta^l$ denotes the vector of coefficients corresponding to group $l$. Therefore, the GWGL formulation for LG (GWGL-LG) takes the form: 
\begin{equation} \label{gwgl-lg} 
\inf\limits_{\bbeta} 
\frac{1}{N} \sum_{i=1}^N \log\bigl(1+\exp(-y_i \bbeta'\bx_i)\bigr) + \epsilon  
\sum_{l=1}^L \sqrt{p_l}\|\bbeta^l\|_2.
\end{equation}
The above derivation techniques also apply to other loss functions whose growth rate is finite, e.g., the hinge loss used by SVM, and therefore, the GWGL SVM model can be developed in a similar fashion. 

\subsection{GLASSO with Overlapping Groups} \label{overlap}
In this subsection we will explore the GLASSO formulation with overlapping groups, and show that the Wasserstein DRO framework recovers a latent GLASSO approach that is proposed by \cite{obozinski2011group} to induce a solution with support being the union of predefined overlapping groups of variables.

When the groups overlap with each other, the penalty term used by (\ref{gwgl-lr}) and (\ref{gwgl-lg}) leads to a solution whose support is almost surely the complement of a union of groups, see \cite{jenatton2011structured}. That is to say, setting one group to zero shrinks its covariates to zero even if they belong to other groups, in which case these other groups will not be entirely selected. \cite{obozinski2011group} proposed a latent GLASSO approach where they introduce a set of latent variables that induce a solution vector whose support is a union of groups, so that the estimator would select entire groups of covariates. Specifically, define the latent variables $\bv^l \in \mbb{R}^p$ such that $\text{supp}(\bv^l) \subset g^l, l \in \lb L \rb$, where $\text{supp}(\bv^l) \subset \lb p \rb$ denotes the support of $\bv^l$, i.e., the set of predictors $i \in \lb p \rb$ such that $v_i^l \neq 0$, and $g^l$ denotes the set of predictors that are in group $l$. Our assumption is that $\exists \ l_1, l_2$ such that $g^{l_1} \cap g^{l_2} \neq \emptyset$. The latent GLASSO formulation is in the following form:

\begin{equation} \label{o1}
\begin{aligned}
& \inf\limits_{\bbeta, \bv^1, \ldots, \bv^L} \quad
\frac{1}{N}\sum\limits_{i=1}^N h_{\bbeta}(\bx_i, y_i)+ \epsilon  
\sum\limits_{l=1}^L d_l \|\bv^l\|_2, \\
& \quad \ \text{s.t.} \ \qquad \bbeta = \sum\limits_{l=1}^L \bv^l,
\end{aligned}
\end{equation}
where $d_l$ is a user-specified penalty strength of group $l$.
Notice that (\ref{gwgl-lr}) and (\ref{gwgl-lg}) are special cases of (\ref{o1}) where they require the latent vectors to have the same value at the intersecting covariates. By using the latent vectors $\bv^l$, Formulation (\ref{o1}) has the flexibility of implicitly adjusting the support of the latent vectors such that for any $i \in \text{supp}(\hat{\bv}^l)$ where $\hat{\bv}^l = \mathbf{0}$, it does not belong to the support of any non-shrunk latent vectors, i.e., $i \notin \text{supp}(\hat{\bv}^k)$ where $\hat{\bv}^k \neq \mathbf{0}$. As a result, the covariates that belong to both shrunk and non-shrunk groups would not be mistakenly driven to zero.
Formulation (\ref{o1}) favors solutions which shrink some $\bv^l$ to zero, while the non-shrunk components satisfy $\text{supp}(\bv^l) = g^l$, therefore leading to estimators whose support is the union of a set of groups.

To show that (\ref{o1}) can be obtained from the Wasserstein DRO framework, we consider the following weighted $(2, \infty)$-norm on the predictor space:
\begin{equation} \label{overlapmetric}
s(\bx) = \max_l d_l^{-1} \|\bx^l\|_2.
\end{equation}
For simplicity we treat the response $y$ as a deterministic quantity so that the Wasserstein metric is defined only on the predictor space. The scenario with stochastic responses can be treated in a similar fashion as in Sections~\ref{sec:gwgl_lr} and \ref{sec:gwgl_lg} by introducing some constant $M$. \cite{obozinski2011group} showed that the dual norm of (\ref{overlapmetric}) is: 
\begin{equation*}
\Omega(\bbeta) \triangleq \sum_{l=1}^L d_l \|\bv^l\|_2,
\end{equation*}
with $\bbeta = \sum_{l=1}^L \bv^l$,
and $\bbeta \ra \Omega(\bbeta)$ is a valid norm.
By noting that (\ref{o1}) can be reformulated as:
\begin{equation} \label{o2}
\inf\limits_{\bbeta} \quad
\frac{1}{N}\sum\limits_{i=1}^N l_{\bbeta}(\bx_i, y_i) + \epsilon  
\Omega(\bbeta),
\end{equation}
with 
\begin{equation*}
\Omega(\bbeta) = \min_{\substack{\bv^1, \ldots, \bv^L,\\ 
		\sum_{l=1}^L \bv^l= \bbeta}} \sum_{l=1}^L d_l \|\bv^l\|_2,
\end{equation*}
we have shown that (\ref{o1}) can be derived as a consequence of the Wasserstein DRO formulation with the Wasserstein metric induced by (\ref{overlapmetric}). In fact, \cite{obozinski2011group} pointed out that (\ref{o2}) is equivalent to a regular GLASSO in a covariate space of higher dimension obtained by duplication of the covariates belonging to several groups. For simplicity our subsequent analysis assumes non-overlapping groups.

\section{Performance Guarantees to the DRO Groupwise Estimator} \label{sec:3-3}
In this section we establish several performance guarantees for the
solutions to GWGL-LR and GWGL-LG. We are interested in two types of performance
metrics: 
\begin{enumerate}[(1)]
	\item {\em Prediction quality}, which measures the predictive power of the GWGL
	solutions on new, unseen samples. 
	\item {\em Grouping effect}, which measures the similarity of the estimated
	coefficients in the same group as a function of the sample correlation between
	their corresponding predictors. Ideally, for highly correlated predictors in the
	same group, it is desired that their coefficients are close so that they can be
	jointly selected/dropped (group sparsity).
\end{enumerate}

We note that GWGL-LR is a special
case of the general Wasserstein DRO formulation (\ref{qcp}), and thus the two types of performance
guarantees derived in Section~\ref{sec:2-3}, one for
generalization ability (Theorem~\ref{t2}), and the other for the
estimation accuracy
(Theorem~\ref{estthm}), still apply to the GWGL-LR formulation. For GWGL-LG, we will derive its prediction performance result using similar techniques.

\subsection{Performance Guarantees for GWGL-LR}
The prediction and estimation
performance of the GWGL-LR model can be described by Theorems~\ref{t2} and \ref{estthm}, where the Wasserstein metric is defined using the  weighted $(2, \infty)$-norm with weight $\bw =
(1/\sqrt{p_1}, \ldots, 1/\sqrt{p_L}, M)$. We thus omit the statement of these two results. With Theorem~\ref{estthm}, we are able to provide bounds for the
{\em Relative Risk (RR)}, {\em Relative
	Test Error (RTE)}, and {\em Proportion of Variance Explained (PVE)} that are introduced in Section \ref{sec:2-4}.
All these metrics evaluate the accuracy of
the regression coefficient estimates on a new test sample drawn from the
same probability distribution as the training samples.

Using Theorem~\ref{estthm}, we can bound the term $(\hat{\bbeta}
- \bbeta^*)'\bSigma (\hat{\bbeta} - \bbeta^*)$ as follows:
\begin{equation} \label{metricbound}
\begin{aligned}
(\hat{\bbeta} - \bbeta^*)'\bSigma (\hat{\bbeta} - \bbeta^*) & \le \lambda_{max}(\bSigma) \|\hat{\bbeta} - \bbeta^*\|_2^2 \\
& \le \lambda_{max}(\bSigma) \biggl(\frac{4R^2\bar{B}}{\lambda_{\text{min}}}
\Psi(\bbeta^*)\biggr)^2,
\end{aligned}
\end{equation} 
where $\lambda_{max}(\bSigma)$ is the maximum eigenvalue of
$\bSigma$. Using (\ref{metricbound}), bounds for RR, RTE, and PVE can be
readily obtained and are summarized in the following corollary.
\begin{col} \label{estimation-1rte} 
	Under the specifications in
	Theorem~\ref{estthm}, when the sample size 
	\[ N\ge
	\bar{C_1}\bar{\mu}^4 \mu_0^2
	\frac{\lambda_{\text{max}}}{\lambda_{\text{min}}}
	(w(\scrA(\bbeta^*))+3)^2,
	\]
	with probability at least 
	$1-\exp(-C_2N/\bar{\mu}^4)$,
	\begin{equation*}
	\text{RR}(\hat{\bbeta}) \le \frac{\lambda_{max}(\bSigma)
		\biggl(\frac{4R^2\bar{B}}{\lambda_{\text{min}}}
		\Psi(\bbeta^*)\biggr)^2}{(\bbeta^*)'
		\mathbf{\Sigma} \bbeta^*}, 
	\end{equation*} 
	\begin{equation*}
	\text{RTE}(\hat{\bbeta}) \le \frac{ \lambda_{max}(\bSigma)
		\biggl(\frac{4R^2\bar{B}}{\lambda_{\text{min}}}
		\Psi(\bbeta^*)\biggr)^2 + \sigma^2}{\sigma^2},
	\end{equation*}
	and,
	\begin{equation*}
	\text{PVE}(\hat{\bbeta}) \ge 1 - \frac{\lambda_{max}(\bSigma)
		\biggl(\frac{4R^2\bar{B}}{\lambda_{\text{min}}}
		\Psi(\bbeta^*)\biggr)^2 + \sigma^2}{(\bbeta^*)'
		\mathbf{\Sigma} \bbeta^* + \sigma^2}, 
	\end{equation*}
	where all parameters are defined in the same way as in
	Theorem~\ref{estthm}.
\end{col}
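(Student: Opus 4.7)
The proof will be a direct corollary of Theorem~\ref{estthm} combined with the elementary eigenvalue inequality already displayed as (\ref{metricbound}). The plan is to establish the bounds on RR, RTE, and PVE by substituting the $\ell_2$ estimation bound from Theorem~\ref{estthm} into their respective definitions.

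First, I would invoke Theorem~\ref{estthm}: under the stated hypotheses on the sample size, with probability at least $1-\exp(-C_2 N/\bar{\mu}^4)$,
\[
\|\hat{\bbeta} - \bbeta^*\|_2 \le \frac{4R^2 \bar{B}}{\lambda_{\min}} \Psi(\bbeta^*).
\]
On this high-probability event, squaring both sides and multiplying through by $\lambda_{\max}(\bSigma)$ yields, via (\ref{metricbound}),
\[
(\hat{\bbeta} - \bbeta^*)'\bSigma(\hat{\bbeta} - \bbeta^*) \le \lambda_{\max}(\bSigma)\left(\frac{4R^2 \bar{B}}{\lambda_{\min}}\Psi(\bbeta^*)\right)^{2}.
\]

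Next, I would plug this upper bound into each of the three performance metric definitions introduced in Section~\ref{sec:2-4}. For the relative risk, dividing by $(\bbeta^*)'\bSigma\bbeta^*$ directly gives the first claim. For the relative test error, adding $\sigma^2$ to the numerator (noting the denominator of RTE is $\sigma^2$) delivers the second claim. For the proportion of variance explained, PVE $= 1 - \frac{(\hat{\bbeta} - \bbeta^*)'\bSigma(\hat{\bbeta} - \bbeta^*) + \sigma^2}{(\bbeta^*)'\bSigma\bbeta^* + \sigma^2}$; since the fraction being subtracted is monotone increasing in the numerator, the upper bound on $(\hat{\bbeta} - \bbeta^*)'\bSigma(\hat{\bbeta} - \bbeta^*)$ yields the claimed lower bound on PVE.

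There is no real obstacle here: all the heavy lifting has been done in Theorem~\ref{estthm} (which in turn rests on Theorem~\ref{mainresult}, Corollary~\ref{alphacol}, and Lemma~\ref{gammalem}), and the eigenvalue inequality (\ref{metricbound}) is standard. The only minor care required is to note that each inequality holds on the same high-probability event supplied by Theorem~\ref{estthm}, so that the three bounds hold simultaneously with probability at least $1-\exp(-C_2 N / \bar{\mu}^4)$, as stated.
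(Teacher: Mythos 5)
Your proposal is correct and follows exactly the paper's own route: the paper obtains the corollary by combining Theorem~\ref{estthm} with the eigenvalue inequality (\ref{metricbound}) and substituting the resulting bound on $(\hat{\bbeta}-\bbeta^*)'\bSigma(\hat{\bbeta}-\bbeta^*)$ into the definitions of RR, RTE, and PVE. Your added remark that all three bounds hold on the same high-probability event is a sensible (and correct) point of care.
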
 

We next proceed to investigate the grouping effect of the
GWGL-LR estimator. The
next theorem provides a bound on the absolute (weighted) difference between
coefficient estimates as a function of the sample correlation between
their corresponding predictors.
\begin{thm} \label{grouping2} Suppose the predictors are
	standardized (columns of $\bX$ have zero mean and unit variance). Let $\hat{\bbeta}
	\in \mbb{R}^p$ be the optimal solution to (\ref{gwgl-lr}).  If $\bx_{,i}$
	is in group $l_1$ and $\bx_{,j}$ is in group $l_2$, and
	$\|\hat{\bbeta}^{l_1}\|_2 \neq 0$, $\|\hat{\bbeta}^{l_2}\|_2 \neq 0$,
	define:
	\begin{equation*}
	D(i, j) =
	\Biggl|\frac{\sqrt{p_{l_1}}\hat{\beta}_i}{\|\hat{\bbeta}^{l_1}\|_2}
	-\frac{\sqrt{p_{l_2}}\hat{\beta}_j}{\|\hat{\bbeta}^{l_2}\|_2}\Biggr|. 
	\end{equation*} 
	Then,
	$$D(i, j) \le \frac{\sqrt{2(1-\rho)}}{\sqrt{N}\epsilon},$$
	where $\rho = \bx_{,i}'\bx_{,j}$ is the sample correlation, and
	$p_{l_1}, p_{l_2}$ are the number of predictors in groups $l_1$ and
	$l_2$, respectively.
\end{thm}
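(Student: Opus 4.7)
The plan is to obtain the desired inequality from a direct comparison of the first-order (subdifferential) optimality conditions of problem (\ref{gwgl-lr}) at coordinates $i$ and $j$. Since the penalty $\sum_l \sqrt{p_l}\|\bbeta^l\|_2$ is differentiable in any coordinate that sits inside a group with $\|\hat{\bbeta}^l\|_2\neq 0$, and the loss $|y_k-\bx_k'\bbeta|$ is a convex function whose subdifferential at $\hat{\bbeta}$ is characterized by a set of subgradients $s_k\in[-1,1]$ (with $s_k=\text{sign}(y_k-\bx_k'\hat{\bbeta})$ when the residual is nonzero), I expect to get clean scalar optimality equations coordinate by coordinate. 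This type of argument is exactly how the classical grouping effects of ridge, elastic net, and group LASSO are proved, so the main conceptual work is to adapt it to the $\ell_1$-loss version used here.

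First, I would write the KKT condition for coordinate $i\in g^{l_1}$ as
\begin{equation*}
-\frac{1}{N}\sum_{k=1}^N s_k\, x_{k,i}+\epsilon\sqrt{p_{l_1}}\,\frac{\hat{\beta}_i}{\|\hat{\bbeta}^{l_1}\|_2}=0,
\end{equation*}
and the analogous one for $j\in g^{l_2}$, with the same multipliers $s_k$ (they come from the single loss term shared by both equations). Subtracting yields
\begin{equation*}
\epsilon\!\left(\frac{\sqrt{p_{l_1}}\hat{\beta}_i}{\|\hat{\bbeta}^{l_1}\|_2}-\frac{\sqrt{p_{l_2}}\hat{\beta}_j}{\|\hat{\bbeta}^{l_2}\|_2}\right)=\frac{1}{N}\sum_{k=1}^N s_k(x_{k,i}-x_{k,j}).
\end{equation*}
This is the key identity; from here the task reduces to bounding the right-hand side.

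Next, I would apply the Cauchy--Schwarz inequality to the right-hand side, obtaining
\begin{equation*}
\left|\frac{1}{N}\sum_{k=1}^N s_k(x_{k,i}-x_{k,j})\right|\le\frac{1}{N}\,\|\bs\|_2\,\|\bx_{,i}-\bx_{,j}\|_2,
\end{equation*}
and use the two standardization facts: (i) $|s_k|\le 1$ for all $k$, so $\|\bs\|_2\le\sqrt{N}$; and (ii) each predictor column satisfies $\bx_{,i}'\bx_{,i}=1$ under the stated normalization, so that the sample correlation is $\rho=\bx_{,i}'\bx_{,j}$ and hence $\|\bx_{,i}-\bx_{,j}\|_2^2=2-2\rho=2(1-\rho)$. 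Substituting back, dividing through by $\epsilon$, and taking absolute values yields exactly $D(i,j)\le\sqrt{2(1-\rho)}/(\sqrt{N}\epsilon)$.

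The only delicate point I foresee is the justification that the subgradients $s_k$ appearing in the optimality condition for coordinate $i$ are \emph{the same} as those appearing for coordinate $j$; this is standard (they come from a single subgradient selection of the loss at $\hat{\bbeta}$ that together with the penalty gradient certifies optimality), but I would state it explicitly to avoid ambiguity near residuals equal to zero. A secondary minor point is to make explicit the convention under which $\bx_{,i}'\bx_{,j}$ equals the sample correlation (i.e.\ columns normalized to unit $\ell_2$ norm after centering), since this is what makes the constant $\sqrt{2(1-\rho)}$ come out exactly.
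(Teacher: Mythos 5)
Your proposal is correct and follows essentially the same route as the paper's proof: write the first-order optimality conditions of (\ref{gwgl-lr}) at coordinates $i$ and $j$, subtract them, and bound the difference via Cauchy--Schwarz together with $\|\bs\|_2\le\sqrt{N}$ and $\|\bx_{,i}-\bx_{,j}\|_2^2=2(1-\rho)$. Your explicit treatment of the loss subgradients $s_k\in[-1,1]$ (shared across both coordinate equations) is a slightly more careful rendering of the paper's use of $\text{sgn}(\by-\bX\hat{\bbeta})$, but the argument is the same.
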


\begin{proof} By the optimality condition associated with formulation (\ref{gwgl-lr}),
	$\hat{\bbeta}$ satisfies:
	\begin{equation} \label{opt3} \bx_{,i}'\text{sgn}(\by-\bX \hat{\bbeta})
	= N \epsilon \sqrt{p_{l_1}}
	\frac{\hat{\beta}_i}{\|\hat{\bbeta}^{l_1}\|_2},
	\end{equation}
	\begin{equation} \label{opt4} \bx_{,j}'\text{sgn}(\by-\bX \hat{\bbeta})
	= N \epsilon \sqrt{p_{l_2}}
	\frac{\hat{\beta}_j}{\|\hat{\bbeta}^{l_2}\|_2},
	\end{equation}
	where the $\text{sgn}(\cdot)$ function is applied to a vector elementwise. Subtracting (\ref{opt4}) from (\ref{opt3}), we obtain:
	\begin{equation*}
	(\bx_{,i}- \bx_{,j})'\text{sgn}(\by-\bX \hat{\bbeta}) = N \epsilon
	\Biggl( \frac{\sqrt{p_{l_1}}\hat{\beta}_i}{\|\hat{\bbeta}^{l_1}\|_2} -
	\frac{\sqrt{p_{l_2}}\hat{\beta}_j}{\|\hat{\bbeta}^{l_2}\|_2}\Biggr). 
	\end{equation*} 
	Using the Cauchy-Schwarz inequality and $\|\bx_{,i}-
	\bx_{,j}\|_2^2=2(1-\rho)$, we obtain
	\begin{equation*}
	\begin{split}
	D(i, j) & = \Biggl|\frac{\sqrt{p_{l_1}}\hat{\beta}_i}{\|\hat{\bbeta}^{l_1}\|_2} -  \frac{\sqrt{p_{l_2}}\hat{\beta}_j}{\|\hat{\bbeta}^{l_2}\|_2}\Biggr| \\
	& \le  \frac{1}{N \epsilon} \|\bx_{,i}- \bx_{,j}\|_2
	\|\text{sgn}(\by-\bX \hat{\bbeta})\|_2 \\ 
	& \le \frac{\sqrt{2(1-\rho)}}{\sqrt{N}\epsilon}.  
	\end{split}
	\end{equation*} 
\end{proof}

When $\bx_{,i}$ and $\bx_{,j}$ are in the same group $l$ and $\|\hat{\bbeta}^{l}\|_2 \neq 0$, Theorem~\ref{grouping2} yields 
\begin{equation} \label{group1}
|\hat{\beta}_i - \hat{\beta}_j| \le
\frac{\sqrt{2(1-\rho)}\|\hat{\bbeta}^{l}\|_2}{\epsilon \sqrt{N p_l}}.
\end{equation}
From (\ref{group1}) we see that as the within group correlation $\rho$
increases, the difference between $\hat{\beta}_i$ and $\hat{\beta}_j$
becomes smaller. In the extreme case where $\bx_{,i}$ and $\bx_{,j}$ are
perfectly correlated, i.e., $\rho = 1$, $\hat{\beta}_i =
\hat{\beta}_j$. This grouping effect enables recovery of sparsity on a
group level when the correlation between predictors in the same group is
high, and implies the use of predictors' correlation as a grouping criterion. One of the popular clustering algorithms, called {\em spectral clustering} \citep{shi2000normalized, meila2001random, ng2002spectral, ding2004tutorial} performs grouping based on the eigenvalues/eigenvectors of the Laplacian matrix of the similarity graph that is constructed using the {\em similarity matrix} of data (predictors). The similarity matrix measures the pairwise similarities between data points, which in our case could be the pairwise correlations between predictors.

\subsection{Performance Guarantees for GWGL-LG}
In this subsection we establish bounds on the prediction error of the GWGL-LG solution, and explore its grouping effect. We will use the {\em Rademacher complexity} of the class of logloss (negative log-likelihood) functions to bound the generalization error. Suppose $(\bx, y)$ is drawn from the probability measure $\mbb{P}^*$. Two assumptions that impose conditions on the magnitude of the regularizer and the uncertainty level of the predictor are needed.  

\begin{ass} \label{a1-lg}  The weighted $(2, \infty)$-norm of $\bx$ is bounded above, i.e., $\|\bx_{\bw}\|_{2, \infty} \le R_{\bx} \ \text{a.s. under $\mbb{P}^*_{\scrX}$,}$, where $\bw = (1/\sqrt{p_1}, \ldots, 1/\sqrt{p_L})$.
\end{ass}

\begin{ass} \label{a2-lg} The weighted $(2, 1)$-norm of $\bbeta$ with weight $\bw^{-1} = (\sqrt{p_1}, \ldots, \sqrt{p_L})$ is bounded
	above, namely,
	$\sup_{\bbeta}\|\bbeta_{\bw^{-1}}\|_{2, 1}=\bar{B}_1$.
	
\end{ass}

Under these two assumptions, the logloss could be bounded via the definition of dual norm.
\begin{lem}
	Under Assumptions~\ref{a1-lg} and \ref{a2-lg}, it follows that under the probability measure $\mbb{P}^*$,
	\begin{equation*}
	\log\big(1+\exp(-y \bbeta'\bx)\big) \le \log\big(1 + \exp(R_{\bx} \bar{B}_1)\big), \quad \text{a.s.}.
	\end{equation*}
\end{lem}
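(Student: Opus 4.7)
The plan is to exploit the monotonicity of the map $a \mapsto \log(1+\exp(a))$ together with a Hölder-type inequality that pairs the weighted $(2,\infty)$-norm of $\bx$ with its dual, the weighted $(2,1)$-norm of $\bbeta$. Because the bounding function is strictly increasing in its argument, it suffices to produce an almost-sure upper bound on $-y\bbeta'\bx$; the lemma then follows immediately by monotonicity.

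First I would observe that since $y \in \{-1,+1\}$, we have $-y\bbeta'\bx \le |\bbeta'\bx|$. Next I would invoke Theorem~\ref{dualnorm}, which identifies the dual of the weighted $(2,\infty)$-norm with weight $\bw = (1/\sqrt{p_1}, \ldots, 1/\sqrt{p_L})$ as the weighted $(2,1)$-norm with weight $\bw^{-1} = (\sqrt{p_1}, \ldots, \sqrt{p_L})$. The corresponding Hölder-type inequality yields
\[
|\bbeta'\bx| \;\le\; \|\bx_{\bw}\|_{2,\infty}\,\|\bbeta_{\bw^{-1}}\|_{2,1}.
\]
Applying Assumption~\ref{a1-lg} to the first factor (which holds $\mbb{P}^*_{\scrX}$-almost surely, and hence $\mbb{P}^*$-almost surely) and Assumption~\ref{a2-lg} to the second factor gives $|\bbeta'\bx| \le R_{\bx}\bar{B}_1$ almost surely, and in particular $-y\bbeta'\bx \le R_{\bx}\bar{B}_1$.

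Finally, I would apply the monotonicity of $a \mapsto \log(1+\exp(a))$ to conclude
\[
\log\bigl(1+\exp(-y\bbeta'\bx)\bigr) \;\le\; \log\bigl(1+\exp(R_{\bx}\bar{B}_1)\bigr),
\]
almost surely under $\mbb{P}^*$, which is precisely the claim.

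There is no substantive obstacle here; the proof is essentially a one-line application of Hölder's inequality in the generalized group-norm setting already established in Theorem~\ref{dualnorm}, followed by monotonicity. The only minor bookkeeping is to make sure the weights on $\bx$ (normalizing factor $1/\sqrt{p_l}$) and on $\bbeta$ (factor $\sqrt{p_l}$) are paired correctly so that the $p_l$ factors cancel inside the inner product, which is exactly the content of the duality statement in Theorem~\ref{dualnorm}.
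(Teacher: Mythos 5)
Your proof is correct and follows exactly the route the paper intends: the lemma is obtained via the duality pairing of the weighted $(2,\infty)$-norm on $\bx$ with the weighted $(2,1)$-norm on $\bbeta$ (Theorem~\ref{dualnorm}), combined with Assumptions~\ref{a1-lg} and~\ref{a2-lg} and the monotonicity of $a \mapsto \log(1+\exp(a))$. No further comment is needed.
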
 
Now consider the following class of loss functions: 
\begin{multline*}
\scrH=\Big\{(\bx, y) \ra h_{\bbeta}(\bx, y): h_{\bbeta}(\bx, y)= \log\big(1+\exp(-y \bbeta'\bx)\big),{}\\
\forall \bbeta \text{ s.t. } \|\bbeta_{\bw^{-1}}\|_{2, 1}\le \bar{B}_1 \Big\}.
\end{multline*}
It follows from Lemma~\ref{radcom} that the empirical {\em Rademacher complexity} of $\scrH$, denoted by $\scrR_N(\scrH)$, can be upper bounded by:
\begin{equation*}
\scrR_N(\scrH)\le \frac{2 \log\big(1 + \exp(R_{\bx} \bar{B}_1)\big)}{\sqrt{N}}.
\end{equation*}
Then, applying Theorem~\ref{Peter} (Theorem 8 in \cite{Peter02}), we have the following result on the prediction error of the GWGL-LG estimator.

\begin{thm} \label{prediction-lg} Let $\hat{\bbeta}$ be an optimal solution to (\ref{gwgl-lg}), obtained using
	$N$ training samples $(\bx_i, y_i)$, $i \in \lb N \rb$. Suppose we draw a new i.i.d.\
	sample $(\bx,y)$. Under Assumptions~\ref{a1-lg} and \ref{a2-lg}, for any
	$0<\delta<1$, with probability at least $1-\delta$ with respect to the
	sampling,
	\begin{multline*} 
	\mathbb{E}^{\mbb{P}^*}\big[\log\big(1+\exp(-y \bx'\hat{\bbeta})\big)\big]  \le
	\frac{1}{N}\sum_{i=1}^N
	\log\big(1+\exp(-y_i \bx_i'\hat{\bbeta})\big)+ {}\\
	\frac{2 \log\big(1 + \exp(R_{\bx} \bar{B}_1)\big)}{\sqrt{N}} 
	+ 
	\log\big(1 + \exp(R_{\bx} \bar{B}_1)\big)\sqrt{\frac{8\log(2/\delta)}{N}}\ ,
	\end{multline*}
	and for any $\zeta>\frac{2 \log(1 + \exp(R_{\bx} \bar{B}_1))}{\sqrt{N}} 
	+ 
	\log\big(1 + \exp(R_{\bx} \bar{B}_1)\big)\sqrt{\frac{8\log(2/\delta)}{N}}$,
	\begin{equation*} 
	\begin{aligned}
	& \quad \ \mathbb{P}\Bigl( \log\big(1+\exp( -y \bx'\hat{\bbeta})\big)  \ge
	\frac{1}{N}\sum_{i=1}^N \log\big(1+\exp(-y_i \bx_i'\hat{\bbeta})\big)+\zeta \Bigr) \\
	& \le  \frac{\frac{1}{N}\sum_{i=1}^N
		\log\big(1+\exp(-y_i \bx_i'\hat{\bbeta})\big)+\frac{2 \log(1 + \exp(R_{\bx} \bar{B}_1))}{\sqrt{N}} 
	}{\frac{1}{N}\sum_{i=1}^N
	\log\big(1+\exp(-y_i \bx_i'\hat{\bbeta})\big)+\zeta}
+ \\
& \qquad \qquad \qquad \qquad \qquad \qquad \qquad \frac{\log\big(1 + \exp(R_{\bx} \bar{B}_1)\big)\sqrt{\frac{8\log(2/\delta)}{N}}}{\frac{1}{N}\sum_{i=1}^N
	\log\big(1+\exp(-y_i \bx_i'\hat{\bbeta})\big)+\zeta}. 
\end{aligned}
\end{equation*} 
\end{thm}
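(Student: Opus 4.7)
The plan is to mirror the proof architecture of Theorem \ref{t2} in Section \ref{out}, adapting it to the log-loss setting. The key ingredients are already in hand: Assumptions \ref{a1-lg} and \ref{a2-lg} together with H\"{o}lder's inequality for the weighted $(2,\infty)$/$(2,1)$ dual pair yield the almost-sure bound $|y\bbeta'\bx| \le \|\bbeta_{\bw^{-1}}\|_{2,1}\|\bx_{\bw}\|_{2,\infty} \le \bar{B}_1 R_{\bx}$ under $\mbb{P}^*$, which in turn gives $0 \le h_{\bbeta}(\bx,y) \le \log(1+\exp(R_{\bx}\bar{B}_1))$ because $\log(1+e^{-a})$ is decreasing in $a$. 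Writing $L_0 \triangleq \log(1+\exp(R_{\bx}\bar{B}_1))$, the rescaled loss $h_{\bbeta}/L_0$ takes values in $[0,1]$, which is precisely the range needed to invoke Theorem \ref{Peter}.

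Next, I would verify the Rademacher complexity estimate stated just before the theorem. Following the proof of Lemma \ref{radcom}, replacing the uniform bound $\bar{B}R$ by $L_0$ in the chain of inequalities (the key step being Jensen's inequality $\mbb{E}|\sum_i \sigma_i| \le \sqrt{\sum_i \sigma_i^2} = \sqrt{N}$) yields $\scrR_N(\scrH) \le 2L_0/\sqrt{N}$. With this bound in hand, I would apply Theorem \ref{Peter} to the rescaled class $\scrH/L_0$ using $L(y,f(\bx)) = \phi(y,f(\bx)) = h_{\bbeta}(\bx,y)/L_0$: with probability at least $1-\delta$,
\begin{equation*}
\mbb{E}^{\mbb{P}^*}[h_{\hat{\bbeta}}/L_0] \le \frac{1}{N}\sum_{i=1}^N h_{\hat{\bbeta}}(\bx_i,y_i)/L_0 + \scrR_N(\scrH)/L_0 + \sqrt{8\log(2/\delta)/N}.
\end{equation*}
Multiplying through by $L_0$ recovers the first claim.

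For the probability inequality, I would apply Markov's inequality exactly as in the proof of Theorem \ref{t2}: since $h_{\bbeta}(\bx,y) \ge 0$ almost surely,
\begin{equation*}
\mbb{P}\Bigl(h_{\hat{\bbeta}}(\bx,y) \ge \tfrac{1}{N}\sum_i h_{\hat{\bbeta}}(\bx_i,y_i) + \zeta\Bigr) \le \frac{\mbb{E}^{\mbb{P}^*}[h_{\hat{\bbeta}}(\bx,y)]}{\frac{1}{N}\sum_i h_{\hat{\bbeta}}(\bx_i,y_i) + \zeta},
\end{equation*}
after which the expectation in the numerator is replaced by the bound from part one.

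The main technical obstacle is verifying the Rademacher complexity bound cleanly. The naive bound via the uniform loss ceiling $L_0$ yields the stated $2L_0/\sqrt{N}$ rate, which suffices for the theorem but is loose because it discards the $1$-Lipschitz structure of $a \mapsto \log(1+e^{-a})$. A sharper argument via Talagrand's contraction lemma would give a complexity proportional to $\mbb{E}[\sup_{\bbeta} |\sum_i \sigma_i y_i \bbeta'\bx_i|/N]$, bounded by $2\bar{B}_1 R_{\bx}/\sqrt{N}$; either route yields the stated form, so I would use the simpler uniform-ceiling argument for consistency with Lemma \ref{radcom}.
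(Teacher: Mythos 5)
Your proposal is correct and follows essentially the same route as the paper: bound the log-loss uniformly by $\log\big(1+\exp(R_{\bx}\bar{B}_1)\big)$ via the weighted $(2,\infty)$/$(2,1)$ dual-norm (H\"{o}lder) argument, bound the empirical Rademacher complexity as in Lemma~\ref{radcom}, invoke Theorem~\ref{Peter} for the expectation bound, and finish with Markov's inequality for the tail bound. Your explicit rescaling by $L_0$ to meet the $[0,1]$ range hypothesis of Theorem~\ref{Peter} is a careful touch that the paper leaves implicit, but it is the same argument.
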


The next result, similar to Theorem~\ref{grouping2}, establishes the grouping effect of the GWGL-LG estimator.

\begin{thm} \label{grouping-lg} Suppose the predictors are
	standardized (columns of $\bX$ have zero mean and unit variance). Let $\hat{\bbeta}
	\in \mbb{R}^p$ be the optimal solution to (\ref{gwgl-lg}).  If $\bx_{,i}$
	is in group $l_1$ and $\bx_{,j}$ is in group $l_2$, and
	$\|\hat{\bbeta}^{l_1}\|_2 \neq 0$, $\|\hat{\bbeta}^{l_2}\|_2 \neq 0$,
	define:
	\begin{equation*}
	D(i, j) =
	\Biggl|\frac{\sqrt{p_{l_1}}\hat{\beta}_i}{\|\hat{\bbeta}^{l_1}\|_2}
	-\frac{\sqrt{p_{l_2}}\hat{\beta}_j}{\|\hat{\bbeta}^{l_2}\|_2}\Biggr|. 
	\end{equation*} 
	Then,
	$$D(i, j) \le \frac{\sqrt{2(1-\rho)}}{\sqrt{N}\epsilon},$$
	where $\rho = \bx_{,i}'\bx_{,j}$ is the sample correlation between predictors $i$ and $j$, and
	$p_{l_1}, p_{l_2}$ are the number of predictors in groups $l_1$ and
	$l_2$, respectively.
\end{thm}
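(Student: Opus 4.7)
The plan is to mirror the proof of Theorem~\ref{grouping2} for the linear regression case, replacing the subgradient of the absolute loss (the sign function, bounded entrywise by $1$) by the gradient of the logistic loss, which is likewise bounded entrywise by $1$. First I would write down the first-order optimality conditions of formulation (\ref{gwgl-lg}) with respect to the two components $\beta_i$ and $\beta_j$. Since $\|\hat{\bbeta}^{l_1}\|_2 \neq 0$ and $\|\hat{\bbeta}^{l_2}\|_2 \neq 0$, the group norm penalty is differentiable at $\hat{\bbeta}$ along these coordinates, with partial derivative equal to $\epsilon\sqrt{p_{l_1}}\hat{\beta}_i/\|\hat{\bbeta}^{l_1}\|_2$ and $\epsilon\sqrt{p_{l_2}}\hat{\beta}_j/\|\hat{\bbeta}^{l_2}\|_2$ respectively.

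Next I would introduce the scalar weights
\[
w_k \triangleq \frac{-y_k \exp(-y_k\hat{\bbeta}'\bx_k)}{1+\exp(-y_k\hat{\bbeta}'\bx_k)}, \qquad k\in \lb N\rb,
\]
which arise from differentiating the logloss. The crucial observation is that $|w_k|\leq 1$ for every $k$, since $w_k=-y_k(1-\sigma(y_k\hat{\bbeta}'\bx_k))$ with $\sigma$ the sigmoid. Writing the two optimality conditions in terms of $\bw=(w_1,\ldots,w_N)$ gives
\[
\frac{1}{N}\bx_{,i}'\bw = -\epsilon\sqrt{p_{l_1}}\frac{\hat{\beta}_i}{\|\hat{\bbeta}^{l_1}\|_2},\qquad \frac{1}{N}\bx_{,j}'\bw = -\epsilon\sqrt{p_{l_2}}\frac{\hat{\beta}_j}{\|\hat{\bbeta}^{l_2}\|_2}.
\]

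I would then subtract these two equations, take absolute values, and apply the Cauchy--Schwarz inequality to obtain
\[
N\epsilon\, D(i,j) = \bigl|(\bx_{,i}-\bx_{,j})'\bw\bigr| \leq \|\bx_{,i}-\bx_{,j}\|_2\,\|\bw\|_2 \leq \sqrt{2(1-\rho)}\cdot\sqrt{N},
\]
where the last inequality uses $\|\bx_{,i}-\bx_{,j}\|_2^2 = \|\bx_{,i}\|_2^2+\|\bx_{,j}\|_2^2-2\bx_{,i}'\bx_{,j} = 2(1-\rho)$ from standardization, and $\|\bw\|_2\leq\sqrt{N}$ from the entrywise bound $|w_k|\leq 1$. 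Dividing through by $N\epsilon$ yields the claimed bound $D(i,j)\leq \sqrt{2(1-\rho)}/(\sqrt{N}\epsilon)$.

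There is no real obstacle here beyond noting the uniform bound on the logistic gradient; the only substantive difference from the LAD case is that $\text{sgn}(\by-\bX\hat{\bbeta})$ is replaced by the vector $\bw$ of logistic derivatives, and both enjoy the same $\ell_\infty$ bound of $1$, which is what makes the $\sqrt{N}$ bound on $\|\bw\|_2$ (and hence the identical grouping-effect rate) go through unchanged.
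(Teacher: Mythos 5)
Your proposal is correct and follows essentially the same route as the paper's proof: write the first-order optimality conditions for $\beta_i$ and $\beta_j$, subtract them, and apply Cauchy--Schwarz together with $\|\bx_{,i}-\bx_{,j}\|_2^2=2(1-\rho)$ and the entrywise bound of $1$ on the logistic derivative weights (the paper merely keeps the factor $y_k$ with the data vector rather than absorbing it into the weights, which is immaterial since $|y_k|=1$).
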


\begin{proof} By the optimality condition associated with formulation (\ref{gwgl-lg}),
	$\hat{\bbeta}$ satisfies:
	\begin{equation} \label{opt3-lg} 
	\sum_{k=1}^N \frac{\exp(-y_k \bx_k' \hat{\bbeta})}{1+ \exp(-y_k \bx_k' \hat{\bbeta})} y_k x_{k,i}
	= N \epsilon \sqrt{p_{l_1}}
	\frac{\hat{\beta}_i}{\|\hat{\bbeta}^{l_1}\|_2},
	\end{equation}
	\begin{equation} \label{opt4-lg} 
	\sum_{k=1}^N \frac{\exp(-y_k \bx_k' \hat{\bbeta})}{1+ \exp(-y_k \bx_k' \hat{\bbeta})} y_k x_{k,j}
	= N \epsilon \sqrt{p_{l_2}}
	\frac{\hat{\beta}_j}{\|\hat{\bbeta}^{l_2}\|_2},
	\end{equation}
	where $x_{k,i}$ and $x_{k, j}$ denote the $i$-th and $j$-th elements of $\bx_k$, respectively.
	Subtracting (\ref{opt4-lg}) from (\ref{opt3-lg}), we obtain:
	\begin{equation} \label{derdiff}
	\sum_{k=1}^N \frac{\exp(-y_k \bx_k' \hat{\bbeta})}{1+ \exp(-y_k \bx_k' \hat{\bbeta})} \big(y_k x_{k,i} - y_k x_{k,j}\big)
	= N \epsilon
	\Biggl( \frac{\sqrt{p_{l_1}}\hat{\beta}_i}{\|\hat{\bbeta}^{l_1}\|_2} -
	\frac{\sqrt{p_{l_2}}\hat{\beta}_j}{\|\hat{\bbeta}^{l_2}\|_2}\Biggr). 
	\end{equation} 
	Note that the LHS of (\ref{derdiff}) can be written as $\bv_1'\bv_2$, where 
	\begin{equation*}
	\bv_1 = \bigg( \frac{\exp(-y_1 \bx_1' \hat{\bbeta})}{1+ \exp(-y_1 \bx_1' \hat{\bbeta})}, \ldots, \frac{\exp(-y_N \bx_N' \hat{\bbeta})}{1+ \exp(-y_N \bx_N' \hat{\bbeta})}\bigg),
	\end{equation*}
	and,
	\begin{equation*}
	\bv_2 = \big( y_1 ( x_{1,i} - x_{1,j}), \ldots, y_N (x_{N,i} - x_{N,j})\big).
	\end{equation*}
	Using the Cauchy-Schwarz inequality and $\|\bx_{,i}-
	\bx_{,j}\|_2^2=2(1-\rho)$, we obtain
	\begin{equation*}
	\begin{split}
	D(i, j) & = \Biggl|\frac{\sqrt{p_{l_1}}\hat{\beta}_i}{\|\hat{\bbeta}^{l_1}\|_2} -  \frac{\sqrt{p_{l_2}}\hat{\beta}_j}{\|\hat{\bbeta}^{l_2}\|_2}\Biggr| \\
	& \le \frac{1}{N \epsilon} \|\bv_1\|_2 \|\bv_2\|_2 \\
	& \le  \frac{1}{N \epsilon} \sqrt{N} \|\bx_{,i}- \bx_{,j}\|_2
	\\ 
	& = \frac{\sqrt{2(1-\rho)}}{\sqrt{N}\epsilon}.  
	\end{split}
	\end{equation*} 
\end{proof}

We see that Theorem~\ref{grouping-lg} yields the same bound with Theorem~\ref{grouping2}, and for predictors in the same group, their coefficients converge to the same value as $O(\sqrt{1-\rho})$. This encourages group level sparsity if predictor correlation is used as a grouping criterion. 

\section{Numerical Experiments} \label{sec:3-4}
In this section we compare the GWGL formulations with
other commonly used predictive models. In the linear regression setting, we compare GWGL-LR with models that either $(i)$ use a different loss function, e.g., the traditional GLASSO with an
$\ell_2$-loss \citep{yuan2006model}, and the Group Square-Root LASSO (GSRL) \citep{bunea2014group} that minimizes the square root of the $\ell_2$-loss; or $(ii)$ do not make use of the grouping structure of the predictors, e.g.,
the Elastic Net (EN)
\citep{zou2005regularization}, and the LASSO
\citep{tibshirani1996regression}. 
For classification problems, we consider alternatives that minimize the empirical logloss plus penalty terms that do not utilize the grouping structure of the predictors, e.g., the $\ell_1$-regularizer (LG-LASSO), $\ell_2$-regularizer (LG-Ridge), and their combination (LG-EN). 
The results on several synthetic datasets and a real large dataset of surgery-related medical records are shown in the subsequent sections. 

\subsection{GWGL-LR on Synthetic Datasets} \label{gwgl-lr-exp}
In this subsection, we will compare GWGL-LR with the aforementioned models on several synthetic datasets. 
The data generation process is described as follows:
\begin{enumerate}
	\item Generate $\bbeta^*$ based on the following rule:
	\begin{align*}
	(\bbeta^*)^l = 
	\begin{cases}
	0.5 \cdot \mathbf{e}_{p_l}, & \text{if $l$ is even;} \\
	\mathbf{0}, & \text{otherwise},
	\end{cases}
	\end{align*}	
	where $\mathbf{e}_{p_l}$ is the $p_l$-dimensional vector with all ones.
	\item Generate the predictor $\bx \in \mbb{R}^{p}$ from the Gaussian distribution $\scrN_p (0, \bSigma)$, where
	$\bSigma=(\sigma_{i,j})_{i,j=1}^p$ has diagonal elements equal to
	$1$, and off-diagonal elements specified as:
	\begin{equation*}
	\sigma_{i,j}=
	\begin{cases}
	\rho_w,   & \text{if predictors $i$ and $j$ are in the same group}; \\
	0,      & \text{otherwise}.
	\end{cases}
	\end{equation*}
	Here $\rho_w$ is the correlation between predictors in the same group,
	which we call {\em within group correlation}. The correlation between
	different groups is set to zero.
	\item Generate the response $y$ as follows:
	\begin{align*}
	y \sim 
	\begin{cases}
	\scrN(\bx'\bbeta^*, \sigma^2), & \text{if $r \le 1-q$;} \\
	\scrN(\bx'\bbeta^*, \sigma^2) + 5\sigma, & \text{otherwise},
	\end{cases}
	\end{align*}	
	where $\sigma^2$ is the intrinsic variance of $y$, $r$ is a uniform random variable on $[0, 1]$, and $q$ is the probability (proportion) of abnormal samples (outliers). 
\end{enumerate}

We generate 10 datasets consisting of $N = 100, M_t = 60$ observations and
4 groups of predictors, where $N$ is the size of the training set and
$M_t$ is the size of the test set. The number of predictors in each group
is: $p_1 = 1, p_2 = 3, p_3 = 5, p_4 = 7$, and $p=\sum_{i=1}^4
p_i=16$. We are interested in studying the impact of $(i)$ {\em Signal to Noise Ratio (SNR)}, and $(ii)$ the correlation among predictors in the same group ({\em within group correlation}): $\rho_w$. 	
The performance metrics we use are:
\begin{itemize}
	\item {\em Median Absolute Deviation (MAD)} on the test dataset, which
	is defined to be the median value of $|y_i - \bx_i'\hat{\bbeta}|, \
	i \in \lb M \rb$, with $\hat{\bbeta}$ being the estimate of $\bbeta^*$
	obtained from the training set, and $(\bx_i, y_i), \ i \in \lb M \rb,$
	being the observations from the test dataset;
	\item {\em Relative Risk (RR)} of $\hat{\bbeta}$;
	\item {\em Relative Test Error (RTE)} of $\hat{\bbeta}$;
	\item {\em Proportion of Variance Explained (PVE)} of $\hat{\bbeta}$.
\end{itemize}  

All the regularization parameters are tuned using a separate validation
dataset. As to the
range of values for the tuned parameters, we adopt the idea from Section~\ref{sec:2-4} and adjust properly for the GLASSO estimators. Specifically,
\begin{itemize}
	\item For GWGL and GSRL, the range of values for $\epsilon$ or
	$\lambda$ is: 
	\begin{equation*}
	\sqrt{\exp\biggl(\text{lin}\Bigl(\log(0.005\cdot
		\|\bX'\by\|_{\infty}),\log(\|\bX'\by\|_{\infty}),50\Bigr)\biggr)
		\biggl/\max_{l \in \lb L \rb} p_l},
	\end{equation*}
	where $\text{lin}(a, b, n)$ is a
	function that takes in scalars $a$, $b$ and $n$ (integer) and outputs
	a set of $n$ values equally spaced between $a$ and $b$; the $\exp$
	function is applied elementwise to a vector. Compared to LASSO \citep{hastie2017extended}, the
	values are scaled by $\max_{l \in \lb L \rb} p_l$, and the square root
	operation is due to the $\ell_1$-loss function, or the square root of
	the $\ell_2$-loss used in these formulations.
	\item For the GLASSO with $\ell_2$-loss, the range of values for
	$\lambda$ is:
	$$\exp\biggl(\text{lin}\Bigl(\log(0.005\cdot
	\|\bX'\by\|_{\infty}),\log(\|\bX'\by\|_{\infty}),50\Bigr)\biggr)
	\biggl/\sqrt{\max_{l \in \lb L \rb}p_l}.$$ 
\end{itemize}
We note that before solving for the regression coefficients using various GLASSO formulations, the grouping of predictors needs to be determined. Unlike most of the existing works where the grouping structure is assumed to be known or can be obtained from expert knowledge \citep{yuan2006model, ma2007supervised, bunea2014group}, we propose to use a data-driven clustering algorithm to group the predictors based on their sample correlations, as suggested by Theorem~\ref{grouping2}. Specifically, we consider the {\em spectral clustering} \citep{shi2000normalized, meila2001random, ng2002spectral} algorithm with the following Gaussian similarity function
\begin{equation} \label{gs}
\text{Gs}(\bx_{,i}, \bx_{,j}) \triangleq \exp\big( -\|\bx_{,i} - \bx_{,j}\|_2^2/(2\sigma_s^2)\big),
\end{equation}
where $\sigma_s$ is some scale parameter whose selection will be explained
later. Notice that for standardized predictors, (\ref{gs}) captures the sample
pairwise correlations between predictors, since $\|\bx_{,i}-
\bx_{,j}\|_2^2=2\big(1-\text{cor}(\bx_{,i}, \bx_{,j})\big)$, where
$\text{cor}(\bx_{,i}, \bx_{,j}) \triangleq \bx_{,i}'\bx_{,j}$. Using (\ref{gs}), we
can transform the set of predictors into a {\em similarity graph}, whose Laplacian
matrix will be used for spectral clustering. In our implementation, the $k$-nearest
neighbor similarity graph is constructed, where we connect $\bx_{,i}$ and $\bx_{,j}$
with an undirected edge if $\bx_{,i}$ is among the $k$-nearest neighbors of
$\bx_{,j}$ (in the sense of Euclidean distance) {\em or} if $\bx_{,j}$ is among the
$k$-nearest neighbors of $\bx_{,i}$. The parameter $k$ is chosen such that the
resulting graph is connected. The scale parameter $\sigma_s$ in (\ref{gs}) is set to
the mean distance of a point to its $k$-th nearest neighbor \citep{von2007tutorial}.
We assume that the number of clusters is known in order to perform spectral
clustering, but in case it is unknown, the eigengap heuristic \citep{von2007tutorial}
can be used, where the goal is to choose the number of clusters $c$ such that all
eigenvalues $\lambda_1, \ldots, \lambda_c$ of the graph Laplacian are very small, but
$\lambda_{c+1}$ is relatively large. The implementation of spectral clustering uses
the Matlab
package~\footnote{https://www.mathworks.com/matlabcentral/fileexchange/34412-fast-and-efficient-spectral-clustering.}
developed according to the tutorial \cite{von2007tutorial}.

We next present the experimental results. For a percentage of outliers $q = 20\%, 30\%$, we plot two sets of
graphs:
\begin{itemize}
	\item The performance metrics,
	i.e., out-of-sample MAD, RR, RTE, and PVE, v.s. SNR, where the SNR
	values are equally spaced between 0.5 and 2 on a log scale. Note that when SNR is varied, the within group correlation between predictors is set to $0.8$ times a random noise uniformly distributed on the interval $[0.2, 0.4]$. 
	
	\item The performance metrics
	v.s. within group correlation $\rho_w$, where $\rho_w$ takes values in
	$(0.1, 0.2, \ldots, 0.9)$. When $\rho_w$ is varied, SNR is fixed to $1$.
\end{itemize}
Results for varying the SNR are shown in Figures~\ref{snr-20} and \ref{snr-30}. 
Results for varying the within group correlation are shown
in Figures~\ref{cor-20} and \ref{cor-30}. 

\begin{figure}[hbt] 
	\begin{subfigure}{.49\textwidth}
		\centering
		\includegraphics[width=0.9\textwidth]{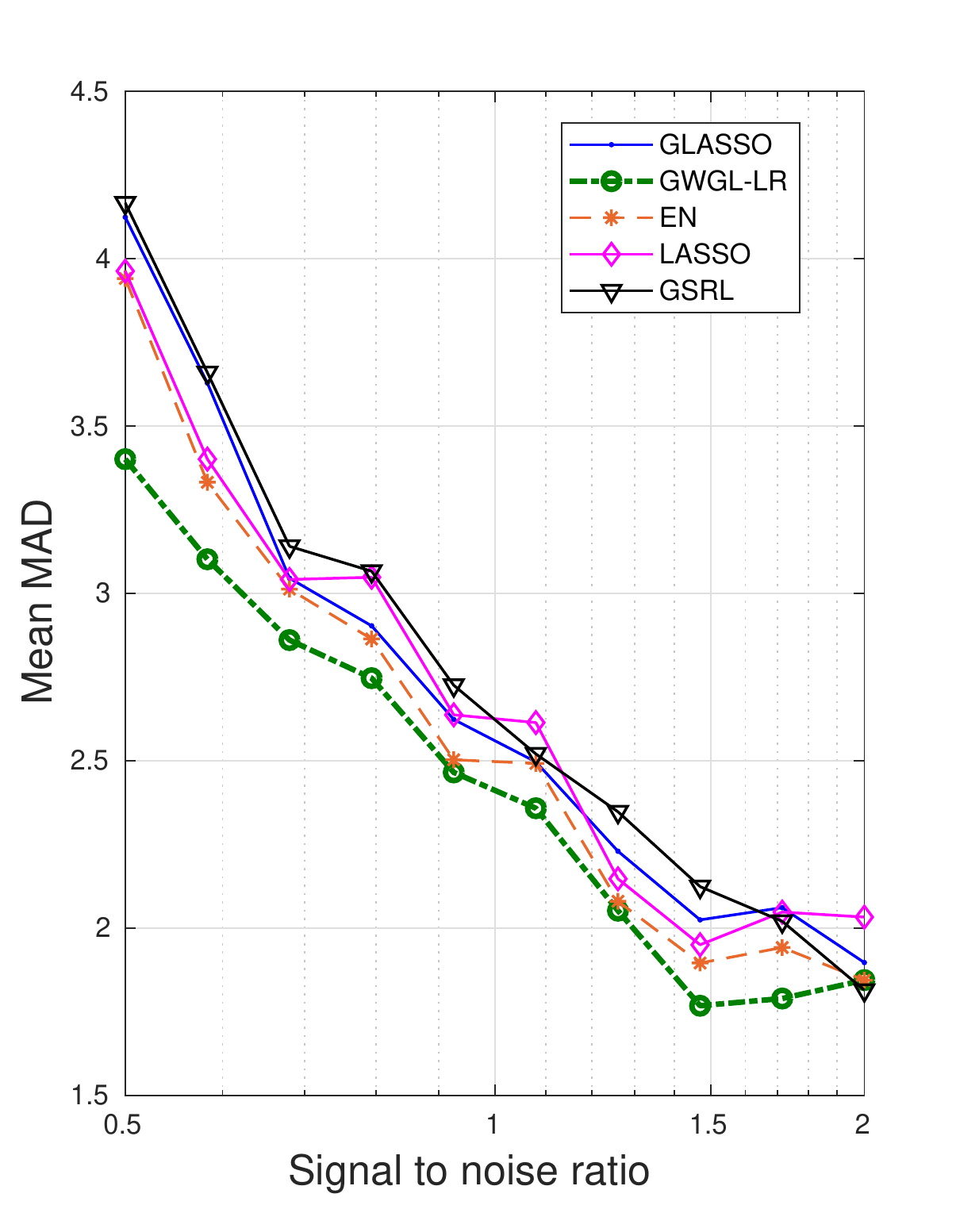}
		\caption{\small{Median Absolute Deviation.}}
	\end{subfigure}
	\begin{subfigure}{0.49\textwidth}
		\centering
		\includegraphics[width=0.9\textwidth]{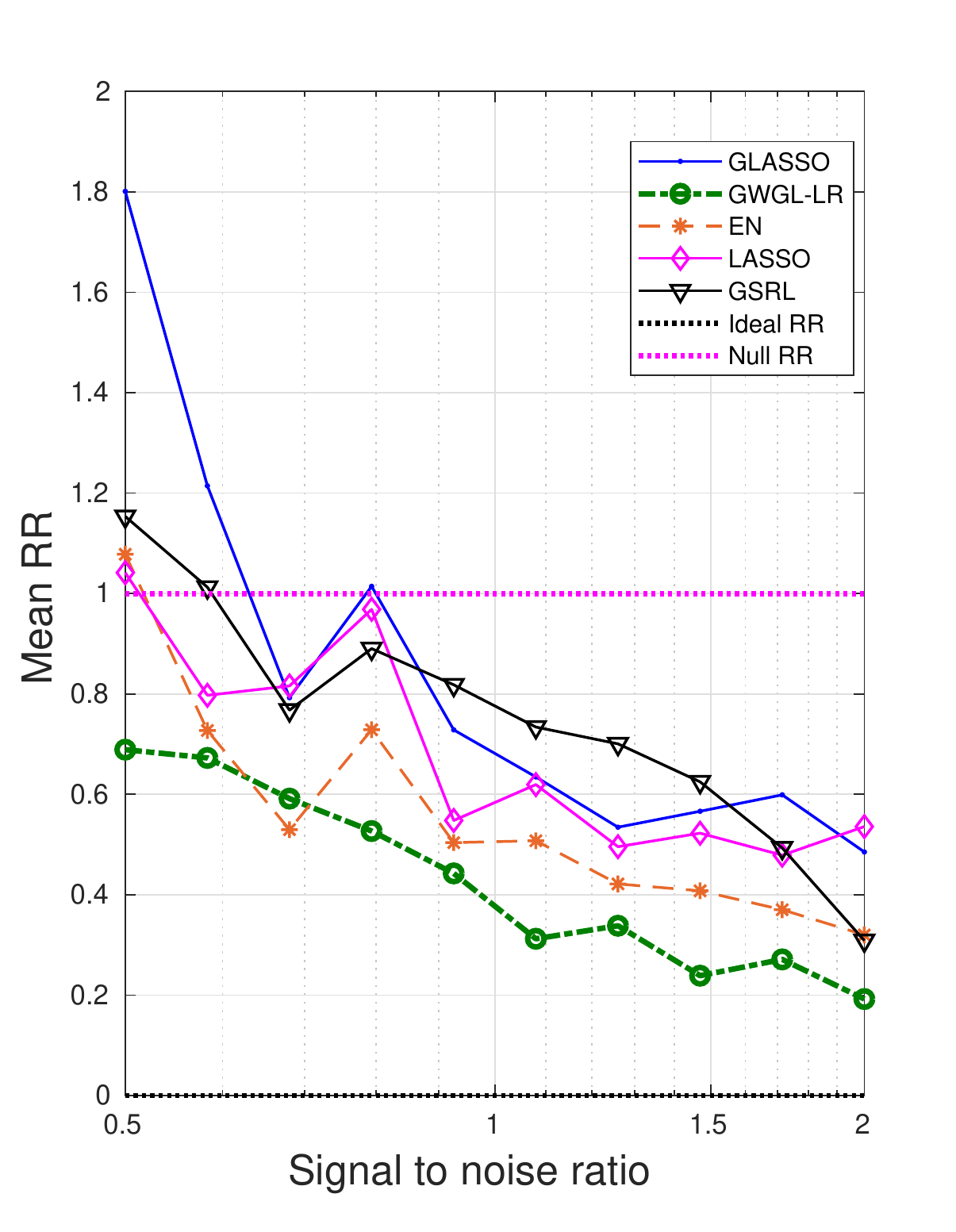}
		\caption{\small{Relative risk.}}
	\end{subfigure}
	
	\begin{subfigure}{0.49\textwidth}
		\centering
		\includegraphics[width=0.9\textwidth]{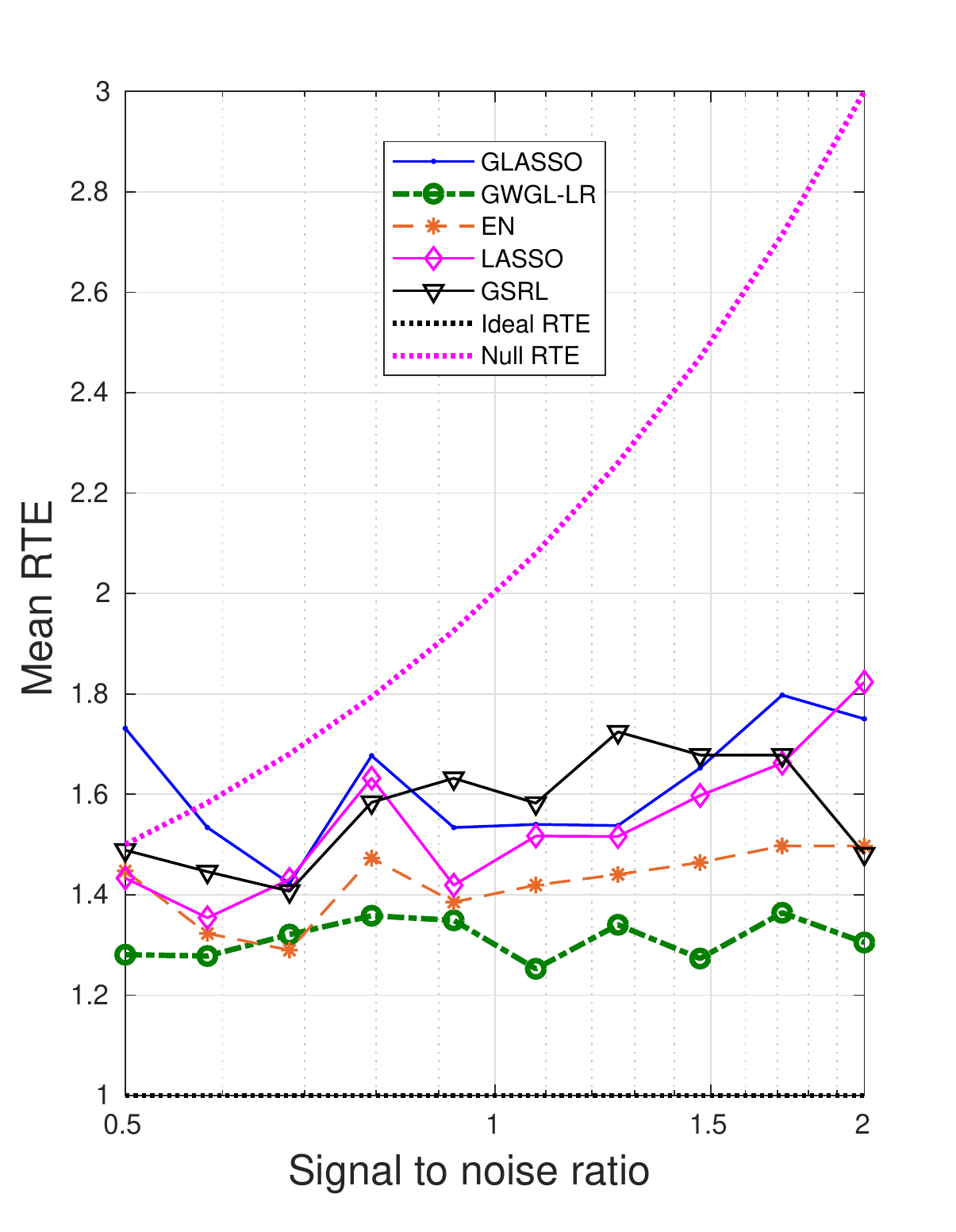}
		\caption{\small{Relative test error.}}
	\end{subfigure}%
	\begin{subfigure}{0.49\textwidth}
		\centering
		\includegraphics[width=0.9\textwidth]{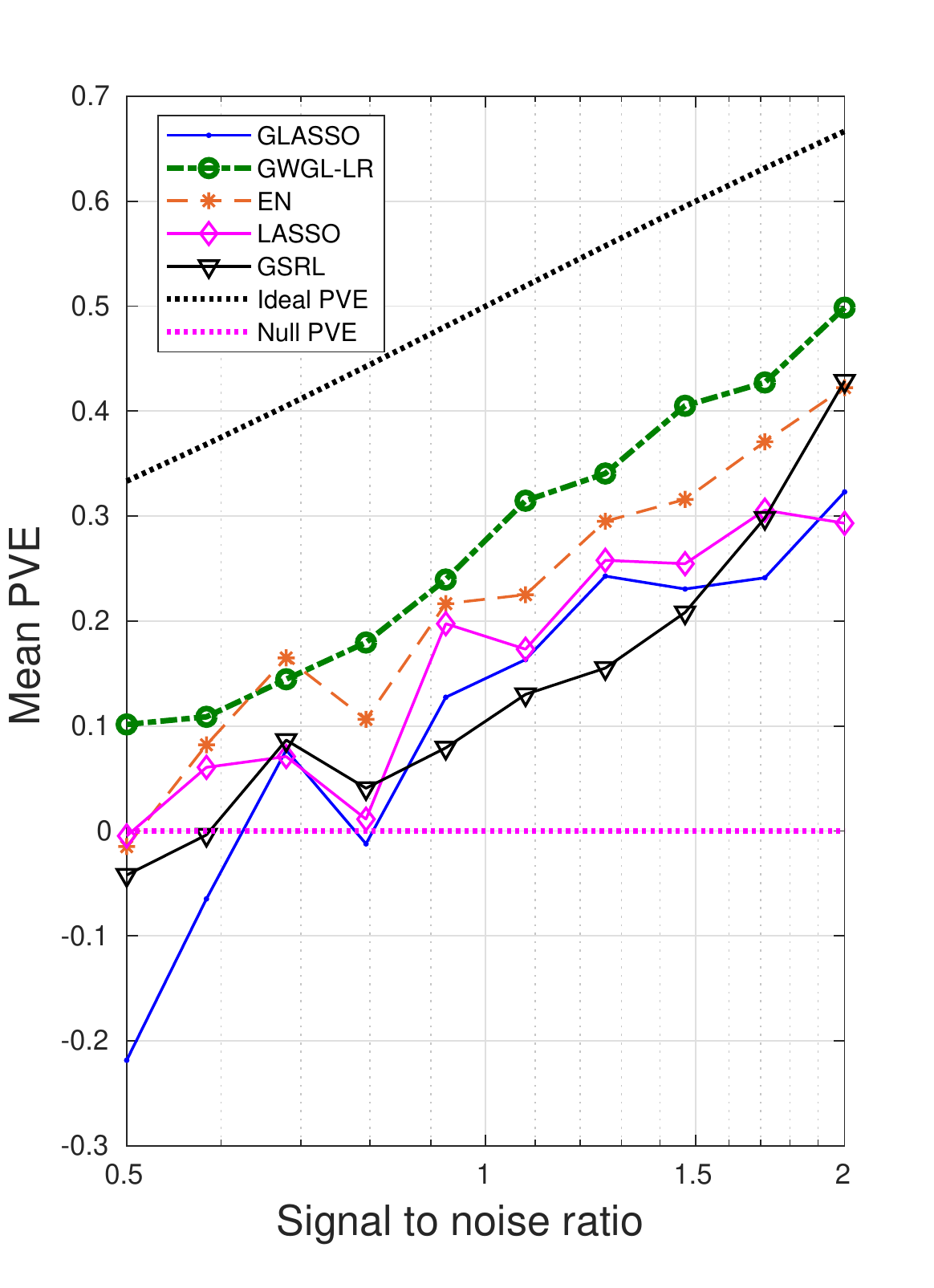}
		\caption{\small{Proportion of variance explained.}}
	\end{subfigure}
	\caption{The impact of SNR on the performance metrics, $q =
		20\%$.} \label{snr-20}
\end{figure}

\begin{figure}[hbt] 
	\begin{subfigure}{.49\textwidth}
		\centering
		\includegraphics[width=0.9\textwidth]{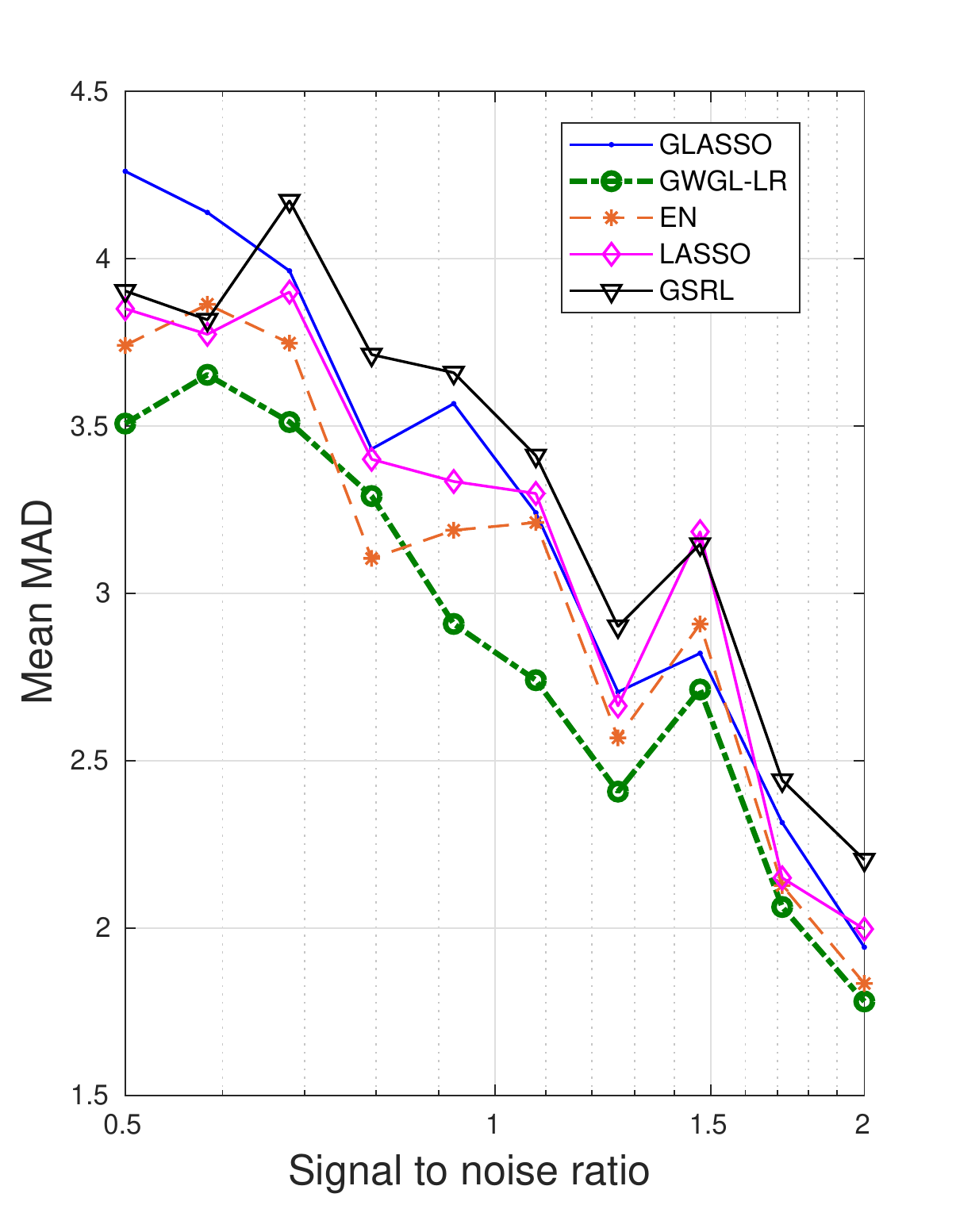}
		\caption{\small{Median Absolute Deviation.}}
	\end{subfigure}
	\begin{subfigure}{0.49\textwidth}
		\centering
		\includegraphics[width=0.9\textwidth]{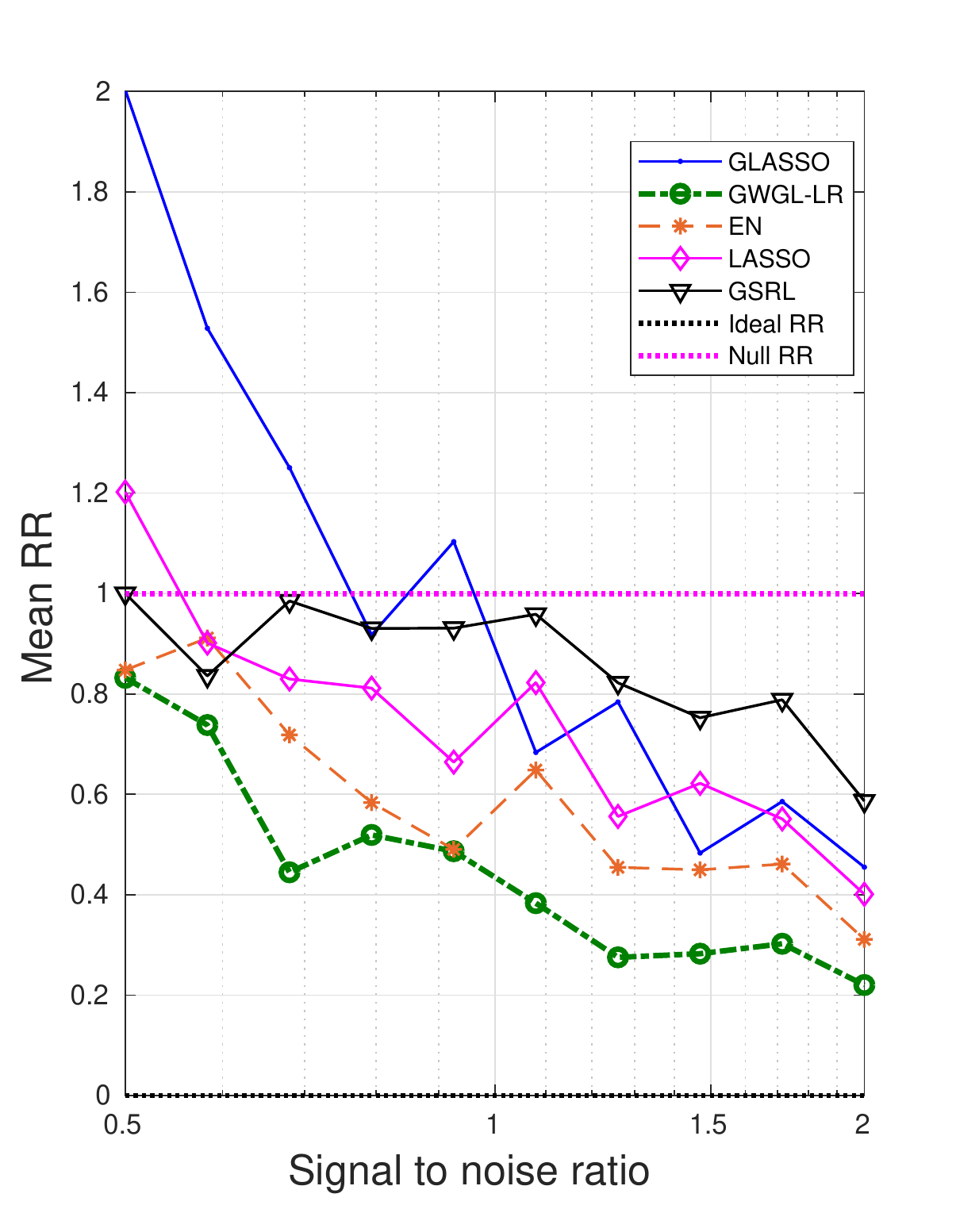}
		\caption{\small{Relative risk.}}
	\end{subfigure}
	
	\begin{subfigure}{0.49\textwidth}
		\centering
		\includegraphics[width=0.9\textwidth]{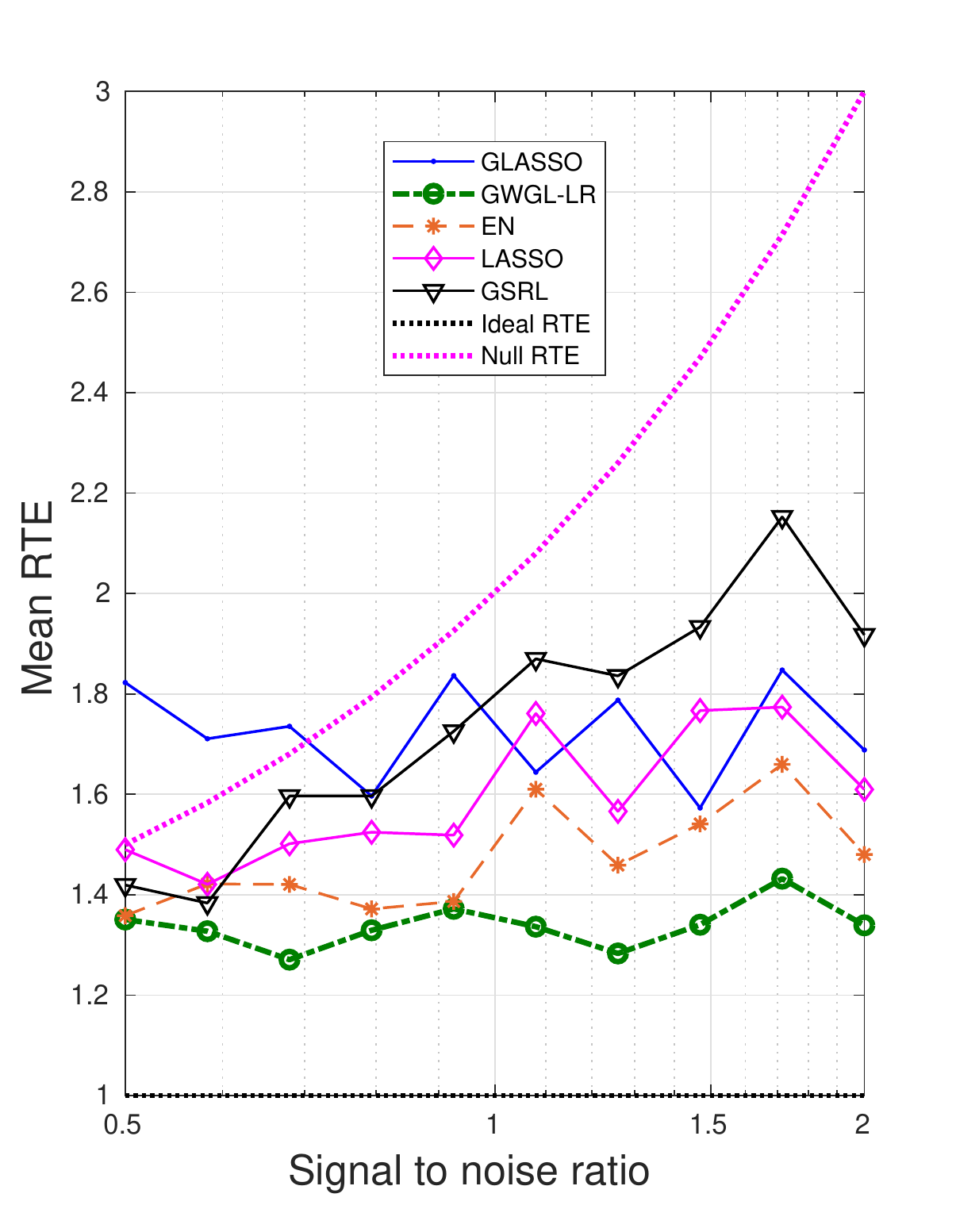}
		\caption{\small{Relative test error.}}
	\end{subfigure}%
	\begin{subfigure}{0.49\textwidth}
		\centering
		\includegraphics[width=0.9\textwidth]{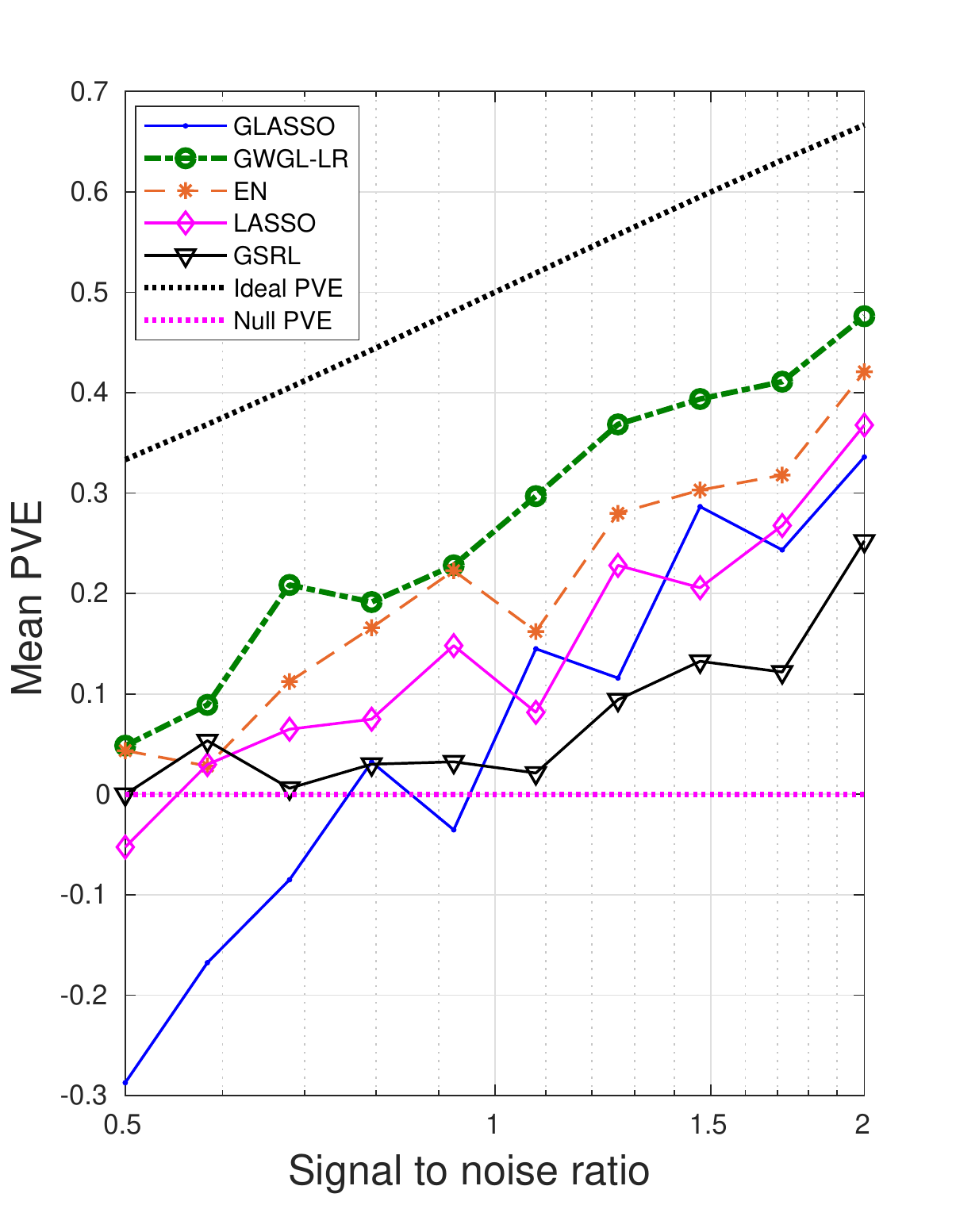}
		\caption{\small{Proportion of variance explained.}}
	\end{subfigure}
	\caption{The impact of SNR on the performance metrics, $q =
		30\%$.} \label{snr-30}
\end{figure}

\begin{figure}[hbt] 
	\begin{subfigure}{.49\textwidth}
		\centering
		\includegraphics[width=1\textwidth]{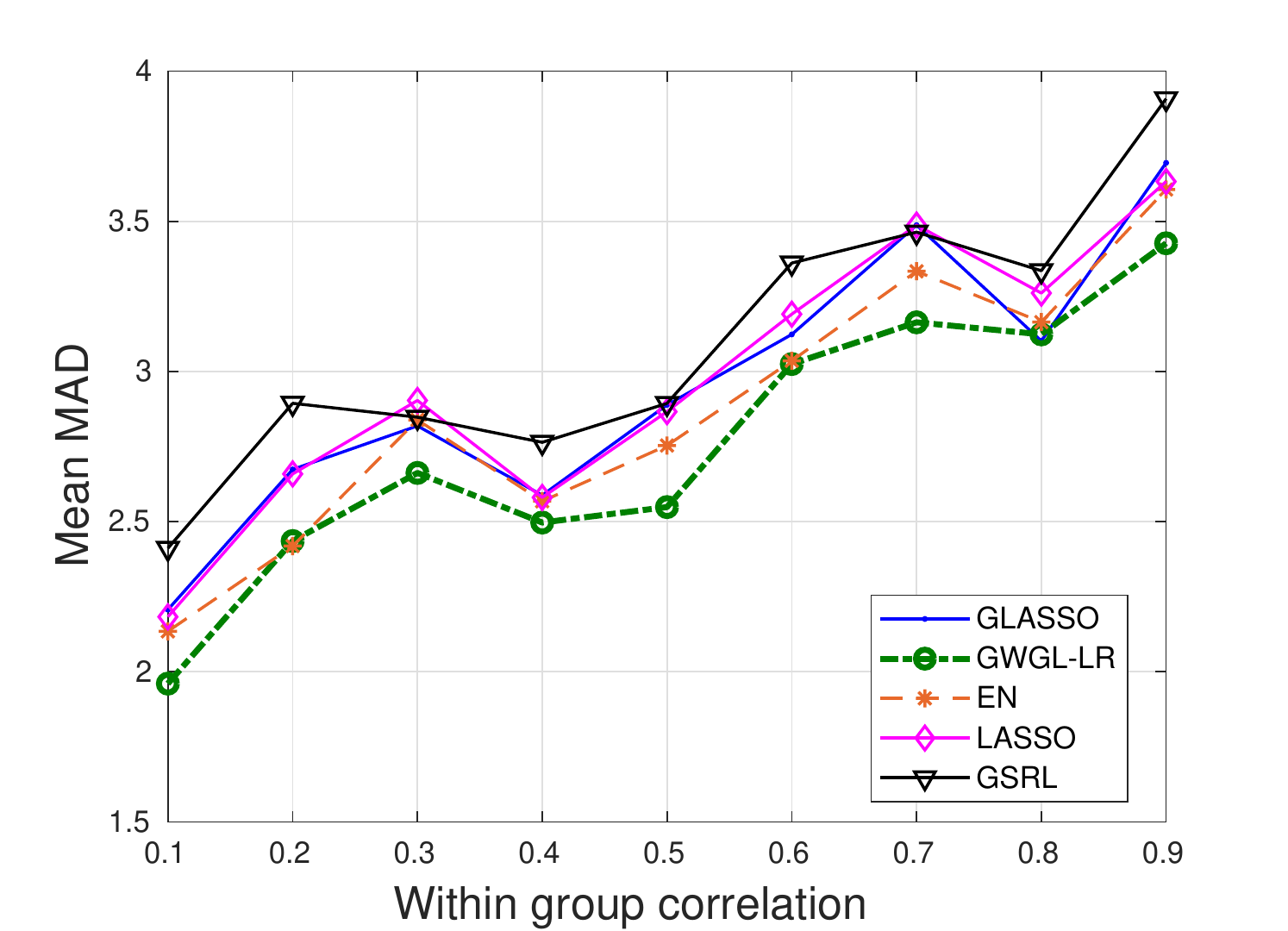}
		\caption{\small{Median Absolute Deviation.}}
	\end{subfigure}
	\begin{subfigure}{0.49\textwidth}
		\centering
		\includegraphics[width=1\textwidth]{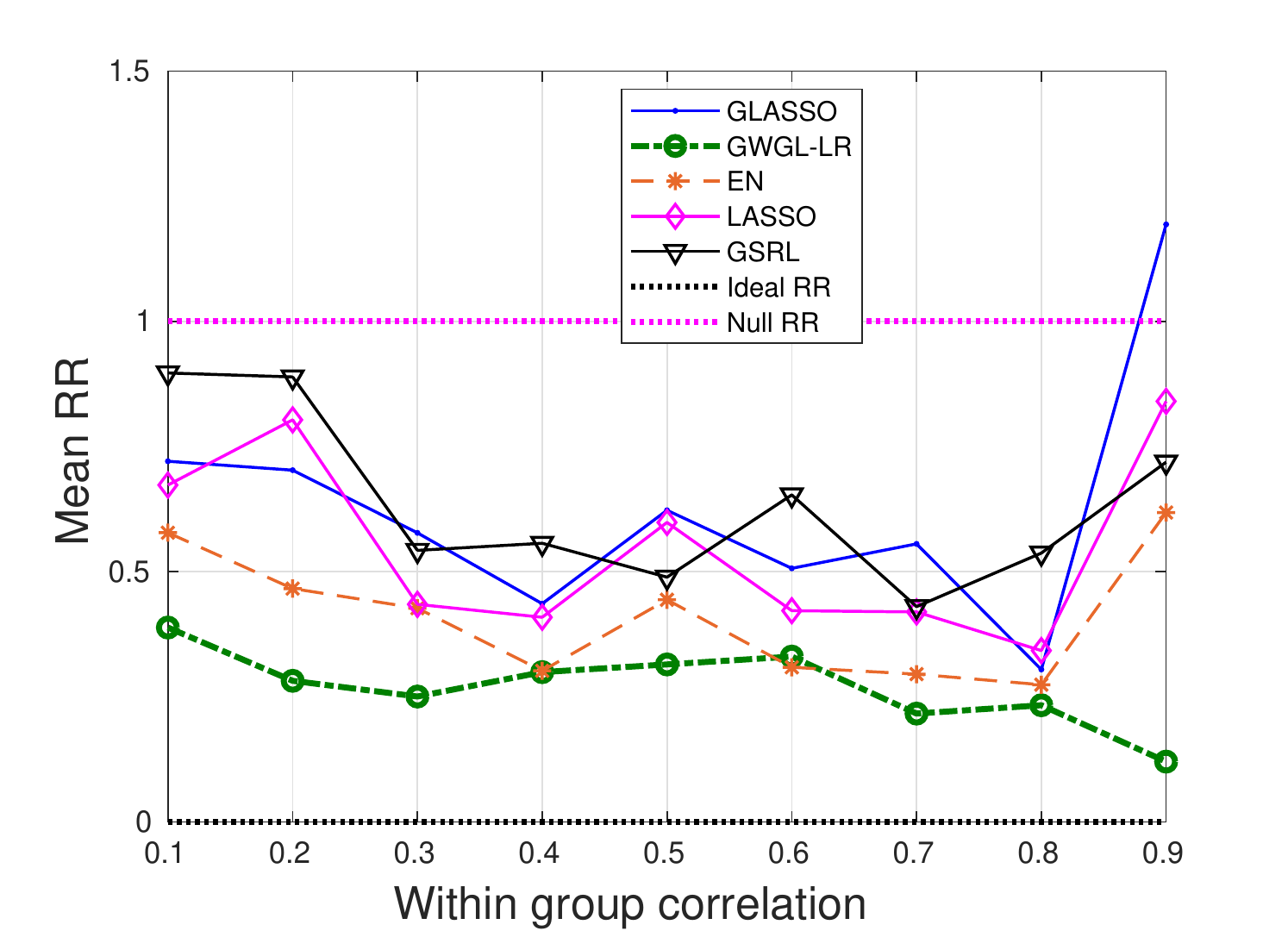}
		\caption{\small{Relative risk.}}
	\end{subfigure}
	
	\begin{subfigure}{0.49\textwidth}
		\centering
		\includegraphics[width=1\textwidth]{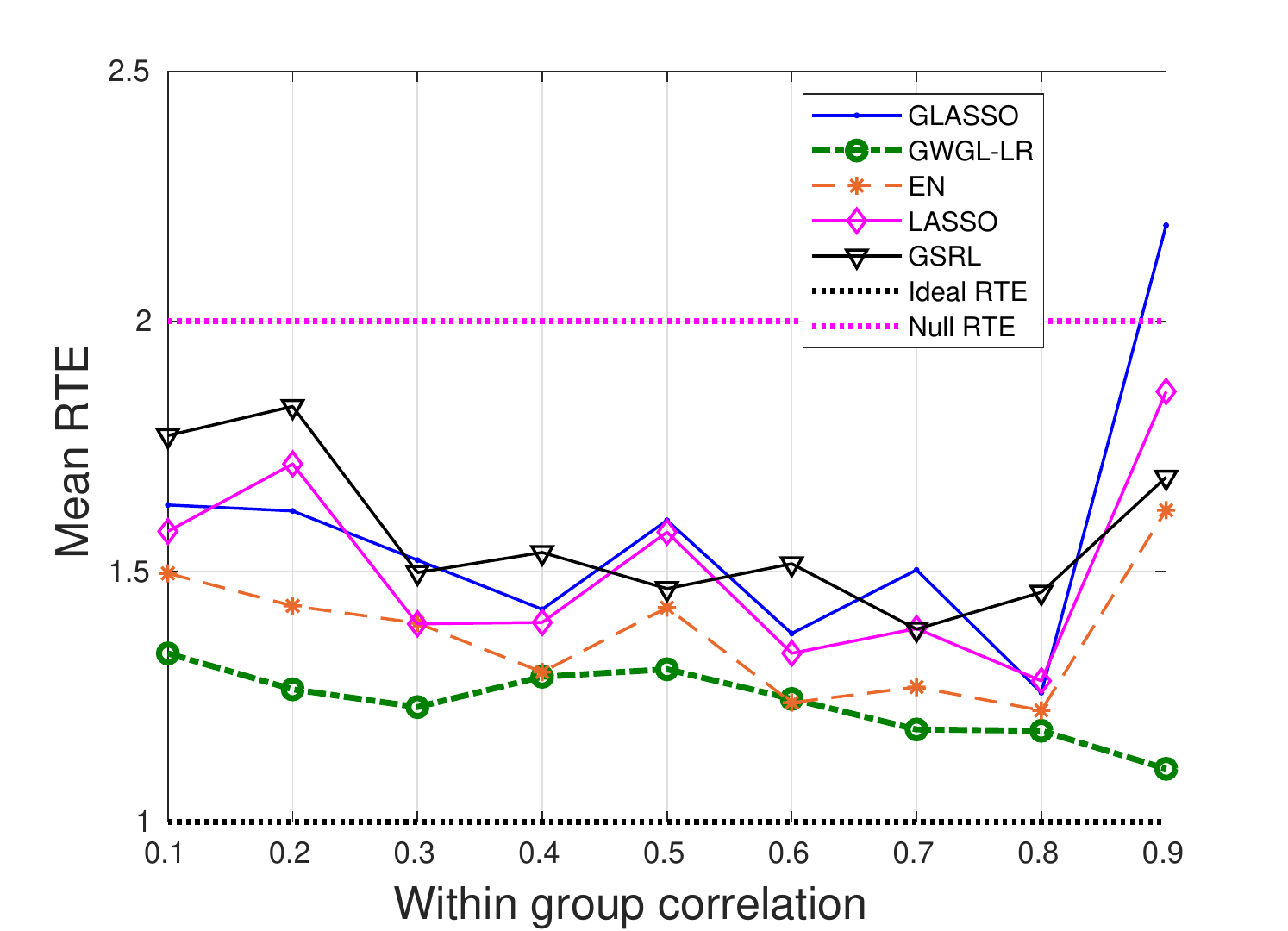}
		\caption{\small{Relative test error.}}
	\end{subfigure}%
	\begin{subfigure}{0.49\textwidth}
		\centering
		\includegraphics[width=1\textwidth]{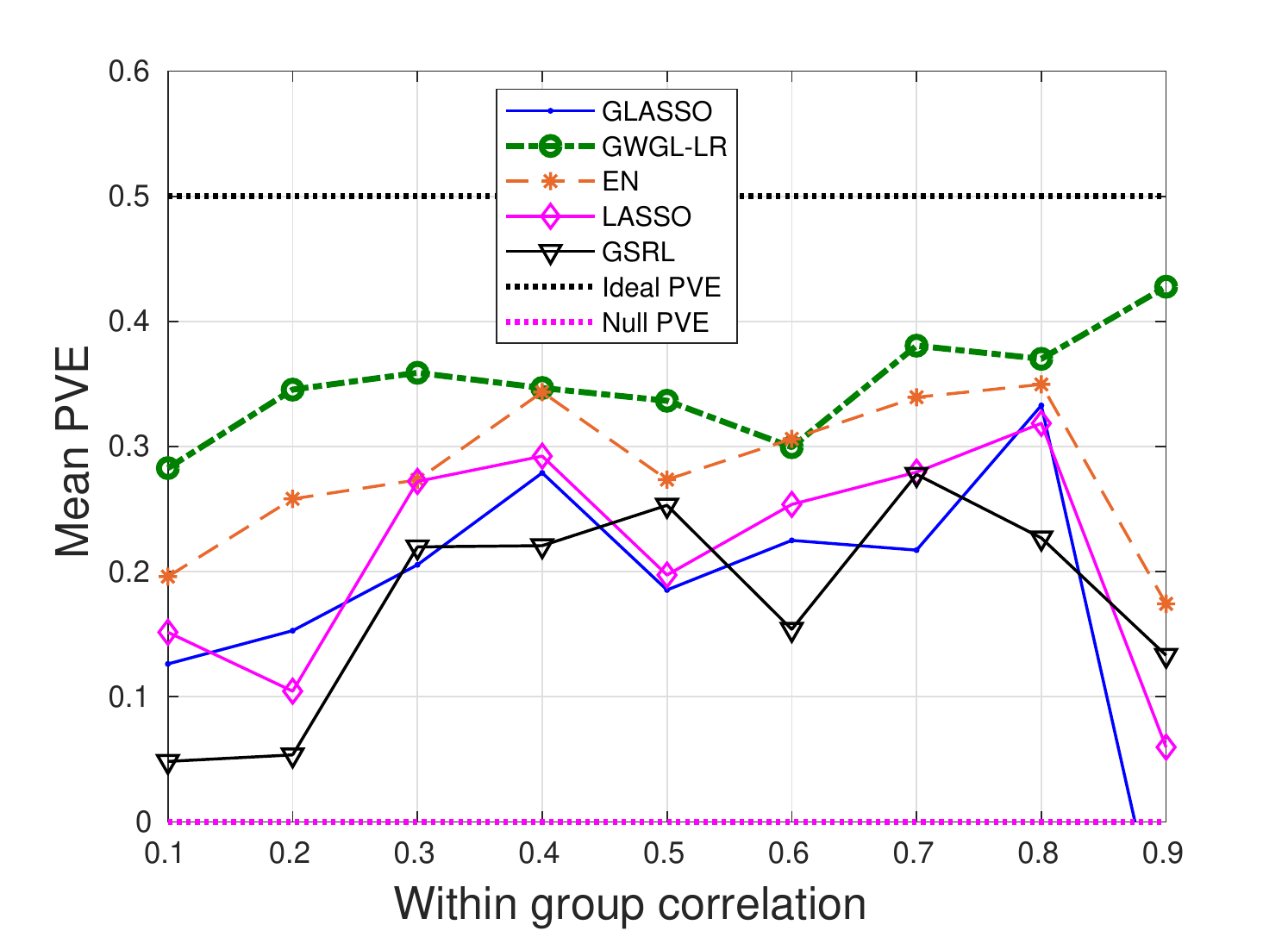}
		\caption{\small{Proportion of variance explained.}}
	\end{subfigure}
	\caption{The impact of within group correlation on the performance metrics, $q =
		20\%$.} \label{cor-20}
\end{figure}

\begin{figure}[hbt] 
	\begin{subfigure}{.49\textwidth}
		\centering
		\includegraphics[width=1\textwidth]{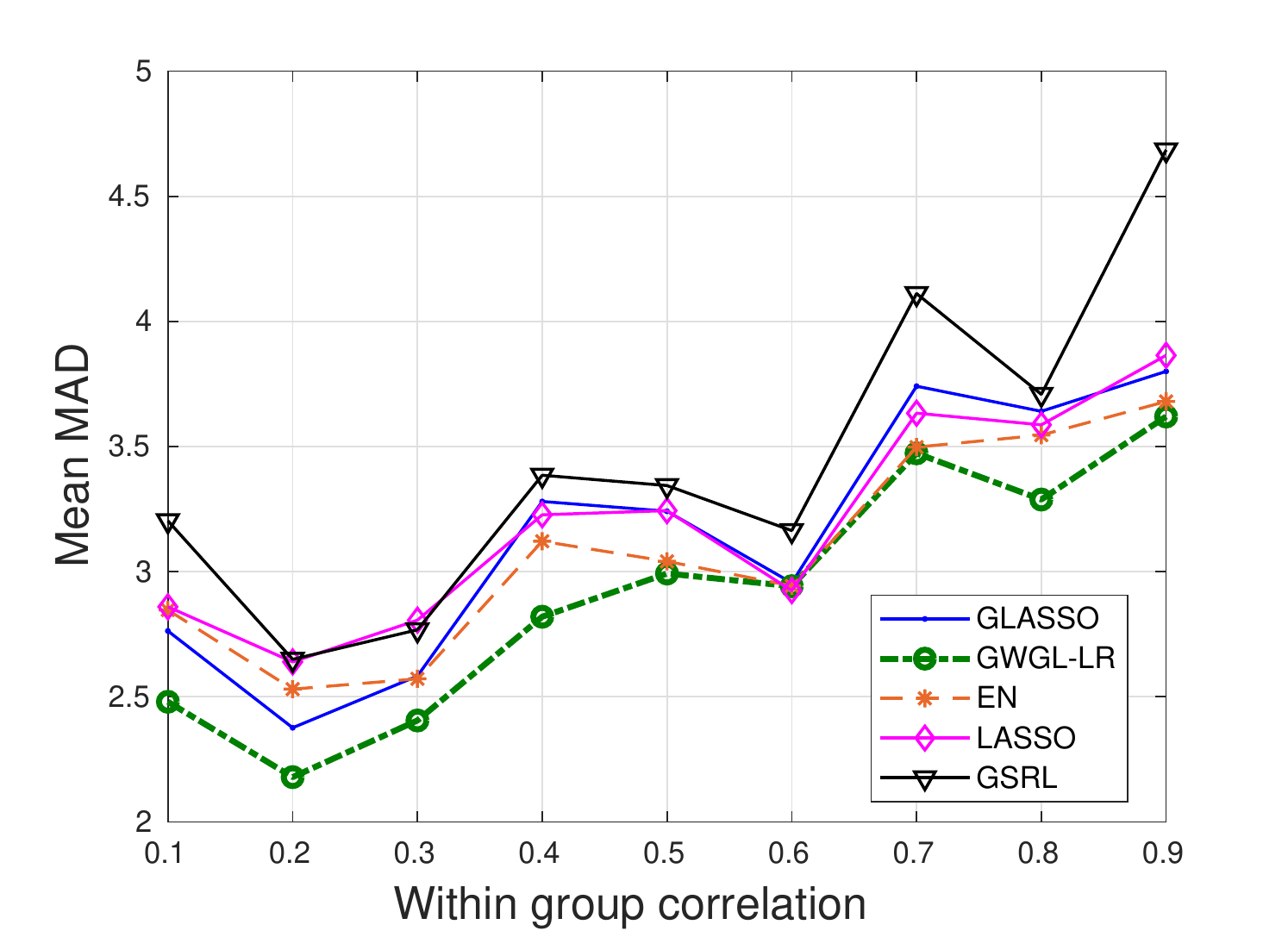}
		\caption{\small{Median Absolute Deviation.}}
	\end{subfigure}
	\begin{subfigure}{0.49\textwidth}
		\centering
		\includegraphics[width=1\textwidth]{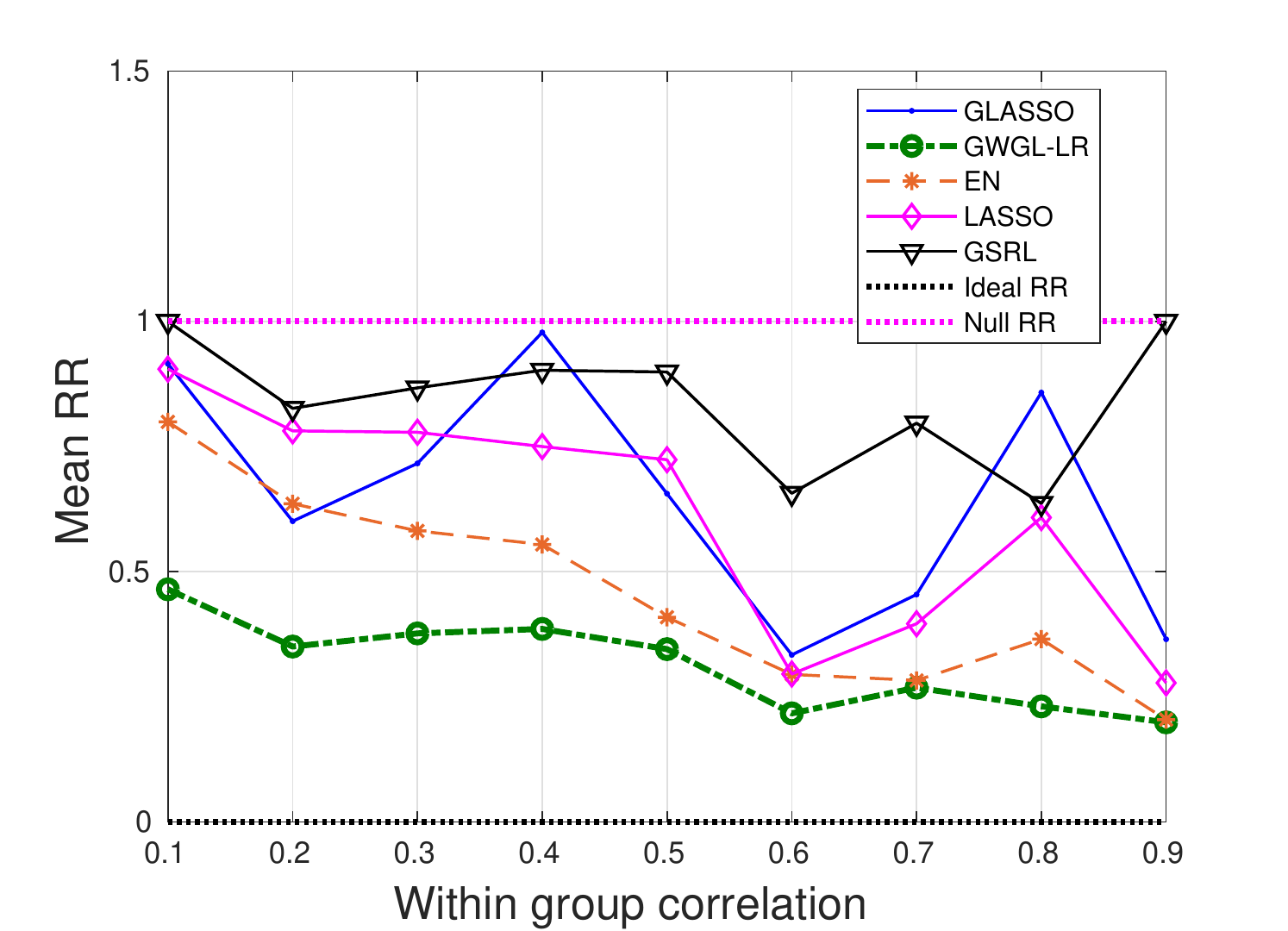}
		\caption{\small{Relative risk.}}
	\end{subfigure}
	
	\begin{subfigure}{0.49\textwidth}
		\centering
		\includegraphics[width=1\textwidth]{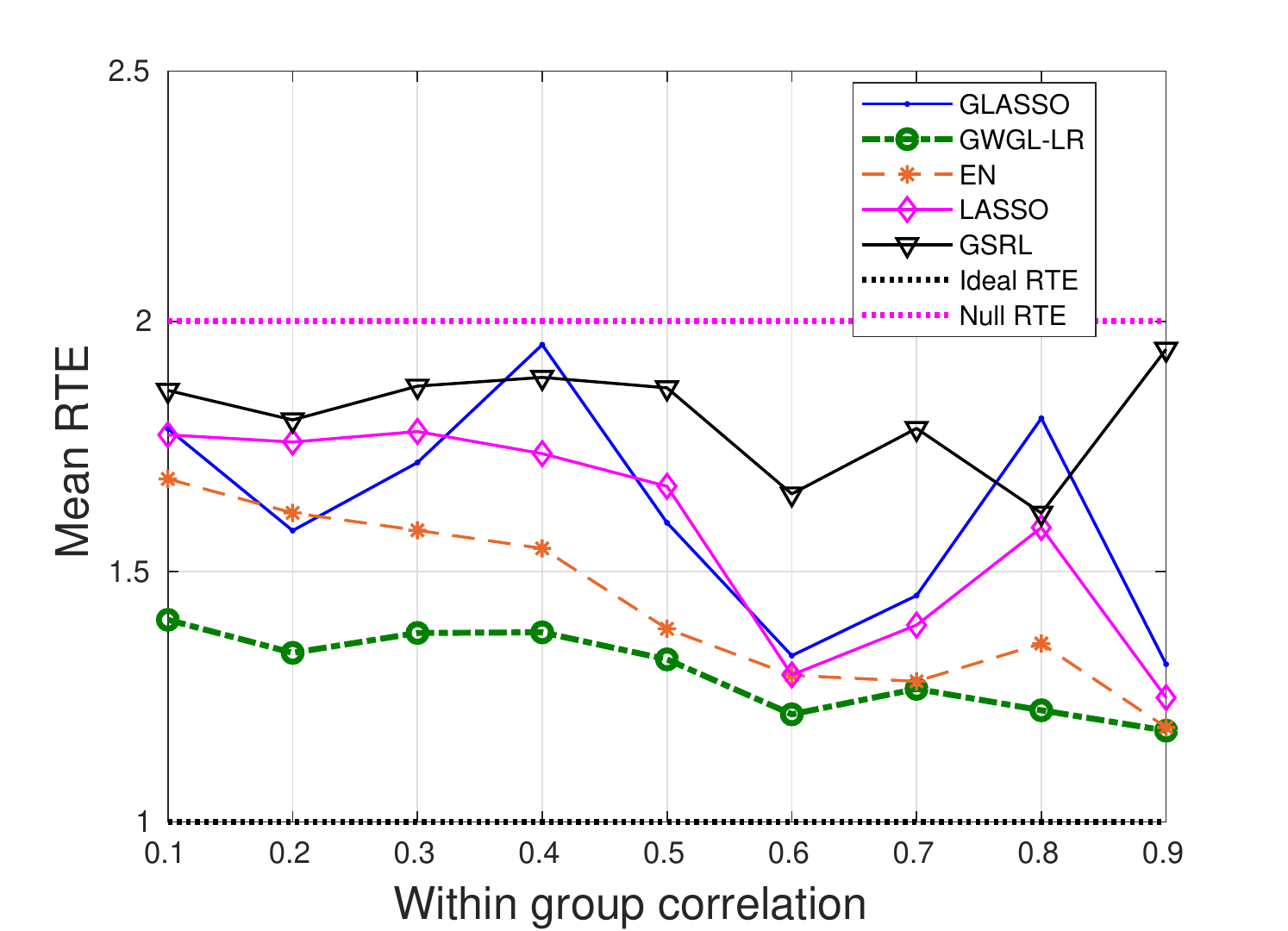}
		\caption{\small{Relative test error.}}
	\end{subfigure}%
	\begin{subfigure}{0.49\textwidth}
		\centering
		\includegraphics[width=1\textwidth]{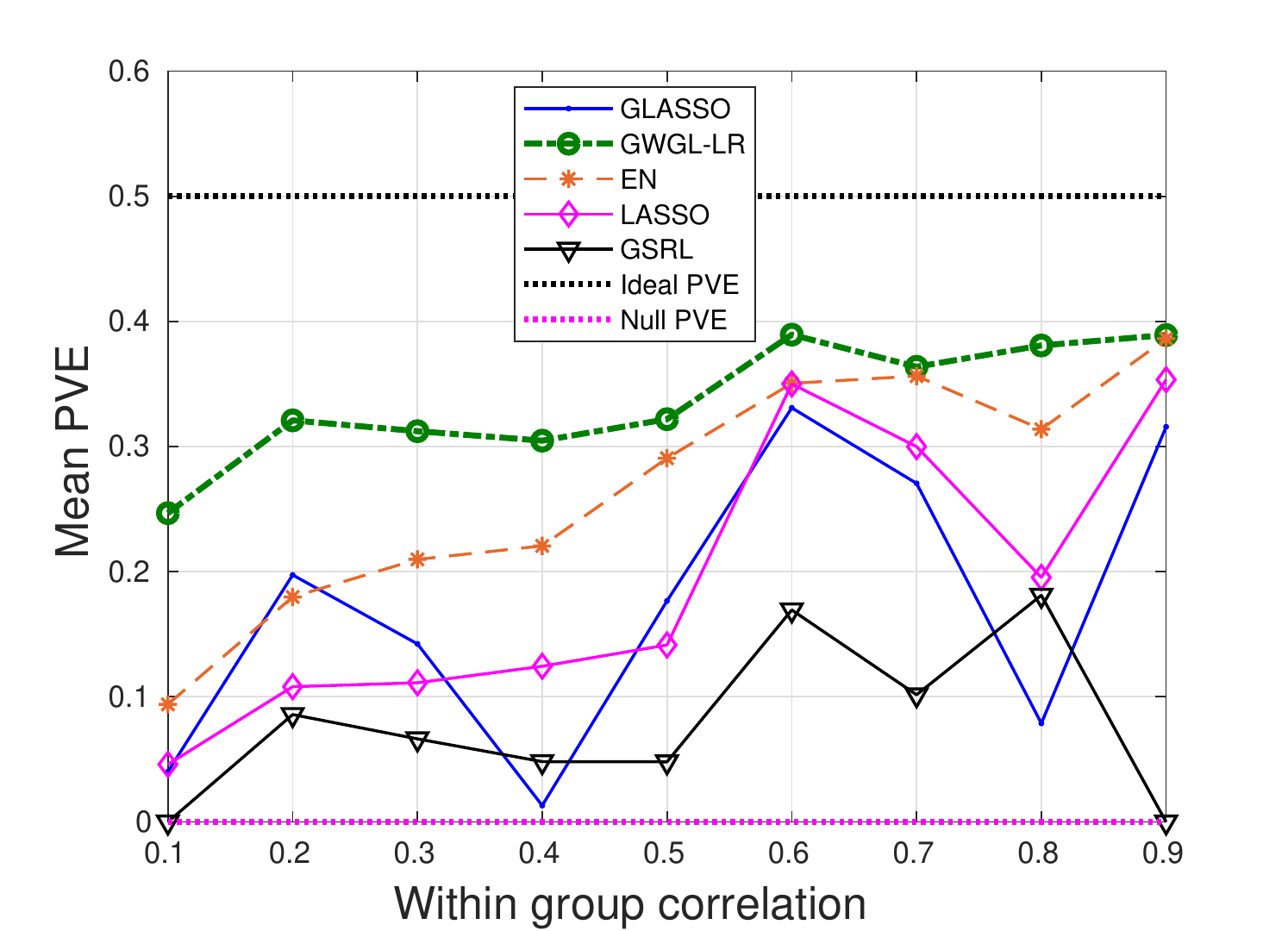}
		\caption{\small{Proportion of variance explained.}}
	\end{subfigure}
	\caption{The impact of within group correlation on the performance metrics, $q =
		30\%$.} \label{cor-30}
\end{figure}

To better highlight the benefits of GWGL-LR, in
Tables~\ref{table1} and \ref{table2} we summarize the
{\em Maximum Percentage Improvement (MPI)} brought about by our methods
compared to other procedures, when varying the SNR and
$\rho_w$, respectively. In all tables, the number outside the parentheses
is the MPI value corresponding to each metric, while the number in the
parentheses indicates the value of SNR/$\rho_w$ at which the MPI
is attained. For each performance metric, the MPI is defined as the
maximum percentage difference of the performance between GWGL-LR and the best among all others.

\begin{table}[hbt]
	\caption{Maximum percentage improvement of all metrics when varying
		the SNR.} \label{table1} 
	\begin{center}
		\begin{tabular}{ c|>{\centering\arraybackslash}p{2cm}|>{\centering\arraybackslash}p{2cm}|>{\centering\arraybackslash}p{2cm}|>{\centering\arraybackslash}p{2cm} } 
			\hline
			& MAD & RR   & RTE   & PVE \\ \hline 
			$q = 20\%$ & 13.7 (0.5) & 41.4 (1.47) & 13.1 (1.47) & 68.9 (0.79)  \\ 
			$q = 30\%$ & 14.7 (1.08) & 40.9 (1.08) &  17 (1.08) & 85.7 (0.68)  \\ 
			\hline
		\end{tabular}
	\end{center}
\end{table}

\begin{table}[hbt]
	\caption{Maximum percentage improvement of all metrics when varying the within group correlation.} \label{table2} 
	\begin{center}
		\begin{tabular}{ c|>{\centering\arraybackslash}p{2cm}|>{\centering\arraybackslash}p{2cm}|>{\centering\arraybackslash}p{2cm}|>{\centering\arraybackslash}p{2cm} } 
			\hline
			& MAD & RR   & RTE   & PVE \\ \hline 
			$q = 20\%$ & 8.2 (0.1) & 80.5 (0.9) & 31.8 (0.9)  & 145.4 (0.9)  \\ 
			$q = 30\%$ & 10.2 (0.1) & 41.9 (0.1) & 16.7 (0.1) & 162.5 (0.1) \\ 
			\hline
		\end{tabular}
	\end{center}
\end{table}

We summarize below our main findings from the results we have presented:
\begin{itemize}
	\item For all approaches under consideration, MAD and RR decrease as the
	data becomes less noisy. PVE increases when the noise is reduced.
	
	\item The GWGL-LR formulation has better prediction and estimation performances than all other approaches under consideration.
	
	\item The relative improvement of GWGL-LR over GLASSO (with an $\ell_2$-loss) is more significant for highly noisy data (with low SNR values or a high percentage of outliers), which can be attributed to the $\ell_1$-loss function it uses. Moreover, GWGL-LR generates more stable estimators than GLASSO.
	
	\item When the within group correlation is varied, GWGL-LR shows a more stable performance than others. 
\end{itemize}

\subsection{GWGL-LG on Synthetic Datasets}
In this subsection we explore the GWGL-LG formulation on synthetic datasets. The data generation process is described as follows:
\begin{enumerate}
	\item Generate $\bbeta^*$ based on the following rule:
	\begin{align*}
	\beta^*_k = 
	\begin{cases}
	\scrU[2.5, 7], & \text{if $\beta^*_k \in (\bbeta^*)^l$ where $l$ is even;} \\
	0, & \text{otherwise},
	\end{cases}
	\end{align*}	
	where $\scrU[2.5, 7]$ stands for a random variable that is uniformly distributed on the interval $[2.5, 7]$.
	\item Generate the predictor $\bx \in \mbb{R}^{p}$ from the Gaussian distribution $\scrN_p (0, \bSigma)$, where
	$\bSigma=(\sigma_{i,j})_{i,j=1}^p$ has diagonal elements equal to
	$1$, and off-diagonal elements specified as:
	\begin{equation*}
	\sigma_{i,j}=
	\begin{cases}
	0.9,   & \text{if predictors $i$ and $j$ are in the same group}; \\
	0,      & \text{otherwise}.
	\end{cases}
	\end{equation*}
	\item Generate the response $y$ as follows:
	\begin{align*}
	y \sim 
	\begin{cases}
	\scrB\big(\big[1+e^{-(\bx'\bbeta^* + \scrN(0, \sigma^2))}\big]^{-1} \big), & \text{if $r \le 1-q$;} \\
	\scrB(0.5), & \text{otherwise},
	\end{cases}
	\end{align*}	
	where $\scrB(p)$ stands for the Bernoulli distribution with the probability of success $p$; $\sigma^2 = (\bbeta^*)'\bSigma \bbeta^*$; $r$ is a uniform random variable on $[0, 1]$; and $q$ is the probability (proportion) of abnormal samples (outliers).  
\end{enumerate}
We generate 10 datasets consisting of $100$ observations and
4 groups of predictors, $80\%$ of which constitute the training dataset, and the remaining forming the test set. The number of predictors in each group
is: $p_1 = 3, p_2 = 4, p_3 = 6, p_4 = 7$, and $p=\sum_{i=1}^4
p_i=20$. The following performance metrics will be used to evaluate the prediction and estimation quality of the solutions:
\begin{itemize}
	\item The Correct Classification Rate (CCR) on the test dataset, which is defined to be the proportion of test set samples that are correctly classified by the classifier $\hat{\bbeta}$, with a threshold $0.5$ on the predicted probability of success.
	\item The {\em AUC (Area Under the ROC Curve)} on the test dataset.
	\item The average logloss on the test set.
	\item The {\em Within Group Difference (WGD)} of the classifier $\hat{\bbeta}$, defined as: 
	\begin{equation*}
	\text{WGD}(\hat{\bbeta}) \triangleq \frac{1}{|\{l: p_l \ge 2\}|} \sum_{l: p_l \ge 2} \frac{1}{\binom{p_l}{2}} \sum_{x_i, x_j \in \bx^l} \biggl|\frac{\hat{\beta}_i - \hat{\beta}_j}{\bx_{,i}'\bx_{,j}}\biggr|,
	\end{equation*}
	where $|\{l: p_l \ge 2\}|$ denotes the cardinality of the set $\{l: p_l \ge 2\}$, and $\bx_{,i}'\bx_{,j}$ measures the sample correlation between predictors $x_i$ and $x_j$ ($\bx_{,i}$ and $\bx_{,j}$ are standardized). $\text{WGD}(\hat{\bbeta})$ essentially evaluates the ability of $\hat{\bbeta}$ to induce group level sparsity. It is desired that the coefficients in the same group are close so that they can be jointly selected/dropped. Theorem~\ref{grouping-lg} implies that the higher the correlation, the smaller the difference between the coefficients, and thus, a smaller WGD value would suggest a stronger ability of grouped variable selection.
\end{itemize}
Notice that the first three metrics mentioned above evaluate the {\em prediction} quality of the classifier, while the last one evaluates its {\em estimation} quality. If the true coefficient vector $\bbeta^*$ is known, we will also use the following {\em confusion matrix} which summarizes the number of zero/nonzero elements in $\hat{\bbeta}$ that are zero/nonzero in the true coefficient $\bbeta^*$. 
\renewcommand{\arraystretch}{1.3}
\begin{table}[H]
	\centering
	\caption{Confusion Matrix.}
	\label{confusionmat}
	{\normalsize \begin{tabular}{c c c }
			\cline{1-3}
			& \multicolumn{2}{c}{$\bbeta^*$}\\
			\cline{2-3}
			$\hat{\bbeta}$ & Nonzero & Zero  \\
			\hline
			\multicolumn{1}{c}{Nonzero} & True Association (TA) & False Association (FA) \\
			\multicolumn{1}{c}{Zero} & False Disassociation (FD) & True Disassociation (TD) \\
			\hline
		\end{tabular}}
	\end{table}
	\renewcommand{\arraystretch}{1}
	Two ratios will be computed using Table~\ref{confusionmat}, the {\em True Association Rate (TAR)} defined as:
	\begin{equation*}
	\text{TAR} = \frac{\text{TA}}{\text{TA}+\text{FD}},
	\end{equation*}
	which calculates the proportion of nonzero coefficients that are correctly discovered by the estimator, and the {\em True Disassociation Rate (TDR)} defined as:
	\begin{equation*}
	\text{TDR} = \frac{\text{TD}}{\text{FA}+\text{TD}},
	\end{equation*}
	which calculates the proportion of zero coefficients that are correctly identified as zero by the estimator.
	
	We compare GWGL-LG with four formulations: the vanilla logistic regression (LG) that minimizes the empirical logloss on the training samples (without penalty), LG-LASSO that imposes an $\ell_1$-norm regularizer on $\bbeta$, LG-Ridge that uses an $\ell_2$-norm regularizer, and LG-EN that uses both the $\ell_1$- and $\ell_2$- regularizers. All the penalty (regularization) parameters are tuned in the same way as Section~\ref{gwgl-lr-exp}. The
	penalty parameter 
	for GWGL-LG is tuned over $50$ values
	ranging from $\lambda_m = \max_{l \in \lb L \rb} \big( \|(\bX^l)'(\by - \bar{y} \mathbf{1})\|_2/\sqrt{p_l}\big)$ to a small fraction of
	$\lambda_m$ on a log scale \citep{meier2008group}, where $\bX^l$ consists of the columns of the design matrix $\bX$ corresponding to group $l$, $\by$ is the vector of training set labels, and $\bar{y} = \by'\mathbf{1}/N$. The maximum penalty parameter $\lambda_m$ for LG-LASSO is computed by recognizing it as a special case of GWGL-LG where each group contains only one predictor. For LG-Ridge, $\lambda_m$ is set to be the square root of the maximum penalty parameter for LG-LASSO, due to the fact that we penalize the square of the $\ell_2$-norm regularizer in LG-Ridge. The range of penalty parameters for LG-EN is set in a similar way.
	
	Similar to Section~\ref{gwgl-lr-exp}, the spectral clustering algorithm with the Gaussian similarity function (\ref{gs}) is used to perform grouping on the predictors. We experiment with two scenarios: $(i)$ $q = 20\%$, and $(ii)$ $q = 30\%$.  The results are shown in Tables~\ref{perf-syn-20} and \ref{perf-syn-30}, where the number outside the parentheses
	is the mean value across $10$ repetitions, and the number in the
	parentheses is the corresponding standard deviation. 
	
	\begin{table}[hbt]
		\caption{The performances of different classification formulations on synthetic datasets, $q = 20\%$.} \label{perf-syn-20} 
		\begin{center}
			{\footnotesize \begin{tabular}{ c|>{\centering\arraybackslash}p{1.1cm}|>{\centering\arraybackslash}p{1.1cm}|>{\centering\arraybackslash}p{1.1cm}|>{\centering\arraybackslash}p{1.1cm}|>{\centering\arraybackslash}p{1.1cm}|>{\centering\arraybackslash}p{1.1cm} } 
					\hline
					& CCR         & AUC          & logloss      & WGD         & TAR         & TDR \\ \hline 
					LG       & 0.62 (0.14) & 0.67 (0.13)  & 0.87 (0.24)  & 1.71 (0.32) & 1.00 (0.00)       & 0.00 (0.00)
					\\ 	\hline 
					LG-LASSO & 0.69 (0.14) & 0.77 (0.12)  & 0.60 (0.13)  & 0.33 (0.19) & 0.54 (0.19) & 0.42 (0.23)  \\ \hline 
					LG-Ridge & 0.67 (0.12) & 0.72 (0.14)  & 0.69 (0.19)  & 0.82 (0.42) & 1.00 (0.00)       & 0.01 (0.04)  \\ \hline
					LG-EN    & 0.70 (0.15) & 0.77 (0.13)  & 0.59 (0.13)  & 0.23 (0.08) & 0.57 (0.21) & 0.42 (0.24) \\ \hline
					GWGL-LG  & 0.68 (0.15) & 0.79 (0.12)  & 0.59 (0.14)  & 0.13 (0.07) & 0.98 (0.04) & 0.28 (0.34)\\
					\hline
				\end{tabular}}
			\end{center}
		\end{table} 
		
		\begin{table}[hbt]
			\caption{The performances of different classification formulations on synthetic datasets, $q = 30\%$.} \label{perf-syn-30} 
			\begin{center}
				{\footnotesize \begin{tabular}{ c|>{\centering\arraybackslash}p{1.1cm}|>{\centering\arraybackslash}p{1.1cm}|>{\centering\arraybackslash}p{1.1cm}|>{\centering\arraybackslash}p{1.1cm}|>{\centering\arraybackslash}p{1.1cm}|>{\centering\arraybackslash}p{1.1cm} } 
						\hline
						& CCR          & AUC          & logloss     & WGD          & TAR         & TDR      \\ \hline 
						LG       & 0.63 (0.09)  & 0.68 (0.08)  & 0.99 (0.21) & 2.68 (0.51)  & 1.00 (0.00) & 0.00 (0.00)     \\ \hline
						LG-LASSO & 0.73 (0.08)  & 0.73 (0.08)  & 0.65 (0.11) & 0.56 (0.37)  & 0.58 (0.21) & 0.43 (0.23)\\ \hline
						LG-Ridge & 0.72 (0.06)  & 0.74 (0.06)  & 0.64 (0.10) & 0.78 (0.58)  & 0.98 (0.04) & 0.00 (0.00)       \\ \hline
						LG-EN    & 0.74 (0.08)  & 0.77 (0.06)  & 0.59 (0.08) & 0.24 (0.09)  & 0.48 (0.14) & 0.42 (0.23)  \\ \hline
						GWGL-LG  & 0.73 (0.08)  & 0.78 (0.06)  & 0.60 (0.09) & 0.21 (0.16)  & 0.99 (0.03)  & 0.18 (0.09)   \\
						\hline
					\end{tabular}}
				\end{center}
			\end{table}
			
			We see that in general, the penalized formulations perform significantly better than the vanilla logistic regression. LG-EN has very similar prediction performance (i.e., CCR, AUC and logloss on the test set) to GWGL-LG, better than LG-LASSO and LG-Ridge. Regarding the estimation quality, the penalized formulations achieve much lower WGD values than LG. LG-Ridge does not induce sparsity, and therefore has the highest WGD among the four regularized models. LG-LASSO shows a relatively small WGD, due to the sparsity inducing (at the individual level) property of the $\ell_1$-regularizer. GWGL-LG achieves the smallest WGD among all (significantly lower than that of LG-EN), which provides empirical evidence on its group sparsity inducing property, and is consistent with our earlier discussion in Theorem~\ref{grouping-lg} that the GLASSO penalty tends to drive the coefficients in the same group to the same value if the within group correlation is high. Moreover, GWGL-LG successfully drops out all the coefficients in the first group, while other formulations are not able to drop any of the four groups. 
			
			Regarding the TAR and TDR, we notice that GWGL-LG obtains very high TAR values, and compared to other formulations that achieve almost perfect TARs (e.g., LG-Ridge and LG), it has a significantly higher TDR. LG-LASSO and LG-EN achieve the highest TDRs, but their TARs are significantly worse. Note that a dense estimator would result in a perfect TAR but a zero TDR, as in LG and LG-Ridge. The higher the TDR, the more parsimonious the model is, but on the other hand, a higher TAR is more appreciated as we do not want to leave out any of the important (effective) predictors. A low TAR means that a substantial proportion of the meaningful predictors are dropped, the cost of which is usually much higher than the cost of wrongly selecting the unimportant ones. Therefore, taking into account both the parsimony and effectiveness of the model, GWGL-LG outperforms all others.  
			
			We also want to highlight the robustness of GWGL-LG to misspecified groups. For example, in the scenario with $q = 30\%$ outliers, even though spectral clustering outputs a wrong grouping structure (it divides the data into four groups with group size being $p_1 = 2, \ p_2 = 3, \  p_3 = 5, \ p_4 = 10$, and the correct group size is $p_1 = 3, \ p_2 = 4, \ p_3 = 6, \ p_4 = 7$), GWGL-LG is still able to achieve a satisfactory prediction performance and an almost perfect TAR with a sparse model (nonzero TDR). 
			
			\subsection{An Application to Hospital Readmission}
			In this section we test the GWGL formulations on a real dataset containing
			medical records of patients who underwent a general surgical
			procedure. In 2005, the American College of Surgeons (ACS) established
			the National Surgical Quality Improvement Program (NSQIP), which
			collects detailed demographic, laboratory, clinical, procedure and
			postoperative occurrence data in several surgical subspecialties. The
			dataset includes $(i)$ baseline demographics; $(ii)$ pre-existing comorbidity information; $(iii)$ preoperative variables; $(iv)$ index admission-related
			diagnosis and procedure information; $(v)$ postoperative events and
			complications, and $(vi)$ additional
			socioeconomic variables. 
			
			In our study, patients who underwent a general surgery procedure over 2011--2014 and were tracked by the NSQIP were
			identified. We will focus on two supervised learning models: $(i)$ a linear regression model whose objective is to predict the
			post-operative hospital length of stay using pre- and intra-operative
			variables, and $(ii)$ an LG model whose objective is to predict the re-hospitalization of patients within 30 days after discharge using the same set of explanatory variables. Both models are extremely useful as they allow hospital staff to predict
			post-operative bed occupancy and prevent costly 30-day readmissions.
			
			Data were pre-processed as follows: $(i)$ categorical variables (such as
			race, discharge destination, insurance type) were numerically encoded
			and units homogenized; $(ii)$ missing values were replaced by the mode;
			$(iii)$ all variables were normalized by subtracting the mean and divided
			by the standard deviation; $(iv)$ patients who died within 30 days of
			discharge or had a postoperative length of stay greater than 30 days
			were excluded. After pre-processing, we were left with a total of
			$2,275,452$ records. 
			
			After encoding the categorical predictors using indicator variables, we have $131$ numerical predictors for the regression model and $132$ for the classification model (the post-operative hospital length of stay is used as a predictor for the 30-day re-hospitalization prediction). The spectral clustering algorithm is used to perform grouping on the predictors, with the number of groups specified as $67$ based on a preliminary analysis of the data. (The eigengap heuristic \citep{von2007tutorial} mentioned in Section~\ref{gwgl-lr-exp} was used.)
			
			For predicting the post-operative hospital length of stay, we report the out-of-sample MAD in
			Table~\ref{table7}, i.e., the median of the absolute difference between the predicted and actual length of stay on the test set. The mean and standard deviation of the MAD are computed across $5$ repetitions, each with a different training set. We see that the GWGL-LR formulation achieves
			the lowest mean MAD with a small variation. Compared to the best
			among others (GLASSO with $\ell_2$-loss), it improves the mean MAD by $7.30\%$. For longer hospital length of stay, this could imply 1 or 2 days improvement in prediction accuracy, which is both clinically and economically meaningful and significant.
			
			
			\begin{table}[hbt]
				\caption{The Mean and standard deviation of out-of-sample MAD on
					the surgery data.} \label{table7}
				\begin{center}
					\begin{tabular}{c|>{\centering\arraybackslash}p{2cm}|>{\centering\arraybackslash}p{3.5cm}} 
						\hline
						& Mean  & Standard Deviation \\
						\hline
						GLASSO with $\ell_2$-loss & 0.17  & 0.0007 \\
						GWGL-LR                   & 0.16  & 0.001 \\
						EN                        & 0.17  & 0.0009 \\
						LASSO                     & 0.17  & 0.0009 \\
						GSRL                      & 0.17  & 0.0009 \\
						\hline		
					\end{tabular}
				\end{center}
			\end{table}
			
			For predicting the 30-day re-hospitalization of patients, we notice that the dataset is highly unbalanced, with only $6\%$ of patients being re-hospitalized. To obtain a balanced training set, we randomly draw $20\%$ patients from the positive class (re-hospitalized patients), and sample the same number of patients from the negative class, resulting in a training set of size $53,616$. All the remaining patients go to the test dataset. It turns out that the prediction capabilities of all approaches are very similar. All formulations achieve an average {\em out-of-sample CCR} around $0.62$, an average {\em out-of-sample AUC} of $0.83$, and an average {\em logloss} on the test set ranging from $0.84$ to $0.87$. From Table~\ref{wgd-surgery} we see that GWGL-LG obtains a significantly smaller WGD than others, which implies that the GWGL-LG formulation encourages group level sparsity. This can also be revealed by the number of groups that are dropped by various formulations (see Table~\ref{dropgroup}). Notice that though LG-EN and LG-LASSO obtain the most parsimonious models in terms of the number of dropped features (sparsity at an individual level), GWGL-LG has a stronger ability to induce group level sparsity. 
			
			\begin{table}[hbt]
				\caption{The Within Group Difference (WGD) of the estimators on the surgery data.} \label{wgd-surgery} 
				\begin{center}
					\begin{tabular} { c|>{\centering\arraybackslash}p{2cm}|>{\centering\arraybackslash}p{3.5cm}} 
						\hline
						& Mean & Standard Deviation \\
						\hline
						LG & 23.93 & 1.28 \\
						LG-LASSO   & 16.28 & 0.72 \\
						LG-Ridge &  23.38 & 1.15 \\
						LG-EN & 16.26 & 0.74 \\
						GWGL-LG & 5.04 & 0.45 \\
						\hline
					\end{tabular}
				\end{center}
			\end{table}
			
			\begin{table}[hbt]
				\caption{{\small The number of groups/features dropped by various formulations on the surgery data.}} \label{dropgroup} 
				\begin{center}
					{\small \begin{tabular}{ c|>{\centering\arraybackslash}p{4.3cm}|>{\centering\arraybackslash}p{4.3cm}} 
							\hline
							& Number of dropped groups & Number of dropped features \\
							\hline
							LG         & 1 & 2 \\
							LG-LASSO   & 6 & 24 \\
							LG-Ridge   & 2 & 2 \\
							LG-EN      & 10 & 25 \\
							GWGL-LG    & 16 & 19 \\
							\hline
						\end{tabular}}
					\end{center}
				\end{table}
				
				\section{Summary} \label{sec:3-5}
				In this section we presented a {\em Distributionally Robust Optimization (DRO)} formulation under the Wasserstein
				metric that recovers the GLASSO penalty for {\em Least Absolute Deviation (LAD)} and LG, through which we
				have established a connection between group-sparse regularization and
				robustness and offered new insights into the group sparsity penalty term. We provided insights on the
				grouping effect of the estimators, which suggests the use of spectral clustering with the Gaussian similarity function to perform grouping on the predictors. We established finite-sample
				bounds on the prediction errors, which justify the form of the regularizer and provide guidance
				on the number of training samples needed in order to achieve specific
				out-of-sample accuracy.  
				
				We reported results from several experiments, using both synthetic data
				and a real dataset with surgery-related medical records. It has been observed
				that the GWGL formulations $(i)$ achieve more accurate and stable estimates
				compared to others, especially when the data
				are noisy, or potentially contaminated with outliers; $(ii)$ have a stronger ability of
				inducing group-level sparsity, and thus producing more interpretable models, and
				$(iii)$ successfully identify most of the effective predictors with a reasonably
				parsimonious model. 

\chapter{Distributionally Robust Multi-Output Learning} \label{chap:multi}
In this section, we focus on robust multi-output learning where a multi-dimensional response/label vector is to be learned. The difference from previous sections lies in that we need to estimate a coefficient matrix, rather than a coefficient vector, to explain the dependency of each response variable on the set of predictors. We develop {\em Distributionally Robust Optimization (DRO)} formulations under the Wasserstein metric for {\em Multi-output Linear Regression (MLR)} and {\em Multiclass Logistic Regression (MLG)}, when both the covariates and responses/labels may be contaminated by outliers. Through defining a new notion of matrix norm, we relax the DRO formulation into a regularized learning problem whose regularizer is the norm of the coefficient matrix, establishing a connection between robustness and regularization and generalizing the single-output results presented in Section~\ref{chapt:dro}.

\section{The Problem and Related Work}
We consider the multi-output learning problem under the framework of {\em Distributionally Robust Optimization (DRO)} where the ambiguity set is defined via the Wasserstein metric \citep{gao2016distributionally, gao2017wasserstein, shafieezadeh2017regularization, esfahani2018data}. The term multi-output learning refers to scenarios where multiple correlated responses are to be predicted - {\em Multi-output Linear Regression (MLR)}, or one of multiple classes is to be assigned - {\em MultiClass Classification (MCC)}, based on a linear combination of a set of predictors. Both involve learning a target vector $\by$ from a vector of covariates $\bx$. MLR has many applications in econometrics \citep{zhang2013regression}, health care \citep{hidalgo2013multivariate, peng2010regularized}, and finance \citep{islam2013financial, tsay2013multivariate}, for modeling multiple measurements of a single individual \citep{friston1994statistical}, or evaluating a group of interdependent variables \citep{breiman1997predicting}. MCC has seen wide applications in image segmentation \citep{plath2009multi}, text classification \citep{forman2003extensive}, and bioinformatics \citep{li2004comparative}.

Unlike a single-output learning problem where the response variable is scalar and a coefficient vector representing the dependency of the response on the predictors is to be learned, in the multi-output setting the decision variable is a coefficient matrix $\bB \in \mbb{R}^{p \times K}$ whose $k$-th column explains the variation in the $k$-th coordinate of $\by \in \mbb{R}^K$ that can be attributed to the predictors $\bx \in \mbb{R}^p$, for $k \in \lb K \rb$. Inspired by the DRO relaxation derived in Section~\ref{chapt:dro} for the single-output case, which adds a dual norm regularizer to the empirical loss, we obtain a novel \textbf{matrix norm} regularizer for the multi-output case through reformulating the Wasserstein DRO problem. The matrix norm exploits the geometrical structure of the coefficient matrix, and provides a way of associating the coefficients for the potentially correlated responses through the dual norm of the distance metric in the data space. 

As the simplest MLR model, the multi-output extension of OLS regresses each response variable against the predictors independently, which does not take into account the potential correlation between the responses, and is vulnerable to high correlations existing among the predictors. A class of methods that are used in the literature to overcome this issue is called linear factor regression, where the response $\by$ is regressed against a small number of linearly transformed predictors (factors). Examples include reduced rank regression \citep{izenman1975reduced, velu2013multivariate}, principal components regression \citep{massy1965principal}, and {\em Factor Estimation and Selection (FES)} \citep{yuan2007dimension}. Another type of methods applies multivariate shrinkage by either estimating a linear transformation of the OLS predictions \citep{breiman1997predicting}, or solving a regularized MLR problem, e.g., ridge regression \citep{brown1980adaptive, haitovsky1987multivariate}, and FES \citep{yuan2007dimension}, whose regularizer is the coefficient matrix's Ky Fan norm defined as the sum of its singular values. 

As for the popular MCC models, \cite{aly2005survey} provided a thorough survey on
the existing MCC techniques which can be categorized into: $(i)$ transformation to
binary, e.g., one vs. rest and one vs. one; $(ii)$ extension from binary, e.g.,
decision trees \citep{breiman2017classification}, neural networks
\citep{bishop1995neural}, K-Nearest Neighbor \citep{bay1998combining}, Naive Bayes
classifiers \citep{rish2001empirical}, and Support Vector Machine (SVM) \citep{cortes1995support}; and $(iii)$ hierarchical classification \citep{kumar2002hierarchical}. 

The research on robust classification has mainly focused on binary classifiers. For example, to robustify logistic regression, \cite{feng2014robust} proposed to optimize a robustified linear correlation between the response $y$ and a linear function of $\bx$; \cite{ding2013t} introduced T-logistic regression which replaces the exponential distribution in LG by the t-exponential distribution family; \cite{tibshirani2013robust} introduced a shift parameter for each data point to account for the label error; and \cite{bootkrajang2012label} modeled the label error through flipping probabilities, which can be extended to multiclass LG. 
Another line of research uses a modified loss function that gives less influence to points far from the boundary, e.g., \cite{masnadi2010design} used a tangent loss, \cite{pregibon1982resistant} proposed an M-estimator like loss metric which, however, is not robust to outliers with high leverage covariates. 

None of the aforementioned works, however, explore distributionally robust learning problems with multiple responses, with the exception of \cite{hu2016does}, which considered distributionally robust multiclass classification models under the $\phi$-divergence metric. We fill this gap by developing DRO formulations for both MLR and MCC under the Wasserstein metric. To the best of our knowledge, we are the first to study the robust multi-output learning problem from the standpoint of distributional robustness. Our approach is completely optimization-based, without the need to explicitly model the complicated relationship between different responses, leading to compact and computationally solvable models. It is interesting that a purely optimization-based method that is completely agnostic to the covariate and response correlation structure can be used as a better-performing alternative to statistical approaches that explicitly model this correlation structure. 

The rest of this Section is organized as follows. In Section~\ref{sec:5-2}, we
develop the DRO-MLR and DRO-MLG formulations and introduce the matrix norm that is used to define the regularizer. Section~\ref{sec:5-3}
establishes the out-of-sample performance guarantees for the solutions to DRO-MLR and DRO-MLG. The numerical experimental results are presented in
Section~\ref{sec:5-4}. We conclude in Section~\ref{sec:5-5}.

\section{Distributionally Robust Multi-Output Learning Models} \label{sec:5-2}
In this section we introduce the Wasserstein DRO formulations for MLR and MLG, and offer a dual norm interpretation for the regularization terms.

\subsection{Distributionally Robust Multi-Output Linear Regression} \label{sec:dro-mlr}
We assume the following model for the MLR problem: 
$$\by = \bB'\bx + \boldsymbol{\eta},$$
where $\by = (y_1, \ldots, y_K)$ is the vector of $K$ responses, potentially
correlated with each other; $\bx = (x_1, \ldots, x_p)$ is the vector of $p$
predictors; $\bB = (B_{ij})_{i \in \lb p \rb} ^{j \in \lb K \rb}$ is the $p \times K$
matrix of coefficients, the $j$-th column of which describes the dependency of $y_j$
on the predictors; and $\boldsymbol{\eta}$ is the random error. Suppose we observe $N$ realizations of the data, denoted by $(\bx_i, \by_i), i \in \lb N \rb$, where $\bx_i = (x_{i1}, \ldots, x_{ip}), \by_i = (y_{i1}, \ldots, y_{iK})$.  
The Wasserstein DRO formulation for MLR minimizes the following worst-case expected loss: 
\begin{equation} \label{dro-mlr} 
\inf\limits_{\bB}\sup\limits_{\mbb{Q} \in \Omega}
\mbb{E}^{\mbb{Q}} [h_{\bB}(\bx, \by)],
\end{equation}
where $h_{\bB}(\bx, \by) \triangleq l(\by-\bB'\bx)$, with $l: \mbb{R}^K \rightarrow \mbb{R}$ an $L$-Lipschitz continuous function on the metric spaces $(\scrD, \|\cdot\|_r)$ and $(\scrC, |\cdot|)$, where $\scrD, \scrC$ are the domain and codomain of $l(\cdot)$, respectively; and $\mbb{Q}$ is the probability distribution of the data $(\bx, \by)$, belonging to a set $\Omega$ defined as
\begin{equation*}
\Omega = \Omega_{\epsilon}^{s,1}(\hat{\mathbb{P}}_N) \triangleq \{\mbb{Q}\in \scrP(\scrZ): W_{s,1}(\mathbb{Q},\ \hat{\mathbb{P}}_N) \le \epsilon\},
\end{equation*}
where the order-1 Wasserstein distance $W_{s,1}(\mbb{Q},\ \hat{\mbb{P}}_N)$ is 
induced by the metric $s(\bz_1, \bz_2) \triangleq \| \bz_1 - \bz_2\|_r$. Notice that we use the same norm to define the Wasserstein metric and the metric space on the domain $\scrD$ of $l(\cdot)$.

Write the loss function as $h_{\tilde{\bB}}(\bz) \triangleq l(\tilde{\bB}\bz)$, where $\bz = (\bx, \by)$, and $\tilde{\bB} = [-\bB', \bI_{K}]$. From Theorem~\ref{Lip-wass}, we know that 
to derive a tractable reformulation for (\ref{dro-mlr}), the key is to bound the following 
{\em growth rate} of the loss:
\begin{equation*}
\text{GR}\big(h_{\tilde{\bB}}\big) \triangleq \limsup_{\|\bz_1 - \bz_2\|_r \rightarrow \infty}\frac{\big|h_{\tilde{\bB}}(\bz_1) - h_{\tilde{\bB}}(\bz_2)\big|}{\|\bz_1 - \bz_2\|_r}.
\end{equation*}
Let us first consider the numerator. By the Lipschitz continuity of $l(\cdot)$, we have:
\begin{equation*}
|h_{\tilde{\bB}}(\bz_1) - h_{\tilde{\bB}}(\bz_2)|  = |l(\tilde{\bB}\bz_1) - l(\tilde{\bB}\bz_2)| 
\le L \|\tilde{\bB}(\bz_1 - \bz_2)\|_r.
\end{equation*}
The key is to bound $\|\tilde{\bB}(\bz_1 - \bz_2)\|_r$ in terms of $\|\bz_1 - \bz_2\|_r$. The following lemmata provide three types of bounds whose tightness will be analyzed in the sequel. 

\begin{lem} \label{lr-bound-1}
	For any matrix $\bA \in \mbb{R}^{m \times n}$ and any vector $\bx \in \mbb{R}^n$, we have:
	\begin{equation*} 
	\|\bA \bx\|_r \le \|\bx\|_r \Big(\sum_{i=1}^m\|\ba_i\|_1^r \Big)^{1/r},
	\end{equation*}
	for any $r \ge 1$, where $\ba_i, i \in \lb m \rb$, are the rows of $\bA$.
\end{lem}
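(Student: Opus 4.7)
The plan is to reduce the bound to a row-by-row estimate and then combine. First I would write out the left-hand side as
\[
\|\bA\bx\|_r^r = \sum_{i=1}^m |\ba_i'\bx|^r,
\]
so the whole task is to control each scalar $|\ba_i'\bx|$ by $\|\ba_i\|_1$ times something involving $\|\bx\|_r$.

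The key step is to apply H\"{o}lder's inequality (Theorem \ref{holder}) to the pair $(\ba_i,\bx)$ with the specific choice of conjugate exponents $(1,\infty)$, rather than $(s,r)$ with $1/s+1/r=1$. This gives $|\ba_i'\bx|\le \|\ba_i\|_1 \|\bx\|_\infty$. The reason for this seemingly suboptimal choice is that we want a factor of $\|\ba_i\|_1$ on the right (to match the target bound), and we can afford to lose a little on the $\bx$ side because of the monotonicity of $\ell_p$ norms: for any $r\ge 1$, $\|\bx\|_\infty \le \|\bx\|_r$. Chaining these two inequalities yields $|\ba_i'\bx|\le \|\ba_i\|_1 \|\bx\|_r$.

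Raising to the $r$-th power, summing over $i$, and factoring out $\|\bx\|_r^r$ produces $\|\bA\bx\|_r^r \le \|\bx\|_r^r \sum_{i=1}^m \|\ba_i\|_1^r$, and taking the $r$-th root gives the claim. There is no real obstacle here; the only nonroutine element is recognizing that mixing the $(1,\infty)$ H\"{o}lder pair with the nesting $\|\bx\|_\infty\le\|\bx\|_r$ is the right way to obtain the asymmetric form of the bound (an $\ell_1$ norm on the rows, an $\ell_r$ norm on $\bx$). Later lemmata presumably give tighter or differently asymmetric bounds, and comparing their sharpness in different regimes of $r$ and row-sparsity of $\bA$ will be the genuine work; this first bound is essentially a warm-up.
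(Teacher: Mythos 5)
Your proof is correct and is essentially the paper's own argument: the paper bounds each $|\ba_i'\bx|$ by $\|\ba_i\|_1\|\bx\|_r$ via the triangle inequality together with the elementary fact $|x_j|\le\|\bx\|_r$, which is exactly your $(1,\infty)$-H\"{o}lder step combined with the monotonicity $\|\bx\|_\infty\le\|\bx\|_r$, followed by the same raise-to-the-$r$, sum, and take roots.
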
 

\begin{proof}
	Suppose $\bA = (a_{ij})_{i \in \lb m \rb}^{j \in \lb n \rb}$. Then,
	\begin{equation*}
	\begin{aligned}
	\|\bA \bx\|_r^r & = \left\|
	\begin{bmatrix}
	a_{11}x_1 + \ldots + a_{1n}x_n \\
	\vdots \\
	a_{m1}x_1 + \ldots + a_{mn}x_n 
	\end{bmatrix}
	\right\|_r^r \\
	& = \sum_{i=1}^m |a_{i1}x_1 + \ldots + a_{in}x_n|^r \\
	& \le \|\bx\|_r^r\Bigl(\sum_{i=1}^m \big(|a_{i1}| + \ldots + |a_{in}|\big)^r \Bigr). \\
	\end{aligned}
	\end{equation*}
	where in the second step we use the fact that $|x_i| \le \|\bx\|_r$.
\end{proof}

\begin{lem} \label{lr-bound-2}
	For any matrix $\bA \in \mbb{R}^{m \times n}$ and any vector $\bx \in \mbb{R}^n$, we have:
	\begin{equation*} 
	\|\bA \bx\|_r \le \|\bx\|_r \Bigl(\sum_{i=1}^m \|\ba_i\|_s^r\Bigr)^{1/r},
	\end{equation*}
	for any $r \ge 1$, where $\ba_i, i \in \lb m \rb$, are the rows of $\bA$, and $1/r + 1/s = 1$.
\end{lem}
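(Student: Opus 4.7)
The plan is to mimic the derivation of Lemma \ref{lr-bound-1}, but replace the crude componentwise bound $|x_j| \le \|\bx\|_r$ with a tight application of H\"{o}lder's inequality (Theorem \ref{holder}) at each row of $\bA$. This is precisely where the conjugate exponent $s$ (with $1/r + 1/s = 1$) enters, since H\"{o}lder pairs the $\ell_s$ norm of the row $\ba_i$ with the $\ell_r$ norm of $\bx$.

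Concretely, I would first write out the $\ell_r$ norm componentwise,
\[
\|\bA \bx\|_r^r \;=\; \sum_{i=1}^m |\ba_i' \bx|^r,
\]
where $\ba_i$ denotes the $i$-th row of $\bA$ viewed as a column vector. Next, for each fixed $i \in \lb m \rb$, I would apply H\"{o}lder's inequality (Theorem \ref{holder}) to the inner product $\ba_i' \bx = \sum_{j=1}^n a_{ij} x_j$, yielding
\[
|\ba_i' \bx| \;\le\; \|\ba_i\|_s \, \|\bx\|_r.
\]
Raising both sides to the $r$-th power and summing over $i \in \lb m \rb$ gives
\[
\|\bA \bx\|_r^r \;\le\; \|\bx\|_r^r \sum_{i=1}^m \|\ba_i\|_s^r,
\]
and taking $r$-th roots delivers the desired bound.

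There is no real obstacle: the argument is a one-shot application of H\"{o}lder's inequality and is strictly tighter than Lemma \ref{lr-bound-1} (since $\|\ba_i\|_s \le \|\ba_i\|_1$ for $s \ge 1$). The only minor care point is handling the boundary cases $r = 1$ (so $s = \infty$) and $r = \infty$ (so $s = 1$) under the usual convention for the conjugate exponent, but H\"{o}lder's inequality covers these uniformly, so the same line of argument applies verbatim.
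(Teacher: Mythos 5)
Your proof is correct and is essentially the paper's own argument: expand $\|\bA\bx\|_r^r$ row by row, apply H\"{o}lder's inequality $|\ba_i'\bx| \le \|\ba_i\|_s\|\bx\|_r$ to each term, sum, and take the $r$-th root. No differences worth noting.
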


\begin{proof}
	\begin{equation*}
	\begin{aligned}
	\|\bA \bx\|_r^r & = \sum_{i=1}^m |\ba_i'\bx|^r \\
	& \le \sum_{i=1}^m  \|\bx\|_r^r \|\ba_i\|_s^r \\
	& = \|\bx\|_r^r \sum_{i=1}^m \|\ba_i\|_s^r,
	\end{aligned}
	\end{equation*}
	where $r, s \ge 1$, $1/r+1/s=1$, and the second step uses H\"{o}lder's inequality.
\end{proof}

\begin{lem} \label{lr-bound-3}
	Given an $m \times n$ matrix $\bA = (a_{ij})_{i \in \lb m \rb}^{j \in \lb n \rb}$ and a vector $\bx \in \mbb{R}^n$, the following holds:
	\begin{equation*} 
	\|\bA \bx\|_r \le \|\bx\|_r \|\bv\|_s,
	\end{equation*}
	where $r, s \ge 1$, and $1/r+1/s=1$; $\bv = (v_1, \ldots, v_n)$, with $v_j = \sum_{i=1}^m |a_{ij}|$.	
\end{lem}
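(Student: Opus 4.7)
The plan is to combine the triangle inequality for $\|\cdot\|_r$ with H\"older's inequality and the elementary fact $\|\ba\|_r \le \|\ba\|_1$ for $r \ge 1$. First, I would write $\bA\bx = \sum_{j=1}^n x_j \bA_{\cdot j}$, where $\bA_{\cdot j}$ denotes the $j$-th column of $\bA$, so that componentwise $|\bA\bx| \le \sum_{j=1}^n |x_j| |\bA_{\cdot j}|$.

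Applying the triangle inequality for the $\ell_r$ norm (Minkowski) to this nonnegative sum gives
\begin{equation*}
\|\bA\bx\|_r \le \Bigl\|\sum_{j=1}^n |x_j|\, |\bA_{\cdot j}|\Bigr\|_r \le \sum_{j=1}^n |x_j|\, \|\bA_{\cdot j}\|_r .
\end{equation*}
Next I would apply H\"older's inequality (Theorem~\ref{holder}) with exponents $r, s$ satisfying $1/r + 1/s = 1$ to the scalar sum on the right, yielding
\begin{equation*}
\sum_{j=1}^n |x_j|\, \|\bA_{\cdot j}\|_r \le \|\bx\|_r \Bigl(\sum_{j=1}^n \|\bA_{\cdot j}\|_r^{\,s}\Bigr)^{1/s}.
\end{equation*}

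Finally, since $r \ge 1$, for any vector $\ba \in \mbb{R}^m$ we have $\|\ba\|_r \le \|\ba\|_1$; specialized to the columns of $\bA$ this gives $\|\bA_{\cdot j}\|_r \le \sum_{i=1}^m |a_{ij}| = v_j$. Plugging this into the previous display produces the desired bound
\begin{equation*}
\|\bA\bx\|_r \le \|\bx\|_r \Bigl(\sum_{j=1}^n v_j^{\,s}\Bigr)^{1/s} = \|\bx\|_r \,\|\bv\|_s .
\end{equation*}
There is no real obstacle here — the statement is essentially a routine consequence of Minkowski plus H\"older, with the only subtlety being the choice to majorize $\|\bA_{\cdot j}\|_r$ by the column $\ell_1$-norm $v_j$, which loosens the bound slightly but gives the cleaner form used in the application to the DRO-MLR reformulation.
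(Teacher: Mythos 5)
Your proof is correct, but it is arranged differently from the paper's. The paper never decomposes $\bA\bx$ by columns: it first collapses the $\ell_r$ norm of the image vector to its $\ell_1$ norm, i.e.\ $\|\bA\bx\|_r \le \|\bA\bx\|_1$, then applies the triangle inequality entrywise to get $\|\bA\bx\|_1 \le |\bx|'\bv$, and only then invokes H\"older once to reach $\|\bx\|_r\|\bv\|_s$. You instead write $\bA\bx=\sum_j x_j \bA_j$ (columns $\bA_j$), use Minkowski to get $\|\bA\bx\|_r \le \sum_j |x_j|\,\|\bA_j\|_r$, apply H\"older, and only at the end relax each $\|\bA_j\|_r$ to $\|\bA_j\|_1 = v_j$. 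Both arguments use the same three ingredients (the comparison $\|\cdot\|_r\le\|\cdot\|_1$, the triangle inequality, and H\"older), but you apply the $\ell_r$-to-$\ell_1$ relaxation per column at the last step rather than globally to $\bA\bx$ at the first step. A small payoff of your ordering is the sharper intermediate inequality $\|\bA\bx\|_r \le \|\bx\|_r\bigl(\sum_j \|\bA_j\|_r^s\bigr)^{1/s}$, which in the notation introduced later in Section~\ref{matrixnorm} reads $\|\bA\bx\|_r\le\|\bx\|_r\|\bA\|_{r,s}$ and makes explicit that the stated bound is a deliberate loosening; the paper's ordering, on the other hand, produces directly the form $|\bx|'\bv$ that it reuses verbatim when proving the sub-multiplicativity property~(\ref{cmn}). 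Either way the lemma holds, including the boundary case $r=1$, $s=\infty$.
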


\begin{proof}
	Suppose $\bA = (a_{ij})_{i \in \lb m \rb}^{j \in \lb n \rb}$. Then,
	\begin{equation*}
	\begin{aligned}
	\|\bA \bx\|_r & = \left\|
	\begin{bmatrix}
	a_{11}x_1 + \ldots + a_{1n}x_n \\
	\vdots \\
	a_{m1}x_1 + \ldots + a_{mn}x_n 
	\end{bmatrix}
	\right\|_r \\
	& = \Big(\sum_{i=1}^m |a_{i1}x_1 + \ldots + a_{in}x_n|^r \Big)^{1/r}\\
	& \le \bigg(\Big(\sum_{i=1}^m |a_{i1}x_1 + \ldots + a_{in}x_n|\Big)^r\bigg)^{1/r} \\
	& = \sum_{i=1}^m |a_{i1}x_1 + \ldots + a_{in}x_n| \\
	& \le |x_1|\sum_{i=1}^m |a_{i1}| + \ldots + |x_n|\sum_{i=1}^m |a_{in}| \\
	& = |\bx|'\bv \\
	& \le \|\bx\|_r \|\bv\|_s,
	\end{aligned}
	\end{equation*}
	where $r, s\ge 1$, $1/r+1/s=1$, $|\bx| = (|x_1|, \ldots, |x_n|)$, and $\bv = (v_1, \ldots, v_n)$, with $v_j = \sum_{i=1}^m |a_{ij}|$. 
	The last step uses H\"{o}lder's inequality and the fact that $\|\bx\| = \||\bx|\|$. 
\end{proof}

Note that for any $s \ge 1$, we know $\|\ba_i\|_s \le \|\ba_i\|_1$, implying that $\sum_{i=1}^m \|\ba_i\|_s^r \le \sum_{i=1}^m\|\ba_i\|_1^r$. Therefore, Lemma~\ref{lr-bound-2} provides a tighter bound than Lemma~\ref{lr-bound-1}.
The vector $\bv$ in the statement of Lemma~\ref{lr-bound-3} can be written as $\bv = \sum_{i=1}^m |\ba_i|$, where the $|\cdot|$ is applied element-wise to $\ba_i$. We thus have,
$$\|\bv\|_s = \Big\|\sum_{i=1}^m |\ba_i|\Big\|_s \le \sum_{i=1}^m \|\ba_i\|_s.$$
It is clear that when $r=1$, Lemma~\ref{lr-bound-3} gives a tighter bound than Lemma~\ref{lr-bound-2}. However, when $r \ge s$, we claim that Lemma~\ref{lr-bound-2} yields a better bound. To see this, notice that
\begin{equation*}
\begin{aligned}
\|\bv\|_s^r & = \left(\sum_{j=1}^n v_j^s\right)^{r/s} \\
& =  \Big(\sum_{j=1}^n \big(|a_{1j}|+\ldots+|a_{mj}|\big)^s \Big)^{r/s} \\
& \ge \Big(\sum_{j=1}^n \big(|a_{1j}|^s+ \ldots+|a_{mj}|^s \big)\Big)^{r/s} \\
& = \Big(\sum_{i=1}^m \big(|a_{i1}|^s+ \ldots+|a_{in}|^s \big) \Big)^{r/s} \\
& \ge \sum_{i=1}^m \Big(|a_{i1}|^s+ \ldots+|a_{in}|^s\Big)^{r/s} \\
& = \sum_{i=1}^m \|\ba_i\|_s^r,
\end{aligned}	
\end{equation*}
where in the derivation we have used Lemma~\ref{sum_pow}.

\begin{lem} \label{sum_pow}
	For any $k \ge 1$ and $c_i \ge 0$, the following holds:
	$$\left(\sum_{i=1}^m c_i\right)^k \ge \sum_{i=1}^m c_i^k.$$
\end{lem}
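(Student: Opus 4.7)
The plan is to prove this by a normalization argument that reduces the inequality to the elementary fact that $t^k \le t$ for $t \in [0,1]$ and $k \ge 1$. First I would dispose of the trivial case where every $c_i$ equals zero, in which both sides are $0$.

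Assuming at least one $c_i$ is positive, set $S = \sum_{i=1}^m c_i > 0$ and define $d_i = c_i / S$, so that $d_i \in [0,1]$ and $\sum_{i=1}^m d_i = 1$. Dividing both sides of the target inequality by $S^k$, the claim becomes
\[
1 = \Big(\sum_{i=1}^m d_i\Big)^k \; \ge \; \sum_{i=1}^m d_i^k.
\]
Since $0 \le d_i \le 1$ and $k \ge 1$, we have $d_i^k \le d_i$ pointwise, so $\sum_{i=1}^m d_i^k \le \sum_{i=1}^m d_i = 1$, which is exactly what we need. Multiplying through by $S^k$ recovers the original inequality.

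There is no real obstacle here; the only thing to be careful about is the normalization step, which requires $S>0$, handled by treating the all-zero case separately. As an alternative, one could give an induction-on-$m$ argument, reducing to the two-variable inequality $(a+b)^k \ge a^k + b^k$ for $a,b \ge 0$ and $k \ge 1$ (itself provable by fixing $a$ and differentiating in $b$, or by the same normalization trick), but the direct normalization route above is shorter and avoids any calculus.
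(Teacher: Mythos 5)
Your proposal is correct and follows essentially the same route as the paper's own proof: handle the all-zero case separately, normalize by the sum so each term lies in $[0,1]$, apply $t^k \le t$ for $t\in[0,1]$ and $k\ge 1$, and scale back by the $k$-th power of the sum. No gaps.
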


\begin{proof}
	If all $c_i=0$, the result obviously holds. Without loss of generality, we assume not all $c_i$ are equal to zero. Let $\lambda = \sum_{i=1}^m c_i$; then $\lambda > 0$. Set $b_i = c_i/\lambda$; then $b_i \in [0, 1]$, and $b_i^k \le b_i$. Together with the fact that $\sum_i b_i = 1$, we have:
	\begin{equation*}
	\begin{aligned}
	\sum_{i=1}^m c_i^k & = \lambda^k \sum_{i=1}^m b_i^k \\
	& \le \lambda^k \sum_{i=1}^m b_i \\
	& = \lambda^k \\
	& = \Bigl(\sum_{i=1}^m c_i\Bigr)^k,
	\end{aligned}
	\end{equation*}
	for any $k \ge 1, c_i \ge 0$.
\end{proof}

We now proceed to obtain a tractable relaxation to formulation (\ref{dro-mlr}). Using Lemma~\ref{lr-bound-2} and Theorem~\ref{Lip-wass}, we have:
\begin{equation*}  
\begin{aligned}
& \qquad \Bigl|\mbb{E}^{\mbb{Q}}\big[ h_{\tilde{\bB}}(\bz)\big] -  \mbb{E}^{\hat{\mbb{P}}_N}\big[ h_{\tilde{\bB}}(\bz)\big]\Bigr| \\
& \le \int\nolimits_{\scrZ \times \scrZ} \frac{\big|h_{\tilde{\bB}}(\bz_1)-h_{\tilde{\bB}}(\bz_2)\big|}{\|\bz_1-\bz_2\|_r} \|\bz_1-\bz_2\|_r \mathrm{d}\pi_0(\bz_1, \bz_2) \\
& \le  \int\nolimits_{\scrZ \times \scrZ} \frac{L \|\tilde{\bB}(\bz_1 - \bz_2)\|_r}{\|\bz_1-\bz_2\|_r} \|\bz_1-\bz_2\|_r \mathrm{d}\pi_0(\bz_1, \bz_2)\\
& \le L \Big(\sum_{i=1}^K \|\bb_i\|_s^r\Big)^{1/r} \int\nolimits_{\scrZ \times \scrZ} \|\bz_1-\bz_2\|_r \mathrm{d}\pi_0(\bz_1, \bz_2) \\ 
& =  L \Big(\sum_{i=1}^K \|\bb_i\|_s^r\Big)^{1/r} W_{s,1} (\mbb{Q}, \ \hat{\mbb{P}}_N) \\
& \le \epsilon L \Big(\sum_{i=1}^K \|\bb_i\|_s^r\Big)^{1/r}, \qquad \forall \mbb{Q} \in \Omega,
\end{aligned}
\end{equation*}
where $\bb_i = (-B_{1i}, \ldots, -B_{pi}, \mathbf{e}_i)$ is the $i$-th row of $\tilde{\bB}$, with $\mathbf{e}_i$ the $i$-th unit vector in $\mbb{R}^K$.
The above derivation implies that when the Wasserstein metric is induced by $\|\cdot\|_r$,
$$\sup\limits_{\mbb{Q} \in \Omega}
\mbb{E}^{\mbb{Q}} [h_{\tilde{\bB}}(\bz)] \le \mbb{E}^{\hat{\mbb{P}}_N}\big[ h_{\tilde{\bB}}(\bz)\big] + \epsilon L \Big(\sum_{i=1}^K \|\bb_i\|_s^r\Big)^{1/r},$$
where $1/r+1/s=1$. This directly yields the following relaxation to (\ref{dro-mlr}):
\begin{equation} \label{relax1}
\inf_{\bB} \frac{1}{N}\sum_{i=1}^N h_{\bB}(\bx_i, \by_i) + \epsilon L \Big(\sum_{i=1}^K \|\bb_i\|_s^r\Big)^{1/r},
\end{equation}
which we call the MLR-SR relaxation. The regularization term in (\ref{relax1}) penalizes the aggregate of the dual norm of the regression coefficients corresponding to each of the $K$ responses. Notice that when $r \neq 1$, (\ref{relax1}) cannot be decomposed into $K$ independent terms. When $s=r=2$, the regularizer is just the Frobenius norm of $\tilde{\bB}$. Using a similar derivation, Lemma~\ref{lr-bound-3} yields the following relaxation to (\ref{dro-mlr}):
\begin{equation} \label{relax2}
\inf_{\bB} \frac{1}{N}\sum_{i=1}^N h_{\bB}(\bx_i, \by_i) + \epsilon L \|\bv\|_s,
\end{equation}
where $\bv \triangleq (v_1, \ldots, v_p, 1, \ldots, 1)$, with $v_i = \sum_{j=1}^K |B_{ij}|$, i.e., $v_i$ is a condensed representation of the coefficients for predictor $i$ through summing over the $K$ coordinates. We call (\ref{relax2}) the MLR-1S relaxation (the naming convention will be more clear after introducing the $L_{r,s}$ matrix norm in Section~\ref{matrixnorm}). When $s \neq 1$, it cannot be decomposed into $K$ subproblems due to the entangling of coefficients in the regularization term. 

Note that when $K=1$, with an 1-Lipschitz continuous loss function, the two regularizers in MLR-SR and MLR-1S reduce to $\epsilon \|(-\bbeta, 1)\|_s$, which coincides with the Wasserstein DRO formulation derived in Section~\ref{chapt:dro} with an absolute error loss.
In both relaxations for MLR, the Wasserstein ball radius $\epsilon$ and the Lipschitz constant $L$ determine the strength of the penalty term. Recall that we assume the loss function is Lipschitz continuous on the same norm space with the one used by the Wasserstein metric. This assumption can be relaxed by allowing a different norm space for the Lipschitz continuous loss function, and the derivation technique can be easily adapted to obtain relaxations to (\ref{dro-mlr}). On the other hand, however, the norm space used by the Wasserstein metric can provide implications on what loss function to choose. For example, if we restrict the class of loss functions $l(\cdot)$ to the norms, our assumption suggests that $l(\bz) = \|\bz\|_r$, which is a reasonable choice since it reflects the distance metric on the data space.

\subsection{A New Perspective on the Formulation} \label{matrixnorm}
In this subsection we will present a matrix norm interpretation for the two relaxations (\ref{relax1}) and (\ref{relax2}). Different from the commonly used matrix norm definitions in the literature, e.g., the vector norm-induced matrix norm $\|\bA\| \triangleq \max_{\|\bx\| \le 1} \|\bA \bx\|$, the entrywise norm that treats the matrix as a vector, and the Schatten–von-Neumann norm that defines the norm on the vector of singular values \citep{tomioka2013convex}, we adopt the $L_{r,s}$ norm, which summarizes each column by its $\ell_r$ norm, and then computes the $\ell_s$ norm of the aggregate vector. The formal definition is described as follows.

\begin{defi}[$L_{r,s}$ Matrix Norm]
	For any $m \times n$ matrix $\bA = (a_{ij})_{i \in \lb m \rb}^{j \in \lb n \rb}$, define its $L_{r,s}$ norm as:
	\begin{equation*}
	\|\bA\|_{r,s} \triangleq \Bigg(\sum_{j=1}^n\bigg(\sum_{i=1}^m |a_{ij}|^r\bigg)^{s/r}\Bigg)^{1/s},
	\end{equation*}
	where $r, s \ge 1$. 
\end{defi}
Note that $\|\bA\|_{r,s}$ can be viewed as the $\ell_s$ norm of a newly defined vector $\bv = (v_1, \ldots, v_n)$, where $v_j = \|\bA_j\|_r$, with $\bA_j$ the $j$-th column of $\bA$. When $r=s=2$, the $L_{r,s}$ norm is the Frobenius norm. Moreover, $\|\bA\|_{r,s}$ is a convex function in $\bA$, which can be shown as follows.
\begin{proof}
	For two matrices $\bA = [\bA_1, \cdots, \bA_n]$, $\bB = [\bB_1, \cdots, \bB_n]$, where $\bA_i, \bB_i$ are the columns of $\bA$ and $\bB$, respectively, consider their convex combination $\lambda \bA + (1-\lambda)\bB$, where $\lambda \in [0, 1]$. Its $L_{r,s}$ norm can be expressed as:
	\begin{equation*}
	\begin{aligned}
	& \quad \ \|\lambda \bA + (1-\lambda)\bB\|_{r,s} \\
	& = \Big\|\Big(\|\lambda\bA_1+(1-\lambda)\bB_1\|_r, \ldots, \|\lambda\bA_n+(1-\lambda)\bB_n\|_r\Big)\Big\|_s \\
	& \le \Big\|\Big(\lambda\|\bA_1\|_r+(1-\lambda)\|\bB_1\|_r, \ldots, \lambda\|\bA_n\|_r+(1-\lambda)\|\bB_n\|_r\Big)\Big\|_s \\
	& = \Big\|\lambda\big(\|\bA_1\|_r, \ldots, \|\bA_n\|_r\big)+(1-\lambda)\big(\|\bB_1\|_r, \ldots, \|\bB_n\|_r\big) \Big\|_s \\
	& \le \lambda \Big \|\big(\|\bA_1\|_r, \ldots, \|\bA_n\|_r\big)\Big\|_s + (1-\lambda) \Big \|\big(\|\bB_1\|_r, \ldots, \|\bB_n\|_r\big)\Big\|_s \\
	& = \lambda \|\bA\|_{r,s} + (1-\lambda)\|\bB\|_{r,s}.
	\end{aligned}
	\end{equation*}
	Therefore, the $L_{r,s}$ norm is convex.
\end{proof}

The $L_{r,s}$ matrix norm depends on the structure of the matrix, and transposing a matrix changes its norm. For example, given $\bA \in \mbb{R}^{n \times 1}$, $\|\bA\|_{r,s} = \|\ba\|_r$, $\|\bA'\|_{r,s} = \|\ba\|_s$, where $\ba$ represents the vectorization of $\bA$. To show the validity of the $L_{r,s}$ norm, we need to verify the following properties:
\begin{enumerate}
	\item $\|\bA\|_{r,s} \ge 0$.
	\item $\|\bA\|_{r,s} = 0$ if and only if $\bA = 0$. 
	\item $\|\alpha \bA\|_{r,s} = |\alpha|\|\bA\|_{r,s}$.
	\item $\|\bA + \bB\|_{r,s} \le \|\bA\|_{r,s} + \|\bB\|_{r,s}$. 
\end{enumerate}

The first three properties are straightforward. To show the sub-additivity property (triangle inequality), assume $\bA = [\bA_1, \cdots, \bA_n]$ and $\bB = [\bB_1, \cdots, \bB_n]$, where $\bA_j, \bB_j, j \in \lb n \rb$, are the columns of $\bA$ and $\bB$, respectively. Define two vectors $\bv\triangleq(\|\bA_1\|_r, \ldots, \|\bA_n\|_r)$, and $\bt \triangleq (\|\bB_1\|_r, \ldots, \|\bB_n\|_r)$, we have:
\begin{equation*}
\begin{aligned}
\|\bA\|_{r,s} + \|\bB\|_{r,s} & = \|\bv\|_s + \|\bt\|_s \\
& \ge \|\bv+\bt\|_s \\	
& = \Big( \sum_{i=1}^n \big(\|\bA_i\|_r+\|\bB_i\|_r\big)^s \Big)^{1/s} \\
& \ge \Big( \sum_{i=1}^n \|\bA_i+\bB_i\|_r^s \Big)^{1/s} \\
& = \|\bA + \bB\|_{r,s}.
\end{aligned}
\end{equation*}
The $L_{r,s}$ norm also satisfies the following {\em sub-multiplicative} property:
\begin{equation} \label{cmn}
\|\bA \bB\|_{r,s}  \le \|\bA\|_{1, u} \|\bB\|_{t, s},
\end{equation}
for $\bA \in \mbb{R}^{m \times n}, \bB \in \mbb{R}^{n \times K}$, and any $t, u \ge 1$ satisfying $1/t+1/u=1$.
\begin{proof}
	Assume $\bA \in \mbb{R}^{m \times n}, \ \bB \in \mbb{R}^{n \times K}$, and  
	$\bB = [\bB_1, \cdots, \bB_K]$, where $\bB_j, j \in \lb K \rb$, are the columns of $\bB$. Then, 
	$\bA \bB = [\bA \bB_1, \cdots, \bA \bB_K]$, and $\|\bA \bB\|_{r,s} = \|\bw\|_s$, where $\bw = (w_1, \ldots, w_K)$ with $w_j = \|\bA \bB_j\|_r$.
	From the proof of Lemma~\ref{lr-bound-3}, we immediately have:
	$$w_j = \|\bA \bB_j\|_r \le \|\bB_j\|_t \|\bA\|_{1, u},$$
	where $1/t+1/u=1$. We thus have,
	\begin{equation*}
	\begin{aligned}
	\|\bA \bB\|_{r,s} & = \Big(\sum_{j=1}^K w_j^s\Big)^{1/s} \\
	& \le \Big(\sum_{j=1}^K \|\bB_j\|_t^s \|\bA\|_{1, u}^s\Big)^{1/s} \\
	& = \|\bA\|_{1, u} \Big(\sum_{j=1}^K \|\bB_j\|_t^s \Big)^{1/s} \\
	& = \|\bA\|_{1, u} \|\bB\|_{t, s},
	\end{aligned}
	\end{equation*}
	for any $t, u \ge 1$ satisfying $1/t+1/u=1$.
\end{proof}

Next we will reformulate the two relaxations (\ref{relax1}) and (\ref{relax2}) using the $L_{r,s}$ norm. When the Wasserstein metric is defined by $\|\cdot\|_r$, the MLR-SR relaxation can be written as:
\begin{equation*} 
\inf_{\bB} \frac{1}{N}\sum_{i=1}^N h_{\bB}(\bx_i, \by_i) + \epsilon L \|\tilde{\bB}'\|_{s, r}.
\end{equation*}
Similarly, the MLR-1S relaxation can be written as:
\begin{equation*} 
\inf_{\bB} \frac{1}{N}\sum_{i=1}^N h_{\bB}(\bx_i, \by_i) + \epsilon L \|\tilde{\bB}\|_{1, s},
\end{equation*}
where $r, s \ge 1$ and $1/r+1/s=1$. When the loss function is convex, e.g., $h_{\bB}(\bx, \by) = \|\by-\bB'\bx\|$, it is obvious that both MLR-SR and MLR-1S are convex optimization problems. By using the $L_{r,s}$ matrix norm, we are able to express the two relaxations in a compact way, which reflects the role of the norm space induced by the Wasserstein metric on the regularizer, and demonstrates the impact of the size of the Wasserstein ambiguity set and the Lipschitz continuity of the loss function on the regularization strength.

\subsection{Distributionally Robust Multiclass Logistic Regression} \label{sec:dro-mlg}
In this subsection we apply the Wasserstein DRO framework to the problem of {\em Multiclass Logistic Regression (MLG)}. Suppose there are $K$ classes, and we are given a predictor vector $\bx \in \mbb{R}^p$. Our goal is to predict its class label, denoted by a $K$-dimensional binary label vector $\by \in \{0, 1\}^K$, where $\sum_k y_k = 1$, and $y_k=1$ if and only if $\bx$ belongs to class $k$. The conditional distribution of $\by$ given $\bx$ is modeled as
$$p(\by|\bx) = \prod_{i=1}^K p_i^{y_i},$$
where $p_i = e^{\bw_i'\bx}/\sum_{k=1}^K e^{\bw_k'\bx}$, and $\bw_i, i \in \lb K \rb$, are the coefficient vectors to be estimated that account for the contribution of $\bx$ in predicting the class labels. The log-likelihood can be expressed as:
\begin{equation*}
\begin{aligned}
\log p(\by|\bx) & = \sum_{i=1}^K y_i \log(p_i) \\
& = \sum_{i=1}^K y_i \log \frac{e^{\bw_i'\bx}}{\sum_{k=1}^K e^{\bw_k'\bx}} \\
& = \sum_{i=1}^K y_i \bw_i'\bx - \Bigl(\log \sum_{k=1}^K e^{\bw_k'\bx} \Bigr) \sum_{i=1}^K y_i\\
& = \sum_{i=1}^K y_i \bw_i'\bx - \log \sum_{k=1}^K e^{\bw_k'\bx} \\
& = \by'\bB'\bx - \log \mathbf{1}'e^{\bB'\bx},
\end{aligned}
\end{equation*} 
where $\bB \triangleq [\bw_1, \cdots, \bw_K]$, $\mathbf{1}$ is the vector of ones, and the exponential operator is applied element-wise to the exponent vector. The log-loss is defined to be the negative log-likelihood, i.e., $h_{\bB}(\bx, \by) \triangleq \log \mathbf{1}'e^{\bB'\bx} - \by'\bB'\bx$. The Wasserstein DRO formulation for MLG minimizes the following worst-case expected loss: 
\begin{equation} \label{dro-mlg} 
\inf\limits_{\bB}\sup\limits_{\mbb{Q} \in \Omega}
\mbb{E}^{\mbb{Q}} \Big[\log \mathbf{1}'e^{\bB'\bx} - \by'\bB'\bx \Big],
\end{equation}
where $\Omega$ is defined using the order-1 Wasserstein metric induced by: 
$$s(\bz_1, \bz_2) = \|\bx_1 - \bx_2\|_r + Ms_{\by}(\by_1, \by_2),$$
where $\bz_1 = (\bx_1, \by_1), \ \bz_2 = (\bx_2, \by_2)$, $s_{\by}(\cdot, \cdot)$ could be any metric, and M is a very large positive constant. To make (\ref{dro-mlg}) tractable, we need to derive an upper bound for the growth rate of the loss function, which involves bounding the following difference
\begin{equation} \label{log-loss-diff}
\begin{aligned}
\qquad & \ |h_{\bB}(\bx_1, \by_1) - h_{\bB}(\bx_2, \by_2)| \\
= & \ |\log \mathbf{1}'e^{\bB'\bx_1} - \by_1'\bB'\bx_1 - \log \mathbf{1}'e^{\bB'\bx_2} + \by_2'\bB'\bx_2| \\
\le & \ |\log \mathbf{1}'e^{\bB'\bx_1} - \log \mathbf{1}'e^{\bB'\bx_2}| + |\by_1'\bB'\bx_1 - \by_2'\bB'\bx_2|,
\end{aligned}
\end{equation}
in terms of $s(\bz_1, \bz_2)$. Let us examine the two terms in (\ref{log-loss-diff}) separately. For the first term, define a function $g(\ba) = \log \mathbf{1}'e^{\ba}$, where $\ba \in \mbb{R}^K$. Using the mean value theorem, we know for any $\ba, \bb \in \mbb{R}^K$, there exists some $t \in (0, 1)$ such that
\begin{equation} \label{mvt}
|g(\mathbf{b}) - g(\ba)| \le \bigl\|\nabla g\bigl((1-t)\ba + t\mathbf{b}\bigr)\bigr\|_s \|\mathbf{b} - \ba\|_r 
\le K^{1/s} \|\mathbf{b} - \ba\|_r,
\end{equation}
where $r, s \ge 1$, $1/r+1/s=1$, the first inequality is due to H\"{o}lder's inequality, and the second inequality is due to the fact that $\nabla g(\ba) = e^{\ba}/\mathbf{1}'e^{\ba}$, which implies that each element of $\nabla g(\ba)$ is smaller than 1. Based on (\ref{mvt}) we have:
\begin{equation*} 
|\log \mathbf{1}'e^{\bB'\bx_1} - \log \mathbf{1}'e^{\bB'\bx_2}| \le K^{1/s} \|\bB'(\bx_1-\bx_2)\|_r. 
\end{equation*}
We can use Lemma~\ref{lr-bound-2} or \ref{lr-bound-3} to bound $\|\bB'(\bx_1-\bx_2)\|_r$, which respectively leads to the following two results:
\begin{equation} \label{sr-1}
\begin{aligned}
|\log \mathbf{1}'e^{\bB'\bx_1} - \log \mathbf{1}'e^{\bB'\bx_2}| & \le K^{1/s} \|\bx_1-\bx_2\|_r \Bigl(\sum_{i=1}^K \|\bw_i\|_s^r\Bigr)^{1/r} \\
& = K^{1/s} \|\bx_1-\bx_2\|_r \|\bB\|_{s, r},
\end{aligned}
\end{equation}
and 
\begin{equation} \label{1s-1}
|\log \mathbf{1}'e^{\bB'\bx_1} - \log \mathbf{1}'e^{\bB'\bx_2}| \le K^{1/s} \|\bx_1-\bx_2\|_r \|\bB'\|_{1,s},
\end{equation}
where $r, s \ge 1$ and $1/r+1/s=1$. By noting that $\|\bx_1-\bx_2\|_r \le s(\bz_1, \bz_2)$, we obtain the upper bound for the first term in (\ref{log-loss-diff}) in terms of $s(\bz_1, \bz_2)$. For the second term, we have,
\begin{equation} \label{sr-1s-2}
\begin{aligned}
|\by_1'\bB'\bx_1 - \by_2'\bB'\bx_2| 
& = \Big |\sum_{i=1}^K  \bw_i' (y_{1i}\bx_1 -  y_{2i} \bx_2)\Big|  \\
& \le \sum_{i=1}^K |\bw_i'(y_{1i}\bx_1 - y_{2i}\bx_2)| \\
& \le \sum_{i=1}^K \|\bw_i\|_s \|y_{1i}\bx_1 - y_{2i}\bx_2\|_r \\
& \le s(\bz_1, \bz_2)\sum_{i=1}^K \|\bw_i\|_s \\
& = s(\bz_1, \bz_2) \|\bB\|_{s, 1},
\end{aligned}
\end{equation}
where $\by_1 = (y_{11}, \ldots, y_{1K}), \by_2 = (y_{21}, \ldots, y_{2K})$, $1/s+1/r=1$, the second inequality uses the H\"{o}lder's inequality, and the last inequality can be proved by noting that if $y_{1i} = y_{2i}$, $\|y_{1i}\bx_1 - y_{2i}\bx_2\|_r \le s(\bz_1, \bz_2)$; otherwise $s(\bz_1, \bz_2)$ goes to infinity. Suppose $\pi_0$ is the optimal transportation plan that moves the probability mass from $\mbb{Q}$ to $\hat{\mbb{P}}_N$, combining (\ref{sr-1}) with (\ref{sr-1s-2}), we have:
\begin{equation*}  
\begin{aligned}
& \quad \ \ \Bigl|\mbb{E}^{\mbb{Q}}\big[ h_{\bB}(\bx, \by)\big] -  \mbb{E}^{\hat{\mbb{P}}_N}\big[ h_{\bB}(\bx, \by)\big]\Bigr| \\ 
& \le \int\nolimits_{\scrZ \times \scrZ} \bigl|h_{\bB}(\bx_1, 
\by_1)-h_{\bB}(\bx_2, \by_2)\bigr| \mathrm{d}\pi_0(\bz_1, \bz_2) \\
& = \int\nolimits_{\scrZ \times \scrZ} \frac{\big|h_{\bB}(\bx_1, 
	\by_1)-h_{\bB}(\bx_2, \by_2)\big|}{s(\bz_1, \bz_2)} s(\bz_1, \bz_2) \mathrm{d}\pi_0(\bz_1, \bz_2) \\
& \le \int\nolimits_{\scrZ \times \scrZ} \Big(K^{1/s}  \|\bB\|_{s, r}+ \|\bB\|_{s, 1}\Big) s(\bz_1, \bz_2) \mathrm{d}\pi_0(\bz_1, \bz_2) \\
& \le \epsilon \Big(K^{1/s}  \|\bB\|_{s, r}+ \|\bB\|_{s, 1}\Big),
\end{aligned}
\end{equation*}
which yields the following MLG-SR relaxation to (\ref{dro-mlg}):
\begin{equation*} 
\inf_{\bB} \frac{1}{N} \sum_{i=1}^N \Bigl(\log \mathbf{1}'e^{\bB'\bx_i} - \by_i'\bB'\bx_i \Bigr) + \epsilon \Big(K^{1/s}  \|\bB\|_{s, r}+ \|\bB\|_{s, 1}\Big).
\end{equation*}
Similarly, combining (\ref{1s-1}) with (\ref{sr-1s-2}) produces the following MLG-1S relaxation:
\begin{equation*} 
\inf_{\bB} \frac{1}{N} \sum_{i=1}^N \Bigl(\log \mathbf{1}'e^{\bB'\bx_i} - \by_i'\bB'\bx_i \Bigr) + \epsilon \Big(K^{1/s}  \|\bB'\|_{1,s}+ \|\bB\|_{s, 1}\Big).
\end{equation*}

We note that both MLG-SR and MLG-1S are convex optimization problems. The convexity of the regularizer has been shown in Section~\ref{matrixnorm}. The convexity of the log-loss is shown in the following theorem.
\begin{thm}
	The log-loss $h_{\bB}(\bx, \by) \triangleq \log \mathbf{1}'e^{\bB'\bx} - \by'\bB'\bx$ is convex in $\bB$.
\end{thm}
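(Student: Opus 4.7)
The plan is to decompose the log-loss into two parts and establish convexity of each in $\bB$. Write
\[
h_{\bB}(\bx, \by) = g(\bB'\bx) - \by'\bB'\bx,
\]
where $g: \mathbb{R}^K \to \mathbb{R}$ is the \emph{log-sum-exp} function $g(\ba) \triangleq \log \mathbf{1}'e^{\ba}$. The second term $-\by'\bB'\bx$ is linear in $\bB$ for fixed $(\bx,\by)$ and is therefore trivially convex. For the first term, note that the map $\bB \mapsto \bB'\bx$ is affine (linear) in $\bB$, and composition of a convex function with an affine map preserves convexity. Thus the entire argument reduces to showing that $g$ is convex on $\mathbb{R}^K$.

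To verify convexity of $g$, I would compute its Hessian directly. A short calculation gives
\[
\nabla g(\ba) = \bp(\ba), \qquad \nabla^2 g(\ba) = \diag(\bp(\ba)) - \bp(\ba)\bp(\ba)',
\]
where $\bp(\ba) \in \mathbb{R}^K$ is the softmax vector with entries $p_k(\ba) = e^{a_k}/\sum_{j=1}^K e^{a_j}$. Observe that $\bp(\ba)$ lies in the probability simplex, so for any $\bv \in \mathbb{R}^K$,
\[
\bv' \nabla^2 g(\ba)\, \bv = \sum_{k=1}^K p_k(\ba)\, v_k^2 - \Bigl(\sum_{k=1}^K p_k(\ba)\, v_k\Bigr)^{2} = \mathrm{Var}_{\bp(\ba)}(\bv) \geq 0,
\]
which is nonnegative by Jensen's inequality (or equivalently by Cauchy--Schwarz applied to $\sqrt{p_k}$ and $\sqrt{p_k}\,v_k$). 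Hence $\nabla^2 g(\ba) \succcurlyeq 0$ for every $\ba$, so $g$ is convex.

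Combining the two pieces, $h_{\bB}(\bx, \by)$ is the sum of a convex function of $\bB$ (namely $g(\bB'\bx)$) and a linear function of $\bB$, and is therefore convex in $\bB$, completing the proof. The only nontrivial step is the Hessian PSD calculation, and that obstacle is resolved cleanly by recognizing $\bv' \nabla^2 g(\ba)\bv$ as the variance of $\bv$ under the softmax distribution $\bp(\ba)$; an alternative, more self-contained route, would be a direct application of Hölder's inequality to show $g(\lambda \ba + (1-\lambda)\mathbf{b}) \leq \lambda g(\ba) + (1-\lambda) g(\mathbf{b})$, but the Hessian argument is cleaner and more informative.
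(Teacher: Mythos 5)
Your proof is correct and follows essentially the same route as the paper: split off the linear term $-\by'\bB'\bx$, and reduce to convexity of $\bB \mapsto \log \mathbf{1}'e^{\bB'\bx}$ as the log-sum-exp function composed with the affine map $\bB \mapsto \bB'\bx$. The only difference is that the paper invokes convexity of log-sum-exp as a known corollary and writes out the convex-combination inequality explicitly, whereas you supply a self-contained justification via the Hessian/variance argument, which is a fine (slightly more informative) way to close that ingredient.
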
 
\begin{proof}
	Since the linear function is convex, we only need to show the convexity of $\log \mathbf{1}'e^{\bB'\bx}$. The following result will be used.
	\begin{col} \label{log-of-sum-exp}
		The function $f(\bx) = \log (\sum_{i=1}^n e^{x_i})$ is a convex function of $\bx \in \mbb{R}^n$.
	\end{col}
	By Corollary~\ref{log-of-sum-exp}, we have for any $\lambda \in [0,1]$, and any two matrices $\bB = [\bB_1, \cdots, \bB_K]$ and $\bC = [\mathbf{C}_1, \cdots, \bC_K]$,
	\begin{equation*}
	\begin{aligned}
	\log \mathbf{1}'e^{(\lambda\bB+(1-\lambda) \bC)'\bx} & = \log \Big(\sum_{i=1}^K e^{\lambda \bB_i'\bx + (1-\lambda)\bC_i'\bx} \Big)\\
	& = f(\lambda \bv_1 + (1-\lambda)\bv_2) \\
	& \le \lambda f(\bv_1) + (1-\lambda)f(\bv_2) \\
	& = \lambda \log \Big(\sum_{i=1}^K e^{\bB_i'\bx} \Big) + (1-\lambda) \log \Big(\sum_{i=1}^K e^{\bC_i'\bx} \Big) \\
	& = \lambda \log \mathbf{1}'e^{\bB'\bx} + (1-\lambda) \log \mathbf{1}'e^{\bC'\bx},
	\end{aligned}
	\end{equation*}
	where $\bv_1 = (\bB_1'\bx, \ldots, \bB_K'\bx)$, and $\bv_2 = (\bC_1'\bx,
	\ldots, \bC_K'\bx)$. Therefore the log-loss is convex. 
\end{proof}

When $K=2$, by taking one of the two classes as a reference, we can set one column of $\bB$ to zero, in which case all three regularizers $\|\bB\|_{s, r}$, $\|\bB\|_{s, 1}$ and $\|\bB'\|_{1,s}$ reduce to $\|\bbeta\|_s$, where $\bB \triangleq [\bbeta, \mathbf{0}]$, and the MLG-SR and MLG-1S relaxations coincide with the regularized logistic regression formulation derived in (\ref{convex-lg}).

We also note that the number of classes $K$, along with the Wasserstein set radius $\epsilon$, determines the regularization magnitude in the two MLG relaxations. There are two terms in the regularizer, one accounting for the predictor/feature uncertainty, and the other accounting for the label uncertainty. In the MLG-SR regularizer, we summarize each column of $\bB$ by its dual norm, and aggregate them by the $\ell_r$ and $\ell_1$ norms to reflect the predictor and label uncertainties, respectively.

\section{The Out-of-Sample Performance Guarantees} \label{sec:5-3}
In this section we will show the out-of-sample performance guarantees for the solutions to the MLR and MLG relaxations, i.e., given a new test sample, what is the expected prediction bias/log-loss. The results are established using the Rademacher complexity \citep{Peter02}, following the line of proof presented in Section~\ref{out}. The resulting bounds shed light on the role of the regularizer in inducing a low prediction error.

\subsection{Performance Guarantees for MLR Relaxations}
\label{mlr-perf}
In this subsection we study the out-of-sample predictive performance of the solutions to (\ref{relax1}) and (\ref{relax2}). Suppose the data $(\bx, \by)$ is drawn from the probability measure $\mbb{P}^*$.
We first make the following assumptions that are essential for deriving the bounds.

\begin{ass} \label{a1-mlr} The $\ell_r$-norm of the data $(\bx, \by)$ is
	bounded above a.s. under the probability measure $\mbb{P}^*$, i.e., $\|(\bx, \by)\|_r \le R, \ \text{a.s.}$.
\end{ass}

\begin{ass} \label{a2-mlr} For any feasible solution to MLR-SR, it holds that
	$\|\tilde{\bB}'\|_{s, r} \le \bar{B}_{s,r}$.
\end{ass}

\begin{ass} \label{a3-mlr} For any feasible solution to MLR-1S, it holds that
	$\|\tilde{\bB}\|_{1, s} \le \bar{B}_{1,s}$.
\end{ass}

\begin{ass} \label{a4-mlr} The loss resulting from $(\bx, \by) = (\mathbf{0}, \mathbf{0})$ is 0, i.e.,
	$h_{\tilde{\bB}}(\mathbf{0})=0$.
\end{ass}

Note that Assumption~\ref{a1-mlr} bounds the magnitude of the data in terms of its $\ell_r$-norm, and $R$ can be assumed to be reasonably small with standardized data input. Assumptions~\ref{a2-mlr} and \ref{a3-mlr} impose restrictions on the norm of the coefficient matrix, which are a result of adding appropriate regularizers into the formulation as in (\ref{relax1}) and (\ref{relax2}). Assumption~\ref{a4-mlr} easily holds when the loss function is defined via some norm, i.e., $h_{\bB}(\bx, \by) \triangleq \|\by-\bB'\bx\|$. Under Assumptions~\ref{a1-mlr}, \ref{a2-mlr} and \ref{a4-mlr}, using Lemma~\ref{lr-bound-2} and the Lipschitz continuity of the loss function, we have 
\begin{equation} \label{mlr-bound-1}
h_{\tilde{\bB}}(\bz)  \le L\|\tilde{\bB} \bz\|_r 
\le L \|\bz\|_r \|\tilde{\bB}'\|_{s, r} 
\le LR\bar{B}_{s,r}. 
\end{equation}
Similarly, under Assumptions~\ref{a1-mlr}, \ref{a3-mlr} and \ref{a4-mlr}, Lemma~\ref{lr-bound-3} yields the following:
\begin{equation} \label{mlr-bound-2}
h_{\tilde{\bB}}(\bz)  \le L\|\tilde{\bB} \bz\|_r 
\le L \|\bz\|_r \|\tilde{\bB}\|_{1, s} 
\le LR\bar{B}_{1,s}.
\end{equation}

With the above results, the idea is to bound the out-of-sample prediction error using the empirical {\em Rademacher complexity} $\scrR_N(\cdot)$ of
the class of loss functions: 
$\scrH=\{\bz \ra h_{\tilde{\bB}}(\bz)\}$, denoted by $\scrR_N(\scrH)$. Using Lemma~\ref{radcom} and the upper bounds in (\ref{mlr-bound-1}) and (\ref{mlr-bound-2}), we arrive at the following result.

\begin{lem} \label{radcom-mlr}
	Under Assumptions~\ref{a1-mlr}, \ref{a2-mlr} and \ref{a4-mlr},
	\begin{equation*}
	\scrR_N(\scrH)\le \frac{2LR\bar{B}_{s,r}}{\sqrt{N}}.
	\end{equation*}
	Under Assumptions~\ref{a1-mlr}, \ref{a3-mlr} and \ref{a4-mlr},
	\begin{equation*}
	\scrR_N(\scrH)\le \frac{2LR\bar{B}_{1,s}}{\sqrt{N}}.
	\end{equation*}
\end{lem}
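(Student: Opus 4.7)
The plan is to mirror directly the proof technique used for Lemma~\ref{radcom} in the single-output case, with the new ingredients being the uniform bounds on $h_{\tilde{\bB}}(\bz)$ that have already been established in (\ref{mlr-bound-1}) and (\ref{mlr-bound-2}). Specifically, I would start from the definition
\[
\scrR_N(\scrH)=\mbb{E}\Big[\sup_{h\in\scrH}\frac{2}{N}\Big|\sum_{i=1}^N \sigma_i\, h_{\tilde{\bB}}(\bz_i)\Big|\ \Big|\ \bz_1,\ldots,\bz_N\Big],
\]
where $\sigma_1,\ldots,\sigma_N$ are i.i.d.\ Rademacher random variables, and take absolute values inside the sum using the triangle inequality.

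The next step is to apply the uniform bound on the loss. Under Assumptions~\ref{a1-mlr}, \ref{a2-mlr}, \ref{a4-mlr}, the chain of inequalities
\[
h_{\tilde{\bB}}(\bz)\le L\|\tilde{\bB}\bz\|_r\le L\|\bz\|_r\|\tilde{\bB}'\|_{s,r}\le LR\bar{B}_{s,r},
\]
which was derived in (\ref{mlr-bound-1}) using Lemma~\ref{lr-bound-2} together with the $L$-Lipschitz continuity of $l(\cdot)$ (observing $h_{\tilde{\bB}}(\bz)=|h_{\tilde{\bB}}(\bz)-h_{\tilde{\bB}}(\mathbf{0})|$ by Assumption~\ref{a4-mlr}), lets us pull $LR\bar{B}_{s,r}$ out of the supremum. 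Under the alternative set of Assumptions~\ref{a1-mlr}, \ref{a3-mlr}, \ref{a4-mlr}, the analogous bound (\ref{mlr-bound-2}) yields $h_{\tilde{\bB}}(\bz)\le LR\bar{B}_{1,s}$ via Lemma~\ref{lr-bound-3}. In either case the supremum disappears and we are left bounding $\mbb{E}\bigl[|\sum_{i=1}^N \sigma_i|\bigr]$.

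Finally, by Jensen's inequality applied to the concave function $\sqrt{\cdot}$,
\[
\mbb{E}\Big[\Big|\sum_{i=1}^N \sigma_i\Big|\Big]\le\sqrt{\mbb{E}\Big[\Big(\sum_{i=1}^N\sigma_i\Big)^{\!2}\,\Big]}=\sqrt{\sum_{i=1}^N\mbb{E}[\sigma_i^2]}=\sqrt{N},
\]
using that the $\sigma_i$ are independent, mean-zero, with $\sigma_i^2=1$. Combining these pieces gives $\scrR_N(\scrH)\le 2LR\bar{B}_{s,r}/\sqrt{N}$ in the first case and $\scrR_N(\scrH)\le 2LR\bar{B}_{1,s}/\sqrt{N}$ in the second, as claimed. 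There is no real obstacle here: the only substantive work was establishing the uniform bounds (\ref{mlr-bound-1})–(\ref{mlr-bound-2}) via the new matrix-norm inequalities (Lemmata~\ref{lr-bound-2} and \ref{lr-bound-3}), and once those are in hand the proof proceeds identically to Lemma~\ref{radcom}.
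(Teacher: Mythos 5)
Your proposal is correct and is essentially the paper's own proof: the paper disposes of Lemma~\ref{radcom-mlr} by simply invoking Lemma~\ref{radcom} together with the uniform bounds (\ref{mlr-bound-1}) and (\ref{mlr-bound-2}), which is exactly the argument you reconstruct (substitute the uniform bound $LR\bar{B}_{s,r}$ or $LR\bar{B}_{1,s}$ for the loss inside the Rademacher sum, then bound $\mbb{E}\bigl[\bigl|\sum_{i=1}^N\sigma_i\bigr|\bigr]\le\sqrt{N}$ by Jensen). One small wording point: drop the phrase about ``taking absolute values inside the sum by the triangle inequality''---done literally that would give $\sum_i|\sigma_i|=N$ and lose the $1/\sqrt{N}$ rate; the step you actually use, as in Lemma~\ref{radcom}, is to replace $h_{\tilde{\bB}}(\bz_i)$ by its uniform bound inside the signed sum and then apply Jensen's inequality.
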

Using the Rademacher complexity of the class of loss functions, the out-of-sample prediction bias of the solutions to (\ref{relax1}) and (\ref{relax2}) can be bounded by applying Theorem 8 in \cite{Peter02}.

\begin{thm} \label{t1-mlr} Suppose the solution to (\ref{relax1}) is $\hat{\bB}_{s,r}$. Under Assumptions~\ref{a1-mlr}, \ref{a2-mlr} and \ref{a4-mlr}, for any
	$0<\delta<1$, with probability at least $1-\delta$ with respect to the
	sampling,
	\begin{equation*} 
	\mathbb{E}[h_{\hat{\bB}_{s,r}}(\bx, \by)]\le
	\frac{1}{N}\sum_{i=1}^N
	h_{\hat{\bB}_{s,r}}(\bx_i, \by_i)  +\frac{2LR\bar{B}_{s,r}}{\sqrt{N}}
	+ 
	LR\bar{B}_{s,r}\sqrt{\frac{8\log(\frac{2}{\delta})}{N}}\ ,
	\end{equation*}
	and for any $\zeta>\frac{2LR\bar{B}_{s,r}}{\sqrt{N}}+
	LR\bar{B}_{s,r}\sqrt{\frac{8\log(2/\delta)}{N}}$,
	\begin{equation*} 
	\begin{aligned}
	\mathbb{P}\Bigl(h_{\hat{\bB}_{s,r}}(\bx,\by)  & \ge
	\frac{1}{N}\sum_{i=1}^N h_{\hat{\bB}_{s,r}}(\bx_i,\by_i)+\zeta\Bigr) \\
	& \le \frac{\frac{1}{N}\sum_{i=1}^N
		h_{\hat{\bB}_{s,r}}(\bx_i,\by_i)+\frac{2LR\bar{B}_{s,r}}{\sqrt{N}}+ 
		LR\bar{B}_{s,r}\sqrt{\frac{8\log(2/\delta)}{N}}}{\frac{1}{N}\sum_{i=1}^N
		h_{\hat{\bB}_{s,r}}(\bx_i,\by_i)+\zeta}.
	\end{aligned}	
	\end{equation*}
\end{thm}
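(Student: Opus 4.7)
The plan is to mimic the proof of Theorem~\ref{t2} essentially verbatim, with the scalar-output absolute loss bound $\bar{B}R$ replaced by the multi-output uniform bound $LR\bar{B}_{s,r}$ from (\ref{mlr-bound-1}) and the Rademacher complexity bound from Lemma~\ref{radcom-mlr}. The two pieces we need are already in place; the work is purely in invoking Theorem~\ref{Peter} (Theorem~8 of \cite{Peter02}) correctly and then converting the expectation bound into a tail bound via Markov's inequality.

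First, I would renormalize the loss so it takes values in $[0,1]$, as required by Theorem~\ref{Peter}. Under Assumptions~\ref{a1-mlr}, \ref{a2-mlr}, and~\ref{a4-mlr}, the chain of inequalities in (\ref{mlr-bound-1}) gives $0\le h_{\tilde{\bB}}(\bz) \le LR\bar{B}_{s,r}$ almost surely under $\mbb{P}^*$ for every feasible $\bB$. Define the rescaled loss and dominating cost by $L(\by,f(\bx))=\phi(\by,f(\bx))=h_{\tilde{\bB}}(\bx,\by)/(LR\bar{B}_{s,r})$, with $\scrF = \{\bx\mapsto \bB'\bx : \bB \text{ feasible}\}$. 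The empirical Rademacher complexity scales by the same factor, so by Lemma~\ref{radcom-mlr},
\[
\scrR_N(\tilde{\phi}\circ\scrF) \;\le\; \frac{\scrR_N(\scrH)}{LR\bar{B}_{s,r}} \;\le\; \frac{2}{\sqrt{N}}.
\]

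Second, applying Theorem~\ref{Peter} with these identifications yields, with probability at least $1-\delta$ over the sample,
\[
\mathbb{E}^{\mbb{P}^*}\!\left[\frac{h_{\hat{\bB}_{s,r}}(\bx,\by)}{LR\bar{B}_{s,r}}\right] \;\le\; \frac{1}{N}\sum_{i=1}^N \frac{h_{\hat{\bB}_{s,r}}(\bx_i,\by_i)}{LR\bar{B}_{s,r}} \;+\; \frac{2}{\sqrt{N}} \;+\; \sqrt{\frac{8\log(2/\delta)}{N}}.
\]
Multiplying through by $LR\bar{B}_{s,r}$ gives the first claimed inequality exactly as stated.

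Third, for the tail bound, I would apply Markov's inequality to the nonnegative random variable $h_{\hat{\bB}_{s,r}}(\bx,\by)$: for any threshold $\eta>0$,
\[
\mbb{P}\bigl(h_{\hat{\bB}_{s,r}}(\bx,\by)\ge \eta\bigr) \;\le\; \frac{\mathbb{E}^{\mbb{P}^*}[h_{\hat{\bB}_{s,r}}(\bx,\by)]}{\eta}.
\]
Setting $\eta = \tfrac{1}{N}\sum_i h_{\hat{\bB}_{s,r}}(\bx_i,\by_i) + \zeta$ and substituting the expectation bound from the previous step into the numerator produces the second inequality. The hypothesis $\zeta>\frac{2LR\bar{B}_{s,r}}{\sqrt{N}}+LR\bar{B}_{s,r}\sqrt{8\log(2/\delta)/N}$ just ensures that the resulting right-hand side is less than one, making the tail bound non-vacuous.

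There is no real obstacle here: both assumption~(\ref{a4-mlr}) (used to guarantee nonnegativity and the $h(\mathbf 0)=0$ baseline that keeps $\tilde{\phi}\circ\scrF$ well-controlled) and the uniform bound (\ref{mlr-bound-1}) do the heavy lifting, and Lemma~\ref{radcom-mlr} supplies the only nontrivial ingredient. The sole bookkeeping point to be careful about is tracking the normalization constant $LR\bar{B}_{s,r}$ through the scaling so that it appears correctly in both inequalities, and noting that the entire argument would repeat verbatim for the MLR-1S relaxation with $\bar{B}_{s,r}$ replaced by $\bar{B}_{1,s}$, using (\ref{mlr-bound-2}) and the second half of Lemma~\ref{radcom-mlr}.
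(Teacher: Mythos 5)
Your proposal is correct and follows essentially the same route as the paper: the paper proves this result exactly by combining the uniform bound (\ref{mlr-bound-1}), the Rademacher complexity bound of Lemma~\ref{radcom-mlr}, and Theorem 8 of \cite{Peter02}, then obtains the tail bound via Markov's inequality, mirroring the proof of Theorem~\ref{t2}. Your extra care with rescaling the loss to $[0,1]$ before invoking Theorem~\ref{Peter} is a harmless (indeed slightly more scrupulous) bookkeeping step that the paper glosses over.
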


\begin{thm} \label{t2-mlr} Suppose the solution to (\ref{relax2}) is $\hat{\bB}_{1,s}$. Under Assumptions~\ref{a1-mlr}, \ref{a3-mlr} and \ref{a4-mlr}, for any
	$0<\delta<1$, with probability at least $1-\delta$ with respect to the
	sampling,
	\begin{equation*} 
	\mathbb{E}[h_{\hat{\bB}_{1,s}}(\bx, \by)]\le
	\frac{1}{N}\sum_{i=1}^N
	h_{\hat{\bB}_{1,s}}(\bx_i, \by_i)  +\frac{2LR\bar{B}_{1,s}}{\sqrt{N}}
	+ 
	LR\bar{B}_{1,s}\sqrt{\frac{8\log(\frac{2}{\delta})}{N}}\ ,
	\end{equation*}
	and for any $\zeta>\frac{2LR\bar{B}_{1,s}}{\sqrt{N}}+
	LR\bar{B}_{1,s}\sqrt{\frac{8\log(2/\delta)}{N}}$,
	\begin{equation*} 
	\begin{aligned}
	\mathbb{P}\Bigl(h_{\hat{\bB}_{1,s}}(\bx,\by)   & \ge
	\frac{1}{N}\sum_{i=1}^N h_{\hat{\bB}_{1,s}}(\bx_i,\by_i)+\zeta\Bigr) \\
	& \le \frac{\frac{1}{N}\sum_{i=1}^N
		h_{\hat{\bB}_{1,s}}(\bx_i,\by_i)+\frac{2LR\bar{B}_{1,s}}{\sqrt{N}}+ 
		LR\bar{B}_{1,s}\sqrt{\frac{8\log(2/\delta)}{N}}}{\frac{1}{N}\sum_{i=1}^N
		h_{\hat{\bB}_{1,s}}(\bx_i,\by_i)+\zeta}.
	\end{aligned}
	\end{equation*}
\end{thm}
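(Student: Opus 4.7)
The plan is to mirror the proof strategy used for Theorem~\ref{t2} (the single-output case in Section~\ref{out}), now invoking the Rademacher-complexity bound tailored to the MLR-1S class rather than to the absolute-loss class. First I would appeal to Theorem~\ref{Peter} (Theorem~8 in \cite{Peter02}), setting the correspondences $f(\bx)=\bB'\bx$ and $L(\by,f(\bx))=\phi(\by,f(\bx))=h_{\bB}(\bx,\by)$. Since that theorem is stated for losses with range $[0,1]$, I would first rescale: under Assumptions~\ref{a1-mlr}, \ref{a3-mlr} and \ref{a4-mlr}, the bound (\ref{mlr-bound-2}) gives $h_{\tilde{\bB}}(\bz)\le LR\bar{B}_{1,s}$, so dividing the loss and the conclusion by $LR\bar{B}_{1,s}$ returns the bound to the $[0,1]$ regime required by Theorem~\ref{Peter}, and the final inequality is then multiplied back through.

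Next I would plug in the Rademacher-complexity estimate supplied by the second half of Lemma~\ref{radcom-mlr}, namely $\scrR_N(\scrH)\le 2LR\bar{B}_{1,s}/\sqrt{N}$, to conclude that with probability at least $1-\delta$ with respect to the training sample,
\begin{equation*}
\mbb{E}^{\mbb{P}^*}[h_{\hat{\bB}_{1,s}}(\bx,\by)] \le \frac{1}{N}\sum_{i=1}^N h_{\hat{\bB}_{1,s}}(\bx_i,\by_i) + \frac{2LR\bar{B}_{1,s}}{\sqrt{N}} + LR\bar{B}_{1,s}\sqrt{\frac{8\log(2/\delta)}{N}},
\end{equation*}
which is precisely the first stated inequality. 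This step is essentially bookkeeping once the rescaling in the previous paragraph is in place.

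For the probabilistic tail bound, I would apply Markov's inequality to the nonnegative random variable $h_{\hat{\bB}_{1,s}}(\bx,\by)+\tfrac{1}{N}\sum_i h_{\hat{\bB}_{1,s}}(\bx_i,\by_i)$ (shifted so the numerator matches the event). Concretely,
\begin{equation*}
\mbb{P}\Bigl(h_{\hat{\bB}_{1,s}}(\bx,\by) \ge \tfrac{1}{N}\textstyle\sum_{i=1}^N h_{\hat{\bB}_{1,s}}(\bx_i,\by_i)+\zeta\Bigr) \le \frac{\mbb{E}[h_{\hat{\bB}_{1,s}}(\bx,\by)]}{\tfrac{1}{N}\sum_i h_{\hat{\bB}_{1,s}}(\bx_i,\by_i)+\zeta},
\end{equation*}
and then substitute the expectation bound from the previous step to obtain the stated ratio. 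The condition $\zeta>2LR\bar{B}_{1,s}/\sqrt{N}+LR\bar{B}_{1,s}\sqrt{8\log(2/\delta)/N}$ guarantees the ratio is nontrivial.

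The main obstacle I anticipate is not analytical but notational: keeping straight which $\bar{B}$ controls which Rademacher bound and ensuring the rescaling that brings the loss into $[0,1]$ is carried through consistently on both sides of Theorem~\ref{Peter}. Everything else is a direct replay of the argument in Theorem~\ref{t2}, with the scalar bound $\bar{B}R$ replaced by its multi-output analogue $LR\bar{B}_{1,s}$ delivered by Lemma~\ref{lr-bound-3} and the second half of Lemma~\ref{radcom-mlr}.
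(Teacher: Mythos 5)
Your proposal is correct and follows essentially the same route as the paper: Theorem~\ref{t2-mlr} is obtained there exactly by combining the bound $\scrR_N(\scrH)\le 2LR\bar{B}_{1,s}/\sqrt{N}$ from Lemma~\ref{radcom-mlr} with Theorem~8 of \cite{Peter02} (as in the proof of Theorem~\ref{t2}), and then deriving the tail bound via Markov's inequality applied to the nonnegative loss with the expectation bound substituted in. Your explicit attention to rescaling the loss into $[0,1]$ before invoking \cite{Peter02} is a detail the paper glosses over but is consistent with how the factor $LR\bar{B}_{1,s}$ appears in the stated bound, so no gap remains.
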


Theorems~\ref{t1-mlr} and \ref{t2-mlr} present bounds on the out-of-sample prediction errors of the solutions to (\ref{relax1}) and (\ref{relax2}), respectively. The expectations/probabilities are taken w.r.t. the new sample $(\bx, \by)$. The magnitude of the regularizer plays a role in controlling the bias, and a smaller upper bound on the matrix norm leads to a smaller prediction error, suggesting the superiority of MLR-SR for $r \ge 2$, and the superiority of MLR-1S for $r=1$ (see Section~\ref{sec:dro-mlr}). But on the other hand, the prediction error also depends on the sample average loss over the training set, for which there is no guarantee on which model wins out. In practice we suggest trying both models and selecting the one that yields a smaller error on a validation set.

\subsection{Performance Guarantees for MLG Relaxations}
\label{mlg-perf}
In this subsection, we study the out-of-sample log-loss of the solutions to MLG-SR and MLG-1S. Suppose the data $(\bx, \by)$ is drawn from the probability measure $\mbb{P}^*$. We first make several assumptions that are needed to establish the results.

\begin{ass} \label{b1} The $\ell_r$ norm of the predictor $\bx$ is bounded above a.s. under the probability measure $\mbb{P}^*_{\scrX}$, i.e., $\|\bx\|_r \le R_{\bx}, \ \text{a.s.}$.
\end{ass}

\begin{ass} \label{b2} For any feasible solution to MLG-SR, the following holds:
	$$K^{1/s}  \|\bB\|_{s, r}+ \|\bB\|_{s, 1} \le \bar{C}_{s,r}.$$
\end{ass}

\begin{ass} \label{b3} For any feasible solution to MLG-1S, the following holds: 
	$$K^{1/s}  \|\bB'\|_{1,s}+ \|\bB\|_{s, 1} \le \bar{C}_{1,s}.$$
\end{ass}

With standardized predictors, $R_{\bx}$ in Assumption~\ref{b1} can be assumed to be small. The form of the constraints in Assumptions~\ref{b2} and \ref{b3} is consistent with the form of the regularizers in MLG-SR and MLG-1S, respectively. We will see later that the bounds $\bar{C}_{s,r}$ and $\bar{C}_{1,s}$ respectively control the out-of-sample log-loss of the solutions to MLG-SR and MLG-1S, which validates the role of the regularizer in improving the out-of-sample performance. Under Assumptions~\ref{b1} and \ref{b2}, using (\ref{log-loss-diff}), (\ref{sr-1}) and (\ref{sr-1s-2}), we have,
\begin{equation*}
|h_{\bB}(\bx, \by) - h_{\bB}(\mathbf{0}, \by)| 
\le K^{1/s} \|\bx\|_r \|\bB\|_{s, r} + \|\bx\|_r \|\bB\|_{s, 1} 
\le R_{\bx} \bar{C}_{s,r}.
\end{equation*}
By noting that $h_{\bB}(\mathbf{0}, \by) = \log K$, we immediately have,
\begin{equation} \label{mlg-bound-1}
\log K - R_{\bx} \bar{C}_{s,r} \le h_{\bB}(\bx, \by) \le R_{\bx} \bar{C}_{s,r} + \log K.
\end{equation}
Similarly, under Assumptions~\ref{b1} and \ref{b3}, using (\ref{log-loss-diff}), (\ref{1s-1}) and (\ref{sr-1s-2}), we have,
\begin{equation*}
|h_{\bB}(\bx, \by) - h_{\bB}(\mathbf{0}, \by)| 
\le K^{1/s} \|\bx\|_r \|\bB'\|_{1,s} + \|\bx\|_r \|\bB\|_{s, 1} 
\le R_{\bx} \bar{C}_{1,s},
\end{equation*}
which implies that 
\begin{equation} \label{mlg-bound-2}
\log K - R_{\bx} \bar{C}_{1,s} \le h_{\bB}(\bx, \by) \le R_{\bx} \bar{C}_{1,s} + \log K.
\end{equation}

Using (\ref{mlg-bound-1}) and (\ref{mlg-bound-2}), we can now proceed to bound the out-of-sample log-loss using the empirical {\em Rademacher complexity} $\scrR_N(\cdot)$ of
the following class of loss functions: 
\begin{equation*}
\scrH=\{(\bx, \by) \ra h_{\bB}(\bx, \by): h_{\bB}(\bx, \by) = \log \mathbf{1}'e^{\bB'\bx} - \by'\bB'\bx \}.
\end{equation*}

\begin{lem} \label{radcom-lg}
	Under Assumptions~\ref{b1} and \ref{b2},
	\begin{equation*}
	\scrR_N(\scrH)\le \frac{2 (R_{\bx} \bar{C}_{s,r} + \log K)}{\sqrt{N}}.
	\end{equation*}
	Under Assumptions~\ref{b1} and \ref{b3},
	\begin{equation*}
	\scrR_N(\scrH)\le \frac{2(R_{\bx} \bar{C}_{1,s} + \log K)}{\sqrt{N}}.
	\end{equation*}
\end{lem}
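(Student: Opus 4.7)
The proof will mirror, almost verbatim, the derivation of Lemma~\ref{radcom} in Section~\ref{out}. The key input is a uniform absolute bound on the log-loss class $\scrH$, which is exactly what the two-sided estimates (\ref{mlg-bound-1}) and (\ref{mlg-bound-2}) provide. Under Assumptions~\ref{b1}--\ref{b2}, one immediately obtains $|h_{\bB}(\bx, \by)| \le R_{\bx} \bar{C}_{s,r} + \log K$ almost surely under $\mbb{P}^*$ for every feasible $\bB$; under Assumptions~\ref{b1} and \ref{b3}, the analogous bound is $|h_{\bB}(\bx, \by)| \le R_{\bx} \bar{C}_{1,s} + \log K$.

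With this uniform envelope in hand, I would plug into the definition of the empirical Rademacher complexity of $\scrH$, introduce i.i.d.\ Rademacher random variables $\sigma_1, \ldots, \sigma_N$ independent of the sample, and move the uniform bound outside the supremum:
\begin{equation*}
\scrR_N(\scrH) \le \mathbb{E}\Bigl[ \frac{2}{N}\bigl|\sum_{i=1}^N \sigma_i\bigr| \Bigr] \cdot \bigl(R_{\bx} \bar{C}_{s,r} + \log K \bigr).
\end{equation*}
A routine application of Jensen's inequality, together with $\mathbb{E}[\sigma_i \sigma_j] = \mathbf{1}_{\{i=j\}}$, then gives $\mathbb{E}\bigl|\sum_{i=1}^N \sigma_i\bigr| \le \sqrt{\mathbb{E}\bigl(\sum_i \sigma_i\bigr)^2} = \sqrt{N}$, which yields the advertised $2(R_{\bx}\bar{C}_{s,r} + \log K)/\sqrt{N}$ bound. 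The MLG-1S case is identical with $\bar{C}_{s,r}$ replaced by $\bar{C}_{1,s}$.

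There is no real obstacle in the argument itself: the only nontrivial step is the derivation of the envelope, and that work has already been done in the lead-up to (\ref{mlg-bound-1}) and (\ref{mlg-bound-2}) via Lemmata~\ref{lr-bound-2} and \ref{lr-bound-3} combined with the mean value/H\"{o}lder estimates on $g(\ba) = \log \mathbf{1}'e^{\ba}$. The proof therefore reduces to quoting those bounds and reusing the Rademacher-complexity machinery of Lemma~\ref{radcom} verbatim, making the statement essentially a corollary of the material already in place.
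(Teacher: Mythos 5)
Your proposal is correct and follows essentially the same route the paper intends: establish the almost-sure envelope $h_{\bB}(\bx,\by)\le R_{\bx}\bar{C}_{s,r}+\log K$ (resp. $R_{\bx}\bar{C}_{1,s}+\log K$) from (\ref{mlg-bound-1})--(\ref{mlg-bound-2}) — noting the log-loss is non-negative, so this is indeed a uniform absolute bound — and then reuse the computation of Lemma~\ref{radcom} (pull the envelope out of the supremum and bound $\mathbb{E}\bigl|\sum_i\sigma_i\bigr|\le\sqrt{N}$ via Jensen). This matches the paper's treatment, which states Lemma~\ref{radcom-lg} as a direct consequence of those bounds and the Lemma~\ref{radcom} machinery.
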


Using Lemma~\ref{radcom-lg}, we are able to bound
the out-of-sample log-loss of the solutions to MLG-SR and MLG-1S by applying Theorem 8 in \cite{Peter02}.

\begin{thm} \label{c1} Suppose the solution to MLG-SR is $\hat{\bB}_{s,r}$. Under Assumptions~\ref{b1} and \ref{b2}, for any
	$0<\delta<1$, with probability at least $1-\delta$ with respect to the
	sampling,
	\begin{multline*} 
	\mathbb{E}[\log \mathbf{1}'e^{\hat{\bB}_{s,r}'\bx} - \by'\hat{\bB}_{s,r}'\bx] \le 
	\ \frac{1}{N}\sum_{i=1}^N
	(\log \mathbf{1}'e^{\hat{\bB}_{s,r}'\bx_i} - \by_i'\hat{\bB}_{s,r}'\bx_i) {}\\ 
	+\frac{2 (R_{\bx} \bar{C}_{s,r} + \log K)}{\sqrt{N}} 
	+ 
	(R_{\bx} \bar{C}_{s,r} + \log K)\sqrt{\frac{8\log(\frac{2}{\delta})}{N}}\ ,
	\end{multline*}
	and for any $\zeta>\frac{2 (R_{\bx} \bar{C}_{s,r} + \log K)}{\sqrt{N}}
	+ 
	(R_{\bx} \bar{C}_{s,r} + \log K)\sqrt{\frac{8\log(\frac{2}{\delta})}{N}}$,
	\begin{equation*} 
	\begin{aligned}
	& \quad \ \mathbb{P}\Bigl(\log \mathbf{1}'e^{\hat{\bB}_{s,r}'\bx} - \by' \hat{\bB}_{s,r}'\bx    \ge
	\frac{1}{N}\sum_{i=1}^N (\log \mathbf{1}'e^{\hat{\bB}_{s,r}'\bx_i} - \by_i'\hat{\bB}_{s,r}'\bx_i)+\zeta\Bigr) \\
	& \le \frac{\frac{1}{N}\sum_{i=1}^N
		(\log \mathbf{1}'e^{\hat{\bB}_{s,r}'\bx_i} - \by_i'\hat{\bB}_{s,r}'\bx_i)+\frac{2 (R_{\bx} \bar{C}_{s,r} + \log K)}{\sqrt{N}}
	}{\frac{1}{N}\sum_{i=1}^N
	(\log \mathbf{1}'e^{\hat{\bB}_{s,r}'\bx_i} - \by_i'\hat{\bB}_{s,r}'\bx_i)+\zeta} \\
& \qquad \qquad \qquad  \qquad \qquad \quad + \frac{(R_{\bx} \bar{C}_{s,r} + \log K)\sqrt{\frac{8\log(\frac{2}{\delta})}{N}}}{\frac{1}{N}\sum_{i=1}^N
	(\log \mathbf{1}'e^{\hat{\bB}_{s,r}'\bx_i} - \by_i'\hat{\bB}_{s,r}'\bx_i)+\zeta}.
\end{aligned}
\end{equation*}
\end{thm}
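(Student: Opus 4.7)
The plan is to mirror the argument used in the proof of Theorem~\ref{t2} (and subsequently of Theorem~\ref{t1-mlr}), which is itself an application of the Rademacher-complexity generalization bound of \cite{Peter02} (Theorem~\ref{Peter}) combined with a Markov-inequality step. The two non-routine ingredients we need are a uniform bound on the log-loss over the feasible class, and a bound on the empirical Rademacher complexity of the class; both are already in hand.

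First, I would set up the function class $\scrH=\{(\bx,\by)\mapsto h_{\bB}(\bx,\by): \bB \text{ feasible for MLG-SR}\}$, with $h_{\bB}(\bx,\by)=\log\mathbf{1}'e^{\bB'\bx}-\by'\bB'\bx$. The bound (\ref{mlg-bound-1}) derived in Section~\ref{mlg-perf} gives, under Assumptions~\ref{b1} and \ref{b2}, $0 \leq h_{\bB}(\bx,\by) \leq R_{\bx}\bar{C}_{s,r}+\log K$ almost surely (the lower bound is immediate from non-negativity of the log-loss, and the upper bound is (\ref{mlg-bound-1})). This allows the normalization required by Theorem~\ref{Peter}: divide $h_{\bB}$ by $R_{\bx}\bar{C}_{s,r}+\log K$ so it maps into $[0,1]$. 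Under this normalization, both the cost and dominating functions in Theorem~\ref{Peter} are taken as $h_{\bB}/(R_{\bx}\bar{C}_{s,r}+\log K)$.

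Second, Lemma~\ref{radcom-lg} already provides $\scrR_N(\scrH)\leq 2(R_{\bx}\bar{C}_{s,r}+\log K)/\sqrt{N}$, so after normalization the relevant Rademacher complexity is at most $2/\sqrt{N}$. Invoking Theorem~\ref{Peter} for the normalized class yields, with probability at least $1-\delta$ over the training sample,
\[
\mbb{E}\!\left[\frac{h_{\hat{\bB}_{s,r}}(\bx,\by)}{R_{\bx}\bar{C}_{s,r}+\log K}\right] \leq \frac{1}{N}\sum_{i=1}^{N}\frac{h_{\hat{\bB}_{s,r}}(\bx_i,\by_i)}{R_{\bx}\bar{C}_{s,r}+\log K} + \frac{2}{\sqrt{N}} + \sqrt{\frac{8\log(2/\delta)}{N}}.
\]
Multiplying through by $R_{\bx}\bar{C}_{s,r}+\log K$ recovers the first claim of the theorem.

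Third, for the probability bound, I would follow exactly the Markov-inequality step used at the end of the proof of Theorem~\ref{t2}. Since $h_{\hat{\bB}_{s,r}}(\bx,\by)\geq 0$, Markov's inequality gives
\[
\mbb{P}\!\left(h_{\hat{\bB}_{s,r}}(\bx,\by)\geq \tfrac{1}{N}\sum_{i=1}^{N}h_{\hat{\bB}_{s,r}}(\bx_i,\by_i)+\zeta\right) \leq \frac{\mbb{E}[h_{\hat{\bB}_{s,r}}(\bx,\by)]}{\tfrac{1}{N}\sum_{i=1}^{N}h_{\hat{\bB}_{s,r}}(\bx_i,\by_i)+\zeta},
\]
and substituting the upper bound on $\mbb{E}[h_{\hat{\bB}_{s,r}}(\bx,\by)]$ from the previous step produces the stated inequality; the condition on $\zeta$ ensures the denominator exceeds the numerator only by the training-loss term, yielding a non-trivial probability bound.

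The main obstacle, as anticipated, is the normalization: Theorem~\ref{Peter} is stated for $[0,1]$-valued losses, and the log-loss is neither a priori in $[0,1]$ nor symmetric around zero. The cleanest way around this is precisely the bound in (\ref{mlg-bound-1}), which controls the range of $h_{\bB}$ uniformly over the feasible $\bB$ using the regularizer-induced constraint in Assumption~\ref{b2}. Once this uniform bound is in place, the rest of the argument is a transcription of the proof of Theorem~\ref{t2}; the analogous argument (replacing Assumption~\ref{b2} by Assumption~\ref{b3} and (\ref{sr-1}) by (\ref{1s-1})) proves the MLG-1S counterpart.
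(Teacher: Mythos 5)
Your proposal is correct and follows essentially the same route as the paper: the result is obtained by combining the uniform bound (\ref{mlg-bound-1}) with the Rademacher complexity bound of Lemma~\ref{radcom-lg}, applying Theorem~\ref{Peter} (Theorem 8 of \cite{Peter02}) after the scaling by $R_{\bx}\bar{C}_{s,r}+\log K$, and then using Markov's inequality exactly as in the proof of Theorem~\ref{t2}. No gaps; the normalization step you highlight is indeed the only point requiring care, and you handle it the same way the paper implicitly does.
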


\begin{thm} \label{c2} Suppose the solution to MLG-1S is $\hat{\bB}_{1,s}$. Under Assumptions~\ref{b1} and \ref{b3}, for any
	$0<\delta<1$, with probability at least $1-\delta$ with respect to the
	sampling,
	\begin{multline*} 
	\mathbb{E}[\log \mathbf{1}'e^{\hat{\bB}_{1,s}'\bx} - \by'\hat{\bB}_{1,s}'\bx] \le
	\frac{1}{N}\sum_{i=1}^N
	(\log \mathbf{1}'e^{\hat{\bB}_{1,s}'\bx_i} - \by_i'\hat{\bB}_{1,s}'\bx_i) {}\\
	+\frac{2 (R_{\bx} \bar{C}_{1,s} + \log K)}{\sqrt{N}} 
	+ 
	(R_{\bx} \bar{C}_{1,s} + \log K)\sqrt{\frac{8\log(\frac{2}{\delta})}{N}}\ ,
	\end{multline*}
	and for any $\zeta>\frac{2 (R_{\bx} \bar{C}_{1,s} + \log K)}{\sqrt{N}}
	+ 
	(R_{\bx} \bar{C}_{1,s} + \log K)\sqrt{\frac{8\log(\frac{2}{\delta})}{N}}$,
	\begin{equation*} 
	\begin{aligned}
	& \quad \ \mathbb{P}\Bigl(\log \mathbf{1}'e^{\hat{\bB}_{1,s}'\bx} - \by'\hat{\bB}_{1,s}'\bx    \ge
	\frac{1}{N}\sum_{i=1}^N (\log \mathbf{1}'e^{\hat{\bB}_{1,s}'\bx_i} - \by_i'\hat{\bB}_{1,s}'\bx_i)+\zeta\Bigr) \\
	&	\le \frac{\frac{1}{N}\sum_{i=1}^N
		(\log \mathbf{1}'e^{\hat{\bB}_{1,s}'\bx_i} - \by_i'\hat{\bB}_{1,s}'\bx_i)+\frac{2 (R_{\bx} \bar{C}_{1,s} + \log K)}{\sqrt{N}}
	}{\frac{1}{N}\sum_{i=1}^N
	(\log \mathbf{1}'e^{\hat{\bB}_{1,s}'\bx_i} - \by_i'\hat{\bB}_{1,s}'\bx_i)+\zeta}\\
& \qquad \qquad \qquad  \qquad \qquad \quad + \frac{(R_{\bx} \bar{C}_{1,s} + \log K)\sqrt{\frac{8\log(\frac{2}{\delta})}{N}}}{\frac{1}{N}\sum_{i=1}^N
	(\log \mathbf{1}'e^{\hat{\bB}_{1,s}'\bx_i} - \by_i'\hat{\bB}_{1,s}'\bx_i)+\zeta}.
\end{aligned}
\end{equation*}
\end{thm}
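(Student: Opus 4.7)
The plan is to mimic exactly the proof of Theorem~\ref{c1}, making the single substitution of the Rademacher-complexity bound dictated by Assumption~\ref{b3} and the MLG-1S regularizer structure. The setup is essentially parallel: we now work with the class
\[
\scrH = \Big\{(\bx,\by)\ra h_{\bB}(\bx,\by) = \log \mathbf{1}'e^{\bB'\bx}-\by'\bB'\bx : K^{1/s}\|\bB'\|_{1,s}+\|\bB\|_{s,1}\le \bar{C}_{1,s}\Big\},
\]
and the target is an expectation inequality plus a Markov-type probability inequality, both at confidence $1-\delta$ with respect to the training sample.

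First I would invoke inequality~(\ref{mlg-bound-2}), which, under Assumptions~\ref{b1} and \ref{b3}, establishes the uniform envelope $|h_{\bB}(\bx,\by)|\le R_{\bx}\bar{C}_{1,s}+\log K$ on the support of $\mbb{P}^*$. Plugging this envelope into the argument of Lemma~\ref{radcom} (exactly as was done in Lemma~\ref{radcom-lg}, second part) yields the Rademacher bound
\[
\scrR_N(\scrH)\le \frac{2(R_{\bx}\bar{C}_{1,s}+\log K)}{\sqrt{N}}.
\]
At this stage I would apply Theorem~\ref{Peter} (Theorem~8 of \cite{Peter02}) with the log-loss serving as both the loss $L$ and the dominating cost $\phi$, the class $\scrF=\{\bx\ra \bB'\bx\}$, and the function class $\scrH$ above; after noting that $\tilde{\phi}\circ\scrF$ differs from $\scrH$ only by the data-independent shift $\phi(\by,\mathbf{0})=\log K$ and absorbing this shift into the Rademacher term, we obtain the first displayed bound of the theorem.

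The probability statement then follows from Markov's inequality applied to the nonnegative random variable $h_{\hat{\bB}_{1,s}}(\bx,\by)+$ a suitable constant (the same reparametrization used in the proof of Theorem~\ref{t2}) so that the ratio on the right-hand side is well-defined. Dividing the expectation bound by $\frac{1}{N}\sum_i h_{\hat{\bB}_{1,s}}(\bx_i,\by_i)+\zeta$ and comparing against $\zeta$ exceeding the threshold $\frac{2(R_{\bx}\bar{C}_{1,s}+\log K)}{\sqrt{N}}+(R_{\bx}\bar{C}_{1,s}+\log K)\sqrt{8\log(2/\delta)/N}$ yields the second claim.

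The main subtlety, rather than a true obstacle, is Theorem~\ref{Peter}'s nominal requirement that $L$ and $\phi$ map into $[0,1]$: here the log-loss is only bounded in magnitude by $R_{\bx}\bar{C}_{1,s}+\log K$, so one must either rescale by this constant and scale back at the end (which is exactly the reason this factor appears multiplicatively in the Rademacher and $\sqrt{\log(2/\delta)/N}$ terms) or appeal to the straightforward extension of Theorem~\ref{Peter} to bounded losses. The rest of the argument is a routine transcription of the proof of Theorem~\ref{c1}, with $\bar{C}_{s,r}$ replaced by $\bar{C}_{1,s}$ and the Rademacher bound replaced by the one above; no new analytic ingredient is required.
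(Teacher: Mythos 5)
Your proposal matches the paper's argument: the paper also derives the envelope bound (\ref{mlg-bound-2}) from Assumptions~\ref{b1} and \ref{b3}, feeds it into Lemma~\ref{radcom-lg} to bound $\scrR_N(\scrH)$ by $2(R_{\bx}\bar{C}_{1,s}+\log K)/\sqrt{N}$, and then obtains both displayed inequalities by applying Theorem~\ref{Peter} and Markov's inequality exactly as in Theorem~\ref{t2}. No gap; your handling of the boundedness/rescaling subtlety is consistent with how the paper treats it.
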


We note that the expected log-loss on a new test sample depends both on the sample average log-loss on the training set, and the magnitude of the regularizer in the formulation. The form of the bounds in Theorems~\ref{c1} and \ref{c2} demonstrates the validity of MLG-SR and MLG-1S in leading to a good out-of-sample performance. For $r \ge 2$, $\bar{C}_{s,r}$ can be considered smaller than $\bar{C}_{1,s}$, while for $r=1$, the reverse holds. We can decide which model to use on a case-by-case basis, by computing their out-of-sample error on a validation set.

\section{Numerical Experiments} \label{sec:5-4}
In this section, we will test the out-of-sample performance of the MLR and MLG relaxations on a number of synthetic datasets, and compare with several commonly used multi-output regression/classification models. 

\subsection{MLR Relaxations} \label{mlr-exp}
In this subsection we will first explore the selection of a proper norm for the regularizer based on an appropriate notion of distance in the data space. To this end, we design two different structures for the true coefficient matrix denoted by $\bB^*$ in order to reflect different distance metrics: 
\begin{enumerate}
	\item $\bB^*$ is drawn from a standard multivariate normal distribution, which corresponds to an $\ell_2$-norm induced Wasserstein metric ($r=2$);
	\item we first generate $\bB^*$ from a standard multivariate normal distribution, and then normalize each row using the softmax function while keeping the sign of each element unchanged. The normalization guarantees an equal row absolute sum for $\bB^*$. This can be thought of as standardizing the effect of each predictor, which is represented by the absolute sum over the $K$ columns of $\bB^*$. Such a coefficient matrix implies an $\ell_1$-norm distance metric in the data space ($r=1$). The reason is that in the dual space ($\|\cdot\|_{\infty}$), the vertex of the constraint set has each coordinate being the same in absolute value, and in our setting each coordinate is represented by the absolute sum over the $K$ columns of $\bB^*$.
\end{enumerate}
The predictor $\bx$ is generated from a multivariate normal distribution with mean zero and covariance $\bSigma_{\bx} = (\sigma^{\bx}_{ij})_{i,j \in \lb p \rb}$, where $\sigma^{\bx}_{ij} = 0.9^{|i-j|}$. The response vector $\by$ is generated as $$\by = (\bB^*)'\bx + \boldsymbol{\eta},$$ where $\boldsymbol{\eta}$ is a standard normal random vector. Throughout the experiments we set $p=5$, and $K=3$.

We adopt a loss function $h_{\bB}(\bx, \by) = \|\by-\bB'\bx\|_r$ that is 1-Lipschitz continuous on $\|\cdot\|_r$. Note that we use the same norm to define the loss function and the Wasserstein metric. We will compare the MLR-SR and MLR-1S relaxations induced by $r=1$ and $r=2$, respectively, in terms of their out-of-sample {\em Weighted Mean Squared Error (WMSE)}, defined as:
\begin{equation*}
\text{WMSE} \triangleq \frac{1}{M} \sum_{i=1}^M (\by_i-\hat{\by}_i)' \hat{\bSigma}^{-1} (\by_i-\hat{\by}_i),
\end{equation*}
where $M$ is the size of the test set, $\by_i$ and $\hat{\by}_i$ are the true and predicted response vectors for the $i$-th test sample, respectively, and $\hat{\bSigma}$ is the covariance matrix of the prediction error on the training set, 
$$\hat{\bSigma} = (\bY-\hat{\bY})'(\bY-\hat{\bY})/(N-pK),$$ where $\bY, \hat{\bY} \in \mbb{R}^{N \times K}$ are the true and estimated response matrices of the training set, respectively, and $N$ is the size of the training set. We will also look at the {\em Conditional Value at Risk (CVaR)} of the WMSE (at the confidence level $\alpha = 0.8$) that quantifies its tail behavior.  

Figures~\ref{fig:mlr-1} and \ref{fig:mlr-2} show the comparison of MLR-SR and MLR-1S formulations derived from the Wasserstein metric induced by the $\ell_r$ norm, with $r=1$ and $r=2$, when the radius of the Wasserstein ball $\epsilon$ is varied. As expected, when $\bB^*$ is a dense matrix, the $\ell_2$ norm is a proper distance metric in the data space, and as a result, the two relaxations with $r=2$ achieve a lower out-of-sample prediction bias. On the other hand, when the structure of $\bB^*$ implies an $\ell_1$-norm distance metric on the data (Figure~\ref{fig:mlr-2}), the formulations with $r=1$ have a better performance.

\begin{figure}[h] 
	\begin{center}
		\begin{subfigure}{.49\textwidth}
			\centering
			\includegraphics[width=1.0\textwidth]{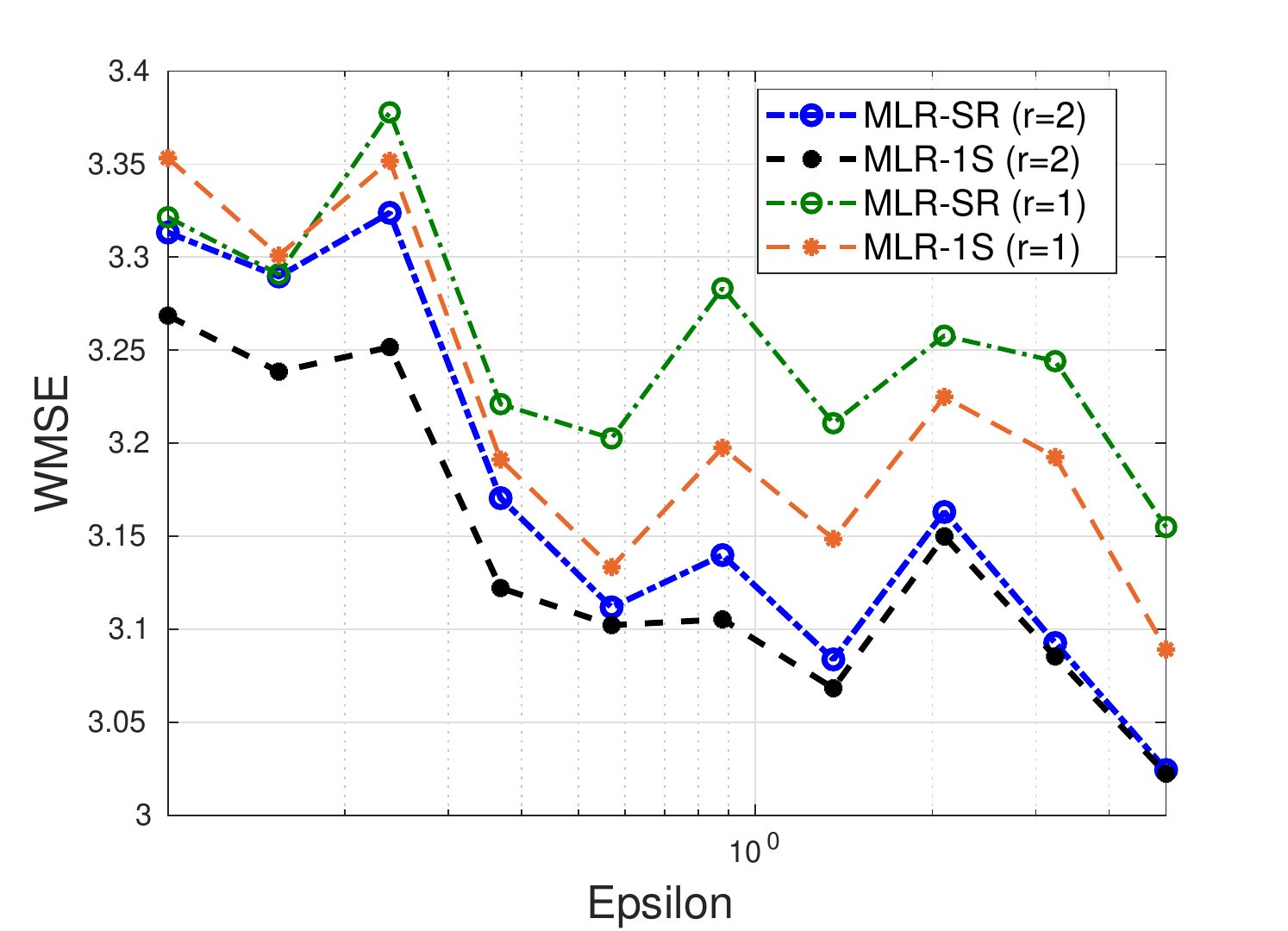}
			\caption{\small{WMSE.}}
		\end{subfigure}
		\begin{subfigure}{.49\textwidth}
			\centering
			\includegraphics[width=1.0\textwidth]{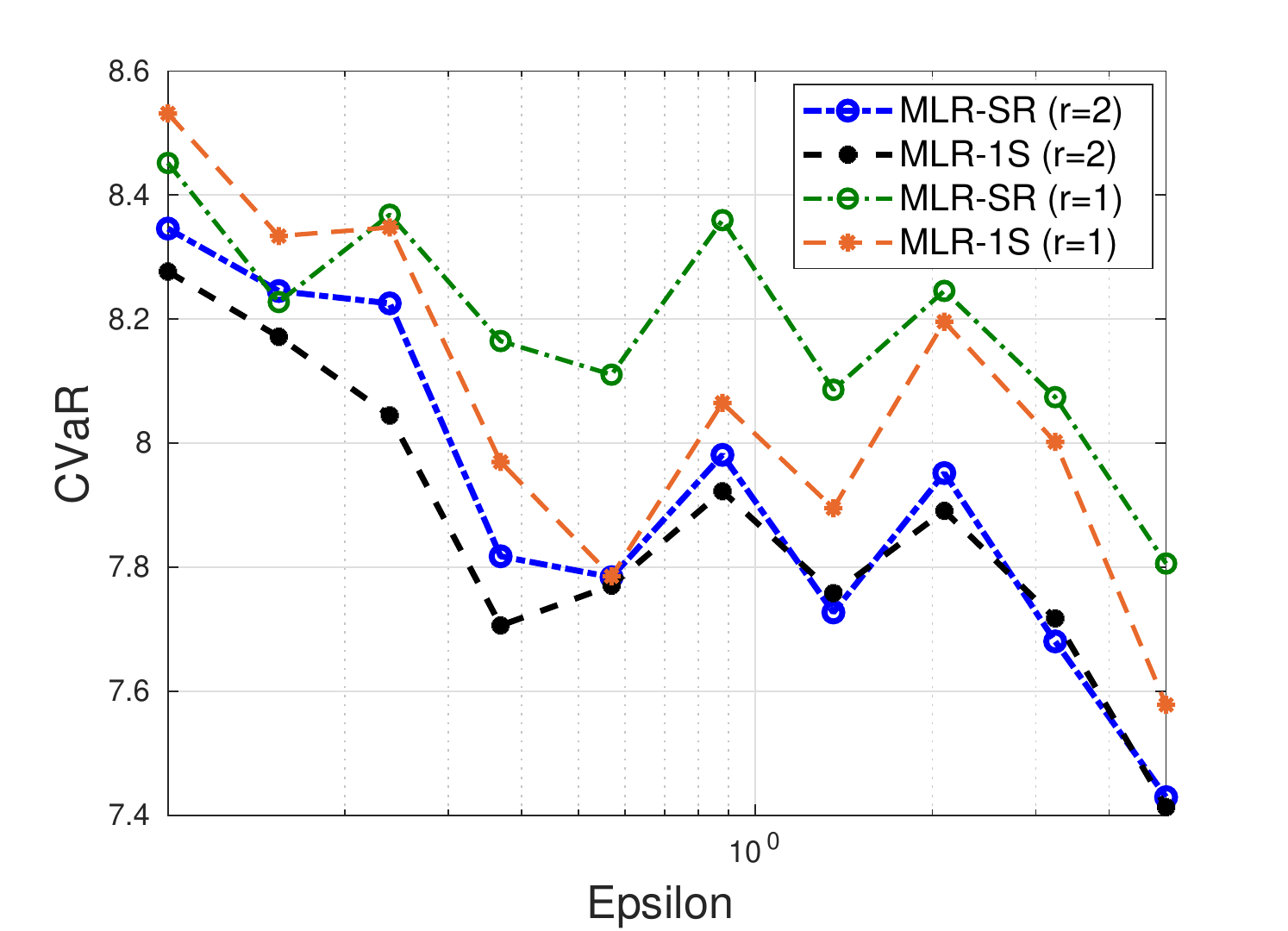}
			\caption{\small{CVaR of WMSE.}}
		\end{subfigure}
	\end{center}
	\caption{The out-of-sample performance of MLR-SR and MLR-1S with normally distributed $\bB^*$.}
	\label{fig:mlr-1}
\end{figure}

\begin{figure}[h] 
	\begin{center}
		\begin{subfigure}{.49\textwidth}
			\centering
			\includegraphics[width=1.0\textwidth]{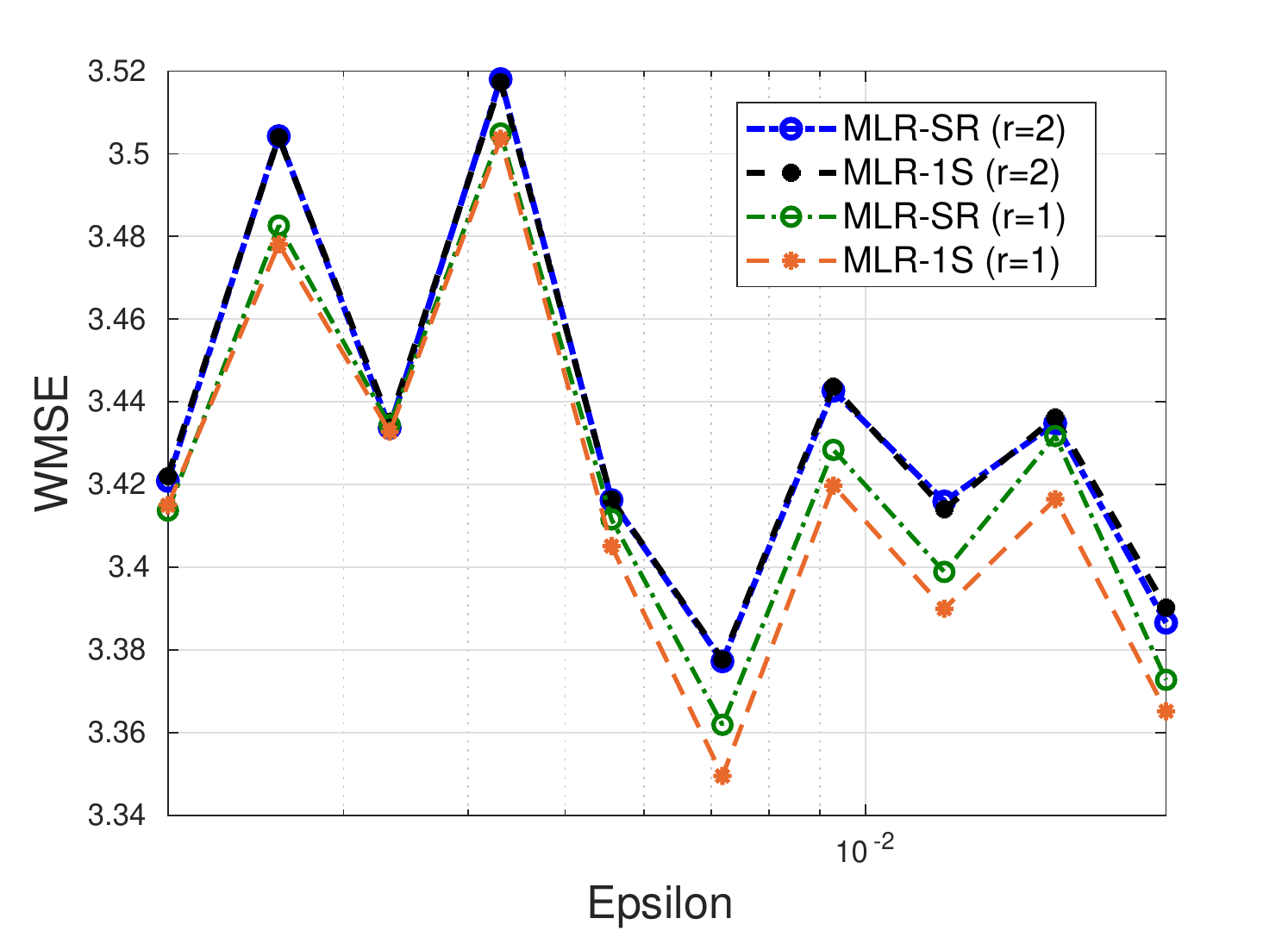}
			\caption{\small{WMSE.}}
		\end{subfigure}
		\begin{subfigure}{.49\textwidth}
			\centering
			\includegraphics[width=1.0\textwidth]{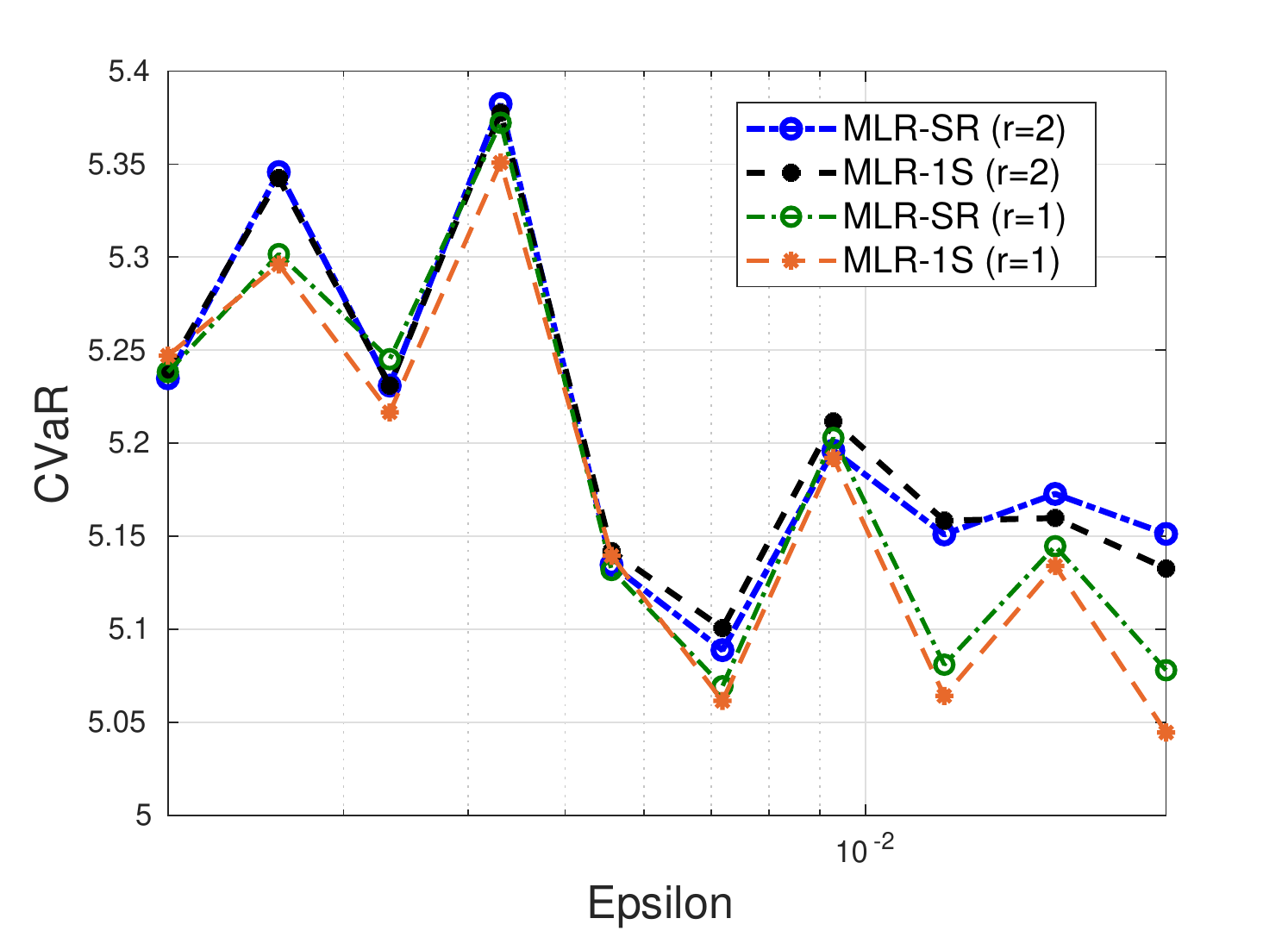}
			\caption{\small{CVaR of WMSE.}}
		\end{subfigure}
	\end{center}
	\caption{The out-of-sample performance of MLR-SR and MLR-1S when $\bB^*$ has an equal row absolute sum.}
	\label{fig:mlr-2}
\end{figure}

We also compare the four MLR relaxations with an optimal $\epsilon$ chosen by cross-validation. Tables~\ref{tab:mlr-1} and \ref{tab:mlr-2} show the mean WMSE and CVaR over 100 repetitions (the numbers inside the parentheses indicate the corresponding standard deviations). Similar conclusions can be drawn from the results in the tables. With a proper choice of $r$, the MLR relaxations are able to achieve a lower prediction error with a smaller variance. For example, in Table~\ref{tab:mlr-1}, compared to MLR-SR (r=1), the two relaxations with $r=2$ improved the WMSE by 4.6\%.

\begin{table}[h]
	\caption{The out-of-sample performance of MLR-SR and MLR-1S with cross-validated $\epsilon$ when $\bB^*$ is normally distributed.} 
	\label{tab:mlr-1}
	\begin{center}
		\begin{tabular} {l| c c }
			\hline
			& WMSE              & CVaR                \\ \hline
			MLR-SR (r=1) & 3.26 (0.48)       & 4.91 (0.70)            \\ 
			MLR-1S (r=1) & 3.21 (0.40)       & 4.93 (0.65)              \\ 
			MLR-SR (r=2) & 3.11 (0.36)       & 4.74 (0.62)                 \\ 
			MLR-1S (r=2) & 3.11 (0.35)       & 4.75 (0.64)             \\  \hline
		\end{tabular}
	\end{center}
\end{table}

\begin{table}[h]
	\caption{The out-of-sample performance of MLR-SR and MLR-1S with cross-validated $\epsilon$ when $\bB^*$ has an equal row absolute sum.} 
	\label{tab:mlr-2}
	\begin{center}
		\begin{tabular} {l| c c }
			\hline
			& WMSE        & CVaR                \\ \hline
			MLR-SR (r=1) & 3.04 (0.52)        & 4.56 (0.83)             \\ 
			MLR-1S (r=1) & 3.05 (0.52)        & 4.60 (0.89)               \\ 
			MLR-SR (r=2) & 3.05 (0.52)        & 4.62 (0.88)               \\ 
			MLR-1S (r=2) & 3.05 (0.52)        & 4.62 (0.89)           \\  \hline
		\end{tabular}
	\end{center}
\end{table}

We next compare the MLR-SR and MLR-1S formulations with several other popular methods for MLR, including \textsl{OLS}, \textsl{Reduced Rank Regression (RRR)} \citep{izenman1975reduced, velu2013multivariate}, \textsl{Principal Components Regression (PCR)} \citep{massy1965principal}, \textsl{Factor Estimation and Selection (FES)} \citep{yuan2007dimension}, the \textsl{Curds and Whey (C\&W)} procedure \citep{breiman1997predicting}, and \textsl{Ridge Regression (RR)} \citep{brown1980adaptive, haitovsky1987multivariate}. We provide a brief outline of these methods.
RRR restricts the rank of $\bB$, and its solution is obtained by a {\em Canonical Correlation Analysis (CCA)} of the response and predictor matrices that finds a sequence of uncorrelated linear combinations of the predictors and a corresponding sequence of uncorrelated linear combinations of the responses such that their correlations are successively maximized. PCR converts the predictors into a set of linearly uncorrelated variables and applies OLS on the transformed variables. Both RRR and PCR form linear combinations of predictors and responses (in the case of RRR) which hurts interpretability since it is not possible to explain an original response via the original predictors. FES penalizes the sum of the singular values of $\bB$. The C\&W procedure shrinks the canonical variates between $\bx$ and $\by$. RR penalizes the sum of the squared elements in $\bB$ (equivalent to multiple independent ridge regression of each coordinate of $\by$).  

To test the robustness of various methods, we inject outliers to the training datasets whose distribution differs from the majority by a normally distributed random quantity. Specifically, the response of outliers is generated as $$\by = (\bB^*)'\bx + \boldsymbol{\eta}+\mathbf{o}_{\boldsymbol{\eta}},$$ 
where $\boldsymbol{\eta} \sim \mathcal{N}(\mathbf{0}, \mathbf{I})$, and $\mathbf{o}_{\boldsymbol{\eta}} \sim \mathcal{N}(\mathbf{0}, \bSigma_{\by})$, where $\bSigma_{\by} = (\sigma^{\by}_{ij})_{i,j \in \lb K \rb}$, with $\sigma^{\by}_{ij} = (-0.9)^{|i-j|}$. Note that the perturbation occurs only on the response variables. 

We generate 20 datasets with a training size of 100 and a test size of 60, and compare the WMSE and CVaR of various models on a clean test set. All the regularization coefficients are tuned through cross-validation. Table~\ref{tab:mlr-3} shows the average performance on datasets with 20\% and 30\% outliers, respectively, when $\bB^*$ is generated from a standard normal distribution. We see that as the proportion of outliers increases, the WMSE and its CVaR increase, and in both scenarios, the MLR-SR and MLR-1S relaxations achieve the smallest out-of-sample prediction error with a small variance. They improve the WMSE by 1\% -- 5\% and 3\% -- 7\% when the proportion of outliers is 20\% and 30\%, respectively. PCR and FES achieve a slightly worse performance, but with a considerably higher variance in the scenario with 20\% outlier. PCR works well with linearly correlated predictors, but could possibly fail when there exists a highly nonlinear relationship among the predictors. 

To further characterize the robustness of various approaches, we compute outlier detection rates on the test set, and draw the {\em Receiver Operating Characteristic (ROC)} curves obtained from varying the threshold values in the outlier detection rule. Note that in this case both the training and test datasets contain outliers. The response of outliers is generated as $$\by = (\bB^*)'\bx +\mathbf{o}_{\boldsymbol{\eta}},$$ where $\mathbf{o}_{\boldsymbol{\eta}} \sim \mathcal{N}(4*\mathbf{1}_K, \bSigma_{\by})$, with $\mathbf{1}_K$ the K-dimensional vector of all ones, and $\bSigma_{\by} = (\sigma^{\by}_{ij})_{i,j \in \lb K \rb}$, with $\sigma^{\by}_{ij} = (-0.9)^{|i-j|}$. The outlier detection criterion is described as follows:
\begin{equation*}
(\bx_i,  \by_i) = 
\begin{cases}
\text{outlier,} & \text{if \ $\br_i'\hat{\bSigma}^{-1}\br_i \ge c,$} \\
\text{not an outlier,} & \text{otherwise,}
\end{cases}
\end{equation*}
where $\br_i = \by_i - \hat{\bB}'\bx_i$ is the estimated residual, $\hat{\bSigma}$ is the covariance matrix of the prediction error on the training set, and $c$ is the threshold value that is varied between 0 and $\chi^2_{0.99}(K)$ (0.99 percentile of the chi-square distribution with $K$ degrees of freedom) to produce the ROC curves. Table~\ref{tab:mlr-3} shows the average {\em Area Under the ROC Curve (AUC)} on the test set over 20 repetitions, and Figure~\ref{fig:mlr-3} shows the ROC curves for different methods with 20\% and 30\% outliers, where the true positive rates and false positive rates are averaged over 20 repetitions. Compared to other methods except FES, the MLR-SR and MLR-1S models improve the AUC by 2\% -- 11\% when we have 20\% outliers, and 6\% -- 17\% when we have 30\% outliers, with a relatively small variability. Notice that FES also achieves a high AUC, but with a worse out-of-sample predictive performance.

\begin{table}[h]
	\caption{The out-of-sample performance of different MLR models, mean (std.).} 
	\label{tab:mlr-3}
	\begin{center}
		{\normalsize \begin{tabular} {l| c c c }
				\hline
				Proportion of Outliers & \multicolumn{3}{c}{20\%}           \\ \hline
				& WMSE                  & CVaR        & AUC            \\ \hline
				MLR-SR (r=2)           & 2.55 (0.26)           & 3.91 (0.51)  &0.89 (0.04)                   \\ 
				MLR-1S (r=2)           & 2.59 (0.21)           & 3.93 (0.43)  & 0.89 (0.03)               \\  
				OLS                    & 2.66 (0.44)           & 4.11 (0.83)  & 0.85 (0.06)          \\
				RR                     & 2.64 (0.42)           & 4.12 (0.82)  &0.87 (0.03)          \\
				RRR                    & 2.68 (0.36)           & 4.03 (0.61)  &0.80 (0.05)         \\
				FES                    & 2.58 (0.29)           & 4.01 (0.55)  &0.89 (0.03)         \\
				C\&W                   & 2.65 (0.42)           & 4.10 (0.80)  &0.86 (0.06)         \\
				PCR                    & 2.61 (0.29)           & 4.01 (0.50)  &0.86 (0.03)          \\ 
				\hline
				Proportion of Outliers & \multicolumn{3}{c}{30\%} \\
				\hline  
				MLR-SR (r=2)           & 2.63 (0.33)           & 4.07 (0.66)   & 0.83 (0.09)  \\
				MLR-1S (r=2)           & 2.57 (0.31)            &4.02 (0.60)  & 0.83 (0.09) \\
				OLS                    & 2.75 (0.33)            &4.14 (0.71) & 0.73 (0.11) \\
				RR                     & 2.72 (0.32)            &4.14 (0.60) & 0.78 (0.10) \\
				RRR                    & 2.76 (0.44)            &4.22 (0.80) & 0.71 (0.12) \\
				FES                    & 2.66 (0.33)           &4.02 (0.61) & 0.83 (0.09) \\
				C\&W                   & 2.74 (0.33)            &4.11 (0.65) & 0.74 (0.11) \\
				PCR                    & 2.68 (0.32)           &4.02 (0.62) & 0.73 (0.11) \\
				\hline
			\end{tabular}}
		\end{center}
	\end{table}
	
	\begin{figure}[h]
		\begin{center}
			\begin{subfigure}{0.9\textwidth}
				\centering
				\includegraphics[width=1.0\textwidth]{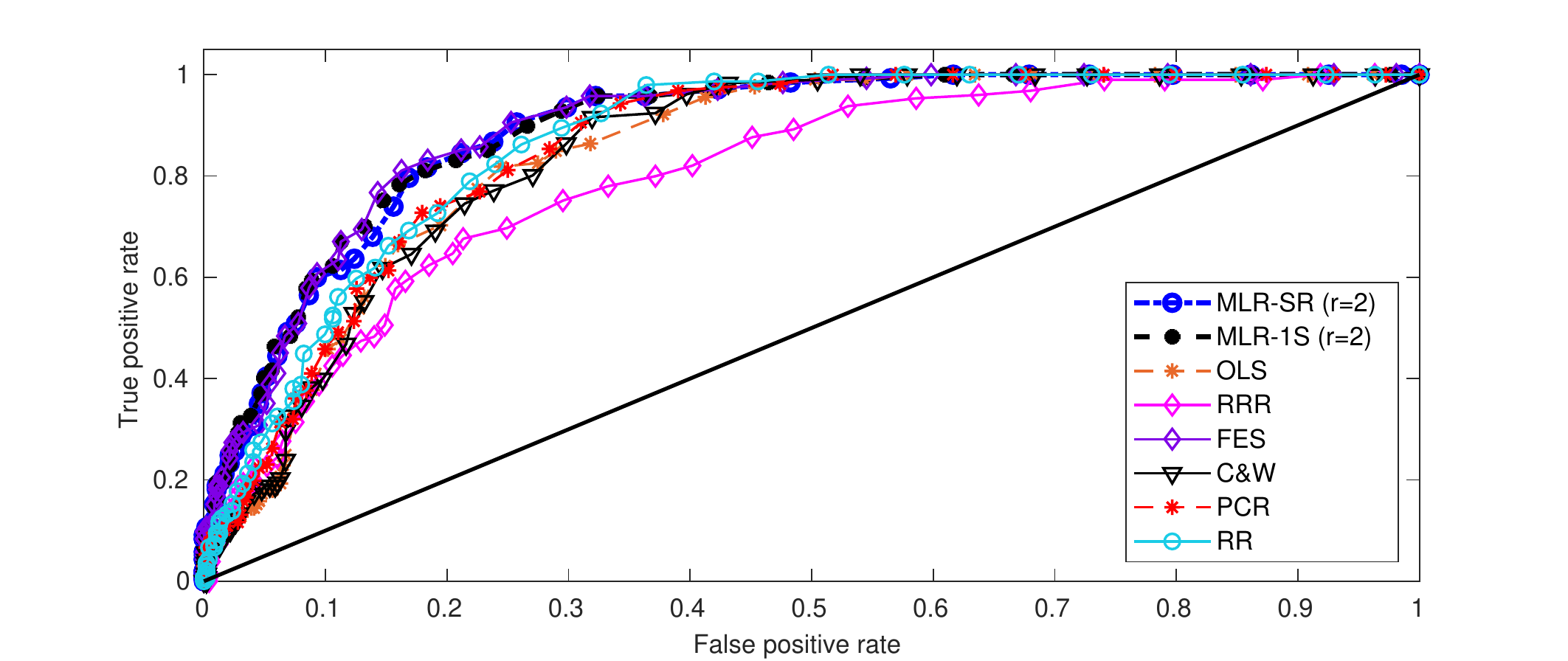}
				\caption{\small{20\% outliers.}}
			\end{subfigure}
			
			\begin{subfigure}{0.9\textwidth}
				\centering
				\includegraphics[width=1.0\textwidth]{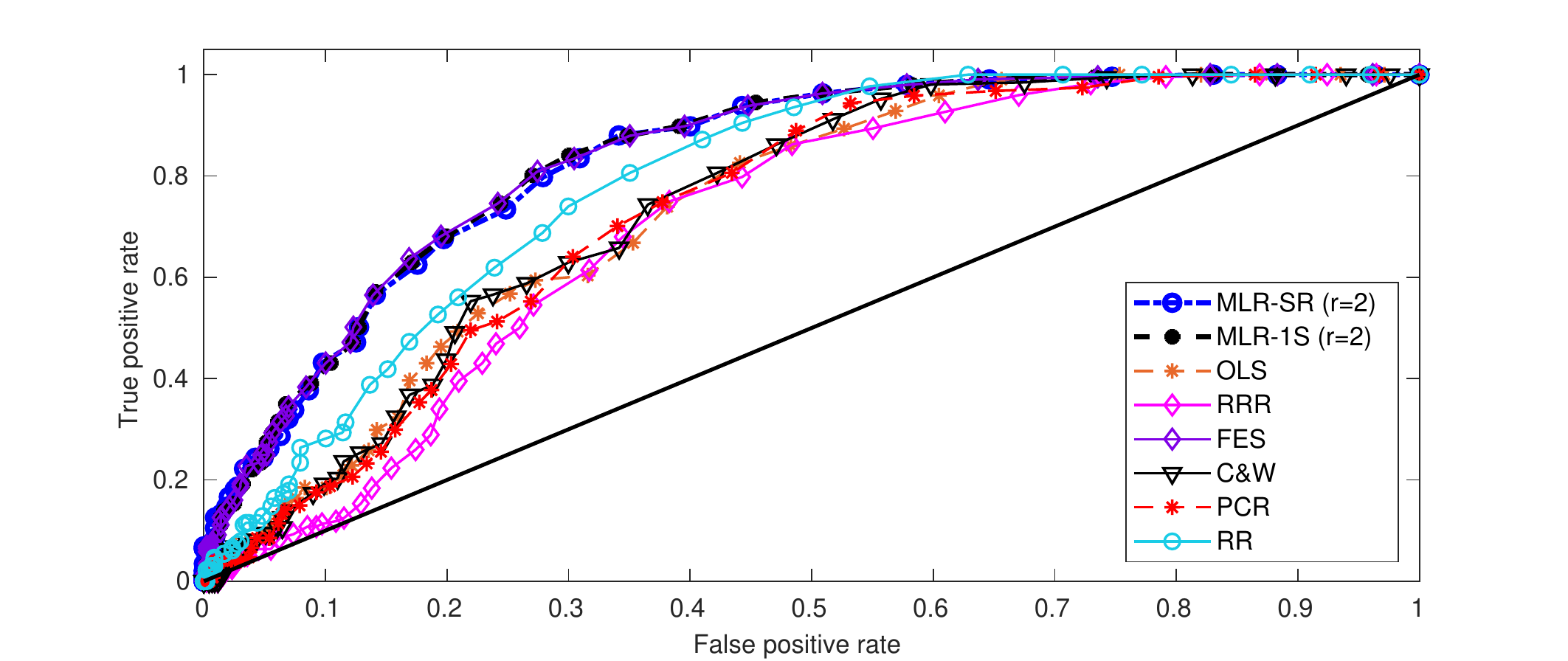}
				\caption{\small{30\% outliers.}}
			\end{subfigure}
		\end{center}
		\caption{The ROC curves of different MLR models.}
		\label{fig:mlr-3}
	\end{figure}
	
	\subsection{MLG Relaxations}
	In this subsection we study the performance of the two DRO-MLG relaxations, and compare them with a number of MLG variants on simulated datasets, in terms of their out-of-sample log-loss and classification accuracy.
	
	We first study the problem of selecting the right regularizer based on the distance metric in the data space. Similar to Section~\ref{mlr-exp}, we experiment with two types of $\bB^*$, one coming from a multivariate normal distribution, and the other normalized to have an equal row absolute sum. They respectively correspond to an $\ell_2$ and $\ell_1$-norm distance metric in the data space.
	The predictor is drawn according to $\bx \sim \scrN(\mathbf{0}, \bSigma_{\bx})$, where $\bSigma_{\bx} = (\sigma^{\bx}_{ij})_{i, j \in \lb p \rb}$, and $\sigma^{\bx}_{ij} = 0.9^{|i-j|}$. The label vector $\by \in \{0, 1\}^K$ is generated from a 
	multinomial distribution with probabilities specified by the softmax normalization of $(\bB^*)'\bx+\boldsymbol{\eta}$, where $\boldsymbol{\eta} \sim \scrN(\mathbf{0},\mathbf{I}_K)$.
	
	We set $p=5, K=3$, and conduct 20 simulation runs, each with a training size of 100 and a test size of 60. The performance metrics we use include: $(i)$ the average log-loss, $(ii)$ the {\em Correct Classification Rate (CCR)}, and $(iii)$ the {\em Conditional Value at Risk (CVaR)} (at the confidence level 0.8) of log-loss, which computes the expectation of extreme log-loss values. The average performance metrics on the test set over 20 replications are reported. 
	
	Figures~\ref{fig:mlg-1} and \ref{fig:mlg-2} show the comparison of the four models as the Wasserstein radius $\epsilon$ is varied. We see that when $\bB^*$ is a dense matrix, the MLG-SR and MLG-1S induced by the $\ell_2$-norm have a higher classification accuracy and a lower log-loss. By contrast, when the structure of $\bB^*$ implies an $\ell_1$-norm distance metric in the data space, the MLG-SR and MLG-1S with $r=1$ perform better. We also validate this conclusion in Tables~\ref{tab:mlg-1} and \ref{tab:mlg-2} where the optimal Wasserstein set radius $\epsilon$ is chosen through cross-validation. With normally distributed $\bB^*$, the formulations with $r=2$ improve the CCR and log-loss by 5\% compared to the ones induced by $r=1$.
	
	\begin{figure}[h] 
		\begin{center}
			\begin{subfigure}{0.6\textwidth}
				\centering
				\includegraphics[width=1.0\textwidth]{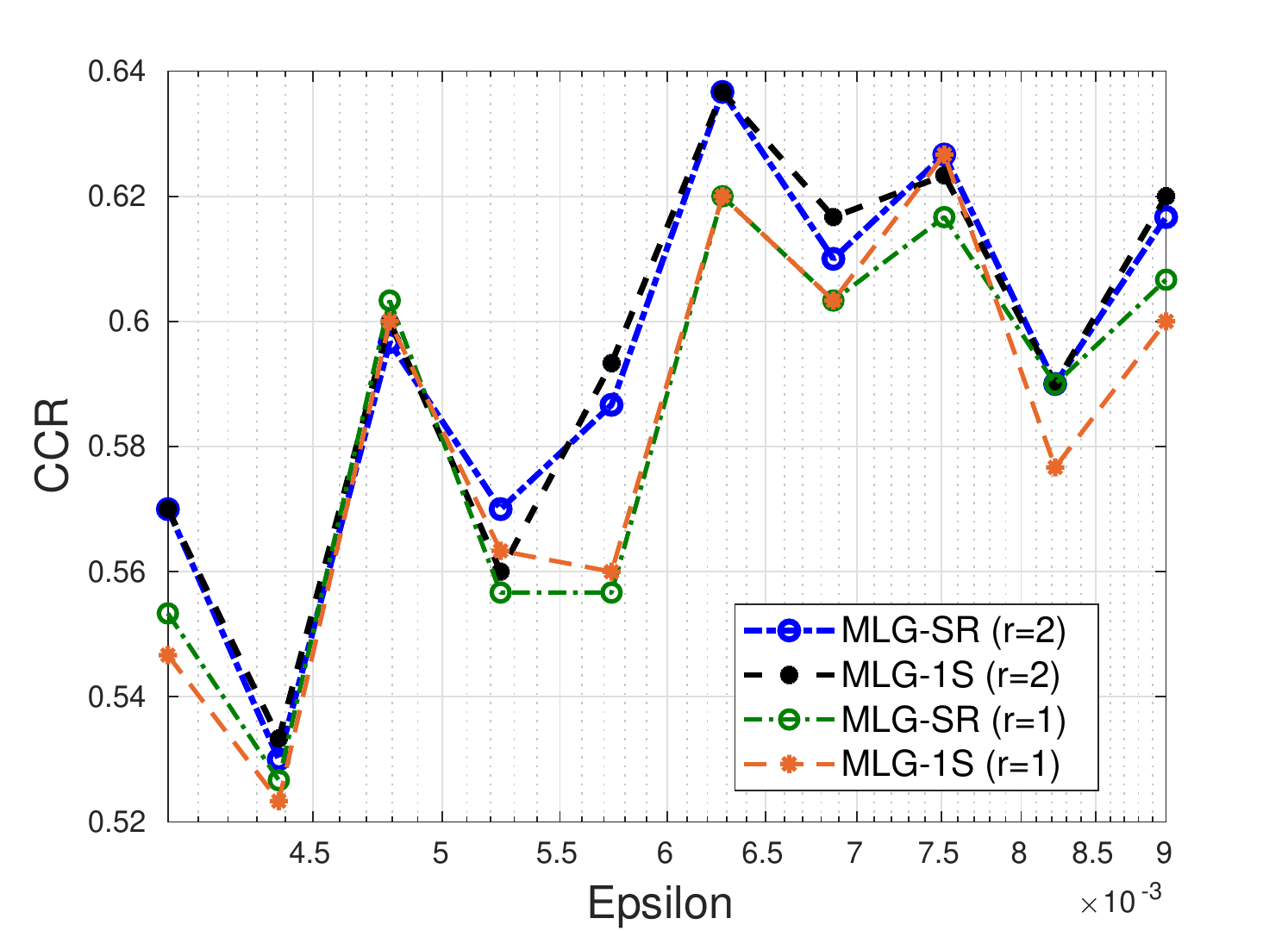}
				\caption{\small{CCR.}}
			\end{subfigure}
			
			\begin{subfigure}{0.6\textwidth}
				\centering
				\includegraphics[width=1.0\textwidth]{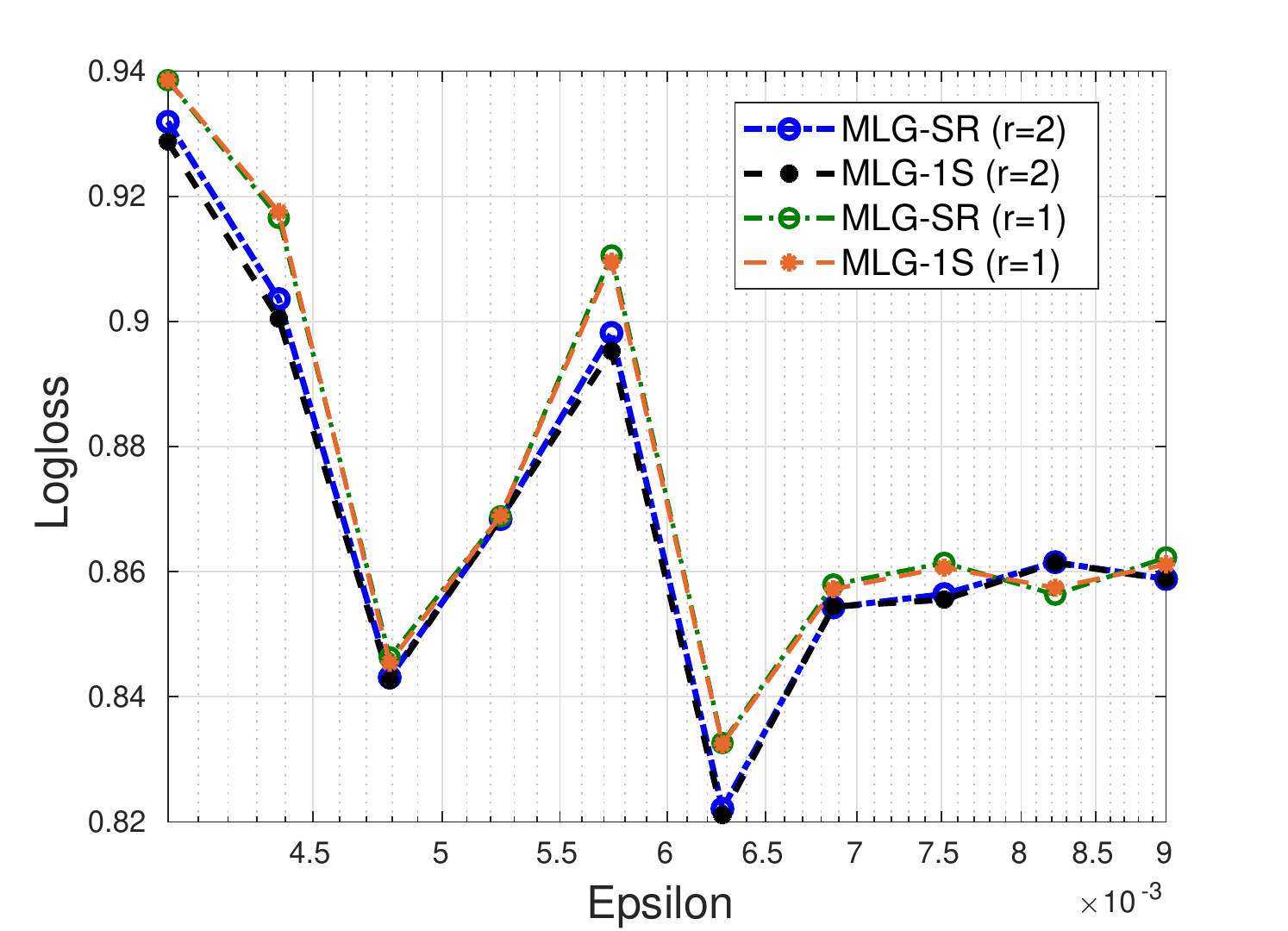}
				\caption{\small{Log-loss.}}
			\end{subfigure}
			
			\begin{subfigure}{0.6\textwidth}
				\centering
				\includegraphics[width=1.0\textwidth]{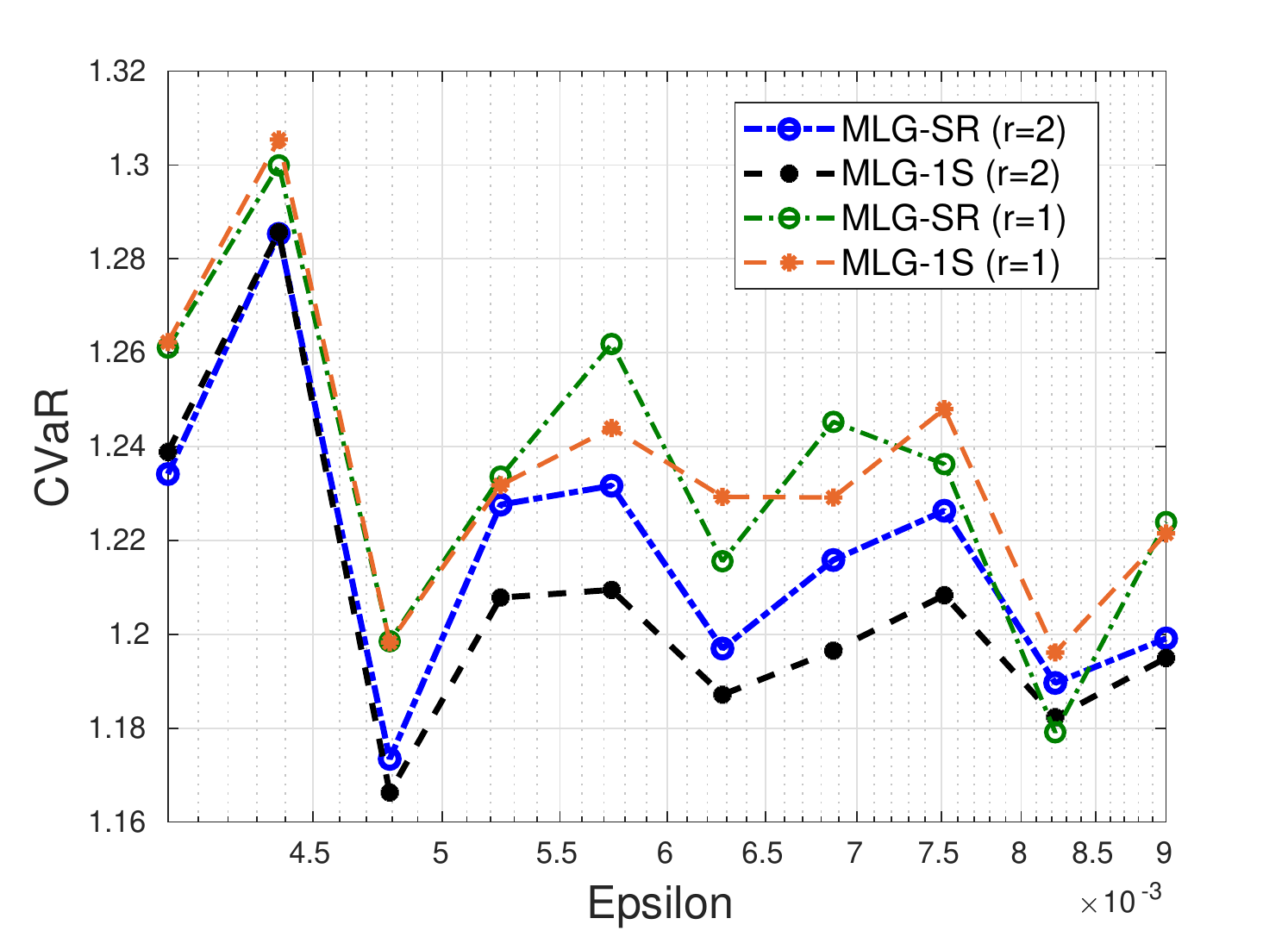}
				\caption{\small{CVaR of log-loss.}}
			\end{subfigure}
		\end{center}
		\caption{The out-of-sample performance of MLG-SR and MLG-1S when $\bB^*$ is normally distributed.}
		\label{fig:mlg-1}
	\end{figure}
	
	\begin{figure}[h] 
		\begin{center}
			\begin{subfigure}{.6\textwidth}
				\centering
				\includegraphics[width=1.0\textwidth]{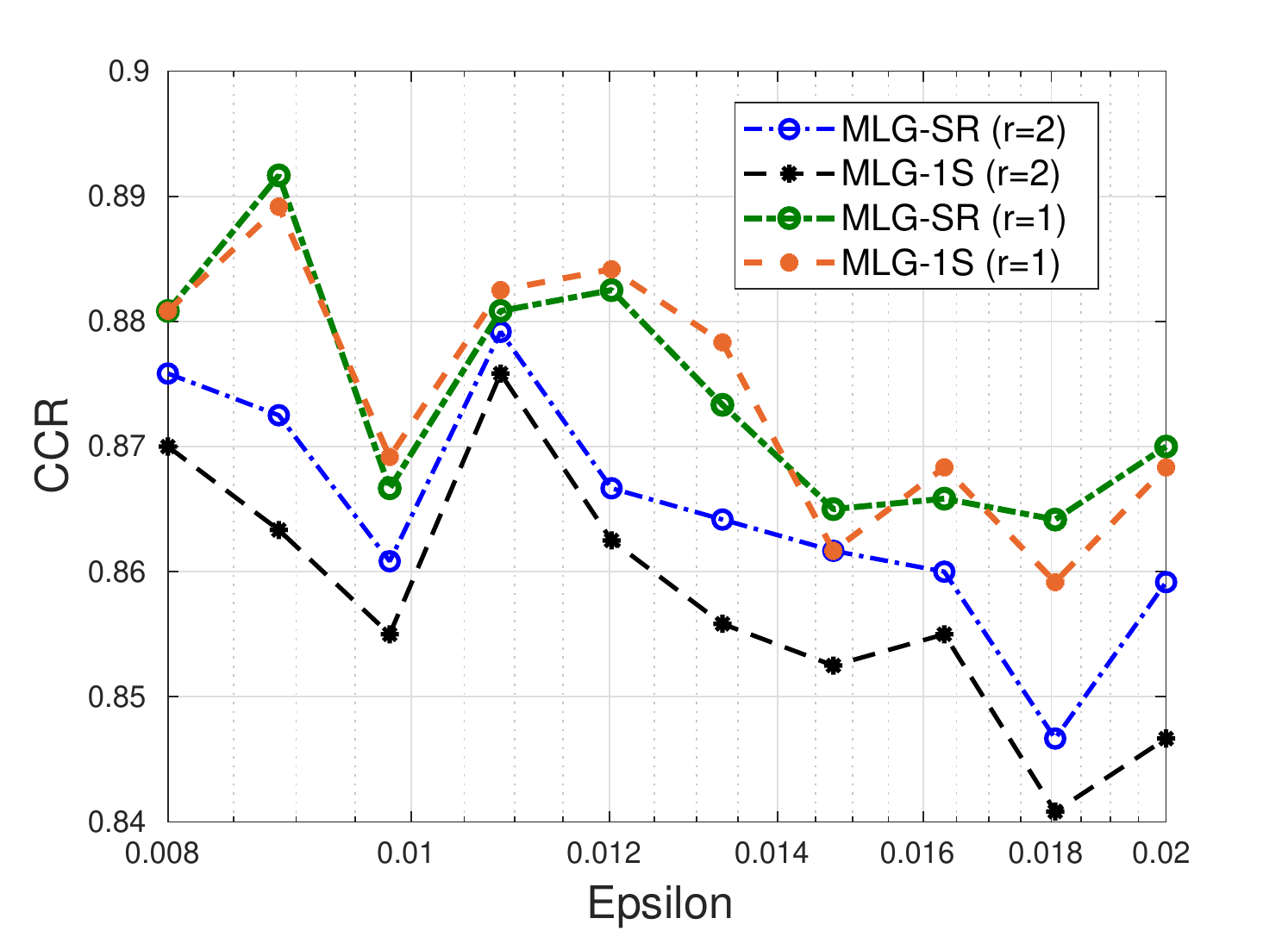}
				\caption{\small{CCR.}}
			\end{subfigure}
			
			\begin{subfigure}{.6\textwidth}
				\centering
				\includegraphics[width=1.0\textwidth]{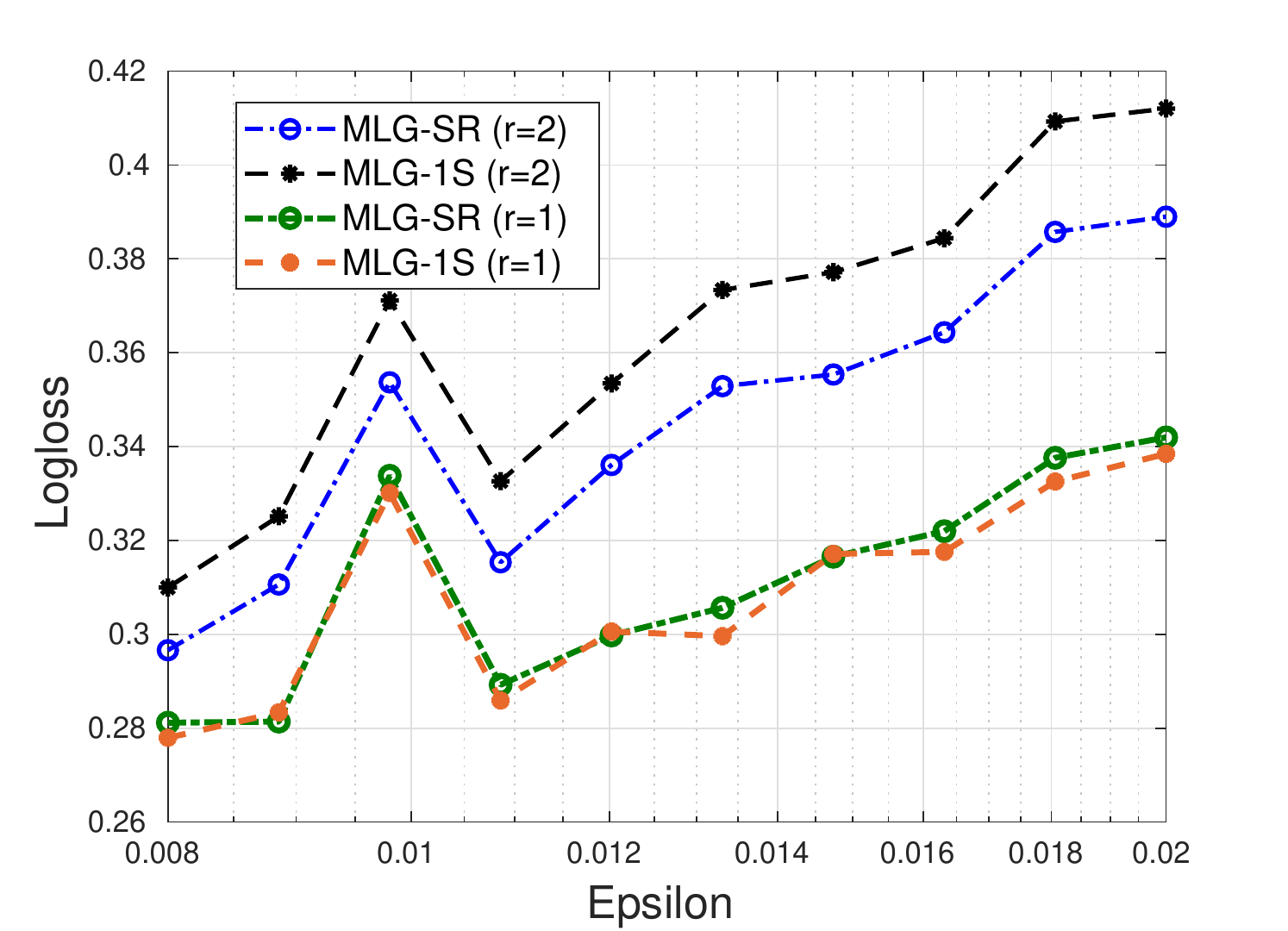}
				\caption{\small{Log-loss.}}
			\end{subfigure}
			
			\begin{subfigure}{.6\textwidth}
				\centering
				\includegraphics[width=1.0\textwidth]{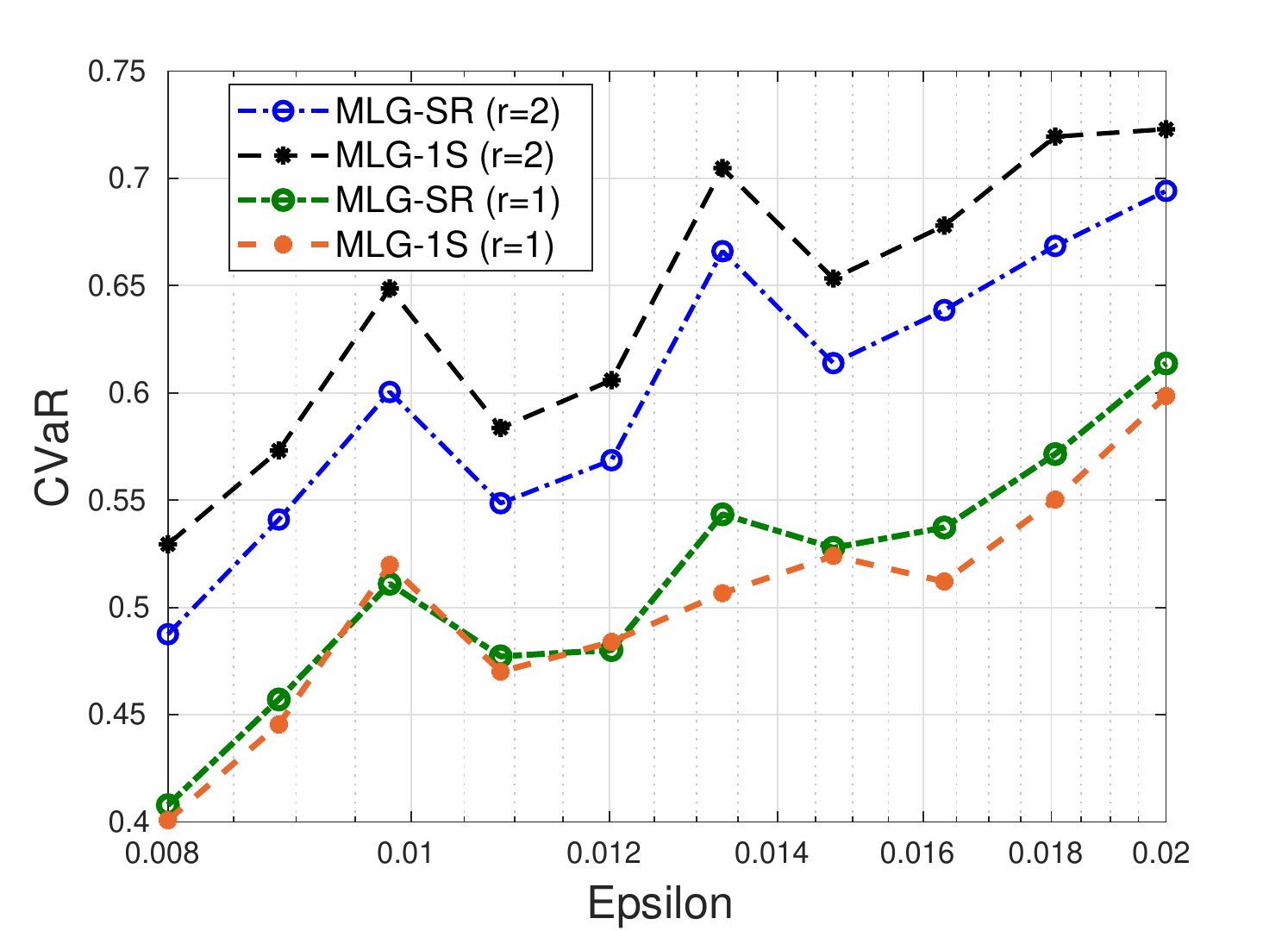}
				\caption{\small{CVaR of log-loss.}}
			\end{subfigure}
		\end{center}
		\caption{The out-of-sample performance of MLG-SR and MLG-1S when $\bB^*$ has an equal row absolute sum.}
		\label{fig:mlg-2}
	\end{figure}

	\begin{table}[h]
		\caption{The out-of-sample performance of MLG-SR and MLG-1S with cross-validated $\epsilon$ when $\bB^*$ is normally distributed.} 
		\label{tab:mlg-1}
		\begin{center}
			\begin{tabular} {l| c c c }
				\hline
				& CCR            & Log-loss                        & CVaR                \\ \hline
				MLG-SR (r=1) & 0.73 (0.06)         &    0.59 (0.09)                           &  1.17 (0.24)                 \\ 
				MLG-1S (r=1) & 0.75 (0.03)          & 0.60 (0.10)                            & 1.16 (0.19)              \\ 
				MLG-SR (r=2) & 0.76 (0.05)         &   0.57 (0.08)                         &  1.16 (0.21)                      \\ 
				MLG-1S (r=2) & 0.76 (0.04)        &   0.57 (0.09)                        &    1.11 (0.13)                 \\  \hline
			\end{tabular}
		\end{center}
	\end{table}
	
	\begin{table}[h]
		\caption{The out-of-sample performance of MLG-SR and MLG-1S with cross-validated $\epsilon$ when $\bB^*$ has an equal row absolute sum.}
		\label{tab:mlg-2} 
		\begin{center}
			\begin{tabular} {l| c c c }
				\hline
				& CCR            & Log-loss                        & CVaR                \\ \hline
				MLG-SR (r=1) & 0.84 (0.04)        &    0.35 (0.05)                          &  0.54 (0.18)                \\ 
				MLG-1S (r=1) & 0.84 (0.04)          & 0.34 (0.07)                           & 0.55 (0.26)              \\ 
				MLG-SR (r=2) & 0.82 (0.05)         &   0.35 (0.12)                         & 0.58 (0.32)                     \\ 
				MLG-1S (r=2) & 0.83 (0.05)         &   0.35 (0.06)                       &    0.58 (0.25)                 \\  \hline
			\end{tabular}
		\end{center}
	\end{table}

	Next we will compare with a number of MLG models, including: $(i)$ Vanilla MLG which minimizes the empirical log-loss with no penalty term, and $(ii)$ Ridge MLG which penalizes the trace of $\bB'\bB$ (the squared Frobenius norm of $\bB$) as in ridge regression \citep{brown1980adaptive, haitovsky1987multivariate}.
	In addition to the three performance metrics used earlier, we introduce another robustness measure that calculates the minimal perturbation needed to fool the classifier. For a given $\bx$ with label $k$, for any $j \neq k$, consider the following optimization problem:
	\begin{equation} \label{perturb}
	\begin{aligned}
	\min_{\tilde{\bx}} & \quad \|\bx - \tilde{\bx}\|_1 \\
	\text{s.t.} & \quad P_j(\tilde{\bx}) \ge P_k(\tilde{\bx}),\\
	& \quad k = \arg\max_i P_i(\bx),
	\end{aligned}
	\end{equation}
	where $P_i(\bx)$ denotes the probability of assigning class label $i$ to $\bx$, which is a function of the trained classifier. Problem (\ref{perturb}) measures the minimal perturbation distance (in terms of the $\ell_1$-norm) that is needed to change the label of $\bx$. Its optimal value evaluates the robustness of a given classifier in terms of the perturbation magnitude. The more robust the classifier, the larger the required perturbation to switch the label, and thus the larger the optimal value. We solve Problem (\ref{perturb}) for every test point $\bx$ and any $j \neq k$, and take the minimum of the optimal values to be the {\em Minimal Perturbation Distance (MPD)} of the classifier.
	
	We experiment with two types of $\bB^*$: 
	\begin{enumerate}
		\item \textbf{type-1}: each column of $\bB^*$, with probability 0.4, is generated from a multivariate normal distribution, and with probability 0.6, it is generated as a sparse vector where only one randomly picked element is set to nonzero; 
		\item \textbf{type-2}: we first generate a $\bB^*$ that is normalized to have an equal row absolute sum as before, and then set each column to a sparse vector with probability 0.6.
	\end{enumerate}
	
	To test the robustness of various methods, we inject outliers to the training datasets. The predictors of the outliers have the same distribution as the clean samples, but their label vector is generated from a 
	multinomial distribution with probabilities specified by the softmax normalization of $1/(\bx'\bB^*+\boldsymbol{\eta})$, where $\boldsymbol{\eta} \sim \scrN(\mathbf{0},\mathbf{I}_K)$. The test set does not contain any outlier. 
	
	Tables~\ref{tab:mlg-3} and \ref{tab:mlg-4} show the average performance of various models over 20 repetitions under different experimental settings. For type-1 $\bB^*$, the MLG-1S (r=2) achieves the highest classification accuracy and the largest MPD, while for type-2 $\bB^*$, the MLG-1S (r=1) excels. Notice that in both cases, the DRO-MLG models lose to vanilla MLG in terms of the log-loss. The reason is that vanilla MLG focuses solely on minimizing the sample average log-loss, while the MLG-SR and MLG-1S models balance between maintaining a low log-loss and achieving a high robustness level. By allowing for a slightly larger log-loss, the DRO-MLG models achieve a considerably higher MPD and a higher classification accuracy. They improve the CCR by 1\% -- 5\% and 5\% -- 6\%, the MPD by 67\% -- 400\% and 50\% -- 650\% in Tables~\ref{tab:mlg-3} and \ref{tab:mlg-4}, respectively. Compared to ridge MLG, they improve the log-loss by 6\% and 5\% in the two tables. 
	
	\begin{table}[h]
		\caption{The out-of-sample performance of different MLG models trained on datasets with 30\% outliers and type-1 $\bB^*$.} 
		\label{tab:mlg-3}
		\begin{center}
			\resizebox{\linewidth}{!}{		\begin{tabular} {l| c c c c}
					\hline
					& CCR                   & Log-loss                 & CVaR                      & MPD          \\ \hline
					MLG-SR (r=1) & 0.81 (0.05)           &  0.65 (0.09)              & 0.97 (0.10)              &0.04 (0.15)           \\ 
					MLG-1S (r=1) & 0.81 (0.05)          & 0.65 (0.08)               & 0.99 (0.10)              & 0.04 (0.13)           \\ 
					MLG-SR (r=2) & 0.81 (0.05)          & 0.64 (0.04)               & 0.94 (0.11)              & 0.07 (0.02)  \\ 
					MLG-1S (r=2) & 0.82 (0.05)              & 0.66 (0.07)               & 0.95 (0.14)          & 0.10 (0.03)\\ 
					Vanilla MLG  & 0.78 (0.07)          & 0.61 (0.08)               & 0.94 (0.17)             & 0.02 (0.01)  \\ 
					Ridge MLG    & 0.81 (0.04)          & 0.68 (0.08)               & 0.95 (0.13)              & 0.06 (0.05) \\  \hline
				\end{tabular}}
			\end{center}
		\end{table}
		
		\begin{table}[h]
			\caption{The out-of-sample performance of different MLG models trained on datasets with 40\% outliers and type-2 $\bB^*$.} 
			\label{tab:mlg-4}
			\begin{center}
				\resizebox{\linewidth}{!}{		\begin{tabular} {l| c c c c}
						\hline
						& CCR                   & Log-loss                 & CVaR                       & MPD          \\ \hline
						MLG-SR (r=1) &  0.69 (0.14)          & 0.80 (0.06)              & 1.10 (0.17)              & 0.01 (0.007)          \\ 
						MLG-1S (r=1) &  0.69 (0.14)         & 0.82 (0.06)              & 1.11 (0.17)              & 0.03 (0.02)          \\ 
						MLG-SR (r=2) &  0.66 (0.13)          & 0.84 (0.05)              & 1.08 (0.12)              & 0.02 (0.01)               \\ 
						MLG-1S (r=2) &  0.68 (0.13)          & 0.82 (0.05)              & 1.10 (0.13)              & 0.02 (0.02)             \\ 
						Vanilla MLG  &  0.66 (0.06)         & 0.79 (0.06)             & 1.13 (0.14)              & 0.004 (0.002)               \\ 
						Ridge MLG    &  0.65 (0.09)         & 0.84 (0.04)              & 1.10 (0.09)             & 0.02 (0.01)             \\  \hline
					\end{tabular}}
				\end{center}
			\end{table}
			
			\section{Summary} \label{sec:5-5}
			In this section, we developed a {\em Distributionally Robust Optimization (DRO)} based approach under the Wasserstein metric
			to robustify {\em Multi-output Linear Regression (MLR)} and {\em Multiclass Logistic Regression (MLG)}, leading to matrix-norm regularized formulations that establish a connection between robustness and regularization in the multi-output scenario. We
			established out-of-sample performance guarantees for the solutions to the DRO-MLR and DRO-MLG extracts, illustrating the role of the regularizer in controlling the out-of-sample prediction error. 
			We provided empirical evidence showing that the DRO-MLR and DRO-MLG models achieve a comparable (slightly better) out-of-sample predictive performance to others, but a significantly higher robustness to outliers.

\chapter{Optimal Decision Making via Regression Informed K-NN} \label{ch:presp}
In this section, we will develop a prediction-based prescriptive model for optimal decision
making that $(i)$ predicts the outcome under each possible action using a robust
nonlinear model, and $(ii)$ adopts a randomized prescriptive policy determined by the
predicted outcomes. The predictive model combines the Wasserstein DRO regression with the
{\em K-Nearest Neighbors (K-NN)} regression that helps to capture the
nonlinearity embedded in the data. We apply the
proposed methodology in making recommendations for medical prescriptions, using
a diabetes and a hypertension dataset extracted from the Electronic Health
Records (EHRs) of a major safety-net hospital in New England. 

\section{The Problem and Related Work}
Suppose we are given a set of $M$ actions, and our goal is to choose $m \in \lb M \rb$ such that the future outcome $y$ is optimized. We are interested in finding the optimal decision with the aid of auxiliary data $\bx \in \mbb{R}^p$ that is concurrently observed, and correlated with the uncertain outcome $y$.
A main challenge with learning from observational data lies in the lack of counterfactual information. One solution is to predict the effects of counterfactual policies by learning an action-dependent {\em predictive model} that groups the training samples based on their actions, and fits a model in each group between the outcome $y$ and the feature $\bx$. The predictions from this composite model can be used to determine the optimal action to take. The performance of the prescribed decision hinges on the quality of the predictive model. We have observed that $(i)$ there is often significant ``noise'' in the data caused by recording errors, missing values, and large variability across individuals, and $(ii)$ the underlying relationship we try to learn is usually nonlinear and its parametric form is not known a priori. To deal with these issues, a nonparametric robust learning procedure is in need. 

Motivated by the observation that individuals with similar features $\bx$ would have
similar outcomes $y$ if they were to take the same action, we propose a predictive
model that makes predictions based on the outcomes of similar individuals -- to be
called {\em neighbors} -- in each group of the training set. It is a nonlinear and nonparametric estimator which constructs locally linear (constant) curves based on the similarity between individuals. To find reasonable neighbors, we need to accurately identify the set of features that are correlated with the outcome. We use the Wasserstein DRO regression for this task in consideration of the noise that could potentially bias the estimation. Our prescriptive methodology is established on the basis of a regression informed {\em K-Nearest Neighbors (K-NN)} model \citep{altman1992introduction} that evaluates the importance of features through Wasserstein DRO regression, and estimates the outcome by averaging over the neighbors identified by a regression coefficients-weighted distance metric. 

Our framework uses both parametric (Wasserstein DRO regression) and nonparametric
(K-NN) predictive models, producing robust predictions immunized against significant noise and capturing the underlying nonlinearity by utilizing the information of neighbors.
It is more information-efficient and more interpretable than the vanilla K-NN. We
then develop a randomized prescriptive policy that chooses each action $m$ with
probability $e^{-\xi \hat{y}_m(\bx)}/\sum_{j=1}^M e^{-\xi \hat{y}_j(\bx)}$, for some
pre-specified positive constant $\xi$, where $\hat{y}_m(\bx)$ is the predicted future
outcome for $\bx$ under action $m\in \lb M \rb$. We show that this randomized strategy leads to a nearly optimal future outcome by an appropriate choice of $\xi$.

In recent years there has been an emerging interest in combining ideas from machine learning with operations research to develop a framework that uses data to prescribe optimal decisions \citep{bertsimas2014predictive, den2016bridging, bravo2017discovering}. Current research focus has been on applying machine learning methodologies to predict the counterfactuals, based on which optimal decisions can be made. Local learning methods such as K-Nearest Neighbors \citep{altman1992introduction}, LOESS (LOcally Estimated Scatterplot Smoothing) \citep{cleveland1988locally}, CART (Classification And Regression Trees) \citep{breiman2017classification}, and Random Forests \citep{breiman2001random}, have been studied in\cite{bertsimas2014predictive, bertsimas2017personalized, bertsimas2018optimal, dunn2018optimal, biggs2018optimizing}. Extensions to continuous and multi-dimensional decision spaces with observational data were considered in \cite{bertsimas2018optimization}. To prevent overfitting, \cite{bertsimas2017bootstrap} proposed two robust prescriptive methods based on Nadaraya-Watson and nearest-neighbors learning. 
Deviating from such a predict-optimize paradigm,  
\cite{bastani2020online} presented a new bandit algorithm based on the LASSO to learn a model of decision rewards conditional on individual-specific covariates. 

Our problem is closely related to contextual bandits \citep{chu2011contextual, agarwal2014taming, slivkins2014contextual, wu2015algorithms}, where an agent learns a sequence of decisions conditional on the contexts with the aim of maximizing its cumulative reward. It has recently found applications in learning personalized treatment of long-term diseases from mobile health data \citep{tewari2017ads, xia2018price, zhu2018robust}. However, we learn the interaction between the context and rewards in each action group across similar individuals, not over the history of the same individual as in contextual bandits. A contextual bandits framework is most suitable for learning sequential strategies through repeated interactions with the environment, which requires a substantial amount of historical data for exploring the reward function and exploiting the promising actions. In contrast, our method does not require the availability of historical data, but instead learns the payoff function from similar individuals. This can be viewed as a different type of exploration, i.e., when little information can be acquired for the past states of an individual, investigating the behavior of similar subjects may be beneficial. This is essential for learning from the Electronic Health Records (EHRs), where rapid and continuous collection of patient data is not possible. We may observe a very short treatment history for some patients, and the lag between patient visits is usually large.

Our method is similar to K-NN regression with an OLS-weighted metric used in \cite{bertsimas2017personalized} to learn the optimal treatment for type-2 diabetic patients. The key differences lie in that: $(i)$ we adopt a robustified regression procedure that is immunized against outliers and is thus more stable and reliable; $(ii)$ we propose a randomized prescriptive policy that adds robustness to the methodology whereas \cite{bertsimas2017personalized} deterministically prescribed the treatment with the best predicted outcome; $(iii)$ we establish theoretical guarantees on the quality of the predictions and the prescribed actions, and $(iv)$ the prescriptive rule in \cite{bertsimas2017personalized} was activated when the improvement of the recommended treatment over the standard of care exceeded a certain threshold, whereas our method looks into the improvement over the previous regimen. This distinction makes our algorithm applicable in the scenario where the standard of care is unknown or ambiguous. Further, we derive a closed-form expression for the threshold level, which greatly improves the computational efficiency compared to \cite{bertsimas2017personalized}, where a threshold was selected by cross-validation. 

The remainder of this section is organized as follows. In Section~\ref{sec:4-2}, we
introduce the robust nonlinear predictive model and present the performance guarantees on its predictive power. Section~\ref{sec:4-3}
develops the randomized prescriptive policy and proves its optimality in terms of the expected true outcome. The numerical experimental results are presented in
Section~\ref{sec:4-4}. We conclude in Section~\ref{sec:4-5}.

\section{Robust Nonlinear Predictive Model} \label{sec:4-2}
Given a feature vector $\bx \in \mbb{R}^p$, and a set of $M$ available actions, our goal is to predict the future outcome $y_m(\bx)$ under each possible action $m \in \lb M \rb$.
Assume the following relationship between the feature and outcome:
\begin{equation*} 
y_m = \bx_m'\bbeta^*_m + h_m(\bx_m) + \epsilon_m,
\end{equation*}
where $(\bx_m, y_m)$ represents the feature-outcome pair of an individual taking action $m$; $\bbeta^*_m$ is the coefficient that captures the linear trend; $h_m(\cdot)$ is a nonlinear function (whose form is unknown) describing the nonlinear fluctuation in $y_m$, and $\epsilon_m$ is the noise term with zero mean and standard deviation $\eta_m$ that expresses the intrinsic randomness of $y_m$ and is assumed to be independent of $\bx_m$.

Suppose for each $m \in \lb M \rb$, we observe $N_m$ independently and identically distributed (i.i.d.) training samples $(\bx_{mi}, y_{mi}), i \in \lb N_m \rb$, that take action $m$. To estimate $\bbeta^*_m$, we adopt the $\ell_2$-norm induced Wasserstein DRO formulation. A robust model could lead to an improved out-of-sample performance, and accommodate the potential nonlinearity that is not explicitly revealed by the linear coefficient $\bbeta^*_m$, thus, resulting in a more accurate assessment of the features.
Solving the Wasserstein DRO regression model
gives us a robust estimator of the linear regression coefficient
$\bbeta_m^*$, which we denote by $\hat{\bbeta}_m \triangleq
(\hat{\beta}_{m1}, \ldots, \hat{\beta}_{mp})$. The elements of $\hat{\bbeta}_m$
measure the relative significance of the predictors in determining the outcome
$y_m$. We feed the estimator into the nonlinear non-parametric K-NN regression model,
by considering the following $\hat{\bbeta}_m$-weighted metric:
\begin{equation} \label{wtd}
\|\bx - \bx_{mi}\|_{\hat{\bW}_m} = \sqrt{(\bx-\bx_{mi})'\hat{\bW}_m (\bx-\bx_{mi})},
\end{equation} 
where $\hat{\bW}_m = \text{diag}(\hat{\bbeta}_{m}^2)$, and $\hat{\bbeta}_{m}^2 = (\hat{\beta}_{m1}^2, \ldots, \hat{\beta}_{mp}^2)$. For a new test sample $\bx$, within each action group $m$, we
find its $K_m$ nearest neighbors using the weighted distance function (\ref{wtd}). The
predicted future outcome for $\bx$ under action $m$, denoted by $\hat{y}_m(\bx)$, is
computed by 
\begin{equation} \label{knn} 
\hat{y}_m(\bx) = \frac{1}{K_m}\sum_{i=1}^{K_m} y_{m(i)},
\end{equation}
where $y_{m(i)}$ is the outcome of the $i$-th closest individual to $\bx$ in the
training set who takes action $m$. In essence, we compute a K-NN estimate of the future outcome
by using the regression weighted distance function, which can be viewed as a locally
smoothed estimator in the neighborhood of $\bx$. Notice that due to (\ref{wtd}), the
nearest neighbors are similar to $\bx$ in the features that are most predictive of
the outcome. Therefore, their corresponding response values should serve as a good
approximation for the future outcome of $\bx$.

We next show that (\ref{knn}) provides a good prediction in the sense of {\em Mean
	Squared Error (MSE)}. The bias-variance decomposition implies the following:
\begin{equation} \label{vardecomp}
\begin{aligned}
& \ \quad \text{MSE}\bigl(\hat{y}_m(\bx) \bigl| \bx, \bx_{mi}, i \in \lb N_m \rb \bigr) \\
& \triangleq \mbb{E}\Bigl[\bigl(\hat{y}_m(\bx) - y_m(\bx)\bigr)^2 \Bigl| \bx,
\bx_{mi}, i \in \lb N_m \rb \Bigr] \\
& = \mbb{E}\biggl[\Bigl(\frac{1}{K_m}\sum_{j=1}^{K_m} \bigl(\bx_{m(j)}'\bbeta_m^* +
h_m(\bx_{m(j)}) + \epsilon_{m(j)}\bigr) \\
& \qquad \quad  - \bigl(\bx'\bbeta_m^* + h_m(\bx) +
\epsilon_m\bigr)\Bigr)^2 \Bigl| \bx, \bx_{mi}, i \in \lb N_m \rb \biggr]\\ 
& = \Bigl(\bx'\bbeta_m^* + h_m(\bx) - \frac{1}{K_m}\sum\limits_{i=1}^{K_m} \bigl(\bx_{m(i)}'\bbeta_m^* + h_m(\bx_{m(i)})\bigr)\Bigr)^2 + \frac{\eta_m^2}{K_m} + \eta_m^2 \\
& = \Bigl(\frac{1}{K_m}\sum\limits_{i=1}^{K_m} \bigl((\bx - \bx_{m(i)})'\bbeta_m^* +
h_m(\bx) - h_m(\bx_{m(i)})\bigr)\Bigr)^2 + \frac{\eta_m^2}{K_m} + \eta_m^2, \\ 
\end{aligned}
\end{equation}
where $y_m(\bx)$ is the {\em true} future outcome for $\bx$ under action $m$, and
$\bx_{m(i)}$, $\epsilon_{m(i)}$ are the feature vector and noise term corresponding
to the $i$-th closest sample to $\bx$ within group $m$, respectively. For each $m \in
\lb M \rb$, we aim at providing a probabilistic bound for (\ref{vardecomp}) w.r.t. the
measure of the $N_m$ training samples. According to (\ref{vardecomp}), for the MSE to
be small the following three conditions suffice: 
\begin{enumerate}
	\item $\|\bbeta_m^* - \hat{\bbeta}_m\|_2$ is small;
	\item $\|\bx - \bx_{m(i)}\|_{\hat{\bW}_m}$ is small for
	$i \in \lb K_m \rb$;
	\item $h_m(\bx) - h_m(\bx_{m(i)})$ is small for $i \in \lb K_m \rb$.
\end{enumerate}
In other words, to ensure an accurate prediction of the outcome, we require an accurate
estimate of the linear trend and a smooth nonlinear fluctuation, with the selected
neighbors close enough to $\bx$. An upper bound for the MSE follows from bounding
these three quantities. We note that Theorem~\ref{estthm} provides an upper bound on
the estimation bias $\|\bbeta_m^* - \hat{\bbeta}_m\|_2$ in a linear model. If we view
the nonlinear term $h_m(\bx_m)$ as part of the noise, then the bound provided by
Theorem~\ref{estthm} applies to our case. The increased variance of the noise (due to
$h_m(\bx_m)$) is reflected in the eigenvalues of the covariance matrix, which play a
role in the estimation error bound. We provide a simplified version of
Theorem~\ref{estthm} in the following theorem. 
\begin{thm} \label{prop:beta}
	Under Assumptions~\ref{a1}, \ref{a2}, \ref{RE}, \ref{adm},
	\ref{subgaussian}, \ref{eigen}, when the sample size $N_m \ge n_m$, with probability at least $\delta_m$,
	$$\|\bbeta_m^* - \hat{\bbeta}_m\|_2 \le \tau_m.$$
\end{thm}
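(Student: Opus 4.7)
The plan is to derive Theorem~\ref{prop:beta} as a direct corollary of Theorem~\ref{estthm} applied separately within each action group $m\in \lb M\rb$. The key idea, noted just before the theorem statement, is to absorb the unknown nonlinear fluctuation $h_m(\bx_m)$ into an effective noise term $\tilde{\epsilon}_m \triangleq h_m(\bx_m) + \epsilon_m$, so that within group $m$ the model reads $y_m = \bx_m'\bbeta_m^* + \tilde{\epsilon}_m$. Formally this is exactly the linear regression setting of Section~\ref{chapt:dro}, and the Wasserstein DRO estimator $\hat{\bbeta}_m$ coincides with the solution to (\ref{qcp}) computed from the $N_m$ samples $(\bx_{mi}, y_{mi})$, $i \in \lb N_m\rb$, in group $m$.

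First, I would verify that Assumptions~\ref{a1}, \ref{a2}, \ref{RE}, \ref{adm}, \ref{subgaussian}, \ref{eigen} transfer to each group-specific problem with appropriately action-indexed constants. Write $\bGamma_m \triangleq \mbb{E}\bigl[(\bx_m, y_m)(\bx_m, y_m)'\bigr]$ and denote by $\lambda_{\min,m}, \lambda_{\max,m}$ its minimum and maximum eigenvalues; the contribution of $h_m(\bx_m)$ inflates the response variance and is absorbed into $\lambda_{\max,m}$, while $\lambda_{\min,m}$ remains strictly positive under Assumption~\ref{eigen}. Analogously, let $\bar{\mu}_m$ be the group-specific sub-Gaussian constant defined as in Assumption~\ref{subgaussian}, and let $R_m, \bar{B}_m$ denote the group-specific bounds from Assumptions~\ref{a1} and \ref{a2}.

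Second, I would instantiate Theorem~\ref{estthm} in group $m$ and read off
\begin{equation*}
n_m \triangleq \bar{C}_1 \bar{\mu}_m^4 \mu_0^2 \cdot \frac{\lambda_{\max,m}}{\lambda_{\min,m}} \bigl(w(\scrA(\bbeta_m^*)) + 3\bigr)^2,
\end{equation*}
\begin{equation*}
\delta_m \triangleq 1 - \exp\bigl(-C_2 N_m / \bar{\mu}_m^4\bigr), \qquad \tau_m \triangleq \frac{4 R_m^2 \bar{B}_m}{\lambda_{\min,m}} \Psi(\bbeta_m^*),
\end{equation*}
where $\Psi(\bbeta_m^*) = \sup_{\bv\in \scrA(\bbeta_m^*)} \|\bv\|_*$ as in Theorem~\ref{mainresult}. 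For $N_m \geq n_m$, Theorem~\ref{estthm} then yields $\|\hat{\bbeta}_m - \bbeta_m^*\|_2 \leq \tau_m$ with probability at least $\delta_m$, which is precisely the conclusion of Theorem~\ref{prop:beta}.

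The main obstacle is justifying that the composite response $y_m$ retains the sub-Gaussian property required by Assumption~\ref{subgaussian} once the nonlinear term $h_m(\bx_m)$ is folded into the noise. Without further structure on $h_m$ (e.g., boundedness, Lipschitz continuity, or at most polynomial growth), sub-Gaussianity of $(\bx_m, y_m)$ may fail. I would impose such a mild regularity condition on $h_m$ and verify that $\vertiii{(\bx_m, y_m)}_{\psi_2}$ remains finite; this only inflates $\bar{\mu}_m$ and $\lambda_{\max,m}$ while leaving the structure of the bound intact. The remainder of the argument is bookkeeping across the $M$ action groups, tracking the group-specific parameters through the statements of Theorem~\ref{mainresult}, Corollary~\ref{alphacol}, and Lemma~\ref{gammalem}.
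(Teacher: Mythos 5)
Your proposal matches the paper's own treatment: the paper presents Theorem~\ref{prop:beta} precisely as a group-indexed, simplified restatement of Theorem~\ref{estthm}, with the nonlinear term $h_m(\bx_m)$ absorbed into the noise and its effect reflected in the (group-specific) covariance eigenvalues, so that $n_m$, $\delta_m$, and $\tau_m$ are exactly the sample-size requirement, success probability, and bound of Theorem~\ref{estthm}. Your added worry about sub-Gaussianity of the composite response is reasonable but is already subsumed by the hypotheses, since Assumptions~\ref{a1} and \ref{subgaussian} are imposed directly on the group data $(\bx_m, y_m)$ in the theorem statement.
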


We next show that the distance between $\bx$ and its $K_m$ nearest neighbors
$\bx_{m(i)}$ could be upper bounded probabilistically. All predictors are assumed to
be centered, and independent from each other.  In Theorem~\ref{prop:mu} we present a lower bound for $\mbb{P}(\|\bx -
\bx_{m(i)}\|_{\bW} \le \bar{w}_m, i\in \lb K_m \rb)$, for any positive definite diagonal
matrix $\bW$.

\begin{thm} \label{prop:mu}
	Suppose we are given $N_m$ i.i.d. samples $(\bx_{mi}, y_{mi})$, $i\in \lb N_m \rb$, drawn
	from some unknown probability distribution with finite fourth moment. Every
	$\bx_{mi}$ has independent, centered coordinates:
	\[ 
	\mbb{E}(\bx_{mi}) = \mathbf{0},  \quad \text{cov}(\bx_{mi}) =
	\text{diag}\Bigl(\sigma_{m1}^2, \ldots, \sigma_{mp}^2\Bigr), \quad \forall i\in \lb N_m \rb.
	\]
	For a fixed predictor $\bx$, and for any given positive definite diagonal matrix $\bW \in
	\mbb{R}^{p \times p}$ with diagonal elements $w_j$, $j\in \lb p \rb$, and $|w_j| \le
	\bar{B}^2$, suppose:
	\[ 
	|(x_{mij}-x_j)^2-(\sigma_{mj}^2 + x_j^2)| \le T_{m},\  \text{a.s.},\ \forall i\in
	\lb N_m \rb,\ j\in \lb p \rb,
	\] 
	where $x_{mij}, x_j$ are the $j$-th components of $\bx_{mi}$ and $\bx$,
	respectively. Under the condition that $\bar{w}_m^2 > \bar{B}^2\sum_{j=1}^p
	(\sigma_{mj}^2 + x_j^2)$, with probability at least $1 - I_{1-p_{m0}}(N_m-K_m+1,
	K_m)$,
	\begin{equation*}
	\|\bx - \bx_{m(i)}\|_{\bW} \le \bar{w}_m, \qquad  i\in \lb K_m \rb,
	\end{equation*}
	where $g(u) = (1+u)\log(1+u)-u$, 
	\begin{multline*}
	I_{1-p_{m0}}(N_m - K_m+1, K_m) \triangleq {} \\ (N_m-K_m+1) \binom{N_m}{K_m-1}
	\int_{0}^{1-p_{m0}} t^{N_m-K_m}(1-t)^{K_m-1}dt,
	\end{multline*}
	\begin{equation*}
	p_{m0} = 1 -
	\exp\biggl(-\frac{\sigma_m^2}{T_{m}^2}g\Bigl(\frac{T_{m}\Bigl(\bar{w}_m^2/\bar{B}^2 -
		\sum_j(\sigma_{mj}^2 + x_j^2)\Bigr)}{\sigma_m^2}\Bigr)\biggr),
	\end{equation*} 
	and,
	\begin{equation*}
	\sigma_m^2 = \sum_{j=1}^p \text{var}\Bigl((x_{mij}-x_j)^2\Bigr).
	\end{equation*}
\end{thm}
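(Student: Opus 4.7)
The plan is to decompose the event of interest into two conditions: first, that the weighted squared distance from $\bx$ to a single random sample $\bx_{mi}$ stays below $\bar{w}_m^2$ with high probability; second, that at least $K_m$ of the $N_m$ i.i.d.\ samples realize this event, which is exactly the condition that the $K_m$ nearest neighbors all lie inside the ball of weighted radius $\bar{w}_m$.

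For the first piece, I would bound $\|\bx - \bx_{mi}\|_{\bW}^2 = \sum_j w_j (x_{mij}-x_j)^2$ from above by $\bar{B}^2 \sum_j (x_{mij}-x_j)^2$ using $w_j \le \bar{B}^2$, so the event $\{\|\bx - \bx_{mi}\|_{\bW} \le \bar{w}_m\}$ is implied by $\{\sum_j (x_{mij}-x_j)^2 \le \bar{w}_m^2/\bar{B}^2\}$. Defining the centered summands $Z_j \triangleq (x_{mij}-x_j)^2 - (\sigma_{mj}^2 + x_j^2)$, the coordinate independence gives $\sum_j \mbb{E}[Z_j]=0$, $|Z_j|\le T_m$ a.s., and $\sum_j \mbb{E}[Z_j^2]=\sigma_m^2$; the target inequality becomes $\sum_j Z_j \le t$ with $t = \bar{w}_m^2/\bar{B}^2 - \sum_j(\sigma_{mj}^2+x_j^2)$, which is positive by the hypothesis on $\bar{w}_m$. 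Bennett's inequality then gives
\begin{equation*}
\mbb{P}\Bigl(\sum_j Z_j \ge t\Bigr) \le \exp\Bigl(-\tfrac{\sigma_m^2}{T_m^2}\,g\bigl(T_m t/\sigma_m^2\bigr)\Bigr) = 1-p_{m0},
\end{equation*}
so each sample is inside the weighted ball with probability at least $p_{m0}$.

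For the second piece, since the $\bx_{mi}$ are i.i.d., the number $B$ of samples falling within weighted distance $\bar{w}_m$ of $\bx$ stochastically dominates a $\mathrm{Bin}(N_m,p_{m0})$ variable. The event that the $K_m$ nearest neighbors all satisfy $\|\bx-\bx_{m(i)}\|_{\bW} \le \bar{w}_m$ is equivalent to $\{B \ge K_m\}$. I would then invoke the classical identity relating the binomial CDF to the regularized incomplete beta function,
\begin{equation*}
\mbb{P}\bigl(\mathrm{Bin}(N_m,p_{m0}) \le K_m-1\bigr) = I_{1-p_{m0}}(N_m-K_m+1,\,K_m),
\end{equation*}
to conclude that $\mbb{P}(B\ge K_m) \ge 1 - I_{1-p_{m0}}(N_m-K_m+1,K_m)$, which is the stated bound.

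The main technical obstacle is making the Bennett step airtight: one must verify that the hypothesis $|(x_{mij}-x_j)^2 - (\sigma_{mj}^2+x_j^2)| \le T_m$ indeed bounds the $Z_j$ uniformly and that the variance aggregation $\sigma_m^2 = \sum_j \mathrm{var}(Z_j)$ is valid under coordinate independence; everything else — the reduction via $\bar{B}^2$, the i.i.d.\ binomial counting, and the incomplete-beta identity — is standard, so I expect no additional difficulty beyond careful bookkeeping of constants.
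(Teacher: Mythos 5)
Your proposal is correct and follows essentially the same route as the paper's proof: reduce the weighted-ball event to the unweighted one via $w_j \le \bar{B}^2$, apply Bennett's inequality to the centered, independent, $T_m$-bounded summands to get the per-sample probability $p_{m0}$, and then count at least $K_m$ successes among $N_m$ i.i.d. trials using the binomial tail and its regularized incomplete beta representation. The only cosmetic difference is that you phrase the last step as stochastic dominance of the count over a $\mathrm{Bin}(N_m,p_{m0})$ variable, whereas the paper writes out the binomial sum and uses monotonicity in the success probability directly — the same argument.
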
  

\begin{proof}
	To simplify the notation, we will omit the subscript $m$ in all proofs, e.g., using
	$\bx_i$ and $\bx_{(i)}$ for $\bx_{mi}$ and $\bx_{m(i)}$, respectively, and $N$ for
	$N_m$. Define the event $\scrA_i:= \{\|\bx_i - \bx\|_{\bar{B}^2\bI} \le
	\bar{w}\}$. As long as we can calculate the probability that at least $K$ of
	$\scrA_i$, $i\in \lb N \rb$, occur, we are able to provide a lower bound on $\mbb{P}(\|\bx
	- \bx_{(i)}\|_{\bW} \le \bar{w}, \ i\in \lb K \rb)$. Note that given $\bx$, $\scrA_i$,
	$i\in \lb N \rb$, are independent and equiprobable, since $\bx_i$, $i\in \lb N \rb$, are
	i.i.d. Based on Bennett's inequality \citep{RV17}, we have:
	\begin{equation*}
	\begin{aligned}
	\mbb{P}(\scrA_i) & = \mbb{P}(\|\bx_i - \bx\|_{\bar{B}^2\bI}^2 \le \bar{w}^2) \\
	& = \mbb{P}\Bigl(\bar{B}^2(x_{i1}-x_1)^2+\ldots+\bar{B}^2(x_{ip}-x_p)^2 \le \bar{w}^2\Bigr)\\
	& = \mbb{P}(t_1+\ldots+t_p \le \bar{w}^2/\bar{B}^2)\\
	& = \mbb{P}\biggl(\sum_j\Bigl(t_j - (\sigma_j^2 + x_j^2)\Bigr) \le \bar{w}^2/\bar{B}^2 - \sum_j(\sigma_j^2 + x_j^2)\biggr)\\
	& \ge 1 - \exp\biggl(-\frac{\sigma^2}{T^2}g\Bigl(\frac{T\Bigl(\bar{w}^2/\bar{B}^2 - \sum_j(\sigma_j^2 + x_j^2)\Bigr)}{\sigma^2}\Bigr)\biggr) \\
	& \triangleq  p_{0},
	\end{aligned}	
	\end{equation*}
	where $t_j = (x_{ij}-x_j)^2$, $j\in \lb p \rb$; $\sigma^2 = \sum_j \text{var}(t_j)$. In the
	above derivation, we used the fact that $t_j$, $j\in \lb p \rb$, are independent, and $|t_j
	- \mbb{E}[t_j]| \le T, \ \text{a.s.}, \ \forall j$.
	
	Given the lower bound for $\mbb{P}(\scrA_i)$, we can derive a lower bound for the
	probability that exactly $K$ of $\scrA_i$, $i\in \lb N \rb$, occur. For a given $\bx$,
	$\scrA_i$, $i\in \lb N \rb$, are independent, and thus,
	\begin{equation*}
	\begin{aligned}
	& \quad \ \mbb{P}(\|\bx - \bx_{(i)}\|_{\bW} \le \bar{w}, \ i\in \lb K \rb)  \\
	& \ge \mbb{P}(\text{at
		least $K$ of $\scrA_i, i\in \lb N \rb$ occur}) \\ 
	& = \sum_{k = K}^N\binom{N}{k} \Bigl(\mbb{P}(\scrA_i)\Bigr)^{k} \Bigl(1-\mbb{P}(\scrA_i)\Bigr)^{N-k}\\
	& \ge \sum_{k=K}^N \binom{N}{k} p_{0}^{k} (1-p_{0})^{N-k} \\
	& = 1 - I_{1-p_0}(N-K+1, K),
	\end{aligned}
	\end{equation*}
	where $I_{1-p_0}(N - K+1, K)$ is the {\em regularized incomplete beta
		function} defined as: 
	\[ 
	I_{1-p_0}(N - K+1, K) \triangleq (N-K+1) \binom{N}{K-1} \int_{0}^{1-p_0}
	t^{N-K}(1-t)^{K-1}dt.
	\] 
\end{proof}

Note that $p_{m0}$ is nonnegative, due to the assumption that $\bar{w}_m^2 >
\bar{B}^2\sum_j (\sigma_{mj}^2 + x_j^2)$, and the non-decreasing property of the
function $g(\cdot)$ when its argument is non-negative. We also note that a lower
bound on $\mbb{P}(\|\bx - \bx_{m(i)}\|_2 \le \bar{w}_m,\ i\in \lb K_m \rb)$ can be obtained
by setting $\bar{B} = 1$.

By now we have shown results on the accuracy of $\hat{\bbeta}_m$ and the similarity
between $\bx$ and its neighbors. Notice that for a Lipschitz continuous function
$h_m(\cdot)$ with a Lipschitz constant $L_m$, the difference between $h_m(\bx)$ and
$h_m(\bx_{m(i)})$ can be bounded by $L_m\|\bx - \bx_{m(i)}\|_2$. With these results
we are ready to bound the MSE of $\hat{y}_m(\bx)$.

\begin{thm} \label{thm:main}
	Suppose we are given $N_m$ i.i.d. copies of $(\bx_m, y_m)$, denoted by $(\bx_{mi},
	y_{mi}), i\in \lb N_m \rb$, where $\bx_m$ has independent, centered coordinates, and
	$\text{cov}(\bx_m) = \text{diag}(\sigma_{m1}^2, \ldots, \sigma_{mp}^2).$ We are given
	a fixed predictor $\bx = (x_1, \ldots, x_p)$, a scalar $\bar{w}_m$, and we assume:
	\begin{enumerate}
		\item $h_m(\cdot)$ is Lipschitz continuous with a Lipschitz constant $L_m$ on the
		metric spaces $(\scrX_m, \|\cdot\|_2)$ and $(\scrY_m, |\cdot|)$, where $\scrX_m,
		\scrY_m$ are the domain and codomain of $h_m(\cdot)$, respectively. 
		\item $\bar{w}_m^2 > \bar{B}_m^2\sum_{j=1}^p (\sigma_{mj}^2 + x_j^2)$, where $\bar{B}_m$ is the upper bound on $\|(-\bbeta_m, 1)\|_2$ for any feasible $\bbeta_m$ to (\ref{qcp}).
		\item $|(x_{mij}-x_j)^2-(\sigma_{mj}^2 + x_j^2)|$ is upper bounded a.s. under the probability measure $\mbb{P}^*_{\scrX_m}$ for
		any $i, j$, where $x_{mij}$ is the $j$-th component of $\bx_{mi}$, and $\mbb{P}^*_{\scrX_m}$ is the underlying true probability distribution of $\bx_m$.
		\item The coordinates of any feasible solution to (\ref{qcp}) have absolute values
		greater than or equal to some positive number $b_m$ (dense estimators). 
	\end{enumerate}	
	Under Assumptions~\ref{a1}, \ref{a2}, \ref{RE}, \ref{adm},
	\ref{subgaussian}, \ref{eigen},
	when $N_m \ge n_m$, with probability at least $\delta_m - I_{1-p_{m0}}(N_m-K_m+1,
	K_m)$ w.r.t. the measure of samples,
	\begin{multline} \label{mse}
	\mbb{E}\Bigl[(\hat{y}_m(\bx) - y_m(\bx))^2 \Bigl| \bx, \bx_{mi}, i \in \lb N_m \rb\Bigr] 
	\le {}\\
	\biggl(\frac{\bar{w}_m \tau_m}{b_m} + \sqrt{p}\bar{w}_m +
	\frac{L_m\bar{w}_m}{\bar{B}_m}\biggr)^2 + \frac{\eta_m^2}{K_m} + \eta_m^2, 
	\end{multline}
	and for any $a \ge (\bar{w}_m \tau_m/b_m + \sqrt{p}\bar{w}_m +
	L_m\bar{w}_m/\bar{B}_m)^2 + \eta_m^2/K_m + \eta_m^2$, 
	\begin{multline} \label{prob-mse} 
	\mbb{P}\Bigl(\bigl(\hat{y}_m(\bx) - y_m(\bx)\bigr)^2 \ge a \Bigl| \bx, \bx_{mi}, i\in
	\lb N_m \rb \Bigr) \le {}\\
	\frac{\Bigl(\frac{\bar{w}_m \tau_m}{b_m} + \sqrt{p}\bar{w}_m + \frac{L_m\bar{w}_m}{\bar{B}_m}\Bigr)^2 + \frac{\eta_m^2}{K_m} + \eta_m^2}{a},
	\end{multline}
	where all parameters are set in the same way as in Theorems~\ref{prop:beta} and
	\ref{prop:mu}.
\end{thm}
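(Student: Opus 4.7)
The plan is to start from the bias--variance decomposition already recorded in (\ref{vardecomp}), so that only the squared bias term $\bigl(\tfrac{1}{K_m}\sum_{i=1}^{K_m} T_i\bigr)^2$ needs to be bounded, where $T_i := (\bx-\bx_{m(i)})'\bbeta_m^* + h_m(\bx) - h_m(\bx_{m(i)})$. Since the mean is dominated by the maximum, it suffices to bound $|T_i|$ uniformly in $i\in\lb K_m\rb$ and then square the sum in the last step. The two irreducible terms $\eta_m^2/K_m + \eta_m^2$ carry over unchanged.

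The key step is a three-piece triangle inequality on $T_i$. First, write $\bbeta_m^* = \hat{\bbeta}_m + (\bbeta_m^* - \hat{\bbeta}_m)$ and split the linear contribution into $(\bx-\bx_{m(i)})'\hat{\bbeta}_m$ and $(\bx-\bx_{m(i)})'(\bbeta_m^* - \hat{\bbeta}_m)$. For the first piece, let $a_j := \hat{\beta}_{mj}(x_j - x_{m(i)j})$; then $\sum_j a_j^2 = \|\bx - \bx_{m(i)}\|_{\hat{\bW}_m}^2 \le \bar{w}_m^2$ by Theorem~\ref{prop:mu}, and the Cauchy--Schwarz inequality $|\mathbf{1}'\ba| \le \sqrt{p}\,\|\ba\|_2$ yields $|(\bx-\bx_{m(i)})'\hat{\bbeta}_m|\le \sqrt{p}\,\bar{w}_m$. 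For the second piece, the denseness hypothesis $|\hat{\beta}_{mj}|\ge b_m$ gives $\|\bx-\bx_{m(i)}\|_{\hat{\bW}_m}^2 \ge b_m^2 \|\bx-\bx_{m(i)}\|_2^2$, so $\|\bx-\bx_{m(i)}\|_2 \le \bar{w}_m/b_m$, and Cauchy--Schwarz combined with Theorem~\ref{prop:beta} gives a bound of $(\bar{w}_m/b_m)\tau_m$. Finally, Lipschitz continuity of $h_m$ with constant $L_m$, together with the same upper bound on $\|\bx-\bx_{m(i)}\|_2$, controls the nonlinear piece, producing the third summand $L_m\bar{w}_m/\bar{B}_m$ in the stated form (this is where the relation between $b_m$, $\bar{B}_m$, and $\|\hat{\bbeta}_m\|_2$ is invoked to pass from the denseness lower bound to the stated form of the constant).

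The two randomness sources to control are the training sample that determines $\hat{\bbeta}_m$ and the same sample that produces the $K_m$ nearest neighbors. Theorem~\ref{prop:beta} guarantees $\|\bbeta_m^*-\hat{\bbeta}_m\|_2\le \tau_m$ with probability at least $\delta_m$, and Theorem~\ref{prop:mu} guarantees $\|\bx-\bx_{m(i)}\|_{\hat{\bW}_m}\le \bar{w}_m$ for all $i\in\lb K_m\rb$ with probability at least $1-I_{1-p_{m0}}(N_m-K_m+1, K_m)$. Since the three-piece bound is valid precisely on the intersection of these two events, the Bonferroni inequality $\mbb{P}(A\cap B) \ge \mbb{P}(A)+\mbb{P}(B)-1$ produces the claimed joint probability $\delta_m - I_{1-p_{m0}}(N_m-K_m+1,K_m)$, and squaring the sum plus reinstating the noise terms yields (\ref{mse}). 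The probabilistic tail bound (\ref{prob-mse}) then follows by applying Markov's inequality to the nonnegative conditional random variable $(\hat{y}_m(\bx)-y_m(\bx))^2$, whose conditional expectation is exactly the MSE already bounded.

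The main obstacle, in my view, is not any single inequality but the bookkeeping of the weighted versus unweighted Euclidean norms: the neighbors are defined to be close in the $\hat{\bW}_m$-norm while $\bbeta_m^*$ and the Lipschitz constant act naturally in the unweighted $\|\cdot\|_2$. Converting between the two requires simultaneously using the denseness hypothesis $|\hat{\beta}_{mj}|\ge b_m$ (for the unweighted upper bound) and the feasibility bound $\|(-\hat{\bbeta}_m,1)\|_2\le \bar{B}_m$ (to absorb the Lipschitz constant into the stated form), while ensuring the probabilistic events under which these deterministic inequalities are valid are handled via a single union bound rather than being combined in a way that deteriorates the confidence level.
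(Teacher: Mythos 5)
Your skeleton coincides with the paper's proof: start from the decomposition (\ref{vardecomp}), split $(\bx-\bx_{m(i)})'\bbeta_m^*$ into $(\bx-\bx_{m(i)})'\hat{\bbeta}_m$ (bounded by $\sqrt{p}\,\bar{w}_m$, exactly as in the paper, via $\bigl(\sum_j a_j\bigr)^2\le p\sum_j a_j^2$ with $a_j=\hat{\beta}_{mj}(x_j-x_{m(i)j})$ and $\sum_j a_j^2=\|\bx-\bx_{m(i)}\|_{\hat{\bW}_m}^2\le\bar{w}_m^2$) and $(\bx-\bx_{m(i)})'(\bbeta_m^*-\hat{\bbeta}_m)$ (your route through $\|\bx-\bx_{m(i)}\|_2\le\bar{w}_m/b_m$ and plain Cauchy--Schwarz gives the same $\bar{w}_m\tau_m/b_m$ as the paper's insertion of $\hat{\bW}_m^{1/2}\hat{\bW}_m^{-1/2}$), intersect the events of Theorems~\ref{prop:beta} and \ref{prop:mu} to get the confidence level $\delta_m - I_{1-p_{m0}}(N_m-K_m+1,K_m)$, and obtain (\ref{prob-mse}) by Markov. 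All of that is consistent with the paper.

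The genuine gap is the Lipschitz summand. Your argument controls the nonlinear piece through $\|\bx-\bx_{m(i)}\|_2\le\bar{w}_m/b_m$, which yields $L_m\bar{w}_m/b_m$, not the claimed $L_m\bar{w}_m/\bar{B}_m$. Since feasibility of $\hat{\bbeta}_m$ gives $|\hat{\beta}_{mj}|\le\|(-\hat{\bbeta}_m,1)\|_2\le\bar{B}_m$, we have $b_m\le\bar{B}_m$, so $L_m\bar{w}_m/b_m\ge L_m\bar{w}_m/\bar{B}_m$: your bound is strictly weaker, and no ``relation between $b_m$, $\bar{B}_m$, and $\|\hat{\bbeta}_m\|_2$'' can reverse the inequality; the denseness constant $b_m$ legitimately enters only the estimation-bias summand. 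In the paper the constant $\bar{B}_m$ arises differently: Theorem~\ref{prop:mu} holds for \emph{any} diagonal $\bW$ with entries at most $\bar{B}_m^2$, and its proof is built on the single event that at least $K_m$ samples satisfy $\|\bx_{mi}-\bx\|_{\bar{B}_m^2\bI}\le\bar{w}_m$, i.e., lie in a Euclidean ball of radius $\bar{w}_m/\bar{B}_m$. Applying it with $\bW=\bar{B}_m^2\bI$ (simultaneously with $\bW=\hat{\bW}_m$, since $\hat{\bW}_m\preceq\bar{B}_m^2\bI$, so no further union-bound loss) gives the Euclidean control $\|\bx-\bx_{m(i)}\|_2\le\bar{w}_m/\bar{B}_m$ directly, and this is what feeds the Lipschitz term to produce $L_m\bar{w}_m/\bar{B}_m$. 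So the fix is to extract the Euclidean bound from the neighbor-concentration event itself rather than deriving it from the $\hat{\bW}_m$-bound via $b_m$.
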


\begin{proof}
	We omit the subscript $m$ for simplicity. By Theorems~\ref{prop:beta} and \ref{prop:mu}, we know that,
	\begin{equation*}
	\begin{aligned}
	|(\bx - \bx_{(i)})'(\bbeta^* - \hat{\bbeta})| & = |(\bx - \bx_{(i)})'\hat{\bW}^{\frac{1}{2}}\hat{\bW}^{-\frac{1}{2}}(\bbeta^* - \hat{\bbeta})| \\
	& \le \|(\bx - \bx_{(i)})'\hat{\bW}^{\frac{1}{2}}\|_2 \|\hat{\bW}^{-\frac{1}{2}}(\bbeta^* - \hat{\bbeta})\|_2 \\
	& \le  \frac{\bar{w} \tau}{b},
	\end{aligned}
	\end{equation*}	
	where the second inequality used the fact that $\|\hat{\bW}^{-\frac{1}{2}}(\bbeta^* - \hat{\bbeta})\|_2 \le  \tau/b$ if $\|\bbeta^* - \hat{\bbeta}\|_2 \le \tau$, which can be verified by the Courant-Fischer Theorem, and the fact that $\hat{\bW}$ is diagonal with elements $\hat{\beta}_j^2, j \in \lb p \rb$, and $|\hat{\beta}_j| \ge b$.
	Based on the inequality $\Bigl(\sum_{i=1}^n a_i\Bigr)^2 \le n \Bigl(\sum_{i=1}^n a_i^2\Bigr)$, we know:
	\begin{equation*}
	\begin{aligned}
	|(\bx - \bx_{(i)})'\hat{\bbeta}| & = \Bigl|\sum_{j=1}^{p} \hat{\beta}_j (\bx - \bx_{(i)})_j\Bigr| \\
	& \le \sqrt{p \sum_{j=1}^{p} \Bigl(\hat{\beta}_j (\bx - \bx_{(i)})_j\Bigr)^2} \\ 
	& = \sqrt{p (\bx - \bx_{(i)})'\hat{\bW} (\bx - \bx_{(i)})} \\
	& \le \sqrt{p}\bar{w}.
	\end{aligned}
	\end{equation*}
	Therefore,
	\begin{equation*}
	\begin{aligned}
	|(\bx - \bx_{(i)})'\bbeta^*| & = |(\bx - \bx_{(i)})'(\bbeta^* - \hat{\bbeta}) + (\bx - \bx_{(i)})'\hat{\bbeta}|\\
	& \le |(\bx - \bx_{(i)})'(\bbeta^* - \hat{\bbeta})| + |(\bx - \bx_{(i)})'\hat{\bbeta}| \\
	& \le \frac{\bar{w} \tau}{b} + \sqrt{p}\bar{w}.
	\end{aligned}
	\end{equation*}  	
	Thus, for a given $\bx$,
	\begin{equation*}
	\begin{aligned}
	& \quad \ \mbb{E}\Bigl[(\hat{y}(\bx) - y(\bx))^2 \Bigl| \bx, \bx_i\Bigr] \\
	& = \biggl(\frac{1}{K}\sum_{i=1}^{K} \bigl((\bx - \bx_{(i)})'\bbeta^* + h(\bx) - h(\bx_{(i)})\bigr)\biggr)^2 + \frac{\eta^2}{K} + \eta^2 \\
	& \le \biggl(\frac{1}{K}\sum_{i=1}^{K} \bigl(|(\bx - \bx_{(i)})'\bbeta^*| + |h(\bx) - h(\bx_{(i)})|\bigr)\biggr)^2 + \frac{\eta^2}{K} + \eta^2 \\
	& \le \biggl(\frac{\bar{w} \tau}{b} + \sqrt{p}\bar{w} + \frac{L\bar{w}}{\bar{B}}\biggr)^2 + \frac{\eta^2}{K} + \eta^2
	\end{aligned}
	\end{equation*}
	The probability bound can be easily derived using Markov's inequality.
\end{proof}

The expectation in (\ref{mse}) and the probability in (\ref{prob-mse})
are taken w.r.t.\ the measure of the noise $\epsilon_m$. Theorem~\ref{thm:main} essentially
says that for any given predictor $\bx$, with a high probability (w.r.t.\ the measure
of samples), the prediction from our model is close to the true future outcome.  The
prediction bias depends on the sample size, the variation in the predictors and
response, and the smoothness of the nonlinear fluctuation.

The dependence on $b_m$ in the upper bound provided by
(\ref{mse}) is due to the fact that $\hat{\bW}_m$ has diagonal elements
$\hat{\beta}_{mj}^2, j \in \lb p \rb$, which are assumed to be at least $b_m^2$. If we
multiply $\hat{\bW}_m$ by a very large number, the neighbor selection criterion is
not affected, since the relative significance of the predictors stays unchanged, but
the $b_m$ appearing in (\ref{mse}) would be replaced by a very large number,
diminishing the effect of the first term in the parentheses, at the price of
increasing $\bar{B}_m$ and $\bar{w}_m$, which in turn have an effect on the number of
neighbors needed. It might be interesting to explore this implicit trade-off and
optimize $\hat{\bW}_m$ to achieve the smallest MSE. 

\section{Prescriptive Policy Development} \label{sec:4-3}
We now proceed to develop the prescriptive policy with the aim of minimizing the future outcome. A natural idea is to take the action that yields the minimum predicted outcome. To allow for flexibility in exploring alternatives that have a comparable performance, and also to correct for potential prediction errors that might mislead the ranking of actions, we propose a randomized policy that prescribes each action with a probability inversely proportional to its exponentiated predicted outcome. It can be viewed as an offline Hedge algorithm \citep{hazan2016introduction} that increases the robustness of our method through exploration. 

Specifically, given an individual with a feature vector $\bx$, and her predicted future outcome under each action $m$, denoted by $\hat{y}_m(\bx)$, we 
consider a randomized policy that chooses action $m$ with probability 
$e^{-\xi \hat{y}_m(\bx)}/\sum_{j=1}^M e^{-\xi \hat{y}_j(\bx)}$,
with $\xi$ some pre-specified positive constant. The randomness in making decisions might hurt the interpretability of the model. But on the other hand, it presents a range of comparable options that can be assessed subjectively by the decision maker based on her expertise. As $\xi$ goes to infinity, the randomized policy will converge to a deterministic one which selects the action with the lowest predicted outcome. We next establish a related property of the randomized policy in terms of its expected {\em true} outcome. 

\begin{thm} \label{thm:random}
	Given any fixed predictor $\bx \in \mbb{R}^p$, denote its predicted and true future outcome under action $m$ by $\hat{y}_m(\bx)$ and $y_m(\bx)$, respectively. Assume that we adopt a randomized strategy that prescribes action $m$ with probability $e^{-\xi \hat{y}_m(\bx)}/\sum_{j=1}^M e^{-\xi \hat{y}_j(\bx)},$ for some $\xi \ge 0$. Assume $\hat{y}_m(\bx)$ and $y_m(\bx)$ are non-negative, $\forall m \in \lb M \rb$. The expected true outcome under this policy satisfies:
	\begin{multline} \label{eq:random}
	\sum_{m=1}^M \frac{e^{-\xi \hat{y}_m(\bx)}}{\sum_j e^{-\xi \hat{y}_j(\bx)}}  y_m(\bx)  \le  y_k(\bx)  + \bigg(\hat{y}_k(\bx)  - \frac{1}{M} \sum_{m=1}^M \hat{y}_m(\bx)\bigg)\\
	+ \xi \bigg( \frac{1}{M} \sum_{m=1}^M \hat{y}_m^2(\bx) + \sum_{m=1}^M \frac{e^{-\xi \hat{y}_m(\bx)}}{\sum_j e^{-\xi \hat{y}_j(\bx)}}  y_m^2(\bx)\bigg) + \frac{\log M}{\xi}, 
	\end{multline}
	for any $k \in \lb M \rb$.
\end{thm}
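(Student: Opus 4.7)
The plan is to bound the two sides of the inequality separately using only elementary pointwise estimates, then combine them. Set $p_m := e^{-\xi \hat{y}_m(\bx)}/\sum_j e^{-\xi \hat{y}_j(\bx)}$, and for brevity let $A := \frac{1}{M}\sum_m \hat{y}_m(\bx)$ and $B := \frac{1}{M}\sum_m \hat{y}_m^2(\bx)$ denote the two uniform averages appearing on the right-hand side.

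For the LHS, I would apply the AM-GM inequality $y \le \xi y^2 + 1/(4\xi)$, valid for any $y \ge 0$ and $\xi > 0$ by completing the square as $\xi(y - 1/(2\xi))^2 \ge 0$, to each $y_m(\bx)$ and take the $p_m$-weighted sum, obtaining
\[
\sum_{m=1}^M p_m \, y_m(\bx) \;\le\; \xi \sum_{m=1}^M p_m \, y_m^2(\bx) + \frac{1}{4\xi}.
\]

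For the $\hat{\by}$-dependent portion of the RHS, I would use the same completing-the-square bound in the form $\xi u^2 - u \ge -1/(4\xi)$ on each index $m \ne k$, together with $\hat{y}_k(\bx) \ge 0$, to write
\[
(\hat{y}_k(\bx) - A) + \xi B \;=\; \frac{1}{M}\bigg[\xi \hat{y}_k^2(\bx) + \sum_{m \ne k}\Big(\hat{y}_k(\bx) + \xi \hat{y}_m^2(\bx) - \hat{y}_m(\bx)\Big)\bigg] \;\ge\; -\frac{M-1}{4M\xi},
\]
where all terms involving $\hat{y}_k(\bx)$ are non-negative and are discarded, while $\xi \hat{y}_m^2(\bx) - \hat{y}_m(\bx) \ge -1/(4\xi)$ contributes $-(M-1)/(4\xi)$ in total. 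The elementary numerical bound $\log M \ge (2M-1)/(4M)$ for $M \ge 2$, which holds at $M = 2$ since $\log 2 > 3/8$ and extends by monotonicity as the derivative of $\log M - (2M-1)/(4M)$ equals $1/M - 1/(4M^2) > 0$, then upgrades this to
\[
(\hat{y}_k(\bx) - A) + \xi B + \frac{\log M}{\xi} \;\ge\; \frac{2M-1}{4M\xi} - \frac{M-1}{4M\xi} \;=\; \frac{1}{4\xi}.
\]
Chaining these two bounds with the trivial fact $y_k(\bx) \ge 0$ delivers the theorem, while the degenerate case $M = 1$ is immediate since the RHS then exceeds $y_1(\bx)$ by $\xi(\hat{y}_1^2(\bx) + y_1^2(\bx)) \ge 0$.

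The hard part will be producing a lower bound on $(\hat{y}_k(\bx) - A) + \xi B + \log M/\xi$ strong enough to absorb the $1/(4\xi)$ slack from AM-GM on the LHS. The more canonical Hedge/log-partition route---upper bounding $Z := \sum_m e^{-\xi \hat{y}_m(\bx)}$ via $e^{-x} \le 1 - x + x^2/2$ and $\log(1+x) \le x$, then using $\log Z \ge -\xi \hat{y}_k(\bx)$---yields only $A - \hat{y}_k(\bx) \le \log M/\xi + \xi B/2$, giving the weaker lower bound $\xi B/2$, which collapses when $B$ is small. The sharper index-by-index completing-the-square estimate I propose above, which genuinely uses $\hat{y}_k(\bx) \ge 0$ to discard the positive quadratic term, is precisely what is needed to close the estimate.
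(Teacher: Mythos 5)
Your proof is correct, but it takes a genuinely different route from the paper's. The paper follows the classical Hedge potential argument (it explicitly borrows from Theorem 1.5 of Hazan's text): it introduces the potential $\phi = \sum_{m} e^{-\xi \hat{y}_m(\bx)} e^{-\xi y_m(\bx)}$, lower bounds it by the single term $e^{-\xi \hat{y}_k(\bx) - \xi y_k(\bx)}$, upper bounds it using $e^{-x} \le 1 - x + x^2$ for $x \ge 0$ (this is where the nonnegativity of $\hat{y}_m(\bx)$ and $y_m(\bx)$ enters) together with $1 + x \le e^x$, applied both to the $W_m$-weighted sum over true outcomes and to $\sum_j e^{-\xi \hat{y}_j(\bx)}$, and then takes logarithms and divides by $\xi$. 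You instead decouple the two sides of (\ref{eq:random}): the pointwise bound $y \le \xi y^2 + \tfrac{1}{4\xi}$ handles the policy-weighted true outcomes, and the per-index completing-the-square estimate on $\bigl(\hat{y}_k(\bx) - \tfrac{1}{M}\sum_m \hat{y}_m(\bx)\bigr) + \tfrac{\xi}{M}\sum_m \hat{y}_m^2(\bx)$, combined with the numerical fact $\log M \ge \tfrac{2M-1}{4M}$ for $M \ge 2$, shows the remaining right-hand-side terms absorb the $\tfrac{1}{4\xi}$ slack; all your algebra checks out, including the decomposition that distributes $\hat{y}_k(\bx)$ over the indices $m \neq k$ and the $M=1$ case. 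Two observations on the comparison: your argument never uses the softmax form of the policy weights, so it in fact proves the inequality for an arbitrary randomized policy, which makes it more elementary and strictly more general but also reveals that the stated bound does not isolate anything specific to the exponential-weights rule; the paper's potential-function route is the one that mirrors the online Hedge analysis and explains why this particular policy is natural. Both proofs implicitly require $\xi > 0$ (the bound contains $\log M/\xi$), so your reliance on $1/(4\xi)$ introduces no additional restriction relative to the paper.
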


\begin{proof}
	The proof borrows ideas from Theorem 1.5 in \cite{hazan2016introduction}. 
	Define $W_m \triangleq e^{-\xi \hat{y}_m(\bx)}/\sum_{j=1}^M e^{-\xi \hat{y}_j(\bx)}$, and $\phi \triangleq \sum_{m=1}^M e^{-\xi \hat{y}_m(\bx)} e^{-\xi y_m(\bx)}$. Then,
	\begin{equation*}
	\begin{aligned}
	\phi & =  \Big(\sum_{j=1}^M e^{-\xi \hat{y}_j(\bx)}\Big)\sum_{m=1}^M W_m e^{-\xi y_m(\bx)} \\
	& \le \Big(\sum_{j=1}^M e^{-\xi \hat{y}_j(\bx)}\Big) \sum_{m=1}^M W_m \big(1-\xi y_m(\bx) + \xi^2 y_m^2(\bx)\big) \\
	& = \Big(\sum_{j=1}^M e^{-\xi \hat{y}_j(\bx)}\Big) \Big( 1 - \xi \sum_{m=1}^M W_m y_m(\bx) + \xi^2 \sum_{m=1}^M W_my_m^2(\bx)\Big) \\
	& \le \Big(\sum_{j=1}^M e^{-\xi \hat{y}_j(\bx)}\Big) e^{- \xi \sum_{m=1}^M W_my_m(\bx) + \xi^2 \sum_{m=1}^M W_m y_m^2(\bx)},
	\end{aligned}
	\end{equation*}
	where the first inequality uses the fact that for $x\ge 0$, $e^{-x} \le 1-x+x^2$, and the last inequality is due to the fact that $1+x \le e^{x}$. Next let us examine the sum of exponentials:
	\begin{equation*}
	\begin{aligned}
	\sum_{j=1}^M e^{-\xi \hat{y}_j(\bx)} & \le \sum_{j=1}^M \Big( 1 - \xi \hat{y}_j(\bx) + \xi^2\hat{y}^2_j(\bx) \Big) \\
	& = M \Big( 1 - \xi \frac{1}{M}\sum_{j=1}^M \hat{y}_j(\bx) + \xi^2 \frac{1}{M}\sum_{j=1}^M \hat{y}_j^2(\bx)\Big) \\
	& \le M e^{- \xi \frac{1}{M}\sum_{j=1}^M \hat{y}_j(\bx) + \xi^2 \frac{1}{M}\sum_{j=1}^M \hat{y}_j^2(\bx)}.
	\end{aligned}
	\end{equation*}
	On the other hand, for any $k \in \lb M \rb$,
	\begin{align} \label{eq1}
	e^{-\xi \hat{y}_k(\bx) -\xi y_k(\bx)}   \le & \phi \notag\\  
	\le & 
	M \exp\bigg\{-  \frac{\xi \sum_{j=1}^M \hat{y}_j(\bx)}{M} +
	\frac{\xi^2 \sum_{j=1}^M \hat{y}_j^2(\bx)}{M} \notag \\
	&  \qquad \qquad - \xi \sum\limits_{m=1}^M W_my_m(\bx) + \xi^2 \sum\limits_{m=1}^M
	W_m y_m^2(\bx)\bigg\}. 
	\end{align}
	Taking the logarithm on both sides of (\ref{eq1}) and dividing by $\xi$, we obtain
	\begin{multline*}
	\frac{1}{M} \sum_{m=1}^M \hat{y}_m(\bx)  +  \sum_{m=1}^M  \frac{e^{-\xi \hat{y}_m(\bx)}}{\sum_j e^{-\xi \hat{y}_j(\bx)}}  y_m(\bx)  \le  \  \hat{y}_k(\bx) + y_k(\bx) \\  
	+ \xi \bigg( \frac{1}{M} \sum_{m=1}^M \hat{y}_m^2(\bx) + \sum_{m=1}^M \frac{e^{-\xi \hat{y}_m(\bx)}}{\sum_j e^{-\xi \hat{y}_j(\bx)}}  y_m^2(\bx)\bigg) + \frac{\log M}{\xi}.
	\end{multline*}
\end{proof}

Theorem~\ref{thm:random} says that the expected {\em true} outcome of the randomized policy is no worse than the true outcome of any action $k$ plus two components, one accounting for the gap between the {\em predicted} outcome under $k$ and the average predicted outcome, and the other depending on the parameter $\xi$. Thinking about choosing $k = \arg \min_m y_m(\bx)$, if $\hat{y}_k(\bx)$ is below the average predicted outcome (which should be true if we have an accurate prediction), it follows from (\ref{eq:random}) that the randomized policy leads to a nearly optimal future outcome by an appropriate choice of $\xi$. 

In the medical applications, when determining the {\em future} prescription for a patient, we usually have access to some auxiliary information such as the {\em current} prescription that she is receiving, and her {\em current} lab results. In consideration of the health care costs and treatment transients, it is not desired to switch patients' treatments too frequently. We thus set a threshold level for the expected improvement in the outcome, below which the randomized strategy will be ``frozen'' and the current therapy will be continued. Specifically, 
\begin{multline*} 
m_{\text{f}}(\bx) \\ = 
\begin{cases}
\text{$m$, w.p. $\frac{e^{-\xi \hat{y}_m(\bx)}}{\sum_{j=1}^M e^{-\xi
			\hat{y}_j(\bx)}}$}, & \text{if $\sum\limits_{k} \frac{e^{-\xi \hat{y}_k(\bx)}}{\sum_j e^{-\xi \hat{y}_j(\bx)}}  \hat{y}_k(\bx) \le x_{\text{co}} - T(\bx)$}, \\
m_{\text{c}}(\bx), & \text{otherwise},
\end{cases}
\end{multline*}
where $m_{\text{f}}(\bx)$ and $m_{\text{c}}(\bx)$ are the future and current prescriptions for patient $\bx$, respectively; $x_{\text{co}}$ represents the current observed outcome (e.g., current blood pressure), which is assumed to be one of the components of $\bx$, and $T(\bx)$ is some threshold level which will be determined later. This prescriptive rule basically says that the randomized strategy will be activated only if the expected improvement relative to the current observed outcome is significant. 

\begin{thm} \label{thm:threshold}
	Assume that the distribution of the predicted outcome $\hat{y}_m(\bx)$ conditional on
	$\bx$, is sub-Gaussian, and its $\psi_2$-norm is equal to $\sqrt{2}C_m(\bx)$, for any
	$m \in [M]$ and any $\bx$. Given a small $0 < \bar{\epsilon} < 1$, to satisfy 
	\begin{equation*}
	\mbb{P}\left(\sum_k \frac{e^{-\xi \hat{y}_k(\bx)}}{\sum_j e^{-\xi \hat{y}_j(\bx)}}
	\hat{y}_k(\bx) > x_{\text{co}} - T(\bx)\right) \le \bar{\epsilon}, 
	\end{equation*}
	it suffices to set a threshold
	\begin{equation*}
	T(\bx) = \max\Bigl(0, \ \min_m \Bigl(x_{\text{co}} - \mu_{\hat{y}_{m}}(\bx) - \sqrt{-2 C_m^2(\bx)\log (\bar{\epsilon}/M)}\Bigr)\Bigr),
	\end{equation*}
	where $\mu_{\hat{y}_m}(\bx) = \mbb{E}[\hat{y}_m(\bx)|\bx]$.
\end{thm}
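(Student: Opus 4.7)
The plan is to exploit the convex-combination structure of $\sum_k W_k \hat{y}_k(\bx)$, with $W_k \triangleq e^{-\xi\hat{y}_k(\bx)}/\sum_j e^{-\xi\hat{y}_j(\bx)}$, and reduce the tail bound on this softmin-type aggregate to a routine union bound over $M$ individual sub-Gaussian tails.

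First I would observe that since $W_k \ge 0$ and $\sum_k W_k = 1$, the aggregate satisfies $\sum_k W_k \hat{y}_k(\bx) \le \max_m \hat{y}_m(\bx)$ pointwise (every convex combination is dominated by its largest entry). Consequently, for any threshold $A$,
\[
\mbb{P}\!\left(\sum_k W_k \hat{y}_k(\bx) > A \,\Big|\, \bx\right) \le \mbb{P}\!\left(\max_m \hat{y}_m(\bx) > A \,\Big|\, \bx\right) \le \sum_{m=1}^M \mbb{P}\!\big(\hat{y}_m(\bx) > A \,\big|\, \bx\big),
\]
where the last step is a union bound. This single inequality is the key move: it removes the $W_k$'s entirely, which would otherwise entangle all $M$ random variables $\hat{y}_1(\bx),\ldots,\hat{y}_M(\bx)$ in a nonlinear way.

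Next I would invoke the sub-Gaussian tail inequality implied by $\vertiii{\hat{y}_m(\bx)\mid\bx}_{\psi_2}=\sqrt{2}C_m(\bx)$; with the normalization used in the paper this gives, for every $m$ and every $t\ge 0$,
\[
\mbb{P}\!\big(\hat{y}_m(\bx) - \mu_{\hat{y}_m}(\bx) > t \,\big|\, \bx\big) \le \exp\!\Big(-\tfrac{t^2}{2C_m^2(\bx)}\Big).
\]
Setting $A = x_{\text{co}} - T(\bx)$ and demanding that each of the $M$ summands in the union bound be at most $\bar{\epsilon}/M$ produces the $M$ sufficient conditions
\[
x_{\text{co}} - T(\bx) - \mu_{\hat{y}_m}(\bx) \ge \sqrt{-2C_m^2(\bx)\log(\bar{\epsilon}/M)},\qquad m\in \lb M \rb,
\]
whose conjunction is equivalent to $T(\bx) \le \min_m\big(x_{\text{co}} - \mu_{\hat{y}_m}(\bx) - \sqrt{-2C_m^2(\bx)\log(\bar{\epsilon}/M)}\big)$. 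Taking the largest admissible $T(\bx)$ under this constraint, and enforcing $T(\bx)\ge 0$ since the object represents a non-negative threshold level, yields exactly the stated formula.

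The only nontrivial conceptual step is the convex-combination inequality at the outset; the rest is a standard union-bound-plus-sub-Gaussian-tail computation. The main thing I would need to verify carefully is the bookkeeping around the $\psi_2$-norm convention: specifically, that the declared normalization $\vertiii{\cdot}_{\psi_2}=\sqrt{2}C_m(\bx)$ is exactly the one for which the clean tail bound $\exp(-t^2/(2C_m^2(\bx)))$ holds without an extra absolute constant, so that inverting it produces the advertised quantile $\sqrt{-2C_m^2(\bx)\log(\bar{\epsilon}/M)}$. A minor technical remark is that when $\min_m(\cdot)<0$ the outer $\max(0,\cdot)$ is a conservative non-negativity projection; the probability bound derived above is then tight at the boundary case $T(\bx)=\min_m(\cdot)$ and the projection only enlarges the acceptance region, which is harmless in the direction of interest.
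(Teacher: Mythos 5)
Your proposal is correct and follows essentially the same route as the paper's own proof: bound the convex combination by $\max_m \hat{y}_m(\bx)$, apply a union bound, invoke the sub-Gaussian tail with the $\sqrt{2}C_m(\bx)$ normalization, require each term to be at most $\bar{\epsilon}/M$, and take the largest admissible non-negative threshold. Your remark about checking the $\psi_2$-norm convention is a fair point of care, but the paper adopts exactly the normalization for which the tail bound $\exp\bigl(-t^2/(2C_m^2(\bx))\bigr)$ holds, so no gap arises.
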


\begin{proof}
	By the sub-Gaussian assumption we have:
	\begin{equation} \label{tailprobr}
	\begin{aligned}
	& \quad \ \mbb{P}\Big(\sum_k \frac{e^{-\xi \hat{y}_k(\bx)}}{\sum_j e^{-\xi \hat{y}_j(\bx)}}  \hat{y}_k(\bx) > x_{\text{co}} - T(\bx)\Big)  \\
	& \le \mbb{P}\Bigl(\max\limits_k \hat{y}_k(\bx)> x_{\text{co}} - T(\bx)\Bigr) \\
	& =  \mbb{P}\Big(\bigcup\limits_k \big\{\hat{y}_k(\bx)> x_{\text{co}} - T(\bx)\big\}\Big) \\
	& \le \sum\limits_k \mbb{P} \big(\hat{y}_k(\bx)> x_{\text{co}} - T(\bx)\big) \\
	& \le \sum\limits_k \exp \left(-\frac{\bigl(x_{\text{co}} - T(\bx) - \mu_{\hat{y}_k}(\bx)\bigr)^2}{2 C_k^2(\bx)}\right). \\
	\end{aligned}
	\end{equation} 
	Note that the probability in (\ref{tailprobr}) is taken with respect to the measure
	of the training samples. We essentially want to find the largest threshold $T(\bx)$
	such that the probability of the expected improvement being less than $T(\bx)$ is
	small. Given a small $0 < \bar{\epsilon} < 1$ and due to (\ref{tailprobr}), to satisfy 
	\begin{equation*}
	\mbb{P}\left(\sum_k \frac{e^{-\xi \hat{y}_k(\bx)}}{\sum_j e^{-\xi \hat{y}_j(\bx)}}  \hat{y}_k(\bx) > x_{\text{co}} - T(\bx)\right) \le \bar{\epsilon},
	\end{equation*}
	it suffices to set:
	\begin{equation} \label{epsilonbound}
	\sum\limits_k \exp \left(-\frac{\bigl(x_{\text{co}} - T(\bx) - \mu_{\hat{y}_k}(\bx)\bigr)^2}{2 C_k^2(\bx)}\right) \le \bar{\epsilon}.
	\end{equation}
	A sufficient condition for (\ref{epsilonbound}) is:
	\begin{equation*}
	\exp \left((-\frac{\bigl(x_{\text{co}} - T(\bx) -
		\mu_{\hat{y}_m}(\bx)\bigr)^2}{2 C_m^2(\bx)}\right) \le
	\frac{\bar{\epsilon}}{M},\qquad  \forall m\in \lb M \rb, 
	\end{equation*}
	which yields that,
	\begin{equation} \label{T-bound}
	T(\bx) \le x_{\text{co}} - \mu_{\hat{y}_{m}}(\bx) - \sqrt{-2 C_m^2(\bx)\log
		(\bar{\epsilon}/M)}, \qquad \forall m\in \lb M \rb. 
	\end{equation}
	Given that $T(\bx)$ is non-negative, we set the largest possible threshold satisfying
	(\ref{T-bound}) to:
	\begin{equation*}
	T(\bx) = \max\Bigl(0, \ \min_m \Bigl(x_{\text{co}} - \mu_{\hat{y}_{m}}(\bx) -
	\sqrt{-2 C_m^2(\bx)\log (\bar{\epsilon}/M)}\Bigr)\Bigr). 
	\end{equation*}
	When using a deterministic policy ($\xi \to \infty$), for any $m\in \lb M \rb$, we have
	\begin{equation*} 
	\begin{aligned}
	\mbb{P}(\min\limits_m \hat{y}_m(\bx)> x_{\text{co}} - T(\bx)) & = \mbb{P}\Bigl(\bigcap \limits_m \big\{\hat{y}_m(\bx)> x_{\text{co}} - T(\bx)\big\}\Bigr) \\
	& \le  \mbb{P}(\hat{y}_m(\bx)> x_{\text{co}} - T(\bx)) \\
	& \le \exp \left(-\frac{\bigl(x_{\text{co}} - T(\bx) -
		\mu_{\hat{y}_m}(\bx)\bigr)^2}{2 C_m^2(\bx)}\right).\\
	\end{aligned}
	\end{equation*} 
	Similarly, to make
	\begin{equation*}
	\mbb{P}\bigl(\min\limits_m \hat{y}_m(\bx)> x_{\text{co}} - T(\bx)\bigr) \le \bar{\epsilon},
	\end{equation*}
	we set:
	\begin{equation*}
	T(\bx) = \max\Bigl(0, \ \min_m \Bigl(x_{\text{co}} - \mu_{\hat{y}_{m}}(\bx) -
	\sqrt{-2 C_m^2(\bx)\log \bar{\epsilon}}\Bigr)\Bigr), 
	\end{equation*}
	which establishes the desired result.
\end{proof}

Theorem~\ref{thm:threshold} finds the largest threshold $T(\bx)$ such that the
probability of the expected improvement being less than $T(\bx)$ is small. The
parameters $\mu_{\hat{y}_{m}}(\bx)$ and $C_m(\bx)$, for $m \in \lb M \rb$, can be
estimated by simulation through random sampling a subset of the training
examples. Algorithm~\ref{ms} provides the  details. 
\begin{algorithm}[h]
	\caption{Estimating the conditional mean and standard deviation of the predicted outcome.} \label{ms}
	\begin{algorithmic}
		\State {\textbf{Input:} a feature vector $\bx$; $a_m$: the number of subsamples used to compute $\hat{\bbeta}_m$, $a_m < N_m$; $d_m$: the number of repetitions.}	
		\For{$i = 1, \ldots, d_m$}
		\State {Randomly pick $a_m$ samples from group $m$, and use them to estimate a robust regression coefficient $\hat{\bbeta}_{m_i}$ through solving (\ref{qcp})}.
		\State {The future outcome for $\bx$ under action $m$ is predicted as $\hat{y}_{m_i}(\bx) = \bx'\hat{\bbeta}_{m_i}$}.
		\EndFor	
		\State {\textbf{Output:} Estimate the conditional mean of $\hat{y}_m(\bx)$ as: $$\mu_{\hat{y}_{m}}(\bx) = \frac{1}{d_m} \sum_{i=1}^{d_m}\hat{y}_{m_i}(\bx),$$
			and the conditional standard deviation as:
			\begin{equation*}
			C_m(\bx) = \sqrt{\frac{1}{d_m - 1} \sum_{i=1}^{d_m} \Bigl(\hat{y}_{m_i}(\bx) - \mu_{\hat{y}_{m}}(\bx)\Bigr)^2}.
			\end{equation*}}
	\end{algorithmic}
\end{algorithm}

\paragraph{A Special Case}
As $\xi \to \infty$, the randomized policy will assign probability $1$ to the action with the lowest predicted outcome, which is equivalent to the following deterministic policy:
\begin{equation*} 
m_{\text{f}}(\bx) = 
\begin{cases}
\arg \min \limits_{m} \hat{y}_m(\bx), & \text{if $\min\limits_m \hat{y}_m(\bx) \le x_{\text{co}} - T(\bx)$}, \\
m_{\text{c}}(\bx), & \text{otherwise}.
\end{cases}
\end{equation*}
A slight modification to the threshold level $T(\bx)$ is given as follows:
\begin{equation*}
T(\bx) = \max\Bigl(0, \ \min_m \Bigl(x_{\text{co}} - \mu_{\hat{y}_{m}}(\bx) - \sqrt{-2 C_m^2(\bx)\log \bar{\epsilon}}\Bigr)\Bigr).
\end{equation*}

\section{Developing Optimal Prescriptions for Patients} \label{sec:4-4}
In this section, we apply our method to develop optimal prescriptions for patients
with type-2 diabetes and hypertension.  The data used for the study come from the
Boston Medical Center -- the largest safety-net hospital in New England -- and
consist of {\em Electronic Health Records (EHRs)} containing the patients' medical
history in the period 1999--2014. The medical history of each patient includes
demographics, diagnoses, prescriptions, lab tests, and past admission records. We
build two datasets from the EHRs, one containing the medical records of patients with
type-2 diabetes and the other for patients with hypertension. For diabetic patients,
we want to determine the treatment (drug regimen) that leads to the lowest future
HbA\textsubscript{1c}~\footnote{HbA\textsubscript{1c} measures the percentage of
	glycosylated hemoglobin in the total amount of hemoglobin present in the blood. It
	reflects average blood glucose levels over the past 6--8 weeks. The
	normal range is below 5.7\%.} based on the medical histories, while for
hypertension patients, our goal is to find the treatment that minimizes the future
systolic blood pressure.~\footnote{Systolic blood pressure is the maximum arterial
	pressure during contraction of the left ventricle of the heart. It is measured in
	mmHg (millimeters of mercury) and the normal range is below 120.} 

\subsection{Description of the Datasets}
The patients that meet the following criteria are included in the diabetes dataset: 
\begin{itemize}
	\item Patients present in the system for at least 1 year;
	\item Received at least one blood glucose regulation agent, including injectable (e.g., insulin) and oral (e.g., metformin) drugs, etc., and had at least one medical record 100 days before this prescription;
	\item Had at least three measurements of HbA\textsubscript{1c} in the system; and,
	\item Were not diagnosed with type-1 diabetes.
\end{itemize}
Similarly, for the hypertension dataset, the patients that meet the following criteria are included:
\begin{itemize}
	\item Patients present in the system for at least 1 year;
	\item Received at least one type of cardiovascular medications, including ACE inhibitors, Angiotensin Receptor Blockers (ARB), calcium channel blockers, diuretics, $\alpha$-blockers and $\beta$-blockers, and had at least one medical record 10 days before this prescription.
	\item Had at least one recorded diagnosis of hypertension (corresponding to the ICD-9 diagnosis codes 401-405); 
	\item Had at least three measurements of the systolic blood pressure.
\end{itemize}
We have identified 11,230 patients for the diabetes dataset and 49,401 patients for the hypertension dataset. Each patient may have multiple entries in her/his medical record. We define the {\em line of therapy} as a time period (between 200 and 500 days) during which the combination of drugs prescribed to the patient does not change. Each line of therapy is characterized by a drug regimen which is defined as the combination of drugs prescribed to the patient within the first 200 days. The line of therapy intends to capture the period when the patient was experiencing the effect of the drug regimen. 

We define {\em patient visits} within each line of therapy to reflect changes in the features and outcomes. For the diabetic patients, we consider four possible drug regimens (combinations of oral and injectable drugs), while for the hypertension patients, we consider the most frequent 19 of the 32 combinations of drugs and merge all others into one class.

\paragraph{Diabetic Patients.} During each line of therapy, we assume that the patient visits every 100 days, beginning from the start of the therapy and continuing until at least 80 days prior to the end of the therapy. The measurements, lab tests are averaged over the 100 days prior to the visit. We define the {\em current prescription} of each visit as the combination of drugs that was given during the 100 days immediately preceding the visit, and the {\em standard of care} as the drug regimen that is prescribed by the doctors at the time of the visit. If no value exists over the 100 days, we use the neighboring visits to determine the measurements/lab tests (through linear interpolation) and the current prescription. The {\em future} outcome for each visit is computed as the average HbA\textsubscript{1c} 75 to 200 days after the visit. Patient visits that contain missing values for the outcome are dropped. We end up with 12,016 valid visits, which are divided into four groups based on their standard of care.

\paragraph{Hypertension Patients.} During each line of therapy, the patient visits are considered occurring every 70 days, beginning from the start of the therapy and continuing until at least 180 days prior to the end of the therapy. The measurements, lab tests are averaged over the 10 days prior to the visit. We define the {\em current prescription} of each visit as the combination of drugs that was given during the 10 days immediately preceding the visit, and the {\em standard of care} as the drug regimen that is prescribed by the doctors at the time of the visit. We narrow down the time window due to the fact that the blood pressure is usually much more noisy than the HbA\textsubscript{1c}, and thus the features within a smaller time window tend to be more relevant. The {\em future} outcome of the visit is the average systolic blood pressure 70 to 180 days after it. Linear interpolation is used to replace the missing values of the measurements and lab tests. We have obtained 26,128 valid visits, which are divided into 20 groups based on their standard of care.

\paragraph{Prescriptions.} The prescriptions are used to group the patient visits. For the diabetic patients, we consider two types of prescriptions: one includes oral medications, e.g., metformin, pioglitazone, and sitagliptin, etc., and the other type includes injectable medications, e.g., insulin. Typically, injectable medications are prescribed for patients with more advanced disease. For the hypertension patients, six types of prescriptions are considered: ACE inhibitor, Angiotensin Receptor Blockers (ARB), calcium channel blockers, thiazide and thiazide-like diuretics, $\alpha$-blockers and $\beta$-blockers. 

The following sets of features are considered for building the predictive model. The number of features included in both datasets is 63. All features are standardized before fed into our algorithm. 

\paragraph{Demographic information.} Includes sex (male, female and other), age and
race (10 types). We consider the three most frequent races: Caucasian, Black, and
Hispanic, and group all others into one category `other'.

\paragraph{Measurements.} Systolic/diastolic blood pressure (mmHg), Body Mass Index
(BMI) and pulse. 

\paragraph{Lab tests.} Two types of tests considered: blood chemistry tests such as
calcium, carbon dioxide, chloride, potassium, sodium, creatinine, and urea nitrogen;
and hematology tests such as blood glucose, hematocrit, hemoglobin, leukocyte count,
platelet count, and mean corpuscular volume.

\paragraph{Diagnosis history.} The ICD-9 coding system is used to record
diagnoses.   

\subsection{Model Development and Results}

We will compare our prescriptive algorithm with several alternatives that replace our
{\em Distributionally Robust Linear Regression (DRLR)} informed K-NN with a different
predictive model such as LASSO, CART, and OLS informed K-NN
\citep{bertsimas2017personalized}. Both deterministic and randomized prescriptive
policies are considered using predictions from these models. We note a very recent
tree-based algorithm called Optimal Prescription Tree (OPT) developed in
\cite{bertsimas2019optimal}, that uses either constant or linear models in the leaves
of the tree in order to predict the counterfactuals and to assign optimal treatments
to new samples. We do not include it as a comparison in this work, yet, it would be
interesting to do in subsequent work.

\paragraph{Parameter tuning.} Within each prescription group, we randomly split the patient visits into three sets: a training set (80\%), a validation set (10\%), and a test set (10\%). To reflect the dependency of the number of neighbors on the number of training samples, we perform a linear regression between these two quantities, which will be used  to determine the number of neighbors needed in different settings.

To tune the exponent $\xi$ for the randomized strategy, it is necessary to evaluate the effects of counterfactual treatments. We assess the predictive power of a series of robust predictive models in terms of the following metrics:
\begin{itemize}
	\item R-square:
	\begin{equation*}
	\text{R\textsuperscript{2}} (\by, \hat{\by}) = 1-\dfrac{\sum_{i=1}^{N_t} (y_{i}-\hat{y}_{i})^2}{\sum_{i=1}^{N_t} (y_{i}-\bar{y})^2},
	\end{equation*}
	where $\by = (y_{1},\,\ldots,\, y_{N_t})$ and $\hat{\by} = (\hat{y}_{1},\,\ldots,\, \hat{y}_{N_t})$ are the vectors of the true (observed) and predicted outcomes, respectively, with $N_t$ the size of the test set, and $\bar{y} = (1/N_t) \sum_{i=1}^{N_t} y_i$. 
	\item {\em Mean Squared Error (MSE)}: 
	\begin{equation*}
	\text{MSE} (\by, \hat{\by}) = \frac{1}{N_t} \sum_{i=1}^{N_t} (y_{i}-\hat{y}_{i})^2.
	\end{equation*}
	\item {\em Mean Absolute Error (MeanAE)}, which is more robust to large deviations than the MSE in that the absolute value function increases more slowly than the square function over large (absolute) values of the argument.
	\begin{equation*}
	\text{MeanAE} (\by, \hat{\by}) = \frac{1}{N_t} \sum_{i=1}^{N_t} |y_{i}-\hat{y}_{i}|.
	\end{equation*}
	\item MAD, which can be viewed as a robust measure of the MeanAE, computing the median of the absolute deviations:
	\begin{equation*}
	\text{MAD} (\by, \hat{\by}) = \text{Median} \left( |y_{i}-\hat{y}_{i}| , i \in \lb N_t \rb \right).
	\end{equation*}	
\end{itemize} 
The out-of-sample performance metrics of the various models on the two datasets are shown in Tables~\ref{tab:diabetes_preformances} and \ref{tab:hypertension}, where the numbers in the parentheses show the improvement of DRLR informed K-NN compared against other methods. Huber refers to the robust regression method proposed in \cite{huber1964robust, huber1973robust}, and CART refers to the {\em Classification And Regression Trees}. Huber/OLS/LASSO + K-NN means fitting a K-NN regression model with a Huber/OLS/LASSO-weighted distance metric. We note that in order to produce well-defined and meaningful predictive performance metrics, the dataset used to generate Tables~\ref{tab:diabetes_preformances} and \ref{tab:hypertension} did not group the patients by their prescriptions. A universal model was fit to all patients with prescription information being used as one of the predictors. 
Nevertheless, it would still be considered as a fair comparison as all models were
evaluated on the same dataset. The results provide supporting evidence for the
validity of our DRLR+K-NN model that outperforms all others in all metrics, and is
thus used to impute the outcome for an unobservable treatment $m$, through averaging
over the most similar patient visits who have received the prescription $m$ in the
{\em validation set}, where the number of neighbors is selected to fit the size of the validation set. Note that using DRLR+K-NN as an imputation model might cause bias in evaluating the performance of different methods, since it is in favor of the framework that uses the same model (DRLR+K-NN) to predict the future outcome. Using a weighted combination of several different predictive models may alleviate the bias. This could be done in future work. 

\begin{table}[htbp]
	{\centering 
		\caption{Performance of different models for predicting future HbA\textsubscript{1c} for diabetic patients.} 
		\label{tab:diabetes_preformances} 
		{\small \begin{tabular}{cccccc}
				\toprule
				Methods & R\textsuperscript{2} & MSE & MeanAE & MAD\\
				\midrule
				OLS  & 0.52 (2\%) & 1.36 (2\%) & 0.81 (4\%) & 0.55 (11\%)\\
				LASSO  & 0.52 (2\%) & 1.37 (2\%) & 0.80 (3\%) & 0.54 (9\%)\\
				Huber  & 0.36 (47\%) & 1.81 (26\%) & 0.96 (19\%) & 0.70 (30\%)\\
				RLAD  & 0.50 (4\%) & 1.40 (4\%) & 0.78 (1\%) & 0.50 (1\%)\\
				K-NN  & 0.25 (109\%) & 2.11 (37\%) & 1.07 (27\%) & 0.81 (39\%)\\
				OLS+K-NN  & 0.52 (0\%) & 1.34 (0\%) & 0.79 (1\%) & 0.51 (3\%)\\
				LASSO+K-NN  & 0.52 (1\%) & 1.36 (1\%) & 0.79 (1\%) & 0.50 (1\%)\\
				Huber+K-NN  & 0.51 (3\%) & 1.38 (3\%) & 0.81 (3\%) & 0.53 (7\%)\\
				DRLR+K-NN  & 0.52 (N/A) & 1.34 (N/A) & 0.78 (N/A) & 0.49 (N/A)\\
				CART  & 0.49 (7\%) & 1.43 (7\%) & 0.81 (3\%) & 0.50 (2\%)\\
				\bottomrule
			\end{tabular} }\\
		}
	\end{table} 
	
	\begin{table}[htbp]
		{\centering 
			\caption{Performance of different models for predicting future systolic blood pressure for hypertension patients. }
			\label{tab:hypertension} 
			{\small \begin{tabular}{ccccc}
					\toprule
					Methods & R\textsuperscript{2} & MSE & MeanAE & MAD\\
					\midrule
					OLS  & 0.31 (14\%) & 170.80 (6\%) & 10.09 (7\%) & 8.15 (9\%)\\
					LASSO  & 0.31 (14\%) & 170.83 (6\%) & 10.08 (7\%) & 8.22 (10\%)\\
					Huber  & 0.22 (62\%) & 193.54 (17\%) & 10.70 (12\%) & 8.61 (14\%)\\
					RLAD  & 0.30 (18\%) & 173.32 (8\%) & 10.11 (7\%) & 8.28 (11\%)\\
					K-NN  & 0.33 (10\%) & 167.41 (5\%) & 9.62 (2\%) & 7.50 (2\%)\\
					OLS+K-NN  & 0.35 (1\%) & 160.22 (0\%) & 9.42 (0\%) & 7.49 (1\%)\\
					LASSO+K-NN  & 0.32 (12\%) & 169.50 (6\%) & 9.74 (3\%) & 7.73 (5\%)\\
					Huber+K-NN  & 0.32 (10\%) & 167.92 (5\%) & 9.71 (3\%) & 7.84 (6\%)\\
					DRLR+K-NN  & 0.36 (N/A) & 159.74 (N/A) & 9.42 (N/A) & 7.38 (N/A)\\
					CART  & 0.25 (43\%) & 186.23 (14\%) & 10.34 (9\%) & 8.22 (10\%)\\
					\bottomrule
				\end{tabular} }\\
			}
		\end{table}
		
		\paragraph{Model training.} 
		We solve the predictive models on the whole training set with the best tuned parameters, the output of which is used to develop the optimal prescriptions for the test set patients. The parameter $\bar{\epsilon}$ in the threshold $T(\bx)$ is set to $0.1$. For estimating the conditional mean and standard deviation of the predicted outcome using Algorithm~\ref{ms}, we set $a_m = 0.9N_m$, and $d_m = 100$. We compute the average improvement (reduction) in outcomes for patients in the test set, which is defined to be the difference between the (expected) {\em future} outcome under the recommended therapy and the {\em current} observed outcome. If the recommendation does not match the standard of care, its future outcome is estimated through the imputation model that was discussed earlier, where $K_m$ should be selected to fit the size of the {\em test set}.
		
		\paragraph{Results and discussions.} The reductions in outcomes (future minus current) for various models are shown in Table~\ref{tab:results_orig}. The columns indicate the prescriptive policies (deterministic or randomized); the rows represent the predictive models whose outcomes $\hat{y}_m (\bx)$ serve as inputs to the prescriptive algorithm. We test the performance of all algorithms over five repetitions, each with a different training set. The numbers outside the parentheses are the mean reductions in the outcome and the numbers inside the parentheses are the corresponding standard deviations. We note that HbA\textsubscript{1c} is measured in percentage while systolic blood pressure in mmHg. We also list the reductions in outcomes resulted from the {\em standard of care}, and the {\em current prescription} which prescribes $m_{\text{f}}(\bx) = m_{\text{c}}(\bx)$ with probability one, i.e., always continuing the current drug regimen.
		
		\begin{table}[hbt]
			\caption{The reduction in HbA\textsubscript{1c}/systolic blood pressure for various models.}
			\label{tab:results_orig}
			\begin{center}
				{\footnotesize
					\begin{tabular}{>{\centering\arraybackslash}p{2.8cm}>{\centering\arraybackslash}p{1.65cm}>{\centering\arraybackslash}p{1.65cm}>{\centering\arraybackslash}p{1.65cm}>{\centering\arraybackslash}p{1.65cm}}
						\toprule
						& \multicolumn{2}{c}{Diabetes} & \multicolumn{2}{c}{Hypertension} \\ 
						& Deterministic & Randomized & Deterministic & Randomized\\
						\midrule
						LASSO & -0.51 (0.16) & -0.51 (0.16) & -4.71 (1.09) & -4.72 (1.10)\\
						CART & -0.45 (0.13) & -0.42 (0.14) & -4.84 (0.62) & -4.87 (0.66)\\
						OLS+K-NN & -0.53 (0.13) & -0.53 (0.13) & -4.33 (0.46) & -4.33 (0.47)\\
						DRLR+K-NN & -0.56 (0.06) & -0.55 (0.08) & -6.98 (0.86) & -7.22 (0.82) \\  
						\midrule    
						Current prescription & \multicolumn{2}{c} {-0.22 (0.04)} & \multicolumn{2}{c}{-2.52 (0.19)} \\
						\midrule
						Standard of care & \multicolumn{2}{c} {-0.22 (0.03)} & \multicolumn{2}{c}{-2.37 (0.11)} \\
						\bottomrule
					\end{tabular}}
				\end{center}
			\end{table}
			
			Several observations are in order: $(i)$ all models outperform the current prescription
			and the standard of care; $(ii)$ the DRLR-informed K-NN model leads to the largest
			reduction in outcomes with a relatively stable performance; and $(iii)$ the randomized
			policy achieves a similar performance (slightly better on the hypertension dataset)
			to the deterministic one. We expect the randomized strategy to win when the effects
			of several treatments do not differ much, in which case the deterministic algorithm
			might produce misleading results. The randomized policy could potentially improve the
			out-of-sample (generalization) performance, as it gives the flexibility of exploring
			options that are suboptimal on the training set, but might be optimal on the test
			set.  The advantages of the DRLR+K-NN model are more prominent in the hypertension
			dataset, due to the fact that we considered a finer classification of the
			prescriptions for patients with hypertension, while for diabetic patients, we only
			distinguish between oral and injectable prescriptions.
			
			\subsection{Refinement on the DRLR+K-NN Model} 
			Up to now, we used a patient-independent parameter $K_m$ (the number of neighbors in
			group $m$) to predict the effects of treatments on different individuals. Such a
			strategy might improperly utilize less relevant information and lead to inadequate
			predictions. For example, denote by $d_i^m$ the distance between the patient in
			question and her $i$-th closest neighbor in group $m$, and assume there exists a ``big
			jump'' at $d_{j}^m$, i.e., $d_{j}^m - \sum_{i=1}^{j-1}d_i^m/(j-1)$ is large. If
			$K_m \ge j$, we would include the $j$-th closest neighbor in computing the K-NN
			average, resulting in a biased estimate given its dissimilarity to the patient of
			interest.
			
			We thus propose a patient-specific rule to determine the appropriate number of
			neighbors. Specifically, using the notations $d_i^m$ defined above, we know $d_1^m
			\le d_2^m \le \cdots \le d_{K_m}^m$. Define
			$$j_m^* = \arg\max_j \Big(d_j^m - \sum_{i=1}^{j-1}\frac{d_i^m}{j-1}\Big).$$
			The number of neighbors $K_m'$ will be determined as follows:
			\begin{equation*}
			K_m' = 
			\begin{cases}
			j_m^*-1, & \text{if $\frac{d_{j_m^*}^m -
					\sum_{i=1}^{j_m^*-1}\frac{d_i^m}{j_m^*-1}}{\sum_{i=1}^{j_m^*-1}\frac{d_i^m}{j_m^*-1}}>\tilde{T},$} \\ 
			K_m, & \text{otherwise},
			\end{cases}
			\end{equation*}
			where $\tilde{T}$ is some threshold that can be tuned using cross-validation. This
			strategy discards the neighbors that are relatively far away from the patient under
			consideration. We test this strategy on the two datasets, using a cross-validated
			threshold $\tilde{T} = 2.5$ and $1$ for diabetes and hypertension, respectively, and
			show the results in Tables~\ref{tab:diabetes_trunc} and \ref{tab:hypertension_trunc}. Notice that
			such a truncation strategy could affect both the training of DRLR+K-NN and the
			imputation model that is used to evaluate the effects of counterfactual
			treatments. To compare with the original strategy of using a uniform $K_m$ for every
			patient, we list in the left halves of the tables the results from adopting the
			truncation strategy to both training and imputation, and in the right halves the
			results from applying the truncation only to the imputation/evaluation model.  We see
			that using a patient-specific $K_m'$ in general leads to a larger reduction in
			outcomes.
			
			\begin{table}[hbt]
				\caption{The reduction in HbA\textsubscript{1c} for various models ($\tilde{T}=2.5$).}
				\label{tab:diabetes_trunc}
				\begin{center}
					{\footnotesize \begin{tabular}{>{\centering\arraybackslash}p{2.9cm}>{\centering\arraybackslash}p{1.65cm}>{\centering\arraybackslash}p{1.65cm}>{\centering\arraybackslash}p{1.65cm}>{\centering\arraybackslash}p{1.65cm}}
							\toprule
							& \multicolumn{2}{c}{Training with $K_m'$} & \multicolumn{2}{c}{Training with $K_m$} \\ 
							& Deterministic & Randomized & Deterministic & Randomized\\
							\midrule
							LASSO &-0.54 (0.19)  &-0.54 (0.20)  &-0.50 (0.17)  &-0.49 (0.17) \\
							CART &-0.62 (0.32)  &-0.57 (0.27)  & -0.56 (0.19) &-0.53 (0.15) \\
							OLS+K-NN &-0.65 (0.25)  &-0.64 (0.25)  &-0.61 (0.16)  &-0.61 (0.17)  \\
							DRLR+K-NN &-0.68 (0.20)  &-0.67 (0.23)  &-0.61 (0.10)  & -0.59 (0.10) \\  
							\midrule    
							Current prescription & \multicolumn{2}{c} {-0.23 (0.05)} & \multicolumn{2}{c}{-0.22 (0.05)} \\
							\midrule
							Standard of care & \multicolumn{2}{c} {-0.22 (0.03)} & \multicolumn{2}{c}{-0.22 (0.03)}\\
							\bottomrule
						\end{tabular}}
					\end{center}
				\end{table}
				
				\begin{table}[hbt]
					\caption{The reduction in systolic blood pressure for various models ($\tilde{T}=1$).}
					\label{tab:hypertension_trunc}
					\begin{center}
						{\footnotesize \begin{tabular}{>{\centering\arraybackslash}p{2.9cm}>{\centering\arraybackslash}p{1.65cm}>{\centering\arraybackslash}p{1.65cm}>{\centering\arraybackslash}p{1.65cm}>{\centering\arraybackslash}p{1.65cm}}
								\toprule
								& \multicolumn{2}{c}{Training with $K_m'$} & \multicolumn{2}{c}{Training with $K_m$} \\ 
								& Deterministic & Randomized & Deterministic & Randomized\\
								\midrule
								LASSO & -4.34 (0.28) & -4.33 (0.28)  & -4.22 (0.20) & -4.22 (0.19)  \\
								CART & -4.46 (0.46) & -4.49 (0.50) & -4.48 (0.55) & -4.51 (0.49)\\
								OLS+K-NN & -4.30 (0.35) & -4.30 (0.32) &-4.27 (0.32)  & -4.29 (0.31) \\
								DRLR+K-NN & -7.42 (0.46) &  -7.58 (0.51)  & -6.58 (0.70) & -6.78 (0.73) \\  
								\midrule    
								Current prescription & \multicolumn{2}{c} {-2.56 (0.14)} & \multicolumn{2}{c}{-2.50 (0.16)} \\
								\midrule
								Standard of care & \multicolumn{2}{c} {-2.37 (0.11)} & \multicolumn{2}{c}{-2.37 (0.11)}\\
								\bottomrule
							\end{tabular}}
						\end{center}
					\end{table}
					
					\section{Summary} \label{sec:4-5}
					
					We proposed an interpretable robust predictive method by combining ideas from
					distributionally robust optimization with the local learning procedure K-Nearest
					Neighbors, and established theoretical guarantees on its out-of-sample predictive
					performance. We also developed a randomized prescriptive policy based on the robust
					predictions, and proved its optimality in terms of the expected true outcome.  In
					conjunction, we derived a closed-form expression for a clinically meaningful
					threshold that is used to activate the randomized prescriptive policy.  We applied
					the proposed methodology to a diabetes and a hypertension dataset obtained from a
					major safety-net hospital, providing numerical evidence for the predicted improvement
					on outcomes due to our algorithm.

\chapter{Advanced Topics in Distributionally Robust Learning}  \label{chapt:adv}
In this section, we will cover a number of active research topics in the domain of DRO under the Wasserstein metric. Different from previous sections, where we focused on traditional supervised learning models with identically and independently distributed labeled data, here we want to explore how to adapt the DRO framework to more complex data and model regimes. Specifically, we will study:
\begin{itemize}
	\item Distributionally Robust {\em Semi-Supervised Learning (SSL)}, which
	estimates a robust classifier with partially labeled data, through $(i)$ either
	restricting the marginal distribution to be consistent with the unlabeled data,
	$(ii)$ or modifying the structure of DRO by allowing the center of the
	ambiguity set to vary, reflecting the uncertainty in the labels of the
	unsupervised data.
	
	\item DRO in {\em Reinforcement Learning (RL)} with temporally correlated data,
	which considers {\em Markov Decision Processes (MDPs)} and seeks to inject
	robustness into the probabilistic transition model. We will derive a lower
	bound for the {\em distributionally robust} value function in a regularized
	form.   
\end{itemize}

\section{Distributionally Robust Learning with Unlabeled Data}
In this section we study a Distributionally Robust Optimization (DRO) model with the availability of unlabeled data. This problem can be approached with two types of model architectures. One assumes a setting where supervised DRO with labeled data does not ensure a good generalization performance, and explores the role of unlabeled data in enhancing the performance of conventional supervised DRO, while the other is set up in a semi-supervised setting with potential noise on both labeled and unlabeled data, and aims to robustify SSL algorithms by employing the DRO framework. 

Note that the role of the unlabeled data in the two modeling schemes is different, so are the learning objectives. One seeks to utilize the additional information contained in the unlabeled data, while the other seeks immunity to perturbations on both labeled and unlabeled data. As we will see in the subsequent sections, the former objective is realized through confining the elements of the DRO formulation, i.e., the ambiguity set $\Omega$, to digest the additional information brought by the unlabeled data. By contrast, the latter requires modification of the underlying infrastructure of DRO so that it can be adapted to existing SSL algorithms.

Examples of past works that use unlabeled data to improve adversarial robustness include \cite{carmon2019unlabeled, raghunathan2019adversarial, zhai2019adversarially, alayrac2019labels}. For inducing robustness to SSL, \cite{yan2016robust} proposed an ensemble learning approach through label aggregation. Previous works that fall into the intersection of DRO and SSL include \cite{frogner2019incorporating, blanchet2017distributionally-semi,najafi2019robustness}, where the first two study the role of unlabeled data in improving the generalization performance, while the third one focuses on robustifying a well-known SSL framework, called self-training, by using the DRO. 

Throughout this section, we consider a $K$-class classification problem with a dataset $\scrD$ of size $N$ consisting of two non-overlapping sets $\scrD_l$ (labeled) and $\scrD_{ul}$ (unlabeled), with size $N_l$ and $N_{ul}$, respectively, and $N_l + N_{ul} = N$. Denote by $\scrI_l$ and $\scrI_{ul}$ the index sets corresponding to the labeled and unlabeled data points, respectively. Thus, $\scrD_l = \{\bz_i \triangleq (\bx_i, y_i): \ i \in \scrI_l\}$, where $y_i \in \lb K \rb$, and $\scrD_{ul} = \{\bx_i: \ i \in \scrI_{ul}\}$.

\subsection{Incorporating Unlabeled Data into Distributionally Robust Learning} \label{dro-unlabel}

One of the prerequisites for ensuring a good generalization performance of Wasserstein DRO requires that the ambiguity set includes the true data distribution. In a ``medium-data'' regime, where the observed data may be far from the true data distribution, the Wasserstein ball must be extremely large to contain the true data distribution (cf. Theorem~\ref{measure-con}). As a result, the learner has to be robust to an enormous variety of data distributions, preventing it from making a prediction with any confidence \citep{frogner2019incorporating}. To address this problem, a number of works have proposed to use unlabeled data to further constrain the adversary, see \cite{frogner2019incorporating, blanchet2017distributionally-semi}. Recall the general Wasserstein DRO formulation for a supervised learning problem with feature vector $\bx$ and label $y$: 
\begin{equation}  \label{dro-ssl}
\inf\limits_{\bbeta}\sup\limits_{\mbb{Q}\in \Omega}
\mbb{E}^{\mbb{Q}}\big[ h_{\bbeta}(\bx, y)\big], 
\end{equation}
where $h_{\bbeta}(\bx, y)$ is the loss function evaluated at some hypothesis $\bbeta$, and $\mbb{Q}$ is the probability distribution of $(\bx, y)$ belonging to some set $\Omega$ that constrains the distribution to be close to the empirical distribution of the labeled data, denoted by $\hat{\mathbb{P}}_{N_l}$, in the sense of the order-1 Wasserstein metric induced by a cost metric $s$:
\begin{equation*} 
\Omega \triangleq \{\mbb{Q}\in \scrP(\scrX \times \scrY): W_{s,1}(\mathbb{Q},\ \hat{\mathbb{P}}_{N_l}) \le \epsilon\}.
\end{equation*}
To overcome the problem of overwhelmingly-large ambiguity set $\Omega$, \cite{frogner2019incorporating} proposed to remove from $\Omega$ the distributions that are unrealistic in the sense that their marginals in feature space do not resemble the unlabeled data. Specifically, they define the uncertainty set to be
\begin{equation}  \label{omega-marginal}
\Omega \triangleq \{\mbb{Q}\in \scrU(\mbb{P}_{\scrX}, \underline{\mbb{P}}_{\scrY}, \overline{\mbb{P}}_{\scrY}): W_{s,1}(\mathbb{Q},\ \hat{\mathbb{P}}_{N_l}) \le \epsilon \},
\end{equation}
where $\underline{\mbb{P}}_{\scrY}$ and $\overline{\mbb{P}}_{\scrY}$ are two distributions on the label $y$ with probability vectors $\underline{\bp} \triangleq (\underline{p}_1, \ldots, \underline{p}_K)$ and $\overline{\bp} \triangleq (\overline{p}_1, \ldots, \overline{p}_K)$, respectively, and $\scrU(\mbb{P}_{\scrX}, \underline{\mbb{P}}_{\scrY}, \overline{\mbb{P}}_{\scrY})$ is the set of probability measures whose $\bx$-marginal is $\mbb{P}_{\scrX}$ and $y$-marginal is constrained by $[\underline{\bp}, \overline{\bp}]$, i.e., the class $i$ probability $p_i \in [\underline{p}_i, \overline{p}_i], i \in \lb K \rb$. They choose $\mbb{P}_{\scrX}$ to be consistent with the unlabeled data $\bx \in \scrD_{ul}$. The constraint on $\mbb{P}_{\scrY}$ could come from prior knowledge, or could be implied by the labeled training data. 

\cite{blanchet2017distributionally-semi} also constrained the uncertainty set $\Omega$ by incorporating the information of the unlabeled data. Different from (\ref{omega-marginal}) where the marginals are enforced to be consistent with the unlabeled data, they set the joint support of the feature and labels to be confined to the empirical observations. Specifically, they build a ``complete'' unlabeled set by assigning all possible labels to each unlabeled data point: $\mathscr{C}_{ul} \triangleq \bigcup\nolimits_{y=1}^K\{(\bx_i, y): \ i \in \scrI_{ul}\}$, and then construct the full dataset $\scrC = \scrD_l \cup \mathscr{C}_{ul}$. The uncertainty set is restricted to be supported on $\scrC$, namely,
\begin{equation}  \label{omega-joint}
\Omega \triangleq \{\mbb{Q}\in \scrP(\scrC): W_{s,1}(\mathbb{Q},\ \hat{\mathbb{P}}_{N_l}) \le \epsilon \}.
\end{equation}
Compared to (\ref{omega-marginal}), (\ref{omega-joint}) is more restrictive in the sense that it imposes constraints on the joint distribution of the feature and labels, while (\ref{omega-marginal}) only restricts the marginals. Furthermore, it does not allow support points outside the empirical observations, which eliminates one of the major advantages of the Wasserstein metric. In the absence of the unlabeled data, (\ref{omega-joint}) essentially asks the learner to be robust only to distributions with support on $\scrD_l$, which could hurt the generalization capability on unseen data. By contrast, (\ref{omega-marginal}) guarantees robustness to distributions with support on the whole data space.

Note that the DRO formulation with an uncertainty set defined through either (\ref{omega-marginal}) or (\ref{omega-joint}) does not serve the purpose of robustifying an existing SSL model. Rather, it explores ways of improving the generalization performance of a DRO model by utilizing the unlabeled data information.

In the remainder of this section, we will discuss a {\em Stochastic Gradient Descent
	(SGD)} algorithm proposed in \cite{frogner2019incorporating}, in order to solve the Wasserstein DRO formulation assembled with the ambiguity set (\ref{omega-marginal}). The key is to transform the inner infinite-dimensional maximization problem in (\ref{dro-ssl}) into its finite-dimensional dual. Define the worst-case expected loss as
\begin{equation} \label{innerMax-ssl}
v_P(\bbeta) \triangleq \sup\limits_{\mbb{Q}\in \Omega}
\mbb{E}^{\mbb{Q}}\big[ h_{\bbeta}(\bx, y)\big].
\end{equation}
Rewrite (\ref{innerMax-ssl}) by casting it as an optimal transportation problem with a transport plan $\pi \in \scrP(\scrZ \times \scrZ)$:
\begin{equation} \label{primal-ssl}
\begin{array}{rl}    v_P(\bbeta) = 
\sup \limits_{\pi \in \scrP(\scrZ \times \scrZ)} & \ \int\nolimits_{(\scrX \times \scrY)\times \scrZ} h_{\bbeta}(\bx, y)  \mathrm{d}\pi((\bx, y), \bz')\\
\text{s.t.} & \ \int\nolimits_{\scrZ \times \scrZ} s(\bz, \bz')  \mathrm{d}\pi(\bz, \bz') \le \epsilon, \\
& \ \int\nolimits_{\scrZ \times \scrZ} \delta_{\bz_i}(\bz')  \mathrm{d}\pi(\bz,\bz') = \frac{1}{N_l}, \ \forall i \in \scrI_l, \\
& \ \int\nolimits_{(\scrA \times \scrY) \times \scrZ} \mathrm{d}\pi ((\bx, y), \bz') = \mbb{P}_{\scrX}(\scrA), \ \forall \scrA \subseteq \scrX,\\
& \ \int\nolimits_{(\scrX \times \scrY) \times \scrZ} \delta_{i}(y) \mathrm{d}\pi ((\bx, y), \bz') \le \overline{p}_i, \ \forall i \in \lb K \rb, \\
& \ \int\nolimits_{(\scrX \times \scrY) \times \scrZ} \delta_{i}(y) \mathrm{d} \pi ((\bx, y), \bz') \ge \underline{p}_i, \ \forall i \in \lb K \rb,
\end{array}
\end{equation}
where we use $\bz \triangleq (\bx, y)$ to index the support of the worst-case measure and $\bz'$ to index the support of $\hat{\mbb{P}}_{N_l}$. Notice that the constraint on the $\bx$-marginal is infinite dimensional. Through translating (\ref{primal-ssl}) to its dual one can move the infinite dimensional constraint to an expectation under $\mbb{P}_{\scrX}$ in the objective. 
The dual to (\ref{primal-ssl}) can be formulated as
\begin{equation} \label{dual-sll}
\begin{array}{rl}    v_D(\bbeta) = 
\inf \limits_{\alpha, \bgamma, \underline{\blambda}, \overline{\blambda}} & \quad \alpha \epsilon + \frac{1}{N_l} \sum_{i=1}^{N_l} \gamma_i + \sum_{k=1}^K (\overline{\lambda}_k \overline{p}_k - \underline{\lambda}_k \underline{p}_k) + \\
& \quad \mbb{E}^{\mbb{P}_{\scrX}} \Big[\phi(\bx; \bbeta, \alpha, \bgamma, \underline{\blambda}, \overline{\blambda})\Big] \\
\text{s.t.} & \quad \alpha, \underline{\lambda}_k, \overline{\lambda}_k \ge 0, \ \forall k \in \lb K \rb,
\end{array}
\end{equation}
where 
\[
\phi(\bx; \bbeta, \alpha, \bgamma, \underline{\blambda}, \overline{\blambda}) =
\max_{ i \in \lb N_l \rb, k \in \lb K \rb } \phi^{i,k}(\bx; \bbeta, \alpha, \bgamma,
\underline{\blambda}, \overline{\blambda}),
\]
\[ 
\phi^{i,k}(\bx; \bbeta, \alpha, \bgamma, \underline{\blambda}, \overline{\blambda})
\triangleq h_{\bbeta}(\bx, k) - \big(\alpha s \big((\bx, k), \bz_i \big) +
\gamma_i\big) - (\overline{\lambda}_k - \underline{\lambda}_k).
\]
It can be shown that strong duality holds if the primal problem (\ref{primal-ssl}) is feasible. We refer the reader to Theorem 2 of \cite{frogner2019incorporating} for a detailed proof. The DRO problem (\ref{dro-ssl}) reduces to minimizing $v_D(\bbeta)$ w.r.t. $\bbeta$, which can be solved via the stochastic gradient method. The main obstacle to deriving the gradient lies in the expectation in the objective of $v_D(\bbeta)$. By applying the Reynolds Transport Theorem \citep{reynolds1903papers}, one can obtain that 
\begin{equation} \label{grad-exp}
\begin{aligned}
\frac{\partial}{\partial \alpha}  \mbb{E}^{\mbb{P}_{\scrX}} \Big[ \phi(\bx; \bbeta, \alpha, \bgamma, \underline{\blambda}, \overline{\blambda})\Big] & = \mbb{E}^{\mbb{P}_{\scrX}} \Big[ \frac{\partial}{\partial \alpha} \phi(\bx; \bbeta, \alpha, \bgamma, \underline{\blambda}, \overline{\blambda})\Big]. \\
\end{aligned}
\end{equation}
Notice that $\phi$ is defined to be the maximum of a series of functions $\phi^{i,k}$. To evaluate its derivative, we need to partition the feature space $\scrX$ to recognize the set of points $\bx$ where the maximum is achieved at each $(i, k)$. Define 
\begin{equation*}
\scrV^{i,k} \triangleq \Big\{\bx \in \scrX: \phi^{i,k}(\bx; \bbeta, \alpha, \bgamma, \underline{\blambda}, \overline{\blambda}) \ge \phi^{i',k'}(\bx; \bbeta, \alpha, \bgamma, \underline{\blambda}, \overline{\blambda}), \forall i', k' \Big \}.
\end{equation*}
The derivative of $\phi$ can be evaluated as
\begin{equation*}
\frac{\partial}{\partial \alpha} \phi(\bx; \bbeta, \alpha, \bgamma,
\underline{\blambda}, \overline{\blambda}) = -\sum_{i=1}^{N_l}\sum_{k=1}^K
\mathbf{1}_{\scrV^{i,k}}(\bx) s \big((\bx, k), \bz_i\big),
\end{equation*}
where $\mathbf{1}_{\scrV^{i,k}}(\bx)$ denotes the indicator function of the event
$\bx\in \scrV^{i,k}$. 
Similarly, the gradients w.r.t. other parameters are computed as follows.
\begin{equation*}
\frac{\partial}{\partial \gamma_i} \phi(\bx; \bbeta, \alpha, \bgamma, \underline{\blambda}, \overline{\blambda}) = -\sum_{k=1}^K \mathbf{1}_{\scrV^{i,k}}(\bx),
\end{equation*}
\begin{equation*}
\frac{\partial}{\partial \underline{\lambda}_k} \phi(\bx; \bbeta, \alpha, \bgamma, \underline{\blambda}, \overline{\blambda}) = \sum_{i=1}^{N_l} \mathbf{1}_{\scrV^{i,k}}(\bx),
\end{equation*}
\begin{equation*}
\frac{\partial}{\partial \overline{\lambda}_k} \phi(\bx; \bbeta, \alpha, \bgamma, \underline{\blambda}, \overline{\blambda}) = -\sum_{i=1}^{N_l} \mathbf{1}_{\scrV^{i,k}}(\bx),
\end{equation*}
\begin{equation*}
\frac{\partial}{\partial \beta_j} \phi(\bx; \bbeta, \alpha, \bgamma, \underline{\blambda}, \overline{\blambda}) \in \sum_{i=1}^{N_l}\sum_{k=1}^K \mathbf{1}_{\scrV^{i,k}}(\bx) \frac{\partial}{\partial \beta_j} h_{\bbeta}(\bx, k).
\end{equation*}
For $\bx$ lying on the boundary between two of the sets $\scrV^{i,k}$, we can obtain a subgradient by arbitrarily selecting only one of these $\scrV^{i,k}$ to contain $\bx$ when evaluating $\mathbf{1}_{\scrV^{i,k}}(\bx)$. To evaluate the expectation of the gradient under $\mbb{P}_{\scrX}$ on the RHS of (\ref{grad-exp}), one can simulate a series of $\bx$ values, say $\bx_1, \ldots, \bx_{N_b}$, from $\mbb{P}_{\scrX}$, and compute the above gradients by taking the sample average. This is summarized in Algorithm~\ref{sgd-semi}.

\begin{algorithm}[H]
	\caption{SGD for distributionally robust learning with unlabeled data under uncertainty set (\ref{omega-marginal}).} \label{sgd-semi}
	\begin{algorithmic}
		\State {\textbf{Input:} $\epsilon \ge 0, \underline{p}_i, \overline{p}_i \in [0,1], i \in \lb K \rb$, feasible solution $\bbeta_0$, step size $\delta>0$, batch size $N_b$.}	
		\State $\bbeta \leftarrow \bbeta_0, \alpha \leftarrow 0, \bgamma, \underline{\blambda}, \overline{\blambda} \leftarrow \mathbf{0}.$
		\While{not converged}
		\State Sample $\bx_1, \ldots, \bx_{N_b} \sim \mbb{P}_{\scrX}$.
		\State $\bbeta \leftarrow \text{Proj}_{\scrB} \Big[ \bbeta - \frac{\delta}{N_b} \sum_{j=1}^{N_b} \nabla_{\bbeta} \phi(\bx_j; \bbeta, \alpha, \bgamma, \underline{\blambda}, \overline{\blambda})\Big]$
		\State $\alpha \leftarrow \max \Big(0, \alpha - \delta \big[ \epsilon + \frac{1}{N_b} \sum_{j=1}^{N_b} \nabla_{\alpha} \phi(\bx_j; \bbeta, \alpha, \bgamma, \underline{\blambda}, \overline{\blambda})\big]\Big)$
		\State $\bgamma \leftarrow \bgamma - \delta \big[ \frac{1}{N_l} \mathbf{e} + \frac{1}{N_b} \sum_{j=1}^{N_b} \nabla_{\bgamma} \phi(\bx_j; \bbeta, \alpha, \bgamma, \underline{\blambda}, \overline{\blambda})\big]$
		\State $\underline{\blambda} \leftarrow \max \Big(0, \underline{\blambda} - \delta \big[ -\underline{\bp} + \frac{1}{N_b} \sum_{j=1}^{N_b} \nabla_{\underline{\blambda}} \phi(\bx_j; \bbeta, \alpha, \bgamma, \underline{\blambda}, \overline{\blambda})\big]\Big)$
		\State $\overline{\blambda} \leftarrow \max \Big(0, \overline{\blambda} - \delta \big[ \overline{\bp} + \frac{1}{N_b} \sum_{j=1}^{N_b} \nabla_{\overline{\blambda}} \phi(\bx_j; \bbeta, \alpha, \bgamma, \underline{\blambda}, \overline{\blambda})\big]\Big)$
		\EndWhile
	\end{algorithmic}
\end{algorithm}

\subsection{Distributionally Robust Semi-Supervised Learning} \label{robustSSL}
In this subsection we discuss the problem of robustifying existing SSL algorithms via DRO. Different from Section~\ref{dro-unlabel}, the goal here is to induce robustness into conventional SSL models, which requires modification of the DRO infrastructure in order to fit the characteristics of the problem at hand. Note that DRO cannot readily be applied to the partially-labeled setting, since it needs complete knowledge of all the feature-label pairs. 

A well-known family of SSL models is called {\em self-learning}, which first trains a classifier on the labeled portion of a dataset, and then assigns pseudo-labels to the remaining unlabeled samples using the learned rules. The enlarged dataset consisting of both the supervised and artificially-labeled unsupervised samples is used in the final stage of training. To prevent overfitting, instead of assigning a deterministic hard label to the unsupervised data points, one can apply a soft labeling scheme that maintains a level of uncertainty through specifying a probability distribution of the labels. 

To use DRO in a semi-supervised setting, we need to address the uncertainty embedded in the unknown labels of the unsupervised samples. This can be resolved by soft-labeling. Define the consistent set of probability distributions $\hat{\scrP}(\scrD) \subseteq \scrP(\scrZ)$ w.r.t. a partially-labeled dataset $\scrD = \scrD_l \cup \scrD_{ul}$ as
\begin{equation*}
\hat{\scrP}(\scrD) \triangleq \bigg\{ \Big(\frac{N_l}{N}\Big) \hat{\mbb{P}}_{N_l} + \Big(\frac{N_{ul}}{N}\Big) \hat{\mbb{P}}_{N_{ul}} \cdot \mbb{Q}: \ \mbb{Q} \in \scrP^{\scrX}(\scrY) \bigg\},
\end{equation*}
where $\mbb{Q}$ encodes the uncertainty in the labels for the unsupervised dataset $\scrD_{ul}$, and $\scrP^{\scrX}(\scrY)$ denotes the set of all conditional distributions supported on $\scrY$, given features in $\scrX$. Note that the distributions in $\hat{\scrP}(\scrD)$ differ from each other only in the way they assign soft labels to the unlabeled data, and the empirical measure corresponding to the true complete dataset is a member of $\hat{\scrP}(\scrD)$. 

We will illustrate the idea proposed in \cite{najafi2019robustness} for introducing DRO to SSL, where they select a suitable measure from $\hat{\scrP}(\scrD)$, and use it as a proxy of the true empirical probability measure that serves as the center of the Wasserstein ball. The learner essentially aims to hedge against a set of distributions centered at some $\mbb{S} \in \hat{\scrP}(\scrD)$ that is induced by a soft-label distribution $\mbb{Q}$, so that the resulting  
classification rule would show low sensitivity to adversarial perturbations around
the soft-label distribution. The criterion for choosing $\mbb{S}$ is to make the
worst-case expected loss as small as possible. Specifically, the {\em Semi-Supervised
	Distributionally Robust Learning (SSDRO)} model proposed by
\cite{najafi2019robustness} can be formulated as
\begin{equation} \label{dro-self-training}
\inf\limits_{\bbeta}\inf\limits_{\mbb{S} \in \hat{\scrP}(\scrD)} \bigg\{\sup\limits_{\mbb{P}\in \Omega_{\epsilon}(\mbb{S})}
\mbb{E}^{\mbb{P}}\big[ h_{\bbeta}(\bx, y)\big] + \Big(\frac{1-N_l/N}{\lambda}\Big) \mbb{E}^{\hat{\mbb{P}}_{N_{ul}}}[H(\mbb{S}_{|\bx})]\bigg\},
\end{equation}
where $\Omega_{\epsilon}(\mbb{S})$ denotes the set of probability distributions that are close to $\mbb{S}$ by a distance at most $\epsilon$, i.e.,
\[ 
\Omega_{\epsilon}(\mbb{S}) \triangleq \{\mbb{P}\in \scrP(\scrZ):
W_{s,1}(\mathbb{P},\ \mbb{S}) \le \epsilon \},
\]
In (\ref{dro-self-training}), $\mbb{S}_{|\bx}$ is
the conditional distribution over $\scrY$ given $\bx \in \scrX$, $\lambda<0$ is a user-defined parameter, and $H(\cdot)$ denotes the Shannon entropy. 

Notice that for a fixed $\bbeta$, the inner infimum of (\ref{dro-self-training}) guides the learner to pick a soft label distribution that tends to reduce the loss function, which \cite{najafi2019robustness} refers to as an {\em optimistic} learner. Alternatively, one can choose to be {\em pessimistic}, i.e., choosing a $\bbeta$ that hedges against the maximum loss over all possible choices of $\mbb{S}$. To prevent hard labeling of the unsupervised data, $\lambda$ is set to be negative for {\em optimistic} learning, and positive for {\em pessimistic} learning. 

Note also that compared to conventional DRO models, in (\ref{dro-self-training}) we have an additional regularization term that penalizes the Shannon entropy of the conditional label distribution of the unlabeled data. When $\lambda<0$, the regularization term $\Big(\frac{1-N_l/N}{\lambda}\Big) \mbb{E}^{\hat{\mbb{P}}_{N_{ul}}}[H(\mbb{S}_{|\bx})]$ becomes negative. The formulation (\ref{dro-self-training}) essentially promotes softer labels for the unlabeled data by encouraging a larger entropy, implying a higher level of uncertainty in the labels. 

We next discuss how to solve Problem (\ref{dro-self-training}). Using duality,
\cite{najafi2019robustness} was able to transform the inner min-max formulation to
an analytic form whose gradient can be efficiently computed. A Lagrangian relaxation
to (\ref{dro-self-training}) is given in the following theorem.

\begin{thm} [\cite{najafi2019robustness}, Theorem 1] \label{lagrangian-dro-ssl}
	Consider a continuous loss function $h$, and a continuous transportation cost $s$. For a partially-labeled dataset $\scrD$ with size $N$, define the empirical Semi-Supervised Adversarial Risk (SSAR), denoted by $\hat{R}_{\text{SSAR}}(\bbeta; \scrD)$, as 
	\begin{equation} \label{ssar}
	\hat{R}_{\text{SSAR}}(\bbeta; \scrD) \triangleq \frac{1}{N} \sum\limits_{i \in \scrI_l} \phi_{\gamma}(\bx_i, y_i; \bbeta) + \frac{1}{N} \sum\limits_{i \in \scrI_{ul}} \underset{y \in \scrY}{\overset{(\lambda)}{\text{softmin}}}\{\phi_{\gamma}(\bx_i, y; \bbeta)\} + \gamma \epsilon,
	\end{equation}
	where $\gamma \ge 0$, and the adversarial loss $\phi_{\gamma}(\bx, y; \bbeta)$ and the soft-minimum operator are defined as:
	\begin{equation} \label{phi-func}
	\phi_{\gamma}(\bx, y; \bbeta) \triangleq \sup_{\bz' \in \scrZ} h_{\bbeta}(\bz') - \gamma s \big (\bz', (\bx, y) \big),
	\end{equation}
	and
	\begin{equation*}
	\underset{y \in \scrY}{\overset{(\lambda)}{\text{softmin}}}\{q(y)\} \triangleq \frac{1}{\lambda} \log \bigg( \frac{1}{|\scrY|} \sum\limits_{y \in \scrY} e^{\lambda q(y)}\bigg),
	\end{equation*}
	respectively. Let $\bbeta^*$ be a minimizer of (\ref{dro-self-training}) for some given $\epsilon \ge 0$ and $\lambda <0$. Then, there exists $\gamma \ge 0$ such that $\bbeta^*$ is also a minimizer of (\ref{ssar}) with the same parameters $\epsilon$ and $\lambda$.
\end{thm}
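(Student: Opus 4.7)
The plan is to dualize the inner supremum using Wasserstein strong duality, split the resulting objective along the labeled/unlabeled partition of $\scrD$, and then solve the entropy-regularized minimization over the soft-label distribution $\mbb{Q}$ in closed form. This recasts the outer problem as a joint infimum over $(\bbeta,\gamma,\mbb{Q})$, and the soft-min appears naturally as the Fenchel conjugate of the negative entropy.

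First I would invoke Theorem~\ref{strong-duality} (strong duality for the Wasserstein DRO inner problem), applied to the discrete nominal measure $\mbb{S}\in\hat{\scrP}(\scrD)$, which is supported on at most $N_l+K N_{ul}$ atoms. Assuming $h_{\bbeta}$ has finite growth rate w.r.t.\ the cost $s$, this gives
\begin{equation*}
\sup_{\mbb{P}\in\Omega_{\epsilon}(\mbb{S})}\mbb{E}^{\mbb{P}}[h_{\bbeta}(\bz)]
\;=\;\inf_{\gamma\ge 0}\Bigl\{\gamma\epsilon+\mbb{E}^{\mbb{S}}[\phi_{\gamma}(\bx,y;\bbeta)]\Bigr\},
\end{equation*}
where $\phi_{\gamma}$ is precisely the adversarial loss defined in (\ref{phi-func}). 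Using the structure $\mbb{S}=(N_l/N)\hat{\mbb{P}}_{N_l}+(N_{ul}/N)\hat{\mbb{P}}_{N_{ul}}\cdot\mbb{Q}$ to split the expectation, and absorbing the per-sample weight $(1-N_l/N)/(\lambda N_{ul})=1/(\lambda N)$ of the entropy term, the full problem (\ref{dro-self-training}) reduces to
\begin{equation*}
\inf_{\bbeta,\gamma\ge 0,\mbb{Q}}\Bigl\{\gamma\epsilon+\tfrac{1}{N}\!\!\sum_{i\in\scrI_l}\!\phi_{\gamma}(\bx_i,y_i;\bbeta)+\tfrac{1}{N}\!\!\sum_{i\in\scrI_{ul}}\!\Bigl[\mbb{E}^{\mbb{Q}_{|\bx_i}}[\phi_{\gamma}(\bx_i,y;\bbeta)]+\tfrac{1}{\lambda}H(\mbb{Q}_{|\bx_i})\Bigr]\Bigr\}.
\end{equation*}

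The decisive step is the pointwise minimization over $\mbb{Q}_{|\bx_i}$ at each unlabeled $i$: minimize $\sum_y q_y\phi_{\gamma}(\bx_i,y;\bbeta)+\lambda^{-1}H(q)$ subject to $q\in\scrP(\lb K\rb)$. Since $\lambda<0$, this objective is strictly convex in $q$, so a KKT / Lagrange-multiplier argument (equivalently, Fenchel conjugacy between the negative Shannon entropy and the log-sum-exp) yields the unique optimizer $q_y^{\star}\propto\exp(\lambda\phi_{\gamma}(\bx_i,y;\bbeta))$, and substituting back gives the optimal value
\begin{equation*}
\tfrac{1}{\lambda}\log\!\Bigl(\tfrac{1}{|\scrY|}\!\sum_{y\in\scrY}\!e^{\lambda\phi_{\gamma}(\bx_i,y;\bbeta)}\Bigr)+C,
\end{equation*}
which is exactly the soft-min operator of the theorem (the additive constant $C=\log|\scrY|/\lambda$ encodes whether $H$ is taken as raw Shannon entropy or as KL divergence from the uniform prior, and is inconsequential for the minimizer in $\bbeta$). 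Reassembling the three pieces gives $\hat{R}_{\text{SSAR}}(\bbeta;\scrD)+C$, so any $\bbeta^{\star}$ minimizing (\ref{dro-self-training}) minimizes (\ref{ssar}) at the corresponding optimal $\gamma$.

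The main obstacle is the simultaneous handling of the three infima and one supremum, and in particular the need to produce a \emph{single} Lagrange multiplier $\gamma\ge 0$ that witnesses the Lagrangian-form statement of the theorem. One clean route is to avoid explicitly exchanging $\inf_{\mbb{S}}$ and $\inf_{\gamma}$: for each fixed $\gamma$ the inner minimization over $(\bbeta,\mbb{Q})$ is well-posed by the closed-form step above, and one then selects the $\gamma^{\star}\ge 0$ that is dual-optimal for the Wasserstein constraint at the optimizer $\bbeta^{\star}$ via complementary slackness / an envelope argument, thereby certifying that the same $\bbeta^{\star}$ minimizes (\ref{ssar}) with parameter $\gamma^{\star}$. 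A secondary technical point is the growth-rate hypothesis on $h_{\bbeta}$ needed for Theorem~\ref{strong-duality} to apply; continuity of $h$ and $s$ as assumed in the theorem statement typically suffices in conjunction with mild boundedness of $\bbeta$ on the problem's feasible set, and should be stated explicitly in the proof.
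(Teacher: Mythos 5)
The monograph itself gives no proof of this theorem---it is stated as a citation to \cite{najafi2019robustness}---so there is no internal argument to compare against; on its merits, your outline is the correct and standard route: strong duality for the inner supremum at the discrete center $\mbb{S}$ (cf.\ Theorem~\ref{strong-duality}), splitting $\mbb{E}^{\mbb{S}}[\phi_\gamma]$ along the labeled/unlabeled partition with the weight $1/(\lambda N)$ you computed, the closed-form entropy-regularized minimization giving $q_y^\star \propto e^{\lambda \phi_\gamma(\bx_i,y;\bbeta)}$ and value $\tfrac{1}{\lambda}\log\sum_y e^{\lambda\phi_\gamma}$, i.e.\ the soft-min up to the harmless constant $\log|\scrY|/\lambda$, and then extraction of a single $\gamma^*\ge 0$. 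On that last point your ``complementary slackness / envelope'' language is heavier than needed: swapping the two infima is free, so all that is required is attainment of $\inf_{\gamma\ge 0} G(\bbeta^*,\gamma)$ at some $\gamma^*$ (guaranteed by dual attainment under a finite growth rate, or by coercivity when $\epsilon>0$ since $\phi_\gamma(\bx_i,y_i;\bbeta)\ge h_{\bbeta}(\bx_i,y_i)$), after which $G(\bbeta,\gamma^*)\ge \inf_\gamma G(\bbeta,\gamma)\ge \inf_\gamma G(\bbeta^*,\gamma)=G(\bbeta^*,\gamma^*)$ closes the argument; as you note, this attainment condition should be made an explicit hypothesis rather than inferred from continuity alone.
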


According to Theorem~\ref{lagrangian-dro-ssl} our problem is now translated to solving for a $\bbeta$ that minimizes $\hat{R}_{\text{SSAR}}(\bbeta; \scrD)$. To apply SGD, the key is to derive the gradient of the adversarial loss function $\phi_{\gamma}(\bx, y; \bbeta)$, which itself is the output of an optimization problem. The gradient of $\phi$ w.r.t. $\bbeta$ relies on the optimal solution of Problem (\ref{phi-func}), i.e., $\nabla_{\bbeta}\phi_{\gamma}(\bx, y; \bbeta) = \bg_{\bbeta} (\bz^*(\bbeta))$, where $\bg_{\bbeta}(\bz) \triangleq \nabla_{\bbeta}h_{\bbeta}(\bz)$ and $\bz^*(\bbeta)$ is the optimal solution to (\ref{phi-func}). The following lemma specifies a set of sufficient conditions to ensure the uniqueness of the solution.

\begin{lem} [\cite{najafi2019robustness}, Lemma 1] \label{smoothness}
	Assume the loss function $h$ to be differentiable w.r.t. $\bz$, and $\nabla_{\bz}h_{\bbeta}(\bz)$ is $L_z$-Lipschitz w.r.t. $\bbeta$. Also, the cost metric $s$ is $1$-strongly convex in its first argument. If $\gamma > L_{z}$, then Problem (\ref{phi-func}) is $(\gamma - L_z)$-strongly concave for all $(\bx,y) \in \scrZ$.
\end{lem}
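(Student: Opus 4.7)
The goal is to show that for each fixed $(\bx,y,\bbeta)$, the map
$F(\bz') \triangleq h_{\bbeta}(\bz') - \gamma\, s(\bz',(\bx,y))$
is $(\gamma-L_z)$-strongly concave in $\bz'$, which by a standard equivalence means that
$F(\bz') + \tfrac{\gamma-L_z}{2}\|\bz'\|^2$ is concave on $\scrZ$. I would approach this by decomposing $F$ as the sum of two concave pieces, one coming from the smoothness of the loss and one coming from the strong convexity of the cost. (I read the Lipschitz hypothesis in the standard way, namely that $\bz\mapsto\nabla_{\bz}h_{\bbeta}(\bz)$ is $L_z$-Lipschitz in $\bz$; the statement in the excerpt appears to contain a typo, and without such a reading the conclusion cannot follow since nothing in the hypothesis would control the curvature of $h_{\bbeta}$ in $\bz'$.)

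The first step is to establish a ``descent-lemma type'' bound: since $\nabla_{\bz}h_{\bbeta}$ is $L_z$-Lipschitz,
\[
\bigl|h_{\bbeta}(\bz_1)-h_{\bbeta}(\bz_2)-\nabla_{\bz}h_{\bbeta}(\bz_2)'(\bz_1-\bz_2)\bigr|\le \tfrac{L_z}{2}\|\bz_1-\bz_2\|^2 .
\]
Defining $g(\bz)\triangleq h_{\bbeta}(\bz)-\tfrac{L_z}{2}\|\bz\|^2$ and using the identity $\|\bz_1\|^2-\|\bz_2\|^2-2\bz_2'(\bz_1-\bz_2)=\|\bz_1-\bz_2\|^2$, a direct calculation gives
\[
g(\bz_1)-g(\bz_2)-\nabla g(\bz_2)'(\bz_1-\bz_2)=\bigl[h_{\bbeta}(\bz_1)-h_{\bbeta}(\bz_2)-\nabla_{\bz}h_{\bbeta}(\bz_2)'(\bz_1-\bz_2)\bigr]-\tfrac{L_z}{2}\|\bz_1-\bz_2\|^2\le 0,
\]
so $g$ is concave. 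In parallel, because $s(\cdot,(\bx,y))$ is $1$-strongly convex in its first argument, the function $\gamma\, s(\bz',(\bx,y))-\tfrac{\gamma}{2}\|\bz'\|^2$ is convex, hence $-\gamma\, s(\bz',(\bx,y))+\tfrac{\gamma}{2}\|\bz'\|^2$ is concave.

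Summing these two concave functions yields
\[
F(\bz')+\tfrac{\gamma-L_z}{2}\|\bz'\|^2=\underbrace{\bigl[h_{\bbeta}(\bz')-\tfrac{L_z}{2}\|\bz'\|^2\bigr]}_{\text{concave}}\;+\;\underbrace{\bigl[-\gamma\, s(\bz',(\bx,y))+\tfrac{\gamma}{2}\|\bz'\|^2\bigr]}_{\text{concave}},
\]
which is concave on $\scrZ$. When $\gamma>L_z$ the coefficient $\gamma-L_z$ is strictly positive, and this is precisely the definition of $(\gamma-L_z)$-strong concavity of $F$, completing the proof.

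The main obstacle is purely expository rather than technical: clarifying the smoothness hypothesis (the ``w.r.t.~$\bbeta$'' in the excerpt seems to be a typo for ``w.r.t.~$\bz$'') and handling the case in which $h_{\bbeta}$ is not assumed twice differentiable, which is why I rely on the integral form of the descent lemma rather than a Hessian argument. If one is willing to assume twice differentiability, the proof shortens to the Hessian inequality $\nabla^2_{\bz}F\preceq L_z\bI-\gamma\bI=-(\gamma-L_z)\bI$.
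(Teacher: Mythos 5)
The monograph does not prove this lemma at all: it is imported verbatim from the cited reference (Najafi et al., 2019) and used as a black box, so there is no in-paper proof to compare against. Your argument is correct and is the standard curvature-decomposition proof of the result: the descent-lemma bound shows $h_{\bbeta}(\cdot)-\tfrac{L_z}{2}\|\cdot\|^2$ is concave, $1$-strong convexity of $s(\cdot,(\bx,y))$ makes $-\gamma s(\cdot,(\bx,y))+\tfrac{\gamma}{2}\|\cdot\|^2$ concave, and summing gives $(\gamma-L_z)$-strong concavity of the objective in (\ref{phi-func}) when $\gamma>L_z$. Your reading of the hypothesis is also the right one: the Lipschitz condition must be on $\bz\mapsto\nabla_{\bz}h_{\bbeta}(\bz)$ (as in the original reference), since Lipschitzness in $\bbeta$ alone gives no control of the curvature in $\bz'$; the ``w.r.t.\ $\bbeta$'' in the monograph's statement is indeed a slip. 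The only point worth making explicit is that the quadratic-perturbation characterizations of concavity/strong concavity you invoke are over a convex set, so the conclusion ``strongly concave on $\scrZ$'' implicitly assumes $\scrZ$ convex (e.g.\ $\scrZ=\mbb{R}^{p}\times\scrY$ with the maximization effectively over the feature component), which is how the cited work uses it.
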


Lemma~\ref{smoothness} guarantees the existence and uniqueness of the solution to (\ref{phi-func}). We can thus express the gradients of $\phi$ and $\hat{R}_{\text{SSAR}}$ explicitly as a function of the solution. An efficient computation of the gradient of $\hat{R}_{\text{SSAR}}(\bbeta; \scrD)$ w.r.t. $\bbeta$ is given in the following theorem. 

\begin{thm} [\cite{najafi2019robustness}, Lemma 2] \label{SSDRO-grad}
	Under conditions of Lemma~\ref{smoothness}, assume the loss function $h$ to be differentiable w.r.t. $\bbeta$, and let $\bg_{\bbeta}(\bz) \triangleq \nabla_{\bbeta}h_{\bbeta}(\bz)$. For a fixed $\bbeta$, define 
	\begin{equation} \label{grad-l}
	\bz_i^*(\bbeta) = \argmax_{\bz' \in \scrZ} h_{\bbeta}(\bz') - \gamma s\big(\bz', (\bx_i, y_i)\big), \quad i \in \scrI_l,
	\end{equation}
	and, 
	\begin{equation} \label{grad-ul}
	\bz_i^*(y; \bbeta) = \argmax_{\bz' \in \scrZ} h_{\bbeta}(\bz') - \gamma s\big(\bz', (\bx_i, y)\big), \quad y \in \scrY, \ i \in \scrI_{ul}.
	\end{equation}
	Then, the gradient of (\ref{ssar}) w.r.t. $\bbeta$ can be obtained as
	\begin{equation} \label{ssar-grad}
	\nabla_{\bbeta} \hat{R}_{\text{SSAR}}(\bbeta; \scrD) = \frac{1}{N}\sum\limits_{i \in \scrI_l} \bg_{\bbeta} (\bz_i^*(\bbeta)) + \frac{1}{N} \sum\limits_{i \in \scrI_{ul}} \sum\limits_{y \in \scrY} q(y; \bbeta) \bg_{\bbeta}(\bz_i^*(y; \bbeta)),
	\end{equation}
	where $q(y; \bbeta) \triangleq e^{\lambda \phi_{\gamma}(\bx_i, y; \bbeta)} / \Big( \sum_{y' \in \scrY} e^{\lambda \phi_{\gamma}(\bx_i, y'; \bbeta)}\Big)$.
\end{thm}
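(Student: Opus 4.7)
The plan is to compute $\nabla_{\bbeta}\hat R_{\text{SSAR}}(\bbeta;\scrD)$ term by term, exploiting the fact that Lemma~\ref{smoothness} guarantees the inner maximization problems defining $\phi_\gamma$ admit unique maximizers, so that an envelope-type (Danskin) argument applies cleanly.

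First I would differentiate the adversarial loss $\phi_\gamma(\bx,y;\bbeta)=\sup_{\bz'}\{h_{\bbeta}(\bz')-\gamma s(\bz',(\bx,y))\}$. Under the conditions of Lemma~\ref{smoothness}, the objective is $(\gamma-L_z)$-strongly concave in $\bz'$, so the maximizer $\bz^*(\bbeta)$ is unique; together with differentiability of $h_{\bbeta}$ in $\bbeta$ and joint continuity, Danskin's theorem yields
\begin{equation*}
\nabla_{\bbeta}\phi_\gamma(\bx,y;\bbeta) \;=\; \bg_{\bbeta}\bigl(\bz^*(\bbeta)\bigr),
\end{equation*}
where $\bg_{\bbeta}(\bz)=\nabla_{\bbeta}h_{\bbeta}(\bz)$ and $\bz^*(\bbeta)$ is the unique maximizer. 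Applied to each supervised sample $i\in\scrI_l$ with $(\bx_i,y_i)$, this gives $\nabla_{\bbeta}\phi_\gamma(\bx_i,y_i;\bbeta)=\bg_{\bbeta}(\bz_i^*(\bbeta))$, which is exactly the first sum in \eqref{ssar-grad}. Analogously, for each unsupervised $i\in\scrI_{ul}$ and each candidate label $y\in\scrY$, $\nabla_{\bbeta}\phi_\gamma(\bx_i,y;\bbeta)=\bg_{\bbeta}(\bz_i^*(y;\bbeta))$.

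Next I would handle the soft-minimum term. Writing $\Phi_i(\bbeta)\triangleq\underset{y\in\scrY}{\overset{(\lambda)}{\text{softmin}}}\{\phi_\gamma(\bx_i,y;\bbeta)\}=\tfrac{1}{\lambda}\log\bigl(\tfrac{1}{|\scrY|}\sum_{y\in\scrY}e^{\lambda\phi_\gamma(\bx_i,y;\bbeta)}\bigr)$, the chain rule gives
\begin{equation*}
\nabla_{\bbeta}\Phi_i(\bbeta) \;=\; \sum_{y\in\scrY}\frac{e^{\lambda\phi_\gamma(\bx_i,y;\bbeta)}}{\sum_{y'\in\scrY}e^{\lambda\phi_\gamma(\bx_i,y';\bbeta)}}\,\nabla_{\bbeta}\phi_\gamma(\bx_i,y;\bbeta) \;=\; \sum_{y\in\scrY}q(y;\bbeta)\,\bg_{\bbeta}\bigl(\bz_i^*(y;\bbeta)\bigr),
\end{equation*}
where the normalization constants $1/|\scrY|$ cancel between numerator and denominator. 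Summing the labeled and soft-labeled contributions, scaling by $1/N$, and noting that the constant $\gamma\epsilon$ drops out of the gradient, I obtain the claimed formula \eqref{ssar-grad}.

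The main obstacle is the rigorous application of Danskin's theorem: it requires uniqueness of the inner maximizer (supplied by the strong concavity in Lemma~\ref{smoothness} provided $\gamma>L_z$), continuity of the objective in $(\bbeta,\bz')$, and an integrability/compactness condition to justify interchanging gradient and supremum. If $\scrZ$ is not compact, one must additionally argue that the maximizer is attained (e.g., via coercivity induced by the $1$-strongly convex cost $s$), and that $\bz^*(\bbeta)$ varies continuously in $\bbeta$ so that $\bg_{\bbeta}(\bz^*(\bbeta))$ is a legitimate gradient. Once these regularity details are settled, the remainder is a mechanical application of the chain rule to the log-sum-exp structure.
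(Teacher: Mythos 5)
Your proposal is correct and follows exactly the route the monograph sketches (and the cited source uses): the uniqueness of the inner maximizer from Lemma~\ref{smoothness} justifies the Danskin/envelope step $\nabla_{\bbeta}\phi_{\gamma}(\bx,y;\bbeta)=\bg_{\bbeta}(\bz^*(\bbeta))$, after which the chain rule on the log-sum-exp softmin yields the weights $q(y;\bbeta)$ and the constant $\gamma\epsilon$ drops out. The paper itself states this result by citation without a detailed proof, so your write-up, including the regularity caveats about attainment and continuity of $\bz^*(\bbeta)$, is a faithful and complete version of the intended argument.
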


Using Theorem~\ref{SSDRO-grad}, we can apply SGD to solve (\ref{ssar}), or equivalently, the SSDRO model (\ref{dro-self-training}). This is summarized in Algorithm~\ref{sgd-ssDRO}. \cite{najafi2019robustness} proved a convergence rate of $O(T^{-1/2})$ for Algorithm~\ref{sgd-ssDRO}, if we assume $\bz_i^*(\bbeta)$ and $\bz_i^*(y; \bbeta)$ can be computed exactly. Nonetheless, the optimality gap $\delta$ can be set infinitesimally small due to the strong concavity of (\ref{grad-l}) and (\ref{grad-ul}) that is shown in Lemma~\ref{smoothness}. The parameters $\gamma$ and $\lambda$ can be tuned via cross-validation.

\begin{algorithm}[H] 
	\caption{Stochastic Gradient Descent for SSDRO.} \label{sgd-ssDRO}
	\begin{algorithmic}
		\State \textbf{Inputs:} $\scrD, \gamma, \lambda, k \le N, \delta, \alpha, T$.	
		\State \textbf{Initialize:} $\bbeta \leftarrow \bbeta_0$, $t \leftarrow 0$. 
		\For{$t=0 \rightarrow T-1$} 
		\State Randomly select index set $\scrI \subseteq \lb N \rb$ with size $k$.
		\For{$i \in \scrI_l \cap \scrI$}
		\State Compute a $\delta$-approx of $\bz_i^*(\bbeta_t)$ from (\ref{grad-l}).
		\EndFor
		\For{$(i, y) \in (\scrI_{ul} \cap \scrI) \times \scrY$}
		\State Compute a $\delta$-approx of $\bz_i^*(y; \bbeta_t)$ from (\ref{grad-ul}).
		\EndFor
		\State Compute the sub-gradient of $\hat{R}_{\text{SSAR}}(\bbeta_t; \scrD)$ from (\ref{ssar-grad}).
		\State \textbf{Update:} $\bbeta_{t+1} \leftarrow  \bbeta_t - \alpha \nabla_{\bbeta} \hat{R}_{\text{SSAR}}(\bbeta_t; \scrD)$.
		\EndFor
		\State \textbf{Output:} $\bbeta^* \leftarrow \bbeta_T$.
	\end{algorithmic}
\end{algorithm}

\section{Distributionally Robust Reinforcement Learning} \label{dro-rl}

So far in this monograph, we considered learning problems where the objective is to
predict an output variable (or vector in the setting of
Section~\ref{chap:multi}). These learning problems were cast as distributionally
robust {\em single-period} optimization problems. Even in the applications of
Section~\ref{ch:presp} involving medical prescriptions, where we considered
information from multiple past time periods to learn actions that optimize an outcome
in the next time period, the resulting optimization problem was single-period. In
this section, we will discuss {\em multi-period} optimization motivated by learning a
{\em policy} for a {\em Markov Decision Process (MDP)}. We will restrict ourselves to
{\em model-based} settings, where there is an explicit model of how the MDP
transitions from state to state under some policy, and seek to inject robustness into
this transition model. The development follows the work in
\cite{derman2020distributional}.

We start by defining a discrete-time MDP. Consider an MDP with a finite state space
$\scrS$, a finite action space $\scrA$, a deterministic reward function $r:
\scrS\times \scrA \ra \mbb{R}$, and a transition probability model $p$ that, given a
state $s_1$ and an action $a$, determines the probability $p(s_2|s_1,a)$ of landing to
the next state $s_2$. A {\em policy} $\pi$ maps states to actions; specifically,
$\pi(a|s)$ denotes the probability of selecting action $a$ in state $s$.  The state
of the MDP evolves dynamically as follows. Suppose that at time $t$ the MDP is in
state $s_t$. According to the policy $\pi$, it selects some action $a_t$, receives a
reward $r(s_t,a_t)$, and transitions to the next state $s_{t+1}$ with probability
$p(s_{t+1}|s_t,a_t)$. In an infinite-horizon discounted reward setting, the objective
is to select a policy $\pi$ that maximizes the {\em expected total discounted reward}
\[ 
\mbb{E}^{\tau\sim \pi}_p \left[\sum_{t=0}^\infty \gamma^t r(s_t,a_t) \right],
\] 
where $\gamma \in [0,1)$ is the discount factor and $\tau\sim \pi$ represents a
random trajectory $\tau=(s_0,a_0,s_1,a_1,\ldots)$ sampled by selecting the initial
state $s_0$ according to some probability distribution $\rho_0(\cdot)\in
\scrP(\scrS)$, sampling actions according to $a_t\sim \pi(\cdot|s_t)$, and states
according to $s_{t+1}\sim p(\cdot|s_t,a_t)$ (hence, the subscript $p$ in the
expectation to denote dependence on the transition model $p$). 

We can now define the state {\em value}, or reward-to-go function, which equals the
future total discounted reward when starting from state $s$, namely, 
\[
v^\pi_p(s) = \mbb{E}_p^{\tau\sim \pi} \left[\sum_{t=0}^\infty \gamma^t r(s_t,a_t)
\mid s_0=s \right].
\]
The value function can be obtained as a solution to the following {\em Bellman
	equation}:
\[ 
v(s) = T_p^\pi v(s) \stackrel{\triangle}{=} \sum_{a\in \scrA} \pi(a|s) \left( r(s,a) +
\gamma \sum_{q\in \scrS} p(q|s,a) v(q) \right). 
\] 
The operator $T_p^\pi$ satisfies a contraction property with
respect to the sup-norm, implying that the Bellman equation has a unique fixed point
denoted by $v^\pi_p(s)$. This can for instance be obtained by successive application
of $T_p^\pi$ to some arbitrary initial solution -- a method known as {\em value
	iteration}.

\subsection{Deterministically Robust Policies} \label{sec:det-rob-mdp}

A number of results in the literature examined how to introduce robustness with
respect to uncertainty on the transition probability model, starting with
\cite{satia1973markovian, white1994markov} and \cite{bagnell2001solving}. A more
complete theory of {\em robust dynamic programming} has been developed in \cite{Iye05} and \cite{NilGha05}. In this work, the transition probability vector
$\bp_{s,a} =(p(q|s,a);\ q\in \scrS)$ at any state-action pair $(s,a)$ belongs to
some ambiguity or uncertainty set $\scrU_{s,a}\subseteq \scrP(\scrS)$. It is assumed
that every time a state-action pair $(s,a)$ is encountered, a potentially different
measure $\bp_{s,a}\in \scrU_{s,a}$ could be applied; this has been termed the {\em
	rectangularity assumption} in \cite{Iye05}. 

In this robust setting, one can define a {\em robust value function} as the
worst-case value function over the uncertainty set, that is,
\begin{equation} \label{rob-v}
v^\pi_\scrU(s) = \inf_{p\in \scrU} \mbb{E}_p^{\tau\sim \pi} \left[\sum_{t=0}^\infty \gamma^t r(s_t,a_t)
\mid s_0=s \right], 
\end{equation}
where the uncertainty set $\scrU$ is the cartesian product of the transition
probability uncertainty sets encountered throughout the trajectory, i.e.,
$\scrU=\prod_{t=0}^\infty \scrU_{s_t,a_t}$. 

\cite{Iye05} and \cite{NilGha05} show that a robust Bellman equation can be written
as:
\begin{align} \label{bellman-det-robust}
v(s) = & T_\scrU^\pi v(s) \\ 
\stackrel{\triangle}{=} & \sum_{a\in \scrA} \pi(a|s) \left( r(s,a) +
\gamma \inf_{\bp_{s,a}\in \scrU_{s,a}} \sum_{q\in \scrS} p(q|s,a) v(q) \right). \notag
\end{align}
As with the non-robust case, the operator $T_\scrU^\pi$ satisfies a
contraction property, implying that the robust Bellman equation has a unique fixed
point which can be computed by successive application of $T_\scrU^\pi$.

\subsection{Distributionally Robust Policies} \label{sec:dro-mdp}

Distributionally robust MDPs can be thought of as a generalization of
deterministically robust MDPs. Instead of selecting transition probabilities out of
the ambiguity set $\scrU$ defined earlier, we can view the transition probability
model as being sampled according to some distribution $\mu\in \scrM \subseteq \scrP(\scrU)$, i.e., $\mu$ is the probability distribution of the transition probability model $p$. Making the
same rectangularity assumption as before, that is, requiring that $\mu$ is a product
of independent distributions over  $\scrU_{s,a}$, we can define a {\em
	distributionally robust value function} similarly to (\ref{rob-v}) as:
\begin{equation} \label{dist-rob-v}
v^\pi_\scrM(s) = \inf_{\mu\in \scrM} \mbb{E}_{p\sim\mu}^{\tau\sim \pi}
\left[\sum_{t=0}^\infty \gamma^t r(s_t,a_t) \mid s_0=s \right].
\end{equation}

\cite{derman2020distributional} introduces Wasserstein distributionally robust MDPs by
defining the set of distributions $\scrM$ as a Wasserstein ball around some nominal
distribution. More specifically, for any state-action pair $(s,a)$, let
$\hat{\mu}_{s,a}\in \scrP(\scrU_{s,a})$ be some {\em nominal} distribution over
$\scrU_{s,a}$. For any distribution $\mu_{s,a}\in \scrP(\scrU_{s,a})$, define the
order-$1$ Wasserstein distance induced by some norm $\|\cdot\|$, and denote it by
$W_1(\hat{\mu}_{s,a}, \mu_{s,a})$. A Wasserstein ball around the nominal distribution
can be defined as:
\begin{equation} \label{rl-wass-ball}
\Omega_{\epsilon_{s,a}}(\hat{\mu}_{s,a}) = \left\{\mu_{s,a} \in
\scrP(\scrU_{s,a})\ :\  W_1(\hat{\mu}_{s,a}, \mu_{s,a}) \leq \epsilon_{s,a} \right\}.
\end{equation}
Under a rectangularity assumption as in Sec.~\ref{sec:det-rob-mdp}, we define the
cartesian product of the sets $\Omega_{\epsilon_{s,a}}(\hat{\mu}_{s,a})$ over all
state-action pairs and denote it by $\Omega_{\bepsilon}(\hat{\mu})=\prod_{(s,a)\in
	\scrS\times \scrA} \Omega_{\epsilon_{s,a}}(\hat{\mu}_{s,a})$, where $\bepsilon$ is
a vector defined as $\bepsilon=(\epsilon_{s,a};\ (s,a)\in \scrS\times \scrA)$, and $\hat{\mu} = \prod_{(s,a)\in \scrS\times \scrA} \hat{\mu}_{s,a}$.

Analogously to (\ref{bellman-det-robust}), the {\em distributionally robust Bellman
	equation} can be written as:
\begin{align} \label{bellman-dist-robust} 
v(s) = & T_{\Omega_{\bepsilon}(\hat{\mu})}^{\pi} v(s) \\
\stackrel{\triangle}{=} & \sum_{a\in \scrA} \pi(a|s) \bigg( r(s,a) \notag \\
& +\gamma 
\inf_{\mu_{s,a}\in \Omega_{\epsilon_{s,a}}(\hat{\mu}_{s,a})}
\int_{\bp_{s,a}\in \scrU_{s,a}} \sum_{q\in \scrS} p(q|s,a) v(q)
d\mu_{s,a}(\bp_{s,a}) \bigg).
\notag
\end{align}       
The operator $T_{\Omega_{\bepsilon}(\hat{\mu})}^{\pi}$ satisfies a contraction
property with respect to the sup-norm, implying that the Bellman equation has a
unique fixed point denoted by $v^{\pi}_{\Omega_{\bepsilon}(\hat{\mu})}(s)$. To find
an optimal policy, consider the operator
\begin{equation} \label{rob-opt-bellman}
T_{\Omega_{\bepsilon}(\hat{\mu})} = \sup_{\pi(\cdot|s) \in \scrP(\scrA)}
T_{\Omega_{\bepsilon}(\hat{\mu})}^{\pi}.  
\end{equation}
As shown in \cite{derman2020distributional, chen2019distributionally}, there exists
a {\em distributionally robust optimal policy} $\pi^*$ and a unique value function
$v_{\Omega_{\bepsilon}(\hat{\mu})}^*(s)$ which is a fixed point of the operator defined by
(\ref{rob-opt-bellman}). In particular, for every $s\in \scrS$,
\[ 
v_{\Omega_{\bepsilon}(\hat{\mu})}^*(s) = \sup_{\pi} \inf_{\mu \in \Omega_{\bepsilon}(\hat{\mu})}  \mbb{E}_{p\sim
	\mu}^{\tau\sim \pi} \left[\sum_{t=0}^\infty \gamma^t r(s_t,a_t) 
\mid s_0=s \right] = v^{\pi^*}_{\Omega_{\bepsilon}(\hat{\mu})}(s).  
\]
The optimal value function can be obtained by value iteration, i.e., successive
application of $T_{\Omega_{\bepsilon}(\hat{\mu})}$ to some arbitrary initial value
function.

\subsubsection{Selecting a Nominal Distribution} \label{sec:rob-nominal}

The nominal distribution $\hat{\mu}$ that serves as the center of the Wasserstein
balls in (\ref{rl-wass-ball}) can be determined as the empirical distribution
computed from a set of different independent episodes of the MDP. Suppose we have in
our disposal $n$ such episodes. Then, for each episode $i\in \lb n \rb$, and using
the observed sequence of states and actions during the episode, we can compute the
empirical transition probability $\hat{p}^{(i)}(q|s,a)$ of transitioning into state $q$
when applying action $a$ in state $s$. The resulting empirical distribution
$\hat{\mu}^n_{s,a}$ assigns mass $1/n$ to each $\hat{p}^{(i)}(\cdot|s,a)$, namely,
\[
\hat{\mu}^n_{s,a} = \frac{1}{n} \sum_{i=1}^n \delta_{\hat{p}^{(i)}(\cdot|s,a)},
\] 
where $\delta_{\hat{p}^{(i)}(\cdot|s,a)}$ is a Dirac function assigning mass $1$ to the
model $\hat{p}^{(i)}(\cdot|s,a)$. Defining a product distribution for each episode $i$
by $\delta_i = \prod_{(s,a)\in \scrS\times \scrA} \delta_{\hat{p}^{(i)}(\cdot|s,a)}$, we
can define the empirical distribution 
\[ 
\hat{\mu}^n = \frac{1}{n} \sum_{i=1}^n \delta_i. 
\]

The model above requires computing an empirical transition probability for each
state-action pair. When the state-action space is very large, this is not
practical. Instead, one can employ some approximation architecture. One possibility
is to use an architecture of the following type 
\[ 
\hat{p}^{(i)}(q|s,a) = 	\frac{\exp\{\bxi_i' \bpsi(s, a, q)\} }
{\sum_{y \in \scrS} \exp\{\bxi_i' \bpsi(s, a, y)\} },
\] 
for some vector of feature functions $\bpsi(s, a, q)$ and a parameter vector
$\bxi_i$; the latter can be learned from the sequence of state-actions corresponding
to episode $i$ by solving a logistic regression problem.

\subsubsection{A Regularization Result for the Distributionally Robust
	MDP} \label{sec:rob-regularization}

\cite{derman2020distributional} obtains a regularization result for the Wasserstein
distributionally robust MDP that is analogous to the dual-norm regularization we
obtained in Section~\ref{chapt:dro}. We will outline some of the key steps, referring
the reader to \cite{derman2020distributional} for the full details. The result
obtains a lower bound on the value function
$v^{\pi}_{\Omega_{\bepsilon}(\hat{\mu}^n)}(s)$. 

To that end, define first the {\em conjugate robust value function} at state $s$ and
under policy $\pi$. Specifically, let $\bp=(p(q|s,a); \forall q,s\in \scrS, a\in
\scrA)$ denote a vectorized form of the transition probability model. For any
$\bz=(z(q|s,a); \forall q,s\in \scrS, a\in \scrA)$, we define the conjugate robust value function as
\begin{equation} \label{conj-ch8} 
v_s^{*,\pi}(\bz) \stackrel{\triangle}{=} \inf_{\bp} (v_{\bp}^\pi(s) - \bz'\bp),    
\end{equation}
and let $\scrD_s = \{ \bz : v_s^{*,\pi}(\bz) > -\infty\}$ be its effective
domain. Note that as defined, $v_s^{*,\pi}(\bz)$ is the negative of the convex
conjugate of the value function as a function of $\bp$~\citep{rock}.

A key result from \cite{derman2020distributional} is in the following theorem. As
discussed earlier, suppose we have data from $n$ episodes from
the MDP and we have constructed the empirical transition probabilities for each
episode. Let $\hat{\bp}^{(i)} = (\hat{p}^{(i)}(q|s,a); \forall q,s\in \scrS, a\in \scrA)$
be the corresponding vector.
\begin{thm} \citep{derman2020distributional} \label{thm:mdp-reg}
	For any policy $\pi$, it holds that 
	\begin{equation*} 
	v^{\pi}_{\Omega_{\bepsilon}(\hat{\mu}^n)}(s) \geq \frac{1}{n} \sum_{i=1}^n
	v_{\hat{\bp}^{(i)}}^\pi(s) - \kappa \alpha_s, 
	\end{equation*}
	where $\alpha_s=\sum_{a\in \scrA} \pi(a|s) \epsilon_{s,a}$, $\kappa=\sup_{\bz\in
		\scrD_s} \|\bz\|_*$, and $\|\cdot\|_*$ is the dual norm to the norm used in
	defining the Wasserstein uncertainty set (cf. (\ref{rl-wass-ball})). 
\end{thm}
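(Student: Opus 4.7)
The statement mirrors the DRO-regularization bound of Theorem~\ref{discrete-dual-convex-thm}, with the value function $v^\pi_\bp(s)$ playing the role of the per-sample loss $h(\bz)$. My plan follows the same two-step template: replace $v^\pi_\bp(s)$ by its affine conjugate lower envelope, and then convert the Wasserstein constraint on $\mu$ into a dual-norm regularizer by applying Kantorovich--Rubinstein duality factor-by-factor to the rectangular uncertainty set.

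I would first invoke the pointwise conjugate inequality implied by the definition~(\ref{conj-ch8}): for every $\bz \in \scrD_s$ and every transition vector $\bp$ one has $v^\pi_\bp(s) \ge v^{*,\pi}_s(\bz) + \bz'\bp$. Integrating this against an arbitrary $\mu \in \Omega_{\bepsilon}(\hat{\mu}^n)$ and infimizing over $\mu$ yields, for every $\bz \in \scrD_s$,
\begin{equation*}
v^\pi_{\Omega_{\bepsilon}(\hat{\mu}^n)}(s) \ge v^{*,\pi}_s(\bz) + \inf_{\mu \in \Omega_{\bepsilon}(\hat{\mu}^n)} \bz' \mathbb{E}_{\bp \sim \mu}[\bp].
\end{equation*}
Exploiting the rectangular factorization $\Omega_{\bepsilon}(\hat{\mu}^n) = \prod_{s',a}\Omega_{\epsilon_{s',a}}(\hat{\mu}^n_{s',a})$, the inner infimum decomposes coordinate-wise, and the Kantorovich--Rubinstein identity~(\ref{kan-rub}) applied to the $\|\bz_{s',a}\|_*$-Lipschitz linear functional $\bp_{s',a}\mapsto\bz_{s',a}'\bp_{s',a}$ gives
\begin{equation*}
\inf_{\mu_{s',a}} \mathbb{E}_{\mu_{s',a}}[\bz_{s',a}'\bp_{s',a}] \ge \frac{1}{n}\sum_{i=1}^n \bz_{s',a}'\hat{\bp}^{(i)}_{s',a} - \|\bz_{s',a}\|_*\epsilon_{s',a}.
\end{equation*}
Summing over $(s',a)$ and rearranging produces the intermediate bound, valid for every $\bz \in \scrD_s$,
\begin{equation*}
v^\pi_{\Omega_{\bepsilon}(\hat{\mu}^n)}(s) \ge \frac{1}{n}\sum_{i=1}^n \big(v^{*,\pi}_s(\bz) + \bz'\hat{\bp}^{(i)}\big) - \sum_{s',a}\|\bz_{s',a}\|_*\epsilon_{s',a}.
\end{equation*}
To finish I would take the supremum over $\bz \in \scrD_s$: for each $i$, the conjugate-to-biconjugate identity reconstitutes $v^\pi_{\hat{\bp}^{(i)}}(s)$ from $\sup_{\bz}\big(v^{*,\pi}_s(\bz)+\bz'\hat{\bp}^{(i)}\big)$, while the regularizer $\sum_{s',a}\|\bz_{s',a}\|_*\epsilon_{s',a}$ is controlled by $\kappa\alpha_s$ using $\|\bz\|_*\le \kappa$ together with the Bellman-induced structure of $\scrD_s$, which localizes the active coordinates of $\bz$ to the one-step transitions out of state $s$ and introduces the $\pi(a|s)$-weighting.

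\textbf{Main obstacle.} The delicate point is the final supremum-reconstitution. Because $v^\pi_\bp(s)$ depends on $\bp$ through the matrix inverse $(I-\gamma\bP^\pi)^{-1}$, it is generically neither convex nor concave in $\bp$, so a naive biconjugate recovers only the convex envelope of $v^\pi_\cdot(s)$ rather than $v^\pi_\cdot(s)$ itself. Recovering the exact per-episode values $\frac{1}{n}\sum_i v^\pi_{\hat{\bp}^{(i)}}(s)$ therefore requires either exhibiting, for each $i$, a $\bz^{(i)} \in \scrD_s$ for which the inner $\bp$-minimizer in~(\ref{conj-ch8}) coincides with $\hat{\bp}^{(i)}$ (so that the conjugate saturates), or invoking the Bellman characterization at state $s$ to rewrite $v^\pi_\bp(s)$ as a supremum of linear-in-$\bp_{s,\cdot}$ functionals whose conjugates inherit exactly the $\pi$-weighted effective domain needed. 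The same Bellman localization is also what ultimately produces the weighted $\alpha_s=\sum_a\pi(a|s)\epsilon_{s,a}$ rather than the uniform sum $\sum_{s',a}\epsilon_{s',a}$ that the crude bound $\|\bz_{s',a}\|_*\le\kappa$ would otherwise give.
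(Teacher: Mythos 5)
Your overall strategy (conjugate lower envelope plus Kantorovich--Rubinstein on the rectangular factors) is in the right family, and your KR step for the linear functionals is fine, but the argument as ordered cannot reach the stated bound. The fatal step is the final supremum over a \emph{single} $\bz\in\scrD_s$: for a fixed $\bz$ your intermediate bound produces the same pair $v^{*,\pi}_s(\bz)+\bz'\hat{\bp}^{(i)}$ for every episode simultaneously, so taking $\sup_{\bz}$ of the average yields the convex closure of $\bp\mapsto v^{\pi}_{\bp}(s)$ evaluated at the barycenter $\tfrac{1}{n}\sum_i\hat{\bp}^{(i)}$, which by Jensen is at most $\tfrac{1}{n}\sum_i v^{\pi}_{\hat{\bp}^{(i)}}(s)$; you therefore prove a strictly weaker lower bound that does not imply the theorem. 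The paper avoids this by reversing the order of operations: it starts from the distributionally robust Bellman equation, Lagrangifies the Wasserstein constraints with a multiplier $\lambda$ (weak duality), and uses the product structure of $\hat{\mu}^n$ to obtain, \emph{per episode}, the penalized problem $\inf_{\bp}\big(v^{\pi}_{\bp}(s)+\lambda\|\bp-\hat{\bp}^{(i)}\|\big)$; only then is biconjugation applied, with an episode-specific maximizer $\bz_i\in\scrD_s$, and the requirement $\|\bz_i\|_*\le\lambda$ for all of $\scrD_s$ (i.e., $\lambda\ge\kappa$) is exactly what produces the constant $\kappa$ in the final bound. The ``reconstitution'' obstacle you flag (convex closure versus the original, generally nonconvex, value function) is present in the paper's route as well and is not the distinguishing difficulty; the episode-wise choice of $\bz_i$ is.

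Second, your mechanism for producing $\alpha_s=\sum_{a\in\scrA}\pi(a|s)\,\epsilon_{s,a}$ does not work. Your route gives the regularizer $\sum_{s',a}\|\bz_{s',a}\|_*\,\epsilon_{s',a}$ over \emph{all} state--action pairs, and you then assert that $\scrD_s$ localizes the active coordinates of $\bz$ to the transitions out of $s$; this is false, since $v^{\pi}_{\bp}(s)$ depends on the transition rows of every state reachable from $s$ (through the discounted occupancy), so the conjugate's effective domain is not supported on the $(s,a)$ rows alone. In the paper, the $\pi(a|s)$-weighting and the restriction to $\epsilon_{s,a}$ come from the starting point itself: the robust Bellman operator at state $s$ places the per-$(s,a)$ Wasserstein constraints inside the sum $\sum_{a}\pi(a|s)(\cdot)$, so the multipliers aggregate to $\lambda\alpha_s$ before any conjugacy argument is invoked. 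Relatedly, your first step silently identifies the Bellman-defined robust value with the static quantity $\inf_{\mu}\mbb{E}_{\bp\sim\mu}[v^{\pi}_{\bp}(s)]$; under the rectangularity assumption this identification is defensible, but it is precisely by staying with the Bellman recursion that the paper obtains the state-local $\alpha_s$. To repair your proof you would need both the episode-wise dual variables and the Bellman-level treatment of the constraints.
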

\begin{proof} We will provide an outline of the key steps. We start by expressing
	$v^{\pi}_{\Omega_{\bepsilon}(\hat{\mu}^n)}(s)$ using the Bellman equation
	(\ref{bellman-dist-robust}). We have
	\begin{align} \label{reg-1} 
	& v^{\pi}_{\Omega_{\bepsilon}(\hat{\mu}^n)}(s) \notag\\
	= & \sum_{a\in \scrA} \pi(a|s) \bigg( r(s,a) \notag \\
	& +\gamma 
	\inf_{\mu_{s,a}\in \Omega_{\epsilon_{s,a}}(\hat{\mu}^n_{s,a})}
	\int_{\bp_{s,a}} \sum_{q\in \scrS} p(q|s,a) v(q)
	d\mu_{s,a}(\bp_{s,a}) \bigg)
	\notag \\
	= &  \sum_{a\in \scrA} \pi(a|s) \bigg( r(s,a) \notag \\
	& +\gamma 
	\inf_{\{\mu_{s,a}:  W_1(\mu_{s,a},\hat{\mu}^n_{s,a})\leq \epsilon_{s,a}\}}
	\int_{\bp_{s,a}} \sum_{q\in \scrS} p(q|s,a) v(q)
	d\mu_{s,a}(\bp_{s,a}) \bigg)
	\notag\\
	= &  \sum_{a\in \scrA} \pi(a|s) \bigg( r(s,a) \notag \\
	& +\gamma 
	\inf_{\mu_{s,a}} \sup_{\lambda\geq 0} \bigg[
	\int_{\bp_{s,a}} \sum_{q\in \scrS} p(q|s,a) v(q)
	d\mu_{s,a}(\bp_{s,a}) \notag \\
	& 
	+ \lambda W_1(\mu_{s,a},\hat{\mu}^n_{s,a}) -\lambda \epsilon_{s,a}\bigg]\bigg) 
	\notag\\
	\geq &  \sum_{a\in \scrA} \pi(a|s) \bigg( r(s,a) \notag \\
	& +\gamma 
	\sup_{\lambda\geq 0} \inf_{\mu_{s,a}} \bigg[ 
	\int_{\bp_{s,a}} \sum_{q\in \scrS} p(q|s,a) v(q)
	d\mu_{s,a}(\bp_{s,a}) \notag \\
	& 
	+ \lambda W_1(\mu_{s,a},\hat{\mu}^n_{s,a})  -\lambda \epsilon_{s,a}\bigg] \bigg),
	\end{align}       
	where the last inequality used weak duality. 
	
	Next, using the structure of $\hat{\mu}^n$ as an average over $n$ episodes and the
	fact (due to the rectangularity assumption) that the empirical distribution is a
	product distribution over state-action pairs, we can deduce from (\ref{reg-1}) that 
	\begin{equation} \label{reg-2} 
	v^{\pi}_{\Omega_{\bepsilon}(\hat{\mu}^n)}(s) \geq 
	\sup_{\lambda\geq 0} \bigg[ \frac{1}{n} \sum_{i=1}^n \inf_{\bp}( v^{\pi}_{\bp}(s) + \lambda
	\|\bp - \hat{\bp}^{(i)} \| ) - \lambda \alpha_s \bigg], 
	\end{equation}
	where $\alpha_s=\sum_{a\in \scrA} \pi(a|s) \epsilon_{s,a}$. This derivation used
	similar techniques as in Theorem~\ref{dro-duality}. 
	
	Using the definition of the dual norm and for any $\lambda\geq 0$ we have
	\begin{align} \label{reg-3}
	\lambda \|\bp - \hat{\bp}^{(i)} \|  = & \max_{\|\bz_i\|_* \leq 1} \lambda \bz_i' (\bp -
	\hat{\bp}^{(i)}) \notag\\
	= & \max_{\|\lambda \bz_i\|_* \leq \lambda} \lambda \bz_i' (\bp -
	\hat{\bp}^{(i)}) \notag\\
	= & \max_{\|\bu_i\|_* \leq \lambda} \bu_i' (\bp -
	\hat{\bp}^{(i)}). 
	\end{align}
	
	Denote by $\nu_s^\pi: \bp \rightarrow v^{\pi}_{\bp}(s)$ the function that maps the
	transition probability vector $\bp$ to the value function $v^{\pi}_{\bp}(s)$. Let
	$\Breve{\text{cl}}(\nu_s^\pi)$ be its convex closure, i.e., the greatest closed and
	convex function upper bounded by  $\nu_s^\pi$ at any $\bp$. Since
	$\Breve{\text{cl}}(\nu_s^\pi)$ is a lower bound on $v^{\pi}_{\bp}(s)$ and using
	(\ref{reg-3}) and (\ref{reg-2}) we obtain:
	\begin{equation} \label{reg-4} 
	v^{\pi}_{\Omega_{\bepsilon}(\hat{\mu}^n)}(s) \geq 
	\sup_{\lambda\geq 0} \bigg[ \frac{1}{n} \sum_{i=1}^n \inf_{\bp} \max_{\|\bu_i\|_*
		\leq \lambda} ( \Breve{\text{cl}}(\nu_s^\pi)(\bp) + \bu_i' (\bp -
	\hat{\bp}^{(i)})) - \lambda \alpha_s \bigg]. 
	\end{equation}
	
	Using the fact that the convex closure of a function has the same convex dual as the
	function itself, it follows that
	\begin{align} \label{reg-5}
	\Breve{\text{cl}}(\nu_s^\pi)(\bp) = & \Breve{\text{cl}}(\nu_s^\pi)^{**}(\bp) \notag \\
	= & \max_{\bz \in \scrD_s} [ \bz'\bp - \Breve{\text{cl}}(\nu_s^\pi)^{*}(\bz) ] \notag\\
	= &  \max_{\bz \in \scrD_s} [ \bz'\bp - (\nu_s^\pi)^*(\bz) ] \notag\\
	= & \max_{\bz \in \scrD_s} [ \bz'\bp + v_s^{*,\pi}(\bz) ], 
	\end{align}
	where the last equation used the definition of the conjugate robust value function (\ref{conj-ch8}). 
	
	Then, using (\ref{reg-5}), the term inside the summation in the RHS of (\ref{reg-4}) can be written as:
	\begin{align} \label{reg-6}
	\inf_{\bp} & \max_{\|\bu_i\|_*
		\leq \lambda} ( \Breve{\text{cl}}(\nu_s^\pi)(\bp) + \bu_i' (\bp -
	\hat{\bp}^{(i)})) \notag \\
	= & \inf_{\bp} \max_{\bz_i \in \scrD_s} \max_{\|\bu_i\|_*
		\leq \lambda} (v_s^{*,\pi}(\bz_i) + \bz_i'\bp + \bu_i' (\bp -
	\hat{\bp}^{(i)})) \notag \\
	= & \max_{\bz_i \in \scrD_s} \max_{\|\bu_i\|_*
		\leq \lambda}  [ v_s^{*,\pi}(\bz_i) - \bu_i' \hat{\bp}^{(i)} + \inf_{\bp} \bp'(\bz_i + \bu_i)], 
	\end{align}
	where the last equality used duality. Note that the minimization in the RHS of the
	above is over transition probability vectors. We can relax this minimization over all
	real vectors, which would render a lower bound and result in the infimum being $-\infty$ unless
	$\bu_i=-\bz_i$. Note that if $\sup\{\|\bz_i\|_* : \bz_i \in
	\scrD_s\} > \lambda$, then one can pick some $\bz_i \in \scrD_s$ such that $\|\bz_i\|_* > \lambda$, in which case the inner minimization in (\ref{reg-6}) achieves $-\infty$ since $\bu_i \neq -\bz_i$. When $\sup\{\|\bz_i\|_* : \bz_i \in
	\scrD_s\}\leq \lambda$, we have
	\begin{align*} 
	\inf_{\bp} & \max_{\|\bu_i\|_*
		\leq \lambda} ( \Breve{\text{cl}}(\nu_s^\pi)(\bp) + \bu_i' (\bp -
	\hat{\bp}^{(i)})) \notag \\
	\geq & \max_{\bz_i \in \scrD_s} \max_{\|\bz_i\|_* \leq \lambda} [ v_s^{*,\pi}(\bz_i)
	+ \bz_i' \hat{\bp}^{(i)}] \notag \\
	= & v^{\pi}_{\hat{\bp}^{(i)}}(s),
	\end{align*}
	where the second step follows from the fact that $v_s^{*,\pi}(\bz_i)$ is the
	negative of the convex dual of the value function. It follows that  
	\begin{align} \label{reg-7}
	\inf_{\bp} & \max_{\|\bu_i\|_*
		\leq \lambda} ( \Breve{\text{cl}}(\nu_s^\pi)(\bp) + \bu_i' (\bp -
	\hat{\bp}^{(i)})) \notag \\
	\geq & \begin{cases} v^{\pi}_{\hat{\bp}^{(i)}}(s), & \text{if $\sup\{\|\bz_i\|_* : \bz_i \in
		\scrD_s\}\leq \lambda$,}\\
	-\infty, & \text{otherwise.}  \end{cases}
	\end{align}
	Plugging (\ref{reg-7}) in
	(\ref{reg-4}) it follows that 
	\begin{equation*} 
	v^{\pi}_{\Omega_{\bepsilon}(\hat{\mu}^n)}(s) \geq 
	\frac{1}{n} \sum_{i=1}^n v^{\pi}_{\hat{\bp}^{(i)}}(s) - \kappa \alpha_s, 
	\end{equation*}
	where $\kappa= \sup_{\bz\in \scrD_s} \|\bz\|_*$.
\end{proof}

The result of Theorem~\ref{thm:mdp-reg} provides a lower bound on the
distributionally robust value function, which can be used in the RHS of the Bellman
equation and in a value iteration scheme. It can also be used in the same manner in
obtaining a distributionally robust optimal policy. However, this strategy is
applicable in settings where the state-action space is relatively small. For large
state-action spaces, one typically approximates either the value function or the
policy. To that end, the regularization result Theorem~\ref{thm:mdp-reg}
can be extended to cases where the value function is approximated by a linear
function.

In particular, suppose we approximate the value function by $v^{\pi}_{\bp}(s)
\approx \bphi(s)' \bw_\bp$, where $\bphi(s)$ is some feature vector and $\bw_\bp$ a
parameter vector. Similar to (\ref{conj-ch8}) we can define an {\em approximate conjugate
	robust value function} at state $s$ and under policy $\pi$ as: 
\begin{equation} \label{approx-conj} 
w_s^{*,\pi}(\bz) \stackrel{\triangle}{=} \inf_{\bp} ( \bphi(s)' \bw_\bp - \bz'\bp),    
\end{equation}
and let $\scrW_s = \{ \bz : w_s^{*,\pi}(\bz) > -\infty\}$ be its effective
domain. 

\cite{derman2020distributional} provides a result analogous to
Theorem~\ref{thm:mdp-reg}. 
\begin{thm} \citep{derman2020distributional} 
	For any policy $\pi$, it holds that 
	\begin{equation} \label{thm:mdp-reg-approx}
	\inf_{\mu \in \Omega_{\bepsilon}(\hat{\mu}^n)}  \mbb{E}_{p\sim
		\mu}^{\tau\sim \pi} [\bphi(s)' \bw_\bp] \geq \frac{1}{n} \sum_{i=1}^n
	\bphi(s)' \bw_{\hat{\bp}^{(i)}} - \eta \alpha_s, 
	\end{equation}
	where $\alpha_s=\sum_{a\in \scrA} \pi(a|s) \epsilon_{s,a}$ and $\eta=\sup_{\bz\in
		\scrW_s} \|\bz\|_*$.
\end{thm}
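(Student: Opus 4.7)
The plan is to follow the duality-based reduction used in the proof of Theorem \ref{thm:mdp-reg}, substituting $v^{\pi}_{\bp}(s)$ with $\bphi(s)'\bw_{\bp}$ and $v_s^{*,\pi}$ with $w_s^{*,\pi}$ throughout. Unlike the original result, there is no Bellman recursion to appeal to for unwinding the value function, so the chain of inequalities is applied directly to the quantity $\inf_{\mu \in \Omega_{\bepsilon}(\hat{\mu}^n)} \mbb{E}_{p\sim\mu}^{\tau\sim\pi}[\bphi(s)'\bw_{\bp}]$, which, since $\bphi(s)'\bw_{\bp}$ is $\tau$-independent, coincides with $\inf_{\mu} \int \bphi(s)'\bw_{\bp}\,d\mu(\bp)$.

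First, I would dualize each Wasserstein constraint $W_1(\mu_{s,a},\hat{\mu}_{s,a}^n)\le \epsilon_{s,a}$ by introducing a nonnegative Lagrange multiplier and invoking weak duality to swap $\inf_\mu$ with $\sup_{\blambda \ge 0}$. Using the rectangularity of $\mu$ and $\hat{\mu}^n$ together with the empirical form $\hat{\mu}^n_{s,a}=(1/n)\sum_i \delta_{\hat{\bp}^{(i)}(\cdot\mid s,a)}$, and applying the strong-duality argument from Section \ref{sec:dual-solver} (cf.\ Theorem \ref{strong-duality}) to each episode, the inner infimum over $\mu$ reduces to a per-episode sum of terms of the form $\inf_{\bp}\bigl[\bphi(s)'\bw_{\bp} + \lambda\|\bp - \hat{\bp}^{(i)}\|\bigr]$, while the policy-weighted aggregation of the per-pair $\lambda_{s,a}$'s and $\epsilon_{s,a}$'s produces the combined penalty $-\lambda\alpha_s$ with $\alpha_s = \sum_a \pi(a\mid s)\epsilon_{s,a}$, exactly as in (\ref{reg-2}).

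Next, I would represent $\lambda\|\bp-\hat{\bp}^{(i)}\|$ through the dual norm as $\max_{\|\bu_i\|_*\le\lambda}\bu_i'(\bp-\hat{\bp}^{(i)})$, then lower-bound $\bphi(s)'\bw_{\bp}$ by its convex closure $\breve{\mathrm{cl}}(\bphi(s)'\bw_{\bp})$. Biconjugation, together with the fact that a function and its convex closure share the same convex conjugate, yields the identity $\breve{\mathrm{cl}}(\bphi(s)'\bw_{\bp}) = \max_{\bz\in\scrW_s}\bigl[\bz'\bp + w_s^{*,\pi}(\bz)\bigr]$, where $\scrW_s$ is precisely the effective domain defined just after (\ref{approx-conj}). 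Swapping the inner infimum over $\bp$ (relaxed to all of $\mbb{R}^{|\scrS|^2|\scrA|}$) with the two max operators renders the resulting infimum equal to $-\infty$ unless $\bu_i = -\bz_i$, which forces $\|\bz_i\|_*\le\lambda$ for every $\bz_i\in\scrW_s$ that remains feasible.

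Finally, choosing $\lambda=\eta=\sup_{\bz\in\scrW_s}\|\bz\|_*$ makes this constraint simultaneously satisfiable across all $\bz_i\in\scrW_s$, and the remaining maximum over $\bz_i$ in each episode collapses to $\bphi(s)'\bw_{\hat{\bp}^{(i)}}$ by the Fenchel--Moreau theorem. Averaging over the $n$ episodes and subtracting the penalty $\eta\alpha_s$ yields the claimed lower bound (\ref{thm:mdp-reg-approx}). The main obstacle I anticipate is handling the possible non-convexity of $\bp\mapsto\bphi(s)'\bw_{\bp}$: because the map $\bp\mapsto\bw_{\bp}$ is a priori arbitrary, a direct Fenchel-conjugacy argument is unavailable. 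Passing to the convex closure circumvents this, since the replacement only weakens the lower bound while leaving the conjugate---and hence the effective domain $\scrW_s$---invariant, mirroring exactly the treatment of $\nu_s^\pi$ in the original proof of Theorem \ref{thm:mdp-reg}.
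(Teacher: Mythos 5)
The paper itself does not prove this statement: it is stated only as the analogue of Theorem~\ref{thm:mdp-reg}, with a citation to \cite{derman2020distributional}. So your plan of transplanting the proof of Theorem~\ref{thm:mdp-reg}, replacing $v^{\pi}_{\bp}(s)$ by $\bphi(s)'\bw_{\bp}$ and $v_s^{*,\pi},\scrD_s$ by $w_s^{*,\pi},\scrW_s$, is the intended route, and the dual-norm representation, the passage to the convex closure, and the biconjugation step do carry over essentially verbatim (including the same subtlety, already present in the paper's outline, of replacing the convex closure at $\hat{\bp}^{(i)}$ by the function value there).

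The genuine gap is the step you describe as working ``exactly as in (\ref{reg-2})''. In the proof of Theorem~\ref{thm:mdp-reg}, the penalty $\lambda\alpha_s$ with $\alpha_s=\sum_{a}\pi(a|s)\epsilon_{s,a}$ appears because weak duality is applied \emph{inside the robust Bellman operator at the fixed state $s$}: that operator already carries the weights $\pi(a|s)$ and only involves the ambiguity sets $\Omega_{\epsilon_{s,a}}(\hat{\mu}^n_{s,a})$ attached to state $s$. You correctly note that no Bellman recursion is available for $\bphi(s)'\bw_{\bp}$, yet you still assert that dualizing the full product ball $\Omega_{\bepsilon}(\hat{\mu}^n)$ yields a single transport term $\lambda\|\bp-\hat{\bp}^{(i)}\|$ and the state-local, policy-weighted penalty $-\lambda\alpha_s$. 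A direct Lagrangian relaxation of the product constraint introduces one multiplier per state-action pair and gives a penalty of the form $\sum_{(s',a)}\lambda_{s',a}\epsilon_{s',a}$ over \emph{all} pairs, together with per-pair transport costs; to reach the claimed bound you must explain how the multipliers for pairs with $s'\neq s$ are disposed of (or why the objective's dependence on those coordinates is absorbed into the structure of $\scrW_s$) and where the weights $\pi(a|s)$ enter. Choosing $\lambda_{s,a}=\eta\,\pi(a|s)$ is admissible under weak duality and does produce $\eta\alpha_s$, but then the reconciliation of these per-pair penalties with the single constraint $\|\bz\|_*\le\lambda$ over $\scrW_s$, which your conjugacy step requires, is missing. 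As written, this reduction is asserted rather than derived, and it is precisely the point at which the adaptation of the Theorem~\ref{thm:mdp-reg} argument is non-trivial.
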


\chapter{Discussion and Conclusions} \label{chapt:concl}


In this monograph, we developed a Wasserstein-based distributionally robust learning
framework for a comprehensive list of predictive and prescriptive problems, including
$(i)$ Distributionally Robust Linear Regression (DRLR), $(ii)$ Groupwise Wasserstein
Grouped LASSO (GWGL), $(iii)$ Distributionally Robust Multi-Output Learning, $(iv)$
Optimal decision making via DRLR informed K-Nearest Neighbors (K-NN), $(v)$
Distributionally Robust Semi-Supervised Learning, and $(vi)$ Distributionally Robust
Reinforcement Learning.

Starting with the basics of the Wasserstein metric and the DRO formulation, we
explored its robustness inducing properties, discussed approaches for solving the DRO
formulation, and investigated the properties of the DRO solution. Then, we turned our
attention into specific learning problems that can be posed and solved using the
Wasserstein DRO approach. In each case, we derived equivalent regularized empirical
loss minimization formulations and established the robustness of the solutions both
theoretically and empirically. We showed novel theoretical results tailored to each
setting and validated the methods using real world medical applications,
strengthening the notion of robustness through these discussions.

The robustness of the Wasserstein DRO approach hinges on the fact that a family of
distributions that are different from, but close to the empirical measure, are being
hedged against. This data-driven formulation not only utilizes the information
contained in the observed samples, but also generalizes beyond that by allowing
distributions with out-of-sample support. This is a distinguishing feature from DRO
approaches based on alternative 
distance functions, such as $\phi$-divergences, which only consider
distributions whose support is a subset of the observed samples. Such a limitation
could potentially hurt the generalization power of the model. Another salient
advantage of the Wasserstein metric lies in its structure, in particular, encoding a
distance metric in the data space, which makes it possible to link the form of the
regularizer with the growth rate of the loss function and establish a connection
between {\em robustness} and {\em regularization}.

Our results on Wasserstein DRO and its connection to regularization are not
restricted to linear and logistic regression. From the analysis presented in
Section~\ref{chapt:solve}, we see that as long as the growth rate of the loss
function is bounded, the corresponding Wasserstein DRO problem can be made
tractable. We consider both static settings, where all the samples are readily
accessible when solving for the model (Sections~\ref{chapt:dro}, \ref{chap:group},
\ref{chap:multi}), and a dynamic setting where the samples come in a sequential
manner (Section~\ref{dro-rl}). Another example of a dynamic DRO problem is
\cite{abadeh2018wasserstein}, which proposed a distributionally robust Kalman filter
that hedges against model risk; in that setting, the Wasserstein ambiguity set
contains only normal distributions.

More broadly, researchers have proposed distributionally robust versions for general
estimation problems, see, for example, \cite{nguyen2019bridging} for
distributionally robust Minimum Mean Square Error Estimation,
\cite{nguyen2018distributionally} for distributionally robust Maximum Likelihood
Estimation, which was adopted to estimate the inverse covariance matrix of a Gaussian
random vector. We refer the reader to \cite{peyre2019computational} for
computational aspects related to Wasserstein distances and optimal
transport. \cite{kuhn2019wasserstein} and \cite{rahimian2019distributionally} also
provided nice overviews of DRO, the former focusing specifically on the Wasserstein
DRO, covering in detail the theoretical aspects of the general formulation with a
brief discussion on some machine learning applications, while the latter covered DRO
models with all kinds of ambiguity sets.  We summarize our key novel contributions as
follows.

\begin{itemize}
	\item We considered a comprehensive list of machine learning problems, not only
	predictive models, but also prescriptive models, that can be posed and solved
	using the Wasserstein DRO framework.
	
	\item We presented novel performance guarantees tailored to each problem,
	reflecting the particularity of the specific problem and providing
	justifications for using a Wasserstein DRO approach. This is very different
	from \cite{kuhn2019wasserstein}, where a universal performance guarantee result
	was derived. Their result is in general applicable to every single DRO problem,
	but may miss the individual characteristics of the problem at hand.
	
	\item The Wasserstein prescriptive model we presented in Section~\ref{ch:presp}
	is novel. We showed the power of Wasserstein DRO through the K-NN insertion in a
	decision making problem, and demonstrated the benefit of robustness through a
	novel out-of-sample MSE result.
	
	\item The non-trivial extension to multi-output DRO has implications on training
	robust neural networks, e.g., the robustness of the multiclass logistic
	regression classifier to optimized perturbations that are designed to fool the
	classifier, see Section~\ref{sec:dro-mlg}. 
	
	\item Finally, we considered a variety of synthetic and real world case studies
	of the respective models, demonstrating the applications of the DRO framework
	and its superior performance compared to other alternatives, which adds to the
	accessibility and appeal of this work to an application-oriented reader.
\end{itemize}

\begin{acknowledgements}

The authors are grateful to Dimitris Bertsimas, Theodora Brisimi, Christos
Cassandras, David Casta\~{n}\'{o}n, Alex Olshevsky, Venkatesh Saligrama, and Wei Shi,
for their insightful comments and constructive suggestions. 

We are thankful to the Network Optimization and Control Lab at Boston University
for providing computational resources and expertise for some of the case
studies. Collaborations on a number of application fronts have involved Michael
Caramanis and Pirooz Vakili.  

We are grateful to many clinicians and researchers in Boston area hospitals who
provided access to data and collaborated in parts of the work, including: Hiroto
Hatabu, George Kasotakis, Fania Mela, Rebecca Mishuris, Jenifer Siegelman, Vladimir
Valtchinov, and George Velmahos. Particular mention is due to Bill Adams, at Boston
Medical Center, whose efforts to make data available for research have been nothing
short of extraordinary and who was instrumental in engaging the authors in health
analytics research.

We are thankful to the series editors Garud Iyengar, Stephen Boyd, and to the
anonymous reviewers for valuable feedback.

RC is grateful to ICP and David Casta\~{n}\'{o}n who have provided constant support
and encouragement for her, and have been inspirational role models as excellent
researchers and teachers with endless positivity and passion. She is also grateful to
her parents, Xudong and Shouzhen, and her cousins Yingying, Qianqian, and Chunlei,
for their unconditional love, support and company, which have given her the
strength and determination to overcome difficulties and complete this work.

ICP is grateful to Dimitris Bertsimas and John Tsitsiklis for all they have taught
him and for being such inspirational role models for research and the good exposition
of research ideas. He is also grateful to his family (Gina, Aris, Phevos, and
Alexandros) for their love, support, and giving him the time to work on this project.

The authors are also grateful to Ulrike Fischer, who designed the style files, and
Neal Parikh, who laid the groundwork for these style files.

Part of the research included in this monograph has been supported by the NSF under
grants IIS-1914792, DMS-1664644, CNS-1645681, CCF-1527292, and IIS-1237022, by the
ARO under grant W911NF-12-1-0390, by the ONR under grant N00014-19-1-2571, by the NIH
under grants R01 GM135930 and UL54 TR004130, by the DOE under grant DE-AR-0001282, by
the Clinical \& Translational Science Institute at Boston University, by the Boston
University Digital Health Initiative and the Center for Information and Systems
Engineering, and by the joint Boston University and Brigham \& Women's Hospital
program in Engineering and Radiology.
\end{acknowledgements}


\backmatter  

\printbibliography

\end{document}